\newcommand*\LyXZeroWidthSpace{\hspace{0pt}}
\providecommand{\tabularnewline}{\\}
\providecommand{\algorithmname}{Algorithm}
\numberwithin{equation}{section}
\theoremstyle{plain}
\newtheorem{thm}{\protect\theoremname}[section]
\theoremstyle{definition}
\newtheorem{example}{\protect\examplename}[section]
\theoremstyle{definition}
\newtheorem{defn}{\protect\definitionname}[section]
\theoremstyle{plain}
\newtheorem{prop}{\protect\propositionname}[section]
\theoremstyle{plain}
\newtheorem{lem}{\protect\lemmaname}[section]
\providecommand{\definitionname}{Definition}
\providecommand{\examplename}{Example}
\providecommand{\lemmaname}{Lemma}
\providecommand{\propositionname}{Proposition}
\providecommand{\theoremname}{Theorem}
\begin{document}
\begin{titlepage}
\title{Imprecise Multi-Armed Bandits}
\author{\textbf{Author:} Vanessa Kosoy\\
\textbf{Advisor:} Shai Shalev-Shwartz}
\maketitle
\begin{center}
A thesis submitted for the degree of
\par\end{center}

\begin{center}
Master of Science
\par\end{center}

\begin{center}
at the
\par\end{center}

\begin{center}
Hebrew University of Jerusalem
\par\end{center}

\begin{center}
The Rachel and Selim Benin School of Computer Science
\par\end{center}

\end{titlepage}
\begin{abstract}
We introduce a novel multi-armed bandit framework, where each arm
is associated with a fixed unknown \emph{credal set} over the space
of outcomes (which can be richer than just the reward). The arm-to-credal-set
correspondence comes from a known class of hypotheses. We then define
a notion of regret corresponding to the \emph{lower prevision} defined
by these credal sets. Equivalently, the setting can be regarded as
a two-player \emph{zero-sum game}, where, on each round, the agent
chooses an arm and the adversary chooses the distribution over outcomes
from a set of options associated with this arm. The regret is defined
with respect to the value of game. For certain natural hypothesis
classes, loosely analgous to stochastic linear bandits (which are
a special case of the resulting setting), we propose an algorithm
and prove a corresponding upper bound on regret. We also prove lower
bounds on regret for particular special cases.

\pagebreak{}

\tableofcontents{}

\pagebreak{}
\end{abstract}

\section{\label{sec:Notation}Notation}

\global\long\def\SC#1#2{\left\{  #1\,\middle|\,#2\right\}  }%

\global\long\def\E#1#2{\mathrm{E}_{#1}\left[#2\right]}%

\global\long\def\A{\mathcal{A}}%

\global\long\def\B{\mathcal{B}}%

\global\long\def\N{\mathbb{N}}%

\global\long\def\R{\mathbb{R}}%

\global\long\def\D{\mathcal{D}}%

\global\long\def\H{\mathcal{H}}%

\global\long\def\U{\mathcal{U}}%

\global\long\def\T#1{#1^{\mathrm{t}}}%

\global\long\def\DTV#1#2{\mathrm{d}_{\mathrm{TV}}\left(#1,#2\right)}%

\global\long\def\DKL#1#2{D_{\text{KL}}\left(#1\middle|\!\middle|#2\right)}%

$\N$ denotes the set of natural numbers, which we define to include
$0$. For $n\in\N$, $[n]$ stands for $\SC{m\in\N}{m<n}$

$\R$ denotes the set of real numbers. $\R_{+}$ stands for $\SC{x\in\R}{x\geq0}$.
$\R_{>0}$ stands for $\SC{x\in\R}{x>0}$.

For sets $A$ and $B$, $A\subseteq B$ means that $A$ is a subset
of $B$ and $A\subset B$ means that $A$ is a proper subset of $B$.
$2^{B}$ stands for power set of $B$.

The notation $\boldsymbol{1}_{A}$ will mean $1$ if condition $A$
is true and $0$ if condition $A$ is false. By abuse of notation,
$\boldsymbol{1}_{A}$ can also denote the characteristic function
of $A$. That is, given a set $B$ and some $A\subseteq B$, we have
$\boldsymbol{1}_{A}:B\rightarrow\{0,1\}$, where $\boldsymbol{1}_{A}(x)=1$
if and only if $x\in A$. ($B$ is assumed to be understood from context.)

For $n\in\N$, $v\in\R^{n}$ and $0\leq k<n$, $v_{k}$ denotes the
$k$-th component of $v$ (where the indices start from $0$). $\left\Vert v\right\Vert _{p}$
stands for the $\ell_{p}$ norm of $v$, i.e.

\[
\left\Vert v\right\Vert _{p}:=\sqrt[p]{\sum_{k\in[n]}v_{k}^{p}}
\]

We regard elements of $\R^{n}$ as column vectors. For $v\in\R^{n}$,
$\T v$ denotes the row vector which is the transpose of $v$. For
$M$ a matrix, $\T M$ denotes the transpose of $M$.

For a finite set $A$, $\R^{A}$ denotes the vector space of functions
from $A$ to $\R$. We use vector notation for such functions, i.e.
if $v\in\R^{A}$ and $a\in A$ then the value of the function $v$
at $a$ is denoted $v_{a}$.

For $n\in\N$, $I_{n}$ denotes the $n\times n$ identity matrix.

For $\mathcal{X}$ a vector space, $\mathcal{X}^{\star}$ denotes
the dual space.

For $\mathcal{X}$ and $\mathcal{Y}$ vector spaces, $\mathcal{X\oplus\mathcal{Y}}$
denotes their direct sum. For $x\in\mathcal{X}$ and $\alpha\in\mathcal{Y}^{\star}$,
$x\otimes\alpha$ denotes the linear operator from $\mathcal{Y}$
to $\mathcal{X}$ given by

\[
\left(x\otimes\alpha\right)(y):=\alpha(y)x
\]

For $A:\mathcal{X}\rightarrow\mathcal{Y}$ a linear operator, $\ker A\subseteq\mathcal{X}$
will denote the kernel of $A$.

For $\mathcal{X}$ an inner product space and $\U$ a subspace, $\U^{\bot}$
denotes the orthogonal complement of $\U$.

For $\mathfrak{A}$ an affine space, $\overrightarrow{\mathfrak{A}}$
denotes its tangent vector space.

For $X$ a topological space, $\Delta X$ denotes the space of Borel
probability measures on $X$. When $X$ is a finite set (equipped
with the discrete topology), we can regard $\Delta X$ as the subset
of $\R^{X}$ consisting of $v\in\R^{X}$ s.t. for all $x\in X$, $v_{x}\geq0$
and also $\sum_{x}v_{x}=1$.

For $X,Y$ topological spaces and $f:X\rightarrow Y$ a measurable
mapping, $f_{*}:\Delta X\rightarrow\Delta Y$ stands for pushforward
by $f$.

For $X$ a topological space and $\zeta,\xi\in\Delta X$, the notation
$\DTV{\zeta}{\xi}$ stands for the total variation distance between
$\zeta$ and $\xi$. That is,

\[
\DTV{\zeta}{\xi}:=\frac{1}{2}\int_{X}\left|\zeta\left(dx\right)-\xi\left(dx\right)\right|
\]

In the same context, the notation $\DKL{\zeta}{\xi}$ stands for the
Kullback-Leibler divergence of $\zeta$ from $\xi$. That is,

\[
\DKL{\zeta}{\xi}:=\int_{X}\zeta(dx)\ln\frac{\zeta(dx)}{\xi(dx)}
\]

For $\Sigma$ a set (an ``alphabet''), $\Sigma^{*}$ stands for
$\sqcup_{n=0}^{\infty}\Sigma^{n}$ (finite strings over $\Sigma$).
$\Sigma^{\omega}$ denotes infinite sequences (strings) over $\Sigma$.
For $x$ in $\Sigma^{*}$, $|x|\in\N$ stands for the length of $x$
(i.e. $x\in\Sigma^{|x|}$). For $x$ in $\Sigma^{*}$ or $\Sigma^{\omega}$
and any $n\in\N$, $x_{n}$ stands for the $n$-th symbol in the string,
where we start counting from 0 (i.e. $x=x_{0}x_{1}\ldots$) For any
$m,l\in\N$, $x_{:m}$ stands for $x_{0}x_{1}\ldots x_{m-1}$ and
$x_{l:m}$ stands for $x_{l}x_{l+1}\ldots x_{m-1}$. For $y\in\Sigma^{*}$,
$y\sqsubseteq x$ means that $y$ is a prefix of $x$.

\section{Background}

\subsection{Stochastic Finite Multi-Armed Bandits}

\global\long\def\Pol{\varphi}%

\global\long\def\Reg#1#2#3{\mathrm{ERg}_{#1}\left(#2;#3\right)}%

\global\long\def\Argmax#1{\underset{#1}{\mathrm{argmax}}}%

\global\long\def\Argmin#1{\underset{#1}{\mathrm{argmin}}}%

\global\long\def\RI{[-1,+1]}%

\global\long\def\EH#1#2{\mathrm{E}_{#1}\left(#2\right)}%

\global\long\def\OH#1{\mathrm{E}_{#1}^{*}}%

The theory of stochastic multi-armed bandits is one of the simplest
models of sequential decision making. There is a finite or infinite
sequence of rounds, and on each round an agent chooses an ``arm''
(action) out of some set $\A$ (which we assume to be finite for now)
and observes a reward in $\R$. For simplicity, we'll assume that
rewards take values in some finite interval $[r_{\min},r_{\max}]$\footnote{The theory is of stochastic bandits is more general than this: for
example, most results hold even if the distribution of rewards is
unbounded but subgaussian. However, we will only consider the bounded
case.}. Moreover, we will assume this interval to be $\RI$. This loses
no generality, since we can always redefine rewards according to

\[
r':=\frac{2r-\left(r_{\min}+r_{\max}\right)}{r_{\max}-r_{\min}}
\]

There is some function $H^{*}:\A\rightarrow\Delta\RI$ s.t. whenever
the agent selects arm $x\in\A$, the reward is drawn from the distribution
$H^{*}(x)$, independently of anything that happened before. However,
$H^{*}$ is unknown to the agent. We call $H^{*}$ the ``true environment''.

An \emph{agent policy} is a (Borel measurable) mapping $\Pol:(\A\times\RI)^{*}\rightarrow\A$.
The meaning of $\Pol(h)$ is, the action the agent takes given a past
history $h$ of actions and rewards. An agent policy $\varphi$ together
with an environment $H:\A\rightarrow\Delta\RI$ define a distribution
$\Pol H\in\Delta(\A\times\RI)^{\omega}$ in a natural way. That is,
sampling $xr\sim\Pol H$ is performed recursively according to

\[
\begin{cases}
x_{n} & =\Pol\left(x_{0}r_{0}\ldots x_{n-1}r_{n-1}\right)\\
r_{n} & \sim H\left(x_{n}\right)
\end{cases}
\]

Given $H:\A\rightarrow\Delta\RI$ and $x\in\A$, we will use the shorthand
notation

\[
\EH Hx:=\E{r\sim H(x)}r
\]

That is, $\EH Hx$ is the expected reward of arm $x$ in environment
$H$.

We also denote

\[
\OH H:=\max_{x\in\A}\EH Hx
\]

That is, $\OH H$ is the expected reward of the best arm.

The goal of the agent is maximizing its expected sum of rewards over
time, despite its uncertainty about $H^{*}$. To formalize this, we
introduce the following performance measure. Given a policy $\Pol$,
an environment $H:\A\rightarrow\Delta\RI$ and a time horizon $N\in\N$,
the \emph{expected regret} is

\[
\Reg H{\Pol}N:=N\cdot\OH H-\sum_{n=0}^{N-1}\E{xr\sim\Pol H}{r_{n}}
\]

That is, the expected regret is the difference between the expected
total reward that \emph{would} be obtainable in environment $H$ \emph{if}
it was known to the agent, with the expected total reward actually
obtained by $\Pol$ in environment $H$. 

The central problem of stochastic bandit theory is finding algorithms
that guarantee upper bounds on expected regret. Many of these algorithms
are variations on the theme of UCB (see Algorithm \ref{alg:UCB}).

\begin{algorithm}
\caption{\label{alg:UCB}Upper Confidence Bound (UCB)}

\begin{onehalfspace}
\textbf{Input} $N$ (a positive integer)

\textbf{for} every $x\in\A$:

\qquad{}select arm $x$ and receive reward $r$

\qquad{}$r_{x}\leftarrow r$

\qquad{}$t_{x}\leftarrow1$

\textbf{end for}

\textbf{for} every following round:

\qquad{}$x^{*}\leftarrow\Argmax{x\in\A}\left(\frac{r_{x}}{t_{x}}+2\sqrt{\frac{\ln N}{t_{x}}}\right)$

\qquad{}select arm $x^{*}$ and receive reward $r$

\qquad{}$r_{x^{*}}\leftarrow r_{x^{*}}+r$

\qquad{}$t_{x^{*}}\leftarrow t_{x^{*}}+1$

\textbf{end for}
\end{onehalfspace}
\end{algorithm}

Essentially, UCB calculates an \emph{optimistic estimate} of $\E{r\sim H(x)}r$
for each $x\in\A$, and selects the arm for which this estimate is
the highest. Intuitively, the reason we want the estimate to be optimistic
is: if it turns out to be accurate then we are in the best-case scenario,
whereas if it turns out to be inaccurate then we obtained new information
that allows us to substantially shrink that arm's ``confidence interval''
(which cannot happen too frequently, since we can only shrink it that
much).

Denote $\Pol_{\text{UCB}}^{N}$ the policy implemented by Algorithm
\ref{alg:UCB}. We have the following (see Theorem 7.2 in \cite{lattimore2020bandit}\footnote{The assumption there is that the reward distribution is 1-subgaussian.
Our setting is a special case since we assume the reward to be bounded
in $\RI$.}):
\begin{thm}
\label{thm:reg-fin-a}Let $N$ be a positive integer. Then, for any
$H:\A\rightarrow\Delta\RI$

\[
\Reg H{\Pol_{\text{UCB}}^{N}}N\leq8\sqrt{|\A|N\ln N}+3|\A|
\]
\end{thm}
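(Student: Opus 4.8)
The plan is to follow the standard UCB analysis, adapted to the bounded-reward setting where each reward lies in $\RI$ and is therefore $1$-subgaussian after centering. First I would fix the true environment $H$ and write $\Delta_x := \OH H - \EH Hx$ for the suboptimality gap of each arm $x\in\A$, and let $x^\star$ be an optimal arm. Decompose the expected regret as $\Reg H{\Pol_{\text{UCB}}^N}N = \sum_{x\in\A}\Delta_x\,\E{}{T_x(N)}$, where $T_x(N)$ is the (random) number of times arm $x$ is pulled within $N$ rounds. The whole task then reduces to bounding $\E{}{T_x(N)}$ for each suboptimal $x$.

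The key step is the concentration argument. For a fixed suboptimal arm $x$, I would show that if $x$ is pulled at some round $n$ after its initialization pull, then at least one of three ``bad'' events occurred: (i) the empirical mean of $x^\star$ fell below $\OH H$ by more than its confidence radius $2\sqrt{\ln N / t}$ at the current count $t = t_{x^\star}$; (ii) the empirical mean of $x$ exceeded $\EH Hx$ by more than $2\sqrt{\ln N / t_x}$; or (iii) the arm $x$ has been pulled enough times that $4\sqrt{\ln N / T_x} < \Delta_x$, i.e. $T_x \gtrsim \ln N / \Delta_x^2$. The point is that if none of (i)--(iii) holds, the UCB index of $x^\star$ exceeds $\EH Hx + 4\sqrt{\ln N/t_x} > $ the UCB index of $x$, contradicting the selection rule. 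Summing the failure probabilities of (i) and (ii) over all rounds via Hoeffding's inequality (each giving a term like $\sum_{t\ge1} t^{-?}$ that converges, bounded using $\exp(-2t\cdot(2\sqrt{\ln N/t})^2/4) = N^{-2}$, times at most $N$ rounds), one gets $\E{}{T_x(N)} \le 1 + \frac{16\ln N}{\Delta_x^2} + (\text{small constant})$, where the leading $1$ is the forced initialization pull and the small constant collects the $O(1)$ contributions from the concentration failures.

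Finally I would combine across arms. Plugging into the decomposition gives $\Reg H{\Pol_{\text{UCB}}^N}N \le \sum_{x:\Delta_x>0}\left(\Delta_x + \frac{16\ln N}{\Delta_x} + c\,\Delta_x\right)$ with $\Delta_x \le 2$. To get the claimed horizon-uniform (gap-free) bound, split the arms at a threshold $\delta$: arms with $\Delta_x \le \delta$ contribute at most $N\delta$ total (since regret from each pull is at most $\Delta_x\le\delta$), while arms with $\Delta_x > \delta$ contribute at most $|\A|\left(\frac{16\ln N}{\delta} + O(1)\right)$. Optimizing by choosing $\delta = 4\sqrt{|\A|\ln N / N}$ balances the two terms to yield the $8\sqrt{|\A|N\ln N}$ main term, and the leftover $O(1)$-per-arm pieces aggregate into the additive $3|\A|$.

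The main obstacle I anticipate is bookkeeping the constants so that they land exactly at $8$ and $3$ rather than merely $O(\cdot)$: this requires being careful about whether the confidence radius uses $\ln N$ or $\ln n$, handling the centering of rewards to the interval $\RI$ (so the subgaussian constant is the right one), correctly accounting for the union bound over the at most $N$ possible values of each counter, and making sure the split-at-$\delta$ optimization is done with the sharp rather than loose inequality. Since the statement is quoted from Lattimore--Szepesv\'{a}ri (Theorem 7.2), I would in practice cite that result and only sketch the reduction showing our bounded-in-$\RI$ assumption is the claimed $1$-subgaussian special case.
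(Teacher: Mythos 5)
Your proposal is correct and is exactly the standard argument behind the cited result: the paper does not prove this theorem itself but quotes it as Theorem 7.2 of \cite{lattimore2020bandit}, whose proof is the same gap decomposition, Hoeffding-based bound on $\E{}{T_x(N)}$, and threshold split at $\delta=4\sqrt{|\A|\ln N/N}$ that you describe. Your closing remark — that in practice one cites the book and only verifies that rewards bounded in $\RI$ are $1$-subgaussian — is precisely what the paper does.
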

In fact, a slightly different algorithm admits a regret bound of $O(\sqrt{|\A|N})$,
without the $\ln N$ factor (see Theorem 9.1 in \cite{lattimore2020bandit}).
This is the best possible asymptotic, as follows from the following
(see Theorem 3.4 in \cite{DBLP:journals/ftml/BubeckC12}):
\begin{thm}
For any policy $\varphi$ and positive integer $N$, there exists
$H:\A\rightarrow\Delta\{-1,+1\}$ s.t.

\[
\Reg H{\Pol}N\geq\frac{1}{10}\sqrt{|\A|N}
\]
\end{thm}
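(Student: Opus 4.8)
The plan is to run the standard information-theoretic minimax lower bound, in its averaging form. Write $k:=|\A|$, and assume $k\ge2$ and $N\ge k$ (for $k=1$ the claim is vacuous, and when $k$ is large relative to $N$ it only holds under this implicit assumption, which is also present in the cited source). Fix a gap $\Delta\in(0,\tfrac12]$ to be chosen at the end. For each arm $i\in\A$ let $H_i:\A\to\Delta\{-1,+1\}$ be the environment in which arm $i$ has a mean-$\Delta$ reward (mass $\tfrac{1+\Delta}{2}$ on $+1$) while every other arm has mean $0$ (uniform on $\{-1,+1\}$), and let $H_0$ be the environment in which all arms have mean $0$. For a trajectory $xr$ set $T_x:=\sum_{n<N}\boldsymbol{1}_{x_n=x}$, so that $\sum_{x\in\A}T_x=N$ always. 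Since $\OH{H_i}=\Delta$ and a round yields expected reward $\Delta$ precisely when arm $i$ is played, $\Reg{H_i}{\Pol}{N}=\Delta\bigl(N-\E{xr\sim\Pol H_i}{T_i}\bigr)$. It therefore suffices to lower bound the average of $\Reg{H_i}{\Pol}{N}$ over $i\in\A$, since some $i$ must attain at least this average.

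The first real step is a change of measure. Let $\mathbb{P}_i$ denote the law on $(\A\times\RI)^N$ of the length-$N$ trajectory induced by $\Pol H_i$. Unrolling the recursive definition of $\Pol H$ round by round and cancelling the identical policy kernels yields the \emph{divergence decomposition} $\DKL{\mathbb{P}_0}{\mathbb{P}_i}=\E{xr\sim\Pol H_0}{T_i}\cdot\DKL{H_0(i)}{H_i(i)}$, and since $\DKL{H_0(i)}{H_i(i)}=-\tfrac12\ln(1-\Delta^2)\le\tfrac{2\Delta^2}{3}$ for $\Delta\le\tfrac12$, Pinsker's inequality together with $T_i\in[0,N]$ gives $\E{xr\sim\Pol H_i}{T_i}\le\E{xr\sim\Pol H_0}{T_i}+N\,\DTV{\mathbb{P}_i}{\mathbb{P}_0}\le\E{xr\sim\Pol H_0}{T_i}+\tfrac{N\Delta}{\sqrt3}\sqrt{\E{xr\sim\Pol H_0}{T_i}}$. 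I expect the careful measure-theoretic justification of the divergence decomposition for $\Pol H$ — that only pulls of the single perturbed arm contribute — to be the one genuinely delicate (if routine) step; everything else is elementary inequalities.

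Now average over $i$. With $s_i:=\E{xr\sim\Pol H_0}{T_i}\ge0$ we have $\sum_i s_i=N$, hence $\sum_i\sqrt{s_i}\le\sqrt{k\sum_i s_i}=\sqrt{kN}$ by Cauchy--Schwarz, so $\tfrac1k\sum_i\E{xr\sim\Pol H_i}{T_i}\le\tfrac Nk+\tfrac{N\Delta}{\sqrt3}\sqrt{N/k}$, and therefore $\tfrac1k\sum_i\Reg{H_i}{\Pol}{N}\ge\Delta N\bigl(1-\tfrac1k-\tfrac{\Delta}{\sqrt3}\sqrt{N/k}\bigr)$. Choosing $\Delta:=\tfrac{\sqrt3}{4}\sqrt{k/N}$ (which lies in $(0,\tfrac12]$ because $N\ge k$) turns the bracket into $1-\tfrac1k-\tfrac14\ge\tfrac14$ using $k\ge2$, while $\Delta N=\tfrac{\sqrt3}{4}\sqrt{kN}$, so the average regret is at least $\tfrac{\sqrt3}{16}\sqrt{kN}>\tfrac1{10}\sqrt{kN}$; consequently there is an $i$ with $\Reg{H_i}{\Pol}{N}\ge\tfrac1{10}\sqrt{|\A|N}$. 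If one prefers a different packaging of the constants — e.g.\ using the Bretagnolle--Huber inequality with $H_0$ as a reference environment in place of the averaging/Pinsker step — the same conclusion follows; aside from the divergence decomposition, this final constant-chasing is the only place needing care.
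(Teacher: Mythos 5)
Your proof is correct and follows essentially the same route as the source the paper cites for this theorem (Theorem 3.4 in \cite{DBLP:journals/ftml/BubeckC12}): single-arm perturbations of a zero-mean base environment, the divergence decomposition plus Pinsker, Cauchy--Schwarz over the averaged pull counts, and tuning $\Delta\asymp\sqrt{|\A|/N}$. You are also right that the implicit restriction $N\geq|\A|$ (and $|\A|\geq2$) is genuinely needed, since with rewards in $\RI$ the regret never exceeds $2N$, so the stated bound cannot hold when $|\A|$ is much larger than $N$.
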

An even stronger asymptotic bound can be proved, if we allow the coefficient
to depend on $H$. 

Given an environment $H$, we define the associated \emph{gap} $g(H)$
to be

\[
g(H):=\OH H-\max_{x\in\A:\EH Hx<\OH H}\EH Hx
\]

That is, $g(H)$ is the difference between the expected reward of
the best arm and the expected reward of the \emph{next-best} arm\footnote{In principle, it's possible that all arms have the same expected reward,
in which case the gap is undefined. However, in that case expected
regret vanishes identically for all policies, so it's not especially
interesting.}. This allows formulating the following bound (see Theorem 7.1 in
\cite{lattimore2020bandit}).
\begin{thm}
\label{thm:gap-reg-fin-a}Let $N$ be a positive integer and $H:\A\rightarrow\Delta\RI$
s.t. $g(H)$ is well-defined. Then,

\[
\Reg H{\Pol_{\text{UCB}}^{N}}N\leq\frac{16|\A|\ln N}{g(H)}+3|\A|
\]
\end{thm}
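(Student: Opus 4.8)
The plan is to bound the expected regret by first writing it in terms of the number of times each suboptimal arm is pulled, and then bounding those counts using a high-probability concentration argument for the empirical means. Concretely, let $x^\star$ be an optimal arm (so $\EH{H}{x^\star}=\OH H$), and for each arm $x$ let $\Delta_x := \OH H - \EH{H}{x}$ be its suboptimality gap; note $\Delta_x \geq g(H)$ whenever $\Delta_x > 0$. Writing $T_x^N$ for the (random) number of times $\Pol_{\text{UCB}}^N$ selects arm $x$ during the $N$ rounds, the regret decomposes exactly as $\Reg{H}{\Pol_{\text{UCB}}^N}{N} = \sum_{x\in\A} \Delta_x \, \E{xr\sim\Pol H}{T_x^N}$. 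So it suffices to show $\E{}{T_x^N} \leq \frac{16\ln N}{\Delta_x^2} + 3$ for every suboptimal $x$; summing and using $\Delta_x \le 2$ together with $\Delta_x^2 \ge g(H)\Delta_x$ then yields the claimed bound $\frac{16|\A|\ln N}{g(H)} + 3|\A|$.

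Next I would carry out the count bound for a fixed suboptimal arm $x$. The idea is standard for UCB: arm $x$ is pulled on a round only if its index $\frac{r_x}{t_x} + 2\sqrt{\frac{\ln N}{t_x}}$ exceeds that of $x^\star$. Define the good event that, throughout the run, (i) the empirical mean of $x^\star$ never falls more than $2\sqrt{\ln N / t_{x^\star}}$ below $\EH{H}{x^\star}$, so the index of $x^\star$ is at least $\OH H$; and (ii) once arm $x$ has been pulled $u := \lceil 16\ln N / \Delta_x^2\rceil$ times, its empirical mean stays within $2\sqrt{\ln N / u} \le \Delta_x/2$ of $\EH{H}{x}$, making its index at most $\EH{H}{x} + \Delta_x = \OH H$ — hence $x$ cannot be preferred over $x^\star$. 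On this good event, $T_x^N \le u$. To bound the complementary probability, apply Hoeffding's inequality to the $[-1,+1]$-bounded (hence $1$-subgaussian after rescaling, or directly: $\tfrac14$-subgaussian; the constants in the algorithm are chosen to absorb this) rewards, together with a union bound over the at most $N$ possible values of each count; this contributes an $O(1)$ (in fact, a small constant times $1/N \cdot N$) additive term, which is where the ``$+3$'' per arm comes from.

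The main obstacle — or rather the main place where care is needed — is making the concentration step rigorous in the sequential setting, since $t_x$ and $r_x$ evolve adaptively. The clean way is to introduce, for each arm $x$ and each $s \in \{1,\dots,N\}$, the i.i.d. reward $R_{x,s}$ that would be obtained on the $s$-th pull of $x$, and to express the empirical mean after $t$ pulls as $\frac{1}{t}\sum_{s=1}^t R_{x,s}$; then Hoeffding applies to these fixed-index averages and a union bound over $t \in \{1,\dots,N\}$ handles all stopping times simultaneously. This is exactly the argument behind Theorem 7.1 in \cite{lattimore2020bandit}, so I would either reproduce it in this form or simply invoke that theorem, noting as in the footnote to Theorem \ref{thm:reg-fin-a} that rewards bounded in $\RI$ are $1$-subgaussian and hence a special case. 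The only genuinely quantitative work is tracking constants so that the ``$2$'' in the exploration bonus, the threshold $u = \lceil 16\ln N/\Delta_x^2\rceil$, and the tail sums line up to give precisely $\frac{16|\A|\ln N}{g(H)} + 3|\A|$; everything else is bookkeeping.
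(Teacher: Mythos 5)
Your argument is correct and is exactly the standard proof of the result the paper invokes: the paper gives no proof of Theorem \ref{thm:gap-reg-fin-a}, merely citing Theorem 7.1 of \cite{lattimore2020bandit}, whose proof is precisely your decomposition $\Reg H{\Pol}N=\sum_{x}\Delta_{x}\E{}{T_{x}^{N}}$ followed by the per-arm bound $\E{}{T_{x}^{N}}\leq16\ln N/\Delta_{x}^{2}+3$ via Hoeffding applied to the fixed-index reward sequence and a union bound over pull counts. One bookkeeping remark: the additive term is really $3\sum_{x}\Delta_{x}$, so with rewards in $\RI$ (hence $\Delta_{x}\leq2$) your final step yields $6|\A|$ rather than $3|\A|$ --- a constant slack that is already present in the paper's own statement relative to its cited source, not a defect introduced by you.
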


\subsection{Stochastic Linear Multi-Armed Bandits}

\global\long\def\AP{\mathfrak{P}}%

\global\long\def\SUBSTACK#1#2{\substack{#1\\
 #2 
}
 }%

One major limitation of Theorems \ref{thm:reg-fin-a} and \ref{thm:gap-reg-fin-a}
is that these bounds scale with $|\A|$. In many applications, we
want to consider situations where the number of arms is very large
or even infinite (the arms form a continuous space). Deriving a useful
bound in those cases requires assuming some kind of regularity in
how the reward distribution depends on the arm. One of the simplest
possible assumptions of this type is, assuming the arms can be embedded
into some feature space s.t. the expected reward is a linear function
of the features.

Formally, we fix $D\in\N$ and let $\A$ be a compact subset of $\R^{D}$(which
is allowed to be infinite). As before, on every round the agent selects
an arm $x\in\A$ and observes a reward in $\RI$. We assume that there
exists some $\theta^{*}\in\R^{D}$ s.t. whenever arm $x$ is selected,
the expectation of the resulting reward is $\T x\theta^{*}$. The
exact shape of the distribution is allowed to depend on the entire
history of arms and rewards in an arbitrary way\footnote{This is different from the previous subsection, where the full shape
of the distribution was fixed per arm. However, this difference is
not essential: we could have made a similar generalization in the
finite $\A$ case, which would cost us merely worse coefficients in
the regret bounds. It only requires using the Azuma-Hoeffding inequality
instead of the Hoeffding inequality in the proof.}. Analogically to $H^{*}$ in the previous section, $\theta^{*}$
is unknown to the agent. Notice, however, that because the reward
range is $\RI$, $\theta^{*}$ has lie in $\A^{\circ}$, the absolute
polar of $\A$. That is,

\[
\theta^{*}\in\SC{\theta\in\R^{D}}{\forall x\in\A:\T x\theta\in\RI}
\]

Given $\theta\in\R^{D},$ a \emph{nature policy} compatible with $\theta$
is a mapping (technically, a Markov kernel) $\nu:\left(\A\times\RI\right)^{*}\times\A\rightarrow\Delta\RI$
s.t. for any $hx\in\left(\A\times\RI\right)^{*}\times\A$, it holds
that $\E{r\sim\nu(hx)}r=\T x\theta$. We denote by $\AP_{\theta}$
the set of all nature policies compatible with $\theta$. An agent
policy $\Pol$ together with a nature policy $\nu$ define a distribution
$\Pol\nu\in\Delta(\A\times\RI)^{\omega}$. Sampling $xr\sim\Pol\nu$
is performed recursively according to

\[
\begin{cases}
x_{n} & =\Pol\left(x_{0}r_{0}\ldots x_{n-1}r_{n-1}\right)\\
r_{n} & \sim\nu\left(x_{0}r_{0}\ldots x_{n-1}r_{n-1}x_{n}\right)
\end{cases}
\]

We denote

\[
\mathrm{E}_{\mathcal{A},\theta}^{*}:=\max_{x\in\A}\T x\theta
\]

That is, $\mathrm{E}_{\mathcal{A},\theta}^{*}$ is the expected reward
of the best arm, assuming $\theta^{*}=\theta$.

Given an agent policy $\Pol$, a feature vector $\theta\in\A^{\circ}$,
a nature policy $\nu\in\AP_{\theta}$ and a time horizon $N\in\N$,
the expected regret is defined to be

\[
\Reg{\nu}{\Pol}N:=N\cdot\mathrm{E}_{\mathcal{A},\theta}^{*}-\sum_{n=0}^{N-1}\E{xr\sim\Pol\nu}{r_{n}}
\]

From now on, we will assume that $\A$ spans all of $\R^{D}$. This
assumption loses no generality, since otherwise we can just move down
to the subspace spanned by $\A$. 

In this setting, we can use an algorithm from the UCB family that's
been called ``Confidence Ball'' in \cite{DBLP:conf/colt/DaniHK08}
(see Algorithm \ref{alg:CB}\footnote{The calculation of $B^{*}$ is described in \cite{DBLP:conf/colt/DaniHK08}
as ``find a barycentric spanner''. The relation between the barycentric
spanner and maximizing the determinant appears in \cite{DBLP:conf/stoc/AwerbuchK04}
in the proof of Proposition 2.2.}). Here, $\mathrm{M}_{\A}$ denotes the set of $D\times D$ matrices
whose columns lie in $\A$.

\begin{algorithm}
\caption{\label{alg:CB}Confidence Ball}

\begin{onehalfspace}
\textbf{Input} $N$ (a positive integer)

\label{line:alg_CB_maxdet}$B^{*}\leftarrow\Argmax{B\in\mathrm{M}_{\A}}\left|\det B\right|$

$X\leftarrow B\T B$

$\eta\leftarrow\boldsymbol{0}\in\R^{D}$

\textbf{for $n$ from $1$ to $\infty$}:

\qquad{}$\beta\leftarrow\max\left(128D\ln n\ln Nn^{2},\left(\frac{8}{3}\ln Nn^{2}\right)^{2}\right)$

\label{line:alg_CB_C}\qquad{}$\mathcal{C}\leftarrow\SC{\theta\in\R^{D}}{\T{\left(\theta-X^{-1}\eta\right)}X\left(\theta-X^{-1}\eta\right)\leq\beta}$

\qquad{}$x^{*}\leftarrow\Argmax{x\in\A}\:\underset{\theta\in\mathcal{C}}{\max}\:\T x\theta$

\qquad{}select arm $x^{*}$ and receive reward $r$

\qquad{}$X\leftarrow X+x^{*}\T{\left(x^{*}\right)}$

\qquad{}$\eta\leftarrow\eta+rx^{*}$

\textbf{end for}
\end{onehalfspace}
\end{algorithm}

We should think of the variable $\mathcal{C}$ as a \emph{confidence
set} i.e. some set chosen to be simultaneously narrow enough and contain
$\theta^{*}$ with high probability. In this algorithm, it is an ellipsoid
centered on the estimate $X^{-1}\eta$. The latter is essentially
a least squares estimate of $\theta^{*}$, except for the ``regularization''
term $B\T B$ in $X$. The latter term can be thought of as a ``prior''
coming from the knowledge that $\theta^{*}\in\A^{\circ}$. Line 5.3
then implements the same optimistic selection principle as in the
original UCB algorithm.

Denote $\Pol_{\text{CB}}^{N}$ the policy implmented by Algorithm
\ref{alg:CB}. We have (see Theorem 2 in \cite{DBLP:conf/colt/DaniHK08}\footnote{Theorem \ref{thm:reg-lin} is an immediate corollary of their Theorem
2 in the case $\delta=\frac{1}{T}$.}):
\begin{thm}
[Dani-Hayes-Kakade]\label{thm:reg-lin}Let $N$ be a positive integer.
Then, for all $\theta\in\A^{\circ}$ and $\nu\in\AP_{\theta}$

\[
\Reg{\nu}{\Pol_{\text{CB}}^{N}}N\leq D\sqrt{3072N\left(\ln N\right)^{3}}+2
\]
\end{thm}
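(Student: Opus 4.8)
The plan is to run the standard optimism-in-the-face-of-uncertainty analysis; what follows is essentially a reconstruction of the Dani--Hayes--Kakade argument specialized to confidence parameter $\delta=1/N$. Write $X_{0}:=B^{*}\T{\left(B^{*}\right)}$ and, for $n\geq1$, let $X_{n}:=X_{0}+\sum_{k=1}^{n}x_{k}^{*}\T{\left(x_{k}^{*}\right)}$ and $\eta_{n}:=\sum_{k=1}^{n}r_{k}x_{k}^{*}$ be the regularized design matrix and response vector after round $n$, so that $\hat{\theta}_{n}:=X_{n}^{-1}\eta_{n}$ is the estimate the algorithm uses entering round $n+1$ (invertibility is guaranteed because $\A$ spans $\R^{D}$, so $\det B^{*}\neq0$). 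For a vector $u$ and positive definite $M$, abbreviate $\left\Vert u\right\Vert _{M}:=\sqrt{\T uMu}$. Everything hinges on the ``good event'' $\mathcal{E}$ that $\theta\in\mathcal{C}_{n}$ on every round $n$, where $\mathcal{C}_{n}=\SC{\theta'\in\R^{D}}{\left\Vert \theta'-\hat{\theta}_{n-1}\right\Vert _{X_{n-1}}^{2}\leq\beta_{n}}$ is the ellipsoid computed by the algorithm.

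First I would establish the concentration bound $\Pr\!\left[\mathcal{E}^{c}\right]\leq1/N$. The estimation error decomposes as $\hat{\theta}_{n-1}-\theta=X_{n-1}^{-1}\!\left(\sum_{k<n}\left(r_{k}-\T{\left(x_{k}^{*}\right)}\theta\right)x_{k}^{*}\right)-X_{n-1}^{-1}X_{0}\theta$. The first summand is a martingale transform: by the definition of $\AP_{\theta}$ the noise $r_{k}-\T{\left(x_{k}^{*}\right)}\theta$ has conditional mean zero, and since $\theta\in\A^{\circ}$ and $x_{k}^{*}\in\A$ it takes values in $[-2,2]$; hence Azuma--Hoeffding controls its projection onto any fixed direction, and a union bound over a sufficiently fine net of directions bounds the whole error in the $X_{n-1}^{-1}$-norm. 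The second summand is handled deterministically: $\T\theta X_{0}\theta=\sum_{i}\left(\T{b_{i}}\theta\right)^{2}\leq D$ because the columns $b_{i}$ of $B^{*}$ lie in $\A$ and $\theta\in\A^{\circ}$, and $X_{n-1}^{-1}\preceq X_{0}^{-1}$. Pushing the failure budget $1/N$ through these estimates is exactly what forces the schedule $\beta_{n}=\max\!\left(128D\ln n\ln\!\left(Nn^{2}\right),\left(\tfrac{8}{3}\ln\!\left(Nn^{2}\right)\right)^{2}\right)$, and the covering step is what costs an extra logarithmic factor relative to the tighter self-normalized martingale bound. I expect this concentration step to be the main obstacle; the rest is bookkeeping.

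Next, on $\mathcal{E}$ optimism gives, for the selected arm $x_{n}^{*}$ and the maximizing parameter $\tilde{\theta}_{n}\in\mathcal{C}_{n}$, the chain $\mathrm{E}_{\mathcal{A},\theta}^{*}\leq\max_{\theta'\in\mathcal{C}_{n}}\T{\left(x_{n}^{*}\right)}\theta'=\T{\left(x_{n}^{*}\right)}\tilde{\theta}_{n}$, so the instantaneous regret at round $n$ is at most $\T{\left(x_{n}^{*}\right)}\!\left(\tilde{\theta}_{n}-\hat{\theta}_{n-1}\right)+\T{\left(x_{n}^{*}\right)}\!\left(\hat{\theta}_{n-1}-\theta\right)\leq2\sqrt{\beta_{n}}\left\Vert x_{n}^{*}\right\Vert _{X_{n-1}^{-1}}$ by Cauchy--Schwarz in the $X_{n-1}$-inner product, and it is trivially at most $2$ because rewards lie in $\RI$; hence the per-round regret on $\mathcal{E}$ is at most $2\sqrt{\beta_{n}}\min\!\left(1,\left\Vert x_{n}^{*}\right\Vert _{X_{n-1}^{-1}}\right)$. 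The sum of squared widths is controlled by the elliptical potential lemma: the matrix-determinant lemma gives $\det X_{n}=\det X_{n-1}\left(1+\left\Vert x_{n}^{*}\right\Vert _{X_{n-1}^{-1}}^{2}\right)$, so telescoping together with $\min(1,u)\leq2\ln(1+u)$ yields $\sum_{n=1}^{N}\min\!\left(1,\left\Vert x_{n}^{*}\right\Vert _{X_{n-1}^{-1}}^{2}\right)\leq2\ln\frac{\det X_{N}}{\det X_{0}}$, which is $O(D\ln N)$: in the coordinates in which $B^{*}$ is the identity, $X_{0}$ becomes $I_{D}$ and each $x_{k}^{*}$ becomes a vector of $\ell_{\infty}$-norm at most $1$ (the barycentric-spanner property), so $\det X_{N}$ is bounded via AM--GM on its trace.

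Finally I would combine the pieces. On $\mathcal{E}$,
\begin{align*}
\Reg{\nu}{\Pol_{\text{CB}}^{N}}N & \leq\sum_{n=1}^{N}2\sqrt{\beta_{n}}\min\!\left(1,\left\Vert x_{n}^{*}\right\Vert _{X_{n-1}^{-1}}\right)\\
 & \leq2\sqrt{\beta_{N}}\sqrt{N\sum_{n=1}^{N}\min\!\left(1,\left\Vert x_{n}^{*}\right\Vert _{X_{n-1}^{-1}}^{2}\right)}\leq2\sqrt{\beta_{N}}\sqrt{N\cdot O(D\ln N)},
\end{align*}
using that $n\mapsto\beta_{n}$ is nondecreasing together with Cauchy--Schwarz. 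Since $\beta_{N}=\Theta\!\left(D(\ln N)^{2}\right)$, the right-hand side is $O\!\left(D\sqrt{N(\ln N)^{3}}\right)$, and carrying the explicit constants through the Azuma--Hoeffding bound, the net, and the AM--GM step reproduces the coefficient $\sqrt{3072}$. Because $\Reg{\nu}{\Pol_{\text{CB}}^{N}}N$ is an expectation and the regret accrued off $\mathcal{E}$ is at most $2N\cdot\Pr\!\left[\mathcal{E}^{c}\right]\leq2$, this yields $\Reg{\nu}{\Pol_{\text{CB}}^{N}}N\leq D\sqrt{3072N(\ln N)^{3}}+2$, as claimed.
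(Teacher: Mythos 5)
First, note that the paper does not actually prove this theorem: it is quoted from Dani--Hayes--Kakade, with a footnote observing that it is their Theorem~2 specialized to confidence parameter $\delta=\tfrac{1}{N}$. So the comparison is with the source, not with an in-paper argument. Your reconstruction has the right skeleton and matches the standard analysis: optimism plus Cauchy--Schwarz in the $X_{n-1}$-inner product gives per-round regret at most $2\sqrt{\beta_{n}}\min\bigl(1,\lVert x_{n}^{*}\rVert_{X_{n-1}^{-1}}\bigr)$ on the good event, the determinant telescoping bounds $\sum_{n}\min\bigl(1,\lVert x_{n}^{*}\rVert_{X_{n-1}^{-1}}^{2}\bigr)$ by $O(D\ln N)$ using the barycentric-spanner normalization of $X_{0}$, and the off-event contribution $2N\cdot\Pr[\mathcal{E}^{c}]\leq 2$ is exactly where the additive $+2$ comes from.

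The one step that does not go through as written is the concentration bound, which you yourself flag as the main obstacle. For a \emph{fixed} direction $u$, Azuma--Hoeffding controls the martingale $\T{u}\sum_{k}(r_{k}-\T{(x_{k}^{*})}\theta)x_{k}^{*}$ only in terms of $\sum_{k}(\T{u}x_{k}^{*})^{2}=\T{u}(X_{n-1}-X_{0})u$, which is random and adapted to the arm sequence; consequently a union bound over a net of directions chosen in advance cannot control $\lVert\hat{\theta}_{n-1}-\theta\rVert_{X_{n-1}}$, because the relevant norm (and the appropriate fineness of the net) is data-dependent. This self-normalization difficulty is precisely what Dani--Hayes--Kakade's Theorem~5 resolves, by a direct martingale analysis of $Z_{t}:=\lVert\hat{\theta}_{t}-\theta\rVert_{X_{t}}^{2}$ (decomposing $Z_{t}-Z_{t-1}$ and applying Freedman/Bernstein-type bounds to the resulting increments); it is that argument, not a covering argument, that produces the two regimes $128D\ln n\ln(Nn^{2})$ and $\bigl(\tfrac{8}{3}\ln(Nn^{2})\bigr)^{2}$ in $\beta_{n}$ and ultimately the constant $3072$. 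With the concentration step replaced by that analysis (or by a self-normalized martingale bound in the style of Abbasi-Yadkori et al., at the cost of different constants), the rest of your proof is correct.
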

Up to the factor $(\ln N)^{3}$, this is the best possible asymptotic,
since \cite{DBLP:conf/colt/DaniHK08} also establish a lower bound
of $\Omega(D\sqrt{N})$ in their Theorem 3. For more details, see
Theorem \ref{thm:DHK}.

Some choices of $\A$ admit a stronger asymptotic upper bound. 

Let $\mathrm{Ex}(\A)$ stand for the set of extreme points of $\A$.
Given $\A\subseteq\R^{D}$ compact and $\theta\in\R^{D}$, we define
the associated \emph{gap} $g(\A,\theta)$ to be

\[
g(\A,\theta):=\mathrm{E}_{\mathcal{A},\theta}^{*}-\sup_{x\in\mathrm{Ex}(\A):\T x\theta<\mathrm{E}_{\mathcal{A},\theta}^{*}}\T x\theta
\]

Notice that $g(\A,\theta)>0$ when the convex hull of $\A$ is a polytope
and $\theta\ne\boldsymbol{0}$. On the other hand, if the convex hull
of $\A$ has a smooth boundary then $g(\A,\theta)=0$.

We have (see Theorem 1 in \cite{DBLP:conf/colt/DaniHK08}),
\begin{thm}
[Dani-Hayes-Kakade]Let $N$ be a positive integer. Then, for all
$\theta\in\A^{\circ}$ s.t. $g(\A,\theta)>0$ and $\nu\in\AP_{\theta}$

\[
\Reg{\nu}{\Pol_{\text{CB}}^{N}}N\leq\frac{3072D^{2}\left(\ln N\right)^{3}}{g(\A,\theta)}+2
\]
\end{thm}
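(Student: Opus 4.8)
The plan is to reuse the confidence-ball analysis that underlies Theorem \ref{thm:reg-lin} and insert one extra step that exploits $g(\A,\theta)>0$ to replace its $\sqrt{N}$ scaling by a $\ln N$ scaling. Write $X_n,\eta_n,\beta_n,\mathcal C_n$ for the values of the corresponding variables of Algorithm \ref{alg:CB} at the start of round $n$, let $x_n$ be the arm it plays on round $n$, and set $\rho_n:=\mathrm{E}_{\mathcal A,\theta}^{*}-\T{x_n}\theta\ge0$, so that (since $\nu\in\AP_{\theta}$ forces $\E{r\sim\nu(\cdots x_n)}r=\T{x_n}\theta$) one has $\Reg{\nu}{\Pol_{\text{CB}}^{N}}N=\sum_{n=1}^{N}\E{xr\sim\Pol_{\text{CB}}^{N}\nu}{\rho_n}$ (the case $N=1$ being trivial, assume $N\ge2$). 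Let $\mathcal E$ be the event that $\theta\in\mathcal C_n$ for every $n\in\{1,\dots,N\}$. The vector $X_n^{-1}\eta_n$ is a regularized least-squares estimate of $\theta$ whose error is driven by bounded, hence sub-Gaussian, martingale-difference noise, and $\beta_n$ is precisely the radius for which the self-normalized concentration argument of \cite{DBLP:conf/colt/DaniHK08} (covering the ellipsoid together with Azuma--Hoeffding) gives $\Pr[\neg\mathcal E]\le 1/N$. On $\neg\mathcal E$ every $\rho_n\le2$ since rewards lie in $\RI$, so $\neg\mathcal E$ adds at most $2N\cdot(1/N)=2$ to the expected regret -- this is the additive $+2$ -- and it remains to bound $\sum_{n=1}^{N}\rho_n$ on $\mathcal E$ by $3072D^{2}(\ln N)^{3}/g(\A,\theta)$.

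Work on $\mathcal E$. First, optimism: let $\tilde\theta_n\in\mathcal C_n$ attain $\max_{\theta'\in\mathcal C_n}\T{x_n}\theta'$; since $\theta\in\mathcal C_n$ and the algorithm selects $x_n\in\Argmax{x\in\A}\max_{\theta'\in\mathcal C_n}\T x\theta'$, we get $\T{x_n}\tilde\theta_n\ge\max_{x\in\A}\T x\theta=\mathrm{E}_{\mathcal A,\theta}^{*}$, hence $\rho_n\le\T{x_n}(\tilde\theta_n-\theta)$. Writing $\|v\|_{X_n^{-1}}:=\sqrt{\T v X_n^{-1}v}$, Cauchy--Schwarz together with the triangle inequality and the fact that $\tilde\theta_n$ and $\theta$ both lie in $\mathcal C_n$ (the ellipsoid of $X_n$-radius $\sqrt{\beta_n}$ centred at $X_n^{-1}\eta_n$) gives $\rho_n\le 2\sqrt{\beta_n}\,\|x_n\|_{X_n^{-1}}$; combined with $\rho_n\le2$ and the fact that $\beta_n$ is nondecreasing in $n$ (and $\beta_n\ge1$), this yields $\rho_n^{2}\le 4\min(1,\beta_n\|x_n\|_{X_n^{-1}}^{2})\le 4\beta_N\min(1,\|x_n\|_{X_n^{-1}}^{2})$. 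Next, the gap step: the map $x\mapsto\max_{\theta'\in\mathcal C_n}\T x\theta'$ is convex, so its maximum over the compact set $\A$ is attained at an extreme point of $\mathrm{conv}(\A)$, which lies in $\A$ and is therefore in $\mathrm{Ex}(\A)$; thus we may take $x_n\in\mathrm{Ex}(\A)$. If $\rho_n>0$ then $\T{x_n}\theta<\mathrm{E}_{\mathcal A,\theta}^{*}$, so by the definition of $g(\A,\theta)$ we have $\T{x_n}\theta\le\mathrm{E}_{\mathcal A,\theta}^{*}-g(\A,\theta)$, i.e. $\rho_n\ge g(\A,\theta)$; hence $\rho_n\le\rho_n^{2}/g(\A,\theta)$ for all $n$ (trivially when $\rho_n=0$), and therefore on $\mathcal E$
\[
\sum_{n=1}^{N}\rho_n\ \le\ \frac{1}{g(\A,\theta)}\sum_{n=1}^{N}\rho_n^{2}\ \le\ \frac{4\beta_N}{g(\A,\theta)}\sum_{n=1}^{N}\min\bigl(1,\|x_n\|_{X_n^{-1}}^{2}\bigr).
\]

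It then remains to bound the potential sum. Since $B^{*}\in\Argmax{B\in\mathrm M_{\A}}|\det B|$ and $\A$ spans $\R^{D}$, $X_1=B^{*}\T{(B^{*})}$ is positive definite, so $\det X_1>0$ and each $\mathcal C_n$ is a genuine ellipsoid. The matrix-determinant identity $\det X_{n+1}=\det X_n\cdot(1+\|x_n\|_{X_n^{-1}}^{2})$ together with $t\le2\ln(1+t)$ for $t\in[0,1]$ telescopes to $\sum_{n=1}^{N}\min(1,\|x_n\|_{X_n^{-1}}^{2})\le 2\ln\frac{\det X_{N+1}}{\det X_1}$; and since $X_{N+1}=X_1+\sum_{n\le N}x_n\T{x_n}$ with $\|x_n\|_2$ bounded ($\A$ compact), $\det X_{N+1}=O(N^{D})$, so $\ln\frac{\det X_{N+1}}{\det X_1}=O(D\ln N)$. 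Plugging $\beta_N=\max\bigl(384D(\ln N)^{2},64(\ln N)^{2}\bigr)=O(D(\ln N)^{2})$ into the displayed bound gives $\sum_{n=1}^{N}\rho_n=O\bigl(D^{2}(\ln N)^{3}/g(\A,\theta)\bigr)$ on $\mathcal E$; tracking the constants exactly as in \cite{DBLP:conf/colt/DaniHK08} yields the stated coefficient $3072$, and adding the $+2$ from $\neg\mathcal E$ completes the proof.

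The main obstacle is the quantitative estimate $\Pr[\neg\mathcal E]\le1/N$ with this particular $\beta$: it is a self-normalized martingale concentration bound, heavier than a single Azuma--Hoeffding application because the confidence region is a random ellipsoid, and it is the one ingredient I would import wholesale from \cite{DBLP:conf/colt/DaniHK08}. Everything else is routine -- the optimism inequality and the elliptical-potential (determinant) lemma are standard -- and the only genuinely new point relative to Theorem \ref{thm:reg-lin} is the observation that the played arm is always a vertex of $\A$, so that, when $g(\A,\theta)>0$, it incurs instantaneous regret either $0$ or at least $g(\A,\theta)$, which is exactly what licenses paying $\rho_n^{2}/g(\A,\theta)$ in place of $\rho_n$.
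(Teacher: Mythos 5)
This theorem is not proved in the thesis---it is quoted directly from \cite{DBLP:conf/colt/DaniHK08}---and your reconstruction follows exactly the argument of their Theorem 1: optimism plus Cauchy--Schwarz gives $\rho_n\leq2\sqrt{\beta_n}\,\lVert x_n\rVert_{X_n^{-1}}$, the extreme-point/gap observation licenses $\rho_n\leq\rho_n^{2}/g(\A,\theta)$, and the determinant (elliptical potential) lemma controls $\sum_n\min(1,\lVert x_n\rVert_{X_n^{-1}}^{2})$, with the self-normalized confidence bound imported as you state. The only point worth tightening is the log-determinant step: to get the clean bound $\ln\frac{\det X_{N+1}}{\det X_{1}}\leq D\ln(1+N)$ (and hence the exact constant $4\cdot384\cdot2=3072$) one should use that $B^{*}$ is a barycentric spanner, writing $x_{n}=B^{*}\lambda_{n}$ with $\lVert\lambda_{n}\rVert_{\infty}\leq1$ so that $\det X_{N+1}/\det X_{1}=\det(I_{D}+\sum_{n}\lambda_{n}\T{\lambda_{n}})$, rather than the trace argument, which leaves an $\A$-dependent constant inside the logarithm.
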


\subsection{\label{subsec:advers}Adversarial Linear Multi-Armed Bandits}

In both previous subsections, we made the assumption that the expected
reward on each round only depends on the arm selected on that round.
This is a very strong assumption that doesn't hold in many real-world
examples of decision making. In \emph{adversarial multi-armed bandits}
we relax this assumption, and allow the expected reward to depend
on the entire history in an arbitrary (possibly even adversarial)
way. In order to avoid no-free-lunch theorems, the agent pursues a
relatively ``modest'' goal: select arms such that there is no \emph{fixed}
arm which is consistently better in the short-term counterfactuals
in which it \emph{would} be selected on a given round. (Of course,
in the stochastic case, this ``modest'' goal is already equivalent
to approaching the maximal possible expected reward.) 

In the finite variant (which we won't describe in detail), this setting
leads to a regret bound of $\Theta(\sqrt{|\A|N})$, like in stochastic
bandits (see \cite{DBLP:conf/colt/AudibertB09})\footnote{But, unlike stochastic bandits, there is no logarithmic regret bound
analogous to Theorem \ref{thm:gap-reg-fin-a}.}. In adversarial \emph{linear} multi-armed bandits, there can be an
infinite number of arms, but require that, on each round, the reward
is a linear functional of the arm (but the functional itself still
depends on the entire history in an arbitrary way).

Formally, we fix $D\in\N$ and let $\A$ be a compact convex subset
of $\R^{D}$ of positive volume. On every round $n\in\N$, the agent
selects an arm $x_{n}\in\A$, possibly using randomization. Also on
every round, nature selects some $y_{n}\in\A^{\circ}$ (which can
depend on what happened in previous rounds, but not on $x_{n}$).
The agent doesn't observe $y_{n}$ but it does observe the reward
$r_{n}=\T{x_{n}}y_{n}\in\RI$. 

This time it's important to consider stochastic agent policies\footnote{We can allow stochastic policies for stochastic multi-armed bandits
as well, but it's easy to see that in that setting it doesn't affect
the optimal regret.}. That is, an agent policy is now a mapping (Markov kernel) $\Pol:(\A\times\RI)^{*}\rightarrow\Delta\A$.
The meaning of $\Pol(h)$ is, the \emph{probability distribution}
from which the agent samples an action, given a past history $h$
of actions and rewards. A nature policy is a (Borel measurable) mapping
$\nu:(\A\times\A^{\circ})^{*}\rightarrow\A^{\circ}$. An agent policy
$\Pol$ together with a nature policy $\nu$ define a distribution
$\Delta(\A\times\A^{\circ})^{\omega}$. Sampling $xy\sim\Pol\nu$
is performed recursively according to

\[
\begin{cases}
x_{n} & \sim\Pol\left(x_{0},\T{x_{0}}y_{0}\ldots x_{n-1},\T{x_{n-1}}y_{n-1}\right)\\
y_{n} & =\nu\left(x_{0},y_{0}\ldots x_{n-1},y_{n-1}\right)
\end{cases}
\]

Given an agent policy $\Pol$, a nature policy $\nu$ and a time horizon
$N\in\N$, the expected regret\footnote{Sometimes this is called ``pseudoregret'', whereas ``regret''
is reserved for the quantity resulting from putting the $\max$ inside
the expected value. See \cite{DBLP:journals/ftml/BubeckC12} for more
details and discussion.} is defined to be

\[
\Reg{\nu}{\varphi}N:=\max_{x'\in\A}\sum_{n=0}^{N-1}\E{xy\sim\Pol\nu}{\T{\left(x'\right)}y_{n}}-\sum_{n=0}^{N-1}\E{xy\sim\Pol\nu}{\T{x_{n}}y_{n}}
\]

We will not spell out any algorithm for this setting, since it requires
algorithms of different nature than UCB, and the latter is the basis
of our own algorithm explained in section \ref{sec:algo}. However,
we will state a known regret bound (see Theorem 27.3 in \cite{lattimore2020bandit}):
\begin{thm}
\label{thm:reg-adv}Let $N$ be a positive integer. Then, there exists
an agent policy $\Pol^{N}$ s.t. for all nature policies $\nu$

\[
\Reg{\nu}{\Pol^{N}}N\leq2D\sqrt{3N\left(\max\left(\ln\frac{2N}{D},0\right)+1\right)}
\]
\end{thm}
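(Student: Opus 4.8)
The plan is to prove Theorem~\ref{thm:reg-adv} by analyzing an agent policy of the ``Exp2'' / ``Geometric Hedge'' type (Dani--Hayes--Kakade \cite{DBLP:conf/colt/DaniHK08}, as presented in \cite{lattimore2020bandit}). Fix an exploration rate $\gamma\in(0,1)$ and a learning rate $\eta>0$. The agent maintains a probability density $p_n$ on $\A$ proportional to $\exp\!\big(-\eta\langle x,\sum_{m<n}\hat{y}_m\rangle\big)$, i.e. the Gibbs distribution induced by the accumulated estimated losses, and on round $n$ samples $x_n$ from the mixture $\tilde p_n:=(1-\gamma)p_n+\gamma\mu$, where $\mu$ is a fixed exploration distribution on $\A$ whose second-moment matrix $\Sigma_\mu:=\mathbb{E}_{x\sim\mu}[x\T x]$ is well conditioned relative to $\A$ --- concretely $\mu$ is built from a barycentric spanner (equivalently, $\A$ in John's position), so that $|\T x\Sigma_\mu^{-1}x'|=O(D)$ for all $x,x'\in\A$; here one uses that $\A$ has positive volume, which is part of the hypothesis. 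After observing $r_n=\T{x_n}y_n$, the agent forms $\hat y_n:=\Sigma_n^{-1}x_n r_n$ with $\Sigma_n:=\mathbb{E}_{x\sim\tilde p_n}[x\T x]$ the second-moment matrix of the actual sampling distribution. Since $y_n$ depends only on the first $n$ rounds, $\mathbb{E}[\hat y_n\mid\mathcal F_{n-1}]=\Sigma_n^{-1}\Sigma_n y_n=y_n$, so $\hat y_n$ is an unbiased estimate of $y_n$.

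Next I would run the standard potential (``mirror-descent'') argument for exponential weights with the linear surrogate losses $x\mapsto\langle x,\hat y_n\rangle$. With $\Phi_n:=\eta^{-1}\ln\int_\A\exp(-\eta\langle x,\sum_{m\le n}\hat y_m\rangle)\,dx$, the per-round change satisfies $\Phi_n-\Phi_{n-1}\le-\mathbb{E}_{x\sim p_n}\langle x,\hat y_n\rangle+\eta\,\mathbb{E}_{x\sim p_n}\langle x,\hat y_n\rangle^2$, valid provided $\eta\langle x,\hat y_n\rangle\ge-1$ pointwise (this is where $e^{-z}\le 1-z+z^2$ for $z\ge-1$ enters). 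The exploration floor $\tilde p_n\succeq\gamma\mu$ gives $\Sigma_n\succeq\gamma\Sigma_\mu$, hence $|\langle x,\hat y_n\rangle|=|\T x\Sigma_n^{-1}x_n|\,|r_n|\le\gamma^{-1}|\T x\Sigma_\mu^{-1}x_n|=O(D/\gamma)$, so the inequality is legitimate as long as $\eta=O(\gamma/D)$. Telescoping, lower-bounding $\Phi_N$ by integrating over a small ball around a near-optimal comparator $x'$ (after slightly contracting $\A$ toward its centroid, which is where a $\ln(N/D)$-type term and a lower-order $O(\rho N)$ correction come from), upper-bounding $\Phi_0=\eta^{-1}\ln\operatorname{vol}(\A)$, taking expectations, and charging the $\gamma$-fraction of exploration rounds at most the reward range $2$ each, one gets a bound of the shape
\[
\Reg{\nu}{\varphi}N\;\le\;\frac{c_1 D\ln(c_2 N/D)}{\eta}\;+\;c_3\,\eta\sum_{n=0}^{N-1}\mathbb{E}\big[\mathbb{E}_{x\sim p_n}\langle x,\hat y_n\rangle^2\big]\;+\;2\gamma N.
\]

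The one genuinely bandit-specific step is controlling the variance sum. Using $\tilde p_n\succeq(1-\gamma)p_n$ to pass from $p_n$ to $\tilde p_n$, then independence of a dummy $x\sim\tilde p_n$ from $x_n\sim\tilde p_n$, and $r_n^2\le1$, one gets the trace identity
\[
\mathbb{E}_{x\sim\tilde p_n}\mathbb{E}_{x_n\sim\tilde p_n}\big[(\T x\Sigma_n^{-1}x_n)^2 r_n^2\big]\;\le\;\mathbb{E}_{x_n}\big[\T{x_n}\Sigma_n^{-1}\Sigma_n\Sigma_n^{-1}x_n\big]\;=\;\operatorname{tr}(\Sigma_n^{-1}\Sigma_n)\;=\;D,
\]
so the variance sum is at most $DN/(1-\gamma)$; this single extra appearance of $D$, on top of the $D$ already carried by the $1/\eta$ term, is exactly what makes the final bound scale like $D\sqrt{N\log N}$ rather than $\sqrt{DN\log N}$. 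Finally I would set $\gamma\asymp\eta D$ (to saturate $\eta=O(\gamma/D)$) and optimize $\eta\asymp\sqrt{\ln(2N/D)/N}$; substituting and carefully tracking the absolute constants yields $2D\sqrt{3N(\max(\ln\frac{2N}{D},0)+1)}$. The main obstacle I expect is not any single inequality but the bookkeeping around $\mu$: one must choose the exploration distribution (barycentric spanner / John's position) precisely enough that $|\langle x,\hat y_n\rangle|=O(D/\gamma)$ holds with a clean constant, so the coupling $\eta=O(\gamma/D)$ --- and hence the second factor of $D$ --- comes out with the right numerology; the continuous-action covering/volume estimate producing the $\ln(2N/D)$ term is the other place where constants need care.
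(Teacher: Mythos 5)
You should note that the paper itself gives no proof of this statement -- it is quoted verbatim as Theorem 27.3 of Lattimore--Szepesv\'ari -- and your sketch is exactly the standard continuous-exponential-weights (Exp2) argument underlying that cited result: unbiased least-squares loss estimates $\hat y_n=\Sigma_n^{-1}x_nr_n$, the potential inequality via $e^{-z}\le1-z+z^2$, the trace bound $\mathbb{E}\langle x,\hat y_n\rangle^2\le D$, and a volumetric comparator argument for the $\ln(2N/D)$ term; so the approach is correct and matches the source. The one place your numerology would not come out as stated is the exploration distribution: to get $|\langle x,\hat y_n\rangle|\le D/\gamma$ with the clean constant needed for $2\sqrt{3}$ you must use a Kiefer--Wolfowitz (G-optimal) design, for which $\max_{x\in\A}\T{x}\Sigma_\mu^{-1}x=D$ exactly, whereas a barycentric spanner only guarantees $\T{x}\Sigma_\mu^{-1}x\le D^{2}$, which would force $\eta=O(\gamma/D^{2})$ and degrade the constants (and the dependence on $D$) in the final bound.
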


\subsection{\label{subsec:zerosum}Zero-Sum Two-Player Games with Bandit Feedback}

\global\long\def\PM{P}%

\global\long\def\ME#1#2#3{\mathrm{ME}_{#1}\left[#2\middle|#3\right]}%

\global\long\def\Amax{\operatorname*{argmax}}%

A \emph{finite zero-sum two-player game in normal form} is defined
by specifying a finite set $\B_{1}$ (the \emph{pure stage strategies}
of player 1), a finite set $\B_{2}$ (the pure stage strategies of
player 2) and $\PM:\B_{1}\times\B_{2}\rightarrow\RI$, which is the
\emph{payoff matrix} of player 1. The payoff matrix of player 2 is
then $-\PM$ (hence ``zero-sum''). The intended meaning is, both
players select a strategy simulatenously and then player 1 receives
reward (payoff) $\PM(a,b)$ and player 2 receives reward $-\PM(a,b)$.
Here, $a\in\B_{1}$ is the strategy selected by player 1 and $b\in\B_{2}$
is the strategy selected by player 2. The goal of each player is maximizing
its own expected reward.

It is often assumed that each player has the ability to randomize
its decision, independently of the other player. Then, the full space
of strategies becomes $\A_{1}:=\Delta\B_{1}$ for player 1 (the \emph{mixed
stage strategies} of player 1) and $\A_{2}:=\Delta\B_{2}$ for player
2 (the mixed stage strategies of player 2). Given $x\in\A_{1}$ and
$y\in\A_{2}$, player 1 receives expected reward

\[
\E{\SUBSTACK{a\sim x}{b\sim y}}{\PM\left(a,b\right)}
\]

Regarding $\PM$ as a matrix (i.e. a vector in $\R^{\B_{1}\times\B_{2}}$),
$x$ as a vector in $\R^{\B_{1}}$ and $y$ as a vector in $\R^{\B_{2}}$,
this can be rewritten as

\[
\T x\PM y
\]

By the minimax theorem (see \cite{karlin2017game}, Theorem 2.3.1),
we have

\begin{equation}
\max_{x\in\A_{1}}\min_{y\in\A_{2}}\T x\PM y=\min_{y\in\A_{2}}\max_{x\in\A_{1}}\T x\PM y\label{eq:minimax}
\end{equation}

The quantity above is known as the \emph{value} of the game for player
1. It is the best expected reward player 1 can guarantee without knowing
player 2's strategy in advance (left hand side of (\ref{eq:minimax})),
and also the best expected reward player 1 can guaratee when playing
against a known, but adversarially selected, player 2 strategy (right
hand side of (\ref{eq:minimax})).

Now we consider a setting in which the game is played repeated for
$N$ rounds, and player 1 (who we will also call ``the agent'')
doesn't know the payoff matrix. On each round, the agent selects a
mixed strategy $x\in\A_{1}$ and observes (i) the pure strategy $a\in\B_{1}$
sampled from $x$ (ii) player 2's pure strategy $b\in\B_{2}$ and
(iii) the reward, sampled from the distribution $H^{*}(a,b)$, where
$H^{*}:\B_{1}\times\B_{2}\rightarrow\Delta\RI$ is some (unknown)
function. The role of the payoff matrix is played by

\[
P^{*}:=\E{r\sim H^{*}(a,b)}r
\]

A \emph{player 1 policy} is a mapping $\Pol:(\B_{1}\times\B_{2}\times\RI)^{*}\rightarrow\A_{1}$.
A \emph{player 2 policy} is a mapping $\nu:(\B_{1}\times\B_{2}\times\RI)^{*}\rightarrow\A_{2}$.
We denote the set of player 2 policies by $\AP$. Given a player 1
policy $\Pol$, a player 2 policy $\nu$ and a function $H:\A_{1}\times\A_{2}\rightarrow\Delta\RI$
(the ``environment''), we get a distribution $\Pol\nu H\in\Delta(\B_{1}\times\B_{2}\times\RI)^{\omega}$.
Sampling $abr$ is performed recursively according to

\[
\begin{cases}
a_{n} & \sim\Pol\left(a_{0}b_{0}r_{0}\ldots a_{n-1}b_{n-1}r_{n-1}\right)\\
b_{n} & \sim\nu\left(a_{0}b_{0}r_{0}\ldots a_{n-1}b_{n-1}r_{n-1}\right)\\
r_{n} & \sim H\left(a_{n},b_{n}\right)
\end{cases}
\]

Given an environment $H$, we denote

\[
\mathrm{ME}_{H}^{*}:=\max_{x\in\A_{1}}\min_{b\in\B_{2}}\E{\SUBSTACK{a\sim x}{r\sim H(a,b)}}r
\]

That is, $\mathrm{ME}_{H}^{*}$ is the value of the game corresponding
to $H$ for player 1.

Given a player 1 policy $\Pol$, an environment $H$ and a time horizon
$N\in\N$, expected regret is defined by

\[
\Reg H{\Pol}N:=N\cdot\mathrm{ME}_{H}^{*}-\min_{\nu\in\AP}\sum_{n=0}^{N-1}\E{abr\sim\Pol\nu H}{r_{n}}
\]

That is, the expected regret is the difference between the expected
total reward that would be obtainable in environment $H$ if it was
known to player 1, with the expected total reward actually obtained
by $\Pol$ in environment $H$, \emph{assuming player 2 knows both
$\Pol$ and $H$ and responds optimally}.

In \cite{DBLP:conf/uai/ODonoghueLO21}, the Algorithm \ref{alg:GUCB}
is proposed for this setting. It has a similar form to Algorithm \ref{alg:UCB}:
instead of assigning an optimistic reward estimate to each arm, we
assign an optimistic reward estimate $P_{ab}$ to each $(a,b)\in\B_{1}\times\B_{2}$,
and then play the optimal strategy $x^{*}$ corresponding to the optimistic
payoff matrix $P$.

Denote $\Pol_{\text{GUCB}}^{N}$ the policy implemented by this algorithm.
We have the following (see Theorem 1 in \cite{DBLP:conf/uai/ODonoghueLO21}):

\begin{algorithm}
\begin{onehalfspace}
\caption{\label{alg:GUCB}Game UCB}

\textbf{Input} $N$ (a positive integer)

$R\leftarrow\boldsymbol{0}$ (of type $\R^{\B_{1}\times\B_{2}}$)

$T\leftarrow\boldsymbol{0}$ (of type $\N^{\B_{1}\times\B_{2}}$)

\textbf{for} every round:

\qquad{}\textbf{for} $a\in\B_{1}$ and $b\in\B_{2}$:

\qquad{}\qquad{}$P_{ab}\leftarrow\frac{R_{ab}}{T_{ab}}+\sqrt{\frac{2\ln\left(2\left|\B_{1}\right|\cdot\left|\B_{2}\right|N^{2}\right)}{\max\left(T_{ab},1\right)}}$

\qquad{}\textbf{end for}

\qquad{}$x^{*}\leftarrow\Argmax{x\in\A_{1}}\:\underset{y\in\A_{2}}{\min}\:\T xPy$

\qquad{}sample $a\in\B_{1}$ from $x^{*}$

\qquad{}play strategy $a$

\qquad{}observe player 2 strategy $b\in\B_{2}$ and receive reward
$r$

\qquad{}$T_{ab}\leftarrow T_{ab}+1$

\qquad{}$R_{ab}\leftarrow R_{ab}+r$

\textbf{end for}

\end{onehalfspace}
\end{algorithm}

\begin{thm}
[O'Donoghue-Lattimore-Osband]Let $N$ be a positive integer. Then,
for any $H:\B_{1}\times\B_{2}\rightarrow\Delta\RI$

\[
\Reg H{\Pol_{\text{GUCB}}^{N}}N\leq4\sqrt{\left|\B_{1}\right|\cdot\left|\B_{2}\right|N\ln\left(2\left|\B_{1}\right|\cdot\left|\B_{2}\right|N^{2}\right)}+2
\]
\end{thm}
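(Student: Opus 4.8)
The plan is to follow the standard UCB analysis, adapted to the zero-sum game structure. First I would set up the "good event" on which all confidence bounds hold simultaneously: for every $(a,b) \in \B_1 \times \B_2$ and every round $n \leq N$, the empirical mean $R_{ab}/\max(T_{ab},1)$ is within $\sqrt{2\ln(2|\B_1||\B_2|N^2)/\max(T_{ab},1)}$ of $P^*_{ab}$. By the Hoeffding (or Azuma-Hoeffding) inequality applied to rewards in $\RI$, together with a union bound over the $|\B_1||\B_2|$ pairs and over the $N$ rounds (and a peeling/standard-trick argument over the possible values of $T_{ab}$, or a stopping-time argument), this event has probability at least $1 - 1/N$ or so; the chosen confidence radius is exactly calibrated to make the union-bound fail probability of the right order. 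On the complement, regret is trivially bounded by $2N$, contributing at most $O(1)$ to the expected regret — this accounts for the additive $+2$.

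Next, on the good event, I would show optimism: the optimistic payoff matrix $P$ satisfies $P_{ab} \geq P^*_{ab}$ entrywise, hence $\ME{P}{}{} := \max_{x}\min_{y} \T x P y \geq \max_{x}\min_{y}\T x P^* y = \mathrm{ME}_H^*$ (using monotonicity of $\min_y \T x (\cdot) y$ in the matrix entries, since $y \geq 0$). So the value of the optimistic game upper-bounds the true value. Then the per-round regret, against the worst-case player 2 response $\nu$, can be decomposed: if $x^*$ is the agent's mixed strategy on round $n$ and $b$ is player 2's realized pure strategy, then the instantaneous regret $\mathrm{ME}_H^* - \E{a \sim x^*}{P^*_{ab}}$ is at most $\ME{P}{}{} - \E{a \sim x^*}{P^*_{ab}} \leq \min_{b'}\E{a \sim x^*}{P_{ab'}} - \E{a \sim x^*}{P^*_{ab}} \leq \E{a \sim x^*}{P_{ab}} - \E{a \sim x^*}{P^*_{ab}} \leq \E{a \sim x^*}{P_{ab} - P^*_{ab}}$, which on the good event is bounded by $2\E{a \sim x^*}{\sqrt{2\ln(2|\B_1||\B_2|N^2)/\max(T_{ab},1)}}$ (the factor $2$ from $P_{ab} - P^*_{ab} \leq 2\cdot\text{radius}$). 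Summing over rounds and passing the expectation over the sampled $(a,b)$ reduces everything to controlling $\sum_{n=0}^{N-1} \sqrt{1/\max(T_{a_nb_n},1)}$ in expectation.

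The main technical step — and the one I expect to be the real obstacle — is bounding $\E{}{\sum_{n=0}^{N-1} \sqrt{1/\max(T_{a_nb_n}^{(n)},1)}}$. The standard device: reorganize the sum over rounds as a sum over pairs $(a,b)$, so it becomes $\sum_{(a,b)} \sum_{k=1}^{T_{ab}^{(N)}} 1/\sqrt{k} \leq \sum_{(a,b)} 2\sqrt{T_{ab}^{(N)}}$, and then apply Cauchy-Schwarz over the $|\B_1||\B_2|$ pairs together with $\sum_{(a,b)} T_{ab}^{(N)} = N$ to get $\sum_{(a,b)} 2\sqrt{T_{ab}^{(N)}} \leq 2\sqrt{|\B_1||\B_2| N}$. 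The subtlety here is that $T_{ab}$ is a random quantity depending on both the agent's sampling and player 2's adaptive responses, so one must be careful that this counting argument is a pathwise bound (it is — it holds for every realization), after which taking expectations is immediate. Collecting the constants: the per-round factor $2$, the bound $2\sqrt{|\B_1||\B_2|N}$, and $\sqrt{2\ln(\cdots)}$ multiply to $4\sqrt{|\B_1||\B_2| N \ln(2|\B_1||\B_2|N^2)}$, plus the $+2$ from the bad event, matching the claimed bound. I would also remark that since player 2's policy $\nu$ only affects the analysis through the realized $b_n$'s and the argument is uniform over all such realizations, the bound holds for the worst-case $\nu$ as required by the definition of $\Reg{H}{\Pol}{N}$.
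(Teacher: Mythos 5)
Your approach is correct and is essentially the standard UCB-for-games argument used in the cited source (the thesis itself does not prove this theorem, it quotes Theorem 1 of O'Donoghue--Lattimore--Osband): a good event from Hoeffding plus a union bound over the $\left|\B_{1}\right|\cdot\left|\B_{2}\right|N$ pair/count combinations, optimism of the minimax value via entrywise monotonicity of $\min_{y}\T xPy$ in $P$, the per-round decomposition through the realized $b_{n}$ and the saddle-point property of $x^{*}$, and the pigeonhole-plus-Cauchy--Schwarz bound on the sum of bonuses. The one flaw is your final constant collection: the per-round factor $2$, the bonus $\sqrt{2\ln(2\left|\B_{1}\right|\cdot\left|\B_{2}\right|N^{2})/\max(T_{ab},1)}$, and the counting bound $2\sqrt{\left|\B_{1}\right|\cdot\left|\B_{2}\right|N}$ multiply to $4\sqrt{2}$ times $\sqrt{\left|\B_{1}\right|\cdot\left|\B_{2}\right|N\ln(\cdots)}$, not $4$ times it, since the $2$ under the square root cannot be absorbed into the logarithm. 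So as written your argument establishes the bound with leading constant $4\sqrt{2}$ rather than $4$; recovering the stated constant would require a sharper (e.g.\ one-sided) treatment of the confidence width, or simply accepting the slightly weaker constant, which does not affect anything the thesis uses this theorem for.
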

However, using an adversarial bandit algorithm, we can get a regret
bound of $O(\sqrt{\left|\B_{1}\right|N})$: substantially better since
it doesn't depend on $\left|\B_{2}\right|$ at all. In \cite{DBLP:conf/uai/ODonoghueLO21},
it is argued that Algorithm \ref{alg:GUCB} is nevertheless empirically
better than Exp3 (a standard adversarial bandit algorithm) in many
situations. But, they show no solid theoretical result which explains
this observation. In the present work, we show that, in the positive
gap case (for an appropriate definition of ``gap''), regret in this
setting can be \emph{logarithmic} in $N$, i.e. far better in terms
of dependence on $N$ than Exp3 (see Example \ref{ex:zerosum-gap})\footnote{We show this for our own algorithm IUCB, not for Game UCB.}.

\subsection{Imprecise Probability}

Consider an agent that must make a decision under conditions of uncertainty.
Formally, let $\B$ be some space of possible \emph{states}, s.t.
the true state is pertinent to the decision but unknown. Let $\A$
be the space of possible decisions that the agent can make. Finally,
let $r:\A\times\B\rightarrow\R$ represent the agent's reward as a
function of the state and the decision (this function is assumed to
be known to the agent).

One classical approach to this problem is \emph{Bayesian} decison
theory\footnote{See e.g. \cite{peterson2017introduction}, chapter 8.}.
In this approach, we assume that the agent's uncertainty can be represented
as a distribution $\zeta\in\Delta\B$ (the \emph{prior}). The optimal
decision $a^{*}\in\A$ has to maximize the expected reward:

\[
a^{*}:=\Amax_{a\in\A}\E{b\sim\zeta}{r(a,b)}
\]

One of the objections to the Bayesian approach comes from empirical
studies where people systematically behave in a way incompatible with
Bayesianism. A classical example is the \emph{Ellsberg paradox}\footnote{See \cite{gilboa_2009}, chapter 12.}.
In this scenario, there are two urns with red and black balls. Urn
I contains 50 red balls and 50 black balls. Urn II contains 100 balls,
but it is unknown how many it has of each color. A ball is drawn at
random from each of the urns. The subject is then asked to assign
monetary value to each of:
\begin{itemize}
\item Bet A that pays \$100 if the ball drawn from urn I is red.
\item Bet B that pays \$100 if the ball drawn from urn I is black.
\item Bet C that pays \$100 if the ball drawn from urn II is red.
\item Bet D that pays \$100 if the ball drawn from urn II is black.
\end{itemize}
In this experiment, subjects usually assign the same value to bet
A and bet B, and the same value to bet C and bet D (which makes perfect
sense, because of the symmetry), but a lower value to bet C than to
bet A. Bayesian decision theory cannot explain this behavior: the
symmetry between C and D implies a prior probability of $50\%$ that
the ball drawn from urn II is red, which is the same as for urn I.
\begin{example}
\label{ex:ells}We can formalize bets A and C in the Ellsberg paradox
by taking $\A=\{a_{\text{acc}},a_{\text{rej}}\}$, $\B=\{b_{\text{red}},b_{\text{black}}\}$
and $r$ as in Table \ref{tab:ells}. Here, $a_{\text{acc}}$ means
``accept bet'', $a_{\text{rej}}$ means ``reject bet'', $b_{\text{red}}$
means ``a red ball is drawn'', $b_{\text{black}}$ means ``a black
ball is drawn'' and $p\in(0,100)$ is the price of taking the bet.
The reward is assumed to be linear in money for simplicity. Given
prior $\zeta\in\Delta\B$, it's easy to see that $a_{\text{acc}}$
is a Bayes-optimal decision if and only if $100\zeta_{\text{red}}\geq p$,
where $\zeta_{\text{red}}$ is the probability that $\zeta$ assigns
to $b_{\text{red}}$. Therefore, the value of the bet is $V=100\zeta_{\text{red}}$
(i.e. this is the minimal price at which buying it is profitable).

Bets B and D are formalized analogously, with the conclusion that
$a_{\text{acc}}$ is Bayes-optimal if and only if $100(1-\zeta_{\text{red}})\geq p$,
and hence the value of the bet is $V=100(1-\zeta_{\text{red}})$.
Therefore, the values of the different bets satisfy $V_{A}+V_{B}=100$
and $V_{C}+V_{D}=100$ (assuming only that each urn has a particular
prior $\zeta$, which may vary between the urns). This is inconsistent
with the empirical observation that $\min(V_{A},V_{B})>\max(V_{C},V_{D})$,
for any choice of priors (i.e. even without invoking symmetry between
red and black).
\begin{table}
\caption{\label{tab:ells}Reward in the Ellsberg paradox}

\centering{}%
\begin{tabular}{ccc}
\toprule 
$a\in\A$ & $b\in\B$ & $r(a,b)$\tabularnewline
\midrule
\midrule 
$a_{\text{acc}}$ & $b_{\text{red}}$ & $100-p$\tabularnewline
\midrule 
$a_{\text{acc}}$ & $b_{\text{black}}$ & $-p$\tabularnewline
\midrule 
$a_{\text{rej}}$ & $b_{\text{red}}$ & $0$\tabularnewline
\midrule 
$a_{\text{rej}}$ & $b_{\text{black}}$ & $0$\tabularnewline
\bottomrule
\end{tabular}
\end{table}
\end{example}
There is a more general decision theory in which, instead of describing
the agent's uncertainty as a distribution, we describe it as a \emph{credal
set}: a closed convex subset $\kappa$ of $\Delta\B$\footnote{See \cite{troffaes2014lower}, chapter 2. Another name for this object
is the ``non-unique prior'', see e.g. the classical paper \cite{GILBOA1989141}.}. The optimal decision is then defined according the the following
rule:

\[
a^{*}:=\Amax_{a\in\A}\min_{\nu\in\kappa}\E{b\sim\nu}{r(a,b)}
\]

\begin{example}
In the setting of Example \ref{ex:ells}, a credal set can be described
by a closed interval $[q_{\min},q_{\max}]$, where $q_{\min}$ and
$q_{\max}$ are the minimal and maximal probability of $b_{\text{red}}$
respectively. For bet C, $a_{\text{acc}}$ is now an optimal decision
if an only if $100q_{\text{min}}\geq p$, and hence $V_{C}=100q_{\text{min}}$.
For bet D, $a_{\text{acc}}$ is an optimal decision if an only if
$100(1-q_{\text{\ensuremath{\max}}})\geq p$, and hence $V_{D}=100(1-q_{\text{max}})$.
In particular, if there is some $t\in(0,\frac{1}{2}]$ s.t. $q_{\text{min}}=\frac{1}{2}-t$
and $q_{\max}=\frac{1}{2}+t$, then $V_{C}=V_{D}=100(\frac{1}{2}-t)<50$.
This preserves the symmetry between red and black. On the other hand,
for urn I it's reasonable to use a Bayesian prior of $\frac{1}{2}$
(which can be interpreted as the one point interval $[\frac{1}{2},\frac{1}{2}]$),
yielding $V_{A}=V_{B}=50$.
\end{example}
In order to understand why $\kappa$ is required to be closed and
convex, consider some \emph{arbitrary $\kappa'\subseteq\Delta\B$}.
Let $\kappa$ be the closed convex hull of $\kappa'$. Then, it's
easy to see that for any $a\in\A$

\[
\min_{\nu\in\kappa}\E{b\sim\nu}{r(a,b)}=\min_{\nu\in\kappa'}\E{b\sim\nu}{r(a,b)}
\]

Therefore, $\kappa'$ and $\kappa$ are equivalent for all practical
purposes, so we might as well consider only closed convex sets to
begin with.

Often, \emph{stochastic} decisions are also allowed, making the decision
rule

\[
x^{*}:=\Amax_{x\in\Delta\A}\min_{\nu\in\kappa}\E{\SUBSTACK{a\sim x}{b\sim\nu}}{r(a,b)}
\]

In the Bayesian case this wouldn't increase generality, since there
is always a deterministic decision which attains the maximum. But,
for credal sets, a stochastic decision can be strictly better (in
terms of the worst-case expected value of reward) than any deterministic
decision. On the other hand, the stochastic theory can be regarded
as a \emph{special case} of the deterministic theory, in which the
space of decisions is taken to be $\Delta\A$. 

Finally, notice that the decision theory with credal sets has the
same mathematical structure as a zero-sum two-player game. But, the
conceptual interpretation is different: in a game, the opponent is
an actual agent or algorithm, whereas here, the ``opponent'' is
just a mathematical representation of risk-aversion with respect to
the ``Knightian'' (unquantifiable) part of the uncertainty.

\section{\label{sec:Motivation}Motivation}

As we discussed in subsection \ref{subsec:advers}, stochastic bandits
rely on assumptions that are often unrealistic. Adversarial bandits
partially remedy that, but still require that the counterfactual reward
has regular dependence on the arm. (For example, adversarial linear
bandits require that this dependence is linear.) We suggest making
do with a weaker assumption, namely that the \emph{credal set of possible
outcome}\footnote{We say ``outcome'' and not ``reward'' because it turns out that,
in this new setting, it is important to have auxiliary feedback in
addition to the reward itself. See discussion in subsection \ref{subsec:lin-imprec}.}\emph{ distributions} has regular dependence on the arm. In other
words, the set of responses available to the adversary changes with
the arm in a regular manner, but, within the constraints of this set,
the adversary's strategy can depend on the arm (more or less) arbitrarily. 

The price we have to pay is remaining content with a weaker notion
of regret, where we compare the expected reward to the maximin expected
reward rather than to the best constant-arm response to the aversary's
plays. This weaker notion of regret is very similar to the notion
we had in subsection \ref{subsec:zerosum}. At the same time, this
notion of regret can be regarded as the credal set analogue of regret
in stochastic bandits.

Here is an example of a situation where our framework is applicable
and classical approaches are not:
\begin{example}
\label{ex:traffic}The algorithm needs to control the system of traffic
lights in a city. On each day, the durations of the various states
are set for each intersection. The resulting traffic is then measured,
and its overall ``quality'' (e.g. a combination of travel time and
number of accidents) is calculated. The traffic depends on hard to
predict conditions that vary from day to day (e.g. events held around
the city) and might be influenced by decisions on previous days (via
changing driver habits). In particular it is not IID and cannot be
modeled as a stochastic bandit. Moreover, a small change in the decision
might trigger a large change in the outcome (e.g. due to chaotic dynamics),
implying it is not effective to model the problem as an adversarial
bandit. On the other hand, we assume that finding a constant setting
of traffic light durations that produces maximal expected quality
under worst-case conditions is sufficient.
\end{example}

To apply our method to Example \ref{ex:traffic}, we could, for instance,
start with analytical modeling and/or simulations of traffic and arrive
at some model which depends on unknown parameters, some of which don't
substantially change over time whereas others can change unpredictably.
We then treat the former as the ``hypothesis'' (unknown parameters
on which the credal sets depend) and the latter as the adversarial
choice of a distribution from within the credal set. For a more concrete
demonstration, we give the following (extremely simplistic) example:
\begin{example}
\label{ex:ABCDE}Suppose that our city has 3 roads: one from A (a
residential neighborhood) to B (the beach), a second from A to C (the
cinema) and a third from D to E. The DE road intersects the AB road
at a single intersection and the AC road at a single intersection,
so there are two intersections in total. Suppose also that nobody
needs to travel e.g. from A to D, and therefore we can ignore the
turns and each intersection has only two states. For instance, the
ABDE intersection has a state in which the AB drivers have a green
light and the DE drivers have a red light, and a state in which the
AB drivers have a red light and the DE drivers have a green light. 

The space of arms can then be parametrizes by the duration of each
light at each intersection, i.e. it's

\[
\A:=[\tau_{\min},\tau_{\max}]_{Bg}\times[\tau_{\min},\tau_{\max}]_{Br}\times[\tau_{\min},\tau_{\max}]_{Cg}\times[\tau_{\min},\tau_{\max}]_{Cr}
\]
Here $0<\tau_{\min}<\tau_{\max}$ are the minimal and maximal durations
a light is allowed to be, and the subscripts indicate which light:
e.g. the subscript $Bg$ means there is green light for the AB drivers
whereas $Cr$ means there is red light for the AC drivers. 

The outcome is parameterized by the number of trips made along each
road, which we model as a continuous number normalized to $[0,1]$
since presumably these numbers are large. That is, the space of outcomes
is

\[
\D:=[0,1]_{AB}\times[0,1]_{AC}\times[0,1]_{DE}
\]

The reward is minus the total waiting time of drivers on red lights,
which we model as

\[
r\left(x,y\right):=-\frac{1}{2}\left(\frac{x_{Br}^{2}y_{AB}+x_{Bg}^{2}y_{DE}}{x_{Bg}+x_{Br}}+\frac{x_{Cr}^{2}y_{AC}+x_{Cg}^{2}y_{DE}}{x_{Cg}+x_{Cr}}\right)
\]

This equation is derived by observing that each driver has a probability
to encounter a red light equal to the fraction of time the light is
red, and an average waiting time of half the duration of the red light
(we assume there are no traffic jams or queues at intersections). 

Now we need a model of driver behavior. Let's assume that the expected
number of trips from D to E is an unknown constant $\theta_{DE}\in[0,1]$.
As to trips from A, we assume the residents require a constant amount
of leisure in expectation, but can sometimes prefer the beach to the
cinema or vice versa for hard to predict reasons (e.g. the weather,
the movies currently available and the traffic light settings themselves
which we assume to be known to them in advance of the trip). Hence
the sum of trips from A to B and trips from A to C is in expectation
equal to some unknown constant $\theta_{A}\in[0,1]$. Thus, the hypothesis
space is $\H:=[0,1]_{DE}\times[0,1]_{A}$ and the credal sets are

\[
\kappa_{\theta}:=\SC{\zeta\in\Delta\D}{\E{y\sim\zeta}{y_{DE}}=\theta_{DE}\text{, }\E{y\sim\zeta}{y_{AB}+y_{AC}}=\theta_{A}}
\]

The unpredicatable fluctuations of traffic between AB and AC means
stochastic bandits are inapplicable. Moreover, the number of arms
is infinite and the reward does not depend on the arm in a regular
manner (e.g. it's possible that drivers abruptly change behavior when
some light duration crosses a threshold: we make no assumptions about
it), and hence adversarial bandits are also inapplicable. On the other
hand, our main result (Theorem \ref{thm:main}) produces a regret
bound of $\tilde{O}(\sqrt{N})$ in this example, where $N$ is the
time horizon (and regret is defined w.r.t. the best reward achievable
for worst-case admissibe driver behavior).
\end{example}
For another example, consider the following:
\begin{example}
\label{ex:med}A patient suffers from a condition with three primary
symptoms: call them A, B and C. Every month, the doctor prescribes
the patient treatment, by specifying the dosage of each of $n$ different
medicaments, between 0 to some maximal amount. The space of arms is
hence $\A:=[0,1]^{n}$. In each month, the patient can either experience
or not experience each of the 3 symptoms. The space of outcomes is
hence $\B:=2^{\{A,B,C\}}$.

Studies showed that the probability that a patient experiencing symptom
A after receiving treatment $x$ is given by $f^{A}(x)+\theta^{A}g^{A}(x)$,
where $f^{A},g^{A}:\A\rightarrow\mathbb{R}$ are fixed functions whereas
$\theta^{A}\in[0,1]$ depends on the individual patient. Also, the
probability of a patient experiencing symptom C conditional on each
of the events (1) neither symptom A nor symptom B is present (2) symptom
A is present but not symptom B (3) symptom B is present but not symptom
A (4) both symptom A and symptom B are present, is given by $f_{i}^{C}(x)+\theta_{i}^{C}g_{i}^{C}(x)$,
where $i\in\{1,2,3,4\}$ is the event, \textbf{$f_{i}^{C},g_{i}^{C}:\A\rightarrow\mathbb{R}$}
are fixed functions and $\theta_{i}^{C}\in[0,1]$ depend on the individual
patient. On the other hand, symptom B is difficult to predict and
might depend both on the individual patient and on the patient's history
in a complicated, poorly understood way.

The goal is maximizing the expected value of some function $r:\A\times\B\rightarrow\mathbb{R}$
that takes into account the severity of the various symptoms, and
also the cost and long-term risks of the different medicaments, under
worst-case assumptions about the unpredictable symtom B. Once again,
stochastic bandits are inapplicable due to the nature of symptom B
and adversarial bandits are inapplicable because the number of arms
is infinite (and there is no good enough guarantee of smooth dependency).
On the other hand, we still get a regret bound of $\tilde{O}(\sqrt{N})$
by Theorem \ref{thm:main}, and the constant coefficient can be estimated
using Propositions \ref{prop:chain-s} and \ref{prop:chain-r}.
\end{example}
Hopefully, future work will generalize this framework from bandits
to reinforcement learning (i.e. allow a persistent observable state
that is preserved over time), which would allow extending the possible
range of applications much further.

Our framework can also be regarded as a repeated two-player zero-sym
game, in which the arms are the agent's stage strategies, whereas
a stage strategy of ``nature'' (the adversary) is an assignment
to each arm of a distribution from that arm's respective credal set.
Importantly, the agent doesn't know which credal set is associated
with each arm (the same way as in stochastic bandits, it doesn't know
which distribution is associated with each arm). The regret of an
algorithm is then defined by comparing the minimal expected reward
this algorithm guarantees with the minimal expected reward that could
be guaranteed if the credal sets were known in advance.

Indeed, the setting of subsection \ref{subsec:zerosum} is a special
case of our framework, as we will see in Example \ref{ex:zerosum}.
On the other our hand, our framework is much more general, since the
space of stage strategies doesn't have to be of the form $\Delta\B_{1}$
for some finite set $\B_{1}$. To put it differently, we allow for
an infinite space of ``pure''\footnote{In Example \ref{ex:zerosum}, this space $\A$ actually corresponds
to the space $\Delta\B_{1}$ of \emph{mixed} stage stragies in subsection
\ref{subsec:zerosum}.} stage strategies for player 1, but we do assume player 2 can see
player 1's choice before making their own choice, on each given round.

Naturally, proving useful upper bounds on regret requires either bounding
the number of arms, or assuming some structure on hypotheses (arm-to-credal-set
assignments). We choose the latter path, since the former can be addressed
using adversarial bandits, and the latter leads to a richer theory.
Notice that, as opposed to adversarial bandits, the structure doesn't
imply the reward depends on the arm in some regular way, since the
nature's policy can introduce irregularaties. 

Specifically, we consider hypotheses classes with ``linear'' structure
that generalizes the classical theory of linear bandits. In particular,
this captures Example \ref{ex:ABCDE}. Here are a couple of more mathematical
examples (the fully general framework is explained in the next section): 
\begin{example}
\label{ex:rot}Let $\A:=[0,2\pi]$ be the space of arms and $\B:=\{0,1,2\}$
the set of outcomes. The set $\Delta\B$ of probability distributions
on $\B$ can be naturally regarded as a subset of $\R^{3}$, namely 

\[
\Delta\B=\SC{y\in\R^{3}}{y_{0}+y_{1}+y_{2}=1,\:y_{i}\geq0}
\]

On each round, the agent selects $x\in\A$ and observes an outcome
$b\in\B$. This outcome is associated with the reward $r(x,b)$ where
$r:\A\times\B\rightarrow\R$ is a function continuous in the first
argument. The hypothesis space is parameterized by $\theta\in[0,2\pi]$.
The credal set $\U_{x\theta}\subseteq\Delta\B$ is given by

\[
\U_{x\theta}:=\SC{y\in\Delta\B}{\T{\left(\cos(x+\theta)u+\sin(x+\theta)v\right)}y=0}
\]

where $u,v\in\R^{3}$ form an orthonormal basis together with $\frac{1}{\sqrt{3}}\left[\begin{array}{c}
1\\
1\\
1
\end{array}\right]$. That is, each credal set is the intersection of the triangle $\Delta\B$
with a straight line passing through the triangle's center (the uniform
distribution) at an angle of $x+\theta$.

Suppose the reward function is given by $r(x,0)=0$, $r(x,1)=0$,
$r(x,2)=1$. Then the optimal arm is s.t. the line is parallel to
the side $01$ of the triangle, with a corresponding minimal expected
reward of $\sfrac{1}{3}$.
\end{example}
\begin{example}
\label{ex:prob}Let the set of arms $\A$ be the group of Euclidean
isometries of the square $[0,1]\times[0,1]$, $|\A|=8$. Let $\B:=\{0,1,2,3\}$
be the set of outcomes. The hypothesis space is $\H:=[0,1]\times[0,1]$.
For any $x\in\A$ and $\theta\in\H$, denote $(p^{x},q^{x}):=x\theta$
(the result of applying the isometry $x$ to the point $\theta$).
The credal set $\U_{x\theta}\subseteq\Delta\B$ is the given by

\[
\U_{x\theta}:=\SC{y\in\Delta\B}{y_{0}+y_{1}=\frac{1}{2}p^{x},\:\frac{y_{2}}{y_{2}+y_{3}}=q^{x}}
\]

That is, the probability of $\{0,1\}$ is constrained to be $\frac{1}{2}p^{x}$
and the probability of $2$ conditional on $\{2,3\}$ is constrained
to be $q^{x}$. In this case, $\U_{x\theta}$ is a straight line segment
inside the tetrahedron $\Delta\B$, and the family $\{\U_{x\theta}\}_{x\in\A,\theta\in\H}$
corresponds to a 2-dimensional submanifold of the 4-dimensional manifold
of lines in a 3-dimensional affine space.
\end{example}
\begin{example}
\label{ex:hyperplane}Consider $n\geq2$, $m\ge1$ and let the space
of arms $\A$ be some compact subset of the space of full rank $n\times(m+1)$
matrices. The space of outcomes $\D$ is the ball of radius 1 in $m$-dimensional
Euclidean space. The hypothesis space $\H$ is some compact subset
of $\R^{n}\setminus0$, and we assume that for any $X\in\A$ and $\theta\in\H$,
there is some $y\in\D$ s.t. $\T{\theta}X\left[\begin{array}{c}
y\\
1
\end{array}\right]=0$. The reward function $r:\A\times\D\rightarrow\R$ is continuous in
the first variable, and convex and 1-Lipschitz in the second variable.
The credal set $\kappa_{X,\theta}\subseteq\Delta\D$ is given by

\[
\kappa_{X,\theta}:=\SC{\zeta\in\Delta\D}{\T{\theta}X\left[\begin{array}{c}
\E{y\sim\zeta}y\\
1
\end{array}\right]=0}
\]

That is, the expectation value of the distribution on $\D$ is constrained
to lie on a certain hyperplane determined by $\theta$ and $X$.
\end{example}

\subsection*{Related Work}

Our framework can be naturally viewed as a generalization of stochastic
linear bandits. For the latter, a regret bound optimal up to logarithmic
factors was established in \cite{DBLP:conf/colt/DaniHK08}. Moreover,
we directly use one of the theorems in \cite{DBLP:conf/colt/DaniHK08}
in section \ref{sec:Lower-Regret-Bounds}.

It is also possible to view our framework as a zero-sum two-player
game. Zero-sum two-player games with bandit feedback were studied
in \cite{DBLP:conf/uai/ODonoghueLO21}. Indeed, their setting can
be represented as a special case of our setting, including the notion
of regret they chose to analyze (see Example \ref{ex:zerosum}). In
\cite{DBLP:conf/uai/ODonoghueLO21}, each player has a finite set
of arms (pure strategies in the stage game) and the regret bounds
they show for their proposed algorithms are not better than the corresponding
regret bound by Exp3 (however, they show that empirically Exp3 performs
worse). On the other hand, our setting is much more general (in particular
it allows for an infinite set of arms) and our regret bounds are not
obtained by any previously proposed algorithm (as far as we know).
Even for finite games, we can sometimes produce a regret bound of
the form $O(\ln N)$ (see Example \ref{ex:zerosum-gap}) as opposed
to the $\tilde{O}(\sqrt{N})$ bound in \cite{DBLP:conf/uai/ODonoghueLO21}.

Another work that studies learning zero-sum two-player games is \cite{pmlr-v139-tian21b}.
Their notion of regret is also closely related to the present work.
In one sense, the setting of \cite{pmlr-v139-tian21b} is significantly
more general than our setting, because they study (episodic) \emph{stochastic}
games: i.e. the game has a state, so it's more closely analogous to
reinforcement learning than to multi-armed bandits. On the other hand,
their regret bound scales with $|\A|^{\frac{1}{3}}$, where $\A$
is the set of actions (in particular, it has to be finite) and with
$N^{\frac{2}{3}}$, where $N$ is the number of episodes, in contrast
to our bound (Theorem \ref{thm:main}) which allows an infinite set
of arms and scales with $N^{\frac{1}{2}}$ (where $N$ is the time
horizon). Therefore, these results are incomparable. 

In \cite{DBLP:conf/alt/ChenZG22a}, a regret bound is proved for an
algorithm that learns zero-sum two-player stochastic games which is
roughly of the form $\tilde{O}(dN^{\frac{1}{2}})$, where $d$ is
the dimension of a certain linear feature space. However, they assume
that \emph{both} players are controlled by the algorithm (which is
then required to converge to the Nash equilibrium of the game), whereas
in our framework only one player is controlled by the algorithm, and
the other is truly adversarial.

We also see a philosophical analogy between our approach and the notion
of ``semirandom models'' in complexity theory (see \cite{DBLP:books/cu/20/Feige20}).
Semirandom models interpolate between worst-case and average-complexity,
by assuming the instances of a computational problem are drawn from
a distribution which can be controlled in some limited, prescribed
way by an adversary. Similarly, in our framework, the \emph{outcomes}
are sampled from a distribution which is controlled in some limited,
prescribed way by an adversary. Semirandom models are concerned with
computational complexity, while we are concerned with sample complexity
(although the computational complexity of learning in this setting
should also be studied.) In both cases the ``adversary'' is primarily
viewed as a metaphor for properties of the real-world source of the
data that are difficult or impossible to model explicitly (rather
than an actual agent.)

\rule[0.5ex]{1\columnwidth}{1pt}

The structure of the thesis is as follows. Section \ref{sec:Setting}
formally defines the setting and the notion of regret that we are
going to analyze. Section \ref{sec:algo} defines the IUCB algorithm.
Section \ref{sec:Upper-Bounds} proves upper bound on expected regret
for IUCB. Section \ref{sec:Lower-Regret-Bounds} proves lower bounds
on expected regret for particular special cases, which imply the necessity
of particular parameters in any upper bound. Section \ref{sec:Special-Cases}
studies the behavior of the parameters appearing in the regret bounds
in some natural special cases. Section \ref{sec:Summary} summarizes
the results and proposes directions for future work.

\section{\label{sec:Setting}Setting}

\global\long\def\Y{\mathcal{Y}}%

\global\long\def\G#1{\mathrm{Gr}_{#1}^{+}(\Y)}%

\global\long\def\Du#1{#1^{\star}}%

\subsection{General Imprecise Bandits}

We start by formally defining the framework. Fix $D_{X}\in\mathbb{N}$
and let $\A$ be a compact subset of $\R^{D_{X}}$ that represents
arms\footnote{More generally, $\A$ can be an arbitrary compact Polish space.}.
Consider also a finite-dimensional vector space $\Y$, and fix a linear
functional $\mu\in\Du{\Y}\setminus0$ and a compact convex set $\D\subset\mu^{-1}(1)\subset\Y$
which represents outcomes\footnote{As opposed to classical multi-armed bandits, we will always work with
some space of outcomes rather than just a reward scalar.}. $\Y$ and $\mu$ are purely technical devices for a convenient representation
of $\D$. We assume the affine hull of $\D$ equals $\mu^{-1}(1)$.
We also fix a continuous reward function $r:\A\times\D\rightarrow\R$.
\begin{example}
\label{ex:simplex}An interesting special case is, when the outcomes
that can be actually observed are some finite set $\B$ without any
special structure. To represent this, we take $\Y:=\R^{\B}$ and define
$\mu$ and $\D$ by

\[
\mu(y):=\sum_{a\in\B}y_{a}
\]

\[
\D:=\Delta\B=\SC{y\in\R^{\B}}{\sum_{a\in\B}y_{a}=1,\,\forall a\in\B:y_{a}\geq0}
\]

This is a valid representation since there is the canonical embedding
$\iota:\B\rightarrow\Delta\B$ given by

\[
\iota(a)_{b}:=\begin{cases}
1 & \text{if }a=b\\
0 & \text{if }a\ne b
\end{cases}
\]

Given a reward function $r_{0}:\A\times\B\rightarrow\mathbb{R}$,
we can extend it to $r:\A\times\D\rightarrow\mathbb{R}$ as

\[
r(x,y):=\sum_{a\in\B}y_{a}r_{0}(x,a)
\]
\end{example}
A common way to represent beliefs in imprecise probability theory
is \emph{credal sets.} A credal set over $\D$ is a closed\footnote{``Closed'' in the sense of the weak topology on probability measures.}
convex subset of $\Delta\D$. We denote the space of credal sets by
$\Box\D$. A \emph{hypothesis} is a mapping $H:\mathcal{A}\rightarrow\Box\D$.
The meaning of this mapping is, whenever the agent pulls arm $x\in\A$,
the outcome is drawn from some distribution in the set $H(x)$. The
precise distribution is left unspecified: it can vary arbitrarily
over time and as a function of the previous history of arms and outcomes,
treated as adversarial. Given $x\in\A$ and $H$ as above, the associated
lower prevision (minimal expected reward) is

\[
\ME Hrx:=\min_{\zeta\in H(x)}\E{y\sim\zeta}{r(x,y)}
\]

The optimal arm for $H$ is then

\[
x_{H}^{*}:=\Amax_{x\in\A}\ME Hrx
\]

If there are multiple optimal arms, the ambiguity can be resolved
arbitrarily, except in particular cases we will point out.

An \emph{agent policy} is a mapping\footnote{Technically, it has to be a Borel measurable mapping.}
$\Pol:(\A\times\D)^{*}\rightarrow\A$. A \emph{nature policy} compatible
with $H$ is a mapping\footnote{More precisely, a Markov kernel.}
$\nu:(\A\times\D)^{*}\times\A\rightarrow\Delta\D$ s.t. for any $hx\in(\A\times\D)^{*}\times\A$,
$\nu(hx)\in H(x)$. We denote by $\AP_{H}$ the set of all nature
polcies compatible with $H$. An agent policy $\Pol$ together with
a nature policy $\nu$ define a distribution $\Pol\nu\in\Delta((\A\times\D)^{\omega})$
in the natural way. That is, sampling $xy\sim\Pol\nu$ is performed
recursively according to

\[
\begin{cases}
x_{n} & =\Pol\left(x_{0}y_{0}\ldots x_{n-1}y_{n-1}\right)\\
y_{n} & \sim\nu\left(x_{0}y_{0}\ldots x_{n-1}y_{n-1}x_{n}\right)
\end{cases}
\]

Given a policy $\Pol$, a hypothesis $H$ and a time horizon $N\in\mathbb{N}$,
the \emph{expected regret} is

\[
\Reg H{\Pol}N:=N\cdot\ME Hr{x_{H}^{*}}-\min_{\nu\in\AP_{K}}\sum_{n=0}^{N-1}\E{xy\sim\Pol\nu}{r\left(x_{n},y_{n}\right)}
\]

\global\long\def\HC{\mathcal{\H C}}%

Our goal is finding algorithms for which upper bounds on expected
regret can be guaranteed for any hypothesis out of some class $\HC$.
Notice that this is consistent with extending the outcome space, like
in Example \ref{ex:simplex}, because that strictly increases the
set of possible nature policies, and hence any regret bound for the
extended version also applies to the original version.

\subsection{Partial Conditional Bandits}

Since the number of arms is unbounded, guaranteeing a meaningful regret
bound requires assumptions on $\HC$ and $r$. Before stating the
assumptions we make in full generality, we exhibit a particular family
of examples, which will serve to illustrate various points in the
rest of the thesis. Specifically, we will consider a setting where
the outcome set consists of sequences of fixed length. A distribution
over such a set can be specified by choosing the probability distribution
of every element in the sequence conditional on its predecessors.
We then require some of those conditional probability distributions
to be fixed per arm (i.e. behaving like a stochastic bandits) while
letting other conditional distributions to vary arbitrarily. Moreover,
the dependency on the arm is restricted to a known ``linear'' finite-dimensional
family of possible functions. Details follow.

\global\long\def\CB#1{\mathcal{G}_{#1}}%

\global\long\def\PB#1{\mathcal{\mathcal{G}}_{-}^{#1}}%

\global\long\def\NB#1{\mathcal{G}_{+}^{#1}}%

\global\long\def\Stoc{\mathcal{S}}%

\global\long\def\GS{\Stoc^{\sharp}}%

\global\long\def\Z{\mathcal{Z}}%

Fix $D_{X}\in\mathbb{N}$, $\A$ a compact subset of $\R^{D_{X}}$
as before (arms), an integer $n\geq1$ and a family of finite sets
$\left\{ \CB i\right\} _{i<n}$. Let the set of outcomes be

\[
\B:=\prod_{i<n}\CB i
\]

We think of such an outcome as a temporal sequence (first an element
of $\CB 0$ is selected, then an element of $\CB 1$ and so on). Denote

\[
\PB *:=\bigcup_{i<n}\prod_{j<i}\CB j
\]

Fix some $\mathcal{S}\subseteq\PB *$. These will correspond to prefixes
s.t. the conditional probability of the next element in the sequence
is fixed. 

Fix also some family of finite-dimensional vector spaces $\left\{ \Z_{a}\right\} _{a\in\Stoc}$
equipped with linear functionals $\left\{ \psi_{a}\in\Z_{a}^{\star}\right\} _{a\in\Stoc}$
(each $(\Z_{a},\psi_{a})$ pair is just a convenient representation
of the affine space $\psi_{a}^{-1}(1)$). The bandit has an unknown
vector of parameters

\[
\theta^{*}\in\bigoplus_{a\in\Stoc}\psi_{a}^{-1}(1)
\]

Here, the notation $\bigoplus_{a\in\Stoc}\psi_{a}^{-1}(1)$ refers
to the affine subspace of $\bigoplus_{a\in\mathcal{S}}\Z_{a}$ canonically
isomorphic to $\prod_{a\in\mathcal{S}}\psi_{a}^{-1}(1)$.

We also have some \emph{known} continuous functions

\[
\left\{ f_{a}:\A\times\Z_{a}\rightarrow\R^{\CB{|a|}}\right\} _{a\in\Stoc}
\]

These are assumed to be linear in the second argument. The role of
$f$ is specifying how the conditional probabilities depend on the
arm and the unknown parameters. This is needed to obtain a low-dimensional
family of hypotheses, because a priori the number of free parameters
is $|\A|\cdot|\CB{|a|}|$ per each $a\in\Stoc$, which can be much
larger than $\dim\Z_{a}-1$ or even infinite.

We require that for any $x\in\A$ and $a\in\Stoc$
\begin{itemize}
\item For any $c\in\CB{|a|}$, $f_{a}(x,\theta_{a}^{*})_{c}\geq0$ (enforcing
that the conditional probabilities are non-negative).
\item For any $z\in\Z_{a}$, $\sum_{c\in\CB{|a|}}f_{a}(x,z)_{c}=\psi_{a}(z)$
(enforcing that the conditional probabilities sum to 1).
\end{itemize}
Every time an arm $x\in\A$ is selected, the outcome is drawn from
any distribution $y\in\Delta\B$ s.t. for any $a\in\mathcal{S}$ and
$c\in\CB{|a|}$

\[
\Pr_{b\sim y}\left[b_{|a|}=c\:\middle|\:a\sqsubseteq b\right]=f_{a}\left(x,\theta_{a}^{*}\right)_{c}
\]

Here, the notation $a\sqsubseteq b$ means that $a$ is a prefix of
$b$. That is, we require that, conditional on the prefix $a$, the
probability that the next sequence element is $c$ equals to $f_{a}\left(x,\theta_{a}^{*}\right)_{c}$.

Finally, the reward of the round is determined according to some function
$r_{0}:\A\times\B\rightarrow[-1,+1]$, continuous in the first variable.

Here is how this bandit is a special case of the general framework
of the previous subsection:
\begin{example}
\label{ex:pcb}Let $\Y$, $\mu$, $\D$ and $r$ be defined as in
Example \ref{ex:simplex}. For any $x\in\A$, $a\in\Stoc$ and $c\in\CB{|a|}$,
define $\phi_{x,a,c}\in\Y^{\star}$ by

\[
\phi_{x,a,c}(y):=\sum_{b\in\B:ac\sqsubseteq b}y_{b}-f_{a}\left(x,\theta_{a}^{*}\right)_{c}\sum_{b\in\B:a\sqsubseteq b}y_{b}
\]

Notice that for $y\in\Delta\B$, $\phi_{x,a,c}(y)=0$ if and only
$y$ has the required conditional probabilities for an outcome distribution
resulting from selecting arm $x$.

Finally, define $H:\A\rightarrow\Box\Delta\B$ by

\[
H(x):=\SC{\zeta\in\Delta\Delta\B}{\forall a\in\Stoc,c\in\CB{|a|}:\E{y\sim\zeta}{\phi_{x,a,c}(y)}=0}
\]
\end{example}
Notice that Example \ref{ex:med} is a special case of Example \ref{ex:pcb}.
Zero-sum games with bandit feedback is another special case:
\begin{example}
\label{ex:zerosum}Consider the setting of subsection \ref{subsec:zerosum}.
Let $\A:=\A_{1}$, $n:=3$, $\CB 0:=\B_{2}$, $\CB 1:=\B_{1}$, $\CB 2:=\{-1,+1\}$,
$\mathcal{S}:=\B_{2}\sqcup\left(\B_{2}\times\B_{1}\right)$. For any
$b\in\B_{2}$, let $\Z_{b}:=\R$, $\psi_{b}:=1$ and $\theta_{b}^{*}:=1$.
Define $f_{b}:\A_{1}\times\R\rightarrow\R^{\B_{1}}$ by $f_{b}(x,z):=zx$.
For any $(b,a)\in\B_{2}\times\B_{1}$, let $\Z_{ba}:=\R^{2}$, $\psi_{ba}:=[\begin{array}{cc}
1 & 1\end{array}]$ and

\[
\theta_{ba}^{*}:=\left[\begin{array}{cc}
\frac{1-P_{ab}^{*}}{2} & \frac{1+P_{ab}^{*}}{2}\end{array}\right]
\]

Define $f_{ba}:\A_{1}\times\R^{2}\rightarrow\R^{2}$ by $f_{ba}(x,z):=z$
(we implicitly identify $\R^{\{-1,+1\}}$ with $\R^{2}$). Finally,
define $r_{0}:\A_{1}\times\B_{2}\times\B_{1}\times\{-1,+1\}\rightarrow\RI$
by $r_{0}(x,b,a,\sigma):=\sigma$.
\end{example}
Strictly speaking, if we interpret Example \ref{ex:zerosum} as a
partial conditional bandit, it requires that the distributions $H^{*}(a,b)$
of the game are Bernoulli (i.e. supported on $\{-1,+1\}$). However,
the general case still fits within the imprecise bandits framework.
Indeed, consider an outcome $(a,b,r)\in\B_{1}\times\B_{2}\times\RI$
of a particular round of the game, where $r$ is the reward. Then,
we can interpret it as corresponding to the $y\in\D=\Delta(\B_{2}\times\B_{1}\times\{-1,+1\})$
whose only non-zero components are

\begin{align*}
y_{a,b,-1} & :=\frac{1-r}{2}\\
y_{a,b,+1} & :=\frac{1+r}{2}
\end{align*}

With this interpretation, the game is a special case of Example \ref{ex:pcb},
for any $H^{*}$.

\subsection{\label{subsec:lin-imprec}Linear Imprecise Bandits}

Now, we spell out the exact assumptions we will use in the rest of
the thesis. We will make 4 such assumptions.

\newtheorem{ass}{Assumption}

\begin{ass}\label{ass:expect}

For any $H\in\HC$ and $x\in\A$, there is some closed convex subset
$K(x)^{+}$ of $\D$ s.t.

\[
H(x)=\SC{\zeta\in\Delta\D}{\E{y\sim\zeta}y\in K(x)^{+}}
\]

\end{ass}

That is, we require that our credal set is defined entirely in terms
of the \emph{expected value} of the distribution.
\begin{example}
\label{ex:pcb-expect}In Example \ref{ex:pcb}, Assumption \ref{ass:expect}
holds since we can take

\[
K(x)^{+}:=\SC{y\in\Delta\B}{\forall a\in\Stoc,c\in\CB{|a|}:\phi_{x,a,c}(y)=0}
\]
\end{example}
\begin{example}
In the setting of Example \ref{ex:simplex}, consider any mapping
$H:\A\rightarrow\Box\B$ representing a hypothesis. Then, we can transform
it into a mapping $\hat{H}:\A\rightarrow\Box\D$ satisfying assumption
Assumption \ref{ass:expect} by taking

\[
\hat{H}(x):=\SC{\zeta\in\Delta\D}{\E{y\sim\zeta}y\in H(x)}
\]

This transformation is compatible with the intended meanings of $H$
and $\hat{H}$: if the agent selects arm $x\in\A$ and the resulting
distribution of outcomes is some $y\in H(x)$, then $\iota_{*}y\in\hat{H}(x)$
since

\[
\E{a\sim y}{\iota(a)}=y
\]
\end{example}
Assumption \ref{ass:expect} is a natural starting point for generalizing
the classical theory of stochastic bandits. In stochastic bandits,
the central assumption is that the distribution of reward for every
arm is fixed over time. However, if we only assumed that the \emph{expected
values} of those distributions are fixed, we could derive only mildly
weaker regret bounds: instead of a IID process we get a martingale,
and instead of the Hoeffding inequality we can use the Azuma-Hoeffding
inequality. This is essentially the special case of our setting where
$K(x)^{+}$ consists of a single point.

In addition, Assumption \ref{ass:expect} is less restrictive than
it superficially seems, because additional moments of the distribution
can be represented by embedding the outcome space into some higher-dimensional
space. For example:
\begin{example}
\label{ex:moment}Consider some $n\geq1$. Let $\Y:=\R^{n+1}$, $\mu(y):=y_{0}$
and $\D$ be the convex hull of the moment curve $\mathfrak{M}$:

\[
\mathcal{\mathfrak{M}}:=\SC{\left[\begin{array}{c}
1\\
t\\
t^{2}\\
\vdots\\
t^{n}
\end{array}\right]}{t\in\RI}
\]

The only actual feedback is a reward scalar $r\in\RI$, but we represent
it as the point $y(r)\in\D$ given by $y_{k}(r):=r^{k}$. Then, Assumption
\ref{ass:expect} admits any credal set defined in terms of the \emph{first
$n$ moments} of the reward distribution.
\end{example}
\begin{ass}\label{ass:affine}

For any $H\in\HC$ and $x\in\A$, $K(x)^{+}$ is closed under affine
(and not just convex) linear combinations. Equivalently, there exists
$K(x)$ a linear subspace of $\Y$ s.t.

\[
K(x)^{+}=K(x)\cap\D
\]

\end{ass}
\begin{example}
In Example \ref{ex:pcb}, Assumption \ref{ass:affine} holds since
we can take

\begin{equation}
K(x):=\SC{y\in\R^{\B}}{\forall a\in\Stoc,c\in\CB{|a|}:\phi_{x,a,c}(y)=0}\label{eq:pcb-k}
\end{equation}
\end{example}
The only motivation for this is that it makes deriving an upper regret
bound much easier. Indeed, some simple bandits without this assumption
admit strong \emph{negative} results (i.e. lower bounds): see discussion
after Theorem \ref{thm:lower-s}.

Superificially, Assumption \ref{ass:affine} rules out the case in
which the reward scalar is the only feedback, except for the special
case of ordinary stochastic bandits. That's because if $\D$ is 1-dimensional
then $K(x)^{+}$ has to be either a single point or all of $\D$.
Indeed, simple examples of imprecise bandits in which each hypothesis
is an \emph{inequality} on the expected reward turn out to often have
poor regret bounds for reasons similar to Theorem \ref{thm:lower-s}\footnote{We will not spell out the argument, but that is an obstacle that the
author encountered on early attempts to find regret bounds for imprecise
bandits.}. However, our framework admits non-trivial examples with such feedback
by embedding the reward into a higher-dimensional space, as in Example
\ref{ex:moment}.

\begin{ass}\label{ass:r}

Let $\Y$ be equipped with the norm whose unit ball is the absolute
convex hull of $\D$. Then, $r$ is convex and 1-Lipschitz (w.r.t.
the norm on $\Y$) in the second argument.

\end{ass}
\begin{example}
In the setting of Example \ref{ex:simplex}, $r$ is affine in the
second argument and in particular convex. The norm on $\Y$ is $\ell_{1}$
(see Lemma \ref{lem:l1}). If the range of $r_{0}$ is $[-1,+1]$,
then it's easy to see $r$ is 1-Lipschitz in the second argument and
hence Assumption \ref{ass:r} holds. (Otherwise, $r_{0}$ can be renormalized
to have range in $\RI$, at the cost of a multiplicative cost in any
regret bound.) In particular, it applies to Example \ref{ex:pcb}.
\end{example}
Note that Assumptions \ref{ass:expect} and \ref{ass:r} imply

\[
\ME Hrx=\min_{y\in K(x)^{+}}r(x,y)
\]

In general, Lipschitz is a fairly mild condition, but convexity is
not. However, given a reward function $r:\A\times\D\rightarrow\R$
which doesn't satisfy the convexity condition, we can construct the
convex reward function

\[
\tilde{r}(x,y):=\min_{\zeta\in\Delta\D:\E{y'\sim\zeta}{y'}=y}\E{y'\sim\zeta}{r(x,y')}
\]

It is easy to see that $\tilde{r}\leq r$, and that for any $x\in\A$
and hypothesis $H$ that satisfies Assumption \ref{ass:expect},

\[
\ME H{\tilde{r}}x=\ME Hrx
\]

Hence, the expected regret for $r$ is upper bounded by the expected
regret for $\tilde{r}$, and any upper bound on the latter carries
over to the former. The caveat is that $\tilde{r}$ might fail to
be Lipschitz even if $r$ is\footnote{\label{fn:felix}However, if $\D$ is a polytope and $r$ is 1-Lipschitz,
then $\tilde{r}$ is $c$-Lipschitz, for some $c$ that depends only
on $\D$. The proof of this fact was suggested to the author by Felix
Harder, and is not included in the present work.}. Even if $\tilde{r}$ is Lipschitz, its Lipschitz constant might
be larger than the Lipschitz constant of $r$. This means that, even
if $r$ is 1-Lipschtiz and $\tilde{r}$ is Lipschitz, applying the
results of this work might require rescaling $\tilde{r}$ by a constant
multiplicative factor, which would then appear as an extra penalty
in the regret bound (through the parameter $C$: see section \ref{sec:Upper-Bounds}).
This caveat is why we cannot state the exact same results without
the convexity condition.

Assumptions \ref{ass:expect} and \ref{ass:affine} are entirely about
the individual credal sets. However, we also need to make an assumption
about the relationship between different hypotheses (it is necessary
even in classical theory). A common type of assumption in learning
theory is requiring the hypothesis class is ``low-dimensional''
in some sense. In some cases, it takes the form of requiring an embedding
of this class into a low-dimensional linear space in an appropriate
way (in particular, this is the case for linear bandit theory). This
is exactly the type of assumption we will me make.

\global\long\def\W{\mathcal{W}}%

Let $\Z$ be a finite-dimensional vector space, which will serve to
parameterize our hypothesis class $\HC$, and $\H\subseteq\Z$ a compact
set which will be the representation of $\HC$ within $\Z$. Let $\W$
be another finite-dimensional vector space, which will serve to parametrize
the linear conditions that a credal set satisfying Assumptions \ref{ass:expect}
and \ref{ass:affine} imposes on the expected value of the outcome
distribution. That is, the $K(x)$ from Assumption \ref{ass:affine}
will be the kernel of some linear operator from $\Y$ to $\W$. Finally,
let $F:\A\times\Z\times\Y\rightarrow\W$ be a mapping continuous in
the first argument and bilinear in the second and third arguments,
which serves to enable the interpretation of vectors in $\Z$ as hypotheses
(see below). For any $x\in\A$ and $z\in\Z$, denote $F_{xz}:\Y\rightarrow\W$
the linear operator defined by $F_{xz}y:=F(x,z,y)$. For any $x\in\A$
and $\theta\in\H$, denote $K_{\theta}(x):=\ker F_{x\theta}$.

For any $\U\subseteq\Y$ a linear subspace, denote $\U^{+}:=\U\cap\D$
and define $\kappa(\U)\in\Box\D$ by

\[
\kappa(\U)=\SC{\zeta\in\Delta\D}{\E{y\sim\zeta}y\in\U^{+}}
\]

\begin{ass}\label{ass:lin}

For any $x\in\A$ and $\theta\in\H$, $K_{\theta}(x)\cap\D\ne\varnothing$
and $F_{x\theta}$ is onto. Moreover,

\[
\HC=\SC{H:\A\rightarrow\Box\D}{\exists\theta\in\H\forall x\in\A:H(x)=\kappa\left(K_{\theta}(x)\right)}
\]

\end{ass}
\begin{example}
\label{ex:pcb-lin}To see that Assumption \ref{ass:lin} holds in
Example \ref{ex:pcb}, we take

\[
\Z:=\bigoplus_{a\in\Stoc}\Z_{a}
\]

\[
\H:=\SC{\theta\in\bigoplus_{a\in\Stoc}\psi_{a}^{-1}(1)}{\forall x\in\A,a\in\Stoc,c\in\CB{|a|}:f_{a}(x,\theta_{a})_{c}\geq0}
\]

The latter is compact if we assume that there is no $a\in\A$ and
$z\in\ker\psi_{a}$ s.t. for all $x\in\A$ and $c\in\CB{|a|}$

\[
f_{a}(x,z)_{c}=0
\]

This loses no generality because we can always replace $\Z_{a}$ by
its quotient by the subspace of all $z$ satisfying this identity.

Denote

\[
\GS:=\SC{ac\in\prod_{i\leq|a|}\CB i}{a\in\Stoc,c\in\CB{|a|}}
\]

Also, for each $i<n$, denote

\[
\NB i:=\prod_{i<j<n}\CB j
\]

Then, we take

\[
\W:=\SC{w\in\R^{\GS}}{\forall a\in\Stoc:\sum_{c\in\CB{|a|}}w_{ac}=0}
\]

\[
F(x,z,y)_{ac}:=\psi_{a}(z_{a})\sum_{e\in\NB{|a|}}y_{ace}-f_{a}\left(x,z_{a}\right)_{c}\sum_{g\in\NB{|a|-1}}y_{ag}
\]

It is easy to see that the induced $K_{\theta}(x)$ is the same as
in equation (\ref{eq:pcb-k}).
\end{example}
Since $F_{x\theta}$ is onto, it follows that $\dim K_{\theta}(x)=\dim\Y-\dim\W$
and in particular it doesn't depend on $x$ or $\theta$. We will
denote this number $d$.

Assumption \ref{ass:lin} implies that any $\theta\in\H$ defines
some $H\in\HC$, and conversly, any $H\in\HC$ is induced by some
$\theta\in\H$. Given $\theta\in\H$ and its corresponding $H\in\HC$,
we will use the shorthand notations $\mathrm{ME_{\theta}:=\mathrm{ME}}_{H}$
and $\mathrm{ERg}_{\theta}:=\mathrm{\mathrm{ERg}}_{H}$.

We finish the section by showing that the classical framework of stochastic
linear bandits is also a special case of linear imprecise bandits.
\begin{example}
\label{ex:linear-bandits}Consider some $D\geq1$, let $\A\subset\R^{D}$
and assume the linear span of $\A$ is all of $\R^{D}$. The only
observable outcome is the reward which can be assumed to lie in $[-1,+1]$,
so we take $\Y:=\R^{2}$, $\mu=[\begin{array}{cc}
1 & 0\end{array}]$, and

\[
\D:=\SC{\left[\begin{array}{c}
1\\
t
\end{array}\right]\in\R^{2}}{t\in[-1,+1]}
\]

Let $\Z:=\R^{D}$ and $\H$ given by

\[
\H:=\SC{\theta\in\R^{D}}{\forall x\in\A:\T x\theta\in[-1,+1]}
\]

The reward function is

\[
r\left(x,\left[\begin{array}{c}
1\\
t
\end{array}\right]\right):=t
\]

Finally, let $\W:=\R$ and set $F:\A\times\R^{D}\times\R^{2}\rightarrow\R$
to be

\[
F\left(x,z,\left[\begin{array}{c}
y_{0}\\
y_{1}
\end{array}\right]\right):=(\T xz)y_{0}-y_{1}
\]

It is easy to see we got the stochastic linear bandit with arm set
$\A$.
\end{example}
In the rest of the thesis, we use Assumptions 1-4 implicitly.

\section{\label{sec:algo}IUCB Algorithm}

\global\long\def\XZ{\bar{\Z}}%

\subsection{Norms}

We propose an algorithm of UCB type which we call ``imprecise UCB''
(IUCB). The algorithm depends on a parameter $\eta\in\R_{+}$, the
optimal choice of which will be discussed in section \ref{sec:Upper-Bounds}.

In order to describe the algorithm, we will need norms on the spaces
$\W$ and a certain extension of $\Z$. We remind that the norm on
$\Y$ is the unique norm s.t. the absolute convex hull of $\D$ is
the unit ball. The norm on $\W$ is defined by

\[
\lVert w\rVert:=\max_{\SUBSTACK{x\in\A}{\theta\in\H}}\min_{y\in\Y:F_{x\theta}y=w}\lVert y\rVert
\]

\global\long\def\Nul{\mathcal{N}}%

\global\long\def\Vxx#1#2{\left[\begin{array}{c}
#1\\
#2
\end{array}\right]}%

Define the subspace $\Nul\subseteq\Z\oplus\W$ by

\[
\Nul:=\SC{\Vxx zw\in\Z\oplus\W}{\forall x\in\A,y\in\Y:F(x,z,y)+\mu(y)w=0}
\]

\global\long\def\F{\bar{F}}%

Denote $\XZ:=(\Z\oplus\W)/\Nul$. Define $\F:\A\times\XZ\times\Y\rightarrow\W$
by

\[
\F\left(x,\Vxx zw+\Nul,y\right):=F(x,z,y)+\mu(y)w
\]

As before, $\F_{xz}$ is a linear operator from $\Y$ to $\W$.

Finally, the norm on $\XZ$ is defined by

\global\long\def\Norm#1{\left\Vert #1\right\Vert }%

\global\long\def\Nxx#1#2{\left\Vert \begin{array}{c}
#1\\
#2
\end{array}\right\Vert }%

\[
\lVert z\rVert:=\max_{x\in\A}\Norm{\bar{F}_{xz}}
\]

Here, the norm on the right hand side the operator norm.

\global\long\def\V#1#2{\mathcal{V}\left(#1,#2\right)}%

Notice that $\W\cap\Nul=\{0\}$, and therefore there is a natural
embedding of $\W$ into $\XZ$. Moreover, the norm on $\W$ is compatible
with the norm on $\XZ$ and this embedding, because $\Norm{w\otimes\mu}=\Norm w\cdot\Norm{\mu}=\Norm w$.
We will also assume, without loss of generality, that $\Z\cap\Nul=\{0\}$,
because otherwise we can just replace $\Z$ with $\Z/(\Z\cap\Nul)$
and $\H$ with its image in $\Z/(\Z\cap\Nul)$, without changing anything.
Thereby, both $\Z$ and $\W$ will be viewed as \emph{subspaces} of
$\XZ$.

Extending $\Z$ to $\XZ$ is merely a device for defining the confidence
set. Because, the constraint on the true hypothesis $\theta^{*}\in\H$
that can be inferred from observing a certain average outcome $\bar{y}$
after selecting a certain arm $x$ multiple times, is most easily
formulated as distance from a subspace in $\XZ$ (see the definition
of $\V x{\bar{y}}$ below). And, $\Nul$ is just the ``null'' subspace
that we quotient out because adding a vector in $\Nul$ has no effect
on $\F$.

\subsection{Algorithm}

\global\long\def\C{\mathcal{C}}%

IUCB (Algorithm \ref{alg:IUCB}) works by maintaining a confidence
set $\C\subseteq\H$ and applying the principle of optimism in the
face of uncertainty. Initially, we set $\C_{0}:=\H$. On cycle $k\in\N$
of the algorithm, we choose the optimistic hypothesis $\theta_{k}$: 

\[
\theta_{k}:=\Amax_{\theta\in\C_{k}}\max_{x\in\A}\ME{\theta}rx
\]

For any $x\in\A$ and $y\in\Y$, we will denote by $\F_{x}^{y}:\XZ\rightarrow\W$
the linear operator given by $\F_{x}^{y}z:=\F(x,z,y)$. We also denote
$\V xy:=\ker\F_{x}^{y}$. The algorithm selects arm $x_{\theta_{k}}^{*}$
\emph{repeatedly} $\tau_{k}$ times, where $\tau_{k}$ is chosen to
be minimal s.t.

\begin{equation}
\sqrt{\tau_{k}}\cdot\max_{\theta\in\C_{k}}\min_{z\in\V{x_{\theta_{k}}^{*}}{\bar{y}_{k}}}\Norm{\theta-z}\geq2\left(D_{Z}+1\right)\eta\label{eq:stop}
\end{equation}

Here, $D_{Z}:=\dim\Z$, $T_{k}:=\sum_{i=0}^{k-1}\tau_{k}$, $\bar{y}_{k}:=\frac{1}{\tau_{k}}\sum_{n=T_{k}}^{T_{k+1}-1}y_{n}$,
where $y_{n}$ is the outcome of round $n$, and $\eta\in\R_{+}$
is a parameter of the algorithm. That is, $T_{k}$ is the total number
of rounds of the first $k$ cycles, and $\bar{y}_{k}$ is the average
outcome over cycle $k$. 

In other words, we keep selecting arm $x_{\theta_{k}}^{*}$ until
condition (\ref{eq:stop}) is met.

The idea behind equation (\ref{eq:stop}) is, we select arm $x_{\theta_{k}}^{*}$
sufficiently many times so that the confidence set can be substantially
narrowed down (by a factor of $\Omega(\eta D_{Z})$) in directions
traverse to $\V{x_{\theta_{k}}^{*}}{\bar{y}_{k}}$. We will use this
to argue that such a narrowing down can only occure a small number
of times ($\tilde{O}(D_{Z}^{2})$) before the confidence set becomes
thin (of size $O(D_{Z}^{2}N^{-\frac{1}{2}})$) in those directions,
and as a result the accrued expected regret becomes small.

After $\tau_{k}$ rounds, cycle $k$ is complete and the confidence
set is updated according to

\[
\C_{k+1}:=\SC{\theta\in\C_{k}}{\min_{z\in\V{x_{\theta_{k}}^{*}}{\bar{y}_{k}}}\Norm{\theta-z}\leq\frac{\eta}{\sqrt{\tau_{k}}}}
\]

Then, a new optimistic hypothesis $\theta_{k+1}$ is selected and
we switch to selecting the arm $x_{\theta_{k+1}}^{*}$, et cetera.

\begin{algorithm}
\begin{onehalfspace}
\caption{\label{alg:IUCB}Imprecise UCB}

\textbf{Input} $\eta\in\R_{+}$

$\C\leftarrow\H$

\textbf{for} $k$ \textbf{from} $1$ \textbf{to} $\infty$:

\qquad{}$\theta^{*}\leftarrow\Amax_{\theta\in\C}\max_{x\in\A}\ME{\theta}rx$

\qquad{}$\tau\leftarrow0$

\qquad{}$\Sigma y\leftarrow\boldsymbol{0}\in\Y$

\qquad{}\textbf{do}

\qquad{}\qquad{}select arm $x_{\theta^{*}}^{*}$ and observe outcome
$y$

\qquad{}\qquad{}$\tau\leftarrow\tau+1$

\qquad{}\qquad{}$\Sigma y\leftarrow\Sigma y+y$

\qquad{}\qquad{}$\bar{y}\leftarrow\frac{\Sigma y}{\tau}$

\qquad{}\textbf{while $\sqrt{\tau}\cdot\max_{\theta\in\C}\min_{z\in\V{x_{\theta^{*}}^{*}}{\bar{y}}}\Norm{\theta-z}<2\left(D_{Z}+1\right)\eta$}

\qquad{}$\C\leftarrow\SC{\theta\in\C}{\min_{z\in\V{x_{\theta^{*}}^{*}}{\bar{y}}}\Norm{\theta-z}\leq\frac{\eta}{\sqrt{\tau}}}$

\textbf{end for}
\end{onehalfspace}
\end{algorithm}

\section{\label{sec:Upper-Bounds}Upper Regret Bounds for IUCB}

\subsection{Parameters}

In addition to the dimensions of $\Z$, $\Y$ and $\W$, three parameters
characterizing the hypothesis class are needed in order to formulate
our regret bound. 

The first is just

\[
R:=\max_{\theta\in\H}\Norm{\theta}
\]

Notice that there is some redundancy in our description of a given
bandit. First, there is redundancy in the choice of $\H$: given any
$\theta\in\Z$ and $\chi\in\R\setminus0$, $K_{\theta}=K_{\chi\theta}$.
Hence, we get an equivalent bandit for any way of rescaling different
hypotheses by different scalars. Second, there is redundancy in the
choice of $F$: we can multiply it by any continuous function of $x$
without affecting $K_{\theta}$ for any $\theta$. While these redefinitions
have no effect on the predictions different hypotheses make, and in
particular no effect on regret, they \emph{can} affect $R$. In order
to get the tightest regret upper bounds from our results, we need
to choose the scaling with minimal $R$. See Proposition \ref{prop:hyperplane-rn}
for an example of using this.

To define the second parameter we need the following:

\global\long\def\Af{\mathfrak{A}}%

\global\long\def\Vec{\overset{\rightarrow}{\Af}}%

\global\long\def\SubAf{\mathfrak{B}}%

\begin{defn}
Let $\Af$ be a finite-dimensional affine space. Assume that the associated
vector space $\Vec$ is equipped with a norm. Let $\D\subseteq\Af$
be a closed convex set and $\SubAf\subseteq\Af$ an affine subspace
s.t. $\SubAf\cap\D\ne\varnothing$ and $\SubAf\not\subseteq\D$. We
define the \emph{sine} of $\SubAf$ relative to $\D$ to be

\[
\sin(\SubAf,\D):=\inf_{p\in\SubAf\setminus\D}\frac{\min_{q\in\D}\Norm{p-q}}{\min_{q\in\SubAf\cap\D}\Norm{p-q}}
\]
\end{defn}
The motivation for the name is the following (see Appendix \ref{sec:s}
for the proof):

\global\long\def\SubC{\mathfrak{C}}%

\global\long\def\X{\mathcal{X}}%

\global\long\def\VecB{\overset{\rightarrow}{\SubAf}}%

\global\long\def\VecC{\overset{\rightarrow}{\SubC}}%

\begin{prop}
\label{prop:sin}Let $\Af$ be a finite-dimensional affine space and
$\SubAf,\SubC\subseteq\Af$ subspaces s.t. $\SubAf\cap\SubC\ne\varnothing$
and $\SubAf\not\subseteq\SubC$. Suppose $\Vec$ is an inner product
space. Denote $\X:=\VecB\cap(\VecB\cap\VecC)^{\bot}$ and $\Y:=\VecC\cap(\VecB\cap\VecC)^{\bot}$.
Then,

\[
\sin(\SubAf,\SubC)=\min_{\SUBSTACK{x\in\X:\Norm x=1}{y\in\Y:\Norm y=1}}\sqrt{1-(x\cdot y)^{2}}
\]
\end{prop}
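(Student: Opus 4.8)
The plan is to reduce the statement to finite-dimensional linear algebra by translating the origin into $\SubAf\cap\SubC$, and then to recognize the two minimizations on the right-hand side as, respectively, a minimization over the ``tilt'' of a point of $\SubAf$ away from the common subspace, and the Cauchy--Schwarz dual description of an orthogonal projection.

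First I would fix $o\in\SubAf\cap\SubC$ and identify $\Af$ with $\Vec$ via $o$, under which $\SubAf$, $\SubC$ and $\SubAf\cap\SubC$ become the linear subspaces $\VecB$, $\VecC$ and $M:=\VecB\cap\VecC$. Since $\SubC$ is a closed convex subset of the inner product space $\Vec$, the nearest point of $\SubC$ to any $p$ is the orthogonal projection $\pi_{\VecC}(p)$, so $\min_{q\in\SubC}\Norm{p-q}=\Norm{p-\pi_{\VecC}(p)}$, and likewise $\min_{q\in\SubAf\cap\SubC}\Norm{p-q}=\Norm{p-\pi_{M}(p)}$. Writing any $p\in\VecB$ orthogonally inside $\VecB$ as $p=m+x$ with $m\in M$ and $x\in\X=\VecB\cap M^{\bot}$, and using $m\in M\subseteq\VecC$, both distances see only $x$: $\Norm{p-\pi_{M}(p)}=\Norm{x}$ and $\Norm{p-\pi_{\VecC}(p)}=\Norm{x-\pi_{\VecC}(x)}$. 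Next I would check that $p\in\SubAf\setminus\SubC$ is equivalent to $x\neq0$ (if $x=0$ then $p=m\in\VecC$; if $x\ne0$ and $p\in\VecC$ then $x=p-m\in\VecC\cap\VecB\cap M^{\bot}=M\cap M^{\bot}=\{0\}$), and that every $x\in\X\setminus\{0\}$ is realized, e.g.\ by $p=x$. Because the ratio $\Norm{x-\pi_{\VecC}(x)}/\Norm{x}$ depends only on $x$ and is scale-invariant, this collapses the defining infimum to
\[
\sin(\SubAf,\SubC)=\min_{x\in\X:\Norm x=1}\Norm{x-\pi_{\VecC}(x)},
\]
the minimum being attained since the unit sphere of the finite-dimensional space $\X$ is compact.

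Then I would compute $\pi_{\VecC}(x)$ for $x\in\X$. Decomposing $\VecC=M\oplus\Y$ orthogonally (recall $\Y=\VecC\cap M^{\bot}$) and using $x\perp M$, one gets $\pi_{\VecC}(x)=\pi_{\Y}(x)$, so by the Pythagorean identity $\Norm{x-\pi_{\VecC}(x)}^{2}=\Norm{x}^{2}-\Norm{\pi_{\Y}(x)}^{2}=1-\Norm{\pi_{\Y}(x)}^{2}$ for unit $x$. Finally, Cauchy--Schwarz gives $\Norm{\pi_{\Y}(x)}=\max_{y\in\Y:\Norm y=1}(x\cdot y)=\max_{y\in\Y:\Norm y=1}|x\cdot y|$, with the normalized $\pi_{\Y}(x)$ as maximizer. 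Substituting, using that $t\mapsto\sqrt{1-t}$ is decreasing to turn the inner $\max$ into a $\min$ over $y$, and merging the two minimizations yields the asserted formula.

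I expect the only delicate points to be the bookkeeping that the quotient-like infimum over $p$ genuinely depends only on the component $x$ (so it becomes a clean minimization over a sphere), and the degenerate case $\Y=\{0\}$, which occurs precisely when $\SubC\subseteq\SubAf$; there $\pi_{\VecC}(x)=0$ for all $x\in\X$, so $\sin(\SubAf,\SubC)=1$, and the right-hand side is to be read as $1$ (the empty minimum). Apart from these, everything is the standard orthogonal-projection and Pythagorean identities in a finite-dimensional inner product space.
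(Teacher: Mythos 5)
Your proof is correct and follows essentially the same route as the paper's: both reduce to linear subspaces, observe that the two distances in the defining ratio depend only on the component of $p$ in $\X$ (your decomposition $p=m+x$ is exactly the paper's projection $Q$ onto $(\VecB\cap\VecC)^{\bot}$), and then combine the Pythagorean identity with the Cauchy--Schwarz dual description of $\Norm{\pi_{\Y}(x)}$ to obtain the double minimum. The degenerate case $\Y=\{0\}$ you flag is likewise left implicit in the paper, so no discrepancy there.
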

In other words, $\sin(\SubAf,\SubC)$ is the sine of the first non-vanishing
principal angle\footnote{See e.g. \cite{van1996subspace} section 1.4.3.}
between $\SubAf$ and $\SubC$.

Notice that in general it is possible that $\sin(\SubAf,\D)=0$, but
when $\D$ is a polytope it is impossible (see Proposition \ref{prop:polytope-sine}).

It is possible to bound $\sin\left(\SubAf,\D\right)$ in terms of
the sine of $\SubAf$ with the supporting hyperplanes of $\D$ (see
Appendix \ref{sec:s} for the proof):

\global\long\def\SH#1{\mathrm{SH}_{#1}\mathcal{\D}}%

\global\long\def\DD#1{\partial_{#1}\D}%

\begin{prop}
\label{prop:sin-sh}Let $\Af$ be a finite-dimensional affine space.
Assume that the associated vector space $\Vec$ is equipped with a
norm. Let $\D\subseteq\Af$ be a closed convex set and $\SubAf\subseteq\Af$
an affine subspace s.t. $\SubAf\cap\D\ne\varnothing$. Assume that
the affine hull of $\D$ is $\Af$. Let $\DD{\SubAf}$ denote the
bounary of $\SubAf\cap\D$ relative to $\SubAf$. For each $q\in\DD{\SubAf}$,
let $\SH q$ denote the set of supporting hyperplanes of $\D$ passing
through $q$. Then,

\[
\inf_{q\in\DD{\SubAf}}\inf_{\pi\in\SH q}\sin\left(\SubAf,\pi\right)\leq\sin(\SubAf,\D)\leq\inf_{q\in\DD{\SubAf}}\sup_{\pi\in\SH q}\sin\left(\SubAf,\pi\right)
\]

Here, $\sin\left(\SubAf,\pi\right)$ is defined to be $0$ whenever
$\SubAf\subseteq\pi$.
\end{prop}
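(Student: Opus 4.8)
The plan is to localise the problem at boundary points, replacing $\D$ by its tangent cone there, and then to reduce the resulting statement about cones to a statement about their bounding hyperplanes; the bridge is the elementary identity
\[
\sin(\SubAf,\pi)=\Norm{\ell_{\pi}|_{\VecB}},
\]
valid for any supporting hyperplane $\pi$ once $\ell_{\pi}\in\Vec^{\star}$ is normalised so that $\Norm{\ell_{\pi}}=1$ and $\ker\ell_{\pi}=\overrightarrow{\pi}$, because $\sin(\VecB,\ker\ell)$ is the reciprocal of $\min\{\Norm v:v\in\VecB,\ \ell(v)=1\}$, which is exactly the operator norm $\Norm{\ell|_{\VecB}}$. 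Extend the notation $\sin$ to a closed convex cone $C$ with apex $\boldsymbol 0$ by $\sin(V,C):=\inf\{d(v,C)/d(v,V\cap C):v\in V\setminus C\}$ for a subspace $V\not\subseteq C$, and write $T_{q}\D$ for the tangent cone of $\D$ at $q$. Since $\DD{\SubAf}\subseteq\partial\D$ (as $\D$ has interior), supporting hyperplanes of $\D$ at points of $\DD{\SubAf}$ exist, and under $\pi\mapsto\ell_{\pi}$ they correspond to the unit functionals $\ell$ with $\ell\le 0$ on $T_{q}\D$; hence $\inf_{\pi\in\SH q}\sin(\SubAf,\pi)$ and $\sup_{\pi\in\SH q}\sin(\SubAf,\pi)$ equal the infimum and supremum of $\Norm{\ell|_{\VecB}}$ over that compact set. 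If for some $q\in\DD{\SubAf}$ some $\pi\in\SH q$ contains $\SubAf$, then the left-hand side of the asserted lower bound is $0$ and there is nothing to prove; so I assume throughout that $\VecB\not\subseteq T_{q}\D$ for every $q\in\DD{\SubAf}$.

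For the upper bound I first show $\sin(\SubAf,\D)\le\sin(\VecB,T_{q}\D)$ for every $q\in\DD{\SubAf}$. Test with the points $p_{t}:=q+tv$ for $v\in\VecB$ and $t\downarrow 0$: since $(\D-q)/t$ increases to $\operatorname{cone}(\D-q)$, one has $d(p_{t},\D)/t\to d(v,T_{q}\D)$ and $d(p_{t},\SubAf\cap\D)/t\to d(v,\VecB\cap T_{q}\D)$, so $\sin(\SubAf,\D)\le d(v,T_{q}\D)/d(v,\VecB\cap T_{q}\D)$ for all admissible $v$, hence $\le\sin(\VecB,T_{q}\D)$. Then I prove the cone lemma $\sin(V,C)\le\sup\{\Norm{\ell|_{V}}:\Norm\ell=1,\ \ell\le 0\ \text{on}\ C\}$: for a near-minimising $v^{*}\in V\setminus C$ with nearest point $w^{*}\in C$, a Hahn--Banach norming functional $\ell$ has $\ell(v^{*}-w^{*})=\Norm{v^{*}-w^{*}}=d(v^{*},C)$ and $\ell\le\ell(w^{*})=0$ on $C$ (as $C$ is a cone); then $d(v^{*},\ker\ell)=\ell(v^{*})=d(v^{*},C)$, while $d(v^{*},V\cap C)\ge d(v^{*},V\cap\ker\ell)$ because $v^{*}$ lies strictly outside $\{\ell\le 0\}$, and dividing gives $d(v^{*},C)/d(v^{*},V\cap C)\le d(v^{*},\ker\ell)/d(v^{*},V\cap\ker\ell)=\Norm{\ell|_{V}}$. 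Applying the cone lemma with $C=T_{q}\D$ and combining with the first step through the dictionary yields $\sin(\SubAf,\D)\le\inf_{q}\sup_{\pi}\sin(\SubAf,\pi)$.

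For the lower bound I argue dually. Given $p\in\SubAf\setminus\D$, let $q^{*}$ be a nearest point of $\SubAf\cap\D$ to $p$; it lies in $\DD{\SubAf}$. A norming functional $\ell'$ with $\ell'(p-q^{*})=\Norm{p-q^{*}}=d(p,\SubAf\cap\D)$ and $\ell'\le 0$ on $(\SubAf\cap\D)-q^{*}$, hence on $\VecB\cap T_{q^{*}}\D$, shows both that $p-q^{*}\notin T_{q^{*}}\D$ and that $d(p-q^{*},\VecB\cap T_{q^{*}}\D)=\Norm{p-q^{*}}$; together with $d(p,\D)\ge d(p-q^{*},T_{q^{*}}\D)$ (from $\D-q^{*}\subseteq T_{q^{*}}\D$) this gives $d(p,\D)/d(p,\SubAf\cap\D)\ge\sin(\VecB,T_{q^{*}}\D)$, whence $\sin(\SubAf,\D)\ge\inf_{q\in\DD{\SubAf}}\sin(\VecB,T_{q}\D)$. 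It then remains to prove the companion cone lemma $\sin(V,C)\ge\inf\{\Norm{\ell|_{V}}:\Norm\ell=1,\ \ell\le 0\ \text{on}\ C\}$. If $V$ meets $\operatorname{relint}(C)$, take a near-minimising $v^{*}$ with nearest point $u^{*}$ in $V\cap C$ and the associated norming functional $\lambda$ on $V$ (so $\lambda\le 0$ on $V\cap C$ and $\lambda(v^{*}-u^{*})=d(v^{*},V\cap C)$), extend it --- via the constraint-qualified identity $(V\cap C)^{\circ}=C^{\circ}+V^{\perp}$ --- to a functional $\ell\le 0$ on $C$ with $\ell|_{V}=\lambda$; then $d(v^{*},C)\ge\ell(v^{*})/\Norm\ell$ while $d(v^{*},V\cap C)=\lambda(v^{*})=\ell(v^{*})$, so $\Norm{\ell|_{V}}/\Norm\ell=1/\Norm\ell\le d(v^{*},C)/d(v^{*},V\cap C)$, and the normalisation of $\ell$ furnishes the required hyperplane. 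If instead $V\cap\operatorname{relint}(C)=\varnothing$, separating $V$ from $\operatorname{relint}(C)$ produces a nonzero functional vanishing on $V$ and $\le 0$ on $C$, so the infimum is $0$.

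I expect the delicate part to be this last step --- the companion cone lemma --- together with the bookkeeping around non-generic position: verifying that $q^{*}\in\DD{\SubAf}$ and $T_{q^{*}}(\SubAf\cap\D)=\VecB\cap T_{q^{*}}\D$ in the case at hand, that the separation alternative above is exactly the excluded configuration ``some $\pi\in\SH q$ contains $\SubAf$'' and so cannot occur after our reduction, and that the constraint qualification $\VecB\cap\operatorname{relint}(T_{q}\D)\ne\varnothing$ therefore holds. All the convex-analytic ingredients (Hahn--Banach norming functionals, the polar of an intersection, tangent-cone limits) are standard; the work is in fitting them together while correctly handling the boundary cases, and in isolating them so that the general-norm hypothesis --- rather than the inner-product one of Proposition~\ref{prop:sin} --- suffices.
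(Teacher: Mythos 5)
Your proposal is correct in outline and takes a genuinely different route from the paper's. You localize at each boundary point by passing to the tangent cone $T_{q}\D$, reduce both inequalities to two ``cone lemmas'' comparing $\sin(\VecB,T_{q}\D)$ with the norms $\Norm{\ell|_{\VecB}}$ of the unit functionals supporting the cone, and translate back via $\sin(\SubAf,\pi)=\Norm{\ell_{\pi}|_{\VecB}}$ (the paper's Lemma \ref{lem:const-ratio}). The paper argues globally instead: for the lower bound it takes the nearest point $q_{0}$ of $\SubAf\cap\D$ to $p$, builds a supporting hyperplane of $\SubAf\cap\D$ \emph{inside} $\SubAf$ realizing $d(p,\SubAf\cap\D)$, and extends it to a supporting hyperplane of $\D$ via the Krein--Rutman theorem --- your polar identity $(V\cap C)^{\circ}=C^{\circ}+V^{\perp}$ under constraint qualification is the same extension step in dual language; for the upper bound it replaces your tangent-cone limit by an upper-hemicontinuity argument for the supporting-hyperplane correspondence (Lemma \ref{lem:gauss}), stepping out from a point near $q_{0}$ and transferring the hyperplane found at the nearest point of $\D$ back to $q_{0}$ up to $\epsilon$. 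Your version of the upper bound is arguably cleaner, since it avoids that continuity lemma entirely.

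Two places need tightening. First, your case split only trivializes the \emph{lower} bound: if some $\pi\in\SH{q_{0}}$ contains $\SubAf$, the upper bound still asserts $\sin(\SubAf,\D)\leq\sup_{\pi\in\SH{q_{0}}}\sin(\SubAf,\pi)=0$ when \emph{all} hyperplanes in $\SH{q_{0}}$ contain $\SubAf$ (equivalently $\VecB\subseteq T_{q_{0}}\D$); this is true, and follows from your own $p_{t}=q_{0}+tv$ computation applied to any $v\in\VecB\setminus T_{q_{0}}(\SubAf\cap\D)$, but it must be said. Second, the standing hypothesis you extract ($\VecB\not\subseteq T_{q}\D$ for every $q$) is strictly weaker than what your lower-bound argument uses: the step ``$\ell'\le 0$ on $(\SubAf\cap\D)-q^{*}$, hence on $\VecB\cap T_{q^{*}}\D$'' requires $T_{q^{*}}(\SubAf\cap\D)=\VecB\cap T_{q^{*}}\D$, which can fail under that weaker hypothesis alone. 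What the case split actually delivers is that no supporting hyperplane at any $q\in\DD{\SubAf}$ contains $\SubAf$; from this, $\VecB$ meets the interior of $T_{q}\D$, hence $\SubAf$ meets the interior of $\D$ (this is precisely the paper's Lemma \ref{lem:interior-hyperplane}), and the constraint qualification follows. You flag exactly these verifications yourself and they all go through, so the plan is sound --- just carry forward the stronger hypothesis that the reduction actually provides.
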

Notice that when $\D$ is nonsingular (i.e. every boundary point has
a unique supporting hyperplane), the lower bound and the upper bound
coincide.

Returning to our setting, for any linear subspace $\U\subseteq\Y$,
we denote $\U^{\flat}:=\U\cap\mu^{-1}(1)$. Then, the second parameter
is

\[
S:=\min_{\SUBSTACK{x\in\A}{\theta\in\H}}\sin\left(K_{\theta}(x)^{\flat},\D\right)
\]

In section \ref{sec:Special-Cases}, we will study the properties
of $R$ and $S$ in some special cases. 
\begin{example}
In Example \ref{ex:pcb-lin}, we have $R\leq4n$ (by Proposition \ref{prop:chain-r}
and Proposition \ref{prop:point-r}) and $S=1$ (by Proposition \ref{prop:chain-s}).
\end{example}
Finally, the third parameter is just the size of the range of the
reward function:

\global\long\def\RR{C}%

\[
\RR:=\max r-\min r
\]

Notice that if $r$ doesn't depend on the first argument, $C\leq2$
because $r$ is 1-Lipschitz.

\subsection{General Case}

\global\long\def\IUCB{\Pol_{\text{\ensuremath{\mathrm{IUCB}}}}}%

We can now state our main result. Let $D_{W}:=\dim\W$. Denote $\IUCB^{\eta}$
the policy implemented by the IUCB algorithm.
\begin{thm}
\label{thm:main}Let $N$ be a positive integer and fix any $\delta>0$.
Denote

\[
\gamma:=\frac{1}{\ln\left(1-\frac{1}{e^{2}}\right)^{-1}}
\]

Then, for all $\theta\in\H$,

\begin{align*}
\Reg{\theta}{\IUCB^{\eta}}N & \leq & 8\eta\left(S^{-1}+1\right)D_{Z}(D_{Z}+1)\sqrt{\gamma\ln\frac{D_{Z}R}{\delta}\cdot N}\\
 &  & +CD_{W}N^{2}(N+1)\exp\left(-\Omega(1)\cdot\frac{\eta^{2}}{R^{2}D_{W}^{\frac{5}{3}}}\right)\\
 &  & +\gamma CD_{Z}^{2}\ln\frac{D_{Z}R}{\delta}\\
 &  & +\left(S^{-1}+1\right)D_{Z}\left(36D_{Z}+8\right)N\delta
\end{align*}

In particular, we can set

\[
\eta:=\Theta(1)\cdot RD_{W}^{\frac{5}{6}}\sqrt{\ln\left(CD_{W}N\right)}
\]

\[
\delta:=\frac{1}{\sqrt{N}}
\]

And then,

\[
\Reg{\theta}{\IUCB^{\eta}}N=\tilde{O}\left(RS^{-1}D_{Z}^{2}D_{W}^{\frac{5}{6}}\sqrt{N}+D_{Z}^{2}C\right)
\]
\end{thm}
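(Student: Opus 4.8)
The plan is to analyze the IUCB algorithm cycle-by-cycle, decomposing the total regret into three qualitatively different sources: (i) the regret accrued on ``good'' cycles where the confidence set $\C_k$ still contains the true hypothesis $\theta^*$ and the optimism principle guarantees $\ME{\theta_k}{r}{x_{\theta_k}^*}\geq\ME{\theta^*}{r}{x_{\theta^*}^*}$, so the per-cycle regret is controlled by how far the optimistic hypothesis $\theta_k$ can be from $\theta^*$ within the directions transverse to $\V{x_{\theta_k}^*}{\bar y_k}$; (ii) the regret on a ``bad'' event where $\theta^*$ has been erroneously excluded from $\C_k$, whose probability I will bound by a concentration/union-bound argument over cycles and over the discretized hypothesis space (this is where $\delta$, the $N\delta$ terms, and the $\exp(-\Omega(\eta^2/(R^2 D_W^{5/3})))$ term come from); and (iii) a bound on the \emph{number} of cycles, since each cycle ends by shrinking $\C$ by a definite factor in some transverse direction, so only $\tilde O(D_Z^2)$ cycles can occur before the confidence set is uniformly thin --- this gives the additive $\gamma C D_Z^2\ln(D_Z R/\delta)$ term.

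First I would set up the concentration estimate: on a cycle of length $\tau_k$ where arm $x:=x_{\theta_k}^*$ is pulled repeatedly, the average outcome $\bar y_k$ is (conditionally) a martingale average of vectors in $\D$ whose mean lies in $K_{\theta^*}(x)^+$, so by a vector-valued Azuma--Hoeffding bound (in the norm on $\Y$ whose unit ball is the absolute convex hull of $\D$, cf. Assumption \ref{ass:r}) the distance from $\bar y_k$ to $K_{\theta^*}(x)^\flat$ is $O(R D_W^{5/6}\tau_k^{-1/2})$ except with probability $\exp(-\Omega(\eta^2/(R^2 D_W^{5/3})))$; translating through $\F$ and the definition of $\V x{\bar y_k}$, this shows $\min_{z\in\V x{\bar y_k}}\Norm{\theta^*-z}\leq\eta/\sqrt{\tau_k}$ with high probability, i.e. $\theta^*$ survives into $\C_{k+1}$. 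The factor $D_W^{5/6}$ (hence $D_W^{5/3}$ in the exponent) is the price of a union bound over a net controlling the $W$-valued fluctuation; I expect the cleanest route is to bound each coordinate in a well-chosen basis of $\W$ and optimize the net granularity, which is exactly the $\eta := \Theta(1)\cdot R D_W^{5/6}\sqrt{\ln(CD_W N)}$ choice.

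Next I would bound the per-cycle regret on good cycles. By the stopping rule (\ref{eq:stop}), when cycle $k$ ends we have $\max_{\theta\in\C_k}\min_{z\in\V x{\bar y_k}}\Norm{\theta-z}\geq 2(D_Z+1)\eta/\sqrt{\tau_k}$, while \emph{before} the last round it was below this threshold; combined with the update rule, this both forces $\tau_k$ not too large relative to the achievable shrinkage and gives, via the sine parameter $S$, a bound of the form $\ME{\theta_k}{r}{x_{\theta_k}^*}-\ME{\theta^*}{r}{x_{\theta_k}^*}\leq O((S^{-1}+1)\eta D_Z/\sqrt{\tau_k})$ on the overestimate --- here $S^{-1}$ enters because converting a bound on distance-to-$\V{\cdot}{\cdot}$ (an affine condition in $\XZ$) into a bound on the discrepancy of lower previisions requires passing through $\sin(K_\theta(x)^\flat,\D)$, and $\Norm{r}$ Lipschitz (Assumption \ref{ass:r}) together with $C$ supplies the reward-scale factor. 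Summing $\tau_k\cdot O((S^{-1}+1)\eta D_Z/\sqrt{\tau_k}) = O((S^{-1}+1)\eta D_Z\sqrt{\tau_k})$ over cycles, using $\sum_k\sqrt{\tau_k}\leq\sqrt{(\text{number of cycles})\cdot N}$ by Cauchy--Schwarz and the cycle-count bound $\tilde O(D_Z(D_Z+1))$, yields the leading $8\eta(S^{-1}+1)D_Z(D_Z+1)\sqrt{\gamma\ln(D_ZR/\delta)\,N}$ term. The $\gamma=1/\ln(1-e^{-2})^{-1}$ constant tracks the geometric rate at which each transverse direction of $\C$ contracts per ``effective'' cycle in that direction, and $\ln(D_Z R/\delta)$ is the number of halvings needed to drive a set of diameter $R$ in $D_Z$ dimensions down to scale $\delta$.

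The main obstacle will be step (i)'s geometric bookkeeping: controlling how the confidence set $\C_k\subseteq\H\subseteq\Z$, which is cut by affine slabs of the form $\{\theta:\min_{z\in\V x{\bar y}}\Norm{\theta-z}\leq\eta/\sqrt\tau\}$, shrinks over cycles in \emph{arbitrary} and possibly correlated directions depending on which arms the adversary's responses force IUCB to play. One must argue that even though the cut directions $\V{x_{\theta_k}^*}{\bar y_k}$ vary adversarially, a volumetric or spectral potential (analogous to the $\det X$ growth in the Confidence Ball analysis, Algorithm \ref{alg:CB}) increases by a definite factor each cycle, capping the number of cycles at $\tilde O(D_Z^2)$ and simultaneously forcing the transverse width to decay; the embedding of $\Z$ into $\XZ$ and the operator-norm-based norm $\Norm{z}=\max_{x\in\A}\Norm{\F_{xz}}$ are precisely the devices that make this potential argument uniform over $\A$, and getting the $(D_Z+1)$ and $D_Z^2$ powers exactly right (rather than, say, $D_Z^3$) will require care in how a single scalar shrinkage condition (\ref{eq:stop}) is leveraged across all $D_Z$ coordinate directions at once. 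Finally I would assemble the four summands, substitute the stated $\eta$ and $\delta=N^{-1/2}$, check that the middle $CD_WN^2(N+1)\exp(-\cdots)$ term becomes $o(1)$ (so it contributes only to the hidden polylog), and collect the surviving dominant term as $\tilde O(RS^{-1}D_Z^2 D_W^{5/6}\sqrt N + D_Z^2 C)$.
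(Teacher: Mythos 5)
Your overall architecture matches the paper's: a cycle decomposition with an optimism argument, a concentration bound keeping $\theta^*$ in $\C_k$, a per-cycle regret bound of order $(S^{-1}+1)(D_Z+1)\eta\sqrt{\tau_k}$ forced by the stopping rule, and a volumetric potential argument capping the number of ``substantial'' cycles at $\tilde O(D_Z^2)$ (the paper indeed combines a slicing-volume lemma with John's ellipsoid theorem and an iterative dimension-reduction over an orthonormal set of ``thin'' directions). However, there is one concrete gap: the exponent $\tfrac{5}{6}$ on $D_W$ cannot be obtained the way you describe. You propose to ``bound each coordinate in a well-chosen basis of $\W$ and optimize the net granularity,'' but a coordinate-wise Azuma bound plus a union bound in a generic basis of the dual space $(\Y/K_{\theta^*}(x))^{\star}$ only controls the distance in the norm induced by that basis, and the distortion between that norm and the intrinsic norm on $\W$ is, by elementary means (John's theorem composed with the $\ell_2$-to-$\ell_1$ distance), only $O(D_W)$ --- giving $D_W^2$ in the exponent, not $D_W^{5/3}$. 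The paper's source of the $\tfrac{5}{6}$ is Giannopoulos' theorem that the Banach--Mazur distance from an arbitrary $D$-dimensional normed space to $\ell_1^D$ is $O(D^{5/6})$; the existence of a basis with that distortion is precisely the non-elementary ingredient, and without citing it your plan proves a strictly weaker bound.

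A secondary issue is your attribution of the $\left(S^{-1}+1\right)D_Z(36D_Z+8)N\delta$ term to the bad event $\theta^*\notin\C_k$. In the paper, $\delta$ is not a failure probability but a geometric thinness scale: the bad event contributes only the $\exp(-\Omega(\eta^2/(R^2D_W^{5/3})))$ term, while the $N\delta$ term is the regret from \emph{good} cycles in which the confidence set has already become thin (diameter $O(D_Z\delta)$ or radius $\rho_k^0\le 9D_Z^2\delta$), so that every selected arm is within $O((S^{-1}+1)D_Z^2\delta)$ of optimal per round. If you instead treat $\delta$ as a confidence-set failure probability obtained from a union bound over a $\delta$-net of $\H$, the resulting term would be $CN\delta$ without the $(S^{-1}+1)D_Z^2$ prefactor, and you would be double-counting against the exponential term; this bookkeeping needs to be repaired before the four summands assemble as stated.
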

Here and elsewhere, we use the notation $\tilde{O}(f):=O(f\cdot\text{poly}(\ln f))$.
The proof is given in Appendix \ref{sec:Proof-of-Main}.

At first glance, it may appear strange that the bound in Theorem \ref{thm:main}
doesn't depend on the size or dimension of $\A$. However, notice
that we only ``care'' about arms of the form $x_{\theta}^{*}$ for
some $\theta\in\H$. Indeed, other arms don't affect the definition
of regret and are never selected in IUCB. Hence, effectively $\A$
can be assumed to be at most $D_{Z}$-dimensional.

In the case when $\D$ is a simplex we can get another bound which
is often better:

\global\long\def\Abs#1{\left|#1\right|}%

\begin{thm}
\label{thm:simplex}Let $\B$, $\Y$, $\mu$ and $\D$ be as in Example
\ref{ex:simplex}\footnote{$r$ doesn't have to be as in Example \ref{ex:simplex} for this theorem,
as long as Assumption \ref{ass:r} is satisfied.}. Let $N$ be a positive integer and fix any $\delta>0$. Then, for
all $\theta\in\H$,

\begin{align*}
\Reg{\theta}{\IUCB^{\eta}}N & \leq & 8\eta\left(S^{-1}+1\right)D_{Z}(D_{Z}+1)\sqrt{\gamma\ln\frac{D_{Z}R}{\delta}\cdot N}\\
 &  & +\frac{1}{2}C\left(\frac{2e\Abs{\B}}{D_{W}+1}\right)^{D_{W}+1}N^{2}(N+1)\exp\left(-\frac{\eta^{2}}{2R^{2}}\right)\\
 &  & +\gamma CD_{Z}^{2}\ln\frac{D_{Z}R}{\delta}\\
 &  & +\left(S^{-1}+1\right)D_{Z}\left(36D_{Z}+8\right)N\delta
\end{align*}

In particular, we can set

\[
\eta:=R\sqrt{2\left(D_{W}+1\right)\ln\left(\frac{CN^{3}\Abs{\B}}{D_{W}+1}\right)}
\]

\[
\delta:=\frac{1}{\sqrt{N}}
\]

And then,

\[
\Reg{\theta}{\IUCB^{\eta}}N=\tilde{O}\left(RS^{-1}D_{Z}^{2}\sqrt{D_{W}\ln\frac{\Abs{\B}}{D_{W}}\cdot N}+D_{Z}^{2}C\right)
\]
\end{thm}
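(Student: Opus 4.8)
The plan is to re-run the proof of Theorem~\ref{thm:main} from Appendix~\ref{sec:Proof-of-Main} essentially verbatim, changing only the single place where the fine geometry of $\D$ enters the estimates. Comparing the two statements, the first, third and fourth lines of the bound in Theorem~\ref{thm:simplex} are \emph{identical} to those in Theorem~\ref{thm:main}: they come from the ``good-event'' analysis (each cycle $k$ contributes $O\!\bigl(\eta(S^{-1}+1)D_Z\sqrt{\tau_k}\bigr)$ to the regret, there are at most $\tilde{O}(D_Z^{2})$ cycles, and Cauchy--Schwarz over the $\tau_k$ with $\sum_k\tau_k\le N$), together with the $N\delta$ slack from the confidence-set update; none of this uses the simplex structure. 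Only the \emph{second} line changes, and in both theorems it has the shape $\tfrac{1}{2}C\cdot(\text{prefactor})\cdot N^{2}(N+1)\cdot\exp(-(\text{exponent}))$, bounding $C$ times the number of rounds times the probability that the true hypothesis $\theta^{*}$ ever leaves the confidence set $\C$. So the whole task is to re-derive that failure probability when $\D=\Delta\B$, obtaining prefactor $\bigl(\tfrac{2e\Abs{\B}}{D_W+1}\bigr)^{D_W+1}$ and exponent $\tfrac{\eta^{2}}{2R^{2}}$ in place of the general $D_W$ and $\Omega(1)\,\eta^{2}/(R^{2}D_W^{5/3})$.

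Recall that during cycle $k$ a single arm $x^{*}$ is selected for $\tau_k$ rounds; writing $\bar{\mu}_k$ for the average of the history-conditional outcome means over that cycle, $\bar{\mu}_k$ is a convex combination of points of $K_{\theta^{*}}(x^{*})^{+}$, hence lies in $K_{\theta^{*}}(x^{*})=\ker F_{x^{*}\theta^{*}}$, so $\theta^{*}\in\V{x^{*}}{\bar{\mu}_k}$ and, by linearity of $\F$ in its third argument, $\F_{x^{*}}^{\bar{y}_k}\theta^{*}=\F_{x^{*}\theta^{*}}(\bar{y}_k-\bar{\mu}_k)$. Exactly as in Appendix~\ref{sec:Proof-of-Main}, the event ``$\theta^{*}$ drops out of $\C$ at cycle $k$'' is therefore contained in a large-deviation event for the $\W$-valued martingale average $\bar{y}_k-\bar{\mu}_k$, of the form $\{\,\Norm{\F_{x^{*}\theta^{*}}(\bar{y}_k-\bar{\mu}_k)}_{\W}>c\,\eta/\sqrt{\tau_k}\,\}$ with $c$ an absolute constant. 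Since $\D=\Delta\B$, the norm on $\Y=\R^{\B}$ is $\ell_{1}$ (Lemma~\ref{lem:l1}), so $\Y^{\star}=\ell_\infty(\B)$ and the unit ball of $\W$ is an intersection (over arms and hypotheses) of images of the cross-polytope under the maps $\F_{x\theta}$, whose polar is built from preimages of the $\Abs{\B}$-cube $[-1,+1]^{\B}$ under the adjoints $\F_{x\theta}^{\star}$. This is where we beat the general argument: instead of $\epsilon$-netting the dual unit ball of the $D_W$-dimensional space $\W$ (which costs $\exp(\Theta(D_W\ln(1/\epsilon)))$ test directions and, after optimizing $\epsilon$, forces both the $D_W^{5/3}$ and the degraded exponent of Theorem~\ref{thm:main}), a Carath\'eodory argument in $\W$ shows that the relevant dual body has vertices indexed by sign patterns on at most $D_W+1$ coordinates of $\B$, so it suffices to test $\Norm{\F_{x^{*}\theta^{*}}(\bar{y}_k-\bar{\mu}_k)}_{\W}$ against a \emph{finite} family of at most $2^{D_W+1}\binom{\Abs{\B}}{D_W+1}\le\bigl(\tfrac{2e\Abs{\B}}{D_W+1}\bigr)^{D_W+1}$ explicit linear functionals, each a signed sum over a set of at most $D_W+1$ coordinates.

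For each fixed functional in this family, $\langle\cdot,\bar{y}_k-\bar{\mu}_k\rangle$ is an average of $\tau_k$ martingale differences of range at most $2$ (outcomes lie in $\Delta\B$), so scalar Azuma--Hoeffding gives tail $\exp(-\tau_k t^{2}/2)$ at level $t$; taking $t=\eta/(R\sqrt{\tau_k})$ cancels $\tau_k$ and leaves $\exp(-\eta^{2}/(2R^{2}))$ per functional, with \emph{no} net-radius penalty. A union bound over the $\le\tfrac12\bigl(\tfrac{2e\Abs{\B}}{D_W+1}\bigr)^{D_W+1}$ functionals, over the (polynomially many in $N$) cycle configurations, and over the $\le N$ rounds of cost $C$, reproduces the second line of the claimed bound; the first, third and fourth lines are imported unchanged. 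Finally, substituting $\delta=N^{-1/2}$ and $\eta=R\sqrt{2(D_W+1)\ln\!\bigl(CN^{3}\Abs{\B}/(D_W+1)\bigr)}$, the second line becomes $O(C)$ (indeed at most $\tfrac12 C$ once $N$ is large, since then $\exp(-\eta^{2}/(2R^{2}))=\bigl((D_W+1)/(CN^{3}\Abs{\B})\bigr)^{D_W+1}$ overwhelms the prefactor times $N^{2}(N+1)$) and is absorbed into the $D_Z^{2}C$ term, the third line is $\tilde{O}(D_Z^{2}C)$, the fourth is $\tilde{O}(S^{-1}D_Z^{2}\sqrt{N})$, and the first is $\tilde{O}\!\bigl(RS^{-1}D_Z^{2}\sqrt{D_W\ln(\Abs{\B}/D_W)\cdot N}\bigr)$; adding these gives the stated $\tilde{O}$ expression.

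The step I expect to be the real obstacle is the Carath\'eodory/zonotope reduction in the second paragraph: proving rigorously that, when $\D$ is a simplex, the $\W$-norm can be certified against a finite family of sign-pattern functionals supported on $\le D_W+1$ coordinates with the count $\le\bigl(\tfrac{2e\Abs{\B}}{D_W+1}\bigr)^{D_W+1}$, and -- crucially -- that this reduction introduces \emph{no} multiplicative slack in the deviation level, so that the Azuma--Hoeffding exponent stays the clean $\tfrac{\eta^{2}}{2R^{2}}$ rather than degrading as the $\epsilon$-net does. Everything else is either a line-for-line reuse of Appendix~\ref{sec:Proof-of-Main} or the routine substitution of the stated $\eta$ and $\delta$.
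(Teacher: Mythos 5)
Your proposal is essentially the paper's proof: the paper also reuses the entire argument of Appendix \ref{sec:Proof-of-Main} verbatim except for the concentration step, replacing Lemma \ref{lem:concentration} by a simplex-specific bound (Lemma \ref{lem:simplex-concentration}) and feeding it through the generic confidence lemma (Lemma \ref{lem:generic-confidence}), so only the second line of the bound changes. The one step you flag as the obstacle is exactly the paper's Lemma \ref{lem:dist-polytope}, and your intuition there is right: one shows $d_{1}\bigl(y^{*},\U^{\flat}\bigr)=\max_{f\in\Fun}\bigl(\T fy^{*}-c_{f}\bigr)$ where $\Fun=(\U^{\bot}\oplus\R\mu)\cap[-1,+1]^{\B}$, and the nontrivial direction (that restricting the $\ell_{\infty}$-dual maximizer to this slice costs nothing) is proved by a hyperplane-separation plus perturbation argument (Lemma \ref{lem:dec-max}), not by Carath\'eodory; the vertex count $2^{D_{W}+1}\binom{\Abs{\B}}{D_{W}+1}$ then comes from the fact that each vertex of a slice of the cube by a $(D_{W}+1)$-dimensional subspace lies in a distinct face of codimension $D_{W}+1$ (with lower semicontinuity of the vertex count handling non-generic position). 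Two cosmetic corrections: the extremal functionals are vertices of that slice, i.e.\ vectors with $D_{W}+1$ coordinates pinned at $\pm1$ and the rest determined by the subspace, not signed indicator sums on $D_{W}+1$ coordinates; and the factor $\tfrac12$ in the second line comes from the $N(N+1)/2$ union bound over round pairs in Lemma \ref{lem:generic-confidence}, not from the functional count.
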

The proof is given in Appendix \ref{sec:simplex}.
\begin{example}
In Example \ref{ex:pcb-lin}, $R\leq4n$, $S=1$, $D_{W}\leq\Abs{\Stoc^{\sharp}}$,
$\Abs{\B}=\prod_{i<n}\Abs{\CB i}$ and $C\leq2$. Therefore, Theorem
\ref{thm:simplex} implies

\[
\Reg{\theta}{\IUCB}N=\tilde{O}\left(nD_{Z}^{2}\sqrt{\Abs{\Stoc^{\sharp}}\sum_{i<n}\ln\Abs{\CB i}\cdot N}\right)
\]
\end{example}

\subsection{Positive Gap Case}

Now, we go on to an upper bound \emph{logarithmic} in $N$, which
is applicable when the hypothesis class has a positive \emph{gap}.
This is analogous to the classical theory of stochastic linear bandits,
except that the definition of ``gap'' for our framework is not entirely
obvious. Specifically, it is as follows:

\global\long\def\DY#1#2{d_{Y}\!\!\left(#1,#2\right)}%

\begin{defn}
\label{def:gap}The \emph{gap} is the maximal number $g\in\R_{+}$
for which the following holds. Consider any $\theta,\theta'\in\H$
satisfying the following conditions:
\begin{enumerate}
\item $\ME{\theta'}r{x_{\theta'}^{*}}\geq\ME{\theta}r{x_{\theta}^{*}}$
\item $\ME{\theta}r{x_{\theta'}^{*}}<\ME{\theta}r{x_{\theta}^{*}}$
\end{enumerate}
Then,

\[
\DY{K_{\theta}\left(x_{\theta'}^{*}\right)^{+}}{K_{\theta'}\left(x_{\theta'}^{*}\right)^{+}}\geq g
\]
\end{defn}
Here, $d_{Y}$ stands for minimal distance w.r.t. the norm on $\Y$.
Notice that the gap depends, in principle, on the precise way we assign
optimal arms to hypotheses (when there is more than one optimal arm),
so for this purpose we will assume $x_{\theta}^{*}$ stands for some
specific measurable mapping from $\H$ to $\A$ that we chose.

Recall that in classical theory, the ``gap'' is the difference between
the reward of the best arm and the reward of the next best arm. The
relation between that notion and Definition \ref{def:gap} is made
clearer by the following (see Appendix \ref{sec:gap} for the proof):
\begin{prop}
\label{prop:gap}Assume $\theta\in\H$ and $g_{\theta}\in\R_{+}$
are s.t. for any $x\in\A$, one of the following is true:
\begin{enumerate}
\item $\ME{\theta}rx=\ME{\theta}r{x_{\theta}^{*}}$
\item $\max_{y\in K_{\theta}\left(x\right)^{+}}r(x,y)\leq\ME{\theta}r{x_{\theta}^{*}}-g_{\theta}$
\end{enumerate}
Then, for any $\theta'\in\H$, if conditions 1 and 2 of Definition
\ref{def:gap} hold, then

\[
\DY{K_{\theta}\left(x_{\theta'}^{*}\right)^{+}}{K_{\theta'}\left(x_{\theta'}^{*}\right)^{+}}\geq g_{\theta}
\]
\end{prop}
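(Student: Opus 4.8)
The plan is to unwind the definitions and reduce everything to a one-line use of the Lipschitz property of $r$. Abbreviate $x':=x_{\theta'}^{*}$. By Assumption \ref{ass:lin} both slices $K_{\theta}(x')^{+}=K_{\theta}(x')\cap\D$ and $K_{\theta'}(x')^{+}$ are nonempty, and by the remark following Assumption \ref{ass:r} the lower prevision reduces to an optimization over these slices, i.e. $\ME{\theta}rx=\min_{y\in K_{\theta}(x)^{+}}r(x,y)$ for every arm $x$ and every hypothesis. Everything below takes place over the single arm $x'$.

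First I would feed condition 2 of Definition \ref{def:gap} into the hypothesis on $\theta$. That condition says $\ME{\theta}r{x'}<\ME{\theta}r{x_{\theta}^{*}}$, so the equality in case 1 of the hypothesis fails for $x=x'$; hence case 2 must hold, giving $\max_{y\in K_{\theta}(x')^{+}}r(x',y)\le\ME{\theta}r{x_{\theta}^{*}}-g_{\theta}$. On the other side, since $x'=x_{\theta'}^{*}$ we have $\min_{y\in K_{\theta'}(x')^{+}}r(x',y)=\ME{\theta'}r{x_{\theta'}^{*}}$, which by condition 1 of Definition \ref{def:gap} is at least $\ME{\theta}r{x_{\theta}^{*}}$. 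Consequently every outcome in $K_{\theta}(x')^{+}$ has $r(x',\cdot)$-value at most $\ME{\theta}r{x_{\theta}^{*}}-g_{\theta}$ while every outcome in $K_{\theta'}(x')^{+}$ has $r(x',\cdot)$-value at least $\ME{\theta}r{x_{\theta}^{*}}$, so the two reward ranges are separated by at least $g_{\theta}$.

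Finally I would close using that $r(x',\cdot)$ is $1$-Lipschitz with respect to the norm on $\Y$ (Assumption \ref{ass:r}): for any $y_{1}\in K_{\theta}(x')^{+}$ and $y_{2}\in K_{\theta'}(x')^{+}$,
\[
\Norm{y_{1}-y_{2}}\;\ge\;r(x',y_{2})-r(x',y_{1})\;\ge\;g_{\theta},
\]
and taking the infimum over $y_{1},y_{2}$ yields $\DY{K_{\theta}(x')^{+}}{K_{\theta'}(x')^{+}}\ge g_{\theta}$, which is exactly the claim. There is no real obstacle here; the only point needing care is the bookkeeping in the first step — correctly matching the strict suboptimality of $x_{\theta'}^{*}$ under $\theta$ (condition 2 of Definition \ref{def:gap}) to case 2 of the hypothesis, and remembering that under Assumptions \ref{ass:expect} and \ref{ass:r} the quantity $\ME{\theta}rx$ is a genuine min over $K_{\theta}(x)^{+}$ so that the two-sided reward separation above is legitimate.
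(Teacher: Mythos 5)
Your proof is correct and follows essentially the same route as the paper's: rule out case 1 of the hypothesis at $x=x_{\theta'}^{*}$ using condition 2 of Definition \ref{def:gap}, combine the resulting upper bound on $r$ over $K_{\theta}(x_{\theta'}^{*})^{+}$ with the lower bound $\ME{\theta'}r{x_{\theta'}^{*}}\geq\ME{\theta}r{x_{\theta}^{*}}$ from condition 1, and conclude by the $1$-Lipschitz property of $r$. No issues.
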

Notice that ``gap'' in Proposition \ref{prop:gap} is between the
minimum reward of the optimal arm and the \emph{maximum} reward of
an unoptimal arm.

The following is the upper bound enabled by a positive gap:
\begin{thm}
\label{thm:gap}Assume the gap is $g>0$. Let $N$ be a positive integer.
Then, for all $\theta\in\H$

\begin{align*}
\Reg{\theta}{\IUCB^{\eta}}N\leq & \gamma D_{Z}^{2}\left(\frac{256S^{-1}\left(S^{-1}+1\right)(D_{Z}+1)^{2}\eta^{2}}{g}+C\right)\ln\frac{144S^{-1}D_{Z}^{3}R}{g}\\
 & +CD_{W}N^{2}(N+1)\exp\left(-\Omega(1)\cdot\frac{\eta^{2}}{R^{2}D_{W}^{\frac{5}{3}}}\right)
\end{align*}

In particular, we can set

\[
\eta:=\Theta(1)\cdot RD_{W}^{\frac{5}{6}}\sqrt{\ln\left(CD_{W}N\right)}
\]

And then,

\[
\Reg{\theta}{\IUCB^{\eta}}N=\tilde{O}\left(\frac{R^{2}S^{-2}D_{W}^{\frac{5}{3}}D_{Z}^{4}}{g}\cdot\ln N+D_{Z}^{2}C\right)
\]
\end{thm}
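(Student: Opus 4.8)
The plan is to rerun the cycle-by-cycle analysis behind Theorem~\ref{thm:main} (Appendix~\ref{sec:Proof-of-Main}), but to use the gap to show that only $\tilde{O}(D_{Z}^{2})$ cycles can accrue regret and that each such cycle is short, which is what removes the $\sqrt{N}$ tail. Throughout I would condition on the \emph{good event} $\mathcal{G}$ that the true parameter $\theta^{*}$ lies in every confidence set $\C_{k}$ produced during the first $N$ rounds and that every within-cycle prefix-average $\bar{y}$ of the played arm $x_{\theta_{k}}^{*}$ stays within the nominal radius of $K_{\theta^{*}}(x_{\theta_{k}}^{*})^{+}$. Off $\mathcal{G}$ the regret is at most $C$ times the number of rounds; a vector-valued Azuma--Hoeffding inequality together with a union bound over the at most $N$ cycles and at most $N$ within-cycle steps gives $\Pr[\mathcal{G}^{\mathrm{c}}]\le D_{W}N^{2}(N+1)\exp\bigl(-\Omega(1)\cdot\frac{\eta^{2}}{R^{2}D_{W}^{\frac{5}{3}}}\bigr)$, which is exactly the second summand of the claimed bound. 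This entire step is identical to the corresponding step of the proof of Theorem~\ref{thm:main}, so I would just cite it.

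On $\mathcal{G}$, call cycle $k$ \emph{null} if $\ME{\theta^{*}}{r}{x_{\theta_{k}}^{*}}=\ME{\theta^{*}}{r}{x_{\theta^{*}}^{*}}$ and \emph{active} otherwise. Since IUCB plays the single arm $x_{\theta_{k}}^{*}$ throughout cycle $k$, and $\ME{\theta^{*}}{r}{x}\le\E{y\sim\zeta}{r(x,y)}$ for every $\zeta$ in the credal set of $\theta^{*}$ at $x$, the expected regret against any nature policy is bounded by $\sum_{k\text{ active}}\tau_{k}\bigl(\ME{\theta^{*}}{r}{x_{\theta^{*}}^{*}}-\ME{\theta^{*}}{r}{x_{\theta_{k}}^{*}}\bigr)$, so null cycles contribute nothing. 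For an active cycle, $\theta^{*}\in\C_{k}$ and optimism of $\theta_{k}$ give $\ME{\theta_{k}}{r}{x_{\theta_{k}}^{*}}\ge\ME{\theta^{*}}{r}{x_{\theta^{*}}^{*}}$, which together with activeness is precisely hypotheses~(1)--(2) of Definition~\ref{def:gap} for the pair $(\theta,\theta')=(\theta^{*},\theta_{k})$; hence $\DY{K_{\theta^{*}}(x_{\theta_{k}}^{*})^{+}}{K_{\theta_{k}}(x_{\theta_{k}}^{*})^{+}}\ge g$. The technical heart of the argument is to turn this $\Y$-distance separation into a quantitative $\XZ$-distance lower bound $\min_{z\in\V{x_{\theta_{k}}^{*}}{\bar y}}\Norm{\theta_{k}-z}=\Omega(Sg)$: using $\F_{x}^{\bar y}z=\F_{xz}\bar{y}$ and $\Norm{\bar y}\le 1$ for $\bar y\in\D$, one gets $\Norm{\F_{x}^{\bar y}}_{\mathrm{op}}\le 1$, hence $\min_{z}\Norm{\theta_{k}-z}\ge\Norm{\F_{x}^{\bar y}\theta_{k}}=\Norm{F_{x\theta_{k}}\bar y}_{\W}\ge\DY{\bar y}{K_{\theta_{k}}(x_{\theta_{k}}^{*})}$ by the definition of the $\W$-norm; and the sine parameter $S$ (via a Proposition~\ref{prop:sin-sh}-type argument, using $\bar y\in\D$ and the concentration part of $\mathcal{G}$) is exactly what lets one replace $K_{\theta_{k}}(x_{\theta_{k}}^{*})$ by $K_{\theta_{k}}(x_{\theta_{k}}^{*})^{+}$, yielding the bound $\ge\frac{S}{S+1}(g-o(g))$. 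I expect this geometric translation, together with the bookkeeping of the $o(g)$ concentration errors and the $\mu$-value/affine subtleties it hides, to be the main obstacle.

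Granting this, two things follow. First, since $\theta_{k}\in\C_{k}$ realizes $\min_{z}\Norm{\theta_{k}-z}=\Omega(Sg)$ once the prefix-average has concentrated, the while-condition~(\ref{eq:stop}) must fire as soon as $\sqrt{\tau}\cdot\Omega(Sg)\ge2(D_{Z}+1)\eta$, so every active cycle has $\tau_{k}=O\bigl((D_{Z}+1)^{2}\eta^{2}/(S^{2}g^{2})\bigr)$; combining with the per-cycle regret estimate $O\bigl(\eta(S^{-1}+1)(D_{Z}+1)\sqrt{\tau_{k}}\bigr)$ from the proof of Theorem~\ref{thm:main}, each active cycle costs $O\bigl(S^{-1}(S^{-1}+1)(D_{Z}+1)^{2}\eta^{2}/g\bigr)$. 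Second, at the end of an active cycle the optimistic $\theta_{k}$ is dropped from $\C$ (its $\Omega(Sg)$ distance to $\V{x_{\theta_{k}}^{*}}{\bar y}$ exceeds the update radius $\eta/\sqrt{\tau_{k}}$), so the cycle performs a ``substantial'' slab cut, and once $\C$ has become thin in all $O(D_{Z})$ relevant probed directions no further active cycle is possible (the $\Omega(Sg)$ separation would contradict thinness). Feeding $g$ in as the effective resolution into the volumetric/elliptical-potential counting argument of Theorem~\ref{thm:main} bounds the number of active cycles---indeed of all substantial-cut cycles preceding the last active one---by $O\bigl(\gamma D_{Z}^{2}\ln(S^{-1}D_{Z}^{3}R/g)\bigr)$. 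Multiplying the per-cycle cost by this count, and adding the $O(C)$-per-cycle slack over the same $O\bigl(\gamma D_{Z}^{2}\ln(\cdot/g)\bigr)$ cycles (the ``$+C$'' inside the parenthesis), gives the first summand; adding the $\mathcal{G}^{\mathrm{c}}$ term yields the stated bound, and the displayed choice of $\eta$ and the $\tilde{O}(\cdot)$ form follow by substitution exactly as in Theorem~\ref{thm:main}.
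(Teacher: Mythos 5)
Your proposal is correct and follows essentially the same route as the paper: split off the confidence-failure event exactly as in Theorem~\ref{thm:main}, decompose the remaining cycles into optimal-arm (zero-regret) cycles and ``active'' ones, use the gap together with the sine parameter and the stopping rule to bound both the length and the regret of each active cycle by $O(S^{-1}(S^{-1}+1)(D_Z+1)^2\eta^2/g)$, and count the active cycles with the volumetric argument at resolution $\delta\sim Sg/D_Z^2$. The only cosmetic difference is that you argue directly that active cycles force $\rho_k=\Omega(Sg)$ (which requires the small care you flag about when the within-cycle average has concentrated), whereas the paper runs the contrapositive (Lemmas~\ref{lem:gap} and~\ref{lem:gap-hypercube}: small $\rho_k^1$ or small diameter implies the selected arm is optimal), which sidesteps that issue.
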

The proof is given in Appendix \ref{sec:gap}. Notice that we can
achieve optimal asymptotics using the same value of $\eta$ for both
Theorem \ref{thm:main} and Theorem \ref{thm:gap}.

In order to apply Theorem \ref{thm:gap} to Example \ref{ex:zerosum},
we will need the following:
\begin{prop}
\label{prop:zerosum-gap}In the setting of Example \ref{ex:zerosum},
consider some $P,P'\in\RI^{\B_{1}\times\B_{2}}$. Let $\Y$, $\mu$
and $\D$ be as in Example \ref{ex:pcb} and $\Z$, $\W$ and $F$
as in Example \ref{ex:pcb-lin}. Let $\theta,\theta'\in\Z$ correspond
to the payoff matrices $P,P'$ in the manner described in Example
\ref{ex:zerosum}. Consider any $x\in\A_{1}$. Then

\[
\DY{K_{\theta}(x)^{+}}{K_{\theta'}(x)^{+}}\geq\frac{1}{2}\min_{b\in\B_{2}}\E{a\sim x}{\Abs{P_{ab}-P'_{ab}}}
\]
\end{prop}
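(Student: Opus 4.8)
The plan is to make the two sets $K_{\theta}(x)^{+}$ and $K_{\theta'}(x)^{+}$ completely explicit and then reduce the claim to an elementary $\ell_1$ estimate. Recall that by Lemma~\ref{lem:l1}, in the simplex case the norm on $\Y=\R^{\B}$ (here $\B=\B_{2}\times\B_{1}\times\{-1,+1\}$) is the $\ell_1$ norm, so $d_{Y}$ is $\ell_1$ distance; since $K_{\theta}(x)^{+}$ and $K_{\theta'}(x)^{+}$ are closed subsets of the compact set $\D=\Delta\B$, the infimum defining $\DY{K_{\theta}(x)^{+}}{K_{\theta'}(x)^{+}}$ is attained.

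First I would combine the descriptions in Examples~\ref{ex:zerosum}, \ref{ex:pcb} and~\ref{ex:pcb-lin} to compute $K_{\theta}(x)=\ker F_{x\theta}$ explicitly. Substituting the data of Example~\ref{ex:zerosum} into the formula for $F$ from Example~\ref{ex:pcb-lin} --- namely $\psi_{b}(\theta_{b})=1$ and $f_{b}(x,\theta_{b})_{a}=x_{a}$ for the length-$1$ prefixes $b\in\B_{2}$, and $\psi_{ba}(\theta_{ba})=1$ and $f_{ba}(x,\theta_{ba})_{\sigma}=\tfrac{1+\sigma P_{ab}}{2}$ for the length-$2$ prefixes $(b,a)\in\B_{2}\times\B_{1}$ --- the defining equations of $K_{\theta}(x)$ reduce, in terms of the marginals $q_{b}:=\sum_{a,\sigma}y_{ba\sigma}$ and $q_{ba}:=\sum_{\sigma}y_{ba\sigma}$, to $q_{ba}=x_{a}q_{b}$ and $y_{ba\sigma}=\tfrac{1+\sigma P_{ab}}{2}q_{ba}$. (Note the empty prefix does not lie in $\mathcal{S}$, so $q=(q_{b})_{b\in\B_{2}}$ is unconstrained --- this is exactly player~$2$'s degree of freedom.) Intersecting with $\Delta\B$ and using $P\in\RI^{\B_{1}\times\B_{2}}$, so that $\tfrac{1\pm P_{ab}}{2}\geq0$, one gets $K_{\theta}(x)^{+}=\{\,y^{q}:q\in\Delta\B_{2}\,\}$ where $y^{q}_{ba\sigma}:=q_{b}\,x_{a}\,\tfrac{1+\sigma P_{ab}}{2}$, and $K_{\theta'}(x)^{+}$ admits the same description with $P'$ in place of $P$. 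Hence it remains to show that for all $q,q'\in\Delta\B_{2}$,
\[
\sum_{b\in\B_{2}}\sum_{a\in\B_{1}}\sum_{\sigma\in\{-1,+1\}}\Bigl| q_{b}x_{a}\tfrac{1+\sigma P_{ab}}{2}-q'_{b}x_{a}\tfrac{1+\sigma P'_{ab}}{2}\Bigr|\;\geq\;\tfrac12\min_{b\in\B_{2}}\sum_{a\in\B_{1}}x_{a}\Abs{P_{ab}-P'_{ab}}.
\]

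For fixed $b$ and $a$ I would group the two terms $\sigma=\pm1$: writing $u:=q_{b}-q'_{b}$ and $v:=q_{b}P_{ab}-q'_{b}P'_{ab}$, they contribute $\tfrac{x_{a}}{2}\bigl(|u+v|+|u-v|\bigr)=x_{a}\max(|u|,|v|)$. Since $|v|\geq q_{b}\Abs{P_{ab}-P'_{ab}}-|u|$ by the triangle inequality together with $|P'_{ab}|\leq1$, we get $\max(|u|,|v|)\geq\tfrac12(|u|+|v|)\geq\tfrac12 q_{b}\Abs{P_{ab}-P'_{ab}}$. Summing over $a$, then over $b$, and using $\sum_{b}q_{b}=1$ yields the displayed inequality; taking the minimum over $q,q'$ completes the proof.

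The only step that needs real care is the first one: correctly threading the multi-level indexing of Example~\ref{ex:pcb-lin} (the prefix set $\mathcal{S}$, the functionals $\psi_{a}$, the maps $f_{a}$, and the suffix index sets $\NB{i}$) through to the clean parameterization of $K_{\theta}(x)^{+}$ by $\Delta\B_{2}$. Everything afterwards is a short self-contained computation; in particular neither the reward function nor Assumption~\ref{ass:r} plays any role, since the statement concerns only the sets $K_{\theta}(x)^{+}$ and $K_{\theta'}(x)^{+}$.
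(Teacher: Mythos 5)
Your proof is correct, and it reaches the result by a genuinely more direct route than the paper. The paper first proves a general-purpose lemma (Lemma~\ref{lem:dist-sdp}) lower-bounding the $\ell_{1}$ distance between two credal sets each defined by fixing a conditional distribution, which in turn rests on the elementary estimates $\Norm{su-tv}_{1}\geq\frac{1}{2}\max(s,t)\Norm{u-v}_{1}$ for probability vectors $u,v$ (Lemmas~\ref{lem:l1-scalar} and~\ref{lem:l1-biscalar}); it then verifies via Lemma~\ref{lem:zerosum-sdp} that $K_{\theta}(x)^{+}$ has the required conditional-distribution form with $Q(b)_{a\sigma}=\theta_{ba\sigma}x_{a}$, and computes $\Norm{Q(b)-\tilde{Q}(b)}_{1}$. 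You instead parameterize both sets explicitly by the free marginal $q\in\Delta\B_{2}$ (which is the content of Lemma~\ref{lem:zerosum-sdp} plus the observation that the empty prefix is not in $\mathcal{S}$) and then exploit the Bernoulli structure of the last coordinate: grouping the $\sigma=\pm1$ terms into $x_{a}\max(\Abs{u},\Abs{v})$ via the identity $\Abs{u+v}+\Abs{u-v}=2\max(\Abs{u},\Abs{v})$, and extracting the factor $\frac{1}{2}$ from $\max(\Abs{u},\Abs{v})\geq\frac{1}{2}(\Abs{u}+\Abs{v})$ together with $\Abs{P'_{ab}}\leq1$. Your argument is shorter and self-contained, but is tied to the two-outcome structure of the reward encoding; the paper's Lemma~\ref{lem:dist-sdp} is reusable for arbitrary finite conditional alphabets. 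All the individual steps check out, including the verification that $y^{q}\in\Delta\B$ and the final summation using $\sum_{b}q_{b}=1$.
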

The proof is given in Appendix \ref{sec:gap}.
\begin{example}
\label{ex:zerosum-gap}In the setting of Example \ref{ex:zerosum},
consider some $\mathcal{M}\subseteq\RI^{\B_{1}\times\B_{2}}$. Let
$\Y$, $\mu$, $\D$ and $r$ be as in Example \ref{ex:pcb} and $\Z$,
$\W$ and $F$ as in Example \ref{ex:pcb-lin}, but define $\H$ by

\[
\H:=\SC{\bigoplus_{b\in\B_{2}}1\oplus\bigoplus_{\SUBSTACK{b\in\B_{2}}{a\in\B_{2}}}\left[\begin{array}{cc}
\frac{1-P_{ab}}{2} & \frac{1+P_{ab}}{2}\end{array}\right]}{\PM\in\mathcal{M}}
\]
For any $P\in\mathcal{M}$, denote $x_{P}^{*}:=\Argmax{x\in\A_{1}}\:\underset{y\in\A_{2}}{\min}\:\T xPy$.
Fix $\tilde{g}>0$. Assume that for any $P,P'\in\mathcal{M}$ satisfying
\begin{enumerate}
\item $\max_{x\in\A_{1}}\min_{y\in\A_{2}}\T xP'y\geq\max_{x\in\A_{1}}\min_{y\in\A_{2}}\T xPy$
\item $\min_{y\in\A_{2}}\T{(x_{P'}^{*})}Py<\max_{x\in\A_{1}}\min_{y\in\A_{2}}\T xPy$
\end{enumerate}
it holds that

\[
\frac{1}{2}\min_{b\in\B_{2}}\E{a\sim x}{\Abs{P_{ab}-P'_{ab}}}\geq\tilde{g}
\]

Then, by Proposition \ref{prop:zerosum-gap}, the gap satisfies $g\geq\tilde{g}$.
Moreover, $R\leq12$, $S=1$, $D_{W}=O(\Abs{\B_{1}}\cdot\Abs{\B_{2}})$,
$D_{Z}=\Abs{\B_{2}}+2\Abs{\B_{1}}\cdot\Abs{\B_{2}}$ and $C\leq2$.
Therefore, by Theorem \ref{thm:gap}

\[
\Reg{\theta}{\IUCB^{\eta}}N=\tilde{O}\left(\frac{\left(\Abs{\B_{1}}\cdot\Abs{\B_{2}}\right)^{\frac{17}{3}}}{\tilde{g}}\cdot\ln N\right)
\]
\end{example}

\section{\label{sec:Lower-Regret-Bounds}Lower Regret Bounds}

\subsection{Scaling with $D_{Z}$}

Since stochastic linear bandits are a special case of our framework,
lower bounds for the former are also lower bounds for the latter.
In particular, we have the following theorem adapted from \cite{DBLP:conf/colt/DaniHK08}
(it appears there as ``Theorem 3''):
\begin{thm}
[Dani-Hayes-Kakade]\label{thm:DHK}Consider the setting of Example
\ref{ex:linear-bandits}, with $D=2n$ for some $n\geq1$ and

\[
\A:=\SC{x\in\R^{2n}}{\forall i<n:x_{2i}^{2}+x_{2i+1}^{2}=1}
\]

Let $\xi\in\Delta\H$ be the uniform distribution\footnote{The torus is a Cartesian product of $n$ circles, and the uniform
distribution on the torus is just the product of the uniform distributions
on the circles.} over the torus $\frac{1}{n}\A\subseteq\H$. Then, for \emph{any}
agent policy $\Pol$ and time horizon $N\geq\max(\frac{n^{2}}{16},256)$,

\[
\E{\theta\sim\xi}{\Reg{\theta}{\Pol}N}\geq\frac{1}{2304}n\sqrt{N}
\]
\end{thm}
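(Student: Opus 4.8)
The plan is to reduce the statement to the stochastic linear bandit lower bound of \cite{DBLP:conf/colt/DaniHK08} (their Theorem~3), after checking that Example~\ref{ex:linear-bandits} with the stated $\A$ literally reproduces their hard instance and that the regret $\Reg{\theta}{\Pol}N$ of our framework dominates the classical pseudo-regret. First I would record the two elementary facts about the instance. Since $\A$ is a product of $n$ unit circles, for $\theta=\tfrac1n v$ with $v\in\A$ we have $\T x\theta=\tfrac1n\sum_{i<n}\langle x^{(i)},v^{(i)}\rangle$, where the superscript $(i)$ denotes the $i$-th coordinate pair; by Cauchy--Schwarz each summand lies in $[-1,+1]$, so $\tfrac1n\A\subseteq\A^{\circ}=\H$ (hence $\xi$ really is supported on $\H$) and $\mathrm{E}_{\A,\theta}^{*}=1$, attained at the block-aligned arm $x=v$. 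Next I would fix one admissible nature policy, namely the one returning on each round the distribution supported on $\{[1;-1],[1;+1]\}\subseteq\D$ with mean $[1;\T{x_n}\theta]$; this is legitimate because here $K_\theta(x)^{+}$ is the single point $[1;\T x\theta]$, so the credal set consists of all reward laws with that mean. Because $\Reg{\theta}{\Pol}N$ is defined with a $\min$ over nature policies, plugging in this $\nu_0$ gives $\E{\theta\sim\xi}{\Reg{\theta}{\Pol}N}\ge \sum_{n=0}^{N-1}\E{}{1-\T{x_n}\theta}$, the expectation being over $\theta\sim\xi$ and over the trajectory of $\Pol$ against the Bernoulli environment. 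The right-hand side is exactly the Bayesian pseudo-regret of $\Pol$ on the classical stochastic linear bandit with arm set $\A$ under prior $\xi$, so their Theorem~3 applies verbatim and delivers $\tfrac1{2304}n\sqrt N$ for $N\ge\max(\tfrac{n^2}{16},256)$.

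If one instead wanted a self-contained argument, I would reproduce the information-theoretic core. Using $1-\langle a,b\rangle=\tfrac12\Norm{a-b}^2$ for unit vectors, the pseudo-regret becomes $\tfrac1{2n}\sum_{i<n}\sum_{n<N}\E{}{\Norm{x_n^{(i)}-v^{(i)}}^2}$, and since $\xi$ is a product of uniform laws on the $n$ circles the blocks decouple: it suffices that for each $i$, $R_i:=\sum_n\E{}{1-\langle x_n^{(i)},v^{(i)}\rangle}\gtrsim n\sqrt N$. Fixing $i$ and conditioning on the other blocks, the round-$n$ observation is Bernoulli with mean depending on $v^{(i)}$ only through $\tfrac1n\langle x_n^{(i)},v^{(i)}\rangle$, and two features make block $i$ hard: the sensitivity of this law to the direction of $v^{(i)}$ carries a factor $1/n$ and is further damped by how transverse the played arm is, with $\langle x_n^{(i)},(v^{(i)})^{\perp}\rangle^2\le\Norm{x_n^{(i)}-v^{(i)}}^2$, so the Fisher information accrued for the direction of $v^{(i)}$ over $N$ rounds is $O(R_i/n^2)$; and to stop paying block-$i$ regret the agent would have to localize that direction to angular accuracy $o(n/\sqrt{R_i})$, which the Fisher bound forbids. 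Hence the per-round block-$i$ regret stays $\gtrsim n/R_i$ throughout, giving $\tfrac1n R_i\gtrsim Nn/R_i$, i.e. $R_i\gtrsim n\sqrt N$; summing over the $n$ blocks recovers $\Reg{\theta}{\Pol}N=\tfrac1n\sum_i R_i\gtrsim n\sqrt N$. The hypothesis $N\ge\max(\tfrac{n^2}{16},256)$ is exactly what makes this ``learning phase'' fit inside the horizon and the constants resolve to $\tfrac1{2304}$.

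I expect the main obstacle, for the self-contained route, to be making the block-wise information argument rigorous: the slogan ``exploration in block $i$ costs regret in block $i$'' must be turned into a clean change-of-measure estimate (van Trees, or a Bretagnolle--Huber comparison of $v^{(i)}$ with small rotations), with all constants tracked uniformly so that nothing degrades for $n$ as large as $\Theta(\sqrt N)$; the adaptivity of $\Pol$ and the nuisance contribution of the other blocks to the Bernoulli mean both need care. The reduction route, by contrast, offloads essentially all of this onto the cited theorem, leaving only the two elementary checks ($\tfrac1n\A\subseteq\H$ and $\mathrm{E}^{*}=1$) and the observation that DHK's computation — which uses only boundedness of the reward and smoothness of the Bernoulli mean in $\theta$ — is unaffected by presenting the noise as a bounded $\pm1$ variable.
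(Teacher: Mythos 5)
Your reduction route is exactly what the paper does: Theorem~\ref{thm:DHK} is stated as ``adapted from'' Dani--Hayes--Kakade's Theorem~3, and the paper's only justification is that stochastic linear bandits embed into the framework via Example~\ref{ex:linear-bandits}, so their Bayesian lower bound transfers; your explicit checks that $\tfrac1n\A\subseteq\H$, that $\mathrm{E}_{\A,\theta}^{*}=1$, and that fixing the Bernoulli nature policy lower-bounds the $\min_{\nu}$ in the regret definition are precisely the (implicit) content of that citation. The proposal is correct and essentially the same as the paper's treatment; the self-contained information-theoretic sketch is a bonus the paper does not attempt.
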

In order to appreciate the significance of this lower bound, we need
to evaluate the parameters of section \ref{sec:Upper-Bounds}. For
the dimensions, $D_{Z}=2D$, $D_{W}=1$. Since $d=1$, we have $S=1$.
And, obviously $C=2$. The remaining parameter is $R$ for which we
have (see Appendix \ref{sec:r} for the proof):
\begin{prop}
\label{prop:DHK-r}In the setting of Theorem \ref{thm:DHK}, $R=2$.
\end{prop}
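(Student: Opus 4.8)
The plan is to unwind the norm definitions of Section~\ref{sec:algo} in the special case of Example~\ref{ex:linear-bandits} with the arm set $\A$ of Theorem~\ref{thm:DHK}, and then to evaluate $R=\max_{\theta\in\H}\Norm{\theta}$ against the fact that $\H=\A^{\circ}$. Recall that here $\Y=\R^{2}$, $\W=\R$, $\Z=\R^{2n}$, $\mu\bigl(\Vxx{y_{0}}{y_{1}}\bigr)=y_{0}$, $\D=\SC{\Vxx{1}{t}}{t\in[-1,+1]}$, and $F\bigl(x,z,\Vxx{y_{0}}{y_{1}}\bigr)=(\T xz)y_{0}-y_{1}$. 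First I would pin down the ambient norms. The absolute convex hull of $\D$ is the square $[-1,+1]^{2}$, so the norm on $\Y$ is $\ell_{\infty}$. The norm on $\W$ is the ordinary absolute value: for any target $w$ the vector $\Vxx{0}{-w}$ solves $F_{x\theta}y=w$ with $\ell_{\infty}$-norm $|w|$, while taking $\theta=\boldsymbol 0\in\H$ turns the constraint into $-y_{1}=w$, forcing the minimum of $\Norm{y}_{\infty}$ over the preimage to be exactly $|w|$; hence the outer maximum over $(x,\theta)$ in the definition of the norm on $\W$ equals $|w|$.

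Next I would carry the $\XZ$ construction through. Unwinding the definitions, $\XZ$ is identified with $\Z\oplus\W=\R^{2n+1}$ (nothing nontrivial is quotiented out), $\Z$ is embedded as the first factor, and $\F\bigl(x,\Vxx{z}{w},y\bigr)=(\T xz+w)y_{0}-y_{1}$. Thus, for each $x\in\A$, the operator $\F_{x,(z,w)}\colon(\R^{2},\ell_{\infty})\to(\R,|\cdot|)$ is the linear functional with coefficient vector $(\T xz+w,\,-1)$, whose operator norm is the dual $\ell_{1}$ norm $|\T xz+w|+1$. Feeding this into $\Norm{\Vxx zw}=\max_{x\in\A}\Norm{\F_{x,(z,w)}}$ gives $\Norm{\Vxx zw}=1+\max_{x\in\A}\lvert\T xz+w\rvert$, so that for $\theta\in\H$ regarded inside $\XZ$,
\[
\Norm{\theta}=1+\max_{x\in\A}\lvert\T x\theta\rvert .
\]

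It remains to maximize this over $\theta\in\H$. Since $\H=\A^{\circ}=\SC{\theta}{\forall x\in\A:\T x\theta\in[-1,+1]}$ and $\A$ is symmetric about the origin, every $\theta\in\H$ obeys $\max_{x\in\A}\lvert\T x\theta\rvert\le 1$, so $R\le 2$. For the matching lower bound I would exhibit a saturating pair: $\theta:=e_{0}=(1,0,\dots,0)$ lies in $\H$ because $\lvert\T x e_{0}\rvert=\lvert x_{0}\rvert\le\sqrt{x_{0}^{2}+x_{1}^{2}}=1$ for all $x\in\A$, and any $x\in\A$ with $x_{0}=1$ (e.g.\ $x=(1,0,1,0,\dots,1,0)$) gives $\lvert\T x e_{0}\rvert=1$, whence $\Norm{e_{0}}=2$ and $R\ge 2$. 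Combining the two bounds yields $R=2$. I do not expect a genuine obstacle here: the whole argument is careful bookkeeping through the nested norm definitions $\Y\rightsquigarrow\W\rightsquigarrow\XZ$, and the only step needing a line of real care is verifying that the norm on $\W$ is the plain absolute value, which is what makes the ``$+1$'' in the operator norm exact and hence makes the bound $R\le 2$ attained rather than merely approached.
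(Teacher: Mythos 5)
Your proof is correct and follows essentially the same route as the paper's: both compute that the norm on $\Y$ is $\ell_{\infty}$, that the norm on $\W$ is the absolute value (via the hypothesis $\theta=\boldsymbol{0}\in\H$), deduce $\Norm{\theta}=1+\max_{x\in\A}\Abs{\T x\theta}$, and then maximize over $\H=\A^{\circ}$. The only (cosmetic) difference is the saturating witness: the paper first characterizes $\H$ explicitly (Lemma \ref{lem:DHK-h}) and uses $\bar{\theta}=\frac{1}{n}x$, whereas you use $e_{0}$ directly from the polar description, which works just as well.
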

It follows that any upper bound on regret expressed in terms of the
same parameters as in Theorem \ref{thm:main}, must be $\Omega(D_{Z}\sqrt{N})$
(since, all the other parameters are bounded in this family of examples).

\subsection{Scaling with $S$}

Next, we demonstrate a novel lower bound, which establishes the necessity
of the parameter $S$.

\global\long\def\Vxxx#1#2#3{\left[\begin{array}{c}
#1\\
#2\\
#3
\end{array}\right]}%

\begin{thm}
\label{thm:lower-s}Let $D\geq4$, $\alpha\in(0,\frac{1}{4}]$, $\Y:=\R^{D+2}$,
$\Z:=\R^{D+2}$, $\W:=\R$ and

\[
\A:=\SC{x\in\R^{D}}{\Norm x_{2}=1}
\]

\[
\mu(y):=y_{0}+y_{1}
\]

\[
\D:=\SC{y\in\R^{D+2}}{y_{0}+y_{1}=1\text{, }y_{0}\geq0\text{ and }y_{1}^{2}\geq\sum_{i=2}^{D+1}y_{i}^{2}}
\]

\[
\H:=\SC{\Vxxx{1-\alpha}{-\alpha}{2\alpha u}\in\R^{D+2}}{u\in\R^{D}\text{, }\Norm u_{2}=1}
\]

\[
r\left(x,y\right):=\sum_{i<D}x_{i}y_{i+2}
\]

\[
F\left(x,z,y\right):=\T zy
\]

Let $\xi\in\Delta\H$ be the uniform distribution\footnote{Notice that $\H$ is a sphere.}.
Then, for \emph{any} agent policy $\Pol$ and time horizon $N>0$,

\[
\E{\theta\sim\xi}{\Reg{\theta}{\Pol}N}\geq\frac{2}{\pi^{2}}\left(1-\frac{1}{\sqrt{\pi\left(2D-1\right)}}\right)\left(\frac{1-\alpha}{1+2\alpha}\right)^{N}N^{\frac{D-3}{D-1}}-\frac{1}{2}
\]

In particular, for $\alpha\ll1$ we get

\[
\E{\theta\sim\xi}{\Reg{\theta}{\Pol}N}=\Omega\left(e^{-3\alpha N}N^{\frac{D-3}{D-1}}\right)
\]
\end{thm}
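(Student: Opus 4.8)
This is an average‑case (Bayesian) lower bound over the uniform prior $\xi$ on $\H\cong S^{D-1}$, and I would prove it by reducing the regret to a localization problem on the sphere. The first step is to read off the geometry of the instance. Because $F(x,z,y)=\T zy$ does not depend on the arm, $K_{\theta}(x)=\theta^{\bot}$ and the admissible set $K_{\theta}(x)^{+}=\theta^{\bot}\cap\D$ — hence nature's set of admissible per‑round responses — is the \emph{same for every arm}; this is the feature the whole argument exploits. I would then locate the optimal arm $x_{\theta}^{*}$ (for $\theta=\theta(u)$ it is essentially $-u$), show that the value $\ME{\theta}r{x_{\theta}^{*}}$ is essentially constant over $\H$ (so the $N\cdot\ME{\theta}r{x_{\theta}^{*}}$ term is common to all hypotheses), and bound the suboptimality gap $\ME{\theta}r{x_{\theta}^{*}}-\ME{\theta}rx$ of an arm $x$ below by a constant multiple of the angular distance from $x$ to $x_{\theta}^{*}$ once that distance exceeds a fixed threshold. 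This reduces $\Reg{\theta}{\Pol}N$ to at least $\sum_{n}\mathrm{E}\!\left[\mathrm{gap}(x_{n})\right]$ under a suitable nature response.

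The heart of the proof is the construction of nature's strategy. For each policy I want a (history‑ and $\theta^{*}$‑adapted) nature strategy that on every round forces reward below $\ME{\theta^{*}}r{x_{\theta^{*}}^{*}}$ while disclosing at most one real linear functional of $u^{*}$: geometrically, nature always answers with an outcome whose feedback part $\tilde y_{n}$ is a scalar multiple of a single direction $v_{n}$ of its own choosing, and it picks $v_{n}$ inside the span of the directions already used whenever that still lets it damage the current arm. The dichotomy to establish is that if the agent's arm has a large component in the already‑revealed subspace, nature hurts it there and leaks nothing new, whereas if the agent plays (near‑)orthogonal to that subspace in order to extract fresh information, that very arm is necessarily far from $x_{\theta^{*}}^{*}$ and pays a constant‑order gap. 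Quantitatively, the ``thinness'' of $\theta^{\bot}\cap\D$ inside $\D$ near the relevant boundary — which is exactly what the parameter $S$ measures and which degrades as $\alpha\to 0$ — caps how sharply a single disclosure can concentrate the posterior, and tracking this multiplicatively across the $N$ rounds is what produces the factor $\bigl(\tfrac{1-\alpha}{1+2\alpha}\bigr)^{N}$.

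Finally I would combine this with a spherical‑cap volume estimate on $S^{D-1}$ (this is where the constants $\tfrac{2}{\pi^{2}}$ and $1-\tfrac{1}{\sqrt{\pi(2D-1)}}$ enter, via Stirling): for any scale $\epsilon$ there is an $\epsilon$‑separated subset of $\H$ of size $\gtrsim\epsilon^{-(D-1)}$, and since the agent's arm is a deterministic function of the transcript it can be $\epsilon$‑close to $x_{\theta}^{*}$ for at most one such cap, so under $\xi$ it lands $\gtrsim\epsilon$ away with probability bounded below, provided $\epsilon$ is at least the residual posterior radius at that round. Summing the per‑round contributions over $n=0,\dots,N-1$, taking $\mathrm{E}_{\theta\sim\xi}$, and optimizing $\epsilon$ against $N$ (trading the $\epsilon^{-(D-1)}$ packing size against the available $N$ rounds) yields $\mathrm{E}_{\xi}[\Reg{\theta}{\Pol}N]\gtrsim\bigl(\tfrac{1-\alpha}{1+2\alpha}\bigr)^{N}N^{(D-3)/(D-1)}$, with the additive $-\tfrac12$ absorbing the rounding and lower‑order losses.

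The main obstacle I anticipate is the middle step: nature must keep the agent's posterior on $S^{D-1}$ spread out and simultaneously keep the realized reward below the game value $\ME{\theta^{*}}r{x_{\theta^{*}}^{*}}$, and these pull against each other — the maximally uninformative response (whose feedback part is the single point lying in every credal set) hands the agent reward $0$, which exceeds the value and would make the regret \emph{negative}. Showing that against an arbitrary adaptive policy nature can always find a response that is at once damaging and ``one‑coordinate'' in its leakage, and pinning down precisely how $\alpha$ (equivalently $S$) governs the per‑round sharpening, is the delicate part; the cap‑packing computation at the end is routine by comparison.
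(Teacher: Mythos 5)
You have the right skeleton (an adaptive adversary that damages every arm while leaking almost nothing, plus a spherical-cap counting argument), and you correctly read off the key geometric facts: $K_{\theta}(x)=\theta^{\bot}$ is arm-independent, the optimal arm is $-u$, and the value is (exactly) $-\tfrac12$. But the two quantitative mechanisms you propose are not the ones that make the bound work, and one of them gives the wrong exponent. First, the factor $\bigl(\tfrac{1-\alpha}{1+2\alpha}\bigr)^{N}$ has nothing to do with ``how sharply a single disclosure can concentrate the posterior.'' It is a survival probability of nature's own randomization: the damaging outcome $y=(0,1,-(\tfrac12+\delta)x)$ lies strictly on one side of the hyperplane $\theta^{*\bot}$, so to stay inside the credal set nature must mix it with the apex $y_{\bot}=(1,0,\boldsymbol{0})$ (which lies on the other side), and the weight on the damaging outcome is only $\geq\tfrac{1-\alpha}{1+2\alpha}$ per round. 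The exponential factor is the probability that the rare branch $y_{\bot}$ never fires in $N$ rounds. This construction is also exactly what resolves the tension you flag at the end but leave open (damaging vs.\ uninformative): conditional on the damaging branch, the outcome is a deterministic function of $x_{n}$ alone, so the \emph{only} information leaked is the one-bit event that $\T{(u^{*})}x_{n}\geq-\tfrac{1}{1+2\delta}$. No ``span of revealed directions'' bookkeeping is needed, and your proposed rule (nature answering along a direction of its own choosing, confined to previously used directions) is not obviously implementable inside the credal set while still losing reward.

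Second, your packing arithmetic does not produce $N^{(D-3)/(D-1)}$. If the per-round gap were linear in the angular distance and each round ruled out an $\epsilon$-cap, then $N\epsilon$ subject to $N\lesssim\epsilon^{-(D-1)}$ gives $N^{(D-2)/(D-1)}$, which is not what the theorem states and is not achievable here. The correct accounting is that the per-round regret is a fixed small $\delta$ (namely the reward deficit $-(\tfrac12+\delta)$ versus $-\tfrac12$), while the region of arms against which nature \emph{cannot} play the damaging response is the cap $\T{(u^{*})}x<-\tfrac{1}{1+2\delta}$, whose angular radius is $\Theta(\sqrt{\delta})$, not $\Theta(\delta)$ — the gap is quadratic in the angle near the optimum. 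So each round eliminates a cap of volume $\sim\delta^{(D-1)/2}$, exhaustive search needs $\delta^{-(D-1)/2}$ rounds, and optimizing $\delta\sim N^{-2/(D-1)}$ yields $N\delta\sim N^{(D-3)/(D-1)}$. Without the square-root relation between the regret scale and the escape-cap radius, the proof does not close.
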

The proof is given in Appendix \ref{sec:lower-s}. The parameters
in this example are $D_{Z}=D+2$, $D_{W}=1$, $C=2$ and (see Appendix
\ref{sec:lower-s} for the proof):
\begin{prop}
\label{prop:lower-s-params}In the setting of Theorem \ref{thm:lower-s}:

\[
R=1
\]

And, for any fixed $D\geq4$,

\[
\liminf_{\alpha\rightarrow0}\frac{S(\alpha)}{\alpha}>0
\]
\end{prop}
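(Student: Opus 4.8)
\emph{The value $R=1$.} The plan is to unwind the layered norm definitions in this degenerate instance. Because $F(x,z,y)=z^{\top}y$ is independent of $x$ and $\W=\R$ is one-dimensional, $\Nul$ is the line through $(-(e_{0}+e_{1}),\,1)\in\Z\oplus\W$, so $\XZ$ is canonically isomorphic to $\Y^{\star}\cong\R^{D+2}$; under this isomorphism $\bar F_{x\theta}$ is the functional $y\mapsto\theta^{\top}y$ and the embedding $\Z\hookrightarrow\XZ$ is the identity. Let $\|\cdot\|_{\ast}$ denote the norm on $\Y^{\star}$ dual to the Minkowski norm of the absolute convex hull of $\D$. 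The definition of the $\W$-norm then gives $\|1\|_{\W}=1/\min_{\theta'\in\H}\|\theta'\|_{\ast}$, hence $\|\theta\|_{\XZ}=\|\theta\|_{\ast}/\min_{\theta'\in\H}\|\theta'\|_{\ast}$ for every $\theta\in\H$, and so $R=\max_{\theta\in\H}\|\theta\|_{\ast}\,/\,\min_{\theta'\in\H}\|\theta'\|_{\ast}$. The whole construction ($\D$, $\H$ and $F$) is invariant under $\mathrm{SO}(D)$ acting by rotating the coordinates $y_{2},\dots,y_{D+1}$ (and $z_{2},\dots,z_{D+1}$, and the $u$ in $\theta_{u}$), and this action is transitive on $\H$; therefore $\theta\mapsto\|\theta\|_{\ast}$ is constant on $\H$, numerator equals denominator, and $R=1$. (Explicitly, one computes $\sup_{y\in\D}|\theta_{u}^{\top}y|=1-\alpha$ for $\alpha\le\tfrac14$.)

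\emph{The asymptotics of $S(\alpha)$.} Since $F_{x\theta}y=\theta^{\top}y$ we have $K_{\theta}(x)=\theta^{\perp}$, independent of $x$. Parametrise $\mu^{-1}(1)$ by $(t,v)\in\R\times\R^{D}$ via $y=(1-t,\,t,\,v)$: then $\D$ is the truncated Lorentz cone $\{(t,v):\|v\|_{2}\le t\le1\}$, with apex $(0,0)$ and flat circular base $\{t=1,\ \|v\|_{2}\le1\}$, while $K_{\theta_{u}}(x)^{\flat}=\{(t,v):t=(1-\alpha)+2\alpha\,u^{\top}v\}$ is a hyperplane of $\mu^{-1}(1)$ tilted away from the base's affine span $\{t=1\}$ by an angle of order $\alpha$. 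By the same symmetry $\sin(K_{\theta}(x)^{\flat},\D)$ is independent of $x$ and $u$, so $S(\alpha)=\sin(K^{\flat},\D)$ with $K^{\flat}=\{t=(1-\alpha)+2\alpha v_{1}\}$. To bound it below, apply the left inequality of Proposition \ref{prop:sin-sh}: $S(\alpha)\ge\inf_{q}\inf_{\pi}\sin(K^{\flat},\pi)$, over boundary points $q$ of $K^{\flat}\cap\D$ (relative to $K^{\flat}$) and supporting hyperplanes $\pi$ of $\D$ through $q$. Compute Euclidean sines first. The points $q$ are smooth lateral-surface points, interior base-disk points, or rim points (the apex lies off $K^{\flat}$). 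At a lateral point the unique supporting hyperplane has Euclidean unit normal $(-1,\,v/\|v\|)$, which makes an angle with $K^{\flat}$'s normal $(1,\,-2\alpha e_{1})$ whose cosine has modulus at most $(1+2\alpha)/\sqrt{2(1+4\alpha^{2})}<1$, so $\sin(K^{\flat},\pi)=\Theta(1)$. At an interior base point $\pi=\{t=1\}$, and $\sin(K^{\flat},\{t=1\})=2\alpha/\sqrt{1+4\alpha^{2}}$. At a rim point $q\in K^{\flat}$, which forces $v_{1}=\tfrac12$, the supporting hyperplanes are the $\pi_{\lambda}$ with normal $(2\lambda-1,\,(1-\lambda)v)$, $\lambda\in[0,1]$, and a short quadratic estimate using $v_{1}=\tfrac12$ and $\|v\|=1$ gives $\sin^{2}(K^{\flat},\pi_{\lambda})\ge c\alpha^{2}$ uniformly in $\lambda$ and $q$ (the extreme case being $\lambda=1$). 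Thus the infimum is governed by the base hyperplane, so $\sin_{\mathrm{Eucl}}(K^{\flat},\D)\ge c\alpha$ for small $\alpha$; passing from the Euclidean metric on $\ker\mu$ to the norm actually used there --- whose unit ball is the fixed, $\alpha$-independent body $\tfrac12(\D-\D)$ --- costs only a $D$-dependent constant factor, by equivalence of norms in fixed dimension. Hence $S(\alpha)\ge c_{D}\alpha$ for all sufficiently small $\alpha>0$ and fixed $D$, which gives $\liminf_{\alpha\to0}S(\alpha)/\alpha\ge c_{D}>0$.

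\emph{Main obstacle.} The delicate step is the rim case: the supporting hyperplanes of $\D$ at a rim boundary point of $K^{\flat}\cap\D$ form a whole one-parameter fan interpolating between $\{t=1\}$ and the lateral tangent hyperplane, and one must rule out that some member of this fan is parallel --- or merely $o(\alpha)$-close to parallel --- to $K^{\flat}$, which would make its sine vanish or be too small. This is precisely where the coordinate value $v_{1}=\tfrac12$, forced by $q\in K^{\flat}\cap\{t=1\}$, enters: it pins down the relevant component of the normal so that the quadratic lower bound survives with a constant independent of $\alpha$. (The matching upper bound $S(\alpha)=O(\alpha)$, if wanted, follows from the right-hand inequality of Proposition \ref{prop:sin-sh} at an interior base point, so in fact $S(\alpha)=\Theta_{D}(\alpha)$.)
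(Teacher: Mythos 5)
Your proposal is correct and follows essentially the same route as the paper: for $R$ you unwind the norm definitions to a condition-number expression and kill it by the $\mathrm{SO}(D)$ symmetry acting transitively on $\H$ (the paper does this via Proposition \ref{prop:hyperplane-r} with $A_{x}=I$), and for $S$ you pass to Euclidean coordinates where $\D$ is a right circular cone, invoke the lower bound of Proposition \ref{prop:sin-sh}, and check lateral, interior-base and rim boundary points, with the rim fan handled by a uniform quadratic estimate (the paper parametrizes the same fan by an angle $\beta$ and finds the minimum at the base hyperplane, yielding the same $\Theta(\alpha)$ value $2\alpha/\sqrt{1+4\alpha^{2}}$). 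The extra step you make explicit --- that switching between the Euclidean metric and the $\alpha$-independent norm induced by $\D$ only costs a $D$-dependent constant --- is also present in the paper's proof.
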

Consider the asymptotics in Theorem \ref{thm:lower-s} when $\alpha=\frac{1}{N}$.
It follows that, in any upper bound on regret expressed in terms of
the same parameters as in Theorem \ref{thm:main}, if it scales as
$f(S^{-1})\sqrt{N}$ with $S$ and $N$, for some function $f$, then
$f(t)=\omega(t^{\beta})$ for any $\beta<\frac{1}{2}$. 

Another moral from Theorem \ref{thm:lower-s} is that we cannot easily
drop Assumption \ref{ass:affine} (i.e. replace the ``affine'' sets
$\U^{+}$ with arbitrary convex subsets of $\D$ in our specification
of the credal sets). Indeed, in the limit $\alpha\rightarrow0$, the
setting becomes a family of \emph{halfspaces} (inside the ball $\SC{y\in\D}{y_{1}=1}$,
which is the base of the cone $\D$). The theorem shows that, even
though this family is low-dimensional in a very natural sense, it
is impossible to get a power law regret bound for it with an exponent
that doesn't depend on $D$.

\subsection{Scaling with $R$}

Finally, we demonstrate another lower bound that establishes the necessity
of the parameter $R$.

\global\long\def\Rxx#1#2{\left[\begin{array}{cc}
#1 & #2\end{array}\right]}%

\global\long\def\Mxx#1#2#3#4{\left[\begin{array}{cc}
#1 & #2\\
#3 & #4
\end{array}\right]}%

\begin{thm}
\label{thm:lower-r}Let $\lambda\in(0,\infty)$, $\alpha\in(\frac{3}{8}\pi,\frac{1}{2}\pi)$,
$\Y:=\R^{3}$, $\Z:=\R^{2}$, $\W:=\R$ and

\[
\A:=\SC{x\in\R^{2}}{\Norm x_{2}=1\text{, }x_{1}>0\text{ and }\Abs{x_{0}}\leq\cos\alpha}
\]

\[
\mu(y):=y_{2}
\]

\[
\D:=\SC{y\in\R^{3}}{y_{2}=1\text{ and }y_{0}^{2}+y_{1}^{2}\leq1}
\]

\[
\H:=\SC{\theta\in\R^{2}}{\Norm{\theta}_{2}=1\text{, }\text{\ensuremath{\theta_{0}>0}\text{ and} }\Abs{\theta_{1}}\leq\cos\alpha}
\]

\[
r(x,y):=-y_{0}
\]

\[
F(x,z,y):=\T z\left(I_{2}+\lambda x\T x\right)\Vxx{y_{0}}{y_{1}}
\]

Let $\xi\in\Delta\H$ be the uniform distribution. Then, for \emph{any
}agent policy $\Pol$ and time horizon $N>0$,

\begin{align*}
\E{\theta\sim\xi}{\Reg{\theta}{\Pol}N} & \geq & \frac{1}{2}\left(\cos\frac{3}{8}\pi-\cos\alpha\right)\times\\
 &  & \min\left(\left(\frac{1}{2}-\frac{\alpha}{\pi}\right)\left(\lambda+1\right)\tan\frac{1}{5\sqrt{N}}-1,N\right)
\end{align*}

\[
\]

In particular, for $\lambda\gg\sqrt{N}$,

\[
\E{\theta\sim\xi}{\Reg{\theta}{\Pol}N}=\Omega\left(\min\left(\frac{\lambda}{\sqrt{N}},N\right)\right)
\]
\end{thm}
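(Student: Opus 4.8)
The plan is to reduce the problem to a one-dimensional estimation task analogous to the lower bound arguments in Theorem \ref{thm:DHK} and Theorem \ref{thm:lower-s}, but now exploiting the fact that the operator $I_2 + \lambda x\T x$ severely \emph{contracts} the component of $\Vxx{y_0}{y_1}$ along $x$. First I would understand the geometry: since $\H$ and $\A$ are arcs on the unit circle parametrized by an angle near $\frac{\pi}{2}$, write $\theta = \Vxx{\cos\beta}{\sin\beta}$ with $|\sin\beta|\le\cos\alpha$, i.e. $\beta$ close to $0$. The credal set condition $F(x,z,y)=\T z(I_2+\lambda x\T x)\Vxx{y_0}{y_1}=0$ means $K_\theta(x)^+$ is the set of $y\in\D$ whose projection $\Vxx{y_0}{y_1}$ is orthogonal to $(I_2+\lambda x\T x)\theta$. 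The reward is $r(x,y)=-y_0$, so the optimal arm tries to force $y_0$ as large as possible (as negative reward is being minimized by nature, the agent wants $\min_y (-y_0) = -\max_y y_0$ to be large, i.e. wants $y_0$ forced small... one must be careful with signs here). I would compute $x^*_\theta$ explicitly and check that the optimal value $\ME\theta r{x^*_\theta}$ and the suboptimality of a wrong arm scale like $\cos\frac{3}{8}\pi - \cos\alpha$ times a geometric factor.

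The core quantitative step is the information-theoretic bottleneck. When the agent pulls arm $x$ repeatedly, the outcomes $y$ it observes have $\E{y\sim\zeta}y \in K_\theta(x)^+$, and the only information about $\theta$ available is the direction of $(I_2+\lambda x\T x)\theta$ within the observed averages. But the map $\theta \mapsto (I_2+\lambda x\T x)\theta$, followed by the normalization forced by $\D$ living in $\mu^{-1}(1)$, compresses the angular uncertainty in $\theta$ by a factor of roughly $\lambda$ along the $x$-direction: two hypotheses $\theta,\theta'$ differing by an angle $\epsilon$ along $x$ produce constraint hyperplanes differing by only $\approx \epsilon/\lambda$, hence average outcomes that are statistically $\approx \epsilon/\lambda$-close in total variation over a single round, so $\Omega(\lambda^2/\epsilon^2)$ samples are needed to distinguish them. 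Choosing $\epsilon \sim 1/\sqrt{N}$ (so distinguishing requires $\sim \lambda^2 N$ rounds, which for $\lambda \gg \sqrt N$ exceeds $N$), a constant fraction of the prior mass $\xi$ on $\H$ consists of hypotheses the agent cannot tell apart from a fixed reference within $N$ rounds; I would make this rigorous with a Pinsker/Fano or a direct two-point Le Cam argument averaged over the uniform prior, exactly as the factor $\tan\frac{1}{5\sqrt N}$ in the statement suggests (the $\tan$ encoding the angular width $\frac{1}{5\sqrt N}$ of an indistinguishable arc). On that arc, any fixed arm incurs per-round regret bounded below by $\frac12(\cos\frac{3}{8}\pi-\cos\alpha)$ times the contraction-amplified angular gap $(\lambda+1)\tan\frac{1}{5\sqrt N}$, capped at $N$ total rounds — giving the stated bound.

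The steps in order: (i) parametrize $\A,\H$ by angles and compute $x^*_\theta$, $\ME\theta r{\cdot}$, and the regret incurred by playing a wrong arm, getting the prefactor $\frac12(\cos\frac38\pi-\cos\alpha)$; (ii) quantify how $\theta\mapsto F_{x\theta}$, restricted to the relevant hyperplane-in-$\D$ data, contracts angular separation by $\Theta(\lambda)$; (iii) for the fixed arm the agent commits to (or, since the agent is adaptive, a covering/averaging argument over the $O(1)$-size arc), bound the per-round distinguishing information by $O(1/(\lambda^2 N))$ and conclude that an arc of angular width $\Theta(1/\sqrt N)$ stays indistinguishable throughout; (iv) integrate the per-round regret over the prior-mass of that arc and over $\min(\cdot,N)$ rounds, and subtract the $-\frac12$ slack coming from the probability of correct identification. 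The main obstacle I expect is step (iii): the agent is adaptive and may switch arms, so I cannot simply fix one arm — I would need either a potential/eluder-style argument showing that the total ``angular information'' gathered about $\theta$ across \emph{all} arms in $\A$ is still throttled by the $\lambda$-contraction (every arm $x$ contracts the $x$-direction, and the arms span only a bounded arc, so no combination of arms recovers the lost resolution in fewer than $\Omega(\lambda^2 N)$ rounds), or a careful reduction to a single effective coordinate. Making this robust-to-adaptivity bottleneck precise, rather than the routine trigonometric estimates, is where the real work lies.
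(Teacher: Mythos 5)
Your high-level plan --- angular parametrization of $\A$ and $\H$, a $\Theta(\lambda)$ degradation of angular resolution about $\theta^{*}$, a KL/Pinsker indistinguishability bound, and a covering argument to handle adaptivity --- matches the strategy of the actual proof. But two gaps remain, one of which is the heart of the matter. First, you never construct the adversarial \emph{nature policy}, and in this framework that is not optional: the regret is an infimum over $\nu\in\AP_{H}$, so a lower bound must exhibit a specific $\nu$ compatible with $\theta^{*}$ that simultaneously (a) inflicts constant per-round regret and (b) leaks almost no information. The proof does this by supporting the outcome on two fixed points $y_{\pm}=(\cos\psi,\pm\sin\psi,1)$ with $\psi=\frac{3}{8}\pi$; since $r(x,y_{+})=r(x,y_{-})=-\cos\psi$, the reward itself reveals nothing, and the only signal is the Bernoulli parameter determined by where the line $K_{\theta^{*}}(x)^{\flat}$ crosses the segment $y_{-}y_{+}$. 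The quantitative content of the ``contraction'' is then that this parameter, for any arm with $\Abs{\T x\theta^{*}}>\delta$, is within $O\left(\frac{1}{(\lambda+1)\delta}\right)$ in logit of a quantity depending only on $x$ (because $(I_{2}+\lambda x\T x)\theta^{*}=\sqrt{1-(\T x\theta^{*})^{2}}\,\theta_{\bot}+(\lambda+1)(\T x\theta^{*})x$, so the constraint normal collapses onto $\pm x$), which feeds into a Bernoulli KL bound and the chain rule. Your phrasing about outcomes being ``statistically $\epsilon/\lambda$-close in total variation'' presumes the outcome law is a function of $\theta$, which it is not; the adversary must be chosen to make it so.

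Second, the adaptivity step you flag as ``where the real work lies'' is indeed left undone, though its resolution is more elementary than the eluder-style machinery you contemplate: under the $\theta^{*}$-independent reference measure the transcript tells the agent only that $\Abs{\T x\theta^{*}}>\delta$ held for every arm played so far, so each round eliminates an arc of $\H$ of angular width at most $2\arcsin\delta$ out of a total width $2(\frac{\pi}{2}-\alpha)$, and the expected number of rounds before the agent can reach the low-regret region is at least $\min\left(\frac{\frac{\pi}{2}-\alpha}{2\arcsin\delta}-1,N\right)$. Note also that the factor $(\lambda+1)\tan\frac{1}{5\sqrt{N}}$ in the statement is exactly this \emph{round count} (with $\delta^{-1}=(\lambda+1)\tan\frac{0.2}{\sqrt{N}}$), not a per-round regret amplification as your step (iv) suggests; the per-round regret while the search lasts is just the constant $\cos\frac{3}{8}\pi-\cos\alpha$.
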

The proof is given in Appendix \ref{sec:lower-r}. The parameters
in this example are $D_{Z}=2$, $D_{W}=1$, $C=2$ and (see Appendix
\ref{sec:lower-r} for proof):
\begin{prop}
\label{prop:lower-r-params}In the setting of Theorem \ref{thm:lower-r}:

\[
R\leq\lambda+1
\]

\[
S=1
\]
\end{prop}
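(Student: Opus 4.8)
The plan is to unwind the three norms involved — the norm on $\Y$, the norm on $\W$, and the operator norm defining $\Norm{\cdot}$ on $\XZ\supseteq\Z$ — and reduce everything to elementary planar linear algebra. First I would identify the norm on $\Y=\R^{3}$, whose unit ball is by definition the absolute convex hull of $\D$. Since $\D$ is the unit disk at height $y_{2}=1$ and $-\D$ the congruent disk at height $y_{2}=-1$, their convex hull is the solid cylinder $\{y:y_{0}^{2}+y_{1}^{2}\le 1,\ |y_{2}|\le 1\}$, so $\Norm{y}=\max(\sqrt{y_{0}^{2}+y_{1}^{2}},\,|y_{2}|)$. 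In particular its restriction to the tangent space $\ker\mu=\{y:y_{2}=0\}$ of $\mu^{-1}(1)$ is simply the Euclidean norm of $(y_{0},y_{1})$; this is what will turn the sine computation into a plane-geometry exercise.

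For $R$: set $v(x,\theta):=(I_{2}+\lambda x\T x)\theta\in\R^{2}$; using symmetry of $I_{2}+\lambda x\T x$ one gets $F_{x\theta}y=\langle v(x,\theta),(y_{0},y_{1})\rangle$, a functional that ignores $y_{2}$. I would compute the norm on $\W=\R$ from its minimax definition: for fixed $(x,\theta)$ a $\Norm{\cdot}$-minimal preimage of $w$ under $F_{x\theta}$ may be taken with $y_{2}=0$, giving $|w|/\Norm{v(x,\theta)}_{2}$, and maximizing over $\A\times\H$ yields $\Norm{w}_{\W}=|w|/m$ with $m:=\min_{x\in\A,\theta\in\H}\Norm{v(x,\theta)}_{2}$. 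Since $\Z\cap\Nul=\{0\}$ and $\F$ restricts to $F$ on $\Z$, the operator norm of $\F_{x\theta}\colon\Y\to\W$ equals $\Norm{v(x,\theta)}_{2}/m$ (the value being attained at a suitable unit vector supported on the first two coordinates), so $R=\bigl(\max_{x,\theta}\Norm{v(x,\theta)}_{2}\bigr)/m$. The routine identity $\Norm{v(x,\theta)}_{2}^{2}=1+\lambda(\lambda+2)(\T x\theta)^{2}$ (using $\Norm{x}_{2}=\Norm{\theta}_{2}=1$) together with $(\T x\theta)^{2}\in[0,1]$ gives $\Norm{v(x,\theta)}_{2}^{2}\in[1,(\lambda+1)^{2}]$; hence $m\ge 1$, $\max_{x,\theta}\Norm{v(x,\theta)}_{2}\le\lambda+1$, and therefore $R\le\lambda+1$.

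For $S$: since $F_{x\theta}$ is the functional $y\mapsto v_{0}y_{0}+v_{1}y_{1}$ with $v=v(x,\theta)\ne 0$, its kernel $K_{\theta}(x)$ is a plane in $\R^{3}$, and $K_{\theta}(x)^{\flat}=K_{\theta}(x)\cap\mu^{-1}(1)$ is the line in the slice $\{y_{2}=1\}$ through the centre $(0,0,1)$ of $\D$ in the direction orthogonal (in the Euclidean metric above) to $v$ — that is, a diameter line of the unit disk $\D$. I would then note that for a point $p$ of this line lying outside $\D$, the nearest point of $\D$ to $p$ is the Euclidean projection of $p$ onto the disk, namely the endpoint of the diameter on $p$'s side, which is simultaneously the nearest point of $K_{\theta}(x)^{\flat}\cap\D$ to $p$; the two distances coincide, so the ratio in the definition of $\sin(\cdot,\D)$ equals $1$ for every admissible $p$, giving $\sin(K_{\theta}(x)^{\flat},\D)=1$. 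As this holds for all $x\in\A$ and $\theta\in\H$, we conclude $S=1$.

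None of these steps is deep; the two places demanding care are (a) passing correctly through the quotient $\XZ=(\Z\oplus\W)/\Nul$ so that the operator-norm formula genuinely collapses to $\Norm{v(x,\theta)}_{2}/m$, and (b) the geometric claim for $S$, whose crux is the coincidence of the nearest point of $\D$ with the nearest point of the diameter segment — that coincidence, special to lines through the centre of a ball, is exactly why the sine is $1$ rather than something strictly smaller.
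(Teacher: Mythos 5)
Your proposal is correct and follows essentially the same route as the paper: identify the $\Y$-norm as $\max(|y_2|,\sqrt{y_0^2+y_1^2})$, compute $\Norm{1}_{\W}$ via the minimal-preimage definition to reduce $R$ to $\max\Norm{(I_2+\lambda x\T x)\theta}_2/\min\Norm{(I_2+\lambda x\T x)\theta}_2\leq\lambda+1$, and observe that each $K_\theta(x)^\flat$ is a diameter of the disk. The only cosmetic differences are that you bound the denominator by $m\geq1$ via the explicit identity $\Norm{v}_2^2=1+\lambda(\lambda+2)(\T x\theta)^2$ rather than exhibiting $x\perp\theta$, and you inline the line-through-the-center case of Proposition \ref{prop:ball-s} instead of citing it.
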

Consider the asymptotics in Theorem \ref{thm:lower-r} when $\lambda=N^{\frac{3}{2}}$.
In this case, the expected regret has to be $\Omega(N)$. It follows
that, in any upper bound on regret expressed in terms of the same
parameters as in Theorem \ref{thm:main}, if it scales as $g(R)\sqrt{N}$
with $R$ and $N$, for some function $g$, then $g(N^{\frac{3}{2}})=\Omega(\sqrt{N}$).
That is, $g(R)=\Omega(R^{\frac{1}{3}})$.

\section{\label{sec:Special-Cases}Special Cases}

\subsection{Simple Bounds on $R$}

\global\long\def\AO#1{A_{#1}}%

Let $\W:=\R$. In this case, $K_{\theta}(x)^{\flat}$ is a hyperplane
inside $\mu^{-1}(1)$ for any $\theta\in\H$ and $x\in\A$. For any
$x\in\A$, we can define the linear operator $\AO x:\Z\rightarrow\Y^{\star}$
given by

\[
\left(\AO xz\right)(y):=F(x,z,y)
\]

\global\long\def\NZ#1{\lVert#1\rVert_{0}}%

Assume that for all $x\in\A$, $\AO x$ is invertible (in particular,
$\dim\Y=D_{Z}$). Let $\NZ{\cdot}$ be any norm on $\Z$ (which can
be entirely different from the norm $\Norm{\cdot}$ which we defined
in section \ref{sec:algo}). Together with the usual norm on $\Y$,
this allows defining the operator norms $\NZ{\AO x}$ and $\NZ{\AO x^{-1}}$.
We have,
\begin{prop}
\label{prop:hyperplane-r}In the setting above,

\[
R(\H,F)\leq\frac{\max_{\theta\in\H}\Norm{\theta}_{0}}{\min_{\theta\in\H}\Norm{\theta}_{0}}\left(\max_{x\in\A}\NZ{\AO x}\right)\left(\max_{a\in\A}\NZ{\AO x^{-1}}\right)
\]
\end{prop}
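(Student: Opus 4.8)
The plan is to unwind the definition of $R$ and relate the norm $\Norm{\cdot}$ used in Section~\ref{sec:algo} to the auxiliary norm $\NZ{\cdot}$ on $\Z$. Recall that by definition
\[
R(\H,F) = \max_{\theta\in\H}\Norm{\theta}, \qquad \Norm{\theta} = \max_{x\in\A}\Norm{\bar F_{x\theta}}.
\]
Since $\W=\R$, the operator $\bar F_{x\theta}:\Y\to\R$ is a linear functional, and in the setting considered $\bar F_{x\theta}$ is identified with $A_x\theta \in \Y^\star$ (modulo the contribution of the $\mu(y)w$ term, which I would address by noting that the bound is stated in terms of $\H$ and $F$ — i.e. before extending $\Z$ to $\bar\Z$ — so that on $\H$ we have $\bar F_{x\theta} = A_x\theta$). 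Thus $\Norm{\bar F_{x\theta}}$ is just the dual norm $\Norm{A_x\theta}$ of the functional $A_x\theta$ on $\Y$, and $R = \max_{\theta\in\H}\max_{x\in\A}\Norm{A_x\theta}$.

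Next I would bound $\Norm{A_x\theta}$ for each fixed $x$ and $\theta$. The key observation is that $\Norm{A_x\theta} = \Norm{A_x\theta}_{\Y^\star}$, and since $A_x$ is invertible, we can write, for any reference hypothesis $\theta_0\in\H$,
\[
\Norm{A_x\theta} \le \NZ{A_x} \cdot \NZ{\theta},
\]
where $\NZ{A_x}$ is the operator norm of $A_x:(\Z,\NZ{\cdot})\to(\Y^\star,\Norm{\cdot})$. Taking the max over $\theta\in\H$ gives the factor $\max_{\theta\in\H}\NZ{\theta}$ and the factor $\max_{x\in\A}\NZ{A_x}$. The remaining factor $\frac{1}{\min_{\theta\in\H}\NZ{\theta}}$ together with the $\NZ{A_x^{-1}}$ factor must come from exploiting the scaling redundancy discussed just before the statement: since $K_\theta = K_{\chi\theta}$ for any nonzero scalar $\chi$, the bandit defined by $\H$ is equivalent to one defined by any rescaled version of $\H$, and $R$ is the infimum over such rescalings. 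I would choose the rescaling that normalizes each $\theta$ (or rather, I would compare against a rescaling that makes the smallest hypothesis norm equal to the largest, which is where the ratio $\max\NZ{\theta}/\min\NZ{\theta}$ enters) — or more carefully, use that $\Norm{A_x\theta} \le \NZ{A_x^{-1}}^{-1}\cdot(\text{something})$ is false in general, so instead I think the cleanest route is: $\Norm{A_x\theta}$ can be bounded by noting $A_x\theta = A_x A_{x'}^{-1}(A_{x'}\theta)$ for another arm $x'$, giving a ratio-type bound $\Norm{A_x\theta}/\Norm{A_{x'}\theta}\le \NZ{A_x}\NZ{A_{x'}^{-1}}$, which is scale-invariant in $\theta$; combined with the freedom to rescale each $\theta$ so that $\max_x\Norm{A_x\theta}$ is controlled, and the fact that the ratio of hypothesis norms is $\max\NZ{\theta}/\min\NZ{\theta}$, one assembles the stated product.

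The main obstacle I anticipate is making the bookkeeping around the two redundancies precise: the scaling redundancy in $\H$ (rescaling individual hypotheses) and the redundancy in $F$ (multiplying by a function of $x$). One has to verify that the infimum defining $R$ (over all valid rescalings) is bounded by the claimed product, which requires exhibiting a specific rescaling and bounding $\max_{\theta,x}\Norm{A_x\theta}$ for it. Concretely, I would fix an arm $x^\ast$ achieving (near-)minimal $\NZ{A_{x^\ast}^{-1}}$, rescale each $\theta\in\H$ so that $\Norm{A_{x^\ast}\theta}$ is normalized to a common value (say $1$), and then for that rescaling bound $\Norm{A_x\theta} = \Norm{A_x A_{x^\ast}^{-1} A_{x^\ast}\theta} \le \NZ{A_x}\,\NZ{A_{x^\ast}^{-1}}\,\Norm{A_{x^\ast}\theta}$; the factor $\max_{\theta\in\H}\NZ{\theta}/\min_{\theta\in\H}\NZ{\theta}$ then accounts for the distortion between this rescaled $\H$ and the original normalization implicit in measuring $\Norm{\theta}$ via $\bar F$. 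The rest is routine submultiplicativity of operator norms and taking maxima; no deep estimate is needed, only careful tracking of which norm lives on which space.
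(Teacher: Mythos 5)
Your proposal contains a genuine gap, and it is located exactly where you yourself flagged uncertainty: the source of the factors $\bigl(\max_{x\in\A}\NZ{\AO x^{-1}}\bigr)$ and $1/\min_{\theta\in\H}\Norm{\theta}_{0}$. You attribute them to the scaling redundancy in $\H$ and $F$, treating $R$ as (implicitly) an infimum over equivalent rescaled presentations. But $R(\H,F)$ is defined as $\max_{\theta\in\H}\Norm{\theta}$ for the \emph{given} pair $(\H,F)$ — no rescaling is available inside this proposition. Exploiting the rescaling freedom is precisely the content of the \emph{next} statement, Proposition \ref{prop:hyperplane-rn}, which improves on this one; building a rescaling into the proof of Proposition \ref{prop:hyperplane-r} would prove the wrong thing.

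The second, related error is your claim that $\Norm{\bar F_{x\theta}}$ ``is just the dual norm $\Norm{\AO x\theta}$ of the functional $\AO x\theta$ on $\Y$.'' This silently assumes that the norm on $\W\cong\R$ is the absolute value. It is not: the norm on $\W$ is defined by $\Norm{w}=\max_{x,\theta}\min_{y:F_{x\theta}y=w}\Norm{y}$, and by Lemma \ref{lem:min-dual} this gives
\[
\Norm{1}_{\W}=\frac{1}{\min_{x\in\A,\,\theta\in\H}\Norm{\AO x\theta}_{\Y^{\star}}},
\]
so that $\Norm{F_{x\theta}}=\Norm{\AO x\theta}_{\Y^{\star}}\cdot\Norm{1}_{\W}$ and hence
\[
R(\H,F)=\frac{\max_{x,\theta}\Norm{\AO x\theta}_{\Y^{\star}}}{\min_{x,\theta}\Norm{\AO x\theta}_{\Y^{\star}}}.
\]
This ratio is where all four factors in the bound come from: the numerator is at most $\bigl(\max_{x}\NZ{\AO x}\bigr)\bigl(\max_{\theta}\Norm{\theta}_{0}\bigr)$ by submultiplicativity, and the denominator is at least $\bigl(\min_{x}\NZ{\AO x^{-1}}^{-1}\bigr)\bigl(\min_{\theta}\Norm{\theta}_{0}\bigr)$ by the elementary inequality $\Norm{\AO x\theta}\geq\Norm{\theta}_{0}/\NZ{\AO x^{-1}}$ (Lemma \ref{lem:inv-op-norm}). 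No composition $\AO x\AO{x'}^{-1}$ across two arms and no choice of reference arm $x^{\ast}$ is needed. Your plan as written would not assemble into the stated inequality without first carrying out this computation of $\Norm{1}_{\W}$, which is the one idea the proof actually requires.
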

Here, we made the dependence of $R$ on $\H$ and $F$ explicit in
the notation.

For any continuous function $\chi:\H\rightarrow\R\setminus0$, we
define

\[
\H^{\chi}:=\SC{\chi(\theta)\theta}{\theta\in\H}
\]

Also, for any continuous function $\lambda:\A\rightarrow\R\setminus0$,
we define $F^{\lambda}:\A\times\Z\times\Y\rightarrow\W$ by

\[
F^{\lambda}(x,z,y):=\lambda(x)F(x,z,y)
\]

As we remarked in section \ref{sec:Upper-Bounds}, the bandit defined
by $\H^{\chi}$ and $F^{\lambda}$ is equivalent to that defined by
$\H$ and $F$. However, it might have different $R$. We have
\begin{prop}
\label{prop:hyperplane-rn}In the same setting, there exist $\chi:\H\rightarrow\R\setminus0$
and $\lambda:\A\rightarrow\R\setminus0$ s.t.

\[
R\left(\H^{\chi},F^{\lambda}\right)\leq\max_{x\in\A}\left(\NZ{\AO x}\cdot\NZ{\AO x^{-1}}\right)
\]
\end{prop}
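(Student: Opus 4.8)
The plan is to deduce the bound from Proposition \ref{prop:hyperplane-r} itself, applied not to $(\H,F)$ but to a rescaled bandit $(\H^{\chi},F^{\lambda})$, exploiting that multiplying $\AO x$ by a nonzero scalar leaves the product $\NZ{\AO x}\cdot\NZ{\AO x^{-1}}$ unchanged. Replacing $F$ by $F^{\lambda}$ replaces $\AO x$ by $\lambda(x)\AO x$, so $\NZ{\lambda(x)\AO x}=|\lambda(x)|\,\NZ{\AO x}$ while $\NZ{(\lambda(x)\AO x)^{-1}}=|\lambda(x)|^{-1}\,\NZ{\AO x^{-1}}$; taking $\lambda(x):=\NZ{\AO x}^{-1}$ normalizes the first quantity to $1$ and turns the second into $\NZ{\AO x}\cdot\NZ{\AO x^{-1}}$. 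Independently, replacing $\H$ by $\H^{\chi}$ with $\chi(\theta):=\NZ{\theta}^{-1}$ makes every element of $\H^{\chi}$ a unit vector for $\NZ{\cdot}$, so the ratio $\max_{\theta'\in\H^{\chi}}\NZ{\theta'}\big/\min_{\theta'\in\H^{\chi}}\NZ{\theta'}$ appearing in Proposition \ref{prop:hyperplane-r} equals $1$.

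First I would verify that these $\chi$ and $\lambda$ are admissible, i.e. continuous and nowhere vanishing. Since $\W=\R\ne\{0\}$ and $F_{x\theta}$ is onto for every $\theta\in\H$ (Assumption \ref{ass:lin}), no element of $\H$ can be $0$; as $\H$ is compact, $\theta\mapsto\NZ{\theta}$ is bounded below by a positive constant on $\H$, hence $\chi=\NZ{\cdot}^{-1}$ is continuous and nonzero on $\H$. Likewise each $\AO x$ is invertible and depends continuously on $x$ (because $F$ is continuous in its first argument and finite-dimensionally bilinear in the others), so $x\mapsto\NZ{\AO x}$ is continuous and, by compactness of $\A$, bounded below by a positive constant; hence $\lambda(x):=\NZ{\AO x}^{-1}$ is continuous and nonzero on $\A$. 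Moreover the operator associated with $F^{\lambda}$ is $\lambda(x)\AO x$, which is again invertible, so the rescaled bandit $(\H^{\chi},F^{\lambda})$ lies within the scope of Proposition \ref{prop:hyperplane-r} (using the same auxiliary norm $\NZ{\cdot}$).

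Finally I would assemble the estimate: applying Proposition \ref{prop:hyperplane-r} to $(\H^{\chi},F^{\lambda})$ and substituting the three facts just established --- the norm ratio equals $1$, $\max_{x}\NZ{\lambda(x)\AO x}=1$, and $\max_{x}\NZ{(\lambda(x)\AO x)^{-1}}=\max_{x}\big(\NZ{\AO x}\cdot\NZ{\AO x^{-1}}\big)$ --- yields
\[
R\!\left(\H^{\chi},F^{\lambda}\right)\;\le\;1\cdot 1\cdot\max_{x\in\A}\left(\NZ{\AO x}\cdot\NZ{\AO x^{-1}}\right),
\]
which is precisely the claimed inequality. The whole argument is bookkeeping once Proposition \ref{prop:hyperplane-r} is in hand; the one substantive idea is that $\lambda$ must be chosen as the non-constant gauge $\NZ{\AO x}^{-1}$ rather than a global constant, since it is exactly this $x$-dependent rescaling that decouples the two separate maxima over $x$ in Proposition \ref{prop:hyperplane-r} and collapses their product into a single maximum of the product. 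Beyond the continuity and non-vanishing checks above, I anticipate no real obstacle.
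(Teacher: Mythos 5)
Your proposal is correct and follows essentially the same route as the paper: apply Proposition \ref{prop:hyperplane-r} to the rescaled bandit with $\chi(\theta)=\NZ{\theta}^{-1}$ and an $x$-dependent gauge $\lambda$ that normalizes one of the two operator norms, collapsing the product of maxima into a single maximum of the condition number. The only (immaterial) difference is that you take $\lambda(x)=\NZ{\AO x}^{-1}$ so that $\NZ{\lambda(x)\AO x}=1$, whereas the paper takes $\lambda(x)=\NZ{\AO x^{-1}}$ so that $\NZ{(\lambda(x)\AO x)^{-1}}=1$; your added continuity/non-vanishing checks are a welcome bit of extra care.
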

This bound is an improvement on Proposition \ref{prop:hyperplane-r}.
Notice that the expression $\NZ{\AO x}\cdot\NZ{\AO x^{-1}}$ is the
\emph{condition number} of $\AO x$.

Now, we consider the case where $\W=\ker\mu$, and we have $\psi\in\Z^{\star}$
and $f:\A\times\Z\rightarrow\Y$ continuous and linear in the second
argument s.t.
\begin{itemize}
\item For any $x\in\A$ and $z\in\Z$, $\mu(f(x,z))=\psi(z)$.
\item $\H\subseteq\psi^{-1}(1)$
\item $f(\A\times\H)\subseteq\D$
\item For any $x\in\A$, $z\in\Z$ and $y\in\Y$, $F(x,z,y)=\psi(z)y-\mu(y)f(x,z)$.
\end{itemize}
Then, for any $x\in\A$ and $\theta\in\H$, $K_{\theta}(x)^{+}=\{f(x,\theta)\}$.
Essentially, this datum is just an arm-dependent affine mapping from
$\H$ to $\D$ that depends continuously on the arm. We have,
\begin{prop}
\label{prop:point-r}In the setting above,

\[
R(\H,F)\leq2
\]
\end{prop}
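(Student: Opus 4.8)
The plan is to unwind the definition of $R$ and reduce the claim to an operator-norm bound on $F_{x\theta}$. By definition $R=\max_{\theta\in\H}\Norm{\theta}$, where $\Norm{\cdot}$ is the norm on $\XZ$ and $\Z$ is viewed as a subspace of $\XZ$ (using $\Z\cap\Nul=\{0\}$, which was assumed without loss of generality). For $\theta\in\Z\subseteq\XZ$ the $\W$-component of its representative vanishes, so $\F_{x\theta}=F_{x\theta}$ and hence $\Norm{\theta}=\max_{x\in\A}\Norm{F_{x\theta}}$, the operator norm of $F_{x\theta}\colon\Y\to\W$. Thus it suffices to prove $\Norm{F_{x\theta}}\le 2$ for all $x\in\A$ and $\theta\in\H$.

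Next I would write $F_{x\theta}$ out explicitly. Since $\H\subseteq\psi^{-1}(1)$ we have $\psi(\theta)=1$, so the assumed formula for $F$ gives $F_{x\theta}(y)=y-\mu(y)f(x,\theta)$ for every $y\in\Y$; note this indeed lies in $\W=\ker\mu$. The one genuinely useful observation is that for any $x'\in\A$, $\theta'\in\H$ and any $w\in\W$ we have $F_{x'\theta'}(w)=\psi(\theta')w-\mu(w)f(x',\theta')=w$, because $\mu(w)=0$. Hence $w$ itself is always an admissible preimage of $w$, so in the definition $\Norm{w}=\max_{x',\theta'}\min_{y\colon F_{x'\theta'}y=w}\Norm{y}$ each inner minimum is at most the $\Y$-norm of $w$; that is, $\Norm{w}\le\Norm{w}$ holds with the left-hand norm being the $\W$-norm and the right-hand one the $\Y$-norm, for every $w\in\W$.

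Finally I would estimate, for $y$ with $\Norm{y}\le 1$ in the $\Y$-norm, $\Norm{F_{x\theta}(y)}$ in the $\W$-norm $\le\Norm{F_{x\theta}(y)}$ in the $\Y$-norm $\le\Norm{y}+\Abs{\mu(y)}\cdot\Norm{f(x,\theta)}$ by the triangle inequality. Since $f(x,\theta)\in\D$ and $\D$ is contained in the absolute convex hull of $\D$ (the unit ball of the $\Y$-norm), we get $\Norm{f(x,\theta)}\le 1$; and since $\mu\equiv 1$ on $\D\subseteq\mu^{-1}(1)$, it has absolute value $\le 1$ on the absolute convex hull of $\D$, whence $\Abs{\mu(y)}\le\Norm{y}$. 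Combining, $\Norm{F_{x\theta}(y)}\le 1+1=2$, so $\Norm{F_{x\theta}}\le 2$, and taking the maximum over $x$ and $\theta$ yields $R\le 2$.

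I do not expect a real obstacle here; the only points requiring care are the reduction in the first step --- that extending $\Z$ to $\XZ$ contributes nothing for vectors already lying in $\Z$ --- and the comparison of the $\W$-norm with the ambient $\Y$-norm on $\W$, which hinges on the fact that each $F_{x'\theta'}$ restricts to the identity on $\W$. Everything after that is a single application of the triangle inequality together with the facts that $\D$ sits inside the unit ball of $\Y$ and that $\mu$ is bounded by $1$ there.
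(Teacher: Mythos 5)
Your proof is correct and follows essentially the same route as the paper's: both arguments hinge on the observation that each $F_{x'\theta'}$ restricts to the identity on $\W$ (so the $\W$-norm is dominated by the ambient $\Y$-norm), and then bound $\Norm{F_{x\theta}y}=\Norm{y-\mu(y)f(x,\theta)}$ by the triangle inequality using $f(x,\theta)\in\D$ and $\Abs{\mu(y)}\le\Norm{y}$. The only cosmetic difference is that you spell out the reduction from the $\XZ$-norm to $\max_{x}\Norm{F_{x\theta}}$, which the paper leaves implicit.
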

See Appendix \ref{sec:r} for the proofs of the Propositions in this
subsection.

\subsection{\label{subsec:s}Simple Bounds on $S$}

For any $d\geq1$, denote 

\[
\G d:=\SC{\U\subseteq\Y\text{ a linear subspace}}{\ensuremath{\dim\U=d},\,\ensuremath{\U\cap\D\ne\varnothing}}
\]

$\G d$ is a subset of the Grassmannian $\mathrm{\mathrm{Gr}}_{d}(\Y)$.
Notice that $\G 1$ is canonically isomorphic to $\D$.

The following is a bound on $S$ for the case where $\D$ is a simplex.
\begin{prop}
\label{prop:simplex-full-s}Let $\B$ be a finite set, $\Y:=\R^{\B}$
and

\[
\mu(y):=\sum_{i\in\B}y_{i}
\]

\[
\D:=\Delta\B
\]

Consider any $d\geq1$ and $\U\in\G d$. Then,

\[
\sin\left(\U^{\flat},\D\right)\geq\max_{y\in\U^{+}}\min_{i\in\B}y_{i}
\]
\end{prop}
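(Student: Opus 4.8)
The norm on $\Y=\R^{\B}$ is the $\ell_{1}$ norm (Lemma \ref{lem:l1}), so $\sin\left(\U^{\flat},\D\right)$ is taken with respect to the restriction of $\left\Vert \cdot\right\Vert _{1}$ to $\ker\mu$. Set $\rho:=\max_{y\in\U^{+}}\min_{i\in\B}y_{i}$, which is attained since $\U^{+}=\U\cap\D$ is compact and nonempty. I would first dispose of the degenerate cases: a sine is nonnegative, so we may assume $\rho>0$; and if $\U^{\flat}\subseteq\D$ the defining infimum is over the empty set (this forces $d=1$, where $\U^{\flat}$ is a single point), so we may assume $\U^{\flat}\not\subseteq\D$. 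Then I would fix $y^{*}\in\U^{+}$ with $y^{*}_{i}\ge\rho$ for all $i\in\B$, noting $\rho\le1/\left|\B\right|\le1$. The plan is then, for an arbitrary $p\in\U^{\flat}\setminus\D$, to control both distances appearing in the definition of the sine in terms of the single quantity $m:=\sum_{i\in\B:\,p_{i}<0}\left|p_{i}\right|$: a lower bound $\min_{q\in\D}\left\Vert p-q\right\Vert _{1}\ge2m$, and an upper bound $\min_{q\in\U^{\flat}\cap\D}\left\Vert p-q\right\Vert _{1}\le2m/\rho$ obtained by exhibiting one good point $q'$. Their ratio is then $\ge\rho$ for every such $p$, and taking the infimum over $p$ gives $\sin\left(\U^{\flat},\D\right)\ge\rho$.

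So fix $p\in\U^{\flat}\setminus\D$. Since $\mu(p)=\sum_{i}p_{i}=1$ but $p\notin\Delta\B$, the set $S^{-}:=\SC{i\in\B}{p_{i}<0}$ is nonempty and $m>0$. The lower bound is the standard ``$\ell_{1}$ projection onto the simplex'' estimate: for any $q\in\D$, since $\sum_{i}(q_{i}-p_{i})=0$ the positive and negative parts of $q-p$ have equal total mass, so $\left\Vert p-q\right\Vert _{1}=2\sum_{i:\,q_{i}\ge p_{i}}(q_{i}-p_{i})$; every $i\in S^{-}$ has $q_{i}\ge0>p_{i}$ and hence contributes at least $\left|p_{i}\right|$, so $\left\Vert p-q\right\Vert _{1}\ge2m$.

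For the upper bound I would take $q':=(1-t)p+ty^{*}$ with $t:=\max_{i\in S^{-}}\frac{\left|p_{i}\right|}{y^{*}_{i}+\left|p_{i}\right|}\in(0,1)$. Being a linear combination of vectors of $\U$ with coefficients summing to $1$, $q'$ lies in $\U^{\flat}$; the coordinates $i\notin S^{-}$ of $q'$ are clearly nonnegative, and for $i\in S^{-}$ the defining inequality $t(y^{*}_{i}+\left|p_{i}\right|)\ge\left|p_{i}\right|$ rearranges to $ty^{*}_{i}\ge(1-t)\left|p_{i}\right|$, i.e. $q'_{i}\ge0$; hence $q'\in\Delta\B=\D$, so $q'\in\U^{\flat}\cap\D$. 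To bound its distance to $p$: since each $\left|p_{i}\right|$ ($i\in S^{-}$) is $\le m$, $y^{*}_{i}\ge\rho$, and $x\mapsto x/(x+\rho)$ is increasing, we get $t\le m/(m+\rho)$; and since $p-y^{*}\in\ker\mu$, $\left\Vert p-y^{*}\right\Vert _{1}=2\sum_{i:\,p_{i}>y^{*}_{i}}(p_{i}-y^{*}_{i})\le2\sum_{i:\,p_{i}>0}p_{i}=2(1+m)$, the last equality from $\sum_{i}p_{i}=1$. Therefore $\left\Vert p-q'\right\Vert _{1}=t\left\Vert p-y^{*}\right\Vert _{1}\le\frac{2m(1+m)}{m+\rho}\le\frac{2m}{\rho}$, where the final inequality is exactly $\rho(1+m)\le m+\rho$, i.e. $\rho\le1$.

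I expect the one genuine decision point to be the choice of the interpolation parameter $t$ (equivalently, of $q'$): it must be large enough for $q'$ to be nonnegative on the ``bad'' coordinates $S^{-}$, yet the displacement $t\left\Vert p-y^{*}\right\Vert _{1}$ must still come out $\le2m/\rho$ so that the final ratio is exactly $\rho$ rather than a smaller multiple of it — and the bound $y^{*}_{i}\ge\rho$ is precisely what reconciles these two demands. Everything else is routine: the simplex-projection lower bound, the $\ell_{1}$ identities for vectors of $\ker\mu$, and the closing arithmetic. (One could alternatively try to go through Proposition \ref{prop:sin-sh}, reducing to $\sin\left(\U^{\flat},\pi\right)$ for the coordinate supporting hyperplanes $\pi=\SC{y\in\mu^{-1}(1)}{y_{j}=0}$ of $\Delta\B$, but estimating $d_{1}(p,\U^{\flat}\cap\pi)$ looks no simpler than the construction above, so I would keep the direct route.)
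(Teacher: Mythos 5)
Your proof is correct and follows essentially the same route as the paper's (which proves the more general Proposition \ref{prop:simplex-s} and specializes it): the same interpolation $q'=(1-t)p+ty^{*}$ with the same choice of $t$, verified to land in $\U^{+}$ coordinate by coordinate. The only difference is bookkeeping — you bound both distances via the total negative mass $m$ and the estimate $\Norm{p-y^{*}}_{1}\le2(1+m)$, whereas the paper isolates the single worst coordinate $a^{*}$ and uses $\Norm{y^{*}-q'}_{1}\le2$; both yield the ratio $\ge\rho$.
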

As we saw in Example \ref{ex:simplex}, the simplex is a natural special
case. We will use the perpsective of Example \ref{ex:simplex} throughout
this section when discussing the simplex, even though the formal propositions
are applicable more generally (i.e. they are still useful when non-vertex
outcomes are allowed).

Proposition \ref{prop:simplex-full-s} allows us to lower bound the
sine as long as the credal set has an outcome distribution with full
support. More generally, we have the following
\begin{prop}
\label{prop:simplex-s}Let $\B$, $\Y$, $\mu$, $\D$ and $\U$ be
as in Proposition \ref{prop:simplex-full-s}. Suppose that $\mathcal{\mathcal{E\subseteq\B}}$
is s.t. $\U\subseteq\R^{\mathcal{E}}$. Then,

\[
\sin\left(\U^{\flat},\D\right)\geq\max_{y\in\U^{+}}\min_{i\in\mathcal{E}}y_{i}
\]
\end{prop}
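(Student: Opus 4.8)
The plan is to reduce the claim to an explicit estimate in the $\ell_{1}$ geometry of the simplex, using that the norm on $\Y=\R^{\B}$ is the $\ell_{1}$ norm (Lemma \ref{lem:l1}). Set $c:=\max_{y\in\U^{+}}\min_{i\in\mathcal{E}}y_{i}$ (the maximum is over the nonempty set $\U^{+}$, which is nonempty because $\U\in\G d$). If $c=0$ the bound is trivial, since the sine is nonnegative, and if $\U^{\flat}\subseteq\D$ it is vacuous; so I would assume $c>0$ and $\U^{\flat}\not\subseteq\D$, and fix $y^{*}\in\U^{+}$ with $y^{*}_{i}\ge c$ for every $i\in\mathcal{E}$. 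Since $y^{*}\in\U\subseteq\R^{\mathcal{E}}$ we have $y^{*}_{i}=0$ for $i\notin\mathcal{E}$, and since $\sum_{i}y^{*}_{i}=1$ we get $c\le 1$. As $\D\subseteq\mu^{-1}(1)$ we have $\U^{\flat}\cap\D=\U^{+}$, so by the definition of the sine it suffices to prove
\[
\min_{q\in\D}\Norm{p-q}_{1}\ \ge\ c\cdot\min_{q\in\U^{+}}\Norm{p-q}_{1}\qquad\text{for every }p\in\U^{\flat}\setminus\D,
\]
and then take the infimum over such $p$.

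The first step is a lower bound on the left-hand side. Fix $p\in\U^{\flat}\setminus\D$; then $\sum_{i}p_{i}=1$, $p$ is supported on $\mathcal{E}$, and since $p\notin\Delta\B$ some coordinate of $p$ is negative, so $p^{-}:=\sum_{i}\max(-p_{i},0)>0$. I would show $\Norm{p-q}_{1}\ge 2p^{-}$ for every $q\in\D$: with $N_{p}:=\{i:p_{i}<0\}$, one has $|p_{i}-q_{i}|=q_{i}-p_{i}$ for $i\in N_{p}$ and $|p_{i}-q_{i}|\ge p_{i}-q_{i}$ for $i\notin N_{p}$, and combining these with $\sum_{i}(p_{i}-q_{i})=0$ and $q_{i}\ge 0$ yields $\Norm{p-q}_{1}\ge 2\sum_{i\in N_{p}}q_{i}+2p^{-}\ge 2p^{-}$. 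Hence $\min_{q\in\D}\Norm{p-q}_{1}\ge 2p^{-}$.

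The second step is an upper bound on $\min_{q\in\U^{+}}\Norm{p-q}_{1}$ via an explicit competitor. Put $t:=p^{-}/(c+p^{-})\in(0,1)$ and $q:=(1-t)p+t\,y^{*}$. Since $\U$ is a linear subspace containing $p$ and $y^{*}$, $q\in\U$; clearly $\mu(q)=1$; and $q\in\D$, because $q_{i}=0$ for $i\notin\mathcal{E}$, while for $i\in\mathcal{E}$ we have $q_{i}\ge(1-t)p_{i}+tc\ge-(1-t)p^{-}+tc\ge 0$ by the choice of $t$ (using $p_{i}\ge-p^{-}$, which holds since $-p_{i}\le p^{-}$). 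Thus $q\in\U^{+}$, and since $p-q=t(p-y^{*})$ and $\Norm{p}_{1}=1+2p^{-}$ (from $\sum_{i}p_{i}=1$), we get $\Norm{p-q}_{1}=t\,\Norm{p-y^{*}}_{1}\le t(\Norm{p}_{1}+\Norm{y^{*}}_{1})=2t(1+p^{-})=\frac{2p^{-}(1+p^{-})}{c+p^{-}}$.

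Combining the two steps, the ratio in the definition of $\sin(\U^{\flat},\D)$ corresponding to $p$ is at least
\[
\frac{2p^{-}}{\,2p^{-}(1+p^{-})/(c+p^{-})\,}=\frac{c+p^{-}}{1+p^{-}}\ \ge\ c,
\]
the last inequality because $c\le 1$ and $p^{-}>0$; taking the infimum over $p\in\U^{\flat}\setminus\D$ gives $\sin(\U^{\flat},\D)\ge c$, as claimed. I expect the only slightly delicate points to be the estimate $\Norm{p-q}_{1}\ge 2p^{-}$ and the nonnegativity check for $q$. Note that one cannot simply invoke Proposition \ref{prop:simplex-full-s} for the sub-simplex $\Delta\mathcal{E}$: passing to the smaller set $\Delta\mathcal{E}$ can only increase the sine, so that would only bound $\sin(\U^{\flat},\Delta\mathcal{E})\ge\sin(\U^{\flat},\D)$ from below, which is the wrong direction.
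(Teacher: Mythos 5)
Your proof is correct and follows essentially the same route as the paper's: both fix the max--min point $y^{*}\in\U^{+}$, move from $p$ toward $y^{*}$ along the segment (which stays in $\U$) until all coordinates become nonnegative, and compare the resulting distance to $\U^{+}$ with the exact $\ell_{1}$ distance to the simplex $\sum_{i}\Abs{p_{i}}-1=2p^{-}$. The only difference is cosmetic: the paper picks the exact entry parameter $t^{*}$ determined by the worst single negative coordinate and gets the ratio bound $y^{*}_{a^{*}}\geq c$, whereas you use the coarser parameter $t=p^{-}/(c+p^{-})$ based on the aggregate negative mass and get $(c+p^{-})/(1+p^{-})\geq c$; both yield the stated bound.
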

Here, $\R^{\mathcal{E}}$ is regarded as a subspace of $\R^{\B}$
by setting all the coordinates in $\B\setminus\mathcal{E}$ to 0.

This allows us to produce some lower bound for any given $\U$. Indeed,
if $y_{1,2}\in\U\cap\Delta\B$, the support of $y_{1}$ is $\mathcal{E}_{1}$
and the support of $y_{2}$ is $\mathcal{E}_{2}$, then the support
of $\frac{1}{2}(y_{1}+y_{2})$ is $\mathcal{E}_{1}\cup\mathcal{E}_{2}$.
Hence for any $\U$ there is some $\mathcal{E}$ s.t. all distributions
in $\U$ are supported on it and some distribution in $\U$ has the
entire $\mathcal{E}$ as its support.

Now, we consider the case where $\D$ is the unit ball.
\begin{prop}
\label{prop:ball-s}Let $n\in\N$, $\Y=\R^{n+1}$ and

\[
\mu(y):=y_{n}
\]

\[
\D:=\SC{\Vxx y1\in\R^{n+1}}{\Norm y_{2}\leq1}
\]

Consider some $d\geq1$ and $\U\in\G d$. Then,

\[
\sin\left(\U^{\flat},\D\right)\geq\sqrt{1-\min\SC{\Norm y_{2}^{2}}{\Vxx y1\in\U^{\flat}}}
\]
\end{prop}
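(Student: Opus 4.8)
The plan is to reduce the abstract quantity $\sin(\U^{\flat},\D)$ to an elementary Euclidean computation in $\R^{n}$. First I would pin down the norm on $\Y$: one checks directly that the absolute convex hull of $\D=\SC{\Vxx y1}{\Norm y_2\le 1}$ is the cylinder $\SC{\Vxx yt}{\Norm y_2\le 1,\ \Abs t\le 1}$, so the norm on $\Y$ is $\Norm{\Vxx yt}=\max(\Norm y_2,\Abs t)$. The decisive point is that for two points $p,q$ lying in the affine hyperplane $\mu^{-1}(1)=\SC{y\in\R^{n+1}}{y_n=1}$, the difference $p-q$ has vanishing last coordinate, so $\Norm{p-q}=\Norm{p-q}_2$; thus the $\Y$-norm restricts to the Euclidean norm on $\mu^{-1}(1)$. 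Dropping the (constant) last coordinate is then an affine isometry that identifies $\U^{\flat}$ with an affine subspace $\SubAf\subseteq\R^{n}$, identifies $\D$ with the Euclidean unit ball $B:=\SC{y\in\R^n}{\Norm y_2\le 1}$, turns every distance appearing in the definition of $\sin(\U^{\flat},\D)$ into an ordinary Euclidean distance, and rewrites $\rho^{2}:=\min\SC{\Norm y_2^2}{\Vxx y1\in\U^{\flat}}$ as the squared Euclidean distance from the origin to $\SubAf$. (If $\U^{\flat}\subseteq\D$ there is nothing to prove; otherwise $\SubAf$ is not contained in $B$, and $\rho\le 1$ since $\U\cap\D\ne\varnothing$ forces $\SubAf\cap B\ne\varnothing$.)

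Next I would set up coordinates on $\SubAf$ adapted to the origin. Let $c$ be the point of $\SubAf$ closest to the origin, so $\Norm c_2=\rho$ and $c\perp V$, where $V$ is the direction subspace of $\SubAf$; thus $\SubAf=c+V$ and every $p\in\SubAf$ is $p=c+v$ with $v\in V$, with $\Norm p_2^2=\rho^2+\Norm v_2^2$ by orthogonality. Writing $\beta:=\sqrt{1-\rho^{2}}$, the set $\SubAf\cap B$ is exactly the ball of radius $\beta$ centered at $c$ inside the flat $\SubAf$, i.e.\ $\SC{c+v}{v\in V,\ \Norm v_2\le\beta}$; this uses $c\perp v$ again. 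Now fix any $p=c+v\in\SubAf$ with $\Norm p_2>1$ (equivalently $\Norm v_2>\beta$); these are the only points over which the infimum in $\sin(\U^\flat,\D)$ is taken. The distance from $p$ to the unit ball is $\Norm p_2-1=\sqrt{\rho^2+\Norm v_2^2}-1$, and the distance from $p$ to $\SubAf\cap B$ (taken inside the flat, which agrees with the ambient distance) is $\Norm v_2-\beta$, the closest point being $c+\beta v/\Norm v_2$.

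Finally, the desired inequality $\sin(\U^{\flat},\D)\ge\beta$ amounts to showing, for every $s:=\Norm v_2>\beta$,
\[
\sqrt{\rho^{2}+s^{2}}-1\ \ge\ \beta\,(s-\beta).
\]
Since $\rho^2=1-\beta^2$, this reads $\sqrt{1-\beta^2+s^2}\ge 1-\beta^2+\beta s$, with both sides nonnegative; squaring and simplifying collapses it to $(1-\beta^2)(s-\beta)^2\ge 0$, which is immediate. Taking the infimum over admissible $p$ gives $\sin(\U^{\flat},\D)\ge\beta=\sqrt{1-\rho^2}$, as claimed.

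The only genuine work is the reduction step — verifying that the absolute convex hull of $\D$ is the stated cylinder (so that the $\Y$-norm is the max-norm and restricts to the Euclidean norm on $\mu^{-1}(1)$) and recognizing $\SubAf\cap B$ as a Euclidean ball centered at the foot of the perpendicular $c$. Once these are in place the remaining estimate is the one-line algebraic identity above, so I do not expect the computation itself to pose any obstacle.
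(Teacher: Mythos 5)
Your proof is correct and follows essentially the same route as the paper's: reduce to the Euclidean unit ball in $\R^{n}$, take the foot of the perpendicular from the origin to the affine subspace, and compare $\sqrt{\rho^{2}+s^{2}}-1$ with $\sqrt{1-\rho^{2}}\,(s-\sqrt{1-\rho^{2}})$, which is exactly the paper's Lemma \ref{lem:sqrt-inequality} with $\alpha=\arccos\rho$. Your treatment of the norm reduction is actually slightly more careful than the paper's (which only invokes rotational symmetry to say the induced metric on $\mu^{-1}(1)$ is Euclidean up to a scalar), and your direct squaring replaces the paper's separate trigonometric lemma, but these are cosmetic differences.
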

This lower bound vanishes when $\U^{\flat}$ approaches a tangent
to the ball: indeed, for a tangent, the sine is zero.

See Appendix \ref{sec:s} for the proofs of the Propositions in this
subsection.

\subsection{Probability Systems}

\global\long\def\PSet{\mathcal{F}}%

Let's once again examine the case where $\D$ is a simplex and consider
a credal set defined by fixing the probabilities of a family of events
on $\B$. 
\begin{prop}
\label{prop:sys-probs}Let $\B$ be a finite set, $\Y:=\R^{\B}$ and
$\D:=\Delta\B$. Let $\PSet$ be a non-empty finite set and $f:\B\rightarrow2^{\PSet}$
a surjection. Fix $p\in[0,1]^{\PSet}$ and define the linear subspace
$\U$ of $\Y$ by

\[
\U:=\SC{y\in\Y}{\forall i\in\PSet:\sum_{a\in\B:i\in f(a)}y_{a}=p_{i}\sum_{a\in\B}y_{a}}
\]

Then,

\[
\sin\left(\U^{\flat},\D\right)\geq\frac{1}{\Abs{\PSet}}
\]

In particular,
\end{prop}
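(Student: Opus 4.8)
The plan is to translate the statement into an $\ell_{1}$-geometry estimate and then prove that estimate by an explicit mass-transport construction, handling one event at a time.

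First I would set up the reduction. By Lemma \ref{lem:l1} the norm that $\D=\Delta\B$ induces on $\Y=\R^{\B}$ (and hence on $\ker\mu$) is the $\ell_{1}$ norm. For a point $p$ on the hyperplane $\mu^{-1}(1)$ write $\epsilon(p):=\sum_{a\in\B}\max(-p_{a},0)$ for its total ``negative mass''; pushing the positive part of $p$ to a probability vector shows $\min_{q\in\Delta\B}\|p-q\|_{1}=2\epsilon(p)$. Since $\sin(\U^{\flat},\D)=\inf\{\,\min_{q\in\D}\|p-q\|_{1}\,/\,\min_{q\in\U^{+}}\|p-q\|_{1}:p\in\U^{\flat}\setminus\D\,\}$, it suffices to prove: for every $p\in\U^{\flat}$ there is $q\in\U^{+}$ with $\|p-q\|_{1}\le 2\left|\PSet\right|\epsilon(p)$. (Here $\U^{+}\neq\varnothing$: the distribution making the events independent with probabilities $p_{i}$, spread over the fibres of $f$, lies in $\U^{+}$, and surjectivity of $f$ guarantees every fibre is nonempty.)

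Next, the construction of $q$. Partition $\B$ into the ``atoms'' $\B_{T}:=f^{-1}(T)$, $T\subseteq\PSet$; then $\U$ is exactly the preimage of the subspace $\widetilde{\U}\subseteq\R^{2^{\PSet}}$ cut out by the analogous conditions $\sum_{T\ni i}m_{T}=p_{i}\sum_{T}m_{T}$ under the surjective aggregation map $y\mapsto(\sum_{a\in\B_{T}}y_{a})_{T}$, and inside each atom the mass is unconstrained. So I would repair $p$ in two stages: (i) inside each atom redistribute $p$'s positive part to make it nonnegative, keeping the atom-sum $m_{T}$ wherever $m_{T}\ge0$; atoms with $m_{T}<0$ push their deficit into the aggregated vector $m=(m_{T})_{T}\in\widetilde{\U}^{\flat}$. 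Writing $\epsilon_{T},P_{T}$ for the negative and positive masses of $p$ restricted to $\B_{T}$, one has $\epsilon(m)\le\epsilon(p)$ and, by a per-atom $\ell_{1}$ accounting (three cases according to whether $m'_{T}\ge P_{T}$, $m_{T}\le m'_{T}<P_{T}$, or $m'_{T}<m_{T}$), stage-(i) cost $\le 2\bigl(\epsilon(p)-\epsilon(m)\bigr)+\|m-m'\|_{1}$, where (ii) $m'\in\widetilde{\U}^{+}$ is the ``contracted'' repair. This reduces everything to the full-algebra case $\B=2^{\PSet}$, $f=\mathrm{id}$: signed measures $m$ on $2^{\PSet}$ of total mass $1$ with all $1$-dimensional marginals prescribed.

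For the full-algebra case I would induct on $k:=\left|\PSet\right|$, proving $\min_{m'\in\widetilde{\U}^{+}}\|m-m'\|_{1}\le 2k\,\epsilon(m)$. The base $k=1$ is vacuous, since $\widetilde{\U}^{\flat}$ is then the single nonnegative point $(1-p_{1},p_{1})$. For the step: marginalize out one coordinate to obtain, via the inductive hypothesis, a nonnegative measure with the other $k-1$ marginals correct; lift it back by splitting each atom's mass between the slices $\{T\ni k\}$ and $\{T\not\ni k\}$; then repair the $k$-th marginal by transferring the required amount of mass across that partition, using combinations that leave the $k-1$ already-repaired marginals and nonnegativity intact — the transfer is feasible because the slice being drained has total mass in $[0,1]$, hence at least as large as the needed adjustment. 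Combining stage (ii) with stage (i) gives $\|p-q\|_{1}\le 2\bigl(\epsilon(p)-\epsilon(m)\bigr)+2k\,\epsilon(m)=2\epsilon(p)+2(k-1)\epsilon(m)\le 2k\,\epsilon(p)$, which is the desired bound, and thus $\sin(\U^{\flat},\D)\ge\tfrac{1}{\left|\PSet\right|}$.

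The main obstacle is the $\ell_{1}$ bookkeeping in the full-algebra induction so that the constant is exactly $\left|\PSet\right|$ and does not explode. A careless one-marginal-at-a-time correction uses a ``Möbius''-type adjustment touching all $2^{\left|\PSet\right|}$ atoms and yields only an exponentially small bound, so the correcting moves must be confined to the coordinates currently being handled, and one must certify at each stage both the existence of a move preserving the already-fixed marginals and that it can be carried out while staying nonnegative. Getting the inductive step down to the sharp ``$+2\epsilon(m)$'' per event — rather than the much larger cost produced by a naive proportional split followed by a separate re-correction of the new marginal — is where the real care is needed; the $k=2$ full-algebra instance already attains $\sin(\U^{\flat},\D)=\tfrac12$, so there is no slack to waste.
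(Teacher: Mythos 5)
Your reduction to the $\ell_{1}$ estimate, the aggregation to the full algebra $2^{\PSet}$ (with the three-case per-atom accounting), and the final arithmetic are all sound, and the target you set — repair one event at a time at cost $d_{1}(p,\Delta\B)=2\epsilon(p)$ per event — is the right one. The gap is in the full-algebra inductive step, and it is exactly the point you yourself flag as ``where the real care is needed'': you never supply that care, and the step as described does not close with the constant $2k$. After marginalizing out coordinate $k$, recursing, and lifting, the error in the $k$-th marginal of the lifted measure $m''$ is controlled only by $\tfrac{1}{2}\Norm{m-m''}_{1}$, i.e.\ by half the lift cost; and the lift cost is already of order $\Norm{\bar m-\bar m'}_{1}+2\epsilon(m)$, because every fibre whose recursive target $\bar m'_{S}$ has moved away from $m_{S\cup\{k\}}$ forces a comparable displacement of mass across the slice $\{T\ni k\}$ (e.g.\ a fibre with all its mass on the $\{T\ni k\}$ side whose target sum the recursion has shrunk). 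Repairing the $k$-th marginal then costs twice that error, so the recursion you actually obtain has the shape $C_{k}\leq 2\left(C_{k-1}+2\epsilon(m)\right)$ rather than $C_{k}\leq C_{k-1}+2\epsilon(m)$; unrolled, this gives only an exponentially small lower bound on the sine, not $1/\Abs{\PSet}$. To rescue the scheme you would need a lift whose $k$-th-marginal error is bounded by $\epsilon(m)$ alone, uniformly in $\Norm{\bar m-\bar m'}_{1}$, and nothing in your construction guarantees that.

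The paper avoids this doubling by a different decomposition. It first proves (Proposition \ref{prop:cond-s}) that a \emph{single} probability constraint has sine exactly $1$, so one can project $y$ onto $\Delta\B$ at cost exactly $d_{1}(y,\Delta\B)$ while keeping the first event's probability correct; the remaining $\Abs{\PSet}-1$ marginals are then each off by at most $\tfrac{1}{2}d_{1}(y,\Delta\B)$ and are repaired one at a time on $\Delta2^{\PSet}$ via Lemma \ref{lem:fix-prob} (cost $2t$ per shift of size $t$), after which Lemma \ref{lem:lift-tvd} lifts the result back through $f$ at no extra $\ell_{1}$ cost. The first constraint coming ``for free'' with the projection to the simplex is precisely the sharp input your induction is missing. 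Either invoke and exploit Proposition \ref{prop:cond-s} (it is already available to you) and restructure the argument around it, or prove the stronger lift lemma described above; as written, the proposal establishes a strictly weaker bound than the proposition claims.
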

\[
\sin\left(\U^{\flat},\D\right)\geq\frac{1}{\log\Abs{\B}}
\]

That is, each $i\in\PSet$ corresponds to an event $i\in f(a)$ whose
probability we require to be $p_{i}$. The condition that $f$ is
surjective means that we require these events to be ``logically independent''
(i.e. it's possible for every one of them to happen or not, regardless
of whether the others happened or not).

For $\Abs{\PSet}=1$, we get $\sin\left(\U^{\flat},\D\right)=1$.
The same holds if replace absolute probability by conditional probability:

\global\long\def\Econd{\mathcal{E}}%

\global\long\def\Event{\mathcal{E}'}%

\begin{prop}
\label{prop:cond-s}Let $\B$ be a finite set, $\Y:=\R^{\B}$ and
$\D:=\Delta\B$. Consider some $\Econd,\Event\subseteq\B$, fix $p\in[0,1]$
and define the linear subspace $\U$ of $\Y$ by

\[
\U:=\SC{y\in\Y}{\sum_{a\in\Econd\cap\Event}y_{a}=p\sum_{a\in\Econd}y_{a}}
\]

Then,

\[
\sin\left(\U^{\flat},\D\right)=1
\]
\end{prop}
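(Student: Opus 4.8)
The plan is a direct $\ell_1$ computation. Since $\D=\Delta\B$, Lemma \ref{lem:l1} tells us the norm on $\Y=\R^{\B}$ (the one whose unit ball is the absolute convex hull of $\D$) is the $\ell_1$ norm, so every distance below is an $\ell_1$ distance. Note also that $\sin(\SubAf,\D)\le1$ automatically for any affine subspace $\SubAf$, since $\SubAf\cap\D\subseteq\D$ makes the numerator in the definition of the sine no larger than the denominator; hence it suffices to prove the reverse inequality, i.e. that for every $y\in\U^{\flat}\setminus\D$ there is a point of $\U^{\flat}\cap\D=\U\cap\Delta\B$ realizing the $\ell_1$-distance from $y$ to $\D$. (I write $y$ for the generic point of $\U^{\flat}$ and keep $p$ for the probability value in the statement; $y\in\U^{\flat}$ means $\sum_{a\in\B}y_a=1$ together with $\sum_{a\in\Econd\cap\Event}y_a=p\sum_{a\in\Econd}y_a$.)

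First I would pin down the $\ell_1$-projection onto the simplex. For $y$ with $\sum_{a\in\B}y_a=1$, put $\epsilon:=\sum_{a:\,y_a<0}(-y_a)$. From $\sum_a(y_a-q_a)=0$ and $q_a\ge0$ for $q\in\Delta\B$ one gets $\Norm{y-q}_1=2\sum_{a:\,y_a<q_a}(q_a-y_a)\ge2\epsilon$, with equality exactly when $q_a=0$ for every $a$ with $y_a<0$ and $q_a\le y_a$ for every $a$ with $y_a\ge0$. Thus the distance from $y$ to $\D$ is $2\epsilon$, and the set $\Pi(y)$ of nearest points is the polytope $\SC{q\in\Delta\B}{q_a=0\ \text{if}\ y_a<0,\ q_a\le y_a\ \text{if}\ y_a\ge0}$, which is nonempty (e.g. $q_a:=y_a/(1+\epsilon)$ on $\{y_a\ge0\}$ and $0$ elsewhere), compact and convex. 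It now suffices to show $\Pi(y)$ meets $\U=\ker L$, where $L(z):=\sum_{a\in\Econd\cap\Event}z_a-p\sum_{a\in\Econd}z_a$; and since $\Pi(y)$ is convex and $L$ is linear, by the intermediate value theorem this reduces to producing $q^{+},q^{-}\in\Pi(y)$ with $L(q^{+})\ge0\ge L(q^{-})$.

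I would obtain $q^{+}$ from $y$ by deleting total mass $\epsilon$ from the nonnegative coordinates of $y$, drawing it first from the coordinates in $\Econd\setminus\Event$, then from $\B\setminus\Econd$, and only as a last resort from $\Econd\cap\Event$ (these three reservoirs have total capacity $\sum_{a:\,y_a\ge0}y_a=1+\epsilon\ge\epsilon$, so the deletion is always possible). Verifying $L(q^{+})\ge0$ splits into three cases according to which reservoir the deletion reaches: if it stays inside $\Econd\setminus\Event$, a short computation using $L(y)=0$ collapses $L(q^{+})$ to $m_{1}+p\,\nu$, where $m_{1}$ and $\nu$ are the portions of $\epsilon$ coming from $\Econd\cap\Event$ and from $\B\setminus\Econd$ respectively, both $\ge0$; if it empties $\Econd\setminus\Event$ but not $\B\setminus\Econd$, then $q^{+}$ carries no mass on $\Econd\setminus\Event$ and $L(q^{+})=(1-p)\sum_{a\in\Econd\cap\Event}q^{+}_a\ge0$; and if it reaches $\Econd\cap\Event$, then all the mass of $q^{+}$ lies in $\Econd\cap\Event\subseteq\Econd$ and $L(q^{+})=1-p\ge0$. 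For $q^{-}$ I would rerun the same construction with $\Event$ replaced by $\Econd\setminus\Event$ and $p$ by $1-p$; a one-line check using $\sum_{\Econd\setminus\Event}=\sum_{\Econd}-\sum_{\Econd\cap\Event}$ shows this replaces $L$ by $-L$ while leaving both $\U=\ker L$ and the hypothesis $L(y)=0$ unchanged, so it yields $q^{-}\in\Pi(y)$ with $-L(q^{-})\ge0$. The step I expect to be the main obstacle is precisely this three-case bookkeeping: tracking how mass deleted from each reservoir changes $\sum_{\Econd\cap\Event}q$ and $\sum_{\Econd}q$, and invoking $L(y)=0$ to cancel the contributions of the negative coordinates of $y$; everything else is routine.

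Finally, pick any $q\in\Pi(y)\cap\U$. Then $q\in\Delta\B$ (so $\sum_{a\in\B}q_a=1$) and $q\in\U$, hence $q\in\U^{\flat}\cap\D$, while $\Norm{y-q}_1=2\epsilon$ equals the distance from $y$ to $\D$. Therefore the distance from $y$ to $\U^{\flat}\cap\D$ is at most the distance from $y$ to $\D$, and since the reverse inequality always holds, the ratio in the definition of $\sin(\U^{\flat},\D)$ is exactly $1$ at this $y$. As $y\in\U^{\flat}\setminus\D$ was arbitrary, $\sin(\U^{\flat},\D)\ge1$, so $\sin(\U^{\flat},\D)=1$. (The definition of the sine already presumes $\U^{\flat}\not\subseteq\D$ and $\U^{\flat}\cap\D\ne\varnothing$; these hold in all non-degenerate cases, $\U^{\flat}$ being then an unbounded affine subspace that meets $\Delta\B$.)
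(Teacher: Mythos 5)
Your proof is correct, but it takes a genuinely different route from the paper's. Both arguments rest on the same two facts: the norm induced by $\D=\Delta\B$ is $\ell_{1}$ (Lemma \ref{lem:l1}), and the $\ell_{1}$-nearest points of $\Delta\B$ to a point $y$ of its affine hull are exactly those $q\in\Delta\B$ with $q_{a}=0$ where $y_{a}<0$ and $q_{a}\leq y_{a}$ where $y_{a}\geq0$ (the paper's Lemmas \ref{lem:simplex-dist} and \ref{lem:simplex-projection}; the paper only proves the sufficiency direction, which is all that either argument actually needs). From there the paper constructs a \emph{single} nearest point lying in $\U$ directly, via a case split on the signs of $\sum_{a\in\Econd\cap\Event}y_{a}$, $\sum_{a\in\Econd\setminus\Event}y_{a}$ and $\sum_{a\in\B\setminus\Econd}y_{a}$, rescaling the positive part of $y$ block by block so that the conditional-probability constraint is restored. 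You instead observe that the set $\Pi(y)$ of nearest points is a convex polytope, so it suffices to exhibit two nearest points on which the defining functional $L$ takes opposite signs and interpolate; each is produced by greedily deleting the excess mass $\epsilon$ from the reservoirs in an order that favors one side of the constraint, and the $q^{-}$ construction follows from the $q^{+}$ one by the symmetry $(\Event,p)\mapsto(\Econd\setminus\Event,1-p)$, which negates $L$ while fixing $\U$. I verified your Case 1 bookkeeping: writing $\epsilon_{+}$ and $\epsilon_{0}$ for the portions of $\epsilon$ carried by the negative coordinates in $\Econd\cap\Event$ and in $\B\setminus\Econd$, the identity $L(y)=0$ gives $L(q^{+})=\epsilon_{+}+p\,\epsilon_{0}\geq0$, and Cases 2 and 3 are immediate. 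What your route buys is robustness: the intermediate-value reduction plus the complementation symmetry replaces the paper's delicate sign analysis (including its special handling of vanishing denominators) by a single greedy construction analyzed up to symmetry. Both proofs share the same implicit non-degeneracy assumptions ($\U^{\flat}\cap\D\neq\varnothing$ and $\U^{\flat}\not\subseteq\D$) under which the sine is defined, so this is not a gap relative to the paper.
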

In the above, the subspace $\U$ can be interpreted as fixing the
probability of event $\Event$ conditional on event $\Econd$ to the
value $p$. Now we'll consider a \emph{system} of conditional probabilities,
but require a particular structure.

\global\long\def\SuppU#1#2{\mathcal{E}_{#1}\left(#2\right)}%

\global\long\def\Ker{\mathcal{\U}}%

\global\long\def\Gany#1{\mathrm{Gr}^{+}\left(#1\right)}%

\begin{prop}
\label{prop:chain-s}Fix an integer $n\geq1$, and let $\B:=\prod_{i<n}\CB i$
for some family of finite sets $\left\{ \CB i\right\} _{i<n}$. Let
$\Y:=\R^{\B}$, $\D:=\Delta\B$. For each $i<n$, denote

\[
\PB i:=\prod_{j<i}\CB j
\]

\[
\NB i:=\prod_{i<j<n}\CB j
\]

Let $\Ker_{i}:\PB i\rightarrow\Gany{\R^{\CB i}}$, where dropping
the subscript $d$ means we consider subspaces of all dimensions.
Given $y\in\Y$, $i<n$ and $a\in\PB i$, let $\SuppU ia\subseteq\CB i$
be the minimal set s.t. $\U_{i}(a)\subseteq\R^{\SuppU ia}$ and define
$y^{a}\in\R^{\CB i}$ by

\[
y_{b}^{a}:=\sum_{c\in\NB i}y_{abc}
\]

Define the subspace $\U$ of $\Y$ by

\begin{align*}
\U:=\{y\in\Y\mid\forall i<n,a\in\PB i: & \:y^{a}\in\U_{i}(a)\text{ and }\\
 & \:\forall b\in\CB i\setminus\SuppU ia,c\in\NB i:y_{abc}=0\}
\end{align*}

Then,

\[
\sin\left(\U^{\flat},\Delta\B\right)\geq\min_{\SUBSTACK{i<n}{a\in\PB i}}\sin\left(\U_{i}(a)^{\flat},\Delta\CB i\right)
\]
\end{prop}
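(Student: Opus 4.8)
The plan is to induct on $n$, the inductive step being a ``composition lemma'' obtained by peeling off the first coordinate. Writing a full outcome as $bc$ with $b\in\CB 0$ and $c$ in $\NB 0:=\prod_{0<j<n}\CB j$, a vector $y\in\R^{\B}$ is determined by its first-coordinate marginal $y^{()}\in\R^{\CB 0}$ (with $y^{()}_b=\sum_c y_{bc}$) together with its slices $y|_b\in\R^{\NB 0}$, $(y|_b)_c:=y_{bc}$; a direct check shows that under this identification the chain subspace $\U$ becomes $\{y : y^{()}\in\U_0(()),\ y|_b\in\U^{(b)}\ \forall b,\ y_{bc}=0\text{ for }b\notin\SuppU 0{()}\}$, where each $\U^{(b)}\subseteq\R^{\NB 0}$ is again a chain subspace of the type in the statement, of length $n-1$, with component subspaces $\U_{i+1}((b,a'))$ (one verifies the prefixes and supports line up correctly). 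Since each $\U^{(b)}$ meets $\Delta\NB 0$ — the nonemptiness needed for $\sin$ to be defined propagates through the same induction, by building a distribution conditionally — it therefore suffices to prove: if $\mathcal C,\mathcal C'$ are finite sets, $\U_0\subseteq\R^{\mathcal C}$ and $\{\U^{(b)}\subseteq\R^{\mathcal C'}\}_{b\in\mathcal C}$ are subspaces each meeting the respective simplex, and $\U:=\{y\in\R^{\mathcal C\times\mathcal C'}: y^{()}\in\U_0,\ y|_b\in\U^{(b)}\ \forall b,\ y_{bc}=0\text{ for }b\notin\mathrm{supp}\,\U_0\}$, then $\sin(\U^{\flat},\Delta(\mathcal C\times\mathcal C'))\ge\min(\sin(\U_0^{\flat},\Delta\mathcal C),\ \min_b\sin((\U^{(b)})^{\flat},\Delta\mathcal C'))=:c$. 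The base case $n=1$ is immediate, and granting this lemma the proposition follows, since every sine entering the recursion is one of the numbers $\sin(\U_i(a)^{\flat},\Delta\CB i)$.

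For the composition lemma (assume $c>0$, else it is trivial) I will use the elementary $\ell_1$ geometry of the simplex; the norm here is $\ell_1$, by Lemma \ref{lem:l1}. For $v$ with $\mu(v)=1$, writing $\sigma(v):=\sum_x\max(-v_x,0)$ for its negative mass, one has $\Norm v_1=\mu(v)+2\sigma(v)$, the $\ell_1$-distance from $v$ to the simplex equals $2\sigma(v)$, and $\sigma(y)=\sum_b\sigma(y|_b)$. Fix $p\in\U^{\flat}$ not in the simplex, put $m:=p^{()}\in\U_0^{\flat}$ and $\sigma_b:=\sigma(p|_b)$, so the $\ell_1$-distance from $p$ to $\Delta(\mathcal C\times\mathcal C')$ is $\varepsilon:=2\sum_b\sigma_b$. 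I will produce a target $\tilde q$ with $\tilde q|_b=\bar m_b w_b$, where $\bar m\in\U_0\cap\Delta\mathcal C$ and $w_b\in\U^{(b)}\cap\Delta\mathcal C'$ — any such $\tilde q$ lies in $\U\cap\Delta(\mathcal C\times\mathcal C')$, with marginal $\bar m$ — and bound $\Norm{p-\tilde q}_1=\sum_b\Norm{p|_b-\bar m_b w_b}_1$. Choose $\bar m$ by applying $\sin(\U_0^{\flat},\Delta\mathcal C)\ge c$ to $m$: since $\sigma(m)=\sum_{m_b<0}|m_b|$ and $\sigma_b\ge|m_b|$ whenever $m_b<0$ (as then $\sigma_b=\mathrm{posmass}(p|_b)+|m_b|$), this gives $\bar m$ with $\Norm{m-\bar m}_1\le 2c^{-1}\sigma(m)$. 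For $b$ with $m_b>0$, applying $\sin((\U^{(b)})^{\flat},\Delta\mathcal C')\ge c$ to $p|_b/m_b$ and rescaling yields $w_b$ with $\Norm{p|_b-m_b w_b}_1\le 2c^{-1}\sigma_b$, hence $\Norm{p|_b-\bar m_b w_b}_1\le 2c^{-1}\sigma_b+|m_b-\bar m_b|$; for $b$ with $m_b\le 0$ I take $w_b$ arbitrary and use $\Norm{p|_b-\bar m_b w_b}_1\le\Norm{p|_b}_1+\bar m_b$, which equals $2\sigma_b+|m_b-\bar m_b|$ when $m_b=0$ and $2(\sigma_b-|m_b|)+|m_b-\bar m_b|$ when $m_b<0$. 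Summing these per-slice estimates, and using $c\le 1$ together with $\sigma_b\ge|m_b|\ge 0$ for $m_b<0$, the bounds collapse to $\Norm{p-\tilde q}_1\le 2c^{-1}\sum_b\sigma_b+\Norm{m-\bar m}_1-2c^{-1}\sigma(m)\le c^{-1}\varepsilon$, which is exactly the assertion $\sin(\U^{\flat},\Delta(\mathcal C\times\mathcal C'))\ge c$.

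The only delicate point I anticipate is the bookkeeping for slices with $m_b\le 0$. Such a slice cannot be rescaled into a valid (nonnegative) conditional distribution, so it must be discarded and re-created, and simultaneously the first-coordinate marginal $m$ must be repaired; a naive estimate charges this mass twice and loses the constant. The resolution — visible in the collapse above — is that the negative mass $\sigma(m)=\sum_{m_b<0}|m_b|$ of the marginal is already contained inside $\varepsilon=2\sum_b\sigma_b$ via the inequality $\sigma_b\ge|m_b|$, so inflating $\varepsilon$ by the factor $c^{-1}$ leaves exactly the slack $2c^{-1}\sigma(m)$ that the marginal repair, costing $\Norm{m-\bar m}_1\le 2c^{-1}\sigma(m)$, consumes. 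The remaining work is routine: the simplex $\ell_1$ identities recorded above, the rescaling-invariance of $\sin$ used to transfer the hypothesis to $p|_b/m_b$, and the verification that the conditional chains $\U^{(b)}$ genuinely fall under the induction hypothesis with the correct indexing of prefixes.
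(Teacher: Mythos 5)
Your proof is correct, and it follows the same skeleton as the paper's: induction on $n$ reduced to a two-level ``composition lemma'' (the paper's Lemma \ref{lem:short-chain-s}), with the verification that the conditional pieces are again chain subspaces (you peel off the first coordinate where the paper peels off the last — immaterial). Where you genuinely diverge is in the proof of the composition lemma itself. The paper first proves a density lemma (Lemma \ref{lem:chain-generic}): the points $y\in\U^{\flat}$ that factor exactly as $y^{a}=y_{a}^{0}v(a)$ with each $v(a)\in\U_{1}(a)^{\flat}$ normalized to mass $1$ are dense, so it suffices to bound the ratio on those; for such $y$ the distance to $\Delta\B$ decomposes \emph{exactly} as $\sum_{a}\Abs{y_{a}^{0}}\,d_{1}(v(a),\Delta\CB 1)+d_{1}(y^{0},\Delta\CB 0)$, and the projection of each factor plus Lemma \ref{lem:sdp-distance} finishes it. You instead handle an arbitrary $p\in\U^{\flat}$ directly, including slices whose marginal $m_{b}$ is zero or negative and hence cannot be normalized; the key observation making this work is the inequality $\sigma_{b}\geq\Abs{m_{b}}$ for $m_{b}<0$, which shows the negative mass of the marginal is already accounted for inside the total distance $2\sum_{b}\sigma_{b}$, leaving exactly the slack needed to pay for repairing the marginal. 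Your route buys a more self-contained argument (no genericity/measure-zero step, no limiting argument to pass from the dense set to all of $\U^{\flat}$) at the price of the heavier per-slice bookkeeping; the paper's factorization buys an exact identity that makes the final inequality immediate, at the price of the density lemma. Both yield the same constant. One point worth making explicit when you write this up: the existence of $w_{b}\in\U^{(b)}\cap\Delta\mathcal{C}'$ for every $b$ (needed even when $m_{b}\leq0$) comes from the standing assumption that each $\U_{i}(a)$ lies in $\Gany{\R^{\CB i}}$, i.e.\ meets its simplex, by taking a product of conditional distributions — the same construction the paper uses in Lemma \ref{lem:chain-supp}.
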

That is, $\U_{i}(a)$ represents the set of admissible conditional
probability distributions for the $i$-th component of the outcome
given the condition $a$ for the previous components. Notice that
we define $\U$ s.t. any vector inside it vanishes outside the ``support''
of the $\U_{i}(a)$: this happens automatically for points in $\Delta\B$
but outside we could have cancellation between negative and positive
components. 

So, we can lower bound the sine of the ``composite'' subspace $\U$
by the minimum of the sines of the subspaces governing each conditional
probability. Now, we'll show a similar bound for $R$.

In the setting of Proposition \ref{prop:chain-s}, suppose that we
are given, for each $i<n$ and $a\in\PB i$, finite dimensional vector
spaces $\Z_{a}$ and $\W_{a}$, a compact set $\H_{a}\subseteq\Z_{a}$
and $F_{a}:\A\times\Z_{a}\times\R^{\CB i}\rightarrow\W_{a}$, a mapping
continuous in the first argument and bilinear in the second and third
arguments. Assume that for any $x\in\A$ and $\theta\in\H_{a}$, $\ker F_{ax\theta}\cap\Delta\CB i\ne\varnothing$
and $F_{ax\theta}$ is onto. In other words, each $a$ has its own
bandit. Let $\Z:=\bigoplus_{i,a}\Z_{a}$, $\W:=\bigoplus_{i,a}\W_{a}$
and define $F$ by

\[
F\left(x,z,y\right)_{a}:=F_{a}\left(x,z_{a},y^{a}\right)
\]

Define also $\H:=\prod_{i,a}\H_{a}$. We have,
\begin{prop}
\label{prop:chain-r}In the setting above

\[
R(\H,F)\leq2n\max_{i,a}R(\H_{a},F_{a})
\]
\end{prop}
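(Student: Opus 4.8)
The plan is to bound the operator norm of $F_{x\theta}\colon\R^{\B}\to\W$ uniformly over $x\in\A$, $\theta\in\H$. Recall $R(\H,F)=\max_{\theta\in\H}\Norm{\theta}$ with the norm on $\XZ$ given by $\Norm{z}=\max_{x\in\A}\Norm{\F_{xz}}$; since $\F_{x\theta}=F_{x\theta}$ for $\theta\in\Z\subseteq\XZ$, we get $R(\H,F)=\max_{x\in\A,\theta\in\H}\Norm{F_{x\theta}}$, the operator norm being with respect to the $\ell_{1}$-norm on $\R^{\B}$ (the norm induced by $\D=\Delta\B$) and the defined norm on $\W$. Put $\rho:=\max_{i,a}R(\H_{a},F_{a})$; fix $x$, $\theta$, $y$ and set $w:=F_{x\theta}y$, so $w_{a}=F_{ax\theta_{a}}y^{a}$ for every node $a$. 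By the definition of $\Norm{\cdot}$ on $\W$, it suffices to produce, for every $x'\in\A$ and $\theta'\in\H$, a vector $y'\in\R^{\B}$ with $F_{ax'\theta'_{a}}(y')^{a}=w_{a}$ for all $(i,a)$ and $\Norm{y'}\le 2n\rho\Norm{y}$: then $\Norm{w}_{\W}\le 2n\rho\Norm{y}$, hence $R(\H,F)\le 2n\rho$.

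Two facts drive the construction. (i) For each level $i<n$, the triples $(a,b,c)$ with $a\in\PB i$, $b\in\CB i$, $c\in\NB i$ enumerate $\B$, so $\sum_{a\in\PB i}\Norm{y^{a}}=\sum_{a\in\PB i}\sum_{b}\bigl|\sum_{c}y_{abc}\bigr|\le\Norm{y}$; together with $\Norm{w_{a}}_{\W_{a}}=\Norm{F_{ax\theta_{a}}y^{a}}_{\W_{a}}\le R(\H_{a},F_{a})\Norm{y^{a}}\le\rho\Norm{y^{a}}$ (valid as $\theta_{a}\in\H_{a}$) this yields $\sum_{a\in\PB i}\Norm{w_{a}}_{\W_{a}}\le\rho\Norm{y}$ for every $i$. (ii) For each $a$ of level $i$: since $\theta'_{a}\in\H_{a}$ and $F_{ax'\theta'_{a}}$ is onto, the definition of $\Norm{\cdot}_{\W_{a}}$ gives $v^{a}\in\R^{\CB i}$ with $F_{ax'\theta'_{a}}v^{a}=w_{a}$ and $\Norm{v^{a}}\le\Norm{w_{a}}_{\W_{a}}$, and $\ker F_{ax'\theta'_{a}}\cap\Delta\CB i\ne\varnothing$ supplies $q^{a}$ in that kernel with nonnegative coordinates summing to $1$ (so $\Norm{q^{a}}=1$).

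Now build $y'$ top-down over the prefix tree. Put $(y')^{()}:=v^{()}$, and for a node $a=a'b$ of level $i\ge1$ (with $a'\in\PB{i-1}$, $b\in\CB{i-1}$, and $(y')^{a'}$ already defined) put $(y')^{a}:=v^{a}+s_{a}q^{a}$, where $s_{a}:=(y')^{a'}_{b}-\sum_{c}v^{a}_{c}$. Then $(y')^{a}\in v^{a}+\ker F_{ax'\theta'_{a}}$, so $F_{ax'\theta'_{a}}(y')^{a}=w_{a}$, and $\sum_{c}(y')^{a}_{c}=(y')^{a'}_{b}$ by the choice of $s_{a}$. Finally set $y'_{a_{0}\cdots a_{n-1}}:=(y')^{a_{0}\cdots a_{n-2}}_{a_{n-1}}$. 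A downward induction on $i$, whose inductive step is exactly the identity $\sum_{c}(y')^{a'b}_{c}=(y')^{a'}_{b}$ just arranged, shows that the level-$i$ marginal of $y'$ at each node $a$ equals the constructed $(y')^{a}$; in particular $F_{x'\theta'}y'=w$.

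For the size of $y'$, $\Norm{(y')^{a}}\le\Norm{v^{a}}+\Abs{s_{a}}\le 2\Norm{v^{a}}+\Abs{(y')^{a'}_{b}}$ for a level-$i$ node $a=a'b$ with $i\ge1$. Writing $N_{i}:=\sum_{a\in\PB i}\Norm{(y')^{a}}$ and summing over $a\in\PB i$ (using the bijection $a\leftrightarrow(a',b)$), $N_{i}\le 2\sum_{a\in\PB i}\Norm{v^{a}}+N_{i-1}\le 2\rho\Norm{y}+N_{i-1}$ by fact (i) and $\Norm{v^{a}}\le\Norm{w_{a}}_{\W_{a}}$, with $N_{0}=\Norm{v^{()}}\le\rho\Norm{y}$. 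Hence $N_{i}\le(2i+1)\rho\Norm{y}$, and since the deepest-level marginals reconstitute $y'$ we get $\Norm{y'}=N_{n-1}\le(2n-1)\rho\Norm{y}\le 2n\rho\Norm{y}$, establishing the displayed claim and so $R(\H,F)\le 2n\rho=2n\max_{i,a}R(\H_{a},F_{a})$. The difficulty is purely organizational — verifying the global consistency of the top-down assembly via the induction above — and the reason the constant stays linear (rather than exponential) in $n$ is fact (i): the correction masses $\Abs{s_{a}}$ accumulate additively over the $n$ levels instead of compounding.
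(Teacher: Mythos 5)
Your proposal is correct and follows essentially the same route as the paper: the paper packages your per-node correction $s_{a}q^{a}$ as a separate lemma (adding a multiple of a kernel distribution to fix the marginal, at the cost of a factor $2$ plus the parent's marginal mass) and runs the same top-down construction by induction on $n$ to prove $\Norm w\leq2\sum_{i,a}\Norm{w_{a}}$, then combines it with your fact (i) exactly as you do. The only difference is organizational — you fuse the norm bound on $\W$ with its application and thereby save the root's correction term, getting $(2n-1)$ in place of $2n$.
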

See Appendix \ref{sec:prob-sys} for the proofs of the Propositions
in this subsection.

\section{\label{sec:Summary}Summary}

In this work, we proposed a new type of multi-armed bandit setting,
importantly different from both stochastic and aversarial bandits,
by replacing the probability distribution of a stochastic bandit by
an imprecise belief (credal set). We studied a particular form of
this setting, analogical to linear bandits. For this form, we proved
an upper bound on regret of the usual form $\tilde{O}(\sqrt{N})$,
where the coefficient includes the parameters $S$ and $R$ in addition
to the more intuitive $D_{W}$, $D_{Z}$ and $C$. We then studied
$S$ and $R$ and showed they satisfy reasonable upper bounds in many
natural cases. Finally, we demonstrated lower bounds that show that
the parameteres $S$, $R$ and $D_{Z}$ are in some sense necessary.

The author feels that we only scratched the surface of learing theory
with imprecise probability. Here are a few possible directions for
future research:
\begin{itemize}
\item The upper bound we proved is in expectation and non-anytime. Obvious
next steps are extending it to anytime (i.e. the correct choice of
parameters in the algorithm not depending on $N$), and high confidence
(i.e., showing that regret is low with high-probability rather than
just in expectation).
\item We don't know whether the parameter $D_{W}$ appearing in the upper
bound is necessary. We can try to either prove a matching lower bound,
or find an upper bound without $D_{W}$. 
\item For the special case of Example \ref{ex:linear-bandits}, Theorem
\ref{thm:main} yields the regret bounds $\tilde{O}(D^{2}\sqrt{N})$.
However, we know from \cite{DBLP:conf/colt/DaniHK08} that a regret
bound of $\tilde{O}(D\sqrt{N})$ is possible. It would be interesting
to close this gap in a natural way. More generally, there is still
significant room for tightening the gap between the lower bounds and
the upper bound, in terms of the exponents of the parameters.
\item It might be interesting to demonstrate lower bounds for the positive
gap setting. 
\item Our analysis ignored computational complexity. It seems important
to understand when IUCB or some other low regret algorithm can be
implemented efficiently.
\item The setting we study requires the hypothesis space to be naturally
embedded in the vector space $\Z$ and scales with its dimension $D_{Z}$.
It would be interesting to generalize this to regret bounds that scale
with a \emph{non-linear} dimension parameter, similarly to e.g. eluder
dimension (see \cite{Russo2013}) for stochastic bandits.
\item Another possible direction is extending this setting from bandits
to reinforcement learning. This seems very natural, since we can imagine
an ``MDP'' transition kernel that takes values in credal sets instead
of probability distributions. Indeed, the results of \cite{pmlr-v139-tian21b}
can already be interpeted in this way: but, they scale with the cardinalities
of states and actions, whereas it would be better to have bounds in
terms of dimension parameters.
\end{itemize}

\subsection*{Acknowledgements}

This work was supported by the Machine Intelligence Research Institute
in Berkeley, California, by Effective Ventures Foundation USA and
by the Hebrew University of Jerusalem scholarship for outstanding
students.

The author wishes to thank her advisor, Shai Shalev-Shwartz, for providing
important advice during this work, and in particular emphasizing the
need for examples. The author thanks Eran Malach for reading drafts
and making useful suggestions for the imporvement of Sections \ref{sec:Motivation}
and \ref{sec:Setting}. The author thanks Felix Harder for solving
a related technical question (see footnote \ref{fn:felix}). Finally,
the author thanks Alexander Appel and her spouse Marcus Ogren for
reading drafts, finding errors and making useful suggestions.

\bibliographystyle{plain}
\bibliography{Bandit_Algorithms,Introduction_to_Imprecise_Probabilities,Subspace_Identification_for_Linear_Syste,Giannopoulos,Traces_and_Emergence_of_Nonlinear_Progra,DaniHK08,Robbins,Cover_and_Thomas,Kershaw83,Convex_Polytopes,Ok,pmlr-v139-tian21b,ChenZG22a,ODonoghueLO21,Feige20,NIPS-2013-eluder-dimension-and-the-sample-complexity-of-optimistic-exploration-Bibtex,BubeckC12,AwerbuchK04,AudibertB09,Game_Theory_Alive,An_Introduction_to_Decision_Theory,CambridgeCore_Citation_13Dec2023,Lower_Previsions,Gilboa_89}

\appendix

\section{\label{sec:Proof-of-Main}Main Upper Bound}

The outline of the proof is as follows:

\global\long\def\Regc#1{\mathrm{\mathrm{Reg}}_{#1}}%

\global\long\def\CC{\bar{\C}}%

\global\long\def\DZ#1#2{d_{Z}\!\left(#1,#2\right)}%

\begin{itemize}
\item For any $x\in\A$, $y\in\D$ and $\theta\in\H$, $d_{Z}\left(\theta,\V xy\right)\leq R\cdot d_{Y}(y,K_{\theta}(x)^{\flat})$,
where $d_{Z}$ stands for minimal distance w.r.t. the norm on $\XZ$.
\item Let $\theta^{*}\in\H$ be the true hypothesis. For every cycle $k$
and $\delta>0$, the probability that $d_{Y}(\bar{y}_{k},K_{\theta^{*}}(x_{\theta_{k}}^{*})^{\flat})\geq\delta$
is upper bounded by $D_{W}\exp(-\Omega(1)\cdot D_{W}^{-\frac{5}{3}}\tau_{k}\delta^{2})$.
This is proved with the help of the theorem from \cite{10.1007/978-3-0348-9090-8_7}
and Azuma's inequality.
\item As a consequence, $\theta^{*}$ always remains inside the confidence
set $\C_{k}$ with high probability.
\item The regret $\Regc k$ accrued on cycle $k$ is upper bounded by 
\[
\frac{1}{2}\tau_{k}\DY{\bar{y}_{k}}{K_{\theta_{k}}(x_{\theta_{k}}^{*})^{+}}
\]
Intuitively, if $\bar{y}_{k}$ is near $K_{\theta_{k}}(x_{\theta_{k}}^{*})^{+}$
then this is consistent with $\theta_{k}$ being close to $\theta^{*}$,
in which case we would suffer little regret.
\item For any $y\in\D$, $\theta\in\H$ and $x\in\A$, $d_{Y}(y,K_{\theta}(x)^{\flat})\leq O(1)\cdot d_{Z}\left(\theta,\V xy\right)$
and hence $d(y,K_{\theta}(x)^{+})\leq O(1)\cdot S^{-1}d\left(\theta,\V xy\right)$.
In particular, for $\theta\in\C_{k}$ we get the upper bound
\[
O(1)\cdot S^{-1}\max_{\theta'\in\C_{k}}\DZ{\theta'}{\V xy}
\]
\item Let $\CC_{k}$ be the convex hull of $\C_{k}.$ Equation (\ref{eq:stop})
guarantees that each cycle reduces the width of $\CC_{k}$ along some
dimension by a factor of $2(D_{Z}+1)$. This implies that the volume
of $\CC_{k}$ is reduced by a factor of $\Omega(1)$.
\item A $D$-dimensional convex body of volume at most $(D^{-1}\epsilon)^{D}\cdot v_{D}$
must have width at most $2\epsilon$ along some dimension, where $v_{D}$
is the volume of the unit ball. This is a consequence of John's ellipsoid
theorem.
\item It follows that after enough cycles, $\CC_{k}$ becomes ``thin''
along some dimension. This allows us to effectively go down to a proper
subspace of $\Z$. The same reasoning applies in the lower dimension,
leading to another reduction, et cetera, which can repeat no more
than $D_{Z}$ times. As a result, we can bound the number of cycles
with substantial $\Regc k$. In these cycles, $\sum_{k}\Regc k$ can
be bounded using the inequalities above.
\end{itemize}
Now, let's dive into the details.

First, we want to bound distances in $\Z$ in terms of distances of
compatible outcomes in $\Y$.
\begin{lem}
\label{lem:dz-leq-dy}For any $x\in\A$, $y\in\D$ and $\theta\in\H$,

\[
\DZ{\theta}{\V xy}\leq R\cdot\DY y{K_{\theta}(x)^{\flat}}
\]
\end{lem}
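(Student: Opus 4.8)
The plan is to exhibit an explicit point $z\in\V xy$ with $\Norm{\theta-z}\le R\cdot\DY y{K_{\theta}(x)^{\flat}}$; since $\DZ{\theta}{\V xy}$ is by definition the infimum of $\Norm{\theta-z}$ over $z\in\V xy$, this yields the lemma. Throughout I identify $\theta\in\H\subseteq\Z$ with its image in $\XZ$ under the embedding, so that $\F_{x\theta}=F_{x\theta}$ and $\F_x^y(\theta)=\F(x,\theta,y)=F_{x\theta}y$.

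First I would choose $y'\in K_{\theta}(x)^{\flat}$ attaining the distance $\delta:=\DY y{K_{\theta}(x)^{\flat}}$; such a minimizer exists because, by Assumption \ref{ass:lin}, $K_{\theta}(x)^{\flat}$ is a nonempty closed affine subspace of the finite-dimensional space $\Y$ (alternatively one could take an $\varepsilon$-approximate minimizer and pass to the limit). Then $\Norm{y-y'}=\delta$, $F_{x\theta}y'=0$, and since $y,y'\in\mu^{-1}(1)$ we have $\mu(y-y')=0$. Set $w:=F_{x\theta}y=F_{x\theta}(y-y')$, the second equality using $F_{x\theta}y'=0$. The first key estimate is
\[
\Norm w\le\Norm{F_{x\theta}}\cdot\Norm{y-y'}=\Norm{F_{x\theta}}\cdot\delta\le\Norm{\theta}\cdot\delta\le R\delta,
\]
where $\Norm{F_{x\theta}}=\Norm{\F_{x\theta}}\le\max_{x'\in\A}\Norm{\F_{x'\theta}}=\Norm{\theta}$ by the definition of the norm on $\XZ$, and $\Norm{\theta}\le R$ because $\theta\in\H$.

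Next I would invoke the isometric embedding $\W\hookrightarrow\XZ$: writing $\bar w$ for the image of $w$, one has $\F_{x'\bar w}=w\otimes\mu$ for every $x'\in\A$ (since $F(x',0,\cdot)=0$), whence $\Norm{\bar w}=\Norm{w\otimes\mu}=\Norm w\cdot\Norm{\mu}=\Norm w$. Put $z:=\theta-\bar w$. Because $\mu(y)=1$,
\[
\F_x^y(\bar w)=(w\otimes\mu)(y)=\mu(y)\,w=w=\F_x^y(\theta),
\]
so $\F_x^y(z)=0$, i.e.\ $z\in\ker\F_x^y=\V xy$. Finally $\Norm{\theta-z}=\Norm{\bar w}=\Norm w\le R\delta$, which is exactly the desired bound.

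I do not anticipate a real obstacle; the proof is short, and the only things requiring care are keeping the three relevant norms distinct — the norm on $\Y$ (whose unit ball is the absolute convex hull of $\D$), the norm on $\W$, and the operator norm $\Y\to\W$ that defines the norm on $\XZ$ — and recording the fact $\Norm{\mu}=1$ (which holds because $\D\subseteq\mu^{-1}(1)$ and the unit ball of $\Y$ is the absolute convex hull of $\D$), together with the compatibility $\Norm{\bar w}=\Norm w$ of the $\W$- and $\XZ$-norms under the embedding.
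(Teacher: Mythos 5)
Your proof is correct and follows essentially the same route as the paper's: pick the nearest point $y'\in K_{\theta}(x)^{\flat}$, set $w:=F_{x\theta}y=F_{x\theta}(y-y')$, observe that $\theta-w\in\V xy$ because $\F(x,\theta-w,y)=F(x,\theta,y)-\mu(y)w=0$, and bound $\Norm w\leq\Norm{F_{x\theta}}\cdot\Norm{y-y'}\leq\Norm{\theta}\cdot\Norm{y-y'}\leq R\cdot\DY y{K_{\theta}(x)^{\flat}}$. The only difference is presentational — you spell out the isometric embedding $\W\hookrightarrow\XZ$ and the existence of the minimizer, which the paper treats as already established.
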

\begin{proof}
Denote $w:=F(x,\theta,y)$. Then,

\begin{align*}
\F\left(x,\theta-w,y\right) & =F(x,\theta,y)-\mu(y)w\\
 & =w-w\\
 & =0
\end{align*}

Therefore, $\theta-w\in\V xy$ and $d_{Z}\left(\theta,\V xy\right)\leq\Norm w$.

Let $y'\in K_{\theta}(x)^{\flat}$ be s.t. $\Norm{y-y'}=d_{Y}(y,K_{\theta}(x)^{\flat})$.
Then, $F_{x\theta}y'=0$ and hence $w=F_{x\theta}y=F_{x\theta}(y-y')$.
We get, 

\begin{align*}
\DZ{\theta}{\V xy} & \leq\Norm w\\
 & =\Norm{F_{x\theta}\left(y-y'\right)}\\
 & \leq\Norm{F_{x\theta}}\cdot\Norm{y-y'}\\
 & \leq\Norm{\theta}\cdot\Norm{y-y'}\\
 & \leq R\cdot\DY y{K_{\theta}(x)^{\flat}}
\end{align*}
\end{proof}
The following is a simple utility lemma:
\begin{lem}
\label{lem:normal}For any $\U\in\G d$ and $y\in\D$,

\[
d_{Y}(y,\U^{\flat})\leq4d_{Y}(y,\U)
\]
\end{lem}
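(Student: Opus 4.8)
The plan is to deduce the estimate from two elementary properties of the norm $\Norm{\cdot}$ on $\Y$, whose unit ball is $B:=\operatorname{absconv}(\D)=\operatorname{conv}\bigl(\D\cup(-\D)\bigr)$. First, $\mu$ is $1$-Lipschitz: since $\mu\equiv 1$ on $\D$ and $\mu\equiv -1$ on $-\D$, every $v\in B$ is a convex combination of points at which $|\mu|=1$, so $|\mu(v)|\le 1$, and by homogeneity $|\mu(v)|\le\Norm v$ for all $v\in\Y$. Second, $\D$ has $\Norm{\cdot}$-diameter at most $2$: for $a,b\in\D$ we have $\tfrac12(a-b)=\tfrac12 a+\tfrac12(-b)\in B$, hence $\Norm{a-b}\le 2$; also $\Norm y\le 1$ for every $y\in\D$ since $\D\subseteq B$.

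Now fix $y\in\D$ and $\U\in\G d$, and set $\rho:=d_Y(y,\U)$. Because $\U\cap\D$ is nonempty and contained in $\U^{\flat}$, the diameter bound already gives $d_Y(y,\U^{\flat})\le 2$, so the inequality holds whenever $\rho\ge\tfrac12$; it remains to handle $\rho<\tfrac12$. Pick $z\in\U$ with $\Norm{y-z}=\rho$. By the Lipschitz property, $|\mu(z)-1|=|\mu(z-y)|\le\rho<\tfrac12$, so $\mu(z)>\tfrac12$ and $z':=z/\mu(z)$ is a well-defined element of $\U^{\flat}$ (it lies in $\U$ since $\U$ is a subspace, and $\mu(z')=1$). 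Then I would estimate $\Norm{y-z'}\le\Norm{y-z}+\bigl|1-\mu(z)^{-1}\bigr|\,\Norm z$, where $\Norm z\le\Norm y+\rho\le 1+\rho$ and $\bigl|1-\mu(z)^{-1}\bigr|=|\mu(z)-1|/\mu(z)\le 2\rho$ since $\mu(z)\ge\tfrac12$. Hence $d_Y(y,\U^{\flat})\le\Norm{y-z'}\le\rho+2\rho(1+\rho)\le 4\rho$, the last step using $\rho\le\tfrac12$. Combining the two cases gives $d_Y(y,\U^{\flat})\le 4\,d_Y(y,\U)$.

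I do not expect any serious obstacle here; the only point needing care is the identification of the unit ball with $\operatorname{conv}(\D\cup(-\D))$ together with the observation that $\mu$ separates $\D$ from $-\D$, which is exactly what forces the operator norm of $\mu$ to be at most $1$. The constant $4$ emerges precisely from balancing the two cases at the threshold $\rho=\tfrac12$, and it could be improved only by a more delicate argument in the regime where $\rho$ is close to $\tfrac12$.
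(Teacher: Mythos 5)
Your proof is correct and follows essentially the same route as the paper's: both bound $\Norm{\mu}\leq 1$ to control $|\mu(z)-1|$ by $d_Y(y,\U)$, rescale the nearest point of $\U$ by $1/\mu(z)$ to land in $\U^{\flat}$, and dispose of the large-distance regime by the boundedness of $\D$ together with $\U\cap\D\ne\varnothing$. The only difference is cosmetic — you split on $\rho\geq\tfrac12$ versus $\rho<\tfrac12$ where the paper splits on $\mu(y^*)\geq\tfrac12$ versus $\mu(y^*)<\tfrac12$ — and both yield the constant $4$.
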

\begin{proof}
Denote $a:=d_{Y}(y,\U)$. Let $y^{*}\in\U$ be s.t. $\Norm{y-y^{*}}=a$.
Notice that $\Norm y\leq1$ and hence

\begin{align*}
\Norm{y^{*}} & \leq\Norm y+a\\
 & \leq1+a
\end{align*}

Moreover, since $\Norm{\mu}=1$ and $\mu(y)=1$

\begin{align*}
\Abs{\mu\left(y^{*}\right)-1} & =\Abs{\mu(y^{*})-\mu(y)}\\
 & \leq\Norm{\mu}\cdot\Norm{y^{*}-y}\\
 & =a
\end{align*}

We consider 2 cases.

\LyXZeroWidthSpace{}

\textbf{Case 1:} $\mu(y^{*})\geq\frac{1}{2}$. Then,

\[
\Abs{1-\frac{1}{\mu\left(y^{*}\right)}}\leq1
\]

Also,

\begin{align*}
\Abs{1-\frac{1}{\mu\left(y^{*}\right)}} & =\frac{\Abs{\mu\left(y^{*}\right)-1}}{\Abs{\mu\left(y^{*}\right)}}\\
 & \leq\frac{a}{\left(\frac{1}{2}\right)}\\
 & =2a
\end{align*}

Define $y^{\flat}\in\U^{\flat}$ by

\[
y^{\flat}:=\frac{y^{*}}{\mu\left(y^{*}\right)}
\]

We get

\begin{align*}
\DY y{\U^{\flat}} & \leq\Norm{y-y^{\flat}}\\
 & \leq\Norm{y-y^{*}}+\Norm{y^{*}-y^{\flat}}\\
 & =a+\Abs{1-\frac{1}{\mu\left(y^{*}\right)}}\cdot\Norm{y^{*}}\\
 & \leq a+\min\left(1,2a\right)\cdot(1+a)\\
 & \leq a+3a\\
 & =4a
\end{align*}

\textbf{Case 2:} $\mu(y^{*})<\text{\ensuremath{\frac{1}{2}}}$. Then,
$a>\frac{1}{2}$ and

\begin{align*}
\DY y{\U^{\flat}} & \leq\Norm y+\DY{\boldsymbol{0}}{\U^{\flat}}\\
 & \leq1+1\\
 & <4a
\end{align*}
\end{proof}
The next lemma uses a theorem from \cite{10.1007/978-3-0348-9090-8_7}:
\begin{thm}
[Giannopoulos]\emph{\label{thm:giannopoulos}}Let $\X$ be a finite
dimensional normed space. Denote $D:=\dim\X$. Then, the multiplicative
Banach-Mazur distance from $\X$ to $\R^{D}$ equipped with the $\ell_{1}$
norm\footnote{In \cite{10.1007/978-3-0348-9090-8_7}, the theorem is stated for
$\ell_{\infty}$, but it is the same by duality.} is at most $O(1)\cdot D^{\frac{5}{6}}$. That is, there exists a
linear isomorphism $G:\R^{D}\xrightarrow{\sim}\X$ s.t. for all $x\in\R^{D}$

\[
\Norm x_{1}\leq\Norm{Gx}\leq O(1)\cdot D^{\frac{5}{6}}\Norm x_{1}
\]
\end{thm}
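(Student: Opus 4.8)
The plan is to follow the route used by Giannopoulos (and, before him, by Bourgain and Szarek). First I would pass to the dual formulation: by the remark in the footnote it is equivalent to bound the Banach--Mazur distance $d(\X,\ell_{\infty}^{D})$, i.e.\ to produce $D$ linear functionals on $\X$ whose maximum reproduces $\lVert\cdot\rVert_{\X}$ up to a factor $O(D^{5/6})$; polarly, this is the same as sandwiching the unit ball of $\X^{\star}$ between a symmetric polytope with $2D$ vertices and a dilate of it by $O(D^{5/6})$.

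Second, I would put the unit ball $K$ of $\X^{\star}$ into John's position, so that $B_{2}^{D}\subseteq K\subseteq\sqrt{D}\,B_{2}^{D}$, and record the John decomposition of the identity: contact points $u_{1},\dots,u_{m}\in\partial K\cap\partial B_{2}^{D}$ with weights $c_{i}>0$, $\sum_{i}c_{i}=D$, $\sum_{i}c_{i}\,u_{i}\otimes u_{i}=I_{D}$. This already yields the trivial estimate $D=\sqrt{D}\cdot\sqrt{D}$ (from $d(\X,\ell_{2}^{D})\le\sqrt{D}$ and $d(\ell_{2}^{D},\ell_{\infty}^{D})=\sqrt{D}$), which is exactly what must be beaten.

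Third --- the crux --- I would invoke the proportional Dvoretzky--Rogers factorization: for a proportion $\lambda\in(0,1)$ one extracts from the contact points a subsystem spanning a subspace $E$ of dimension at least $\lambda D$ on which the $u_{i}$ behave like an orthonormal basis up to a factor $c(\lambda)$ that is independent of $D$ but deteriorates as $\lambda\to 1$. I would then peel off this nearly Euclidean block, approximating it with about $\dim E$ functionals at cost $\sim c(\lambda)\sqrt{\dim E}$, and recurse on a quotient of dimension at most $(1-\lambda)D$; after $\approx\log_{1/(1-\lambda)}D$ stages the dimension is exhausted, the functionals produced over all stages number $\Theta(D)$ in total, and the overall distortion is the product of the per-stage distortions. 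The exponent $5/6$ then emerges from optimizing $\lambda$: a large $\lambda$ keeps the number of stages small but inflates $c(\lambda)$, a small $\lambda$ tames $c(\lambda)$ but multiplies together many $\sqrt{\dim}$ factors, and the balance point is precisely what converts the trivial exponent $1$ (and the intermediate exponent, of order $\tfrac78$, obtainable with a cruder tuning) into $\tfrac56$.

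The hard part will be twofold. (i) Making the recursion honest at the level of the norm rather than of sections: a norm is not the maximum of its restrictions to the blocks of a direct-sum decomposition, so the peeling must genuinely be carried out through quotients and sections, with careful control of how the multiplicative errors compose across the $\approx\log D$ stages. (ii) Establishing the sharp form of the proportional Dvoretzky--Rogers estimate, i.e.\ the correct dependence of $c(\lambda)$ on $\lambda$, since it is exactly this dependence that pins down the final exponent. As this is a known, cited theorem, in the paper itself one simply refers to \cite{10.1007/978-3-0348-9090-8_7}.
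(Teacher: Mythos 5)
This theorem is not proved in the paper at all: it is an external result quoted verbatim from Giannopoulos's note and used as a black box (the footnote only records the passage from $\ell_{\infty}$ to $\ell_{1}$ by duality), so there is no internal proof to compare against. Your outline is a faithful high-level account of the published argument (John position, proportional Dvoretzky--Rogers factorization, iterated peeling with the $\lambda$-optimization that turns the trivial exponent $1$ and the Szarek--Talagrand exponent $\tfrac{7}{8}$ into $\tfrac{5}{6}$), and you correctly conclude that within this thesis the right move is simply to cite; just be aware that the two ``hard parts'' you defer are essentially the entire content of the theorem, so your text is a correct description of the proof strategy rather than a proof.
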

We now derive a concentration bound that shows the mean outcome over
a sequence of rounds with constant arm $x$ is close to $K_{\theta^{*}}(x)^{\flat}$
with high probability.
\begin{lem}
\label{lem:concentration}Suppose that arm $x\in\A$ is selected $\tau\in\N$
times in a row. Denote the average outcome by $\bar{y}\in\D$ and
let $\delta>0$. Then,

\begin{equation}
\Pr\left[\DY{\bar{y}}{K_{\theta^{*}}(x)^{\flat}}\geq\delta\right]\leq2D_{W}\exp\left(-\Omega(1)\cdot\frac{\tau\delta^{2}}{D_{W}^{\frac{5}{3}}}\right)\label{eq:concentration}
\end{equation}
\end{lem}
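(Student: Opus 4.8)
\emph{Proof plan.} The idea is to reduce the statement to a coordinatewise scalar Azuma--Hoeffding bound, after pushing the outcome average into $\W$. Relabel the $\tau$ rounds of the block as $n=1,\dots,\tau$, let $\mathcal{F}_{n}$ be the $\sigma$-algebra generated by the first $n$ outcomes, write $m_{n}:=\E{}{y_{n}\mid\mathcal{F}_{n-1}}$, and set $\bar{y}:=\frac{1}{\tau}\sum_{n=1}^{\tau}y_{n}\in\D$. Since $y_{n}$ is sampled from a distribution in $H(\theta^{*})(x)=\kappa\!\left(K_{\theta^{*}}(x)\right)$, Assumption~\ref{ass:expect} gives $m_{n}\in K_{\theta^{*}}(x)^{+}\subseteq K_{\theta^{*}}(x)=\ker F_{x\theta^{*}}$. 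First I would pass from the distance to the affine slice $K_{\theta^{*}}(x)^{\flat}$ to the distance to the full subspace: since $K_{\theta^{*}}(x)\in\G d$ by Assumption~\ref{ass:lin} and $\bar{y}\in\D$, Lemma~\ref{lem:normal} gives $\DY{\bar{y}}{K_{\theta^{*}}(x)^{\flat}}\le 4\,\DY{\bar{y}}{K_{\theta^{*}}(x)}$. Because $\DY{\bar{y}}{\ker F_{x\theta^{*}}}=\min\SC{\Norm{y'}}{F_{x\theta^{*}}y'=F_{x\theta^{*}}\bar{y}}$, and this is one of the terms in the maximum defining the norm on $\W$, we obtain $\DY{\bar{y}}{K_{\theta^{*}}(x)}\le\Norm{F_{x\theta^{*}}(\bar{y})}$.

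Next I would identify the martingale. Put $W_{n}:=F_{x\theta^{*}}(y_{n})\in\W$. Since $F_{x\theta^{*}}(m_{n})=0$, we have $F_{x\theta^{*}}(\bar{y})=\frac{1}{\tau}\sum_{n}W_{n}$ and $\E{}{W_{n}\mid\mathcal{F}_{n-1}}=F_{x\theta^{*}}(m_{n})=0$, so $(W_{n})$ is a $\W$-valued martingale difference sequence. Moreover $F_{x\theta^{*}}=\bar{F}_{x\theta^{*}}$ (the $\W$-component of $\theta^{*}$ is $0$), so, exactly as in the proof of Lemma~\ref{lem:dz-leq-dy}, $\Norm{F_{x\theta^{*}}}\le\Norm{\theta^{*}}\le R$ as an operator $(\Y,\Norm{\cdot})\to(\W,\Norm{\cdot})$; combined with $\Norm{y_{n}}\le 1$ this gives $\Norm{W_{n}}\le R$.

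The core step is a Banach-space Azuma bound for $\sum_{n}W_{n}$, obtained by rectifying the norm on $\W$. By Theorem~\ref{thm:giannopoulos} in its $\ell_{\infty}$ form, there is a linear isomorphism $G:\R^{D_{W}}\to\W$ with $\Norm{v}_{\infty}\le\Norm{Gv}\le O(1)\cdot D_{W}^{5/6}\Norm{v}_{\infty}$ for all $v$. Then $U_{n}:=G^{-1}(W_{n})$ is a martingale difference sequence in $\R^{D_{W}}$ with $\Norm{U_{n}}_{\infty}\le\Norm{W_{n}}\le R$. Applying the scalar Azuma--Hoeffding inequality to each coordinate and taking a union bound over the $D_{W}$ coordinates,
\[
\Pr\!\left[\Norm{\tfrac{1}{\tau}\textstyle\sum_{n}U_{n}}_{\infty}\ge s\right]\le 2D_{W}\exp\!\left(-\frac{\tau s^{2}}{2R^{2}}\right).
\]
Chaining the estimates above, the event $\DY{\bar{y}}{K_{\theta^{*}}(x)^{\flat}}\ge\delta$ forces $\Norm{\frac{1}{\tau}\sum_{n}U_{n}}_{\infty}\ge\delta/\!\left(4\,O(1)\,D_{W}^{5/6}\right)$, and substituting this value of $s$ yields the claimed inequality, the exponent being $-\Omega(1)\cdot\tau\delta^{2}/(R^{2}D_{W}^{5/3})$ since $\left(D_{W}^{5/6}\right)^{2}=D_{W}^{5/3}$. (When $\tau\delta^{2}$ is small the right-hand side is $\ge 1$ and the bound is vacuous, so no lower restriction on $\tau,\delta$ is needed.)

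The step I expect to be delicate is getting the power $D_{W}^{5/3}$ exactly right rather than a larger power of $D_{W}$. Two choices are essential: (i) projecting the outcome average into $\W$ via $F_{x\theta^{*}}$ rather than working directly with $\bar{y}\in\Y$ — otherwise the ambient dimension is $\dim\Y$, which can be arbitrarily larger than $D_{W}$; and (ii) using the $\ell_{\infty}$ (equivalently, by duality, $\ell_{1}$-on-the-dual) form of Giannopoulos's theorem, so that the distortion $D_{W}^{5/6}$ enters the exponent (squared) while the union bound over coordinates contributes only the harmless prefactor $2D_{W}$; routing the argument through the $\ell_{1}$ or $\ell_{2}$ norm of $\W$ would instead leak an extra factor of $D_{W}$ or $\sqrt{D_{W}}$ into the exponent. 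Everything else — the martingale property, the operator-norm estimate $\Norm{F_{x\theta^{*}}}\le R$, and the constant in Lemma~\ref{lem:normal} — is routine bookkeeping.
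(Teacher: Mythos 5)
Your architecture is close to the paper's (Giannopoulos plus coordinatewise Azuma--Hoeffding plus a union bound over $D_{W}$ coordinates, finished off with Lemma \ref{lem:normal}), but the route you take through $\W$ proves a strictly weaker inequality than the one stated: your exponent is $-\Omega(1)\cdot\tau\delta^{2}/(R^{2}D_{W}^{5/3})$, whereas the lemma claims $-\Omega(1)\cdot\tau\delta^{2}/D_{W}^{5/3}$ with no dependence on $R$. The factor $R^{2}$ enters because you push each outcome forward to $\W$ via $W_{n}:=F_{x\theta^{*}}(y_{n})$ and can only bound $\Norm{W_{n}}\leq\Norm{F_{x\theta^{*}}}\cdot\Norm{y_{n}}\leq R$; the operator norm of $F_{x\theta^{*}}$ genuinely can be as large as $R$, so the per-increment range in your Azuma application is $2R$ rather than $O(1)$. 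This is not a cosmetic loss: Lemma \ref{lem:generic-confidence} evaluates the concentration bound at $\tau\delta^{2}=\eta^{2}/R^{2}$, so your version would turn the $R^{2}$ in Lemma \ref{lem:confidence} into $R^{4}$ and force a correspondingly larger choice of $\eta$ in Theorem \ref{thm:main}.

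The fix, which is what the paper does, is to dualize \emph{before} applying Giannopoulos. Set $\U:=K_{\theta^{*}}(x)$ and apply Theorem \ref{thm:giannopoulos} (in its $\ell_{1}$ form) to $\X:=\Du{(\Y/\U)}$, viewed as a $D_{W}$-dimensional subspace of $\Du{\Y}$ with the inherited dual norm. The resulting functionals $\alpha_{i}:=Ge_{i}$ vanish on $\U$, so $\alpha_{i}(y_{T+n})$ is already centered, and its range is controlled by the geometry of $\D$ alone: $\Abs{\alpha_{i}(y)}\leq\Norm{\alpha_{i}}\leq O(1)\cdot D_{W}^{5/6}$ for every $y\in\D$, since $\Norm{y}\leq1$. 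No operator norm of $F_{x\theta^{*}}$, hence no $R$, appears. One then recovers the distance via the duality $\DY{\bar{y}}{\U}=\max_{\beta\in\X:\Norm{\beta}\leq1}\beta(\bar{y})\leq\max_{i}\Abs{\alpha_{i}(\bar{y})}$, using the inclusion of unit balls $\{v:\Norm{Gv}\leq1\}\subseteq\{v:\Norm{v}_{1}\leq1\}$. The rest of your argument (the martingale structure, the $D_{W}^{5/3}$ bookkeeping, and the reduction from $K_{\theta^{*}}(x)^{\flat}$ to $K_{\theta^{*}}(x)$ via Lemma \ref{lem:normal}) is correct as written.
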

\begin{proof}
Let $\U:=K_{\theta^{*}}(x)$, $\X:=\Du{(\Y/\U)}$. $\X$ is a subspace
of $\Du{\Y}$, and therefore has a natural norm. Observe that $\dim\X=D_{W}$,
and apply Theorem \ref{thm:giannopoulos} to get $G:\R^{D_{W}}\rightarrow\X$.
Let $T$ be the time our sequence of rounds begins, so that for each
$n<\tau$, $y_{T+n}$ is the outcome of the $n$-th round in the sequence.
For each $i<D_{W}$, let $e_{i}\in\R^{D_{W}}$ be the $i$-th canonical
basis vector and $\alpha_{i}:=Ge_{i}$. Then, $\alpha_{i}(y_{T+n})$
is a random variable whose range is contained in an interval of length

\[
\Abs{\max_{y\in\D}\alpha_{i}(y)-\min_{y\in\D}\alpha_{i}(y)}\leq2\Norm{\alpha_{i}}\leq O(1)\cdot D_{W}^{\frac{5}{6}}
\]

By the Azuma-Hoeffding inequality,

\[
\Pr\left[\Abs{\alpha_{i}(\bar{y})}\geq\delta\right]\leq2\exp\left(-\Omega(1)\cdot\frac{\tau\delta^{2}}{D_{W}^{\frac{5}{3}}}\right)
\]

Applying a union bound, we get

\[
\Pr\left[\max_{i}\Abs{\alpha_{i}(\bar{y})}\geq\delta\right]\leq2D_{W}\exp\left(-\Omega(1)\cdot\frac{\tau\delta^{2}}{D_{W}^{\frac{5}{3}}}\right)
\]

Observe that

\begin{align*}
\DY{\bar{y}}{\U} & =\max_{\beta\in\X:\Norm{\beta}\leq1}\beta(\bar{y})\\
 & =\max_{v\in\R^{D_{W}}:\Norm{Gv}\leq1}Gv(\bar{y})\\
 & \leq\max_{v\in\R^{D_{W}}:\Norm v_{1}\leq1}Gv(\bar{y})\\
 & =\max_{v\in\R^{D_{W}}:\Norm v_{1}\leq1}\sum_{i<D_{W}}v_{i}\alpha_{i}(\bar{y})\\
 & =\max_{i<D_{W}}\Abs{\alpha_{i}(\bar{y})}
\end{align*}

Hence,

\[
\Pr\left[\DY{\bar{y}}{\U}\geq\delta\right]\leq2D_{W}\exp\left(-\Omega(1)\cdot\frac{\tau\delta^{2}}{D_{W}^{\frac{5}{3}}}\right)
\]

Since $\bar{y}\in\D$, Lemma \ref{lem:normal} implies that $d_{Y}(\bar{y},\U^{\flat})\leq4d_{Y}(\bar{y},\U)$.
Redefining $\delta$, we get (\ref{eq:concentration}).
\end{proof}
From now on, we'll use the notation $\Pr[A;B]$ to mean ``probability
of the event $A$ and $B$'' and $\E{}{f;B}$ to mean ``the expected
value of $f\cdot\boldsymbol{1}_{B}$, where $\boldsymbol{1}_{B}$
is defined to be $1$ when event $B$ happens and $0$ when event
$B$ doesn't happen''.

The next lemma shows that, given a concentration bound such as in
Lemma \ref{lem:concentration}, the true hypothesis remains in the
confidence set with high probability. We state it for an arbitrary
bound that scales with $\tau\delta^{2}$ because it will be used again
in Appendix \ref{sec:simplex}.
\begin{lem}
\label{lem:generic-confidence}Assume that a given bandit is s.t.,
in the setting of Lemma \ref{lem:concentration}, it always holds
that

\[
\Pr\left[\DY{\bar{y}}{K_{\theta^{*}}(x)^{\flat}}\geq\delta\right]\leq f\left(\tau\delta^{2}\right)
\]

Here, $f:\R\rightarrow\R$ is some continuous function. 

Assume agent policy $\IUCB^{\eta}$ and \emph{any} nature policy.
Then, for any $N\in\N$

\begin{align*}
\Pr\left[\exists k:T_{k}<N\text{ and }\theta^{*}\not\in\C_{k}\right] & \leq\frac{1}{2}N(N+1)f\left(\frac{\eta^{2}}{R^{2}}\right)
\end{align*}
\end{lem}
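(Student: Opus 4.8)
The plan is to union-bound over the ``shape'' of the first cycle on which $\theta^{*}$ leaves the confidence set. Since $\mathcal{C}_{0}=\H\ni\theta^{*}$, if $\theta^{*}$ ever escapes there is a least index $m\ge 1$ with $\theta^{*}\notin\mathcal{C}_{m}$, and then $\theta^{*}\in\mathcal{C}_{m-1}$. First I would observe that, given $\theta^{*}\in\mathcal{C}_{m-1}$, the update rule of Algorithm~\ref{alg:IUCB} makes $\theta^{*}\notin\mathcal{C}_{m}$ equivalent to $d_{Z}\!\left(\theta^{*},\mathcal{V}(x_{\theta_{m-1}}^{*},\bar y_{m-1})\right)>\eta/\sqrt{\tau_{m-1}}$; applying Lemma~\ref{lem:dz-leq-dy} with $x=x_{\theta_{m-1}}^{*}$, $y=\bar y_{m-1}\in\D$ (an average of outcomes, hence in the convex set $\D$) and $\theta=\theta^{*}$, this forces
\[
d_{Y}\!\left(\bar y_{m-1},K_{\theta^{*}}(x_{\theta_{m-1}}^{*})^{\flat}\right)>\frac{\eta}{R\sqrt{\tau_{m-1}}}.
\]
Moreover $T_{m}=T_{m-1}+\tau_{m-1}<N$, so the cycle $k:=m-1$ satisfies $T_{k}+\tau_{k}\le N$.

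Next I would pass from the cycle index to the pair $(T,\tau):=(T_{k},\tau_{k})$. Distinct cycles have distinct start times, so for each $(T,\tau)$ with $T\ge 0$, $\tau\ge 1$, $T+\tau\le N$ there is at most one cycle starting at $T$ with duration exactly $\tau$. Writing $B_{T,\tau}$ for the event ``some cycle starts at round $T$, lasts exactly $\tau$ rounds, and its average outcome $\bar y$ obeys $d_{Y}(\bar y,K_{\theta^{*}}(x)^{\flat})>\eta/(R\sqrt\tau)$'' (with $x$ that cycle's arm), the previous paragraph shows the event $\{\exists k:T_{k}<N,\ \theta^{*}\notin\mathcal{C}_{k}\}$ is contained in $\bigcup_{T+\tau\le N}B_{T,\tau}$; a union bound then gives $\Pr[\exists k:T_{k}<N,\ \theta^{*}\notin\mathcal{C}_{k}]\le\sum_{T+\tau\le N}\Pr[B_{T,\tau}]$, where the index set has exactly $\sum_{s=1}^{N}s=\tfrac12 N(N+1)$ elements.

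It then remains to prove $\Pr[B_{T,\tau}]\le f(\eta^{2}/R^{2})$ for each fixed $(T,\tau)$. I would condition on the history $\mathcal{F}_{T}$ up to round $T$: the event ``a cycle starts at round $T$'' is $\mathcal{F}_{T}$-measurable, and on it the cycle's arm $x$ is $\mathcal{F}_{T}$-measurable while the algorithm replays $x$ at rounds $T,T+1,\dots$ until that cycle ends. Hence on the sub-event ``the cycle lasts at least $\tau$ rounds'' the outcomes $y_{T},\dots,y_{T+\tau-1}$ all come from arm $x$ and, by Assumption~\ref{ass:expect} together with compatibility of the nature policy, each of their conditional means (given the preceding history) lies in $K_{\theta^{*}}(x)^{+}\subseteq K_{\theta^{*}}(x)^{\flat}$. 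This is exactly the situation covered by the hypothesized concentration bound — whose proof (as in Lemma~\ref{lem:concentration}) uses only the martingale/Azuma structure, hence holds conditionally — applied with $\delta=\eta/(R\sqrt\tau)$, bounding the conditional probability of the deviation by $f\!\left(\tau\cdot\eta^{2}/(R^{2}\tau)\right)=f(\eta^{2}/R^{2})$. Since $B_{T,\tau}$ (duration exactly $\tau$) is contained in ``duration at least $\tau$'', we get $\Pr[B_{T,\tau}]\le\Pr[\text{cycle starts at }T]\cdot f(\eta^{2}/R^{2})\le f(\eta^{2}/R^{2})$, and summing over the $\tfrac12 N(N+1)$ pairs completes the argument.

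The one place that needs genuine care — and the main obstacle — is that $\tau_{k}$ is a random stopping time rather than a fixed count, so Lemma~\ref{lem:concentration} cannot be invoked directly ``on cycle $k$''; the device of unioning over fixed pairs $(T,\tau)$ and conditioning on $\mathcal{F}_{T}$, where both the arm and the window length become deterministic, is precisely what makes the concentration estimate applicable. Everything else is routine: the implication chain from ``$\theta^{*}$ leaves $\mathcal{C}_{k}$'' to ``$\bar y_{k}$ is far from $K_{\theta^{*}}(x_{\theta_{k}}^{*})^{\flat}$'' via Lemma~\ref{lem:dz-leq-dy}, and the count $\tfrac12 N(N+1)$ of admissible $(T,\tau)$.
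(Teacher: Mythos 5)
Your proposal is correct and follows essentially the same route as the paper: a union bound over the $\tfrac12 N(N+1)$ pairs of (start round, duration), conversion of the escape condition into a deviation of $\bar y$ from $K_{\theta^{*}}(x)^{\flat}$ via Lemma~\ref{lem:dz-leq-dy}, and the substitution $\delta=\eta/(R\sqrt{\tau})$ so that $\tau\delta^{2}=\eta^{2}/R^{2}$. The paper phrases it contrapositively (defining events $B_{ij}$ for all $0\le i<j\le N$ and noting that if none occur then $\theta^{*}$ stays in every $\C_{k}$), but this is the same argument; your explicit conditioning on $\mathcal{F}_{T}$ to handle the randomness of $\tau_{k}$ is a point the paper leaves implicit, and it is a welcome clarification rather than a deviation.
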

\begin{proof}
For any $i,j\in\N$ s.t. $i<j$, denote $\bar{y}_{ij}:=\frac{1}{j-i}\sum_{n=i}^{j-1}y_{n}$.
Denote $A_{ij}$ the event $\forall i\leq n<j:x_{n}=x_{i}$. By the
assumption,

\global\long\def\CP#1#2{\Pr\left[#1\middle|#2\right]}%

\[
\Pr\left[\DY{\bar{y}_{ij}}{K_{\theta^{*}}\left(x_{i}\right)^{\flat}}\geq\delta;A_{ij}\right]\leq f\left((j-i)\delta^{2}\right)
\]

Using Lemma \ref{lem:dz-leq-dy}, we get

\[
\Pr\left[\frac{1}{R}\cdot\DZ{\theta^{*}}{\V{x_{i}}{\bar{y}_{ij}}}\geq\delta;A_{ij}\right]\leq f\left((j-i)\delta^{2}\right)
\]

Substituting $\delta:=\frac{1}{R}\cdot\frac{\eta}{\sqrt{j-i}}$, this
becomes

\[
\Pr\left[\DZ{\theta^{*}}{\V{x_{i}}{\bar{y}_{ij}}}\geq\frac{\eta}{\sqrt{j-i}};A_{ij}\right]\leq f\left(\frac{\eta^{2}}{R^{2}}\right)
\]

Denote $B_{ij}$ the event inside the probability operator on the
left hand side. Taking a union bound, we get

\begin{align*}
\Pr\left[\exists0\leq i<j\leq N:B_{ij}\right] & \leq\frac{N(N+1)}{2}\cdot f\left(\frac{\eta^{2}}{R^{2}}\right)
\end{align*}

From the definition of $\C_{k}$, it is clear that if $\forall0\leq i<j\leq N:\text{not-}B_{ij}$,
then $\forall k:\text{if }T_{k}<N\text{ then }\theta^{*}\in\C_{k}$.
\end{proof}
Putting Lemma \ref{lem:concentration} and Lemma \ref{lem:generic-confidence}
together immediately gives us: 
\begin{lem}
\label{lem:confidence}Assume agent policy $\IUCB^{\eta}$ and \emph{any}
nature policy. Then, for any $N\in\N$

\begin{align*}
\Pr\left[\exists k:T_{k}<N\text{ and }\theta^{*}\not\in\C_{k}\right] & \leq D_{W}N(N+1)\exp\left(-\Omega(1)\cdot\frac{\eta^{2}}{R^{2}D_{W}^{\frac{5}{3}}}\right)
\end{align*}
\end{lem}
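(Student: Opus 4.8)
The plan is to derive Lemma~\ref{lem:confidence} by directly composing the two preceding lemmas, with no new argument required. First I would observe that Lemma~\ref{lem:concentration} verifies the hypothesis of Lemma~\ref{lem:generic-confidence}: in the setting where a fixed arm $x\in\A$ is pulled $\tau$ times in a row producing average outcome $\bar y$, it gives
\[
\Pr\left[\DY{\bar y}{K_{\theta^*}(x)^{\flat}}\geq\delta\right]\leq f(\tau\delta^2),\qquad f(s):=2D_W\exp\left(-\Omega(1)\cdot\frac{s}{D_W^{5/3}}\right).
\]
This $f$ is continuous, and the bound of Lemma~\ref{lem:concentration} holds for every choice of arm $x$ and repetition count $\tau$, which is precisely the uniform ``it always holds'' requirement in Lemma~\ref{lem:generic-confidence}.

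Next I would invoke Lemma~\ref{lem:generic-confidence} with this particular $f$, under the agent policy $\IUCB^{\eta}$ and an arbitrary nature policy, to obtain for every $N\in\N$
\[
\Pr\left[\exists k:T_k<N\text{ and }\theta^*\notin\C_k\right]\leq\frac12 N(N+1)\,f\!\left(\frac{\eta^2}{R^2}\right).
\]
Substituting the definition of $f$ and cancelling the factor $\tfrac12$ against the leading $2$ in $f$ yields
\[
\frac12 N(N+1)\cdot 2D_W\exp\left(-\Omega(1)\cdot\frac{\eta^2/R^2}{D_W^{5/3}}\right)=D_WN(N+1)\exp\left(-\Omega(1)\cdot\frac{\eta^2}{R^2D_W^{5/3}}\right),
\]
which is exactly the asserted bound.

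Honestly, there is no real obstacle in this step: it is pure bookkeeping. The only points worth a sentence of care are (i) that the constant hidden inside $\Omega(1)$ need not be the same as in Lemma~\ref{lem:concentration} — but since $\Omega(1)$ only asserts the existence of some positive constant, absorbing the effect of the substitution $\delta\mapsto\tfrac1R\cdot\tfrac{\eta}{\sqrt{\tau}}$ and the $R^{-2}$ rescaling is harmless — and (ii) that the event ``$\exists k$ with $T_k<N$ and $\theta^*\notin\C_k$'' in the conclusion of Lemma~\ref{lem:generic-confidence} is verbatim the event in the statement of Lemma~\ref{lem:confidence}, so nothing further need be checked.
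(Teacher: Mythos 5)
Your proposal is correct and is exactly the paper's own argument: the paper derives Lemma~\ref{lem:confidence} by "putting Lemma~\ref{lem:concentration} and Lemma~\ref{lem:generic-confidence} together," i.e.\ instantiating $f(s)=2D_W\exp(-\Omega(1)\cdot s\,D_W^{-5/3})$ in Lemma~\ref{lem:generic-confidence} and simplifying, precisely as you do. The arithmetic (the factor $\tfrac12$ cancelling the $2$, and the harmlessness of the $\Omega(1)$ constant) checks out, so nothing further is needed.
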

Now we show that the regret of a sequence of rounds with an optimistically
selected arm can be bounded in terms of the minimal distance of the
mean outcome from the possible expected outcomes of the optimistic
hypothesis. We denote $r^{*}:=\ME{\theta^{*}}r{x_{\theta^{*}}^{*}}$.
\begin{lem}
\label{lem:regret}Suppose $\theta\in\H$ is s.t. 

\begin{equation}
\ME{\theta}r{x_{\theta}^{*}}\geq r^{*}\label{eq:optimism}
\end{equation}

Assume that the action $x_{\theta}^{*}\in\A$ is selected $\tau\in\N$
times in a row, starting from round $T\in\N$. Denote the average
outcome by $\bar{y}$. Then,

\global\long\def\CE#1#2{\mathrm{E}\left[#1\middle|#2\right]}%

\begin{equation}
\tau r^{*}-\sum_{n<\tau}r\left(x_{\theta}^{*},y_{T+n}\right)\leq\tau\DY{\bar{y}}{K_{\theta}\left(x_{\theta}^{*}\right)^{+}}\label{eq:reg-leq-dist}
\end{equation}
\end{lem}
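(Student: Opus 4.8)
The plan is to reduce the left-hand side of (\ref{eq:reg-leq-dist}) to a statement about the single average outcome $\bar{y}$ via convexity, and then to bound the resulting scalar gap using the $1$-Lipschitz property of $r$ together with the optimism hypothesis (\ref{eq:optimism}). Throughout, write $x:=x_{\theta}^{*}$ for brevity, and note first that $\bar{y}=\frac{1}{\tau}\sum_{n<\tau}y_{T+n}$ is a convex combination of points of $\D$, hence $\bar{y}\in\D$, so $r(x,\bar{y})$ is well defined.

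The first step is to invoke the convexity part of Assumption \ref{ass:r}: since $y\mapsto r(x,y)$ is convex, Jensen's inequality gives $\frac{1}{\tau}\sum_{n<\tau}r(x,y_{T+n})\geq r(x,\bar{y})$, i.e. $\sum_{n<\tau}r(x,y_{T+n})\geq\tau\,r(x,\bar{y})$. Consequently the left-hand side of (\ref{eq:reg-leq-dist}) is at most $\tau\bigl(r^{*}-r(x,\bar{y})\bigr)$, and it remains to prove $r^{*}-r(x,\bar{y})\leq\DY{\bar{y}}{K_{\theta}(x)^{+}}$.

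The second step handles this scalar inequality. Let $y'\in K_{\theta}(x)^{+}$ be a point realizing the distance, so that $\lVert\bar{y}-y'\rVert=\DY{\bar{y}}{K_{\theta}(x)^{+}}$; it exists because $K_{\theta}(x)^{+}=K_{\theta}(x)\cap\D$ is nonempty by Assumption \ref{ass:lin}, closed and convex. By Assumptions \ref{ass:expect} and \ref{ass:r} we have $\ME{\theta}rx=\min_{y\in K_{\theta}(x)^{+}}r(x,y)$, and since $y'\in K_{\theta}(x)^{+}$ this yields $r(x,y')\geq\ME{\theta}rx\geq r^{*}$, the last inequality being exactly the optimism hypothesis (\ref{eq:optimism}). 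Combining with the $1$-Lipschitz property of $r$ in its second argument (Assumption \ref{ass:r}),
\[
r^{*}-r(x,\bar{y})\;\leq\;r(x,y')-r(x,\bar{y})\;\leq\;\lVert y'-\bar{y}\rVert\;=\;\DY{\bar{y}}{K_{\theta}(x)^{+}},
\]
and multiplying by $\tau$ gives (\ref{eq:reg-leq-dist}).

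There is no deep obstacle here; the only thing to watch is that the inequality directions line up correctly. Convexity pushes $\sum_{n<\tau}r(x,y_{T+n})$ \emph{upward} to $\tau\,r(x,\bar{y})$, which is the favorable direction for a lower bound on the summed reward, while optimism guarantees that $r^{*}$ does not exceed the value of $r(x,\cdot)$ at \emph{any} point of $K_{\theta}(x)^{+}$ — in particular at the point nearest to $\bar{y}$ — so the Lipschitz estimate precisely closes the gap. The main care is therefore in bookkeeping rather than in any substantive difficulty.
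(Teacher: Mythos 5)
Your proof is correct and follows essentially the same route as the paper's: Jensen's inequality via convexity of $r$ in the second argument reduces the sum to $\tau\,r(x_{\theta}^{*},\bar{y})$, and then the $1$-Lipschitz property combined with the optimism hypothesis (applied at the point of $K_{\theta}(x_{\theta}^{*})^{+}$ nearest to $\bar{y}$) closes the gap. Making the nearest point $y'$ explicit is just a minor stylistic difference from the paper's phrasing of the Lipschitz step.
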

\begin{proof}
By convexity of $r$,

\[
\frac{1}{\tau}\sum_{n<\tau}r\left(x_{\theta}^{*},y_{T+n}\right)\geq r\left(x_{\theta}^{*},\bar{y}\right)
\]

Since $r$ is 1-Lipschitz, this implies

\begin{align*}
\frac{1}{\tau}\sum_{n<\tau}r\left(x_{\theta}^{*},y_{T+n}\right) & \geq\min_{y\in K_{\theta}\left(x_{\theta}^{*}\right)^{+}}r\left(x_{\theta}^{*},y\right)-\DY{\bar{y}}{K_{\theta}\left(x_{\theta}^{*}\right)^{+}}\\
 & =\ME{\theta}r{x_{\theta}^{*}}-\DY{\bar{y}}{K_{\theta}\left(x_{\theta}^{*}\right)^{+}}
\end{align*}

Using (\ref{eq:optimism}) and rearranging, we get (\ref{eq:reg-leq-dist}).
\end{proof}
The following is a simple geometric observation through which ``sines''
make it into the regret bound.
\begin{lem}
\label{lem:sin}Let $\Af$ be a finite-dimensional affine space. Assume
that the associated vector space $\Vec$ is equipped with a norm.
Let $\D\subseteq\Af$ be a closed convex set and $\SubAf\subseteq\Af$
an affine subspace. Assume $\sin\left(\SubAf,\D\right)>0$ and consider
any $y\in\D$. Then,

\begin{equation}
d\left(y,\SubAf\cap\D\right)\leq\left(\frac{1}{\sin\left(\SubAf,\D\right)}+1\right)d\left(y,\SubAf\right)\label{eq:lem-sin}
\end{equation}
\end{lem}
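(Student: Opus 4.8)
The plan is to reduce the inequality to the definition of $\sin(\mathfrak{B},\mathcal{D})$ combined with two applications of the triangle inequality. Throughout I take for granted (as is implicit in $\sin(\mathfrak{B},\mathcal{D})$ being a well-defined positive number) that $\mathfrak{B}\cap\mathcal{D}\ne\varnothing$; note also that all the minimizers below exist because $\mathfrak{B}$, $\mathcal{D}$ and $\mathfrak{B}\cap\mathcal{D}$ are closed and the ambient normed space is finite-dimensional. First I would pick $p\in\mathfrak{B}$ attaining $\lVert y-p\rVert=d(y,\mathfrak{B})$, and then split into two cases according to whether $p$ lies in $\mathcal{D}$.

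In the easy case $p\in\mathcal{D}$: then $p\in\mathfrak{B}\cap\mathcal{D}$, so $d(y,\mathfrak{B}\cap\mathcal{D})\le\lVert y-p\rVert=d(y,\mathfrak{B})$, which is bounded by the right-hand side of (\ref{eq:lem-sin}) since $\sin(\mathfrak{B},\mathcal{D})^{-1}\ge 0$. In the main case $p\in\mathfrak{B}\setminus\mathcal{D}$: the definition of the sine gives directly $d(p,\mathfrak{B}\cap\mathcal{D})\le\sin(\mathfrak{B},\mathcal{D})^{-1}\,d(p,\mathcal{D})$. Since $y\in\mathcal{D}$ we have $d(p,\mathcal{D})\le\lVert p-y\rVert=d(y,\mathfrak{B})$, hence $d(p,\mathfrak{B}\cap\mathcal{D})\le\sin(\mathfrak{B},\mathcal{D})^{-1}\,d(y,\mathfrak{B})$. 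Finally, picking $q^{*}\in\mathfrak{B}\cap\mathcal{D}$ with $\lVert p-q^{*}\rVert=d(p,\mathfrak{B}\cap\mathcal{D})$ and using $\lVert y-q^{*}\rVert\le\lVert y-p\rVert+\lVert p-q^{*}\rVert$ gives $d(y,\mathfrak{B}\cap\mathcal{D})\le d(y,\mathfrak{B})+d(p,\mathfrak{B}\cap\mathcal{D})\le\bigl(\sin(\mathfrak{B},\mathcal{D})^{-1}+1\bigr)d(y,\mathfrak{B})$, which is exactly (\ref{eq:lem-sin}).

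The only things to handle with a word of care are the degenerate configuration $\mathfrak{B}\subseteq\mathcal{D}$ (then $\mathfrak{B}\cap\mathcal{D}=\mathfrak{B}$ and the claimed inequality holds with equality, the coefficient being $1$), and the remark that $d(p,\mathcal{D})$ and $d(p,\mathfrak{B}\cap\mathcal{D})$ are genuine minima so the quotient in the definition of $\sin$ applies to this particular $p$. I do not expect any real obstacle: the lemma is essentially just unpacking the definition of $\sin(\mathfrak{B},\mathcal{D})$ as a ratio of distances and chaining it with the triangle inequality; the single genuine idea is the case split on whether the foot of the perpendicular from $y$ to $\mathfrak{B}$ happens to land inside $\mathcal{D}$.
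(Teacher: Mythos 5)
Your proof is correct and is essentially identical to the paper's: both pick the nearest point $y'\in\SubAf$ to $y$, split on whether $y'\in\D$, and in the nontrivial case chain the definition of $\sin(\SubAf,\D)$ at $y'$ with $d(y',\D)\leq\Norm{y'-y}$ and the triangle inequality. No gaps.
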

\begin{proof}
Let $y'\in\SubAf$ be s.t. $\Norm{y-y'}=d\left(y,\SubAf\right)$.
If $y'\in\D$ then $d(y,\SubAf\cap\D)=d(y,\SubAf)$ and (\ref{eq:lem-sin})
is true. Now, suppose $y'\not\in\D$. Then, by definition of $\sin\left(\SubAf,\D\right)$,

\[
\frac{d\left(y',\D\right)}{d\left(y',\SubAf\cap\D\right)}\geq\sin\left(\SubAf,\D\right)
\]

We get

\begin{align*}
d\left(y,\SubAf\cap\D\right) & \leq\Norm{y-y'}+d\left(y',\SubAf\cap\D\right)\\
 & \leq\Norm{y-y'}+\frac{d\left(y',\D\right)}{\sin\left(\SubAf,\D\right)}\\
 & \leq\Norm{y-y'}+\frac{\Norm{y'-y}}{\sin\left(\SubAf,\D\right)}\\
 & =\left(1+\frac{1}{\sin\left(\SubAf,\D\right)}\right)d\left(y,\SubAf\right)
\end{align*}
\end{proof}
Now we derive a sort of converse to Lemma \ref{lem:dz-leq-dy}.
\begin{lem}
\label{lem:dy-leq-dz}For any $y\in\D$, $\theta\in\H$ and $x\in\A$

\begin{equation}
\DY y{K_{\theta}(x)^{\flat}}\leq4d_{Z}\left(\theta,\V xy\right)\label{eq:dy-leq-dz}
\end{equation}
\end{lem}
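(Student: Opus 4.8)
The plan is to read Lemma~\ref{lem:dy-leq-dz} as a near-converse of Lemma~\ref{lem:dz-leq-dy} and to exploit three ingredients: the definition of the norm on $\XZ$ as an operator norm, the definition of the norm on $\W$ as a minimal-preimage norm, and the fact that $y\in\D$ forces $\Norm y\le 1$ (since the unit ball of $\Y$ is the absolute convex hull of $\D$). Fix $x\in\A$, $\theta\in\H$, $y\in\D$, and let $z^{*}\in\V xy$ attain the distance, so $\Norm{\theta-z^{*}}=\DZ{\theta}{\V xy}$. Under the identification $\Z\subseteq\XZ$ we have $\F_{x}^{y}\theta=F(x,\theta,y)$, and since $z^{*}\in\V xy=\ker\F_{x}^{y}$, the vector $w:=F(x,\theta,y)$ satisfies $w=\F_{x}^{y}(\theta-z^{*})=\F_{x,\,\theta-z^{*}}\,y$. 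Hence, using $\Norm{\theta-z^{*}}=\max_{x'}\Norm{\F_{x',\theta-z^{*}}}$ and $\Norm y\le 1$,
\[
\Norm w=\Norm{\F_{x,\,\theta-z^{*}}\,y}\le\Norm{\F_{x,\,\theta-z^{*}}}\cdot\Norm y\le\Norm{\theta-z^{*}}=\DZ{\theta}{\V xy}.
\]

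Next I would pull $w$ back to an outcome. By the definition of the $\W$-norm, $\Norm w=\max_{x'\in\A,\,\theta'\in\H}\min\{\Norm{y'}\mid F_{x'\theta'}y'=w\}$; specializing the outer maximum to the admissible pair $(x,\theta)$ (recall $x\in\A$, $\theta\in\H$) produces some $y_{0}\in\Y$ with $F_{x\theta}y_{0}=w$ and $\Norm{y_{0}}\le\Norm w\le\DZ{\theta}{\V xy}$. Then $F_{x\theta}(y-y_{0})=w-w=0$, i.e.\ $y-y_{0}\in K_{\theta}(x)$, so
\[
\DY y{K_{\theta}(x)}\le\Norm{y-(y-y_{0})}=\Norm{y_{0}}\le\DZ{\theta}{\V xy}.
\]

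Finally I would pass from $K_{\theta}(x)$ to $K_{\theta}(x)^{\flat}$. By Assumption~\ref{ass:lin}, $K_{\theta}(x)\cap\D\ne\varnothing$ and $F_{x\theta}$ is onto, so $\dim K_{\theta}(x)=d$ and $K_{\theta}(x)\in\G d$; since $y\in\D$, Lemma~\ref{lem:normal} gives $\DY y{K_{\theta}(x)^{\flat}}\le 4\,\DY y{K_{\theta}(x)}$, and combining with the previous display yields (\ref{eq:dy-leq-dz}).

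The argument is short and essentially bookkeeping; the only place I would slow down and check carefully, rather than the conceptual heart, is keeping the identifications straight inside $\XZ=(\Z\oplus\W)/\Nul$ — in particular that the embedded $\theta$ really satisfies $\F_{x}^{y}\theta=F(x,\theta,y)$, and that the operator-norm estimate in the first step is applied to $\F_{x,\,\theta-z^{*}}\colon\Y\to\W$ (whose norm is controlled by $\Norm{\theta-z^{*}}$) rather than to $\F_{x}^{y}$. Everything else is routine.
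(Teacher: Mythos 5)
Your argument is correct and follows essentially the same route as the paper's proof: bound $\Norm{F(x,\theta,y)}=\Norm{\F_{x,\theta-z^{*}}y}$ by the operator norm defining $\Norm{\cdot}$ on $\XZ$, use the minimal-preimage definition of the $\W$-norm to subtract off a small correction landing in $K_{\theta}(x)$, and finish with Lemma~\ref{lem:normal} for the factor $4$. The identifications you flag as worth checking are exactly the ones the paper relies on, and they hold.
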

\begin{proof}
Let $z\in\V xy$ be s.t. $\Norm{\theta-z}=\DZ{\theta}{\V xy}$. Denote
$w:=F(x,\theta,y)$. Since $\F(x,z,y)=0$, we have $w=\F(x,\theta-z,y)$.
Therefore

\begin{align*}
\Norm w & =\Norm{\F_{x,\theta-z}y}\\
 & \leq\Norm{\F_{x,\theta-z}}\cdot\Norm y\\
 & \leq\Norm{\theta-z}\cdot1\\
 & =\DZ{\theta}{\V xy}
\end{align*}

By definition of the norm on $\W$, there exists $\delta y\in\Y$
s.t. $F_{x\theta}\delta y=w$ and $\Norm{\delta y}\leq\Norm w$. Denote
$y':=y-\delta y$. We have

\begin{align*}
F_{x\theta}y' & =F_{x\theta}y-F_{x\theta}\delta y\\
 & =w-w\\
 & =0
\end{align*}

Hence, $y'\in K_{\theta}(x)$ and therefore

\begin{align*}
\DY y{K_{\theta}(x)} & \leq\Norm{y-y'}\\
 & \leq\Norm w\\
 & \leq\DZ{\theta}{\V xy}
\end{align*}

Applying Lemma \ref{lem:normal}, we get (\ref{eq:dy-leq-dz}). 
\end{proof}
The next lemma shows that when a $D$-dimensional convex set is ``sliced''
in a manner which reduces its width along some dimension by a factor
of $\Omega(D)$, its volume must become reduced by a factor of $\Omega(1)$.
This is the key to exploiting the finite dimension of the hypothesis
space in the regret bound. The idea of the proof is reducing the problem
to the special case of a cone sliced parallel to its base, in which
case the conclusion follows from symmetry.
\begin{lem}
\label{lem:vol}Let $D\geq1$, $\X$ a $D$-dimensional vector space,
$\alpha\in\Du{\X}$ and $\C\subseteq\X$ a compact convex set s.t.

\[
\max_{z\in\C}\Abs{\alpha(z)}=1
\]

Define $\C'\subseteq\X$ by

\[
\C':=\SC{z\in\C}{\Abs{\alpha(z)}\leq\frac{1}{D+1}}
\]

Then,

\begin{equation}
vol(\C')\leq\left(1-\frac{1}{e^{2}}\right)\cdot vol(\C)\label{eq:vol}
\end{equation}
\end{lem}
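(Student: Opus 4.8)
The plan is to pass from the convex body to its one-dimensional ``profile'' along $\alpha$ and to compare that profile, on both sides of the central slab, with the profile of a cone whose tip sits where $|\alpha|$ is maximal.

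First I would normalise. Replacing $\alpha$ by $-\alpha$ if necessary we may assume $\max_{z\in\C}\alpha(z)=1$ (the hypothesis $\max|\alpha|=1$ guarantees the maximum, not just the supremum, is attained at one end), and choosing a linear isomorphism $\X\cong\R^{D}$ carrying $\alpha$ to the first coordinate $t$ we may assume $\C\subseteq\R^{D}$ and $\alpha(z)=z_{1}$; this rescales every volume by one common positive factor and so does not change the ratio in (\ref{eq:vol}). Put $a_{0}:=\min_{z\in\C}z_{1}$, so $a_{0}\in[-1,1)$, and set $c:=\tfrac1{D+1}$. The degenerate cases are disposed of at once: if $\mathrm{vol}(\C)=0$ or $a_{0}\geq c$ then $\mathrm{vol}(\C')=0$ and (\ref{eq:vol}) is trivial, and the case $D=1$ is an elementary computation on an interval (the ratio is at most $\tfrac23$). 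So assume $D\geq2$, $\C$ full-dimensional, and $a_{0}<c$. Let $A(t):=\mathrm{vol}_{D-1}(\C\cap\{z_{1}=t\})$ on $[a_{0},1]$, so $\mathrm{vol}(\C)=\int_{a_{0}}^{1}A$ and $\mathrm{vol}(\C')=\int_{\max(a_{0},-c)}^{c}A$; extend $A$ by $0$ outside $[a_{0},1]$. By the Brunn--Minkowski inequality for sections, $h:=A^{1/(D-1)}$ is concave on $[a_{0},1]$, with $h(1)\geq0$; write $m:=h(c)\geq0$. If $m=0$ then $h\equiv0$ on $[a_{0},c]$, hence $\mathrm{vol}(\C')=0$; so assume $m>0$.

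The heart of the argument is a two-sided cone comparison for $h$, using only concavity and $h(1)\geq0$: (i) for $t\in[c,1]$, $h(t)\geq m\,\frac{1-t}{1-c}$, because $h$ lies above its chord from $(c,m)$ to $(1,h(1))$ and that chord dominates the line through $(c,m)$ and $(1,0)$; (ii) for $t\in[a_{0},c]$, $h(t)\leq m\,\frac{1-t}{1-c}$, since if $h(t_{0})$ exceeded $m\,\frac{1-t_{0}}{1-c}$ then evaluating the chord of $h$ from $(t_{0},h(t_{0}))$ to $(1,h(1))$ at the interior point $c$ would force $h(c)>m$. From (ii), $\mathrm{vol}(\C')\leq\int_{-c}^{c}\bigl(m\,\tfrac{1-t}{1-c}\bigr)^{D-1}dt=\frac{m^{D-1}}{(1-c)^{D-1}}\cdot\frac{(1+c)^{D}-(1-c)^{D}}{D}$, and from (i), $P:=\int_{c}^{1}h^{D-1}\geq m^{D-1}\cdot\frac{1-c}{D}$. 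Dividing, $\mathrm{vol}(\C')\leq\gamma P$ with $\gamma:=\bigl(\tfrac{1+c}{1-c}\bigr)^{D}-1\geq0$. Since $\mathrm{vol}(\C)\geq\mathrm{vol}(\C')+P$ and $x\mapsto\frac{x}{x+P}$ is increasing, $\frac{\mathrm{vol}(\C')}{\mathrm{vol}(\C)}\leq\frac{\mathrm{vol}(\C')}{\mathrm{vol}(\C')+P}\leq\frac{\gamma P}{\gamma P+P}=1-\bigl(\tfrac{1-c}{1+c}\bigr)^{D}$. Finally, $c=\tfrac1{D+1}$ gives $\tfrac{1-c}{1+c}=\tfrac{D}{D+2}$, and $\bigl(\tfrac{D}{D+2}\bigr)^{D}\geq e^{-2}$ follows from $\ln(1-x)\geq-\tfrac{x}{1-x}$ (compare power series termwise) at $x=\tfrac2{D+2}$, which yields $D\ln\bigl(1-\tfrac2{D+2}\bigr)\geq-2$; hence $\mathrm{vol}(\C')\leq(1-e^{-2})\,\mathrm{vol}(\C)$.

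The step that needs the most care is making a single comparison serve both the one-sided truncation (when $a_{0}\geq-c$, so $\C'=\C\cap\{\alpha\leq c\}$) and the genuinely two-sided one (when $a_{0}<-c$): writing $\mathrm{vol}(\C')$ as $\int_{-c}^{c}h^{D-1}$ with $h$ extended by zero, and isolating the ``top cap'' mass $P=\int_{c}^{1}h^{D-1}$, handles both uniformly, because the bounds (i)--(ii) never reference $a_{0}$. Beyond that, the work is routine: verifying the two chord inequalities, the two elementary integrals $\int_{c}^{1}(\tfrac{1-t}{1-c})^{D-1}dt=\tfrac{1-c}{D}$ and $\int_{-c}^{c}(\tfrac{1-t}{1-c})^{D-1}dt=\tfrac{(1+c)^{D}-(1-c)^{D}}{D(1-c)^{D-1}}$, and the scalar inequality $(\tfrac{D}{D+2})^{D}\geq e^{-2}$, together with the bookkeeping of the degenerate cases $\mathrm{vol}(\C)=0$, $a_{0}\geq c$, $m=0$ and $D=1$.
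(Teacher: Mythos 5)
Your proof is correct, but it follows a genuinely different route from the paper's. The paper's argument is purely elementary convex geometry: it fixes an apex $v\in\C$ with $\alpha(v)=1$, forms the cone $\C_{+}$ over the slice $\B_{+}=\C\cap\{\alpha=\tfrac{1}{D+1}\}$ with apex $v$, dilates it by the homothety of ratio $1+\tfrac{2}{D}$ centered at $v$ to get $\C_{-}$ reaching level $-\tfrac{1}{D+1}$, and then deduces (\ref{eq:vol}) from the set inclusions $\C_{+}\setminus\B_{+}\subseteq\C\setminus\C'$ and $\C'\subseteq\C_{-}$ together with $vol(\C_{-})=(1+\tfrac{2}{D})^{D}vol(\C_{+})\leq e^{2}\,vol(\C_{+})$. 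You instead pass to the section-volume profile $A(t)$ and invoke the Brunn--Minkowski inequality to get concavity of $A^{1/(D-1)}$, after which the two chord inequalities (i)--(ii) play exactly the role of the paper's cone comparison, and the algebra $\tfrac{\gamma}{\gamma+1}=1-(\tfrac{D}{D+2})^{D}=1-(1+\tfrac{2}{D})^{-D}$ lands on the same constant. The trade-off: your version reduces everything to a one-variable computation and handles the one-sided versus two-sided truncation uniformly (and your case analysis --- $vol(\C)=0$, $a_{0}\geq c$, $m=0$, $D=1$ --- is complete and correct), but it imports Brunn--Minkowski, a substantially heavier tool than anything the paper's proof needs; the paper's homothety argument is self-contained and works verbatim in every dimension $D\geq1$ without separating the case $D=1$. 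Both proofs are sound and yield the identical bound.
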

Here, $vol$ stands for volume. Notice that volume is only defined
up to a scalar, but since it appears on both sides of the inequality,
it doesn't matter.
\begin{proof}
[Proof of Lemma \ref{lem:vol}]Assume w.l.o.g. that there is $v\in\C$
s.t. $\alpha(v)=+1$, and fix some $v$ like that. Define $\B_{+},\B_{-}\subseteq\X$
by

\begin{align*}
\B_{+} & :=\SC{z\in\C}{\alpha(z)=+\frac{1}{D+1}}\\
\B_{-} & :=\SC{\text{\ensuremath{\left(1+\frac{2}{D}\right)}}z-\frac{2}{D}v}{z\in\B_{+}}
\end{align*}

For any $z\in\B_{-}$, we have

\begin{align*}
\alpha(z) & =\alpha\left(\text{\ensuremath{\left(1+\frac{2}{D}\right)}}z'-\frac{2}{D}v\right)\\
 & =\text{\ensuremath{\left(1+\frac{2}{D}\right)}}\alpha\left(z'\right)-\frac{2}{D}\alpha(v)\\
 & =\text{\ensuremath{\left(1+\frac{2}{D}\right)}}\frac{1}{D+1}-\frac{2}{D}\\
 & =\frac{1}{D+1}\left(1+\frac{2}{D}-\frac{2\left(D+1\right)}{D}\right)\\
 & =\frac{1}{D+1}\cdot\frac{D+2-2\left(D+1\right)}{D}\\
 & =\frac{1}{D+1}\cdot\frac{-D}{D}\\
 & =-\frac{1}{D+1}
\end{align*}

Here, $z'$ is some point in $\B_{+}$.

Let $\C_{\pm}$ be the convex hulls of $\B_{\pm}$ and $v$. It is
easy to see that $\C_{+}\setminus\B_{+}\subseteq\C\setminus\C'$ and
$\C'\setminus\B_{+}\subseteq\C_{-}\setminus\C_{+}$. Moreover, $\C_{-}$
can be obtained from $\C_{+}$ by a homothety with center $v$ and
ratio $1+\frac{2}{D}$, and hence

\begin{align*}
vol\left(\C_{-}\right) & =\left(1+\frac{2}{D}\right)^{D}vol\left(\C_{+}\right)\\
 & \leq e^{2}\cdot vol\left(\C_{+}\right)
\end{align*}

It follows that

\begin{align*}
vol(\C') & \leq vol(\C_{-})-vol(\C_{+})\\
 & \leq e^{2}vol(\C_{+})-vol(\C_{+})\\
 & =\left(e^{2}-1\right)vol(\C_{+})\\
 & \leq\left(e^{2}-1\right)\left(vol(\C)-vol(\C')\right)
\end{align*}

Rearranging, we get (\ref{eq:vol}).
\end{proof}
The following allows us to lower bound the volume of a convex body
in terms of its \emph{minimal width}. Combined with Lemma \ref{lem:vol},
it will allow us to ensure the confidence set effectively becomes
lower dimensional over time.
\begin{lem}
\label{lem:vol-and-wid}Let $D\geq1$ and $\C\subseteq\R^{D}$ a compact
convex set. Define $w\in[0,\infty)$ by

\[
w:=\min_{\alpha\in\R^{D}:\Norm{\alpha}_{2}=1}\left(\max_{z\in\C}\alpha^{t}z-\min_{z\in\C}\alpha^{t}z\right)
\]

Then,

\[
vol(\C)\geq\left(\frac{w}{2D}\right)^{D}v_{D}
\]
\end{lem}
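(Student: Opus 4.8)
The plan is to reduce the statement to John's ellipsoid theorem. First I would dispose of the degenerate case separately: if $\C$ has empty interior, then $\C$ is contained in an affine hyperplane with some unit normal $\alpha$, so $w=0$, while also $vol(\C)=0$; hence the claimed inequality reads $0\geq 0$ and holds (for $D\geq 1$). So from now on assume $\C$ has nonempty interior, and let $v_D$ denote the volume of the Euclidean unit ball in $\R^D$.

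Let $E\subseteq\C$ be the unique ellipsoid of maximal volume contained in $\C$, with center $c$, orthonormal principal directions $u_1,\dots,u_D$, and semi-axis lengths $a_1\geq\cdots\geq a_D>0$. John's theorem, in the form valid for arbitrary (not necessarily symmetric) convex bodies, gives $\C\subseteq c+D(E-c)$; that is, $\C$ is contained in the dilation of $E$ by the factor $D$ about its own center. The width of $E$ along a unit vector $\beta$ equals $2\sqrt{\sum_i a_i^2(\beta\cdot u_i)^2}$, which is minimized by $\beta=u_D$ with value $2a_D$; consequently the width of $c+D(E-c)$ along $u_D$ is $2Da_D$. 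Since $\C\subseteq c+D(E-c)$, the width of $\C$ along $u_D$ is at most $2Da_D$, and therefore $w\leq 2Da_D$, i.e. $a_D\geq\frac{w}{2D}$.

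To conclude, use $E\subseteq\C$ together with the volume formula for an ellipsoid:
\[
vol(\C)\;\geq\;vol(E)\;=\;v_D\prod_{i=1}^{D}a_i\;\geq\;v_D\,a_D^{\,D}\;\geq\;v_D\left(\frac{w}{2D}\right)^{D},
\]
where the middle inequality uses $a_i\geq a_D$ for every $i$. This is exactly the claimed bound. The only point requiring care is to quote the correct version of John's theorem --- the containment factor $D$ (rather than $\sqrt{D}$), valid for general convex bodies and expressed as a dilation about the center of the John ellipsoid --- and to observe that $w$, being an infimum over all directions, is in particular bounded above by the width along the single direction $u_D$ realizing the shortest axis; everything else is a routine ellipsoid computation.
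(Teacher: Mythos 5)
Your proof is correct and follows essentially the same route as the paper: both arguments invoke John's ellipsoid theorem to sandwich $\C$ between an ellipsoid and its dilation by the factor $D$, bound the shortest (semi-)axis below by $w/(2D)$ using the minimal-width hypothesis, and conclude via the inner ellipsoid's volume. Your handling of the degenerate empty-interior case and your care about quoting the factor $D$ (rather than $\sqrt{D}$) for non-symmetric bodies are both fine.
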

Here, $v_{D}$ is the volume of the unit ball in $\R^{D}$.

\global\long\def\El{\mathcal{E}}%

\begin{proof}
[Proof of Lemma \ref{lem:vol-and-wid}]By John's ellipsoid theorem\footnote{See \cite{giorgi2013traces}, p. 213, Theorem III},
there is an ellipsoid\footnote{By an "ellipsoid" we mean the convex body, rather than the surface. That is, an ellipsoid is defined to be the image of a closed ball under an affine transformation.}
$\El$ in $\R^{D}$ with the following property. Denote by $\El'$
the ellipsoid obtained from $\El$ by a homothety with center at the
center of $\El$ and ratio $\frac{1}{D}$. Then, $\El'\subseteq\C\subseteq\El$.
Each axis of $\El$ must be $\geq w$, hence each axis of $\El'$
must be $\geq\frac{w}{D}$, and therefore

\begin{align*}
vol(\C) & \geq vol(\El')\\
 & \geq\left(\frac{w}{2D}\right)^{D}v_{D}
\end{align*}
\end{proof}
Lemmas \ref{lem:vol} and \ref{lem:vol-and-wid} will be used to show
that the confidence set effectively becomes lower dimensional, in
the sense that we can bound its thickness along multiple dimensions.
However, the ``slicing'' performed by IUCB still happens in the
original $D_{Z}$-dimensional space rather than the effective subspace.
In order to translate the slicing factor from the ambient space to
the subspace, we will use the following.
\begin{lem}
\label{lem:thin}Let $D\geq1$, $E\geq0$, $\delta\in(0,\infty)$,
$\C\subseteq\R^{D}\times[-\delta,+\delta]^{E}$ a compact set and
$\alpha\in\R^{D+E}$ s.t. $\Norm{\alpha}_{2}\leq1$. Define $\rho_{0}\in(0,\infty)$
by

\[
\rho_{0}:=\max_{z\in\C}\Abs{\T{\alpha}z}
\]

Define $\C'\subseteq\R^{D+E}$ by

\[
\C':=\SC{z\in\C}{\Abs{\T{\alpha}z}\leq\frac{1}{2(D+1)}\cdot\rho_{0}}
\]

Let $P:\R^{D+E}\rightarrow\R^{D}$ be the orhogonal projection, and
define $\rho_{1},\rho_{1}'\in(0,\infty)$ by

\begin{align*}
\rho_{1} & :=\max_{z\in\C}\Abs{\T{\alpha}Pz}\\
\rho_{1}' & :=\max_{z\in\C'}\Abs{\T{\alpha}Pz}
\end{align*}

Assume $\rho_{0}\geq(4D+5)\sqrt{E}\cdot\delta$. Then,

\[
\rho_{1}'\leq\frac{1}{D+1}\cdot\rho_{1}
\]
\end{lem}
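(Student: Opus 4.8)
The whole argument rests on one uniform perturbation estimate, so the plan is to establish that first and then feed it into the three quantities $\rho_{0},\rho_{1},\rho_{1}'$. Decompose $\R^{D+E}=\R^{D}\oplus\R^{E}$ according to the two factors and write $\alpha=\alpha_{D}+\alpha_{E}$, $z=z_{D}+z_{E}$ for the corresponding blocks, so that $Pz$ is the $\R^{D}$-block of $z$ (padded with zeros) and $\T{\alpha}Pz=\T{\alpha_{D}}z_{D}=\T{\alpha}z-\T{\alpha_{E}}z_{E}$. For every $z\in\C$ we have $z_{E}\in[-\delta,+\delta]^{E}$, hence $\Norm{z_{E}}_{2}\le\sqrt{E}\,\delta$, and combining this with $\Norm{\alpha_{E}}_{2}\le\Norm{\alpha}_{2}\le1$ gives $\Abs{\T{\alpha}z-\T{\alpha}Pz}=\Abs{\T{\alpha_{E}}z_{E}}\le\sqrt{E}\,\delta$. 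By the reverse triangle inequality, $\bigl|\Abs{\T{\alpha}z}-\Abs{\T{\alpha}Pz}\bigr|\le\sqrt{E}\,\delta$ for all $z\in\C$ — this is the only place the normalization $\Norm{\alpha}_{2}\le1$ and the box constraint on $\C$ enter.

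Next I would apply this estimate at the two relevant extremal points. Choosing $z^{\ast}\in\C$ with $\Abs{\T{\alpha}z^{\ast}}=\rho_{0}$ gives $\rho_{1}\ge\Abs{\T{\alpha}Pz^{\ast}}\ge\rho_{0}-\sqrt{E}\,\delta$. On the other hand, every $z\in\C'$ satisfies $\Abs{\T{\alpha}z}\le\frac{1}{2(D+1)}\rho_{0}$ by the definition of $\C'$, so $\Abs{\T{\alpha}Pz}\le\frac{1}{2(D+1)}\rho_{0}+\sqrt{E}\,\delta$, and taking the maximum over $\C'$ yields $\rho_{1}'\le\frac{1}{2(D+1)}\rho_{0}+\sqrt{E}\,\delta$.

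Finally I would combine the two bounds. It suffices to check $\frac{1}{2(D+1)}\rho_{0}+\sqrt{E}\,\delta\le\frac{1}{D+1}\bigl(\rho_{0}-\sqrt{E}\,\delta\bigr)$, since the right-hand side is at most $\frac{1}{D+1}\rho_{1}$. Clearing the positive factor $2(D+1)$, this inequality is exactly $(2D+4)\sqrt{E}\,\delta\le\rho_{0}$, which is implied by the hypothesis $\rho_{0}\ge(4D+5)\sqrt{E}\,\delta$ since $4D+5>2D+4$ (the hypothesis even leaves the slack $(2D+1)\sqrt{E}\,\delta$). When $E=0$ everything degenerates gracefully: $\sqrt{E}\,\delta=0$, $P$ acts as the identity on $\C$, $\rho_{0}=\rho_{1}$, and $\rho_{1}'\le\rho_{0}/(2(D+1))\le\rho_{1}/(D+1)$.

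I do not expect a genuine obstacle here; the statement is essentially a bookkeeping lemma. The one point requiring care is the norm conversion in the perturbation step — passing from the coordinatewise bound $z_{E}\in[-\delta,\delta]^{E}$ to the Euclidean bound $\Norm{z_{E}}_{2}\le\sqrt{E}\,\delta$ — which is precisely what forces the $\sqrt{E}$ factor into the hypothesis, together with verifying that the constant $4D+5$ is large enough to absorb two copies of the $\sqrt{E}\,\delta$ error while still leaving headroom for the factor-$(D+1)$ contraction from $\rho_{1}$ to $\rho_{1}'$.
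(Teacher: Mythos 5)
Your proof is correct and follows essentially the same route as the paper: the same block decomposition giving the uniform perturbation bound $\bigl|\T{\alpha}z-\T{\alpha}Pz\bigr|\leq\sqrt{E}\cdot\delta$, applied at the maximizer of $\Abs{\T{\alpha}z}$ to get $\rho_{1}\geq\rho_{0}-\sqrt{E}\cdot\delta$ and over $\C'$ to get $\rho_{1}'\leq\frac{1}{2(D+1)}\rho_{0}+\sqrt{E}\cdot\delta$. The only difference is cosmetic bookkeeping at the end (you compare $\rho_{1}'$ directly against $\frac{1}{D+1}(\rho_{0}-\sqrt{E}\cdot\delta)$, while the paper substitutes $\rho_{0}\leq\rho_{1}+\sqrt{E}\cdot\delta$ and $\sqrt{E}\cdot\delta\leq\frac{\rho_{1}}{4(D+1)}$), and your arithmetic checks out.
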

\begin{proof}
For any $z\in\R^{D}\times[-\delta,+\delta]^{E}$, we have

\[
\T{\alpha}z-\sqrt{E}\cdot\delta\leq\T{\alpha}Pz\leq\T{\alpha}z+\sqrt{E}\cdot\delta
\]

It follows that

\begin{align*}
\rho_{1} & \geq\rho_{0}-\sqrt{E}\cdot\delta\\
 & \geq(4D+5)\sqrt{E}\cdot\delta-\sqrt{E}\cdot\delta\\
 & =4(D+1)\sqrt{E}\cdot\delta
\end{align*}

Moreover,

\begin{align*}
\rho_{1}' & \leq\frac{1}{2(D+1)}\cdot\rho_{0}+\sqrt{E}\cdot\delta\\
 & \leq\frac{1}{2(D+1)}\cdot\left(\rho_{1}+\sqrt{E}\cdot\delta\right)+\sqrt{E}\cdot\delta\\
 & =\frac{1}{2(D+1)}\cdot\rho_{1}+\left(1+\frac{1}{2(D+1)}\right)\sqrt{E}\cdot\delta\\
 & \leq\frac{1}{2(D+1)}\cdot\rho_{1}+\left(1+\frac{1}{2(D+1)}\right)\frac{1}{4(D+1)}\cdot\rho_{1}\\
 & \leq\frac{1}{2(D+1)}\cdot\rho_{1}+2\cdot\frac{1}{4(D+1)}\cdot\rho_{1}\\
 & =\frac{1}{D+1}\cdot\rho_{1}
\end{align*}
\end{proof}
We also need to bound the regret once the confidence set effectively
becomes 0-dimensional, i.e. is contained within a small hypercube.
For this we use the following.
\begin{lem}
\label{lem:regret-by-distance}Suppose $\theta,\theta^{*}\in\H$ are
s.t.

\begin{equation}
\ME{\theta}r{x_{\theta}^{*}}\geq r^{*}\label{eq:optimism-1}
\end{equation}
Let $y$ be a $\D$-valued random variable whose distribution lies
in $\kappa(K_{\theta^{*}}(x_{\theta}^{*}))$. Then,

\begin{equation}
r^{*}-\E{}{r\left(x_{\theta}^{*},y\right)}\leq4\left(S^{-1}+1\right)\Norm{\theta^{*}-\theta}\label{eq:regret-by-distance}
\end{equation}
\end{lem}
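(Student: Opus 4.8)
The plan is to pass through the mean outcome. Write $x:=x_{\theta}^{*}$ and $\bar{y}:=\E{}{y}$. Since the distribution of $y$ lies in $\kappa(K_{\theta^{*}}(x))$, the definition of $\kappa$ gives $\bar{y}\in K_{\theta^{*}}(x)^{+}=K_{\theta^{*}}(x)^{\flat}\cap\D$; in particular $\bar{y}\in\D$ and $F_{x\theta^{*}}\bar{y}=0$. Because $\mu(\bar{y})=1$ and $\theta^{*}\in\Z$ sits inside $\XZ$ with zero $\W$-component, $\F(x,\theta^{*},\bar{y})=F(x,\theta^{*},\bar{y})=0$, i.e. $\theta^{*}\in\ker\F_{x}^{\bar{y}}=\V x{\bar{y}}$. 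This last membership is the one slightly delicate point; everything else is routine.

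With these facts, I would first reduce the left-hand side to a distance. By Jensen's inequality (convexity of $r$ in its second argument, Assumption \ref{ass:r}) followed by the $1$-Lipschitz property of $r$, and using $\ME{\theta}rx=\min_{y'\in K_{\theta}(x)^{+}}r(x,y')$,
\[
\E{}{r(x,y)}\geq r(x,\bar{y})\geq\ME{\theta}rx-\DY{\bar{y}}{K_{\theta}(x)^{+}}.
\]
Combined with the optimism hypothesis (\ref{eq:optimism-1}), this yields $r^{*}-\E{}{r(x,y)}\leq\DY{\bar{y}}{K_{\theta}(x)^{+}}$, so it remains to bound the right-hand side by $4(S^{-1}+1)\Norm{\theta^{*}-\theta}$.

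For that I chain three earlier results. If $S=0$ the bound is vacuous, so assume $S>0$; then $\sin(K_{\theta}(x)^{\flat},\D)\geq S>0$ by the definition of $S$, and (in the degenerate case $K_{\theta}(x)^{\flat}\subseteq\D$ the estimate below holds directly since then $K_{\theta}(x)^{+}=K_{\theta}(x)^{\flat}$). Applying Lemma \ref{lem:sin} inside the affine space $\mu^{-1}(1)$, with the closed convex set $\D$, the affine subspace $K_{\theta}(x)^{\flat}$ (nonempty intersection with $\D$ by Assumption \ref{ass:lin}), and the point $\bar{y}\in\D$, gives $\DY{\bar{y}}{K_{\theta}(x)^{+}}\leq(S^{-1}+1)\DY{\bar{y}}{K_{\theta}(x)^{\flat}}$. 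Next, Lemma \ref{lem:dy-leq-dz} gives $\DY{\bar{y}}{K_{\theta}(x)^{\flat}}\leq4\DZ{\theta}{\V x{\bar{y}}}$. Finally, since $\theta^{*}\in\V x{\bar{y}}$, the distance from $\theta$ to this subspace is at most $\Norm{\theta-\theta^{*}}$. Multiplying the three inequalities produces (\ref{eq:regret-by-distance}).

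I do not expect a genuine obstacle: the argument is a short composition of Lemmas \ref{lem:sin} and \ref{lem:dy-leq-dz} with Jensen and Lipschitz bounds. The only steps worth care are verifying $\theta^{*}\in\V x{\bar{y}}$ (where Assumption \ref{ass:expect}, via the definition of $\kappa$, and the construction of $\F$ enter) and checking the hypotheses of Lemma \ref{lem:sin}, including handling the degenerate subspace case where $K_{\theta}(x)^{\flat}\subseteq\D$.
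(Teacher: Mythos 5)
Your proposal is correct and follows essentially the same route as the paper's proof: Jensen plus the Lipschitz property reduce the regret to $\DY{\bar{y}}{K_{\theta}(x_{\theta}^{*})^{+}}$, which is then bounded via Lemma \ref{lem:sin}, Lemma \ref{lem:dy-leq-dz}, and the observation that $\theta^{*}\in\V{x_{\theta}^{*}}{\bar{y}}$. The extra care you take with the degenerate case of Lemma \ref{lem:sin} and with verifying $\theta^{*}\in\V{x_{\theta}^{*}}{\bar{y}}$ is sound but does not change the argument.
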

\begin{proof}
Denote $y^{*}:=\E{}y$. By (\ref{eq:optimism-1}) and convexity of
$r$,

\[
r^{*}-\E{}{r\left(x_{\theta}^{*},y\right)}\leq\ME{\theta}r{x_{\theta}^{*}}-r\left(x_{\theta}^{*},y^{*}\right)
\]

Since $r$ is $1$-Lipschitz, this implies

\[
r^{*}-\E{}{r\left(x_{\theta}^{*},y\right)}\leq\DY{y^{*}}{K_{\theta}\left(x_{\theta}^{*}\right)^{+}}
\]

Using Lemma \ref{lem:sin}, we get

\[
r^{*}-\E{}{r\left(x_{\theta}^{*},y\right)}\leq\left(S^{-1}+1\right)\DY{y^{*}}{K_{\theta}\left(x_{\theta}^{*}\right)^{\flat}}
\]

Applying Lemma \ref{lem:dy-leq-dz},

\[
r^{*}-\E{}{r\left(x_{\theta}^{*},y\right)}\leq4\left(S^{-1}+1\right)\DZ{\theta}{\V{x_{\theta}^{*}}{y^{*}}}
\]

It remains to observe that $y^{*}\in K_{\theta^{*}}(x_{\theta}^{*})$
and therefore $F(x_{\theta}^{*},\theta^{*},y^{*})=0$ and $\theta^{*}\in\V{x_{\theta}^{*}}{y^{*}}$.
\end{proof}
We can now start the regret analysis. We consider the probability
space resulting from agent policy $\IUCB^{\eta}$ and some fixed nature
policy. First, we establish that regret can be expressed a sum over
cycle contributions, and as long as $\theta^{*}$ is in the confidence
set, cycle contributions can be bounded using a certain geometric
construction in $\Z$.

\global\long\def\RgS#1{\mathrm{Rg}\left[#1\right]}%

\global\long\def\Yk#1#2{\bar{y}_{#1}^{#2}}%

\global\long\def\Tau#1#2{\tau_{#1}^{#2}}%

\global\long\def\EA#1{T_{#1+1}\leq N}%

We will need some notation. For any $n\in\N$, we denote $[n]:=\SC{m\in\N}{m<n}$.
For any $N\in\N$ and $X\subseteq[N]$, define the $\R$-valued random
variable $\RgS X$ by

\[
\RgS X:=\sum_{k\in X}\sum_{n=T_{k}}^{\min\left(T_{k+1},N\right)-1}\left(r^{*}-r\left(x_{n},y_{n}\right)\right)
\]

That is, $\RgS X$ is the contribution of the cycles in $X$ to regret.
For any $a\in\{0,1\}$ and $k\in\N$, define the $\D$-valued random
variable $\Yk ka$, the $\R_{+}$-valued random variable $\rho_{k}^{a}$
and $\N$-valued random variable $\Tau ka$ by

\[
\Tau ka:=\min\left(T_{k+1}-a,N\right)-\min\left(T_{k},N\right)
\]

\[
\Yk ka:=\bar{y}_{T_{k},\min(T_{k+1}-a,N)}
\]

\[
\rho_{k}^{a}:=\max_{z\in\C_{k}}\DZ z{\V{x_{\theta_{k}}^{*}}{\Yk ka}}
\]

That is, $\Tau ka$ is the duration of the intersection of the $k$-th
cycle minus its last $a$ rounds with the time interval of the first
$N$ rounds; $\Yk ka$ is the average of all outcomes in the $k$-th
cycle except for the last $a$, and also excluding outcomes beyond
the time horizon $N$; and $\rho_{k}^{a}$ is the ``radius'' of
the confidence set w.r.t. $\V{x_{\theta_{k}}^{*}}{\Yk ka}$.
\begin{lem}
\label{lem:tau-rho}Let $X$ be an $[N]$-valued random variable s.t.
for all $k\in X$, $\theta^{*}\in\C_{k}$. Then,

\[
\RgS X\leq\sum_{k\in X}\min_{a\in\{0,1\}}\left(4\left(S^{-1}+1\right)\Tau ka\rho_{k}^{a}+C\cdot\boldsymbol{1}_{a=1}\right)
\]
\end{lem}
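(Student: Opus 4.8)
The plan is to bound the contribution of each cycle separately and then sum. Fix $N$ and $X$ as in the statement and consider a single $k\in X$; I will show that for \emph{both} $a\in\{0,1\}$,
\[
\sum_{n=T_{k}}^{\min(T_{k+1},N)-1}\bigl(r^{*}-r(x_{n},y_{n})\bigr)\;\le\;4\bigl(S^{-1}+1\bigr)\Tau ka\rho_{k}^{a}+C\cdot\boldsymbol{1}_{a=1},
\]
after which summing over $k\in X$ and moving the minimum over $a$ inside the sum gives the lemma. Cycles with $T_{k}\ge N$ contribute $0$ to $\RgS X$ while the right-hand side is nonnegative, so only $k$ with $T_{k}<N$ need attention.

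First I would record the optimism inequality: since $\theta^{*}\in\C_{k}$ by hypothesis, $\theta_{k}=\Amax_{\theta\in\C_{k}}\max_{x}\ME\theta rx$ and $x_{\theta_{k}}^{*}=\Amax_{x}\ME{\theta_{k}}rx$, we get $\ME{\theta_{k}}r{x_{\theta_{k}}^{*}}=\max_{\theta\in\C_{k}}\max_{x}\ME\theta rx\ge\max_{x}\ME{\theta^{*}}rx=\ME{\theta^{*}}r{x_{\theta^{*}}^{*}}=r^{*}$, so $\theta_{k}$ satisfies hypothesis (\ref{eq:optimism}) of Lemma \ref{lem:regret}. Next I split the cycle's contribution: writing $m:=\min(T_{k+1}-a,N)$, we have $\sum_{n=T_{k}}^{\min(T_{k+1},N)-1}=\sum_{n=T_{k}}^{m-1}+\sum_{n=m}^{\min(T_{k+1},N)-1}$, where the second sum has at most $a$ summands (because $t\mapsto\min(t,N)$ is $1$-Lipschitz), each of the form $r^{*}-r(x_{n},y_{n})\le\max r-\min r=C$; so this tail is at most $C\cdot\boldsymbol{1}_{a=1}$.

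For the main piece, the first $\Tau ka=m-T_{k}$ rounds of cycle $k$ all play arm $x_{\theta_{k}}^{*}$ and have empirical mean outcome $\Yk ka=\bar{y}_{T_{k},m}$, so Lemma \ref{lem:regret} gives $\sum_{n=T_{k}}^{m-1}(r^{*}-r(x_{n},y_{n}))\le\Tau ka\cdot\DY{\Yk ka}{K_{\theta_{k}}(x_{\theta_{k}}^{*})^{+}}$. I then convert this distance into $\rho_{k}^{a}$: since $K_{\theta_{k}}(x_{\theta_{k}}^{*})^{+}=K_{\theta_{k}}(x_{\theta_{k}}^{*})^{\flat}\cap\D$, $\Yk ka\in\D$, and $\sin\bigl(K_{\theta_{k}}(x_{\theta_{k}}^{*})^{\flat},\D\bigr)\ge S>0$, Lemma \ref{lem:sin} yields $\DY{\Yk ka}{K_{\theta_{k}}(x_{\theta_{k}}^{*})^{+}}\le(S^{-1}+1)\DY{\Yk ka}{K_{\theta_{k}}(x_{\theta_{k}}^{*})^{\flat}}$; Lemma \ref{lem:dy-leq-dz} yields $\DY{\Yk ka}{K_{\theta_{k}}(x_{\theta_{k}}^{*})^{\flat}}\le4\,\DZ{\theta_{k}}{\V{x_{\theta_{k}}^{*}}{\Yk ka}}$; and $\theta_{k}\in\C_{k}$ makes the last term $\le\rho_{k}^{a}$. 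Chaining these establishes the per-cycle estimate for $a=0$ and $a=1$, hence for the minimum, and summing over $k\in X$ completes the proof.

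The main obstacle is purely bookkeeping: making the horizon truncation at $N$ and the ``drop the last $a$ rounds'' operation fit together, so that the first $\Tau ka$ rounds in the split are genuinely $\Tau ka$ consecutive rounds of cycle $k$, all playing $x_{\theta_{k}}^{*}$, with empirical mean exactly $\Yk ka$ — once the identities $m=\min(T_{k+1}-a,N)$, $\Tau ka=m-\min(T_{k},N)$, $\Yk ka=\bar{y}_{T_{k},m}$ are checked against the algorithm, the rest is a direct concatenation of Lemmas \ref{lem:regret}, \ref{lem:sin} and \ref{lem:dy-leq-dz}. (We use $S>0$ throughout; if $S=0$ the asserted bound is vacuous.)
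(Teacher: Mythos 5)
Your proposal is correct and follows essentially the same route as the paper's proof: split each cycle's contribution into the first $\Tau ka$ rounds (handled by the optimism inequality, Lemma \ref{lem:regret}, Lemma \ref{lem:sin}, Lemma \ref{lem:dy-leq-dz}, and $\theta_{k}\in\C_{k}$) plus a possible last round bounded by $C$, then take the minimum over $a$ and sum over $k\in X$. The bookkeeping identities you flag ($\Tau ka$, $\Yk ka$, the tail having at most $a$ summands) are exactly the ones the paper relies on, so nothing is missing.
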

\begin{proof}
Consider any sequence $\{a_{k}\in\{0,1\}\}_{k<N}$. We have,

\begin{align*}
\RgS X=\sum_{k\in X} & \Bigg(\sum_{n=T_{k}}^{\min\left(T_{k+1}-a,N\right)-1}\left(r^{*}-r\left(x_{n},y_{n}\right)\right)\\
 & +\left(r^{*}-r\left(x_{T_{k+1}-1},y_{T_{k+1}-1}\right)\right)\boldsymbol{1}_{a_{k}=1}\boldsymbol{1}_{\EA k}\Bigg)
\end{align*}

Hence,

\begin{align*}
\RgS X & = & \sum_{k\in X}\min_{a\in\{0,1\}} & \Bigg(\sum_{n=T_{k}}^{\min\left(T_{k+1}-a,N\right)-1}\left(r^{*}-r\left(x_{n},y_{n}\right)\right)\\
 &  &  & +\left(r^{*}-r\left(x_{T_{k+1}-1},y_{T_{k+1}-1}\right)\right)\boldsymbol{1}_{a=1}\boldsymbol{1}_{\EA k}\Bigg)\\
 & \leq & \sum_{k\in X}\min_{a\in\{0,1\}} & \left(\sum_{n=\min\left(T_{k},N\right)}^{\min\left(T_{k+1}-a,N\right)-1}\left(r^{*}-r\left(x_{n},y_{n}\right)\right)+\RR\cdot\boldsymbol{1}_{a=1}\boldsymbol{1}_{\EA k}\right)\\
 & = & \sum_{k\in X}\min_{a\in\{0,1\}} & \left(\Tau kar^{*}-\sum_{m<\Tau ka}r\left(x_{T_{k}+m},y_{T_{k}+m}\right)+\RR\cdot\boldsymbol{1}_{a=1}\boldsymbol{1}_{\EA k}\right)
\end{align*}

For any $k\in X$, $\theta^{*}\in\C_{k}$, and since $\theta_{k}$
are selected optimistically, this implies

\[
\ME{\theta_{k}}r{x_{\theta_{k}}^{*}}\geq r^{*}
\]

Hence, we can use Lemma \ref{lem:regret} to conclude

\begin{align*}
\RgS X\leq\sum_{k\in X}\min_{a\in\{0,1\}}\left(\Tau ka\DY{\Yk ka}{K_{\theta_{k}}\left(x_{\theta_{k}}^{*}\right)^{+}}+\RR\cdot\boldsymbol{1}_{\EA k}\boldsymbol{1}_{a=1}\right)
\end{align*}

By Lemma \ref{lem:sin}, we have

\begin{align*}
\DY{\Yk ka}{K_{\theta_{k}}\left(x_{\theta_{k}}^{*}\right)}^{+} & \leq\left(S^{-1}+1\right)\DY{\Yk ka}{K_{\theta_{k}}\left(x_{\theta_{k}}^{*}\right)}^{\flat}
\end{align*}

We get

\[
\RgS X\leq\sum_{k\in X}\min_{a\in\{0,1\}}\left(\left(S^{-1}+1\right)\Tau ka\DY{\Yk ka}{K_{\theta_{k}}\left(x_{\theta_{k}}^{*}\right)^{\flat}}+\RR\cdot\boldsymbol{1}_{\EA k}\boldsymbol{1}_{a=1}\right)
\]

Applying Lemma \ref{lem:dy-leq-dz}, this becomes

\[
\RgS X\leq\sum_{k\in X}\min_{a\in\{0,1\}}\left(4\left(S^{-1}+1\right)\Tau ka\DZ{\theta_{k}}{\V{x_{\theta_{k}}^{*}}{\Yk ka}}+C\cdot\boldsymbol{1}_{\EA k}\boldsymbol{1}_{a=1}\right)
\]

Since $\theta_{k}\in\C_{k}$, we get

\[
\RgS X\leq\sum_{k\in X}\min_{a\in\{0,1\}}\left(4\left(S^{-1}+1\right)\Tau ka\rho_{k}^{a}+C\cdot\boldsymbol{1}_{a=1}\right)
\]
\end{proof}
Next, we split all cycles $k$ in which $\theta^{*}\in\C_{k}$ into
3 sets, two of which admit certain geometric constraints and the 3rd
has a bounded number of cycles.

\global\long\def\BD{\mathrm{A}}%

\global\long\def\AD{\mathrm{B}}%

\global\long\def\ED{\Gamma}%

\begin{lem}
\label{lem:ABE}Let $\delta>0$ and $N\in\N$. Define the $2^{[N]}$-valued
random variables $\BD,\AD,\ED$ by

\[
\BD:=\SC{k\in[N]}{T_{k}<N\text{ and }\max_{z,z'\in\C_{k}}\Norm{z-z'}\leq2D_{Z}\delta}
\]

\[
\AD:=\SC{k\in[N]\setminus\BD}{T_{k}<N\text{ and }\rho_{k}^{0}\leq9D_{Z}^{2}\delta}
\]

\[
\ED:=\SC{k\in[N]\setminus\left(\BD\cup\AD\right)}{T_{k}<N}
\]

Then,

\[
\Abs{\ED}\leq\gamma D_{Z}^{2}\ln\frac{D_{Z}R}{\delta}
\]
\end{lem}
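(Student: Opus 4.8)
The plan is to show that, with the exception of the (at most one) cycle still running when round $N$ is reached, every cycle in $\ED$ strictly shrinks the hull $\CC_{k}$ of the confidence set, and then to bound the number of such shrinkages. For the single–shrinkage step, take $k\in\ED$ with $T_{k+1}\le N$, so $\Yk k0=\bar y_{k}$; the exit of the \textbf{while} loop of cycle $k$ says, via (\ref{eq:stop}), that $\eta/\sqrt{\tau_{k}}\le\rho_{k}^{0}/\bigl(2(D_{Z}+1)\bigr)$, while the update gives $\C_{k+1}\subseteq\{\theta\in\C_{k}:d_{Z}(\theta,\V{x_{\theta_{k}}^{*}}{\bar y_{k}})\le\eta/\sqrt{\tau_{k}}\}$. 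Choosing, by Hahn--Banach duality, a functional $\alpha_{k}$ of dual norm $\le1$, vanishing on $\V{x_{\theta_{k}}^{*}}{\bar y_{k}}$, with $\max_{\theta\in\CC_{k}}|\alpha_{k}(\theta)|=\rho_{k}^{0}$ (this supremum agrees over $\C_{k}$ and over its hull $\CC_{k}$, the function $d_{Z}(\cdot,\V{x_{\theta_{k}}^{*}}{\bar y_{k}})$ being convex), I get
\[
\CC_{k+1}\ \subseteq\ \CC_{k}\cap\Bigl\{\theta:|\alpha_{k}(\theta)|\le\tfrac{1}{2(D_{Z}+1)}\max_{\CC_{k}}|\alpha_{k}|\Bigr\},
\]
a nondegenerate slice since $k\in\ED$ forces $\rho_{k}^{0}>9D_{Z}^{2}\delta>0$.

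To convert ``one slice per $\ED$-cycle'' into a count I would run a dimension reduction over at most $D_{Z}$ phases indexed by a current ``effective dimension'' $d$, starting at $d=D_{Z}$. Fix the auxiliary Euclidean norm on $\Z$ whose unit ball is the John ellipsoid of the $\Norm{\cdot}$-ball, so $\Norm{\cdot}_{2}\le\Norm{\cdot}\le\sqrt{D_{Z}}\,\Norm{\cdot}_{2}$, and maintain an orthogonal chain $\Z=\Z^{(D_{Z})}\supseteq\Z^{(D_{Z}-1)}\supseteq\cdots$, $\dim\Z^{(d)}=d$, such that $\CC_{k}$ has $\Norm{\cdot}_{2}$-width $<2\delta$ along each discarded coordinate direction from the moment $\Z^{(d)}$ becomes current (this persists because $\CC_{k}$ only shrinks). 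In phase $d$ I apply Lemma~\ref{lem:thin} with $D:=d$, $E:=D_{Z}-d$, slab half-width $\delta$, and the $\ell_{2}$-normalization of $\alpha_{k}$ (which leaves the slicing ratio unchanged): its hypothesis $\rho_{0}\ge(4d+5)\sqrt{D_{Z}-d}\,\delta$ holds because $\rho_{k}^{0}>9D_{Z}^{2}\delta$ and $\Norm{\alpha_{k}}_{2}\le\sqrt{D_{Z}}$ give $\rho_{0}>9D_{Z}^{3/2}\delta\ge(4D_{Z}+5)\sqrt{D_{Z}}\,\delta$, and since the algorithm's slicing ratio $\tfrac{1}{2(D_{Z}+1)}\le\tfrac{1}{2(d+1)}$, the lemma yields $\max_{P\CC_{k+1}}|\alpha_{k}|\le\tfrac{1}{d+1}\max_{P\CC_{k}}|\alpha_{k}|$ for $P$ the orthogonal projection onto $\Z^{(d)}$. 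Feeding this into Lemma~\ref{lem:vol} in dimension $d$ gives $vol_{d}(P\CC_{k+1})\le(1-e^{-2})\,vol_{d}(P\CC_{k})$.

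Now the counting. At the start of phase $d$, $P\CC_{k}\subseteq P\CC_{0}$ lies in a Euclidean ball of radius $R$ (since $\Norm{\theta}\le R$ on $\H$ and $\Norm{\cdot}_{2}\le\Norm{\cdot}$), so $vol_{d}(P\CC_{k})\le R^{d}v_{d}$; and the phase must end once $vol_{d}(P\CC_{k})<(\delta/d)^{d}v_{d}$, because then Lemma~\ref{lem:vol-and-wid} exhibits a fresh unit direction of $P\CC_{k}$ (hence of $\CC_{k}$, as $P$ fixes the component in $\Z^{(d)}$) of width $<2\delta$, which we discard to pass to $\Z^{(d-1)}$; when $d$ reaches $0$, $\CC_{k}$ has $\Norm{\cdot}_{2}$-diameter $<2\sqrt{D_{Z}}\,\delta$, hence $\Norm{\cdot}$-diameter $<2D_{Z}\delta$, so every remaining cycle with $T_{k}<N$ lies in $\BD$ and not in $\ED$. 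Since each $\ED$-cycle in phase $d$ multiplies $vol_{d}(P\CC_{k})$ by $\le 1-e^{-2}=e^{-1/\gamma}$ and other cycles do not increase it, phase $d$ contains at most $\gamma\ln\bigl(R^{d}v_{d}/((\delta/d)^{d}v_{d})\bigr)=\gamma d\ln(Rd/\delta)\le\gamma d\ln(D_{Z}R/\delta)$ members of $\ED$; summing over $d=1,\dots,D_{Z}$ bounds the $\ED$-cycles occurring in completed cycles by $\gamma\tfrac{D_{Z}(D_{Z}+1)}{2}\ln\tfrac{D_{Z}R}{\delta}$, and the single unfinished cycle (if any) adds at most one more, for a total of at most $\gamma D_{Z}^{2}\ln\tfrac{D_{Z}R}{\delta}$.

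The step I expect to be most delicate is the middle one: making the effective-dimension argument precise — keeping the discarded directions mutually orthogonal, checking that thinness along each survives every subsequent cycle, and above all verifying that the cushion between the algorithm's built-in slicing ratio $\tfrac{1}{2(D_{Z}+1)}$ and the ratio $\tfrac{1}{d+1}$ that Lemma~\ref{lem:vol} needs is exactly enough to absorb both the $\sqrt{D_{Z}-d}\,\delta$ error from projecting out the thin directions (Lemma~\ref{lem:thin}) and the $\sqrt{D_{Z}}$ factor between $\Norm{\cdot}$ and the Euclidean norm. This is precisely what pins down the thresholds $2D_{Z}\delta$ and $9D_{Z}^{2}\delta$ in the definitions of $\BD$ and $\AD$. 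The norm bookkeeping and the handling of the final, possibly incomplete, cycle are then routine.
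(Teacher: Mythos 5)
Your proof is correct and follows essentially the same route as the paper's: the John-ellipsoid Euclidean norm with $\Norm{\cdot}_{2}\leq\Norm{\cdot}\leq\sqrt{D_{Z}}\Norm{\cdot}_{2}$, a supporting functional realizing $\rho_{k}^{0}$, Lemma \ref{lem:thin} to transfer the $\frac{1}{2(D_{Z}+1)}$ slice to the projection onto the current effective subspace, Lemma \ref{lem:vol} for the constant-factor volume drop, and Lemma \ref{lem:vol-and-wid} to terminate each phase once the projected body becomes thin. The only cosmetic differences are that you sum $\gamma d\ln\frac{D_{Z}R}{\delta}$ over phases rather than using the uniform per-phase bound $\gamma D_{Z}\ln\frac{D_{Z}R}{\delta}$, and that you charge the possibly unfinished final cycle an extra $+1$; the paper instead folds that cycle into the same dichotomy (either $\rho_{k}^{0}\leq9D_{Z}^{2}\delta$ or a volume reduction occurs, since the cycle does eventually complete and condition (\ref{eq:stop}) holds at its true end), which avoids the corner case $D_{Z}=1$ where $\gamma\frac{D_{Z}(D_{Z}+1)}{2}\ln\frac{D_{Z}R}{\delta}+1$ could exceed the stated bound $\gamma D_{Z}^{2}\ln\frac{D_{Z}R}{\delta}$.
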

\begin{proof}
Using John's ellipsoid theorem for balanced convex sets\footnote{See \cite{giorgi2013traces}, p. 214.},
we can find an inner product on $\Z$ s.t. for any $z\in\Z$

\begin{equation}
\Norm z_{2}\leq\Norm z\leq\sqrt{D_{Z}}\cdot\Norm z_{2}\label{eq:inner-product-1}
\end{equation}

Here, $\Norm z_{2}$ stands for the norm of $z$ w.r.t. the inner
product.

Fix some $\delta>0$. We can find some $0\leq D\leq D_{Z}$, a non-decreasing
sequence $\{J_{i}\in\N\}_{i\leq D}$ and an orthonormal set $\{e_{i}\in\Z\}_{i<D}$
s.t.
\begin{itemize}
\item For any $0\leq i\leq D$, $0\leq j<i$ and any $\theta,\theta'\in\C_{J_{i}}$,
$|e_{j}\cdot(\theta-\theta')|\leq2\delta$.
\item For any $0\leq i<D$, either (a) $J_{i}=J_{i+1}$ or (b) for any unit
vector $v\in\Z$, if for all $j<i$, $e_{j}\cdot v=0$, then there
are some $\theta,\theta'\in\C_{J_{i}}$ s.t. $|v\cdot(\theta-\theta')|>2\delta$.
\item For any $k\in\N$ and unit vector $v\in\Z$, if for all $i<D$, $e_{i}\cdot v=0$,
then there are some $\theta,\theta'\in\C_{k}$ s.t. $|v\cdot(\theta-\theta')|>2\delta$.
\end{itemize}
That is, $\{e_{i}\}$ is constructed by adding vectors iteratively.
Once the width of $\C_{k}$ becomes less than $2\delta$ along any
direction orthogonal to the vectors already in the set, we add this
direction to the set.

Consider any $0\leq i\leq D$ and $k\geq J_{i}$ s.t. $T_{k}<N$.
Choose $\theta^{k}\in\C_{k}$ s.t.

\[
\DZ{\theta^{k}}{\V{x_{\theta_{k}}^{*}}{\bar{y}_{k}}}=\rho_{k}^{0}
\]

(Assuming $\C_{k}\ne\varnothing$, we'll get back to this below.)
By the supporting hyperplane theorem, there are $\alpha_{k}\in\XZ^{\star}$
and $c_{k}\in\R$ s.t.
\begin{itemize}
\item $\Norm{\alpha_{k}}=1$
\item For all $z\in\V{x_{\theta_{k}}^{*}}{\bar{y}_{k}}$, $\alpha_{k}(z)=c_{k}$.
\item $\alpha_{k}(\theta^{k})=c_{k}+\rho_{k}^{0}$
\end{itemize}
For any $\theta\in\C_{k+1}$, we have

\begin{align*}
\Abs{\alpha_{k}(\theta)-c_{k}} & \leq\frac{\eta}{\sqrt{\tau_{k}}}\\
 & \leq\frac{\rho_{k}^{0}}{2\left(D_{Z}+1\right)}
\end{align*}

Here, the first inequality follows from the definition of $\C_{k+1}$
and the fact that $\Norm{\alpha_{k}}=1$, and the second inequality
follows from (\ref{eq:stop}).

Let $\beta_{k}:=\alpha_{k}|_{\Z}$. For all $\theta\in\C_{k}$:

\begin{align*}
\Abs{\beta_{k}(\theta)-c_{k}} & =\Abs{\alpha_{k}(\theta)-c_{k}}\\
 & \leq\Norm{\alpha_{k}}\cdot\DZ{\theta}{\V{x_{\theta_{k}}^{*}}{\bar{y}_{k}}}\\
 & \leq\rho_{k}^{0}
\end{align*}

Moreover,

\begin{align*}
\Abs{\beta_{k}\left(\theta^{k}\right)-c_{k}} & =\Abs{\alpha_{k}\left(\theta^{k}\right)-c_{k}}\\
 & =\rho_{k}^{0}
\end{align*}

And, for all $\theta\in\C_{k+1}$:

\begin{align*}
\Abs{\beta_{k}(\theta)-c_{k}} & =\Abs{\alpha_{k}(\theta)-c_{k}}\\
 & \leq\frac{\rho_{k}^{0}}{2\left(D_{Z}+1\right)}
\end{align*}

Observe that

\begin{align*}
\Norm{\beta_{k}}_{2} & \leq\sqrt{D_{Z}}\cdot\Norm{\beta_{k}}\\
 & \leq\sqrt{D_{Z}}\cdot\Norm{\alpha_{k}}\\
 & =\sqrt{D_{Z}}
\end{align*}

Let $P_{i}:\Z\rightarrow\Z$ be the orthogonal projection to the orthogonal
complement of $\{e_{j}\}_{j<i}$ in $\Z$. We can apply Lemma \ref{lem:thin}
to a translation of $\C_{k}$ and $\frac{1}{\sqrt{D_{Z}}}\beta_{k}$
to conclude that \emph{one} of the following two inequalities must
be true (and, we can drop the assumption $\C_{k}\ne\varnothing$,
because the 1st inequality obviously still holds):

\begin{align*}
\rho_{k}^{0} & <\sqrt{D_{Z}}\cdot\left(4\left(D_{Z}-i\right)+5\right)\sqrt{i}\cdot\delta\\
 & \leq9D_{Z}^{2}\delta
\end{align*}

\begin{equation}
\exists c_{k}'\in\R:\max_{\theta\in\C_{k+1}}\Abs{\beta_{k}\left(P_{i}\theta\right)-c_{k}'}\leq\frac{1}{D_{Z}-i+1}\cdot\max_{\theta\in\C_{k}}\Abs{\beta_{k}\left(P_{i}\theta\right)-c_{k}'}\label{eq:beta-reduction-1}
\end{equation}

\global\long\def\I{\mathcal{\mathcal{I}}}%

For any $k\in\N$, denote by $\bar{\C}_{k}$ the convex hull of $\C_{k}$.
For any $0\leq i<\min(D_{Z},D+1)$, denote by $\I_{i}$ the set of
all $J_{i}\leq k<J_{i+1}$ (for $i=D$, just $k\geq J_{i}$) for which
(\ref{eq:beta-reduction-1}) holds. For any $k\in\I_{i}$, we can
apply Lemma \ref{lem:vol} to get

\[
vol_{i}\left(P_{i}\bar{\C}_{k+1}\right)\leq\left(1-\frac{1}{e^{2}}\right)\cdot vol_{i}\left(P_{i}\bar{\C}_{k}\right)
\]

Here, $vol_{i}$ stands for $(D_{Z}-i)$-dimensional volume inside
the orthogonal complement of $\{e_{j}\}_{j<i}$. For every index in
$\I_{i}$, the volume of $P_{i}\bar{\C}$ is reduced by a factor of
$\Omega(1)$. This volume starts out as $R^{D_{Z}-i}v_{D_{Z}-i}$
at most (since $\H$ is contained in a ball of radius $R$) and, by
Lemma \ref{lem:vol-and-wid}, cannot go below $\left(\frac{\delta}{D_{Z}-i}\right)^{D_{Z}-i}v_{D_{Z}-i}$:
once the volume is below that, the minimal width is below $2\delta$
which contradicts the condition $k<J_{\text{i}+1}$ (or the definition
of $D$ in the case $i=D$). We get that,

\begin{align*}
\Abs{\I_{i}} & \leq\frac{1}{\ln\left(1-\frac{1}{e^{2}}\right)^{-1}}\text{\ensuremath{\cdot\ln\frac{R^{D_{Z}-i}v_{D_{Z}-i}}{\left(\frac{\delta}{D_{Z}-i}\right)^{D_{Z}-i}v_{D_{Z}-i}}}}\\
 & =\gamma\left(D_{Z}-i\right)\ln\frac{(D_{Z}-i)R}{\delta}\\
 & \leq\gamma D_{Z}\ln\frac{D_{Z}R}{\delta}
\end{align*}

We now see that every $k$ with $T_{k}<N$ satisfies one of 3 conditions:
\begin{enumerate}
\item $\rho_{k}^{0}\leq9D_{Z}^{2}\delta$
\item $k\in\I_{i}$ for some $0\leq i<\min(D_{Z},D+1)$
\item $k\geq J_{D_{Z}}$ (this can only happen for $D=D_{Z}$)
\end{enumerate}
Consider some $k$ s.t. $T_{k}<N$ and $\theta^{*}\in\C_{k}$. If
condition 3 holds, then $\C_{k}$ is contained in a $D_{Z}$-cube
with side $2\delta$ and hence $k\in\BD$ (notice there's an extra
factor of $\sqrt{D_{Z}}$ in the diameter coming from (\ref{eq:inner-product-1})).
If condition 1 holds, then $k\in\BD\cup\AD$. Therefore, if $k\in\ED$
then condition 2 holds. We get

\begin{align*}
\Abs{\ED}\leq & \sum_{i=0}^{\min(D_{Z},D-1)}\Abs{\I_{i}}\\
\leq & \gamma D_{Z}^{2}\ln\frac{D_{Z}R}{\delta}
\end{align*}

\global\long\def\UR{\mathrm{Rg}}%

\global\long\def\BDF{\mathrm{A}_{0}}%

\global\long\def\ADF{\mathrm{B}_{0}}%

\global\long\def\EDF{\Gamma_{0}}%

\global\long\def\FD{\Delta}%

\global\long\def\FC{\Delta^{c}}%
\end{proof}
Let $\FD:=\SC{k\in[N]}{\theta^{*}\in\C_{k}}$. Denote $\BDF:=\BD\cap\FD$,
$\ADF:=\AD\cap\FD$, $\EDF:=\ED\cap\FD$ and

\[
\FC:=\SC{k\in[N]\setminus\FD}{T_{k}<N}
\]

Let's bound the contribution to regret coming from $\BDF$, $\ADF$
and $\EDF$.
\begin{lem}
\label{lem:regret-abc}
\begin{align*}
\E{}{\RgS{\BDF}+\RgS{\ADF}+\RgS{\EDF}} & \leq & \left(S^{-1}+1\right)D_{Z}\left(36D_{Z}+8\right)N\delta\\
 &  & +8\eta\left(S^{-1}+1\right)(D_{Z}+1)\sqrt{\gamma D_{Z}^{2}\ln\frac{D_{Z}R}{\delta}\cdot N}\\
 &  & +\gamma CD_{Z}^{2}\ln\frac{D_{Z}R}{\delta}
\end{align*}
\end{lem}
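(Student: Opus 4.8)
The plan is to bound the three pieces $\E{}{\RgS{\BDF}}$, $\E{}{\RgS{\ADF}}$, $\E{}{\RgS{\EDF}}$ separately and add them, matching $\BDF$ and $\ADF$ to the $N\delta$ summand, and $\EDF$ to the $\eta\sqrt{\gamma D_Z^2\ln(D_ZR/\delta)\,N}$ summand together with the $\gamma CD_Z^2\ln(D_ZR/\delta)$ summand. In all three cases the relevant cycles lie in $\FD$, so $\theta^{*}\in\C_{k}$; since $\theta_{k}$ is the optimistic choice from $\C_{k}$ this forces $\ME{\theta_{k}}r{x_{\theta_{k}}^{*}}\geq r^{*}$, which is exactly the hypothesis needed to invoke Lemmas~\ref{lem:regret-by-distance} and~\ref{lem:tau-rho}. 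For $\BDF$ I would argue round by round: if $k\in\BDF$ then the diameter of $\C_{k}$ is at most $2D_{Z}\delta$, so $\Norm{\theta^{*}-\theta_{k}}\le 2D_{Z}\delta$; and for a round $n$ of cycle $k$ the arm is $x_{\theta_{k}}^{*}$ and the conditional law of $y_{n}$ given the past lies in $H(x_{\theta_{k}}^{*})=\kappa(K_{\theta^{*}}(x_{\theta_{k}}^{*}))$. Lemma~\ref{lem:regret-by-distance} then bounds the conditional expected per-round regret by $4(S^{-1}+1)\Norm{\theta^{*}-\theta_{k}}\le 8(S^{-1}+1)D_{Z}\delta$; since the event $\{k\in\BDF\}$, the arm $x_{\theta_{k}}^{*}$, and $\theta_{k}$ are all measurable with respect to the history at the start of cycle $k$, the tower rule applies cleanly, and as at most $N$ rounds lie in $\BDF$-cycles we get $\E{}{\RgS{\BDF}}\le 8(S^{-1}+1)D_{Z}N\delta$.

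For $\ADF$ I would apply Lemma~\ref{lem:tau-rho} with $X=\ADF$ and take $a=0$ in its $\min$: by definition of $\AD$ we have $\rho_{k}^{0}\le 9D_{Z}^{2}\delta$, so cycle $k$ contributes at most $36(S^{-1}+1)D_{Z}^{2}\Tau k0\delta$, and since the cycles are disjoint $\sum_{k}\Tau k0\le N$, giving $\RgS{\ADF}\le 36(S^{-1}+1)D_{Z}^{2}N\delta$ (pathwise, hence in expectation). Combining this with the $\BDF$ bound accounts for the summand $(S^{-1}+1)D_{Z}(36D_{Z}+8)N\delta$ once we write $8D_{Z}+36D_{Z}^{2}=D_{Z}(36D_{Z}+8)$.

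For $\EDF$ I would again apply Lemma~\ref{lem:tau-rho}, with $X=\EDF$. The point is that a cycle in $\ED$ was not cut off artificially, so the while-test of Algorithm~\ref{alg:IUCB}, i.e. inequality~(\ref{eq:stop}), was still false just before the cycle ended: for a cycle finishing within the horizon, evaluating it at $\tau_{k}-1$ rounds (where $\bar y$ equals $\Yk k1$ and $\C$ equals $\C_{k}$) gives $\sqrt{\tau_{k}-1}\,\rho_{k}^{1}<2(D_{Z}+1)\eta$ with $\Tau k1=\tau_{k}-1$; for the unique cycle straddling the horizon, evaluating after its last in-horizon round gives $\sqrt{\Tau k0}\,\rho_{k}^{0}<2(D_{Z}+1)\eta$. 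Either way $\Tau ka\rho_{k}^{a}<2(D_{Z}+1)\eta\sqrt{\Tau ka}$ for the chosen $a$, so the contribution of cycle $k$ from Lemma~\ref{lem:tau-rho} is at most $8(S^{-1}+1)(D_{Z}+1)\eta\sqrt{\Tau ka}+C$. Summing over $k\in\EDF$, Cauchy--Schwarz together with $\sum_{k}\Tau ka\le N$ yields $\sum_{k\in\EDF}\sqrt{\Tau ka}\le\sqrt{\Abs{\EDF}\,N}$, and Lemma~\ref{lem:ABE} gives $\Abs{\EDF}\le\Abs{\ED}\le\gamma D_{Z}^{2}\ln\frac{D_{Z}R}{\delta}$; hence $\RgS{\EDF}\le 8(S^{-1}+1)(D_{Z}+1)\eta\sqrt{\gamma D_{Z}^{2}\ln\frac{D_{Z}R}{\delta}\cdot N}+\gamma CD_{Z}^{2}\ln\frac{D_{Z}R}{\delta}$. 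Adding the three bounds gives the statement.

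The main obstacle is the $\EDF$ estimate. The crude bound would produce $\eta\,\Abs{\ED}\sqrt N$, whereas the target carries only $\eta\sqrt{\Abs{\ED}\,N}$; recovering the square root is precisely the Cauchy--Schwarz step applied to the per-cycle quantities $\sqrt{\Tau ka}$, using that the cycles partition (a subset of) the first $N$ rounds. One also has to isolate the terminal, possibly unfinished cycle and keep the identities among $\Tau k0$, $\Tau k1$, $\tau_{k}$ straight, including the degenerate case $\tau_{k}=1$ where $\Tau k1=0$ and the contribution collapses to $C$. A lesser but genuine point is the measurability bookkeeping in the $\BDF$ argument, which is what legitimizes the round-by-round conditioning via the tower rule.
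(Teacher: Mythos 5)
Your proof is correct and follows essentially the same route as the paper's: the same three-way split, Lemma~\ref{lem:regret-by-distance} with round-by-round conditioning for $\BDF$, Lemma~\ref{lem:tau-rho} with $a=0$ for $\ADF$ and $a=1$ for $\EDF$, the failed stopping test to bound $\rho_{k}^{1}$, Cauchy--Schwarz on $\sum_{k}\sqrt{\Tau k1}$, and Lemma~\ref{lem:ABE} for $\Abs{\EDF}$. Your treatment of the horizon-straddling cycle and the $\tau_{k}=1$ degenerate case is if anything slightly more explicit than the paper's.
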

\begin{proof}
We will bound the exepectation of each term on the left hand side. 

The expected value in (\ref{eq:regret-by-distance}) of Lemma \ref{lem:regret-by-distance}
can be interpreted as the expected value of a the contribution of
a particular round to $\RgS{\BDF}$ \emph{conditional} on the past,
since the event ``$k\in\BDF$, $T_{k}\leq n<T_{k+1}$'' depends
only on the past of the $n$-th round. Hence,

\begin{align*}
\E{}{\RgS{\BDF}} & \leq N\cdot4\left(S^{-1}+1\right)\cdot2D_{Z}\delta\\
 & =8\left(S^{-1}+1\right)ND_{Z}\delta
\end{align*}

Here, we use the fact that $\BDF$ only includes cycles in which $\theta^{*}\in\C_{k}$,
because that's needed for condition (\ref{eq:optimism-1}) and also
to bound $\Norm{\theta^{*}-\theta_{k}}$ by the diameter of $\C_{k}$.

For $\ADF$, Lemma \ref{lem:tau-rho} implies (taking $a=0$)

\begin{align*}
\RgS{\ADF} & \leq\sum_{k<N}4\left(S^{-1}+1\right)\tau_{k}^{0}\cdot9D_{Z}^{2}\delta\\
 & =36\left(S^{-1}+1\right)ND_{Z}^{2}\delta
\end{align*}

For $\EDF$, Lemma \ref{lem:tau-rho} implies (taking $a=1$)

\begin{align*}
\RgS{\EDF} & \leq\sum_{k\in\EDF}\left(4\left(S^{-1}+1\right)\Tau k1\rho_{k}^{1}+C\right)\\
 & \leq4\left(S^{-1}+1\right)\sum_{k\in\EDF}\Tau k1\rho_{k}^{1}+C\Abs{\EDF}
\end{align*}

We know that stopping condition (\ref{eq:stop}) doesn't hold before
the last round of a cycle, for any $k\in\N$. Hence,

\[
\sqrt{\Tau k1}\cdot\rho_{k}^{1}<2(D_{Z}+1)\eta
\]

Dividing both sides by $\sqrt{\Tau k1}$,

\[
\rho_{k}^{1}<\frac{2(D_{Z}+1)\eta}{\sqrt{\Tau k1}}
\]

Using this, we can rewrite the second term on the left hand side of
the bound for $\RgS{\EDF}$:

\begin{align*}
\RgS{\EDF} & \leq8\left(S^{-1}+1\right)(D_{Z}+1)\eta\sum_{k\in\EDF}\sqrt{\Tau k1}+C\Abs{\EDF}
\end{align*}

Since the sum of the $\Tau k1$ cannot exceed $N$, the sum of their
square roots cannot exceed $\sqrt{\Abs{\EDF}N}$. We get

\[
\RgS{\EDF}\leq8\left(S^{-1}+1\right)(D_{Z}+1)\eta\sqrt{\Abs{\EDF}N}+C\Abs{\EDF}
\]

Using Lemma \ref{lem:ABE}, we can bound $|\EDF|\leq|\ED|$ and get

\begin{align*}
\RgS{\EDF} & \leq8\left(S^{-1}+1\right)(D_{Z}+1)\eta\sqrt{\gamma D_{Z}^{2}\ln\frac{D_{Z}R}{\delta}\cdot N}+\gamma CD_{Z}^{2}\ln\frac{D_{Z}R}{\delta}
\end{align*}

We can now assemble all the contributions together:

\begin{align*}
\E{}{\RgS{\BDF}+\RgS{\ADF}+\RgS{\EDF}} & \leq & 8\left(S^{-1}+1\right)ND_{Z}\delta\\
 &  & +36\left(S^{-1}+1\right)ND_{Z}^{2}\delta\\
 &  & +8\left(S^{-1}+1\right)(D_{Z}+1)\eta\sqrt{\gamma D_{Z}^{2}\ln\frac{D_{Z}R}{\delta}\cdot N}\\
 &  & +\gamma CD_{Z}^{2}\ln\frac{D_{Z}R}{\delta}\\
 & = & \left(S^{-1}+1\right)D_{Z}\left(36D_{Z}+8\right)N\delta\\
 &  & +8\eta\left(S^{-1}+1\right)(D_{Z}+1)\sqrt{\gamma D_{Z}^{2}\ln\frac{D_{Z}R}{\delta}\cdot N}\\
 &  & +\gamma CD_{Z}^{2}\ln\frac{D_{Z}R}{\delta}
\end{align*}
\end{proof}
To complete the proof of the main theorem, it remains only to bound
the contribution of $\FC$. Denote $\UR:=\UR[[N]]$ (the total regret).
\begin{proof}
[Proof of Theorem \ref{thm:main}]

We have

\[
\UR=\RgS{\BDF}+\RgS{\ADF}+\RgS{\EDF}+\RgS{\FC}
\]

By Lemma \ref{lem:confidence},

\[
\Pr\left[\FC\ne\varnothing\right]\leq D_{W}N(N+1)\exp\left(-\Omega(1)\cdot\frac{\eta^{2}}{R^{2}D_{W}^{\frac{5}{3}}}\right)
\]

Since $\RgS{\varnothing}=0$ and $\RgS{\FC}\leq CN$, this implies

\begin{align*}
\E{}{\RgS{\FC}} & \leq D_{W}N(N+1)\exp\left(-\Omega(1)\cdot\frac{\eta^{2}}{R^{2}D_{W}^{\frac{5}{3}}}\right)\cdot CN\\
 & =CD_{W}N^{2}(N+1)\exp\left(-\Omega(1)\cdot\frac{\eta^{2}}{R^{2}D_{W}^{\frac{5}{3}}}\right)
\end{align*}

Combining this with Lemma \ref{lem:regret-abc}, we get,

\begin{align*}
\E{}{\UR} & \leq & \left(S^{-1}+1\right)D_{Z}\left(36D_{Z}+8\right)N\delta\\
 &  & +8\eta\left(S^{-1}+1\right)(D_{Z}+1)\sqrt{\gamma D_{Z}^{2}\ln\frac{D_{Z}R}{\delta}\cdot N}\\
 &  & +\gamma CD_{Z}^{2}\ln\frac{D_{Z}R}{\delta}\\
 &  & +CD_{W}N^{2}(N+1)\exp\left(-\Omega(1)\cdot\frac{\eta^{2}}{R^{2}D_{W}^{\frac{5}{3}}}\right)
\end{align*}

Since this holds for any nature policy, this is a bound on $\Reg{\theta^{*}}{\IUCB^{\eta}}N$.
\end{proof}

\section{\label{sec:gap}Upper Bound with Gap}

\subsection{Gap in Special Cases}
\begin{proof}
[Proof of Proposition \ref{prop:gap}]By the assumption on $\theta'$,
$\ME{\theta}r{x_{\theta'}^{*}}<\ME{\theta}r{x_{\theta}^{*}}$. Therefore,
condition 1 is false for $x=x_{\theta'}^{*}$, and hence condition
2 must be true. Consider any $y\in K_{\theta}(x_{\theta'}^{*})$.
We have

\begin{align*}
r\left(x_{\theta'}^{*},y\right) & \leq\ME{\theta}r{x_{\theta}^{*}}-g_{\theta}\\
 & \leq\ME{\theta'}r{x_{\theta'}^{*}}-g_{\theta}
\end{align*}

Here, the first inequality follows from condition 2, and the second
from the assumption on $\theta'$. Since $r$ is 1-Lipschitz, this
implies

\[
\DY y{K_{\theta'}\left(x_{\theta'}^{*}\right)}\geq g_{\theta}
\]
\end{proof}
\rule[0.5ex]{1\columnwidth}{1pt}

Now, we start working towards the proof of Proposition \ref{prop:zerosum-gap}.
First, we'll need two simple technical lemmas. The first one is a
lower bound on the $\ell_{1}$ distance between a probability distribution
and another probability distribution scaled down by a scalar.
\begin{lem}
\label{lem:l1-scalar}Let $B$ be a finite set, $t\in[0,1]$ and $u,v\in\Delta B$.
Then,

\[
\Norm{tu-v}_{1}\geq\frac{1}{2}\Norm{u-v}_{1}
\]
\end{lem}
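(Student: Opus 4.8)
The plan is to obtain the bound from two complementary applications of the triangle inequality in $\ell_{1}$, combined with a short case split on the size of $1-t$. The key observation is that scaling $u$ down by $t$ can only "lose" mass $1-t$, and that same quantity is exactly what controls $\Norm{tu-v}_{1}$ from below in the regime where $t$ is small.

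First I would record the forward estimate. Writing $tu-v=(u-v)-(1-t)u$ and using $\Norm{u}_{1}=1$ together with $1-t\geq 0$, the triangle inequality gives
\[
\Norm{tu-v}_{1}\;\geq\;\Norm{u-v}_{1}-(1-t)\Norm{u}_{1}\;=\;\Norm{u-v}_{1}-(1-t).
\]
Second I would record the reverse estimate: since $\Norm{v}_{1}=1$ and $\Norm{tu}_{1}=t\leq 1$, the reverse triangle inequality gives
\[
\Norm{tu-v}_{1}\;\geq\;\Bigl|\,\Norm{v}_{1}-\Norm{tu}_{1}\,\Bigr|\;=\;1-t.
\]

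Then I would split into two cases. If $1-t\leq\tfrac12\Norm{u-v}_{1}$, the forward estimate yields $\Norm{tu-v}_{1}\geq\Norm{u-v}_{1}-\tfrac12\Norm{u-v}_{1}=\tfrac12\Norm{u-v}_{1}$. Otherwise $1-t>\tfrac12\Norm{u-v}_{1}$, and the reverse estimate yields $\Norm{tu-v}_{1}\geq 1-t\geq\tfrac12\Norm{u-v}_{1}$. In both cases the claimed inequality holds.

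There is no real obstacle here — this is a routine estimate. The only point requiring a little care is choosing the decomposition $tu-v=(u-v)-(1-t)u$ so that the error term introduced by the forward triangle inequality is precisely the quantity $1-t$ that the reverse inequality bounds, which is what makes the case split close. It is also worth noting (though not needed for the statement) that the constant $\tfrac12$ is tight: taking $B=\{0,1\}$, $u=(1,0)$, $v=(0,1)$, $t=0$ gives $\Norm{tu-v}_{1}=1=\tfrac12\Norm{u-v}_{1}$.
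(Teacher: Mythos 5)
Your proof is correct, and it takes a genuinely different route from the paper's. The paper argues coordinate-wise: it sets $A:=\SC{b\in B}{u_{b}<v_{b}}$, uses the fact that the positive and negative parts of $u-v$ carry equal mass (both vectors summing to $1$) to get $\Norm{u-v}_{1}=2\sum_{b\in A}(v_{b}-u_{b})$, and then lower-bounds $\Norm{tu-v}_{1}$ by the sum over $A$ alone, where each term $v_{b}-tu_{b}\geq v_{b}-u_{b}$ because scaling $u$ down only helps on that set. Your argument instead combines the decomposition $tu-v=(u-v)-(1-t)u$ with the reverse triangle inequality $\Norm{tu-v}_{1}\geq1-t$ and a case split on whether $1-t$ exceeds $\tfrac{1}{2}\Norm{u-v}_{1}$; both estimates and the split are sound. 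A notable payoff of your version is that it never uses non-negativity of the coordinates, only $\Norm u_{1}=\Norm v_{1}=1$, so it actually proves the inequality for arbitrary unit vectors in any normed space, whereas the paper's proof is tied to the simplex and the $\ell_{1}$ norm. The paper's route, on the other hand, makes explicit the "half the mass lives where $v$ exceeds $u$" identity, which is a fact it leans on conceptually in the surrounding distance computations. Your tightness example is also correct.
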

\begin{proof}
Define $A\subset B$ by

\[
A:=\SC{b\in B}{u_{b}<v_{b}}
\]

Then,

\begin{align*}
\Norm{u-v}_{1} & =\sum_{b\in B}\Abs{u_{b}-v_{b}}\\
 & =\sum_{b\in A}\Abs{u_{b}-v_{b}}+\sum_{b\in B\setminus A}\Abs{u_{b}-v_{b}}\\
 & =\sum_{b\in A}\left(v_{b}-u_{b}\right)+\sum_{b\in B\setminus A}\left(u_{b}-v_{b}\right)
\end{align*}

Moreover,

\begin{align*}
\sum_{b\in A}\left(v_{b}-u_{b}\right)-\sum_{b\in B\setminus A}\left(u_{b}-v_{b}\right) & =\sum_{b\in A}v_{b}-\sum_{b\in A}u_{b}-\sum_{b\in B\setminus A}u_{b}+\sum_{b\in B\setminus A}v_{b}\\
 & =\sum_{b\in B}v_{b}-\sum_{b\in B}u_{b}\\
 & =1-1\\
 & =0
\end{align*}

Therefore,

\[
\sum_{b\in A}\left(v_{b}-u_{b}\right)=\sum_{b\in B\setminus A}\left(u_{b}-v_{b}\right)
\]

Combining this with the expression we got for $\Norm{u-v}_{1}$, we
conclude

\begin{align*}
\Norm{u-v}_{1} & =2\sum_{b\in A}\left(v_{b}-u_{b}\right)
\end{align*}

It follows that

\begin{align*}
\Norm{tu-v}_{1} & =\sum_{b\in B}\Abs{tu_{b}-v_{b}}\\
 & \geq\sum_{b\in A}\Abs{tu_{b}-v_{b}}\\
 & =\sum_{b\in A}\left(v_{b}-tu_{b}\right)\\
 & \geq\sum_{b\in A}\left(v_{b}-u_{b}\right)\\
 & =\frac{1}{2}\Norm{u-v}_{1}
\end{align*}
\end{proof}
We now extend the previous lemma to allow for a scalar in front of
both distributions.
\begin{lem}
\label{lem:l1-biscalar}Let $B$ be a finite set, $s,t\in[0,1]$ and
$u,v\in\Delta B$. Then,

\[
\Norm{su-tv}_{1}\geq\frac{1}{2}\max\left(s,t\right)\Norm{u-v}_{1}
\]
\end{lem}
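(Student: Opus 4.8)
The plan is to reduce Lemma \ref{lem:l1-biscalar} to the already-proved Lemma \ref{lem:l1-scalar} by factoring out the larger of the two scalars. Observe first that the claimed inequality is symmetric under the simultaneous swap $(s,u)\leftrightarrow(t,v)$, since $\Norm{su-tv}_1=\Norm{tv-su}_1$ and $\max(s,t)\Norm{u-v}_1$ is symmetric as well. Hence I may assume without loss of generality that $s\geq t$. If $s=0$, then $t=0$ too, and the inequality reads $0\geq 0$; so I may further assume $s>0$.

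Now set $t':=t/s$, which lies in $[0,1]$ precisely because $0\leq t\leq s$. Then $su-tv=s\,(u-t'v)$, so $\Norm{su-tv}_1=s\,\Norm{u-t'v}_1=s\,\Norm{t'v-u}_1$. The vector $t'v-u$ is exactly of the form handled by Lemma \ref{lem:l1-scalar}, with the scalar in front of the first probability vector: applying that lemma with $t'$, $v$, $u$ in the roles of $t$, $u$, $v$ respectively gives $\Norm{t'v-u}_1\geq\frac{1}{2}\Norm{v-u}_1=\frac{1}{2}\Norm{u-v}_1$. Multiplying through by $s$ and recalling $s=\max(s,t)$ yields $\Norm{su-tv}_1\geq\frac{1}{2}\max(s,t)\Norm{u-v}_1$, which is the claim.

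I do not expect any genuine obstacle here; the two points that need a little care are the degenerate case $s=t=0$ and the use of the symmetry, which is essential so that the reduction always divides by the \emph{larger} scalar — dividing by the smaller one would produce a factor outside $[0,1]$, which is precisely the hypothesis Lemma \ref{lem:l1-scalar} relies on.
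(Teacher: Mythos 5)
Your proof is correct and follows essentially the same route as the paper: factor out the larger scalar, reduce to Lemma \ref{lem:l1-scalar}, and handle the degenerate zero case separately. The only cosmetic difference is that the paper normalizes by assuming $s\leq t$ (so it can invoke Lemma \ref{lem:l1-scalar} without swapping the roles of $u$ and $v$), whereas you assume $s\geq t$ and swap; both are valid.
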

\begin{proof}
Assume w.l.o.g. that $s\leq t$. If $t=0$ then $s=0$ and the claim
is true. If $t>0$, we have

\begin{align*}
\Norm{su-tv}_{1} & =t\Norm{\frac{s}{t}u-v}_{1}
\end{align*}

Applying Lemma \ref{lem:l1-scalar} to the right hand side, we get

\begin{align*}
\Norm{su-tv}_{1} & \geq t\cdot\frac{1}{2}\Norm{u-v}_{1}\\
 & =\frac{1}{2}\max\left(s,t\right)\Norm{u-v}_{1}
\end{align*}
\end{proof}
The following is a lower bound on the $\ell_{1}$ distance between
two credal sets, when each of them is specified by fixing the conditional
distribution of one variable relatively to another variable.
\begin{lem}
\label{lem:dist-sdp}Let $A,B$ be finite sets and $Q,\tilde{Q}:A\rightarrow\Delta B$
mappings. Define $\kappa,\tilde{\kappa}\subseteq\Delta(A\times B)$
by

\begin{align*}
\kappa & =\SC{y\in\Delta(A\times B)}{\forall a\in A,b\in B:y_{ab}=Q(a)_{b}\sum_{b'\in B}y_{ab'}}\\
\tilde{\kappa} & =\SC{y\in\Delta(A\times B)}{\forall a\in A,b\in B:y_{ab}=\tilde{Q}(a)_{b}\sum_{b'\in B}y_{ab'}}
\end{align*}

Then,

\[
\min\SC{\Norm{y-\tilde{y}}_{1}}{y\in\kappa,\,\tilde{y}\in\tilde{\kappa}}\geq\frac{1}{2}\min_{a\in\A}\Norm{Q(a)-\tilde{Q}(a)}_{1}
\]
\end{lem}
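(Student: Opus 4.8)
The plan is to reduce the statement to Lemma \ref{lem:l1-biscalar} by decomposing the $\ell_{1}$ distance according to the $A$-marginals. Fix any $y\in\kappa$ and $\tilde{y}\in\tilde{\kappa}$, and set $p_{a}:=\sum_{b\in B}y_{ab}$ and $q_{a}:=\sum_{b\in B}\tilde{y}_{ab}$. Summing the defining constraint of $\kappa$ over $b$ and using $Q(a)\in\Delta B$ shows $\sum_{b}y_{ab}=\left(\sum_{b}Q(a)_{b}\right)\sum_{b'}y_{ab'}=\sum_{b'}y_{ab'}$, so the constraint is self-consistent and $p\in\Delta A$ (likewise $q\in\Delta A$). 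The constraints then say precisely that $y_{ab}=p_{a}Q(a)_{b}$ and $\tilde{y}_{ab}=q_{a}\tilde{Q}(a)_{b}$, hence
\[
\Norm{y-\tilde{y}}_{1}=\sum_{a\in A}\sum_{b\in B}\Abs{p_{a}Q(a)_{b}-q_{a}\tilde{Q}(a)_{b}}=\sum_{a\in A}\Norm{p_{a}Q(a)-q_{a}\tilde{Q}(a)}_{1}.
\]

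Next I would apply Lemma \ref{lem:l1-biscalar} termwise, with $s=p_{a}\in[0,1]$, $t=q_{a}\in[0,1]$, $u=Q(a)$, $v=\tilde{Q}(a)$, giving
\[
\Norm{p_{a}Q(a)-q_{a}\tilde{Q}(a)}_{1}\geq\tfrac{1}{2}\max(p_{a},q_{a})\Norm{Q(a)-\tilde{Q}(a)}_{1}\geq\tfrac{1}{2}p_{a}\Norm{Q(a)-\tilde{Q}(a)}_{1}.
\]
Bounding each factor $\Norm{Q(a)-\tilde{Q}(a)}_{1}$ from below by $\min_{a'\in A}\Norm{Q(a')-\tilde{Q}(a')}_{1}$ and summing over $a$ yields
\[
\Norm{y-\tilde{y}}_{1}\geq\tfrac{1}{2}\left(\sum_{a\in A}p_{a}\right)\min_{a\in A}\Norm{Q(a)-\tilde{Q}(a)}_{1}=\tfrac{1}{2}\min_{a\in A}\Norm{Q(a)-\tilde{Q}(a)}_{1},
\]
since $\sum_{a}p_{a}=1$. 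As this holds for every $y\in\kappa$, $\tilde{y}\in\tilde{\kappa}$, and both sets are non-empty (take $y_{ab}:=p_{a}Q(a)_{b}$ for arbitrary $p\in\Delta A$) and compact (closed subsets of the simplex), taking the minimum over the left-hand side gives the claim.

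There is no real obstacle here; the proof is essentially a two-line computation on top of Lemma \ref{lem:l1-biscalar}. The only points requiring a little care are (i) verifying that the defining equations genuinely force the product form $y_{ab}=p_{a}Q(a)_{b}$ with $p\in\Delta A$, so that the distance factorizes over $a$, and (ii) retaining the bound $\max(p_{a},q_{a})\geq p_{a}$ (rather than $\geq q_{a}$) so that the post-summation normalization factor is exactly $\sum_{a}p_{a}=1$; the symmetric choice works identically.
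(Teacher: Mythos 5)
Your proof is correct and follows essentially the same route as the paper's: decompose the $\ell_{1}$ distance over the $A$-marginals, apply Lemma \ref{lem:l1-biscalar} termwise, drop $\max(p_{a},q_{a})$ to $p_{a}$, and sum using $\sum_{a}p_{a}=1$. The only difference is notational ($p,q$ versus the paper's $u,\tilde{u}$), plus your extra (harmless) remarks on non-emptiness and compactness.
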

\begin{proof}
Consider any $y\in\kappa$, $\tilde{y}\in\tilde{\kappa}$. Define
$u,\tilde{u}\in\Delta A$ by

\begin{align*}
u_{a} & :=\sum_{b'\in B}y_{ab'}\\
\tilde{u}_{a} & :=\sum_{b'\in B}\tilde{y}_{ab'}
\end{align*}

We have

\begin{align*}
\Norm{y-\tilde{y}}_{1} & =\sum_{a\in A}\sum_{b\in B}\Abs{Q(a)_{b}u_{a}-\tilde{Q}(a)_{b}\tilde{u}_{a}}\\
 & =\sum_{a\in A}\Norm{u_{a}Q(a)-\tilde{u}_{a}\tilde{Q}(a)}_{1}\\
 & \geq\sum_{a\in A}\frac{1}{2}\max\left(u_{a},\tilde{u}_{a}\right)\Norm{Q(a)-\tilde{Q}(a)}_{1}\\
 & \geq\frac{1}{2}\sum_{a\in A}u_{a}\Norm{Q(a)-\tilde{Q}(a)}_{1}\\
 & \geq\frac{1}{2}\sum_{a\in A}u_{a}\min_{a'\in A}\Norm{Q(a')-\tilde{Q}(a')}_{1}\\
 & =\frac{1}{2}\min_{a'\in A}\Norm{Q(a')-\tilde{Q}(a')}_{1}
\end{align*}

Here, we used Lemma \ref{lem:l1-biscalar} on the 3rd line.
\end{proof}
Notice that, in the setting of Lemma \ref{lem:dist-sdp}, it is obvious
that

\[
\min\SC{\Norm{y-\tilde{y}}_{1}}{y\in\kappa,\,\tilde{y}\in\tilde{\kappa}}\leq\min_{a\in\A}\Norm{Q(a)-\tilde{Q}(a)}_{1}
\]

Because, for any $a_{0}\in A$, we can take $y_{ab}:=\boldsymbol{1}_{a=a_{0}}Q(a)_{b}$
and likewise for $\tilde{y}$. Hence, that bound is tight up to a
multiplicative constant.

In order to apply Lemma \ref{lem:dist-sdp} to the proof of Proposition
\ref{prop:zerosum-gap}, we need to show that the credal sets figuring
in the latter are of the form required for the former. The following
lemma establishes it.
\begin{lem}
\label{lem:zerosum-sdp}In the setting of Example \ref{ex:zerosum}
and Example \ref{ex:pcb-lin}, let $\theta\in\Z$ be s.t. for any
$b\in\B_{2}$, $\theta_{b}=1$ and for any $(b,a)\in\B_{2}\times\B_{1}$,
$\theta_{ba}\in\Delta\left\{ -1,+1\right\} $. Recall that $\B=\B_{2}\times\B_{1}\times\left\{ -1,+1\right\} $.
Then, for any $x\in\A_{1}$,

\[
K_{\theta}(x)^{+}=\SC{y\in\Delta\B}{\forall b,a,\sigma:y_{ba\sigma}=\theta_{ba\sigma}x_{a}\sum_{\SUBSTACK{a'\in\B_{1}}{\sigma'\in\left\{ -1,+1\right\} }}y_{ba'\sigma'}}
\]
\end{lem}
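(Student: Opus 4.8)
The plan is to unwind the description of $K_\theta(x)$ coming from Examples \ref{ex:pcb} and \ref{ex:pcb-lin}, specialize every piece of data to Example \ref{ex:zerosum}, read off the concrete linear conditions cutting out $K_\theta(x)$, and then verify the two set inclusions by direct substitution. Recall that with $\B=\B_2\times\B_1\times\{-1,+1\}$ and $\Stoc=\B_2\sqcup(\B_2\times\B_1)$, the subspace $K_\theta(x)$ consists of the $y$ with $\bar F(x,\theta,y)_{ac}=0$ for all $a\in\Stoc$, $c\in\CB{|a|}$, i.e. $\psi_a(\theta_a)\sum_{e\in\NB{|a|}}y_{ace}=f_a(x,\theta_a)_c\sum_{g\in\NB{|a|-1}}y_{ag}$ (equivalently equation (\ref{eq:pcb-k}) with $\theta$ in place of $\theta^*$; the factor $\psi_a(\theta_a)$ is $1$ for every $a$ under our hypotheses on $\theta$), and $K_\theta(x)^+=K_\theta(x)\cap\Delta\B$.

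First I would plug in the two kinds of prefix. For $a=b\in\B_2$: here $|a|=1$, $\psi_b(\theta_b)=\theta_b=1$, $f_b(x,\theta_b)=\theta_b x=x$, $\CB{|a|}=\B_1$, $\NB{1}=\{-1,+1\}$ and $\NB{0}=\B_1\times\{-1,+1\}$, so the condition reads $\sum_{\sigma}y_{bc\sigma}=x_c\sum_{a'\in\B_1,\sigma}y_{ba'\sigma}$ for all $b\in\B_2$, $c\in\B_1$; call this (C1). For $a=(b,a')\in\B_2\times\B_1$: here $|a|=2$, $\psi_{ba'}(\theta_{ba'})=\theta_{ba',-1}+\theta_{ba',+1}=1$ because $\theta_{ba'}\in\Delta\{-1,+1\}$, $f_{ba'}(x,\theta_{ba'})_c=\theta_{ba'c}$, $\CB{|a|}=\{-1,+1\}$, $\NB{2}$ is a one-point product and $\NB{1}=\{-1,+1\}$, so the condition reads $y_{ba'c}=\theta_{ba'c}\sum_{\sigma}y_{ba'\sigma}$ for all $b,a',c$; call this (C2). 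Hence $K_\theta(x)^+$ is exactly $\{y\in\Delta\B:\text{(C1) and (C2)}\}$.

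Then I would prove $M=K_\theta(x)^+$, where $M$ is the target set with defining condition (M): $y_{ba\sigma}=\theta_{ba\sigma}x_a\sum_{a',\sigma'}y_{ba'\sigma'}$. For $M\subseteq K_\theta(x)^+$: given (M), sum over $\sigma$ and use $\sum_\sigma\theta_{ba\sigma}=1$ to get $\sum_\sigma y_{ba\sigma}=x_a\sum_{a',\sigma'}y_{ba'\sigma'}$, which is (C1) with $c=a$; substituting this back into (M) gives (C2). For $K_\theta(x)^+\subseteq M$: given (C1) and (C2), chain them — (C2) gives $y_{ba\sigma}=\theta_{ba\sigma}\sum_\sigma y_{ba\sigma}$, and (C1) with $c=a$ rewrites $\sum_\sigma y_{ba\sigma}$ as $x_a\sum_{a',\sigma'}y_{ba'\sigma'}$, yielding (M). The only thing requiring care — and the single mild obstacle, which is really just bookkeeping — is matching the abstract prefix/suffix index ranges $\NB{|a|}$, $\NB{|a|-1}$ and $\CB{|a|}$ in $\bar F(x,\theta,y)_{ac}$ to the concrete three-component outcomes $(b,a,\sigma)$, and keeping straight that $\psi_a(\theta_a)=1$ holds for the first type of $a$ because $\theta_b=1$ and for the second type because $\theta_{ba}\in\Delta\{-1,+1\}$. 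Once (C1) and (C2) are written out explicitly, both inclusions are one-line computations.
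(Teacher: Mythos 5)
Your proposal is correct and follows essentially the same route as the paper's proof: both unwind the defining conditions of $K_{\theta}(x)^{+}$ into the marginal condition (C1) and the conditional condition (C2) for the two types of prefixes in $\Stoc$, then show the conjunction is equivalent to the product formula by chaining in one direction and by summing over $\sigma$ (using $\theta_{ba}\in\Delta\{-1,+1\}$) and substituting back in the other.
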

\begin{proof}
In Example \ref{ex:pcb-expect}, we saw that for any $x\in\A_{1}$
and $y\in\Delta\B$, we have that $y\in K_{\theta}(x)^{+}$ if and
only if

\[
\forall a\in\Stoc,c\in\CB{|a|}:\phi_{x,a,c}(y)=0
\]

Looking at the definitions of $\Stoc$ and $\CB{}$ in Example \ref{ex:zerosum},
this condition can be rewritten as the pair of conditions

\[
\begin{cases}
\forall b\in\B_{2},a\in\B_{1}: & \phi_{x,b,a}(y)=0\\
\forall b\in\B_{2},a\in\B_{1},\sigma\in\left\{ -1,+1\right\} : & \phi_{x,ba,\sigma}(y)=0
\end{cases}
\]

Unpacking the definition of $\phi$ from Example \ref{ex:pcb}, we
can rewrite $\phi_{x,b,a}(y)=0$ as

\[
\sum_{\sigma\in\left\{ -1,+1\right\} }y_{ba\sigma}-f_{b}\left(x,\theta_{b}\right)_{a}\sum_{\SUBSTACK{a'\in\B_{1}}{\sigma\in\left\{ -1,+1\right\} }}y_{ba'\sigma}=0
\]

Equivalently,

\[
\sum_{\sigma\in\left\{ -1,+1\right\} }y_{ba\sigma}=f_{b}\left(x,\theta_{b}\right)_{a}\sum_{\SUBSTACK{a'\in\B_{1}}{\sigma\in\left\{ -1,+1\right\} }}y_{ba'\sigma}
\]

Using the definition of $f_{b}$ in Example \ref{ex:zerosum}, this
becomes

\[
\sum_{\sigma\in\left\{ -1,+1\right\} }y_{ba\sigma}=\theta_{b}x_{a}\sum_{\SUBSTACK{a'\in\B_{1}}{\sigma\in\left\{ -1,+1\right\} }}y_{ba'\sigma}
\]

Since $\theta_{b}=1$, we get

\begin{equation}
\sum_{\sigma\in\left\{ -1,+1\right\} }y_{ba\sigma}=x_{a}\sum_{\SUBSTACK{a'\in\B_{1}}{\sigma\in\left\{ -1,+1\right\} }}y_{ba'\sigma}\label{eq:lem_pcb-sdp_x}
\end{equation}

Similarly, we can rewrite $\phi_{x,ba,\sigma}(y)=0$ (the second condition)
as

\[
y_{ba\sigma}=f_{ba}\left(x,\theta_{ba}\right)_{\sigma}\sum_{\sigma'\in\left\{ -1,+1\right\} }y_{ba\sigma'}
\]

Using the definition of $f_{ba}$ in Example \ref{ex:zerosum}, this
becomes

\begin{equation}
y_{ba\sigma}=\theta_{ba\sigma}\sum_{\sigma'\in\left\{ -1,+1\right\} }y_{ba\sigma'}\label{eq:lem_pcb-sdp_th}
\end{equation}

It remains to show that the conjuction of equations (\ref{eq:lem_pcb-sdp_x})
and (\ref{eq:lem_pcb-sdp_th}) is equivalent to

\begin{equation}
y_{ba\sigma}=\theta_{ba\sigma}x_{a}\sum_{\SUBSTACK{a'\in\B_{1}}{\sigma'\in\left\{ -1,+1\right\} }}y_{ba'\sigma'}\label{eq:lem_pcb-sdp_th-x}
\end{equation}

Assuming equations (\ref{eq:lem_pcb-sdp_x}) and (\ref{eq:lem_pcb-sdp_th}),
we have

\begin{align*}
y_{ba\sigma} & =\theta_{ba\sigma}\sum_{\sigma'\in\left\{ -1,+1\right\} }y_{ba\sigma'}\\
 & =\theta_{ba\sigma}x_{a}\sum_{\SUBSTACK{a'\in\B_{1}}{\sigma'\in\left\{ -1,+1\right\} }}y_{ba'\sigma'}
\end{align*}

Here, we used equation (\ref{eq:lem_pcb-sdp_th}) on the 1st line
and equation (\ref{eq:lem_pcb-sdp_x}) on the second line.

Conversly, assuming equation (\ref{eq:lem_pcb-sdp_th-x}), we can
sum both sides over $\sigma$, yielding equation (\ref{eq:lem_pcb-sdp_x}):

\begin{align*}
\sum_{\sigma\in\left\{ -1,+1\right\} }y_{ba\sigma} & =\sum_{\sigma\in\left\{ -1,+1\right\} }\theta_{ba\sigma}x_{a}\sum_{\SUBSTACK{a'\in\B_{1}}{\sigma'\in\left\{ -1,+1\right\} }}y_{ba'\sigma'}\\
 & =x_{a}\sum_{\SUBSTACK{a'\in\B_{1}}{\sigma'\in\left\{ -1,+1\right\} }}y_{ba'\sigma'}
\end{align*}

Here, we used the assumption $\theta_{ba}\in\Delta\left\{ -1,+1\right\} $. 

Moreover, applying equation (\ref{eq:lem_pcb-sdp_x}) to equation
(\ref{eq:lem_pcb-sdp_th-x}), we get equation (\ref{eq:lem_pcb-sdp_th}):

\begin{align*}
y_{ba\sigma} & =\theta_{ba\sigma}\left(x_{a}\sum_{\SUBSTACK{a'\in\B_{1}}{\sigma'\in\left\{ -1,+1\right\} }}y_{ba'\sigma'}\right)\\
 & =\theta_{ba\sigma}\sum_{\sigma\in\left\{ -1,+1\right\} }y_{ba\sigma}
\end{align*}
\end{proof}
It is now straightforward to prove of Proposition \ref{prop:zerosum-gap}
by putting together Lemma \ref{lem:dist-sdp} and Lemma \ref{lem:zerosum-sdp}.
\begin{proof}
[Proof of Proposition \ref{prop:zerosum-gap}]Fix $x\in\A_{1}$. Define
$Q,\tilde{Q}:\B_{2}\rightarrow\Delta\left(\B_{1}\times\left\{ -1,+1\right\} \right)$
by

\begin{align*}
Q(b)_{a\sigma} & :=\theta_{ba\sigma}x_{a}\\
\tilde{Q}(b)_{a\sigma} & :=\theta'_{ba\sigma}x_{a}
\end{align*}

By Lemma \ref{lem:zerosum-sdp}, we have

\begin{align*}
K_{\theta}(x)^{+} & =\SC{y\in\Delta\B}{\forall b,a,\sigma:y_{ba\sigma}=Q(b)_{a\sigma}\sum_{\SUBSTACK{a'\in\B_{1}}{\sigma'\in\left\{ -1,+1\right\} }}y_{ba'\sigma'}}\\
K_{\theta'}(x)^{+} & =\SC{y\in\Delta\B}{\forall b,a,\sigma:y_{ba\sigma}=\tilde{Q}(b)_{a\sigma}\sum_{\SUBSTACK{a'\in\B_{1}}{\sigma'\in\left\{ -1,+1\right\} }}y_{ba'\sigma'}}
\end{align*}

By Lemma \ref{lem:dist-sdp} and Lemma \ref{lem:l1}, it follows that

\begin{align*}
\DY{K_{\theta}(x)^{+}}{K_{\theta'}(x)^{+}} & \geq\frac{1}{2}\min_{b\in\B_{2}} & \Norm{Q(b)-\tilde{Q}(b)}_{1}\\
 & =\frac{1}{2}\min_{b\in\B_{2}} & \sum_{\SUBSTACK{a\in\B_{1}}{\sigma\in\left\{ -1,+1\right\} }}\Abs{Q(b)_{a\sigma}-\tilde{Q}(b)_{a\sigma}}\\
 & =\frac{1}{2}\min_{b\in\B_{2}} & \sum_{\SUBSTACK{a\in\B_{1}}{\sigma\in\left\{ -1,+1\right\} }}\Abs{\theta_{ba\sigma}x_{a}-\theta'_{ba\sigma}x_{a}}\\
 & =\frac{1}{2}\min_{b\in\B_{2}} & \sum_{a\in\B_{1}}x_{a}\sum_{\sigma\in\left\{ -1,+1\right\} }\Abs{\theta_{ba\sigma}-\theta'_{ba\sigma}}\\
 & =\frac{1}{2}\min_{b\in\B_{2}} & \sum_{a\in\B_{1}}x_{a}\left(\Abs{\theta_{ba,-1}-\theta'_{ba,-1}}+\Abs{\theta_{ba,+1}-\theta'_{ba,+1}}\right)\\
 & =\frac{1}{2}\min_{b\in\B_{2}} & \sum_{a\in\B_{1}}x_{a}\bigg(\Abs{\frac{1-P_{ab}}{2}-\frac{1-P_{ab}'}{2}}+\\
 &  & \Abs{\frac{1+P_{ab}}{2}-\frac{1+P_{ab}'}{2}}\bigg)\\
 & =\frac{1}{2}\min_{b\in\B_{2}} & \sum_{a\in\B_{1}}x_{a}\left(\Abs{\frac{-P_{ab}+P_{ab}'}{2}}+\Abs{\frac{P_{ab}-P_{ab}'}{2}}\right)\\
 & =\frac{1}{2}\min_{b\in\B_{2}} & \sum_{a\in\B_{1}}x_{a}\Abs{P_{ab}-P_{ab}'}
\end{align*}
\end{proof}

\subsection{Proof of the Upper Bound}

The regret analysis in Theorem \ref{thm:gap} is similar to that in
Theorem \ref{thm:main} with two important differences: we use the
specific value $\delta\approx\frac{Sg}{144D_{Z}^{2}}$, and we show
that cycles $k$ with either small $\rho_{k}$ or small diameter of
$\C_{k}$ must select optimal arms and therefore don't accrue regret.
The following lemma handles the case of small $\rho_{k}$:
\begin{lem}
\label{lem:gap}Let $\theta,\theta^{*}\in\H$ and $\bar{y}\in\D$
be s.t. the following conditions hold:
\begin{enumerate}
\item $\ME{\theta}r{x_{\theta}^{*}}\geq r^{*}$
\item $\DZ{\theta}{\V{x_{\theta}^{*}}{\bar{y}}}<\frac{1}{16}Sg$
\item $\DZ{\theta^{*}}{\V{x_{\theta}^{*}}{\bar{y}}}<\frac{1}{16}Sg$
\end{enumerate}
Then,

\begin{equation}
\ME{\theta^{*}}r{x_{\theta}^{*}}=r^{*}\label{eq:lem-gap}
\end{equation}
\end{lem}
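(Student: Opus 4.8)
The plan is to argue by contradiction, pitting the gap (in the form of Definition~\ref{def:gap}) against the two closeness hypotheses 2 and 3. We may assume $S>0$, since otherwise $\frac{1}{16}Sg=0$ and hypotheses 2 and 3 are vacuous. First I would note that, because $x_{\theta^{*}}^{*}$ is by definition an optimal arm for $\theta^{*}$, we always have $\ME{\theta^{*}}r{x_{\theta}^{*}}\le\ME{\theta^{*}}r{x_{\theta^{*}}^{*}}=r^{*}$, so it suffices to exclude the strict inequality $\ME{\theta^{*}}r{x_{\theta}^{*}}<r^{*}$. Assuming it holds, I would apply Definition~\ref{def:gap} with the definition's ``$\theta$'' taken to be our $\theta^{*}$ and its ``$\theta'$'' taken to be our $\theta$: condition 1 of the definition then reads $\ME{\theta}r{x_{\theta}^{*}}\ge\ME{\theta^{*}}r{x_{\theta^{*}}^{*}}=r^{*}$, which is exactly hypothesis 1 of the lemma, and condition 2 reads $\ME{\theta^{*}}r{x_{\theta}^{*}}<\ME{\theta^{*}}r{x_{\theta^{*}}^{*}}=r^{*}$, the inequality just assumed. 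Hence the definition of the gap yields
\[
\DY{K_{\theta^{*}}(x_{\theta}^{*})^{+}}{K_{\theta}(x_{\theta}^{*})^{+}}\ge g .
\]

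It remains to contradict this using hypotheses 2 and 3. Applying Lemma~\ref{lem:dy-leq-dz} with $x=x_{\theta}^{*}$ and $y=\bar{y}$, hypothesis 2 gives $\DY{\bar{y}}{K_{\theta}(x_{\theta}^{*})^{\flat}}\le 4\,\DZ{\theta}{\V{x_{\theta}^{*}}{\bar{y}}}<\frac{1}{4}Sg$, and similarly hypothesis 3 gives $\DY{\bar{y}}{K_{\theta^{*}}(x_{\theta}^{*})^{\flat}}<\frac{1}{4}Sg$. Since $\D\subseteq\mu^{-1}(1)$ we have $K_{\theta}(x_{\theta}^{*})^{\flat}\cap\D=K_{\theta}(x_{\theta}^{*})^{+}$, which is nonempty by Assumption~\ref{ass:lin}, and $\sin(K_{\theta}(x_{\theta}^{*})^{\flat},\D)\ge S>0$ by the definition of $S$; so Lemma~\ref{lem:sin} upgrades the two bounds to
\[
\DY{\bar{y}}{K_{\theta}(x_{\theta}^{*})^{+}}\le\left(S^{-1}+1\right)\DY{\bar{y}}{K_{\theta}(x_{\theta}^{*})^{\flat}}<\frac{1}{4}\left(1+S\right)g\le\frac{1}{2}g ,
\]
using $S\le 1$, and likewise $\DY{\bar{y}}{K_{\theta^{*}}(x_{\theta}^{*})^{+}}<\frac{1}{2}g$. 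Taking the point of $K_{\theta}(x_{\theta}^{*})^{+}$ nearest $\bar{y}$ and the point of $K_{\theta^{*}}(x_{\theta}^{*})^{+}$ nearest $\bar{y}$ and applying the triangle inequality, $\DY{K_{\theta^{*}}(x_{\theta}^{*})^{+}}{K_{\theta}(x_{\theta}^{*})^{+}}<\frac{1}{2}g+\frac{1}{2}g=g$, contradicting the displayed lower bound. Therefore $\ME{\theta^{*}}r{x_{\theta}^{*}}=r^{*}$, i.e. (\ref{eq:lem-gap}) holds.

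The main difficulty here is bookkeeping rather than any deep idea: one must match the roles of $\theta$ and $\theta'$ in Definition~\ref{def:gap} so that the arm appearing in its conclusion is precisely $x_{\theta}^{*}$, and one must keep every inequality strict along the chain (Lemma~\ref{lem:dy-leq-dz}, then Lemma~\ref{lem:sin} together with the estimate $S\le 1$, then the triangle inequality) so that the final bound is a genuine ``$<g$'' contradicting ``$\ge g$''. The only edge case worth a word is when some $K_{\theta}(x)^{\flat}$ is a single point (which can happen when $d=1$): then $K_{\theta}(x)^{+}$ equals that point and coincides with $K_{\theta}(x)^{\flat}$, so Lemma~\ref{lem:sin} is not even needed and the argument only becomes easier.
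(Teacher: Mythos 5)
Your proposal is correct and follows essentially the same route as the paper: both use Lemma~\ref{lem:dy-leq-dz} and Lemma~\ref{lem:sin} (with $S\le1$) to bound $\DY{\bar{y}}{K_{\theta'}(x_{\theta}^{*})^{+}}<\frac{1}{2}g$ for $\theta'\in\{\theta,\theta^{*}\}$, then the triangle inequality to get $\DY{K_{\theta}(x_{\theta}^{*})^{+}}{K_{\theta^{*}}(x_{\theta}^{*})^{+}}<g$, and finally the gap definition (you as an explicit contradiction, the paper as its implicit contrapositive) with the same role assignment $\theta\mapsto\theta^{*}$, $\theta'\mapsto\theta$. The bookkeeping of strict inequalities and the identification $K_{\theta}(x_{\theta}^{*})^{\flat}\cap\D=K_{\theta}(x_{\theta}^{*})^{+}$ are both handled correctly.
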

\begin{proof}
Using Lemma \ref{lem:sin} and Lemma \ref{lem:dy-leq-dz}, we have,
for any $\theta'\in\{\theta,\theta^{*}\}$:

\begin{align*}
\DY{\bar{y}}{K_{\theta'}\left(x_{\theta}^{*}\right)^{+}} & \leq\left(S^{-1}+1\right)\DY{\bar{y}}{K_{\theta'}\left(x_{\theta}^{*}\right)^{\flat}}\\
 & \leq4\left(S^{-1}+1\right)\DZ{\theta}{\V{x_{\theta}^{*}}{\bar{y}}}\\
 & <4\left(S^{-1}+1\right)\cdot\frac{1}{16}Sg\\
 & \leq\frac{1}{2}g
\end{align*}

Here we used the obvious fact that $S\leq1$. By the triangle inequality,

\[
\DY{K_{\theta}\left(x_{\theta}^{*}\right)^{+}}{K_{\theta^{*}}\left(x_{\theta}^{*}\right)^{+}}<g
\]

By the definition of $g$, condition 1 and this inequality imply (\ref{eq:lem-gap}).
\end{proof}
Now, we handle the case of small diameter. In particular, thanks to
the following we won't need Lemma \ref{lem:regret-by-distance}.
\begin{lem}
\label{lem:gap-hypercube}Let $\theta,\theta^{*}\in\H$ and $\bar{y}\in\D$
be s.t. the following conditions hold:
\begin{enumerate}
\item $\ME{\theta}r{x_{\theta}^{*}}\geq r^{*}$
\item $\Norm{\theta^{*}-\theta}<\frac{1}{8}Sg$
\end{enumerate}
Then,

\begin{equation}
\ME{\theta^{*}}r{x_{\theta}^{*}}=r^{*}\label{eq:lem-gap-hypercube}
\end{equation}
\end{lem}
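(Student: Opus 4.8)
The plan is to argue by contradiction, driving everything through the definition of the gap (Definition \ref{def:gap}), in the same spirit as the proof of Lemma \ref{lem:gap} but without ever touching an empirical mean; the variable $\bar{y}$ in the statement is vestigial and plays no role.

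First I would note that $r^{*}=\ME{\theta^{*}}r{x_{\theta^{*}}^{*}}=\max_{x\in\A}\ME{\theta^{*}}rx\geq\ME{\theta^{*}}r{x_{\theta}^{*}}$, so the conclusion can fail only if $\ME{\theta^{*}}r{x_{\theta}^{*}}<r^{*}$; I assume this for contradiction. Then I instantiate Definition \ref{def:gap} with its ``$\theta$'' taken to be our $\theta^{*}$ and its ``$\theta'$'' taken to be our $\theta$ (so that the arm appearing in its conclusion is $x_{\theta}^{*}$). Condition 1 of the definition becomes $\ME{\theta}r{x_{\theta}^{*}}\geq\ME{\theta^{*}}r{x_{\theta^{*}}^{*}}=r^{*}$, which is exactly hypothesis 1 of the lemma, and condition 2 becomes $\ME{\theta^{*}}r{x_{\theta}^{*}}<r^{*}$, which is the assumption just made. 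Hence the definition forces
\[
\DY{K_{\theta^{*}}(x_{\theta}^{*})^{+}}{K_{\theta}(x_{\theta}^{*})^{+}}\geq g .
\]

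The technical core is to show the reverse strict inequality. I would fix any $y^{*}\in K_{\theta^{*}}(x_{\theta}^{*})^{+}$, which is nonempty by Assumption \ref{ass:lin}, and bound its distance to $K_{\theta}(x_{\theta}^{*})^{+}$. Since $y^{*}\in K_{\theta^{*}}(x_{\theta}^{*})$ we have $F(x_{\theta}^{*},\theta^{*},y^{*})=0$, hence $\bar{F}(x_{\theta}^{*},\theta^{*},y^{*})=0$, i.e. $\theta^{*}\in\V{x_{\theta}^{*}}{y^{*}}$, so $d_{Z}(\theta,\V{x_{\theta}^{*}}{y^{*}})\leq\Norm{\theta-\theta^{*}}$. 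Lemma \ref{lem:dy-leq-dz} then gives $\DY{y^{*}}{K_{\theta}(x_{\theta}^{*})^{\flat}}\leq 4\Norm{\theta-\theta^{*}}$, and Lemma \ref{lem:sin} (with the affine subspace $K_{\theta}(x_{\theta}^{*})^{\flat}$ and the convex body $\D$, noting $\sin(K_{\theta}(x_{\theta}^{*})^{\flat},\D)\geq S$) upgrades this to $\DY{y^{*}}{K_{\theta}(x_{\theta}^{*})^{+}}\leq 4(S^{-1}+1)\Norm{\theta-\theta^{*}}$. Using hypothesis 2 and the trivial bound $S\leq 1$,
\[
\DY{y^{*}}{K_{\theta}(x_{\theta}^{*})^{+}}<4(S^{-1}+1)\cdot\tfrac{1}{8}Sg=\tfrac{1}{2}(1+S)g\leq g ,
\]
so $\DY{K_{\theta^{*}}(x_{\theta}^{*})^{+}}{K_{\theta}(x_{\theta}^{*})^{+}}\leq\DY{y^{*}}{K_{\theta}(x_{\theta}^{*})^{+}}<g$, contradicting the displayed inequality above. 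Therefore $\ME{\theta^{*}}r{x_{\theta}^{*}}=r^{*}$.

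I do not anticipate a real obstacle. The only points requiring care are bookkeeping rather than substance: getting the roles of $\theta$ and $\theta^{*}$ in Definition \ref{def:gap} the right way around ($\theta^{*}$, whose optimal value must equal $r^{*}$, has to be the definition's ``$\theta$''), using symmetry of $d_{Y}$ between sets and the nonemptiness guaranteed by Assumption \ref{ass:lin}, and checking the arithmetic constant $4(S^{-1}+1)\cdot\tfrac18 S=\tfrac12(1+S)\leq 1$ so that the strict inequality in hypothesis 2 survives.
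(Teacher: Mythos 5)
Your proposal is correct and follows essentially the same route as the paper's proof: pick a witness point in one of the two credal sets, pass from $\Y$-distance to $\XZ$-distance via Lemma \ref{lem:dy-leq-dz} and back to the constrained set via Lemma \ref{lem:sin}, and conclude $\DY{K_{\theta}(x_{\theta}^{*})^{+}}{K_{\theta^{*}}(x_{\theta}^{*})^{+}}<g$ from the bound $4(S^{-1}+1)\cdot\tfrac{1}{8}Sg\leq g$, which together with hypothesis 1 and Definition \ref{def:gap} forces the conclusion. The only cosmetic difference is that you take the witness $y^{*}$ from $K_{\theta^{*}}(x_{\theta}^{*})^{+}$ while the paper takes it from $K_{\theta}(x_{\theta}^{*})^{+}$ (immaterial by symmetry of the set-to-set distance), and you phrase the final step as a contradiction rather than a contrapositive; your observation that $\bar{y}$ plays no role is also accurate.
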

\begin{proof}
Take any $y\in K_{\theta}\left(x_{\theta}^{*}\right)^{+}$. Using
Lemma \ref{lem:sin} and Lemma \ref{lem:dy-leq-dz} and the fact that
$\theta\in\V{x_{\theta}^{*}}y$,

\begin{align*}
\DY{K_{\theta}\left(x_{\theta}^{*}\right)^{+}}{K_{\theta^{*}}\left(x_{\theta}^{*}\right)^{+}} & \leq\DY y{K_{\theta^{*}}\left(x_{\theta}^{*}\right)^{+}}\\
 & \leq\left(S^{-1}+1\right)\DY y{K_{\theta^{*}}\left(x_{\theta}^{*}\right)^{\flat}}\\
 & \leq4\left(S^{-1}+1\right)\DZ{\theta^{*}}{\V{x_{\theta}^{*}}y}\\
 & \leq4\left(S^{-1}+1\right)\Norm{\theta^{*}-\theta}\\
 & <4\left(S^{-1}+1\right)\cdot\frac{1}{8}Sg\\
 & \leq g
\end{align*}

By the definition of $g$, condition 1 and this inequality imply (\ref{eq:lem-gap-hypercube}).
\end{proof}
We can now go on to prove the theorem.

\global\long\def\CycOpt{\mathrm{A}_{1}}%

\global\long\def\CycDim{\Gamma_{1}}%

\begin{proof}
[Proof of Theorem \ref{thm:gap}]Consider any $\epsilon>0$, take
$\delta:=\frac{Sg}{(144+\epsilon)D_{Z}^{2}}$, and denote

\[
\CycOpt:=\SC{k\in\FD}{\ME{\theta^{*}}r{x_{\theta_{k}}^{*}}=r^{*}}
\]

Since $2D_{Z}\delta<\frac{1}{8}Sg$, Lemma \ref{lem:gap-hypercube}
implies that $\BDF\subseteq\CycOpt$. Since $9D_{Z}^{2}\delta<\frac{1}{16}Sg$,
Lemma \ref{lem:gap} implies that $\ADF\subseteq\CycOpt$. Denote
$\CycDim:=\EDF\setminus\CycOpt$. We get

\[
\UR=\RgS{\CycOpt}+\RgS{\CycDim}+\RgS{\FC}
\]

We will now bound the expectation of each term.

Since $\CycOpt$ consists of cycles in which an optimal arm is selected,
and the event ``$k\in\CycOpt$ and $T_{k}\leq n<T_{k+1}$'' depends
only on the past of round $n$, we have

\[
\E{}{\RgS{\CycOpt}}=0
\]

For any $k\in\EDF$, if $\Tau k1>0$ and $\rho_{k}^{1}\leq9D_{Z}^{2}\delta<\frac{1}{16}Sg$,
then by Lemma \ref{lem:gap}, $k\in\CycOpt$ and hence $k\not\in\CycDim$.
Hence, for every $k\in\CycDim$, either $\Tau k1=0$ or $\rho_{k}^{1}>9D_{Z}^{2}\delta$.
Recall that, 

\[
\sqrt{\Tau k1}\cdot\rho_{k}^{1}<2(D_{Z}+1)\eta
\]

It follows that,

\[
\Tau k1<\frac{4(D_{Z}+1)^{2}\eta^{2}}{\left(\rho_{k}^{1}\right)^{2}}
\]

By Lemma \ref{lem:tau-rho}, we get

\begin{align*}
\RgS{\CycDim} & \leq\sum_{k\in\CycDim}\left(4\left(S^{-1}+1\right)\Tau k1\rho_{k}^{1}+C\right)\\
 & \leq\sum_{k\in\CycDim}\left(\frac{16\left(S^{-1}+1\right)(D_{Z}+1)^{2}\eta^{2}}{\left(\rho_{k}^{1}\right)^{2}}\cdot\rho_{k}^{1}+C\right)\\
 & =\sum_{k\in\CycDim}\left(\frac{16\left(S^{-1}+1\right)(D_{Z}+1)^{2}\eta^{2}}{\rho_{k}^{1}}+C\right)\\
 & \leq\Abs{\CycDim}\left(\frac{16\left(S^{-1}+1\right)(D_{Z}+1)^{2}\eta^{2}}{9D_{Z}^{2}\delta}+C\right)\\
 & \leq\Abs{\CycDim}\left(\frac{\left(256+\frac{16}{9}\epsilon\right)S^{-1}\left(S^{-1}+1\right)(D_{Z}+1)^{2}\eta^{2}}{g}+C\right)
\end{align*}

Using Lemma \ref{lem:ABE}, it follows that,

\begin{align*}
\RgS{\CycDim}\leq & \gamma D_{Z}^{2}\ln\frac{D_{Z}R}{\delta}\cdot\left(\frac{\left(256+\frac{16}{9}\epsilon\right)S^{-1}\left(S^{-1}+1\right)(D_{Z}+1)^{2}\eta^{2}}{g}+C\right)\\
\leq & \gamma D_{Z}^{2}\left(\frac{\left(256+\frac{16}{9}\epsilon\right)S^{-1}\left(S^{-1}+1\right)(D_{Z}+1)^{2}\eta^{2}}{g}+C\right)\times\\
 & \ln\frac{(144+\epsilon)S^{-1}D_{Z}^{3}R}{g}
\end{align*}

As before, Lemma \ref{lem:confidence} implies that

\[
\E{}{\RgS{\FC}}\leq D_{W}N(N+1)\exp\left(-\Omega(1)\cdot\frac{\eta^{2}}{R^{2}D_{W}^{\frac{5}{3}}}\right)\cdot CN
\]

We can now assemble the contributions together and take $\epsilon\rightarrow0$:

\begin{align*}
\E{}{\UR}\leq & \gamma D_{Z}^{2}\left(\frac{256S^{-1}\left(S^{-1}+1\right)(D_{Z}+1)^{2}\eta^{2}}{g}+C\right)\ln\frac{144S^{-1}D_{Z}^{3}R}{g}\\
 & +CD_{W}N^{2}(N+1)\exp\left(-\Omega(1)\cdot\frac{\eta^{2}}{R^{2}D_{W}^{\frac{5}{3}}}\right)
\end{align*}
\end{proof}

\section{\label{sec:lower-s}Lower Bound for Scaling with $S$}

\subsection{Proof of the Lower Bound}

The proof of Theorem \ref{thm:lower-s} works by constructing a nature
policy which produces constant regret per round while revealing little
about the true hypothesis $\theta^{*}$, with high probability. When
the optimal arm $x^{*}$ is selected, the worst-case outcome has the
form

\[
y=\Vxxx 01{-\frac{1}{2}x^{*}}
\]

The reward is $r(x^{*},y)=-\frac{1}{2}$. Our nature policy instead
responds to every arm $x$ with the outcome

\[
y=\Vxxx 01{-\left(\frac{1}{2}+\delta\right)x}
\]

The key observation is, it is always possible to enforce a high probability
of this outcome unless $x$ is very close to the optimal arm $x^{*}$.
This results in a reward of $r(x,y)=-\frac{1}{2}-\delta$ and hence
a regret of $\delta$, which persists until either a different outcome
results randomly (which only happens with probability $\frac{1-\alpha}{1+3\alpha}$
per round) or $x$ is close to $x^{*}$. But, since this outcome provides
little information about $\theta^{*}$, the agent can only get close
to $x^{*}$ by exhaustive search. Each step in this search covers
a spherical ball around $x$, and hence the required number of steps
is approximately the ratio of the volume of the sphere to the volume
of a single spherical ball. Importantly, this ratio scales exponentially
with $D$.

We will use the following inequality concerning the gamma function,
proved in \cite{Kershaw83}:
\begin{thm}
[Kershaw]\label{thm:kershaw}For any $t>0$ and $s\in(0,1)$:

\[
\frac{\Gamma\left(t+1\right)}{\Gamma\left(t+s\right)}>\left(t+\frac{s}{2}\right)^{1-s}
\]
\end{thm}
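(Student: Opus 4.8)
The plan is to deduce the inequality from the monotonicity of a single auxiliary function. Set
\[
f(t):=\frac{\Gamma(t+1)}{\Gamma(t+s)}\Bigl(t+\tfrac{s}{2}\Bigr)^{s-1},\qquad t>0,
\]
so that the assertion of the theorem is precisely $f(t)>1$. The function $f$ is positive, and by Stirling's formula $\Gamma(t+1)/\Gamma(t+s)=t^{1-s}\bigl(1+O(t^{-1})\bigr)$, hence $f(t)=\bigl(t/(t+\tfrac s2)\bigr)^{1-s}\bigl(1+O(t^{-1})\bigr)\to1$ as $t\to\infty$. Therefore it suffices to show that $f$ is strictly decreasing along integer shifts, i.e. $f(t)>f(t+1)$ for all $t>0$: telescoping then yields $f(t)>f(t+1)>f(t+2)>\cdots\to1$, so $f(t)>1$.

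Using the functional equation in the forms $\Gamma(t+2)/\Gamma(t+1)=t+1$ and $\Gamma(t+1+s)/\Gamma(t+s)=t+s$,
\[
\frac{f(t+1)}{f(t)}=\frac{t+1}{t+s}\left(\frac{t+\tfrac s2}{t+1+\tfrac s2}\right)^{1-s},
\]
so $f(t)>f(t+1)$ is equivalent to the elementary inequality
\[
\ln(t+1)-\ln(t+s)<(1-s)\bigl[\ln(t+1+\tfrac s2)-\ln(t+\tfrac s2)\bigr].
\]
I would handle this with the symmetrizing substitution $y:=t+\tfrac{1+s}{2}$, $h:=\tfrac{1-s}{2}$, under which $t+1=y+h$, $t+s=y-h$, $t+1+\tfrac s2=y+\tfrac12$, $t+\tfrac s2=y-\tfrac12$, and $0<h<\tfrac12<y$ since $s\in(0,1)$ and $t>0$. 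The inequality then reads $g(y)>0$, where
\[
g(y):=2h\ln\frac{y+\tfrac12}{y-\tfrac12}-\ln\frac{y+h}{y-h}.
\]

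To prove $g(y)>0$: since $g(y)\to0$ as $y\to\infty$, it is enough to check $g'(y)<0$ on $(\tfrac12,\infty)$. Differentiating,
\[
g'(y)=\frac{-2h}{y^2-\tfrac14}+\frac{2h}{y^2-h^2}=2h\left(\frac{1}{y^2-h^2}-\frac{1}{y^2-\tfrac14}\right),
\]
which is negative because $h^2<\tfrac14$ forces $0<y^2-\tfrac14<y^2-h^2$. (Alternatively one can expand $\ln\frac{y+a}{y-a}=2\sum_{k\ge0}\frac{(a/y)^{2k+1}}{2k+1}$ and compare the two sides of the logarithmic inequality coefficient by coefficient: the $k$-th coefficients of $\tfrac{1}{(2k+1)\,y^{2k+1}}$ are $h\cdot(h^2)^k$ on the left and $h\cdot(\tfrac14)^k$ on the right, equal at $k=0$ and strictly ordered for $k\ge1$.)

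I do not expect a genuine obstacle. The only points that need care are verifying the domain conditions $0<h<\tfrac12<y$ so that the logarithmic identities and the estimate $y^2-h^2>y^2-\tfrac14>0$ are legitimate, and invoking $f(t)\to1$, which is a standard consequence of Stirling's formula; everything else is a routine computation.
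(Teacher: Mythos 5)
Your proof is correct. Note that the paper does not prove this statement at all --- it is quoted as an external result with a citation to Kershaw's 1983 paper --- so you have supplied a self-contained argument where the thesis offers none. Your argument is sound at every step: the reduction to $f(t)>f(t+1)$ via the limit $f(t)\to 1$ (Wendel/Stirling) and telescoping is valid, since a strictly decreasing sequence exceeds its limit; the functional-equation computation of $f(t+1)/f(t)$ is right; the substitution $y=t+\tfrac{1+s}{2}$, $h=\tfrac{1-s}{2}$ correctly symmetrizes the four arguments and the domain conditions $0<h<\tfrac12<y$ do hold; and $g'(y)=2h\bigl(\tfrac{1}{y^{2}-h^{2}}-\tfrac{1}{y^{2}-1/4}\bigr)<0$ together with $g(y)\to 0$ gives $g>0$. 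This is essentially the classical Gautschi--Kershaw technique (monotonicity of a normalized Gamma ratio plus recursion), so it is in the spirit of the cited source. One trivial slip in your parenthetical alternative: the coefficients of $\tfrac{1}{(2k+1)y^{2k+1}}$ are $2h(h^{2})^{k}$ and $2h(\tfrac14)^{k}$ rather than $h(h^{2})^{k}$ and $h(\tfrac14)^{k}$; the common factor of $2$ does not affect the term-by-term comparison, which still gives equality at $k=0$ and strict inequality for $k\geq 1$.
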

The following geometric lemma is used to lower bound the number of
steps in the abovementioned ``search''.
\begin{lem}
\label{lem:ball}For any $D\geq1$, let $v_{D}$ be the volume of
the $D$-dimensional unit ball and $a_{D}$ be the volume of the $D$-dimensional
unit sphere\footnote{That is, the \emph{intrinsic} dimension of the sphere is $D$. In
the usual representation, it is a submanifold of $\R^{D+1}$.}. Then,

\[
\frac{v_{D}}{a_{D}}<\frac{1}{\sqrt{\pi\left(2D+1\right)}}
\]
\end{lem}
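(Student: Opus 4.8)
The plan is to express both $v_D$ and $a_D$ through the gamma function, reduce the claim to an estimate on a ratio of two gamma values at half-integer-shifted arguments, and then invoke Kershaw's inequality (Theorem \ref{thm:kershaw}) to finish.

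First I would recall the standard closed forms. With the convention used in the lemma (the unit ball lives in $\R^D$, the unit sphere is the intrinsically $D$-dimensional submanifold $S^D\subseteq\R^{D+1}$), we have
\[
v_D=\frac{\pi^{D/2}}{\Gamma\!\left(\tfrac{D}{2}+1\right)},\qquad a_D=\frac{2\pi^{(D+1)/2}}{\Gamma\!\left(\tfrac{D+1}{2}\right)}.
\]
Dividing, the powers of $\pi$ collapse to a single factor $\pi^{-1/2}$, leaving
\[
\frac{v_D}{a_D}=\frac{1}{2\sqrt{\pi}}\cdot\frac{\Gamma\!\left(\tfrac{D}{2}+\tfrac12\right)}{\Gamma\!\left(\tfrac{D}{2}+1\right)}.
\]

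Next I would apply Theorem \ref{thm:kershaw} with $t:=\tfrac{D}{2}$ and $s:=\tfrac12$; these are admissible since $D\geq1$ forces $t>0$ and $s\in(0,1)$. The theorem gives
\[
\frac{\Gamma\!\left(\tfrac{D}{2}+1\right)}{\Gamma\!\left(\tfrac{D}{2}+\tfrac12\right)}>\left(\tfrac{D}{2}+\tfrac14\right)^{1/2}=\frac{\sqrt{2D+1}}{2},
\]
hence $\Gamma(\tfrac{D}{2}+\tfrac12)/\Gamma(\tfrac{D}{2}+1)<2/\sqrt{2D+1}$. Substituting into the displayed identity yields
\[
\frac{v_D}{a_D}<\frac{1}{2\sqrt{\pi}}\cdot\frac{2}{\sqrt{2D+1}}=\frac{1}{\sqrt{\pi(2D+1)}},
\]
which is exactly the claim.

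I do not expect a real obstacle here. The only points requiring care are: matching the half-integer shift $\tfrac{D+1}{2}=\tfrac{D}{2}+\tfrac12$ against the pair $(t+1,t+s)=(\tfrac{D}{2}+1,\tfrac{D}{2}+\tfrac12)$ in Kershaw's inequality, and observing that since that inequality is \emph{strict} the resulting bound on $v_D/a_D$ is strict as required. It is also worth stating the surface-area formula with the lemma's dimensional convention explicitly, to avoid an off-by-one confusion between $S^D$ and $S^{D-1}$.
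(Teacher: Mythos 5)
Your proof is correct and follows essentially the same route as the paper: both express $v_D$ and $a_D$ via the gamma function, form the ratio $\frac{1}{2\sqrt{\pi}}\cdot\Gamma(\tfrac{D+1}{2})/\Gamma(\tfrac{D}{2}+1)$, and apply Kershaw's inequality with $t=\tfrac{D}{2}$, $s=\tfrac12$ to conclude. The only cosmetic difference is that you simplify $(\tfrac{D}{2}+\tfrac14)^{1/2}$ to $\tfrac{\sqrt{2D+1}}{2}$ before substituting, which the paper does in one step.
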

\begin{proof}
We have

\[
v_{D}=\frac{\pi^{\frac{D}{2}}}{\Gamma\left(\frac{D}{2}+1\right)}
\]

\[
a_{D}=\frac{2\pi^{\frac{D+1}{2}}}{\Gamma\left(\frac{D+1}{2}\right)}
\]

It follows that

\[
\frac{v_{D}}{a_{D}}=\frac{1}{2\sqrt{\pi}}\cdot\frac{\Gamma\left(\frac{D+1}{2}\right)}{\Gamma\left(\frac{D}{2}+1\right)}
\]

Applying Theorem \ref{thm:kershaw} with $t=\frac{D}{2}$ and $s=\frac{1}{2}$,
we get

\begin{align*}
\frac{v_{D}}{a_{D}} & <\frac{1}{2\sqrt{\pi}}\cdot\frac{1}{\sqrt{\frac{D}{2}+\frac{1}{4}}}\\
 & =\frac{1}{\sqrt{\pi\left(2D+1\right)}}
\end{align*}
\end{proof}
We are now ready to give the proof.
\begin{proof}
[Proof of Theorem \ref{thm:lower-s}]Let $\theta^{*}\in\H$ be the
true hypothesis (a random variable with distribution $\xi$). Let
$u^{*}\in\R^{D}$ be s.t.

\[
\theta^{*}=\Vxxx{1-\alpha}{-\alpha}{2\alpha u^{*}}
\]

\global\long\def\NuT#1{y_{#1}^{*}}%

\global\long\def\Nu{\NuT{\delta}}%

\global\long\def\Ybot{y_{\bot}}%

Define $\Ybot\in\D$ by

\[
\Ybot:=\Vxxx 10{\boldsymbol{0}}
\]

Fix $\delta\in(0,\frac{1}{2})$ and define $\NuT 0,\NuT{\delta}:\A\rightarrow\D$
by

\[
\NuT 0(x):=\Vxxx{\alpha\left(1+\T xu^{*}\right)}{1-\alpha\left(1+\T xu^{*}\right)}{-\frac{1}{2}x}
\]

\[
\NuT{\delta}(x):=\Vxxx 01{-\left(\frac{1}{2}+\delta\right)x}
\]

Notice that $\NuT 0(x)\in\D$ because $\alpha\leq\frac{1}{4}$, implying
that $\alpha\left(1+\T xu^{*}\right)\in[0,\frac{1}{2}]$, $1-\alpha\left(1+\T xu^{*}\right)\in[\frac{1}{2},1]$
and hence

\begin{align*}
\Norm{-\frac{1}{2}x}_{2} & =\frac{1}{2}\\
 & \leq1-\alpha\left(1+\T xu^{*}\right)
\end{align*}

Consider the following nature policy $\nu$. Given any history

\[
h:=x_{0}y_{0}\ldots x_{n-1}y_{n-1}x_{n}
\]

we need to define $\nu(h)\in\Delta\D$. There are two cases:
\begin{itemize}
\item For all $0\leq m\leq n$, $\T{(u^{*})}x_{m}\geq-\frac{1}{1+2\delta}$
and $\Ybot\not\in\{y_{0}\ldots y_{n-1}\}$: In this case, $\nu(h)$
is the unique distribution in $\kappa(K_{\theta^{*}}(x_{n}))$ supported
on the two point set $\{\Nu(x_{n}),\Ybot\}\subseteq\D$.
\item Either for some $0\leq m\leq n$, $\T{(u^{*})}x_{m}<-\frac{1}{1+2\delta}$,
or $\Ybot\in\{y_{0}\ldots y_{n-1}\}$: In this case, $\nu(h)$ assigns
probability 1 to $\NuT 0(x_{n})$.
\end{itemize}
In the first case, the distribution is unique because the closed line
segment between $\Nu(x_{n})$ and $\Ybot$ intersects $K_{\theta^{*}}(x_{n})$
at a single point. Indeed, $\Nu(x_{n})$ and $\Ybot$ lie on opposite
sides of the hyperplane $K_{\theta^{*}}(x_{n})$, as evidenced by
the linear functional $F_{x_{n}\theta^{*}}$ taking values with opposite
signs at these points. For $\Nu(x_{n})$, we have: 

\global\long\def\Rxxx#1#2#3{\left[\begin{array}{ccc}
#1 & #2 & #3\end{array}\right]}%

\begin{align*}
F_{x_{n}\theta^{*}}\left(\Nu\left(x_{n}\right)\right) & =\Rxxx{1-\alpha}{-\alpha}{2\alpha\T{\left(u^{*}\right)}}\Vxxx 01{-\left(\frac{1}{2}+\delta\right)x_{n}}\\
 & =-\alpha-2\alpha\left(\frac{1}{2}+\delta\right)\T{\left(u^{*}\right)}x_{n}\\
 & =-\alpha\left(1+2\left(\frac{1}{2}+\delta\right)\T{\left(u^{*}\right)}x_{n}\right)\\
 & =-\alpha\left(1+\left(1+2\delta\right)\T{\left(u^{*}\right)}x_{n}\right)\\
 & \leq-\alpha\left(1+\left(1+2\delta\right)\left(-\frac{1}{1+2\delta}\right)\right)\\
 & =0
\end{align*}

For $\Ybot$, we have:

\begin{align*}
F_{x_{n}\theta^{*}}\left(\Ybot\right) & =\Rxxx{1-\alpha}{-\alpha}{2\alpha\T{\left(u^{*}\right)}}\Vxxx 10{\boldsymbol{0}}\\
 & =1-\alpha\\
 & >0
\end{align*}

Now, let's lower bound the probability that the outcome in the first
case is $\Nu(x_{n})$:

\global\long\def\KE#1#2{\left(#1\middle|#2\right)}%

\begin{align*}
\nu\KE{\Nu\left(x_{n}\right)}h & =\frac{F_{x_{n}\theta^{*}}\left(\Ybot\right)}{F_{x_{n}\theta^{*}}\left(\Ybot\right)-F_{x_{n}\theta^{*}}\left(\Nu\left(x_{n}\right)\right)}\\
 & =\frac{1-\alpha}{1-\alpha+\alpha\left(1+\left(1+2\delta\right)\T{\left(u^{*}\right)}x_{n}\right)}\\
 & =\frac{1-\alpha}{1+\alpha\left(1+2\delta\right)\T{\left(u^{*}\right)}x_{n}}\\
 & \geq\frac{1-\alpha}{1+\alpha\left(1+2\delta\right)}\\
 & >\frac{1-\alpha}{1+2\alpha}
\end{align*}

Here, we used that $\T{(u^{*})}x_{n}\leq1$ (because both $u^{*}$
and $x_{n}$ are unit vectors) and $\delta<\frac{1}{2}$.

In the second case, we have $\nu(x)\in\kappa(K_{\theta^{*}}(x_{n}))$
because

\begin{align*}
\T{\left(\theta^{*}\right)}\NuT 0(x) & =\Rxxx{1-\alpha}{-\alpha}{2\alpha\T{\left(u^{*}\right)}}\Vxxx{\alpha\left(1+\T xu^{*}\right)}{1-\alpha\left(1+\T xu^{*}\right)}{-\frac{1}{2}x}\\
 & =\left(1-\alpha\right)\alpha\left(1+\T xu^{*}\right)-\alpha\left(1-\alpha\left(1+\T xu^{*}\right)\right)-\alpha\T xu^{*}\\
 & =\left((1-\alpha)\alpha+\alpha^{2}-\alpha\right)\T xu^{*}+(1-\alpha)\alpha-\alpha+\alpha^{2}\\
 & =0
\end{align*}

Consider the history $\bar{x}\bar{y}\in(\A\times\D)^{N}$ that would
be generated by the agent policy and the ``illegal'' nature policy
that always produces the outcome $\Nu(x_{n})$ in response to arm
$x$. Let $q$ denote the probability that, for a uniformly random
$u^{*}$, $\T{(u^{*})}\bar{x}_{n}<-\frac{1}{1+2\delta}$ for at least
one $n<N$. The actual history is the same as $\bar{x}\bar{y}$ as
long as $\T{(u^{*})}x\geq-\frac{1}{1+2\delta}$ holds for all previous
actions $x$, and the outcome $\Ybot$ never occured. Therefore, the
probability that the actual history is $\bar{x}\bar{y}$ is at least
$\left(\frac{1-\alpha}{1+2\alpha}\right)^{N}(1-q)$.

Let's estimate $q$. For any fixed $x\in\A$, denote

\global\long\def\Ball{B_{x}}%

\global\long\def\Sph{S}%

\[
\Sph:=\SC{u\in\R^{D}}{\Norm u=1}
\]

\[
\Ball:=\SC{u\in S}{\T{\left(u^{*}\right)}x<-\frac{1}{1+2\delta}}
\]

Since $u^{*}$ is a uniformly random unit vector, we have

\[
q=\frac{vol\left(\bigcup_{n<N}B_{\bar{x}_{n}}\right)}{vol\left(S\right)}
\]

This is ratio is at most $N$ times the ratio corresponding to a \emph{single}
ball. Now, the spherical radius of $\Ball$ is

\[
\rho=\arccos\frac{1}{1+2\delta}
\]

We have

\begin{align*}
\frac{1}{1+2\delta} & =\cos\rho\\
 & \leq1-\frac{4}{\pi^{2}}\rho^{2}
\end{align*}

The last inequality holds for any $\rho\in[0,\frac{\pi}{2}]$, as
can be seen by noticing that (i) at $\rho=0$ these functions are
equal and their first derivatives are equal (ii) at $\rho=\frac{\pi}{2}$
these functions are also equal and (iii) the second derivative of
the cosine changes montonically from $-1$ to $0$ over the interval
while that of the quadratic function stays constant at $-\frac{8}{\pi^{2}}\in(-1,0)$.

Rearranging, we get

\begin{align*}
\rho & \leq\sqrt{\frac{\pi^{2}}{4}\left(1-\frac{1}{1+2\delta}\right)}\\
 & =\frac{\sqrt{2}}{2}\pi\cdot\sqrt{\frac{\delta}{1+2\delta}}\\
 & \leq\frac{\sqrt{2}}{2}\pi\cdot\sqrt{\delta}
\end{align*}

Observe that the volume of $\Ball$ is upper bounded by the volume
of a Euclidean ball of the same radius. We get

\begin{align*}
\frac{vol\left(\Ball\right)}{vol\left(\Sph\right)} & \leq\frac{v_{D-1}\rho^{D-1}}{a_{D-1}}\\
 & \leq\frac{v_{D-1}}{a_{D-1}}\left(\frac{\sqrt{2}}{2}\pi\cdot\sqrt{\delta}\right)^{D-1}\\
 & \leq\frac{1}{\sqrt{\pi(2D-1)}}\left(\frac{\sqrt{2}}{2}\pi\cdot\sqrt{\delta}\right)^{D-1}
\end{align*}

where we used Lemma \ref{lem:ball} on the last line.

We also need to analyze the rewards. Let's start with a lower bound
for the optimal reward. Consider the arm $x^{*}:=-u^{*}$. We have

\begin{align*}
\ME{\theta^{*}}r{x^{*}} & =\min_{y\in\D:\T y\theta^{*}=0}r\left(x^{*},y\right)\\
 & =\min\SC{\T vx^{*}}{\Vxxx{y_{0}}{y_{1}}v\in\D:\Rxxx{y_{0}}{y_{1}}{\T v}\Vxxx{1-\alpha}{-\alpha}{2\alpha u^{*}}=0}\\
 & =-\max\SC{\T vu^{*}}{\Vxxx{y_{0}}{y_{1}}v\in\D:(1-\alpha)y_{0}-\alpha y_{1}+2\alpha\T vu^{*}=0}
\end{align*}

On the right hand side, we're maximizing $\T vu^{*}$ under the constraint
$(1-\alpha)y_{0}-\alpha y_{1}+2\alpha\T vu^{*}=0$. This requires
minimizing $(1-\alpha)y_{0}-\alpha y_{1}$, where $y_{0}$ and $y_{1}$
are subject to the constraints $y_{0},y_{1}\geq0$ and $y_{0}+y_{1}=1$.
Therefore, we can set $y_{0}=0$ and $y_{1}=1$. There is also the
constraint $\Norm v_{2}\leq y_{1}$, but this doesn't change the conclusion
because increasing $y_{1}$ only increases the range of values for
$v$ we can use. We get

\begin{align*}
\ME{\theta^{*}}r{x^{*}} & =-\max\SC{\T vu^{*}}{\Vxxx 01v\in\D:-\alpha+2\alpha\T vu^{*}=0}\\
 & =-\max\SC{\T vu^{*}}{v\in\R^{D}:\Norm v_{2}\leq1\text{ and }\T vu^{*}=\frac{1}{2}}\\
 & =-\frac{1}{2}
\end{align*}

Now let $x\in\A$ be any arm s.t. $\T{(u^{*})}x\geq-\frac{1}{1+2\delta}$.
Suppose that the outcome is $\Nu(x)$. Then, the reward is

\begin{align*}
r\left(x,\Nu(x)\right) & =r\left(x,\Vxxx 01{-\left(\frac{1}{2}+\delta\right)x}\right)\\
 & =\T x\cdot-\left(\frac{1}{2}+\delta\right)x\\
 & =-\left(\frac{1}{2}+\delta\right)\Norm x_{2}^{2}\\
 & =-\left(\frac{1}{2}+\delta\right)
\end{align*}

Now, consider any arm $x\in\A$ and suppose the outcome is $\NuT 0(x)$.
Then, the reward is

\begin{align*}
r\left(x,\NuT 0(x)\right) & =r\left(x,\Vxxx{\alpha\left(1+\T xu^{*}\right)}{1-\alpha\left(1+\T xu^{*}\right)}{-\frac{1}{2}x}\right)\\
 & =\T x\cdot-\frac{1}{2}x\\
 & =-\frac{1}{2}\Norm x_{2}^{2}\\
 & =-\frac{1}{2}
\end{align*}

We see that, as long as $\T{(u^{*})}x\geq-\frac{1}{1+2\delta}$ and
the outcome isn't $\Ybot$, we accrue a regret of $\delta$. If the
outcome is $\Ybot$ then the reward is 0 and we hence accrue a regret
of $-\frac{1}{2}$: but this can only happen once. Once we switch
to the second case in the definition of $\nu$, we stop accruing regret.
But, we can upper bound the probability this will happen during the
first $N$ rounds. Summing up, we get the following lower bound for
expected regret:

\[
\E{}{\UR}\geq\left(\frac{1-\alpha}{1+2\alpha}\right)^{N}\left(1-\frac{N}{\sqrt{\pi(2D-1)}}\left(\frac{\sqrt{2}}{2}\pi\cdot\sqrt{\delta}\right)^{D-1}\right)N\delta-\frac{1}{2}
\]

Set $\delta$ to be

\[
\delta:=\frac{2}{\pi^{2}}\cdot N^{-\frac{2}{D-1}}
\]

Notice that this value is indeed in $(0,\frac{1}{2})$. We get

\begin{align*}
\E{}{\UR} & \geq & \left(\frac{1-\alpha}{1+2\alpha}\right)^{N}\times\\
 &  & \left(1-\frac{N}{\sqrt{\pi(2D-1)}}\left(\frac{\sqrt{2}}{2}\pi\cdot\sqrt{\frac{2}{\pi^{2}}\cdot N^{-\frac{2}{D-1}}}\right)^{D-1}\right)\times\\
 &  & N\cdot\frac{2}{\pi^{2}}\cdot N^{-\frac{2}{D-1}}-\frac{1}{2}\\
 & = & \left(\frac{1-\alpha}{1+2\alpha}\right)^{N}\times\\
 &  & \left(1-\frac{1}{\sqrt{\pi\left(2D-1\right)}}\right)\times\\
 &  & \frac{2}{\pi^{2}}\cdot N^{\frac{D-3}{D-1}}-\frac{1}{2}\\
 & = & \frac{2}{\pi^{2}}\left(1-\frac{1}{\sqrt{\pi\left(2D-1\right)}}\right)\left(\frac{1-\alpha}{1+2\alpha}\right)^{N}N^{\frac{D-3}{D-1}}-\frac{1}{2}
\end{align*}
\end{proof}

\subsection{Calculating the Parameters}
\begin{proof}
[Proof of Proposition \ref{prop:lower-s-params}]Let's start from
calculating $R$. We use Proposition \ref{prop:hyperplane-r}. In
our case, $A_{x}=I_{D+2}$ for all $x\in\A$. Hence, we can choose
$\Norm{\cdot}_{0}$ s.t. $A_{x}$ is norm-preserving. Then, $\Norm{\theta}_{0}$
is the same for all $\theta\in\H$ by the rotational symmetry acting
on the last $D$ coordinates. It follows that

\begin{align*}
R & =\frac{\max_{\theta\in\H}\Norm{\theta}_{0}}{\min_{\theta\in\H}\Norm{\theta}_{0}}\left(\max_{x\in\A}\NZ{\AO x}\right)\left(\max_{a\in\A}\NZ{\AO x^{-1}}\right)\\
 & =1\cdot\left(\max_{x\in\A}1\right)\left(\max_{a\in\A}1\right)\\
 & =1
\end{align*}

Now, let's estimate $S$. Recall that any two norms on a finite-dimensional
space are equivalent (within a multplicative constant from each other).
Since it is sufficient to estimate $S$ up to a multiplicative constant
(that can depend on $D$), we can calculate the sine w.r.t. \emph{any}
fixed norm on $\R^{D+2}$. We will use a Euclidean norm s.t. $\D$
is a right cone of unit height with base of unit radius.

Consider some $u\in\R^{D}$ s.t. $\Norm u_{2}=1$ and let $\U$ be
the linear subspace defined by

\[
\U:=\SC{y\in\Y}{\T y\Vxxx{1-\alpha}{-\alpha}{2\alpha u}=0}
\]

Then, $\U^{\flat}$ is a hyperplane inside $\mu^{-1}(1)$. It is easy
to see that this hyperplane intersects the base of $\D$ at distance
$\frac{1}{2}$ from the center and the altitude of $\D$ at distance
$\alpha$ from the base. We use this to change coordinates. Define
$\tilde{\D}$ and $\sigma$ by

\[
\tilde{\D}:=\SC{y\in\R^{D+1}}{y_{0}\geq0\text{ and }\sum_{i=1}^{D}y_{i}^{2}\leq1-y_{0}}
\]

\[
\sigma:=\SC{y\in\R^{D+1}}{y_{0}+2\alpha y_{1}=\alpha}
\]

By isometry, $\sin(\sigma,\tilde{\D})=\sin(\U^{\flat},\D)$ (where
the left-hand side is defined using using the $\ell_{2}$ norm on
$\R^{D+1}$). We will now use Proposition \ref{prop:sin-sh} to lower
bound $\sin(\sigma,\tilde{\D})$.

We need to take an infimum over $q\in\partial_{\sigma}\tilde{\D}$.
For $q_{0}>0$, the unique element of $\mathrm{SH}_{q}\tilde{D}$
is tangent to lateral surface of the cone $\tilde{\D}$ and forms
an angle with $\sigma$ that approaches $\frac{\pi}{4}$ as $\alpha\rightarrow0$.
For $q_{0}=0$ and $\sum_{1}^{D}q_{i}^{2}<1$, the unique element
of $\mathrm{SH}_{q}\tilde{D}$ is the hyperplane $y_{0}=0$, which
also lies in $\mathrm{SH}_{q'}\tilde{D}$ for any $q'\in\partial_{\sigma}\tilde{\D}$
with $q'_{0}=0$. Therefore, we can assume $q_{0}=0$ and $\sum_{1}^{D}q_{i}^{2}=1$.
All points like this are related by rotational symmetries that preserve
the 0th and 1st coordinates (and therefore fix $\sigma$ and $\tilde{D}$).
Therefore, it is sufficient to consider one of them, for example

\[
q:=\left[\begin{array}{c}
0\\
\\
\frac{1}{2}\\
\\
\frac{\sqrt{3}}{2}\\
\\
0\\
\vdots\\
0
\end{array}\right]
\]

The outer unit normals to the supporting hyperplanes of $\tilde{\D}$
at $q$ are given by

\[
n_{\tilde{\D}}(\beta)=\left[\begin{array}{c}
\sin\beta\\
\\
\frac{1}{2}\cos\beta\\
\\
\frac{\sqrt{3}}{2}\cos\beta\\
\\
0\\
\vdots\\
0
\end{array}\right]
\]

for $\beta\in[-\frac{\pi}{2},+\frac{\pi}{4}]$ (for $\beta=-\frac{\pi}{2}$
we get a hyperplane tangent to the base and for $\beta=+\frac{\pi}{4}$
we get a hyperplane tangent to the lateral surface).

On the other hand, the unit normal to $\sigma$ is

\[
n_{\sigma}=-\frac{1}{\sqrt{1+4\alpha^{2}}}\left[\begin{array}{c}
1\\
\\
2\alpha\\
\\
0\\
\vdots\\
0
\end{array}\right]
\]

Hence, the angle $\eta$ between $\sigma$ and the suppoting hyperplane
with parameter $\beta$ satisfies

\begin{align*}
\cos\eta(\beta) & =\T{n_{\sigma}}n_{\tilde{\D}}(\beta)\\
 & =-\frac{1}{\sqrt{1+4\alpha^{2}}}\left(\alpha\cos\beta+\sin\beta\right)
\end{align*}

Clearly, $\cos\eta(\beta)$ is maximized for $\beta=-\frac{\pi}{2}$.
The minimal value of $\eta$ therefore satisfies

\begin{align*}
\sin\eta^{*} & =\sqrt{1-\left(\frac{1}{\sqrt{1+4\alpha^{2}}}\right)^{2}}\\
 & =\frac{2\alpha}{\sqrt{1+4\alpha^{2}}}
\end{align*}

By Proposition \ref{prop:sin-sh}, we conclude that for $\alpha\ll1$,
it holds for any $u$ as above that

\[
\sin(\U^{\flat},\D)=\frac{2\alpha}{\sqrt{1+4\alpha^{2}}}
\]

It is now obvious that

\[
\liminf_{\alpha\rightarrow0}\frac{S(\alpha)}{\alpha}>0
\]
\end{proof}

\section{\label{sec:lower-r}Lower Bound for Scaling with $R$}

\subsection{Proof of the Lower Bound}

We will need the following upper bound on KL divergence between Bernoulli
distributions:
\begin{lem}
\label{lem:kl}For any $p,q\in(0,1)$, it holds that

\[
\DKL pq\leq\frac{e}{2}\left(\ln\frac{p}{1-p}-\ln\frac{q}{1-q}\right)^{2}
\]

Here, the left hand side is the Kullback-Leibler divergence between
the Bernoulli distributions corresponding to $p$ and $q$.
\end{lem}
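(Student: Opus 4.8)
The plan is to pass to the logit parametrization and recognize $\DKL pq$ as a Bregman divergence of the logistic log-normalizer. Set $s:=\ln\frac{p}{1-p}$ and $t:=\ln\frac{q}{1-q}$, so that $p=\frac{e^{s}}{1+e^{s}}$, $q=\frac{e^{t}}{1+e^{t}}$, and the claimed inequality reads $\DKL pq\le\frac{e}{2}(s-t)^{2}$. Introduce $\phi(u):=\ln(1+e^{u})$, which satisfies $\phi'(u)=\frac{e^{u}}{1+e^{u}}$ and $\phi''(u)=\frac{e^{u}}{(1+e^{u})^{2}}$. Since $\ln p=s-\phi(s)$ and $\ln(1-p)=-\phi(s)$, and similarly with $q,t$, substituting into $\DKL pq=p\bigl(\ln p-\ln q\bigr)+(1-p)\bigl(\ln(1-p)-\ln(1-q)\bigr)$ and collecting terms (using $p+(1-p)=1$ and $p=\phi'(s)$) yields the exact identity
\[
\DKL pq=\phi(t)-\phi(s)-\phi'(s)(t-s).
\]

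The next step is Taylor's theorem with Lagrange remainder applied to $\phi$ at $s$: there is some $\xi$ between $s$ and $t$ with $\phi(t)=\phi(s)+\phi'(s)(t-s)+\tfrac12\phi''(\xi)(t-s)^{2}$, whence $\DKL pq=\tfrac12\phi''(\xi)(t-s)^{2}$. Since $\phi''(u)\le\tfrac14$ for every $u\in\mathbb{R}$ (the function $\frac{e^{u}}{(1+e^{u})^{2}}$ is maximized at $u=0$, where it equals $\tfrac14$), we conclude $\DKL pq\le\tfrac18(s-t)^{2}\le\tfrac{e}{2}(s-t)^{2}$, which is the assertion. In fact the constant $\frac{e}{2}$ is far from tight: $\tfrac18$ works.

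There is no substantive obstacle here — the lemma is elementary. The only points meriting a little care are the bookkeeping in the simplification producing the Bregman identity (in particular that the base point of the expansion is $s$, matching the \emph{first} argument of the KL divergence, not $t$), and the observation that the bound $\phi''\le\tfrac14$ is uniform over all of $\mathbb{R}$, so the location of the intermediate point $\xi$ is irrelevant. Equivalently, one may avoid Taylor's theorem altogether by noting that along $q=q(t):=\frac{e^{t}}{1+e^{t}}$ one has $\frac{d}{dt}\DKL p{q(t)}=q(t)-p$ and $\frac{d^{2}}{dt^{2}}\DKL p{q(t)}=q(t)\bigl(1-q(t)\bigr)\le\tfrac14$, and integrating twice from $t=s$, where both the value and the first derivative vanish.
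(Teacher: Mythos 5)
Your proof is correct, and it shares the paper's starting point -- the logit substitution $s=\ln\frac{p}{1-p}$, $t=\ln\frac{q}{1-q}$ and the exact identity $\DKL pq=\phi(t)-\phi(s)-\phi'(s)(t-s)$ with $\phi(u)=\ln(1+e^{u})$, which is algebraically the same as the paper's intermediate formula $\DKL pq=\ln\frac{1+e^{-t}}{1+e^{-s}}+\frac{e^{-s}}{1+e^{-s}}(t-s)$ -- but the finishing step is genuinely different and better. The paper bounds the logarithmic term via $\ln x\leq x-1$ to reduce to $e^{-\delta}+\delta-1$ with $\delta=t-s$, then bounds that by $\frac{e}{2}\delta^{2}$ on $\delta\geq-1$ (where the second derivative $e^{-\delta}$ is at most $e$) and handles $\delta<-1$ by a separate, cruder argument; this is where the constant $\frac{e}{2}$ and the case split come from. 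You instead exploit the Bregman structure directly: Taylor's theorem plus the \emph{uniform} bound $\phi''(u)=\frac{e^{u}}{(1+e^{u})^{2}}\leq\frac{1}{4}$ gives $\DKL pq\leq\frac{1}{8}(s-t)^{2}$ with no case analysis, and $\frac{1}{8}\leq\frac{e}{2}$. Your route is shorter, avoids the lossy $\ln x\leq x-1$ step, and yields a constant better by a factor of $4e\approx10.9$; the weaker constant is all the paper needs, but nothing downstream would be hurt by your sharper version.
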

\begin{proof}
Define $s,t\in\R$ by

\[
s:=\ln\frac{p}{1-p}
\]

\[
t:=\ln\frac{q}{1-q}
\]

We have

\[
p=\frac{1}{1+e^{-s}}
\]

\[
q=\frac{1}{1+e^{-t}}
\]

\[
1-p=\frac{e^{-s}}{1+e^{-s}}
\]

\[
1-q=\frac{e^{-t}}{1+e^{-t}}
\]

Substituting this into the definition of KL-divergence, we get

\begin{align*}
\DKL pq & =p\ln\frac{p}{q}+(1-p)\ln\frac{1-p}{1-q}\\
 & =\frac{1}{1+e^{-s}}\ln\frac{1+e^{-t}}{1+e^{-s}}+\frac{e^{-s}}{1+e^{-s}}\ln\left(\frac{1+e^{-t}}{1+e^{-s}}\cdot e^{t-s}\right)\\
 & =\frac{1}{1+e^{-s}}\ln\frac{1+e^{-t}}{1+e^{-s}}+\frac{e^{-s}}{1+e^{-s}}\ln\frac{1+e^{-t}}{1+e^{-s}}+\frac{e^{-s}}{1+e^{-s}}\ln e^{t-s}\\
 & =\ln\frac{1+e^{-t}}{1+e^{-s}}+\frac{e^{-s}}{1+e^{-s}}\cdot(t-s)
\end{align*}

Denoting $\delta:=t-s$, this can be rewritten as

\begin{equation}
\DKL pq=\ln\frac{1+e^{-s}e^{-\delta}}{1+e^{-s}}+\frac{e^{-s}}{1+e^{-s}}\cdot\delta\label{eq:kl-logit}
\end{equation}

Applying the inequality $\ln x\leq x-1$ to the first term on the
left hand side, we get

\begin{align*}
\DKL pq & \leq\frac{1+e^{-s}e^{-\delta}}{1+e^{-s}}-1+\frac{e^{-s}}{1+e^{-s}}\cdot\delta\\
 & =\frac{1+e^{-s}e^{-\delta}}{1+e^{-s}}-\frac{1+e^{-s}}{1+e^{-s}}+\frac{e^{-s}}{1+e^{-s}}\cdot\delta\\
 & =\frac{1+e^{-s}e^{-\delta}-1-e^{-s}}{1+e^{-s}}+\frac{e^{-s}}{1+e^{-s}}\cdot\delta\\
 & =\frac{e^{-s}\left(e^{-\delta}-1\right)}{1+e^{-s}}+\frac{e^{-s}}{1+e^{-s}}\cdot\delta\\
 & =\frac{e^{-s}}{1+e^{-s}}\cdot\left(e^{-\delta}+\delta-1\right)\\
 & \leq e^{-\delta}+\delta-1
\end{align*}

Observe that the function $e^{-\delta}+\delta-1$ has vanishing value
and 1st derivative 0 at $\delta=0$. Moreover, its second derivative
is $e^{-\delta}$, which is bounded by $e$ as long as $\delta\geq-1$.
Hence, for $\delta\geq-1$, we have

\begin{align*}
\DKL pq & \leq e^{-\delta}+\delta-1\\
 & \leq\frac{1}{2}\cdot e\cdot\delta^{2}\\
 & =\frac{e}{2}\left(\ln\frac{p}{1-p}-\ln\frac{q}{1-q}\right)^{2}
\end{align*}

It remains to establish the claim for $\delta<-1$. In this case,
equation (\ref{eq:kl-logit}) implies

\begin{align*}
\DKL pq & =\ln\frac{1+e^{-s}e^{-\delta}}{1+e^{-s}}+\frac{e^{-s}}{1+e^{-s}}\cdot\delta\\
 & \leq\ln\frac{1+e^{-s}e^{-\delta}}{1+e^{-s}}\\
 & \leq\ln\frac{e^{-\delta}+e^{-s}e^{-\delta}}{1+e^{-s}}\\
 & =\ln\left(\frac{1+e^{-s}}{1+e^{-s}}\cdot e^{-\delta}\right)\\
 & =\ln e^{-\delta}\\
 & =-\delta\\
 & \leq\delta^{2}\\
 & =\left(\ln\frac{p}{1-p}-\ln\frac{q}{1-q}\right)^{2}
\end{align*}
\end{proof}
Once again, the proof works by constructing a nature policy which
produces constant regret per round while revealing little about the
true hypothesis $\theta^{*}$, with high probability. This nature
policy produces a Bernoulli distribution between two outcomes, s.t.
the straight line segment connecting them intersects the line of admissible
expected outcomes. For large values of $\lambda$, the parameter of
the Bernoulli distribution depends only weakly on $\theta^{*}$, unless
the selected arm is very close to optimal. This allows us to show
that the agent can do no better than ``exhaustive search'' over
arms with granularity $O(\lambda^{-1}\sqrt{N}$). The expected regret
is hence no less than the minimum between $N$ and the number of steps
required for this search, i.e. $\Omega(\lambda N^{-\frac{1}{2}})$.
\begin{proof}
[Proof of Theorem \ref{thm:lower-r}]Let $\theta^{*}\in\H$ be the
true hypothesis. Fix $\psi\in\left(\frac{\pi}{4},\alpha\right)$ s.t.
$\alpha+\psi>\frac{3}{4}\pi$ and $\delta\in(0,1)$ s.t.

\begin{equation}
\delta>\frac{1}{(\lambda+1)\tan\left(\alpha+\psi-\frac{\pi}{2}\right)}\label{eq:lower-r-delta}
\end{equation}

Notice that this is possible since $\alpha+\psi\in(\frac{3}{4}\pi,\pi)$
and therefore $\tan\left(\alpha+\psi-\frac{\pi}{2}\right)>1$.

Consider the following nature policy $\nu$. As long as for every
arm $x\in\A$ selected so far we have $\Abs{\T x\theta^{*}}>\delta$,
the distribution of outcomes is supported on $\{y_{+},y_{-}\}$ where
$y_{\pm}\in\D$ are defined by

\[
y_{+}:=\Vxxx{\cos\psi}{\sin\psi}1
\]

\[
y_{-}:=\Vxxx{\cos\psi}{-\sin\psi}1
\]

\global\long\def\TTheta{\tilde{\theta}}%

Let's verify this is admissible. We know that $\theta^{*}=\sqrt{1-(\T x\theta^{*})^{2}}\cdot\theta_{\bot}+(\T x\theta^{*})x$
for some $\theta_{\bot}\in\R^{2}$ with $\T x\theta_{\bot}=0$. Define
$\TTheta:\A\rightarrow\R^{2}$ by

\[
\TTheta(x):=\left(I_{2}+\lambda x\T x\right)\theta^{*}
\]

We have

\begin{align*}
\TTheta(x) & =\left(I_{2}+\lambda x\T x\right)\left(\sqrt{1-(\T x\theta^{*})^{2}}\cdot\theta_{\bot}+(\T x\theta^{*})x\right)\\
 & =\sqrt{1-(\T x\theta^{*})^{2}}\cdot\theta_{\bot}+(\T x\theta^{*})x+\lambda(\T x\theta^{*})x\\
 & =\sqrt{1-(\T x\theta^{*})^{2}}\cdot\theta_{\bot}+\left(\lambda+1\right)(\T x\theta^{*})x
\end{align*}

Denote

\[
\beta_{0}(x):=-\arctan\frac{x_{0}}{x_{1}}
\]

That is, $\beta_{0}(x)$ is the angle between $x$ and the 1st axis.
Also, let $\beta(x)\in(\beta_{0}(x)-\pi,\beta_{0}(x))$ be the angle
between $\TTheta(x)$ and the 1st axis. In particular, $\beta_{0}(x)-\beta(x)$
is the angle between $x$ and $\TTheta(x)$. We have

\begin{align*}
\tan\left(\beta_{0}(x)-\beta(x)\right) & =\frac{\sqrt{1-(\T x\theta^{*})^{2}}}{\left(\lambda+1\right)\Abs{\T x\theta^{*}}}\\
 & \leq\frac{1}{(\lambda+1)\delta}
\end{align*}

On the other hand, by inequality (\ref{eq:lower-r-delta}),

\[
\alpha+\psi-\frac{\pi}{2}>\arctan\frac{1}{(\lambda+1)\delta}
\]

\begin{align*}
\psi & >\frac{\pi}{2}-\alpha+\arctan\frac{1}{(\lambda+1)\delta}\\
 & \geq\frac{\pi}{2}-\alpha+\beta_{0}(x)-\beta(x)
\end{align*}

Since $\Abs{\beta_{0}(x)}\leq\frac{\pi}{2}-\alpha$, we get $-\psi<\beta(x)\leq\frac{\pi}{2}-\alpha<+\psi$.
Therefore, $F_{x\theta^{*}}^{\flat}$ is a line which has an angle
at most $\psi$ with the 0th axis (since $\TTheta(x)$ is normal to
it) and hence indeed intersects the segment between $y_{-}$ and $y_{+}$.

\global\long\def\Ystar{y^{*}}%

\global\long\def\Sgn{\mathrm{sgn\:}}%

Once on some round we had $\Abs{\T x\theta^{*}}\leq\delta$, the outcome
is $(\Sgn\Ystar(x)_{0})\cdot\Ystar(x)$, where $\Ystar:\A\rightarrow\D$
is defined by

\global\long\def\Mxxxyy#1#2#3#4#5#6{\left[\begin{array}{cc}
#1 & #2\\
#3 & #4\\
#5 & #6
\end{array}\right]}%

\[
\Ystar(x):=\Mxxxyy 0{-1}1000\frac{\TTheta(x)}{\Norm{\TTheta(x)}_{2}}+\Vxxx 001
\]

\global\long\def\Dist#1{\mathfrak{\Pol\nu}^{#1\delta}}%

\global\long\def\DistZ#1{\mathfrak{\Pol\nu}_{0}^{#1\delta}}%

Let $\Dist{\theta^{*}}\in\Delta((\A\times\{y_{+},y_{-}\})^{\leq N})$
be the distribution over histories resulting from following agent
policy $\Pol$ and nature policy $\nu$, assuming the true hypothesis
$\theta^{*}$ and assuming that once $\Abs{\T x\theta^{*}}\leq\delta$
happens on some round, or we reach round $N$, the sequence stops.
On a round on which arm $x\in\A$ was selected and the sequence didn't
stop, the probability of $y_{+}$ is

\[
p(x):=\frac{1}{2}\left(1+\frac{\tan\beta(x)}{\tan\psi}\right)
\]

Let $\DistZ{\theta^{*}}$ be a similar distribution, except that the
probability of $y_{+}$ on a round in which $x\in\A$ was selected
and the sequence didn't stop is

\[
p_{0}(x):=\frac{1}{2}\left(1+\frac{\tan\beta_{0}(x)}{\tan\psi}\right)
\]

Now, let's bound the Kullback-Leibler divergence between $p_{0}$
and $p$. By Lemma \ref{lem:kl}

\begin{align*}
\DKL p{p_{0}} & \leq\frac{e}{2}\left(\ln\frac{p}{1-p}-\ln\frac{p_{0}}{1-p_{0}}\right)^{2}\\
 & =\frac{e}{2}\left(\ln\frac{\frac{1}{2}\left(1+\frac{\tan\beta(x)}{\tan\psi}\right)}{\frac{1}{2}\left(1-\frac{\tan\beta(x)}{\tan\psi}\right)}-\ln\frac{\frac{1}{2}\left(1+\frac{\tan\beta_{0}(x)}{\tan\psi}\right)}{\frac{1}{2}\left(1-\frac{\tan\beta_{0}(x)}{\tan\psi}\right)}\right)^{2}\\
 & =\frac{e}{2}\left(\ln\frac{\tan\psi+\tan\beta(x)}{\tan\psi-\tan\beta(x)}-\ln\frac{\tan\psi+\tan\beta_{0}(x)}{\tan\psi-\tan\beta_{0}(x)}\right)^{2}
\end{align*}

Observe that

\begin{align*}
\frac{\mathrm{d}}{\mathrm{d}t}\ln\left(\tan\psi+\tan t\right) & =\frac{1}{\tan\psi+\tan t}\cdot\frac{1}{\cos^{2}t}\\
 & =\frac{1}{\tan\psi\cos t+\sin t}\cdot\frac{1}{\cos t}\\
 & =\frac{\cos\psi}{\sin\psi\cos t+\cos\psi\sin t}\cdot\frac{1}{\cos t}\\
 & =\frac{\cos\psi}{\sin\left(\psi+t\right)\cos t}
\end{align*}

It follows that

\begin{align*}
\frac{\mathrm{d}}{\mathrm{d}t}\ln\frac{\tan\psi+\tan t}{\tan\psi-\tan t} & =\frac{\mathrm{d}}{\mathrm{d}t}\left(\ln\left(\tan\psi+\tan t\right)-\ln\left(\tan\psi-\tan t\right)\right)\\
 & =\frac{\cos\psi}{\cos t}\left(\frac{1}{\sin\left(\psi+t\right)}+\frac{1}{\sin\left(\psi-t\right)}\right)
\end{align*}

This is a non-negative even function, and in the interval $(-\psi,+\psi)$
it monotonically increases with $\Abs t$. Indeed, this is obviously
true for $\frac{1}{\cos t},$ whereas for the other factor we have

\begin{align*}
\frac{\mathrm{d}}{\mathrm{d}t}\left(\frac{1}{\sin\left(\psi+t\right)}+\frac{1}{\sin\left(\psi-t\right)}\right) & =-\frac{\cos\left(\psi+t\right)}{\sin^{2}\left(\psi+t\right)}+\frac{\cos\left(\psi-t\right)}{\sin^{2}\left(\psi-t\right)}
\end{align*}

The right hand side vanishes at $t=0$ and it's easy to see it's monotonically
increasing for $t\in(-\psi,+\psi)$ by examining each factor of each
term individually (for $t>\frac{\pi}{2}-\psi$, $\sin\left(\psi+t\right)$
starts decreasing but $\cos\left(\psi+t\right)$ changes sign, so
the first term is still increasing). Hence, the function under the
derivative decreases for $t<0$, reaches a minimum at $t=0$ and increases
for $t>0$.

\global\long\def\BM{\beta_{\max}}%

Denote

\[
\BM:=\frac{\pi}{2}-\alpha+\arctan\frac{1}{(\lambda+1)\delta}
\]

For $t\in[-\BM,+\BM]$:

\begin{align*}
\frac{\mathrm{d}}{\mathrm{d}t}\ln\frac{\tan\psi+\tan t}{\tan\psi-\tan t} & =\frac{\cos\psi}{\cos t}\left(\frac{1}{\sin\left(\psi+t\right)}+\frac{1}{\sin\left(\psi-t\right)}\right)\\
 & \leq\frac{\cos\psi}{\cos\BM}\left(\frac{1}{\sin\left(\psi+\BM\right)}+\frac{1}{\sin\left(\psi-\BM\right)}\right)\\
 & \leq\frac{1}{\sin\left(\psi+\BM\right)}+\frac{1}{\sin\left(\psi-\BM\right)}
\end{align*}

We know that $-\beta_{0}(x),-\beta(x)\in[-\BM,+\BM]$. Hence

\begin{align*}
\DKL p{p_{0}} & \leq\frac{e}{2}\left(\ln\frac{\tan\psi+\tan\beta(x)}{\tan\psi-\tan\beta(x)}-\ln\frac{\tan\psi+\tan\beta_{0}(x)}{\tan\psi-\tan\beta_{0}(x)}\right)^{2}\\
 & \leq\frac{e}{2}\left(\left(\frac{1}{\sin\left(\psi+\BM\right)}+\frac{1}{\sin\left(\psi-\BM\right)}\right)\left(\beta_{0}(x)-\beta(x)\right)\right)^{2}\\
 & \leq\frac{e}{2}\left(\left(\frac{1}{\sin\left(\psi+\BM\right)}+\frac{1}{\sin\left(\psi-\BM\right)}\right)\arctan\frac{1}{(\lambda+1)\delta}\right)^{2}
\end{align*}

\global\long\def\DM{D_{\max}}%

Denote the right hand side $\DM$. Iteratively applying the chain
rule for KL-divergence (see e.g. \cite{Cover2006}, Theorem 2.5.3),
we get

\[
\DKL{\Dist{\theta^{*}}}{\DistZ{\theta^{*}}}\leq N\DM
\]

\global\long\def\DistA{\Dist{}}%

\global\long\def\DistZA{\DistZ{}}%

Denote $\DistA:=\E{\theta\sim\xi}{\Dist{\theta}}$ and $\DistZA:=\E{\theta\sim\xi}{\DistZ{\theta}}$.
By convexity of KL-divergence, we have

\[
\DKL{\DistA}{\DistZA}\leq N\DM
\]

By Pinsker's inequality, this implies a bound on the total variation
distance between $\DistA$ and $\DistZA$:

\[
\DTV{\DistA}{\DistZA}\leq\sqrt{\frac{1}{2}N\DM}
\]

Observe that, as long as $\Abs{\T x\theta^{*}}>\delta$ held on all
rounds, the reward is $r(x,y_{+})=r(x,y_{-})=-\cos\psi$. On the other
hand, consider the arm

\[
x^{*}:=\Mxx 0{-1}10\theta^{*}
\]

Notice that $\T{(x^{*})}\theta^{*}=0$. We have,

\begin{align*}
\ME{\theta^{*}}r{x^{*}} & =\min_{y\in F_{x\theta^{*}}}r\left(x^{*},y\right)\\
 & =\min\SC{-y_{0}}{y\in\R^{2}:\Norm y_{2}^{2}\leq1\text{ and }\T y\left(I_{2}+\lambda x^{*}\T{\left(x^{*}\right)}\right)\theta^{*}=0}\\
 & =-\max\SC{y_{0}}{y\in\R^{2}:\Norm y_{2}^{2}\leq1\text{ and }\T y\theta^{*}+\lambda\left(\T{\left(x^{*}\right)}\theta^{*}\right)x^{*}=0}\\
 & =-\max\SC{y_{0}}{y\in\R^{2}:\Norm y_{2}^{2}\leq1\text{ and }\T y\theta^{*}=0}\\
 & =-\Abs{\theta_{1}^{*}}\\
 & \geq-\cos\alpha
\end{align*}

As long as $\Abs{\T x\theta^{*}}>\delta$ held on all rounds, we accrue
regret per round of at least $\cos\psi-\cos\alpha$. Once $\Abs{\T x\theta^{*}}\leq\delta$
held on some round, it's easy to see that regret per round is non-negative.

For histories sampled out of $\DistZA$, we gain no information about
$\theta^{*}$ from the outcomes, other than the fact the condition
$\Abs{\T x\theta^{*}}>\delta$ held so far. Each round eliminates
at most $2\arcsin\delta$ radians from $\H$, whose total angular
size is $2(\frac{\pi}{2}-\alpha)$. Hence, the expected number of
rounds this condition holds, $N_{0}$, satisfies

\begin{alignat*}{1}
N_{0} & \geq\min\left(\frac{1}{2}\cdot\frac{2\left(\frac{\pi}{2}-\alpha\right)}{2\arcsin\delta}-1,N\right)\\
 & =\min\left(\frac{\frac{\pi}{2}-\alpha}{2\arcsin\delta}-1,N\right)
\end{alignat*}

Using the fact that $\arcsin\delta\leq\frac{\pi}{2}\cdot\delta$,
this becomes

\[
N_{0}\geq\min\left(\frac{\frac{\pi}{2}-\alpha}{\pi\delta}-1,N\right)
\]

Using the bound on $\DTV{\DistA}{\DistZA}$, we conclude that

\begin{equation}
\E{}{\UR}\geq\left(1-\sqrt{\frac{1}{2}N\DM}\right)\min\left(\frac{\frac{\pi}{2}-\alpha}{\pi\delta}-1,N\right)\cdot\left(\cos\psi-\cos\alpha\right)\label{eq:reg-delta-psi}
\end{equation}

Now set $\psi:=\frac{3}{8}\pi$ and

\[
\delta:=\frac{1}{\left(\lambda+1\right)\tan\frac{0.2}{\sqrt{N}}}
\]

Observe that
\begin{align*}
\arctan\frac{1}{(\lambda+1)\delta} & =\frac{0.2}{\sqrt{N}}\\
 & \leq0.2\\
 & <\alpha-0.95
\end{align*}

In particular, inequality (\ref{eq:lower-r-delta}) is satisfied because

\begin{align*}
\alpha+\psi-\frac{\pi}{2} & =\alpha-\frac{\pi}{8}\\
 & >\alpha-0.95
\end{align*}

Moreover, we get

\begin{align*}
\BM & <\frac{\pi}{2}-\alpha+\alpha-0.95\\
 & =\frac{\pi}{2}-0.95
\end{align*}

It follows that

\begin{align*}
\DM & =\frac{e}{2}\left(\left(\frac{1}{\sin\left(\frac{3}{8}\pi+\BM\right)}+\frac{1}{\sin\left(\frac{3}{8}\pi-\BM\right)}\right)\arctan\frac{1}{(\lambda+1)\delta}\right)^{2}\\
 & \leq\frac{e}{2}\left(\left(\frac{1}{\sin\left(\frac{3}{8}\pi\right)}+\frac{1}{\sin\left(\frac{3}{8}\pi-\left(\frac{\pi}{2}-0.95\right)\right)}\right)\frac{0.2}{\sqrt{N}}\right)^{2}\\
 & <\frac{1}{2N}
\end{align*}

By inequality (\ref{eq:reg-delta-psi}), we conclude

\begin{align*}
\E{}{\UR} & \geq & \left(\cos\frac{3}{8}\pi-\cos\alpha\right)\left(1-\sqrt{\frac{1}{2}N\DM}\right)\min\left(\frac{\frac{\pi}{2}-\alpha}{\pi\delta}-1,N\right)\\
 & \geq & \left(\cos\frac{3}{8}\pi-\cos\alpha\right)\left(1-\sqrt{\frac{1}{2}N\cdot\frac{1}{2N}}\right)\times\\
 &  & \min\left(\left(\frac{1}{2}-\frac{\alpha}{\pi}\right)\left(\lambda+1\right)\tan\frac{0.2}{\sqrt{N}}-1,N\right)\\
 & = & \frac{1}{2}\left(\cos\frac{3}{8}\pi-\cos\alpha\right)\min\left(\left(\frac{1}{2}-\frac{\alpha}{\pi}\right)\left(\lambda+1\right)\tan\frac{0.2}{\sqrt{N}}-1,N\right)
\end{align*}
\end{proof}

\subsection{Calculating the Parameters}
\begin{proof}
[Proof of Proposition~\ref{prop:lower-r-params}]The absolute convex
hull of $\D$ is equal to the convex hull of $[-1,+1]\cdot\D$, which
is given by

\[
\mathrm{ACH}(\D)=\SC{y\in\R^{3}}{\Abs{y_{2}}\leq1\text{ and }y_{0}^{2}+y_{1}^{2}\leq1}
\]

It follows that the norm on $\Y$ is

\[
\Norm y=\max\left(\Abs{y_{2}},\sqrt{y_{0}^{2}+y_{1}^{2}}\right)
\]

Now, let's examine the standard norm $\Norm{\cdot}_{\W}$ on $\W$.
Since $\W\cong\R$, it is enough to compute $\Norm 1_{\W}$:

\begin{align*}
\Norm 1_{\W} & =\max_{\SUBSTACK{x\in\A}{\theta\in\H}}\min_{y\in\Y:F_{x\theta}y=1}\lVert y\rVert\\
 & =\max_{\SUBSTACK{x\in\A}{\theta\in\H}}\min_{y\in\R^{3}:\T{\theta}\left(I_{2}+\lambda x\T x\right)\Vxx{y_{0}}{y_{1}}=1}\max\left(\Abs{y_{2}},\sqrt{y_{0}^{2}+y_{1}^{2}}\right)\\
 & =\max_{\SUBSTACK{x\in\A}{\theta\in\H}}\min_{y\in\R^{2}:\T{\theta}\left(I_{2}+\lambda x\T x\right)y=1}\Norm y_{2}\\
 & =\max_{\SUBSTACK{x\in\A}{\theta\in\H}}\frac{1}{\Norm{\left(I_{2}+\lambda x\T x\right)\theta}_{2}}
\end{align*}

The last line can be obtained from Lemma \ref{lem:min-dual}.

Notice that $\Norm{\left(I_{2}+\lambda x\T x\right)\theta}_{2}\geq\Norm{\theta}_{2}=1$,
and we get equality if we choose $x$ orthogonal to $\theta$. Hence,

\begin{align*}
\Norm 1_{\W} & =1
\end{align*}

We can now bound $R$:

\begin{align*}
R & =\max_{\theta\in\H}\Norm{\theta}\\
 & =\max_{\theta\in\H}\max_{x\in\A}\max_{y\in\R^{3}:\max\left(\Abs{y_{2}},\sqrt{y_{0}^{2}+y_{1}^{2}}\right)\leq1}\Abs{\T{\theta}\left(I_{2}+\lambda x\T x\right)\Vxx{y_{0}}{y_{1}}}\cdot\Norm 1_{\W}\\
 & =\max_{\theta\in\H}\max_{x\in\A}\max_{y\in\R^{2}:\Norm y_{2}\leq1}\Abs{\T{\theta}\left(I_{2}+\lambda x\T x\right)y}\\
 & =\max_{\theta\in\H}\max_{x\in\A}\Norm{\left(I_{2}+\lambda x\T x\right)\theta}_{2}\\
 & \leq\max_{\theta\in\H}\max_{x\in\A}\left(\Norm{I_{2}\theta}_{2}+\Norm{\lambda x\T x\theta}_{2}\right)\\
 & =\max_{\theta\in\H}\max_{x\in\A}\left(\Norm{\theta}_{2}+\lambda\Abs{\T x\theta}\cdot\Norm x_{2}\right)\\
 & \leq\max_{\theta\in\H}\max_{x\in\A}\left(\Norm{\theta}_{2}+\lambda\Norm x_{2}^{2}\Norm{\theta}_{2}\right)\\
 & =1+\lambda
\end{align*}

Finally, for any $\theta\in\H$ and $x\in\A$, $K_{\theta}(x)^{\flat}$
is a line passing from the center of the disk $\D$ (it's the unique
such line with normal $(I_{2}+\lambda x\T x)\theta$). By Proposition
\ref{prop:ball-s}, we have $\sin(K_{\theta}(x)^{\flat},\D)=1$. Hence,
$S=1$.
\end{proof}

\section{\label{sec:simplex}The Simplex Case}

The idea behind the proof of Theorem \ref{thm:simplex} is replacing
Lemma \ref{lem:concentration} in the derivation of Theorem \ref{thm:main}
by a different bound for the simplex case.

The following lemma implies that when $\D=\Delta\B$, the norm on
$\Y=\R^{\B}$ is the $\ell_{1}$ norm. This fact will be used repeatedly
in this appendix, in Appendix \ref{sec:s} and in Appendix \ref{sec:prob-sys}.
\begin{lem}
\label{lem:l1}For any finite set $\B$, the absolute convex hull
of $\Delta\B$ in $\R^{\B}$ is the unit ball of the $\ell_{1}$ norm.
\end{lem}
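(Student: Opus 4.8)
The plan is to reduce to the extreme points of $\Delta\B$ and then recognise the resulting balanced convex hull as the cross-polytope. Recall that the absolute (balanced) convex hull of a subset $S$ of a real vector space is
\[
\mathrm{absconv}(S)=\left\{\sum_{i=1}^{k}\lambda_{i}x_{i}\;\middle|\;k\in\N,\ x_{i}\in S,\ \sum_{i=1}^{k}\lvert\lambda_{i}\rvert\leq1\right\},
\]
and that $S\mapsto\mathrm{absconv}(S)$ is monotone. For $a\in\B$ let $e_{a}\in\R^{\B}$ be the $a$-th standard basis vector, so that $\mathrm{Ex}(\Delta\B)=\{e_{a}\}_{a\in\B}$ and every point of $\Delta\B$ is a convex combination of the $e_{a}$.

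First I would show $\mathrm{absconv}(\Delta\B)=\mathrm{absconv}(\{e_{a}\}_{a\in\B})$. The inclusion $\supseteq$ is immediate from monotonicity since $\{e_{a}\}_{a\in\B}\subseteq\Delta\B$. For $\subseteq$, note $\{e_{a}\}_{a\in\B}\subseteq\Delta\B\subseteq\mathrm{absconv}(\{e_{a}\}_{a\in\B})$, because each $y\in\Delta\B$ is $\sum_{a}y_{a}e_{a}$ with $y_{a}\geq0$ and $\sum_{a}y_{a}=1$; applying $\mathrm{absconv}(\cdot)$ and using that it is idempotent and monotone gives $\mathrm{absconv}(\Delta\B)\subseteq\mathrm{absconv}(\{e_{a}\}_{a\in\B})$.

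Next I would compute $\mathrm{absconv}(\{e_{a}\}_{a\in\B})$ directly and identify it with $B_{1}:=\{y\in\R^{\B}\mid\sum_{a\in\B}\lvert y_{a}\rvert\leq1\}$. For $\subseteq$: a generic element is $\sum_{i=1}^{k}\lambda_{i}e_{a_{i}}$ with $\sum_{i}\lvert\lambda_{i}\rvert\leq1$; collecting coefficients of each basis vector, its $a$-th coordinate is $\sum_{i:a_{i}=a}\lambda_{i}$, and by the triangle inequality $\sum_{a}\bigl\lvert\sum_{i:a_{i}=a}\lambda_{i}\bigr\rvert\leq\sum_{i}\lvert\lambda_{i}\rvert\leq1$, so the element lies in $B_{1}$. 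For $\supseteq$: any $y\in B_{1}$ is $\sum_{a\in\B}y_{a}e_{a}$ with $\sum_{a}\lvert y_{a}\rvert\leq1$, which is exactly an admissible absolute convex combination of the $e_{a}$. Combining the two displayed equalities yields $\mathrm{absconv}(\Delta\B)=B_{1}$, i.e.\ the unit ball of $\lVert\cdot\rVert_{1}$, as claimed.

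There is no real obstacle here; the only point requiring (mild) care is the bookkeeping in passing from a sum over an index set $\{1,\dots,k\}$ with possibly repeated basis vectors to a sum over $\B$, where the triangle inequality is what makes the $\ell_{1}$ bound survive the grouping. Everything else is the standard identification of the balanced convex hull of a basis with the cross-polytope.
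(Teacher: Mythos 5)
Your proof is correct, and it takes a genuinely different route from the paper's. You factor the computation through the extreme points: first $\mathrm{absconv}(\Delta\B)=\mathrm{absconv}(\{e_{a}\}_{a\in\B})$ via monotonicity and idempotence of the balanced convex hull, then identify the latter with the cross-polytope. The paper instead works with explicit combinations in both directions: for the inclusion into the $\ell_{1}$ ball it applies the triangle inequality directly to an arbitrary absolute convex combination $\sum_{i}c_{i}y_{i}$ of points $y_{i}\in\Delta\B$ (using $\lVert y_{i}\rVert_{1}=1$), and for the reverse inclusion it decomposes a given $y$ with $\lVert y\rVert_{1}\leq1$ into its positive and negative parts, normalizes each to obtain two points $y^{+},y^{-}\in\Delta\B$, and exhibits $y$ as an explicit two-term absolute convex combination of them. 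Your version of the reverse inclusion is cleaner: writing $y=\sum_{a}y_{a}e_{a}$ immediately gives an admissible combination of points of $\Delta\B$, with no need for the positive/negative-part normalization. What the paper's approach buys is self-containedness (no appeal to general properties of the hull operator) and an explicit witness built from points of the full simplex rather than only its vertices; what yours buys is brevity and the recognizable structural fact that the balanced hull of a simplex equals the balanced hull of its vertices, i.e.\ the cross-polytope. Your one point of care --- grouping repeated basis vectors before applying the triangle inequality --- is handled correctly.
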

\begin{proof}
Let $y$ be in the absolute convex hull of $\Delta\B$. Then, there
are $n\in\N$, $\{c_{i}\in\R\}_{i<n}$ and $\{y_{i}\in\Delta\B\}_{i<n}$
s.t.

\[
\sum_{i=0}^{n-1}\Abs{c_{i}}\leq1
\]

\[
y=\sum_{i=0}^{n-1}c_{i}y_{i}
\]

It follows that

\begin{align*}
\Norm y_{1} & =\Norm{\sum_{i=0}^{n-1}c_{i}y_{i}}_{1}\\
 & \leq\sum_{i=0}^{n-1}\Norm{c_{i}y_{i}}_{1}\\
 & =\sum_{i=0}^{n-1}\Abs{c_{i}}\cdot\Norm{y_{i}}_{1}\\
 & =\sum_{i=0}^{n-1}\Abs{c_{i}}\\
 & \leq1
\end{align*}

Here, we used that $\Norm{y_{i}}=1$ since $y_{i}\in\Delta\B$. 

Conversly, let $y\in\R^{\B}$ be s.t. $\Norm y_{1}\leq1$. Let $y^{-},y^{+}\in\Delta\B$
be s.t. for any $a\in\B$

\[
y_{a}^{-}:=\frac{\min\left(y_{a},0\right)}{\sum_{b\in\B}\min\left(y_{b},0\right)}
\]

\[
y_{a}^{+}:=\frac{\max\left(y_{a},0\right)}{\sum_{b\in\B}\max\left(y_{b},0\right)}
\]

Here, if the denominator vanishes then we choose any element of $\Delta\B$
arbitrarily. We get,

\begin{align*}
y_{a} & =\min\left(y_{a},0\right)+\max\left(y_{a},0\right)\\
 & =\sum_{b\in\B}\min\left(y_{b},0\right)y_{a}^{-}+\sum_{b\in\B}\max\left(y_{b},0\right)y_{a}^{+}
\end{align*}

Hence,

\[
y=\sum_{b\in\B}\min\left(y_{b},0\right)\cdot y^{-}+\sum_{b\in\B}\max\left(y_{b},0\right)\cdot y^{+}
\]

Moreover,

\begin{align*}
\Abs{\sum_{b\in\B}\min\left(y_{b},0\right)}+\Abs{\sum_{b\in\B}\max\left(y_{b},0\right)} & \leq\sum_{b\in\B}\Abs{\min\left(y_{b},0\right)}+\sum_{b\in\B}\Abs{\max\left(y_{b},0\right)}\\
 & =\sum_{b\in\B}\left(\Abs{\min\left(y_{b},0\right)}+\Abs{\max\left(y_{b},0\right)}\right)\\
 & =\sum_{b\in\B}\Abs{y_{b}}\\
 & =\Norm y_{1}\\
 & \leq1
\end{align*}

Therefore, $y$ is in the absolute convex hull of $\Delta\B$.
\end{proof}
The following is an elementary topology fact which will be needed
later.
\begin{lem}
\label{lem:dec-max}Let $X$ be a non-empty compact topological space
and $f,v:X\rightarrow\R$ continuous functions. Denote $f^{*}:=\max f$.
Assume that for any $x\in X$, if $f(x)=f^{*}$ then $v(x)<0$. Then,
there exists $\epsilon>0$ s.t.

\[
\max\left(f+\epsilon v\right)<f^{*}
\]
\end{lem}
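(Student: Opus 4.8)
The plan is a straightforward compactness argument. Set $A := f^{-1}(\{f^*\})$. Since $X$ is compact and $f$ is continuous, $A$ is a non-empty compact subset of $X$. By hypothesis $v(x) < 0$ for every $x \in A$, and $v$ restricted to the compact set $A$ attains its maximum there, so there is a constant $c > 0$ with $v(x) \le -c$ for all $x \in A$. Put $\delta := c/2$ and define the closed set $K := \{x \in X : v(x) \ge -\delta\}$; being closed in a compact space, $K$ is compact, and $K \cap A = \varnothing$ because $v \le -c = -2\delta < -\delta$ on $A$.

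Next I would split on whether $K$ is empty. If $K = \varnothing$, then $v < -\delta$ everywhere, so for any $\epsilon > 0$ we have $f + \epsilon v \le f^* - \epsilon\delta$, whence $\max(f + \epsilon v) \le f^* - \epsilon\delta < f^*$ and we are done. If $K \ne \varnothing$, I would use that $f$ is continuous on the compact set $K$ and that $f(x) \ne f^*$ for $x \in K$ (since $K$ is disjoint from $A = f^{-1}(\{f^*\})$); therefore $f$ attains a maximum $f^* - \eta$ on $K$ with $\eta > 0$. Let $M := \max_X v$, which is finite since $X$ is compact, and choose $\epsilon > 0$ with $\epsilon(|M| + 1) < \eta$. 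Then on $K$ we have $f + \epsilon v \le (f^* - \eta) + \epsilon|M| < f^*$, and on $X \setminus K$ we have $v < -\delta$, so $f + \epsilon v < f^* - \epsilon\delta < f^*$. These two estimates are uniform, so $\max(f + \epsilon v) \le \max\{f^* - \eta + \epsilon|M|,\ f^* - \epsilon\delta\} < f^*$, as required. (The maximum of $f + \epsilon v$ over $X$ exists because $X$ is compact and $f + \epsilon v$ is continuous.)

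There is no real obstacle here; the only point requiring a little care is to produce a neighborhood-type separation between $A$ and the region where $f$ is bounded away from $f^*$: the set $K$ does this, and the key structural fact used is that $K$, as a closed subset of the compact space $X$, is itself compact, so $f$ attains a strict submaximum on it. The case $K = \varnothing$ must be handled separately only because the estimate ``$\max_K f = f^* - \eta$'' is vacuous there. An alternative, equally short route would be a proof by contradiction: if $\max(f + \epsilon v) \ge f^*$ for all $\epsilon > 0$, pick $x_\epsilon$ attaining that maximum, pass to a convergent subnet $x_{\epsilon_\alpha} \to x_0$ along $\epsilon_\alpha \to 0$, and check that $f(x_0) = f^*$ (so $x_0 \in A$) while simultaneously $v(x_0) \ge 0$, contradicting the hypothesis; this avoids the case split but invokes nets for non-metrizable $X$, so I would present the direct version above.
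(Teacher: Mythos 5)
Your proof is correct and follows essentially the same strategy as the paper's: isolate the set where $v$ is not sufficiently negative (your $K$ is the complement of the paper's open set $U$), use compactness to bound $f$ strictly below $f^*$ there, and then choose $\epsilon$ small enough to control the two regions separately. The only cosmetic difference is the organization of the case split (you split on whether $K$ is empty, the paper on the sign of $\max v$), and all your estimates check out.
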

\begin{proof}
Define

\[
M:=\SC{x\in X}{f(x)=f^{*}}
\]

Notice that $M$ is a non-empty closed set. Define also,

\[
v_{-}^{*}:=-\max_{x\in M}v(x)
\]

\[
v_{+}^{*}=\max v
\]

\[
U:=\SC{x\in X}{v(x)<-\frac{1}{2}v_{-}^{*}}
\]

If $v_{+}^{*}<0$, we set $\epsilon:=1$ and get

\begin{align*}
\max\left(f+\epsilon v\right) & \leq f^{*}+v_{+}^{*}\\
 & <f^{*}
\end{align*}

If $v_{+}^{*}\geq0$ then $U\ne X$. Since $U$ is open, $X\setminus U$
is closed and we can define

\[
\delta:=\min_{x\in X\setminus U}\left(f^{*}-f(x)\right)
\]

For any $x\in M$, $v(x)\leq-v_{-}^{*}$ and hence $x\in U$\@. Therefore,
for any $x\in X\setminus U$, it holds that $x\not\in M$ and hence
$f(x)<f^{*}$. It follows that $\delta>0$. 

If $v_{+}^{*}=0$, we set $\epsilon:=1$. Then, for any $x\in X\setminus U$
we have

\begin{align*}
f(x)+\epsilon v(x) & \leq f(x)\\
 & \leq f^{*}-\delta
\end{align*}

If $v_{+}^{*}>0$, we set

\[
\epsilon:=\frac{\delta}{2v_{+}^{*}}
\]

Then, for any $x\in X\setminus U$ we have

\begin{align*}
f(x)+\epsilon v(x) & =f(x)+\frac{\delta}{2v_{+}^{*}}\cdot v(x)\\
 & \leq f(x)+\frac{\delta}{2v_{+}^{*}}\cdot v_{+}^{*}\\
 & =f(x)+\frac{\delta}{2}\\
 & \leq f^{*}-\delta+\frac{\delta}{2}\\
 & =f^{*}-\frac{\delta}{2}
\end{align*}

In either of the cases in which $v_{+}^{*}\geq0$, for any $x\in U$
we have

\begin{align*}
f(x)+\epsilon v(x) & <f^{*}+\epsilon\left(-\frac{1}{2}v_{-}^{*}\right)\\
 & =f^{*}-\frac{1}{2}\epsilon v_{-}^{*}
\end{align*}

We conclude

\begin{align*}
\max\left(f+\epsilon v\right) & \leq f^{*}-\frac{1}{2}\cdot\min\left(\delta,\epsilon v_{-}^{*}\right)\\
 & <f^{*}
\end{align*}
\end{proof}
To derive the concentration bound we need, we will need to express
the $\ell_{1}$ distance to an affine subspace of $\mu^{-1}(1)$ as
a maximum over a sufficiently small set of affine functions. This
is achieved as follows.

\global\long\def\Fun{\mathfrak{\mathcal{F}}}%

\begin{lem}
\label{lem:dist-polytope}Let $\B$ be a finite set, $\U\subseteq\R^{\B}$
a linear subspace and $y^{*}\in\R^{\B}$. Define $\mu\in\R^{\B}$
and $\Fun\subseteq\R^{\B}$ by

\[
\forall a\in\B:\mu_{a}=1
\]

\[
\Fun:=\SC{f\in[-1,+1]^{\B}}{\exists c\in\R\forall y\in\U:\T fy=c\T{\mu}y}
\]

Denote $\U^{\flat}:=\U\cap\{y\mid\mu^{t}y=1\}$ and assume $\U^{\flat}\ne\varnothing$.
Assume also that $\T{\mu}y^{*}=1$. For any $f\in\Fun$, let $c_{f}\in\R$
be the unique number s.t.

\[
\forall y\in\U:\T fy=c_{f}\T{\mu}y
\]

Then,

\[
d_{1}\left(y^{*},\U^{\flat}\right)=\max_{f\in\Fun}\left(\T fy^{*}-c_{f}\right)
\]

Here, $d_{1}$ stands for minimal $\ell_{1}$ distance.
\end{lem}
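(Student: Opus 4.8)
The plan is to recognize this as the classical $\ell_{1}$--$\ell_{\infty}$ duality formula for the distance from a point to an affine subspace, combined with an identification of the relevant dual functionals with the set $\Fun$. Write $V:=\U\cap\ker\mu$. Since $\U^{\flat}\ne\varnothing$ we have $\mu|_{\U}\ne0$, so $\ker(\mu|_{\U})=V$ has codimension $1$ in $\U$; picking any $y_{0}\in\U^{\flat}$ one checks directly that $\U^{\flat}=y_{0}+V$. The key elementary observation is that a vector $f\in\R^{\B}$ satisfies $\T fv=0$ for all $v\in V$ if and only if $f|_{\U}$ is a scalar multiple of $\mu|_{\U}$, i.e.\ iff there is a (necessarily unique, since $\mu|_{\U}\ne0$) $c_{f}\in\R$ with $\T fy=c_{f}\T{\mu}y$ for all $y\in\U$. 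Hence $\Fun=\SC{f\in[-1,+1]^{\B}}{\T fv=0\text{ for all }v\in V}$, and $c_{f}=\T f{y_{0}}$ for every $y_{0}\in\U^{\flat}$ because $\T{\mu}{y_{0}}=1$. Note $\Fun$ is a closed bounded subset of $\R^{\B}$, nonempty ($0\in\Fun$), and $f\mapsto c_{f}$ is continuous on it. The inequality $\max_{f\in\Fun}(\T f{y^{*}}-c_{f})\le d_{1}(y^{*},\U^{\flat})$ is then the easy direction: for any $f\in\Fun$ and $y\in\U^{\flat}$, Hölder's inequality and $\Norm f_{\infty}\le1$ give $\T f{y^{*}}-c_{f}=\T f{(y^{*}-y)}\le\Norm{y^{*}-y}_{1}$, and one takes the infimum over $y\in\U^{\flat}$ followed by the maximum over $f$.

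For the reverse inequality I would use separation. Put $\rho:=d_{1}(y^{*},\U^{\flat})$; if $\rho=0$ then $y^{*}\in\U^{\flat}$ and $f=0$ attains the value $0=\rho$, so assume $\rho>0$. Then the open ball $B:=\SC{z\in\R^{\B}}{\Norm{z-y^{*}}_{1}<\rho}$ is nonempty, convex, open, and disjoint from the affine set $\U^{\flat}$, so by the Hahn--Banach separation theorem there are $g\in\R^{\B}\setminus\{0\}$ and $\beta\in\R$ with $\T gz<\beta$ for all $z\in B$ and $\T gy\ge\beta$ for all $y\in\U^{\flat}$. Since the functional $z\mapsto\T gz$ is bounded below by $\beta$ on the affine subspace $\U^{\flat}=y_{0}+V$, it must be constant on $\U^{\flat}$, forcing $\T gv=0$ for all $v\in V$. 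Taking the supremum of $\T gz$ over $B$ gives $\T g{y^{*}}+\rho\Norm g_{\infty}\le\beta$, while $\T g{y_{0}}\ge\beta$. Setting $f:=-g/\Norm g_{\infty}$, we get $f\in[-1,+1]^{\B}$ with $\T fv=0$ on $V$, hence $f\in\Fun$ with $c_{f}=\T f{y_{0}}$, and $\T f{y^{*}}-c_{f}=(\T g{y_{0}}-\T g{y^{*}})/\Norm g_{\infty}\ge\rho$. Together with the easy direction this yields the claimed equality (and shows the maximum is attained at this $f$).

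The only genuine work is the bookkeeping in the first paragraph: identifying $\Fun$ as the intersection of the cube $[-1,+1]^{\B}$ with the annihilator of the direction space $V$ of $\U^{\flat}$, and keeping straight that $\U^{\flat}$ is an \emph{affine} subspace (so one separates an open ball from an affine subspace, which is why the separating functional becomes constant on $\U^{\flat}$ and annihilates $V$). Once $\rho>0$ guarantees $B\ne\varnothing$, the Hahn--Banach step is routine and the arithmetic linking $g$, $\beta$, $f$ and $c_{f}$ is immediate.
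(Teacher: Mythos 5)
Your proof is correct, but it routes the hard direction differently from the paper. The paper first takes the nearest point $\tilde{y}\in\U^{\flat}$ to $y^{*}$, expresses $\Norm{y^{*}-\tilde{y}}_{1}$ as a maximum of $\T f(y^{*}-\tilde{y})$ over the cube $[-1,+1]^{\B}$, and then argues that the set $M$ of maximizers must meet $\Fun$: otherwise it separates $M$ from the subspace $\U^{\sharp}=\U^{\bot}+\R\mu$ in the \emph{dual} space, and uses the perturbation lemma (Lemma \ref{lem:dec-max}) to nudge $\tilde{y}$ inside $\U^{\flat}$ to a strictly closer point, a contradiction. You instead separate, in the \emph{primal} space, the open $\ell_{1}$-ball of radius $\rho=d_{1}(y^{*},\U^{\flat})$ around $y^{*}$ from the affine set $\U^{\flat}$, observe that a functional bounded below on an affine subspace must annihilate its direction space $V=\U\cap\ker\mu$, and normalize the separating functional to land directly in $\Fun$ with value $\geq\rho$. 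Your identification $\Fun=[-1,+1]^{\B}\cap V^{\circ}$ (annihilator of $V$) is the same observation as the paper's $\U^{\sharp}=\U^{\bot}+\R\mu$. The payoff of your route is that it needs neither the existence of a nearest point nor Lemma \ref{lem:dec-max}; it is the standard duality formula for the distance to an affine subspace, specialized to $\ell_{1}$. The paper's route, while more roundabout here, produces a maximizing $f$ that simultaneously witnesses the unconstrained $\ell_{\infty}$-duality for $y^{*}-\tilde{y}$, which is the form in which the identity is consumed downstream (each vertex $f_{i}$ of $\Fun$ feeds an Azuma--Hoeffding bound in Lemma \ref{lem:simplex-concentration}); both proofs deliver the stated equality, so this is a stylistic rather than substantive difference. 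All the small points you flag (nonemptiness and compactness of $\Fun$, the case $\rho=0$, $\Norm{g}_{\infty}>0$) are handled correctly.
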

\begin{proof}
Let $\tilde{y}\in\U^{\flat}$ be s.t.

\[
\Norm{y^{*}-\tilde{y}}_{1}=d_{1}\left(y^{*},\U^{\flat}\right)
\]

The $\ell_{\infty}$ norm is dual to the $\ell_{1}$ norm, and therefore

\[
\Norm{y^{*}-\tilde{y}}_{1}=\max_{f\in[-1,+1]^{\B}}\T f\left(y^{*}-\tilde{y}\right)
\]

Define

\[
M:=\SC{f\in[-1,+1]^{\B}}{\T f\left(y^{*}-\tilde{y}\right)=\Norm{y^{*}-\tilde{y}}_{1}}
\]

Let's show that $M\cap\Fun\ne\varnothing$. Assume to the contrary
that $M\cap\Fun=\varnothing$. Define the linear subpace $\U^{\sharp}\subseteq\R^{\B}$
by

\[
\U^{\sharp}:=\SC{f\in\R^{\B}}{\exists c\in\R\forall y\in\U:\T fy=c\T{\mu}y}
\]

Since $M\subseteq[-1,+1]^{\B}$ and $\U^{\sharp}\cap[-1,+1]^{\B}=\Fun$,
we have $M\cap\U^{\sharp}=\varnothing$. Also, $M$ is a compact convex
set. Hence, by the hyperplane separation theorem, there exists $\alpha\in\R^{\B}$
s.t.:
\begin{itemize}
\item For all $f\in\U^{\sharp}$, $\T f\alpha=0$.
\item For all $f\in M$, $\T f\alpha<0$.
\end{itemize}
By Lemma \ref{lem:dec-max}, there exists $\epsilon>0$ s.t.

\[
\max_{f\in[-1,+1]^{\B}}\left(\T f\left(y^{*}-\tilde{y}\right)+\epsilon\T f\alpha\right)<\max_{f\in[-1,+1]^{\B}}\T f\left(y^{*}-\tilde{y}\right)
\]

It follows that

\begin{align*}
\Norm{y^{*}-\left(\tilde{y}-\epsilon\alpha\right)}_{1} & <\Norm{y^{*}-\tilde{y}}_{1}
\end{align*}

Moreover,

\begin{align*}
\U^{\sharp} & =\SC{f\in\R^{\B}}{\exists c\in\R\forall y\in\U:\T{\left(f-c\mu\right)}y=0}\\
 & =\U^{\bot}+\R\mu
\end{align*}

Here, the superscript $\bot$ stands for orthogonal complement. Since
$\alpha\in(\U^{\sharp})^{\bot}$ and $\U^{\bot}\subseteq\U^{\sharp}$
it follows that $\alpha\in\U$. Hence, $\tilde{y}-\epsilon\alpha\in\U$.
Since $\alpha\in(\U^{\sharp})^{\bot}$ and $\mu\in\U^{\sharp}$, we
get $\T{\mu}\alpha=0$ and hence

\begin{align*}
\T{\mu}\left(\tilde{y}-\epsilon\alpha\right) & =\T{\mu}\tilde{y}-\epsilon\T{\mu}\alpha\\
 & =1-0\\
 & =1
\end{align*}

We conclude that $\tilde{y}-\epsilon\alpha\in\U^{\flat}$. This is
a contradiction with the choice of $\tilde{y}$.

Having established that $M\cap\Fun\ne\varnothing$, we get

\begin{align*}
d_{1}\left(y^{*},\U^{\flat}\right) & =\Norm{y^{*}-\tilde{y}}_{1}\\
 & =\max_{f\in[-1,+1]^{\B}}\T f\left(y^{*}-\tilde{y}\right)\\
 & =\max_{f\in[-1,+1]^{\B}}\left(\T fy^{*}-\T f\tilde{y}\right)\\
 & =\max_{f\in\Fun}\left(\T fy^{*}-\T f\tilde{y}\right)\\
 & =\max_{f\in\Fun}\left(\T fy^{*}-c_{f}\T{\mu}\tilde{y}\right)\\
 & =\max_{f\in\Fun}\left(\T fy^{*}-c_{f}\right)
\end{align*}
\end{proof}
The desired concentration bound now follows from the previous lemma
by applying the Azuma-Hoeffding inequality to each $f\in\PSet$ and
using a union bound.
\begin{lem}
\label{lem:simplex-concentration}Suppose that arm $x\in\A$ is selected
$\tau\in\N$ times in a row. Denote the average outcome by $\bar{y}\in\D$
and let $\delta>0$. Then,

\[
\Pr\left[\DY{\bar{y}}{K_{\theta^{*}}(x)^{\flat}}\geq\delta\right]\leq\left(\frac{2e\Abs{\B}}{D_{W}+1}\right)^{D_{W}+1}\exp\left(-\frac{1}{2}\tau\delta^{2}\right)
\]
\end{lem}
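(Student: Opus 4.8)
The plan is to reduce the left-hand side, via Lemma \ref{lem:dist-polytope}, to the maximum of linearly many zero-mean bounded martingale averages, and then conclude by Azuma--Hoeffding and a union bound; this is the simplex-case substitute for Lemma \ref{lem:concentration}. Write $\U := K_{\theta^{*}}(x)$. Since $\D = \Delta\B$, Lemma \ref{lem:l1} says the norm on $\Y = \R^{\B}$ is $\ell_{1}$, so $\DY{\bar{y}}{\U^{\flat}} = d_{1}(\bar{y},\U^{\flat})$. By Assumption \ref{ass:lin} the set $\U^{+} = \U\cap\D$ is nonempty, hence so is $\U^{\flat}\supseteq\U^{+}$; also $\bar{y}\in\D$ gives $\T\mu\bar{y} = 1$. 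Thus Lemma \ref{lem:dist-polytope} applies (with $y^{*} = \bar{y}$) and yields
\[
\DY{\bar{y}}{\U^{\flat}} = d_{1}(\bar{y},\U^{\flat}) = \max_{f\in\Fun}\left(\T f\bar{y} - c_{f}\right),
\]
where $\Fun = \{f\in[-1,+1]^{\B} : \exists c\,\forall y\in\U,\ \T fy = c\,\T\mu y\}$ and $c_{f}$ is the associated constant.

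Next I would identify $\Fun$ as a section of a cube by a low-dimensional subspace. As in the proof of Lemma \ref{lem:dist-polytope}, $\Fun = \U^{\sharp}\cap[-1,+1]^{\B}$ with $\U^{\sharp} = \U^{\bot} + \R\mu$. Because $\U^{\flat}\ne\varnothing$ there is $y_{0}\in\U$ with $\T\mu y_{0} = 1$, so $\mu\notin\U^{\bot}$ and the sum is direct: using that $F_{x\theta^{*}}$ is onto, $\dim\U = \Abs{\B} - D_{W}$, hence $\dim\U^{\sharp} = \dim\U^{\bot} + 1 = D_{W} + 1$. Fix $y_{0}\in\U^{\flat}$; for $f\in\Fun\subseteq\U^{\sharp}$ we have $c_{f} = \T fy_{0}$, so $f\mapsto \T f\bar{y} - c_{f} = \T f(\bar{y} - y_{0})$ is a linear functional on $\U^{\sharp}$ and therefore attains its maximum over the compact polytope $\Fun$ at a vertex. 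Every vertex of $\Fun$ lies on at least $\dim\U^{\sharp} = D_{W}+1$ of the facet hyperplanes $\{f_{a} = \pm 1\}$, hence is pinned down by a choice of $D_{W}+1$ coordinates and their signs; consequently the vertex set $V$ of $\Fun$ satisfies
\[
\Abs{V} \leq \binom{\Abs{\B}}{D_{W}+1}2^{D_{W}+1} \leq \left(\frac{e\Abs{\B}}{D_{W}+1}\right)^{D_{W}+1}2^{D_{W}+1} = \left(\frac{2e\Abs{\B}}{D_{W}+1}\right)^{D_{W}+1}.
\]

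Finally I would apply concentration to each vertex separately. Fix $f\in V$ and let $T$ be the first round of the sequence, so $\T f\bar{y} - c_{f} = \frac{1}{\tau}\sum_{n<\tau}(\T fy_{T+n} - c_{f})$. For every $y\in\U^{\flat}$ we have $\T fy = c_{f}\,\T\mu y = c_{f}$; since the nature policy is compatible with $\theta^{*}$, the conditional expectation of $y_{T+n}$ given the first $T+n$ rounds lies in $\U^{+}\subseteq\U^{\flat}$, so $\T fy_{T+n} - c_{f}$ has zero conditional mean. Moreover $\T fy_{T+n}\in[-1,+1]$ because $\Norm f_{\infty}\leq 1$ and $y_{T+n}\in\Delta\B$, so each increment lies (conditionally) in an interval of length $2$. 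Azuma--Hoeffding gives $\Pr[\T f\bar{y} - c_{f}\geq\delta] \leq \exp(-\tfrac{1}{2}\tau\delta^{2})$, and a union bound over $V$ together with the vertex-maximum observation yields
\[
\Pr\left[\DY{\bar{y}}{\U^{\flat}}\geq\delta\right] \leq \Abs{V}\exp\left(-\tfrac{1}{2}\tau\delta^{2}\right) \leq \left(\frac{2e\Abs{\B}}{D_{W}+1}\right)^{D_{W}+1}\exp\left(-\tfrac{1}{2}\tau\delta^{2}\right),
\]
which is the claim. The main work is the structural step: recognizing via Lemma \ref{lem:dist-polytope} that the relevant distance equals a linear functional maximized over the section of the cube $[-1,+1]^{\B}$ by a $(D_{W}+1)$-dimensional subspace, so that this section has at most $(2e\Abs{\B}/(D_{W}+1))^{D_{W}+1}$ vertices; the concentration bookkeeping is then routine.
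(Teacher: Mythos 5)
Your proof is correct and follows essentially the same route as the paper's: reduce the $\ell_1$ distance to a maximum of linear functionals over the vertices of $\Fun$ via Lemma \ref{lem:dist-polytope}, apply Azuma--Hoeffding to each vertex, take a union bound, and count the vertices of the intersection of the $(D_W+1)$-dimensional subspace $\U^{\bot}+\R\mu$ with the cube. The only (immaterial) difference is in justifying the vertex count: you argue directly that each vertex is pinned down by $D_W+1$ active facets, while the paper counts faces of the cube met by a generic subspace and invokes lower semicontinuity of the vertex count; both yield the same bound.
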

\begin{proof}
Let the polytope $\Fun$ be defined as in Lemma \ref{lem:dist-polytope}
for $\U:=K_{\theta^{*}}(x)$. Denote its vertices $\{f_{i}\}_{i<k}$,
where $k\in\N$ is their number. Let $T$ be the time our sequence
of rounds begins, so that for each $n<\tau$, $y_{T+n}$ is the outcome
of the $n$-th round in the sequence. Then, $y_{T+n}$ is an $\R^{\B}$-valued
random variable whose expected value lies in $\U^{+}$. Therefore,
for any $i<k$, $\T{f_{i}}y_{T+n}$ is a random variable whose expected
value is $c_{f}$ (the latter is defined as in Lemma \ref{lem:dist-polytope}).
Moreover,

\begin{align*}
\Abs{\T fy_{T+n}} & \leq\Norm f_{\infty}\Norm{y_{T+n}}_{1}\\
 & =1\cdot1\\
 & =1
\end{align*}

Using the Azuma-Hoeffding inequality, we get

\[
\Pr\left[\T{f_{i}}\bar{y}-c_{f}\geq\delta\right]\leq\exp\left(-\frac{1}{2}\tau\delta^{2}\right)
\]

It follows that

\begin{align*}
\Pr\left[d_{Y}\left(\bar{y},\U^{\flat}\right)\geq\delta\right] & =\Pr\left[d_{1}\left(\bar{y},\U^{\flat}\right)\geq\delta\right]\\
 & =\Pr\left[\max_{f\in\Fun}\left(\T f\bar{y}-c_{f}\right)\geq\delta\right]\\
 & =\Pr\left[\max_{i<k}\left(\T{f_{i}}\bar{y}-c_{f}\right)\geq\delta\right]\\
 & \leq\sum_{i<k}\Pr\left[\T{f_{i}}\bar{y}-c_{f}\geq\delta\right]\\
 & \leq k\exp\left(-\frac{1}{2}\tau\delta^{2}\right)
\end{align*}

Here, we used Lemma \ref{lem:l1} on the 1st line, Lemma \ref{lem:dist-polytope}
on the 2nd line, and a union bound on the 4th line.

Observe that $\Fun$ is the intersection of the linear subspace $\U^{\bot}\oplus\R\mu$
and the hypercube $[-1,+1]^{\B}$. Here, the sum is direct because
$\U\cap\mu^{-1}(1)\ne\varnothing$ and hence $\mu\not\in\U^{\bot}$.
When a generic $m$-dimensional affine subspace intersects a fixed
polytope in $\R^{D}$, each vertex of the intersection lies in a unique
$D-m$-face of the polytope. Hence, the number of vertices of the
intersection is at most the number of $D-m$-faces of the polytope.
Since the number of vertices is a lower semicontinuous function of
the polytope (see subsection 5.3 in \cite{grunbaum2003convex}), the
same bound holds for \emph{any} affine subspace. We conclude that
$k$ is bounded by the number of faces of $[-1,+1]^{\B}$ of dimension\footnote{It is possible to produce a stronger bound using the upper bound theorem
for convex polytopes. However, since the bound in Theorem \ref{thm:simplex}
only depends on the \emph{logarithm} of this number, the improvement
would be mild (no better than reducing the constant coefficient).}

\begin{align*}
\Abs{\B}-\dim\left(\U^{\bot}\oplus\R\mu\right) & =\Abs{\B}-\left(\Abs{\B}-\dim\U+1\right)\\
 & =\dim\U-1\\
 & =\Abs{\B}-D_{W}-1
\end{align*}

We get

\begin{align*}
k & \leq2^{D_{W}+1}\binom{\Abs B}{D_{W}+1}\\
\\
 & \leq2^{D_{W}+1}\left(\frac{e\Abs B}{D_{W}+1}\right)^{D_{W}+1}\\
\\
 & =\left(\frac{2e\Abs B}{D_{W}+1}\right)^{D_{W}+1}
\end{align*}

It follows that

\[
\Pr\left[d_{Y}\left(\bar{y},\U^{\flat}\right)\geq\delta\right]\leq\left(\frac{2e\Abs B}{D_{W}+1}\right)^{D_{W}+1}\exp\left(-\frac{1}{2}\tau\delta^{2}\right)
\]
\end{proof}
Putting Lemma \ref{lem:simplex-concentration} and Lemma \ref{lem:generic-confidence}
together immediately gives us:
\begin{lem}
\label{lem:simplex-confidence}Assume agent policy $\IUCB^{\eta}$
and \emph{any} nature policy. Then, for any $N\in\N$

\begin{align*}
\Pr\left[\exists k:T_{k}<N\text{ and }\theta^{*}\not\in\C_{k}\right] & \leq\frac{1}{2}\left(\frac{2e\Abs{\B}}{D_{W}+1}\right)^{D_{W}+1}N(N+1)\exp\left(-\frac{\eta^{2}}{2R^{2}}\right)
\end{align*}
\end{lem}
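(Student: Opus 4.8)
The plan is to obtain this as an immediate consequence of Lemma~\ref{lem:generic-confidence}, exactly as Lemma~\ref{lem:confidence} was obtained in the general case, only with the simplex-specific concentration bound of Lemma~\ref{lem:simplex-concentration} in place of Lemma~\ref{lem:concentration}. Concretely, I would set
\[
f(t):=\left(\frac{2e\Abs{\B}}{D_{W}+1}\right)^{D_{W}+1}\exp\left(-\frac{t}{2}\right),
\]
which is continuous, hence admissible as the function $f$ in the hypothesis of Lemma~\ref{lem:generic-confidence}. The point to check is that Lemma~\ref{lem:simplex-concentration} really does deliver a bound of the required template: it states that whenever a fixed arm $x$ is selected $\tau$ times in a row with average outcome $\bar{y}$, one has $\Pr\left[\DY{\bar{y}}{K_{\theta^{*}}(x)^{\flat}}\geq\delta\right]\leq f(\tau\delta^{2})$, and the right-hand side indeed depends on $\tau$ and $\delta$ only through the product $\tau\delta^{2}$, which is precisely what Lemma~\ref{lem:generic-confidence} is designed to accept.

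Having verified this, I would apply Lemma~\ref{lem:generic-confidence} verbatim to conclude
\[
\Pr\left[\exists k:T_{k}<N\text{ and }\theta^{*}\not\in\C_{k}\right]\leq\frac{1}{2}N(N+1)\,f\!\left(\frac{\eta^{2}}{R^{2}}\right),
\]
and then substitute
\[
f\!\left(\frac{\eta^{2}}{R^{2}}\right)=\left(\frac{2e\Abs{\B}}{D_{W}+1}\right)^{D_{W}+1}\exp\left(-\frac{\eta^{2}}{2R^{2}}\right)
\]
to recover the stated inequality. No further estimates are needed.

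I do not expect a genuine obstacle: the substantive work is already packaged into the two lemmas being combined. Lemma~\ref{lem:simplex-concentration} contains the real content (reducing the distance $\DY{\bar{y}}{K_{\theta^{*}}(x)^{\flat}}$ to a maximum of affine functionals via Lemma~\ref{lem:dist-polytope}, counting the vertices of the relevant polytope slice through the semicontinuity fact of \cite{grunbaum2003convex} and the binomial estimate, and applying Azuma--Hoeffding with a union bound), and Lemma~\ref{lem:generic-confidence} contains the bookkeeping over all cycle boundary pairs $0\le i<j\le N$ together with the passage from outcome-space distances to $\XZ$-distances via Lemma~\ref{lem:dz-leq-dy}. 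The only thing worth an explicit sentence in the write-up is that the extracted $f$ is continuous and a function of $\tau\delta^{2}$ alone, both of which are manifest from its exponential form; everything else is a mechanical plug-in.
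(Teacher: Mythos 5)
Your proposal is correct and is exactly the paper's own argument: the paper states that Lemma \ref{lem:simplex-confidence} follows immediately by combining Lemma \ref{lem:simplex-concentration} with Lemma \ref{lem:generic-confidence}, using precisely the function $f(t)=\left(\frac{2e\Abs{\B}}{D_{W}+1}\right)^{D_{W}+1}\exp\left(-\frac{t}{2}\right)$ you identify. Nothing is missing.
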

We now can complete the proof of Theorem \ref{thm:simplex}.
\begin{proof}
[Proof of Theorem \ref{thm:simplex}]We have

\[
\UR=\RgS{\BDF}+\RgS{\ADF}+\RgS{\EDF}+\RgS{\FC}
\]

By Lemma \ref{lem:simplex-confidence},

\[
\Pr\left[\FC\ne\varnothing\right]\leq\frac{1}{2}\left(\frac{2e\Abs{\B}}{D_{W}+1}\right)^{D_{W}+1}N(N+1)\exp\left(-\frac{\eta^{2}}{2R^{2}}\right)
\]

Since $\RgS{\varnothing}=0$ and $\RgS{\FC}\leq CN$, this implies

\begin{align*}
\E{}{\RgS{\FC}} & \leq\frac{1}{2}\left(\frac{2e\Abs{\B}}{D_{W}+1}\right)^{D_{W}+1}N(N+1)\exp\left(-\frac{\eta^{2}}{2R^{2}}\right)\cdot CN\\
 & =\frac{1}{2}C\left(\frac{2e\Abs{\B}}{D_{W}+1}\right)^{D_{W}+1}N^{2}(N+1)\exp\left(-\frac{\eta^{2}}{2R^{2}}\right)
\end{align*}

Combining this with Lemma \ref{lem:regret-abc}, we get,

\begin{align*}
\E{}{\UR} & \leq & \left(S^{-1}+1\right)D_{Z}\left(36D_{Z}+8\right)N\delta\\
 &  & +8\eta\left(S^{-1}+1\right)(D_{Z}+1)\sqrt{\gamma D_{Z}^{2}\ln\frac{D_{Z}R}{\delta}\cdot N}\\
 &  & +\gamma CD_{Z}^{2}\ln\frac{D_{Z}R}{\delta}\\
 &  & +\frac{1}{2}C\left(\frac{2e\Abs{\B}}{D_{W}+1}\right)^{D_{W}+1}N^{2}(N+1)\exp\left(-\frac{\eta^{2}}{2R^{2}}\right)
\end{align*}

Since this holds for any nature policy, this is a bound on $\Reg{\theta^{*}}{\IUCB^{\eta}}N$.
\end{proof}

\section{\label{sec:r}Some Bounds on $R$}

\subsection{$K(x)$ of Codimension 1}

We start with a simple fact about norms on dual spaces.
\begin{lem}
\label{lem:min-dual}Let $\X$ be a finite-dimensional normed space.
Then, for any $x\in\X$:

\[
\min_{\alpha\in\X^{\star}:\alpha(x)=1}\Norm{\alpha}=\Norm x^{-1}
\]
\end{lem}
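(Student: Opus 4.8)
The plan is to prove the two inequalities separately. For the inequality $\min_{\alpha:\alpha(x)=1}\Norm{\alpha}\geq\Norm{x}^{-1}$, I would fix any $\alpha\in\X^{\star}$ with $\alpha(x)=1$ and simply unwind the definition of the operator (dual) norm: $1=\alpha(x)\leq\Norm{\alpha}\cdot\Norm{x}$, so $\Norm{\alpha}\geq\Norm{x}^{-1}$. Taking the infimum over such $\alpha$ gives the bound. The degenerate case $x=0$ has no admissible $\alpha$ at all, so implicitly $x\neq0$ and $\Norm{x}>0$; I would either note this or restrict to $x\neq0$.

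For the reverse inequality, the plan is to exhibit an $\alpha$ achieving the value $\Norm{x}^{-1}$. Define $\alpha_{0}$ on the one-dimensional subspace $\R x$ by $\alpha_{0}(tx):=t$; this is a linear functional on $\R x$ with $\alpha_{0}(x)=1$ and, for $t\neq 0$, $|\alpha_0(tx)|=|t|=\Norm{tx}/\Norm{x}$, so its norm as a functional on $\R x$ is exactly $\Norm{x}^{-1}$. By the Hahn--Banach theorem (in finite dimensions this is elementary and needs no completeness hypotheses) extend $\alpha_{0}$ to a functional $\alpha\in\X^{\star}$ with the same norm $\Norm{\alpha}=\Norm{x}^{-1}$. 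Since $\alpha$ still satisfies $\alpha(x)=\alpha_0(x)=1$, it is admissible in the minimization, so the minimum is at most $\Norm{x}^{-1}$. Combining with the first inequality gives equality, and in particular the infimum is attained, justifying writing $\min$ rather than $\inf$.

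The main (and only mild) obstacle is invoking Hahn--Banach correctly: one must check that the functional on $\R x$ whose norm we want to preserve really has norm $\Norm{x}^{-1}$, which is the short computation above, and then apply norm-preserving extension. Since $\X$ is finite-dimensional there is no subtlety about reflexivity or the existence of extensions, so this is routine; no deeper ingredient is required.
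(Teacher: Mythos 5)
Your proposal is correct and matches the paper's own proof essentially step for step: the lower bound via $1=\alpha(x)\leq\Norm{\alpha}\cdot\Norm{x}$, and the upper bound by defining the functional on $\R x$ with norm $\Norm{x}^{-1}$ and extending it norm-preservingly via Hahn--Banach. Your extra remark about the degenerate case $x=0$ is a small point the paper glosses over, but otherwise there is nothing to add.
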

\begin{proof}
Let $\alpha\in\X^{\star}$ be any s.t. $\alpha(x)=1$. Then, $\Norm{\alpha}\cdot\Norm x\geq1$,
implying that $\Norm{\alpha}\geq\Norm x^{-1}$ and hence

\[
\min_{\alpha\in\X^{\star}:\alpha(x)=1}\Norm{\alpha}\geq\Norm x^{-1}
\]

On the other hand, consider $\X_{0}:=\R x$. Consider the unique $\alpha_{0}\in\X_{0}^{\star}$
s.t. $\alpha_{0}(x)=1$. Then, $\Norm{\alpha_{0}}=\Norm x^{-1}$.
By the Hanh-Banach theorem, this implies there is $\alpha\in\X^{\star}$
s.t. $\Norm{\alpha}=\Norm x^{-1}$ and $\alpha|_{\X_{0}}=\alpha_{0}$,
the latter implying that $\alpha(x)=1$. Therefore,

\[
\min_{\alpha\in\X^{\star}:\alpha(x)=1}\Norm{\alpha}\leq\Norm x^{-1}
\]
\end{proof}
The norm of an operator bounds how much applying it can increase the
norm of a vector. Similarly, the norm of the inverse operator bounds
how much applying the original operator can \emph{decrease} the norm
of a vector.
\begin{lem}
\label{lem:inv-op-norm}Let $\X$ and $\X'$ be finite-dimensional
normed spaces, $x\in\X$ and $A:\X\rightarrow\X'$ an invertible operator.
Then,

\[
\Norm{Ax}\geq\frac{\Norm x}{\Norm{A^{-1}}}
\]
\end{lem}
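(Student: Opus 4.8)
The plan is to derive this directly from the defining property of the operator norm applied to $A^{-1}$. Recall that for any linear operator $B$ between normed spaces and any vector $y$ in its domain, $\Norm{By}\leq\Norm B\cdot\Norm y$. I would apply this with $B:=A^{-1}$ and $y:=Ax$, which gives $\Norm{A^{-1}(Ax)}\leq\Norm{A^{-1}}\cdot\Norm{Ax}$, i.e. $\Norm x\leq\Norm{A^{-1}}\cdot\Norm{Ax}$.

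It then remains to divide both sides by $\Norm{A^{-1}}$. For this I need $\Norm{A^{-1}}>0$: if $\X=\{0\}$ the statement is trivial (both sides are $0$), and otherwise $A^{-1}$ is a nonzero operator (being invertible), so its norm is strictly positive and the division is legitimate. Rearranging yields $\Norm{Ax}\geq\Norm x/\Norm{A^{-1}}$, which is exactly the claim.

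There is essentially no obstacle here; the only thing to be slightly careful about is the degenerate zero-dimensional case, which is handled trivially as above. This is a routine one-line consequence of the submultiplicativity of the operator norm, included so that it can be invoked when bounding $R$ in the hyperplane case (cf.\ Proposition~\ref{prop:hyperplane-r}).
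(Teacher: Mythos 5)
Your proof is correct and is essentially identical to the paper's: both write $\Norm x=\Norm{A^{-1}Ax}\leq\Norm{A^{-1}}\cdot\Norm{Ax}$ and divide by $\Norm{A^{-1}}$. The extra remark about the zero-dimensional case is fine but not needed.
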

\begin{proof}
We have,

\begin{align*}
\Norm x & =\Norm{A^{-1}Ax}\\
 & \leq\Norm{A^{-1}}\cdot\Norm{Ax}
\end{align*}

Dividing both sides of the inequality by $\Norm{A^{-1}}$ we get the
desired result. 
\end{proof}
Proposition \ref{prop:hyperplane-r} now follows from a straightforward
calculation.
\begin{proof}
[Proof of Proposition~\ref{prop:hyperplane-r}]First, let's examine
the standard norm $\Norm{\cdot}_{\W}$ on $\W$. Since $\W\cong\R$,
it is enough to compute $\Norm 1_{\W}$:

\begin{align*}
\Norm 1_{\W} & =\max_{\SUBSTACK{x\in\A}{\theta\in\H}}\min_{y\in\Y:F_{x\theta}y=1}\lVert y\rVert\\
 & =\max_{\SUBSTACK{x\in\A}{\theta\in\H}}\min_{y\in\Y:\left(A_{x}\theta\right)y=1}\lVert y\rVert\\
 & =\max_{\SUBSTACK{x\in\A}{\theta\in\H}}\Norm{A_{x}\theta}^{-1}\\
 & =\frac{1}{\min_{\SUBSTACK{x\in\A}{\theta\in\H}}\Norm{A_{x}\theta}}
\end{align*}

Here, we used the definition of $\AO x$ followed by Lemma \ref{lem:min-dual}
applied to $\X:=\Y^{\star}$.

Second, let's examine the standard norm on $\Z$. We have,

\begin{align*}
\Norm z & =\max_{x\in\A}\Norm{F_{xz}}\\
 & =\max_{x\in\A}\left(\Norm{A_{x}z}\cdot\Norm 1_{\W}\right)\\
 & =\frac{\max_{x\in\A}\Norm{A_{x}z}}{\min_{\SUBSTACK{x\in\A}{\theta\in\H}}\Norm{A_{x}\theta}}
\end{align*}

Here, we observed that the induced norm on the space of operators
from $\Y$ to $\W$ is the same as the induced norm on $\Y^{\star}$
multiplied by the scalar $\Norm 1_{\W}$.

Finally, we get,

\begin{align*}
R(\H,F) & =\max_{\theta\in\H}\Norm{\theta}\\
 & =\frac{\max_{\SUBSTACK{x\in\A}{\theta\in\H}}\Norm{A_{x}\theta}}{\min_{\SUBSTACK{x\in\A}{\theta\in\H}}\Norm{A_{x}\theta}}\\
 & \leq\frac{\max_{\SUBSTACK{x\in\A}{\theta\in\H}}\left(\Norm{\AO x}_{0}\cdot\Norm{\theta}_{0}\right)}{\min_{\SUBSTACK{x\in\A}{\theta\in\H}}\left(\Norm{\AO x^{-1}}_{0}^{-1}\cdot\Norm{\theta}_{0}\right)}\\
 & \leq\frac{\left(\max_{x\in\A}\Norm{\AO x}_{0}\right)\left(\max_{\theta\in\H}\Norm{\theta}_{0}\right)}{\left(\min_{x\in\A}\Norm{\AO x^{-1}}_{0}^{-1}\right)\left(\min_{\theta\in\H}\Norm{\theta}_{0}\right)}\\
 & =\frac{\max_{\theta\in\H}\Norm{\theta}_{0}}{\min_{\theta\in\H}\Norm{\theta}_{0}}\left(\max_{x\in\A}\NZ{\AO x}\right)\left(\max_{a\in\A}\NZ{\AO x^{-1}}\right)
\end{align*}

Here we used Lemma \ref{lem:inv-op-norm} on the third line.
\end{proof}
Proposition \ref{prop:hyperplane-rn} follows easily from Proposition
\ref{prop:hyperplane-r}.
\begin{proof}
[Proof of Proposition~\ref{prop:hyperplane-rn}]Set $\chi$ and $\lambda$
to be

\[
\chi(\theta):=\frac{1}{\Norm{\theta}_{0}}
\]

\[
\lambda(x):=\Norm{\AO x^{-1}}_{0}
\]

Notice that the bandit specified by $\H^{\chi}$ and $F^{\lambda}$
is of a form to which Proposition \ref{prop:hyperplane-r} is applicable,
with the operators $\AO x$ replaced by $\tilde{A}_{x}:=\lambda(x)A_{x}$.
Applying Proposition \ref{prop:hyperplane-r} we get

\begin{align*}
R\left(\H^{\chi},F^{\lambda}\right) & =\frac{\max_{\theta\in\H^{\chi}}\Norm{\theta}_{0}}{\min_{\theta\in\H^{\chi}}\Norm{\theta}_{0}}\left(\max_{x\in\A}\NZ{\tilde{A}_{x}}\right)\left(\max_{a\in\A}\NZ{\tilde{A}_{x}^{-1}}\right)\\
 & =\frac{\max_{\theta\in\H}\chi(\theta)\Norm{\theta}_{0}}{\min_{\theta\in\H}\chi(\theta)\Norm{\theta}_{0}}\left(\max_{x\in\A}\lambda(x)\NZ{\AO x}\right)\left(\max_{a\in\A}\lambda(x)^{-1}\NZ{\AO x^{-1}}\right)\\
 & =1\cdot\left(\max_{x\in\A}\Norm{\AO x^{-1}}_{0}\cdot\NZ{\AO x}\right)\left(\max_{a\in\A}\Norm{\AO x^{-1}}_{0}^{-1}\cdot\NZ{\AO x^{-1}}\right)\\
 & =\max_{x\in\A}\Norm{\AO x^{-1}}_{0}\cdot\NZ{\AO x}
\end{align*}
\end{proof}

\subsection{$K(x)$ of Dimension 1}
\begin{proof}
[Proof of Proposition \ref{prop:point-r}]Since $\W$ is a subspace
of $\Y$, it is equipped with two norms: the standard norm $\Norm{\cdot}_{\W}$
on $\W$ and the norm $\Norm{\cdot}_{\Y}$ induced by the norm on
$\Y$. Observe that for any $x\in\A$, $\theta\in\H$ and $w\in\W$,

\begin{align*}
F_{x\theta}w & =\psi(\theta)w-\mu(w)f(x,\theta)\\
 & =1\cdot w-0\cdot f(x,\theta)\\
 & =w
\end{align*}

It follows that

\begin{align*}
\Norm w_{\W} & =\max_{\SUBSTACK{x\in\A}{\theta\in\H}}\min_{y\in\Y:F_{x\theta}y=w}\lVert y\rVert_{\Y}\\
 & \leq\max_{\SUBSTACK{x\in\A}{\theta\in\H}}\Norm w_{\Y}\\
 & =\Norm w_{\Y}
\end{align*}

We get

\begin{align*}
R(\H,F) & =\max_{\theta\in\H}\Norm{\theta}\\
 & =\max_{\SUBSTACK{x\in\A}{\theta\in\H}}\Norm{F_{x\theta}}\\
 & =\max_{\SUBSTACK{x\in\A}{\theta\in\H}}\max_{y\in\Y:\Norm y_{\Y}=1}\Norm{F_{x\theta}y}_{\W}\\
 & \leq\max_{\SUBSTACK{x\in\A}{\theta\in\H}}\max_{y\in\Y:\Norm y_{\Y}=1}\Norm{F_{x\theta}y}_{\Y}\\
 & =\max_{\SUBSTACK{x\in\A}{\theta\in\H}}\max_{y\in\Y:\Norm y_{\Y}=1}\Norm{\psi(\theta)y-\mu(y)f(x,\theta)}_{\Y}\\
 & =\max_{\SUBSTACK{x\in\A}{\theta\in\H}}\max_{y\in\Y:\Norm y_{\Y}=1}\Norm{y-\mu(y)f(x,\theta)}_{\Y}\\
 & \leq\max_{\SUBSTACK{x\in\A}{\theta\in\H}}\max_{y\in\Y:\Norm y_{\Y}=1}\left(\Norm y_{\Y}+\Abs{\mu(y)}\cdot\Norm{f(x,\theta)}_{\Y}\right)\\
 & =\max_{\SUBSTACK{x\in\A}{\theta\in\H}}\max_{y\in\Y:\Norm y_{\Y}=1}\left(\Norm y_{\Y}+\Abs{\mu(y)}\right)\\
 & \leq\max_{\SUBSTACK{x\in\A}{\theta\in\H}}\max_{y\in\Y:\Norm y_{\Y}=1}\left(\Norm y_{\Y}+\Norm y_{\Y}\right)\\
 & =2
\end{align*}
\end{proof}

\subsection{The Dani-Hayes-Kakade Bandit}

Now, let's turn to Proposition \ref{prop:DHK-r}. First, let's describe
$\H$ for that bandit.
\begin{lem}
\label{lem:DHK-h}In the setting of Proposition \ref{prop:DHK-r},
we have

\[
\H=\SC{\theta\in\R^{2n}}{\sum_{i=0}^{n-1}\sqrt{\theta_{2i}^{2}+\theta_{2i+1}^{2}}\leq1}
\]
\end{lem}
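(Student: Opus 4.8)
The plan is to reduce the description of $\H$ to a block-separable optimization over the arms. Recall from Example \ref{ex:linear-bandits} that
\[
\H=\SC{\theta\in\R^{2n}}{\forall x\in\A:\T x\theta\in[-1,+1]},
\]
and that in the setting of Theorem \ref{thm:DHK} the arm set is $\A=\SC{x\in\R^{2n}}{\forall i<n:x_{2i}^{2}+x_{2i+1}^{2}=1}$, i.e. $\A$ consists of those $x$ whose $i$-th coordinate pair $(x_{2i},x_{2i+1})$ is a unit vector in $\R^{2}$ for every $i<n$. Writing $\T x\theta=\sum_{i<n}(x_{2i}\theta_{2i}+x_{2i+1}\theta_{2i+1})$, the key observation is that the blocks decouple, so the extremal values of $\T x\theta$ over $\A$ may be computed block by block, and it suffices to show that $\max_{x\in\A}\Abs{\T x\theta}=\sum_{i<n}\sqrt{\theta_{2i}^{2}+\theta_{2i+1}^{2}}$.

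For the inclusion of $\SC{\theta}{\sum_{i<n}\sqrt{\theta_{2i}^{2}+\theta_{2i+1}^{2}}\leq1}$ into $\H$, I would take any such $\theta$ and any $x\in\A$, and bound
\[
\Abs{\T x\theta}\leq\sum_{i<n}\Abs{x_{2i}\theta_{2i}+x_{2i+1}\theta_{2i+1}}\leq\sum_{i<n}\sqrt{x_{2i}^{2}+x_{2i+1}^{2}}\cdot\sqrt{\theta_{2i}^{2}+\theta_{2i+1}^{2}}=\sum_{i<n}\sqrt{\theta_{2i}^{2}+\theta_{2i+1}^{2}}\leq1,
\]
using the Cauchy--Schwarz inequality on each coordinate pair together with the defining constraint of $\A$; hence $\theta\in\H$. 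For the reverse inclusion, given $\theta\in\H$ I would exhibit one $x^{*}\in\A$ attaining the bound: for each $i<n$ with $(\theta_{2i},\theta_{2i+1})\neq 0$ set $(x^{*}_{2i},x^{*}_{2i+1}):=(\theta_{2i},\theta_{2i+1})/\sqrt{\theta_{2i}^{2}+\theta_{2i+1}^{2}}$, and for the remaining indices $i$ let $(x^{*}_{2i},x^{*}_{2i+1})$ be an arbitrary unit vector in $\R^2$. Then $x^{*}\in\A$ and $\T{x^{*}}\theta=\sum_{i<n}\sqrt{\theta_{2i}^{2}+\theta_{2i+1}^{2}}$, so $\theta\in\H$ forces $\sum_{i<n}\sqrt{\theta_{2i}^{2}+\theta_{2i+1}^{2}}\leq1$. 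Combining the two inclusions yields the claimed description of $\H$.

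There is essentially no hard step here; the only point requiring a little care is that the per-block optima are achieved \emph{simultaneously} by a single $x^{*}\in\A$, which holds precisely because the constraint defining $\A$ is imposed independently on each coordinate pair. One may also note in passing that $\A$ spans $\R^{2n}$ (it contains, for instance, all vectors whose blocks are standard basis vectors of $\R^2$), so the standing hypothesis of Example \ref{ex:linear-bandits} is satisfied, though this is not needed for the computation of $\H$ itself.
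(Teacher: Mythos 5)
Your proposal is correct and follows essentially the same argument as the paper's proof: the forward inclusion via the triangle inequality and Cauchy--Schwarz applied blockwise, and the reverse inclusion by exhibiting the arm whose $i$-th block is the normalized $i$-th block of $\theta$ (with an arbitrary unit vector when that block vanishes). No issues.
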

\begin{proof}
Consider any $\theta\in\R^{2n}$ s.t. 

\[
\sum_{i=0}^{n-1}\sqrt{\theta_{2i}^{2}+\theta_{2i+1}^{2}}\leq1\,
\]

For any $x\in\A$, we have

\begin{align*}
\Abs{\T x\theta} & =\Abs{\sum_{k=0}^{2n-1}x_{k}\theta_{k}}\\
 & =\Abs{\sum_{i=0}^{n-1}\left(x_{2i}\theta_{2i}+x_{2i+1}\theta_{2i+1}\right)}\\
 & \leq\sum_{i=0}^{n-1}\Abs{x_{2i}\theta_{2i}+x_{2i+1}\theta_{2i+1}}\\
 & \leq\sum_{i=0}^{n-1}\left(\sqrt{x_{2i}^{2}+x_{2i+1}^{2}}\cdot\sqrt{\theta_{2i}^{2}+\theta_{2i+1}^{2}}\right)\\
 & =\sum_{i=0}^{n-1}\sqrt{\theta_{2i}^{2}+\theta_{2i+1}^{2}}\\
 & \leq1
\end{align*}

Therefore, $\theta\in\H$. 

Conversly, consider any $\theta\in\H$. Let $x\in\R^{2n}$ be s.t.
for any $i<n$,

\[
\Vxx{x_{2i}}{x_{2i+1}}=\frac{1}{\sqrt{\theta_{2i}^{2}+\theta_{2i+1}^{2}}}\Vxx{\theta_{2i}}{\theta_{2i+1}}
\]

Here, if the denominator vanishes, we choose an arbitrary vector of
unit norm.

Clearly, $x\in\A$. We get

\begin{align*}
1 & \geq\Abs{\T x\theta}\\
 & =\Abs{\sum_{i=0}^{n-1}\left(x_{2i}\theta_{2i}+x_{2i+1}\theta_{2i+1}\right)}\\
 & =\Abs{\sum_{i=0}^{n-1}\frac{\theta_{2i}^{2}+\theta_{2i+1}^{2}}{\sqrt{\theta_{2i}^{2}+\theta_{2i+1}^{2}}}}\\
 & =\sum_{i=0}^{n-1}\sqrt{\theta_{2i}^{2}+\theta_{2i+1}^{2}}
\end{align*}

Here, if the denominator vanishes in some term on the 3rd line then
the 4th line still holds because this term vanishes on both lines.
\end{proof}
Now we can calculate $R$:
\begin{proof}
[Proof of Proposition \ref{prop:DHK-r}]It is easy to see that the
absolute convex hull of $\D$ is $[-1,+1]^{2}$, and therefore the
norm on $\Y$ is $\ell_{\infty}$\@. Now, let's examine the standard
norm $\Norm{\cdot}_{\W}$ on $\W$. Since $\W\cong\R$, it is enough
to compute $\Norm 1_{\W}$:

\begin{align*}
\Norm 1_{\W} & =\max_{\SUBSTACK{x\in\A}{\theta\in\H}}\min_{y\in\R^{2}:F_{x\theta}y=1}\lVert y\rVert_{\infty}\\
 & =\max_{\SUBSTACK{x\in\A}{\theta\in\H}}\min\SC{\lVert y\rVert_{\infty}}{y\in\R^{2}:\left(\T x\theta\right)y_{0}-y_{1}=1}\\
 & =\max_{\SUBSTACK{x\in\A}{\theta\in\H}}\Norm{\Vxx{\T x\theta}{-1}}_{1}^{-1}\\
 & =\max_{\SUBSTACK{x\in\A}{\theta\in\H}}\frac{1}{\Abs{\T x\theta}+1}
\end{align*}

Here, we used Lemma \ref{lem:min-dual} on the 3rd line.

Taking $\theta=\boldsymbol{0}$, we get $\Norm 1_{\W}=1$. It follows
that,

\begin{align*}
R & =\max_{\theta\in\H}\Norm{\theta}\\
 & =\max_{\SUBSTACK{x\in\A}{\theta\in\H}}\Norm{F_{x\theta}}\\
 & =\max_{\SUBSTACK{x\in\A}{\theta\in\H}}\max_{y\in\R^{2}:\Norm y_{\infty}=1}F_{x\theta}y\cdot\Norm 1_{\W}\\
 & =\max_{\SUBSTACK{x\in\A}{\theta\in\H}}\max_{y\in\R^{2}:\Norm y_{\infty}=1}\left(\left(\T x\theta\right)y_{0}-y_{1}\right)\\
 & =\max_{\SUBSTACK{x\in\A}{\theta\in\H}}\left(\Abs{\T x\theta}+1\right)
\end{align*}

By definition of $\H$, $\Abs{\T x\theta}\leq1$. Conversly, we can
take $\bar{\theta}:=\frac{1}{n}x$. We then get,

\begin{align*}
\sum_{i=0}^{n-1}\sqrt{\bar{\theta}_{2i}^{2}+\bar{\theta}_{2i+1}^{2}} & =\frac{1}{n}\sum_{i=0}^{n-1}\sqrt{x_{2i}^{2}+x_{2i+1}^{2}}\\
 & =\frac{1}{n}\sum_{i=0}^{n-1}1\\
 & =1
\end{align*}

By Lemma \ref{lem:DHK-h}, this implies $\bar{\theta}\in\H$. Moreover,

\begin{align*}
\T x\bar{\theta} & =\frac{1}{n}\T xx\\
 & =\frac{1}{n}\sum_{k=0}^{2n-1}x_{k}^{2}\\
 & =\frac{1}{n}\sum_{i=0}^{n-1}\left(x_{2i}^{2}+x_{2i+1}^{2}\right)\\
 & =\frac{1}{n}\sum_{i=0}^{n-1}1\\
 & =1
\end{align*}

We conclude that the maximum of $\Abs{\T x\theta}$ is $1$ and hence,
$R=2$.
\end{proof}

\section{\label{sec:s}Some Bounds on $S$}

\subsection{Sine in Euclidean Space}
\begin{proof}
[Proof of Proposition \ref{prop:sin}]Without loss of generality,
assume $\Af$ is a vector space and $\SubAf$, $\SubC$ are linear
subspaces. Let $P:\Af\rightarrow\SubC^{\bot}$ and $Q:\Af\rightarrow(\SubAf\cap\SubC)^{\bot}$
be the orthgonal projections. Notice that $\SubC^{\bot}\subseteq(\SubAf\cap\SubC)^{\bot}$
and hence $P=PQ$. For any $u\in\SubAf\setminus\SubC$, we have

\begin{align*}
\frac{d\left(u,\SubC\right)}{d\left(u,\SubAf\cap\SubC\right)} & =\frac{\Norm{Pu}}{\Norm{Qu}}\\
 & =\frac{\Norm{PQu}}{\Norm{Qu}}\\
 & =\Norm{P\cdot\frac{Qu}{\Norm{Qu}}}
\end{align*}

It is easy to see that $Qu\in\X$. Conversly, for any $x\in\X$ s.t.
$\Norm x=1$, we can take $u:=x$ and then

\begin{align*}
\frac{Qu}{\Norm{Qu}} & =x
\end{align*}

Hence,

\[
\sin\left(\SubAf,\SubC\right)=\min_{x\in\X:\Norm x=1}\Norm{Px}
\]

Moreover, for any $x\in\X$, we have

\[
\Norm x^{2}=\Norm{Px}^{2}+\Norm{\left(I-P\right)x}^{2}
\]

Therefore, if $\Norm x=1$ then

\begin{align*}
\Norm{Px} & =\sqrt{1-\Norm{\left(I-P\right)x}^{2}}
\end{align*}

Assume $x\not\in\SubC^{\bot}$ and define $y^{*}\in\Y$ by

\[
y^{*}=\frac{\left(I-P\right)x}{\Norm{\left(I-P\right)x}}
\]

Notice that $\Norm{y^{*}}=1$. We have,

\begin{align*}
\Norm{\left(I-P\right)x}^{2} & =(I-P)x\cdot(I-P)x\\
 & =x\cdot(I-P)x\\
 & =x\cdot\Norm{(I-P)x}y^{*}\\
 & =\Norm{(I-P)x}\left(x\cdot y^{*}\right)
\end{align*}

Dividing both sides by $\Norm{(I-P)x}$, we get

\[
\Norm{(I-P)x}=x\cdot y^{*}
\]

Conversly, consider any $y\in\Y$ s.t. $\Norm y=1$. We have,

\begin{align*}
\Abs{x\cdot y} & =\Abs{x\cdot\left(I-P\right)y}\\
 & =\Abs{\left(I-P\right)x\cdot y}\\
 & \leq\Norm{(I-P)x}
\end{align*}

Therefore,

\[
\Norm{(I-P)x}=\max_{y\in\Y:\Norm y=1}\Abs{x\cdot y}
\]

This holds even when $x\in\SubC^{\bot},$ because then both sides
vanish. It follows that

\begin{align*}
\Norm{Px} & =\sqrt{1-\left(\max_{y\in\Y:\Norm y=1}\Abs{x\cdot y}\right)^{2}}\\
 & =\min_{y\in\Y:\Norm y=1}\sqrt{1-\left(x\cdot y\right)^{2}}
\end{align*}

Putting everything together, we get

\[
\sin\left(\SubAf,\SubC\right)=\min_{\SUBSTACK{x\in\X:\Norm x=1}{y\in\Y:\Norm y=1}}\sqrt{1-(x\cdot y)^{2}}
\]
\end{proof}

\subsection{Bounding Sine via Supporting Hyperplanes}

Now, we start working towards the proof of Proposition \ref{prop:sin-sh}.
We start with a lemma which can be regarded as a generalization of
the fact that, in Euclidean geometry, the nearest point in a smooth
body to some fixed point will have a tangent hyperplane orthogonal
to the line connecting the two points.

In the following, whenever $\pi$ is a hyperplane, we will denote
by $\pi^{+}$ and $\pi^{-}$ the two closed half-spaces bounded by
$\pi$ (we can arbitrarily decide which is ``positive'' and which
is ``negative'').
\begin{lem}
\label{lem:normal-hyperplane}Let $\X$ be a finite-dimensional normed
vector space, $\D\subseteq\X$ convex closed and $v\in\X\setminus\D$.
Let

\[
u^{*}:=\Argmin{u\in\D}\Norm{v-u}
\]

Then, there exists a hyperplane $\pi\subset\X$ s.t. all of the following
conditions hold:
\begin{itemize}
\item $u^{*}\in\pi$
\item $\D\subseteq\pi^{+}$ (i.e. $\pi\in\SH{u^{*}}$)
\item $v\in\pi^{-}$
\item $d\left(v,\pi\right)=d\left(v,\D\right)$
\end{itemize}
\end{lem}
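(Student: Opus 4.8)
The plan is to separate an \emph{open} ball centered at $v$ from $\D$ by a hyperplane and then check that this hyperplane has all four required properties. Set $\rho:=d(v,\D)=\Norm{v-u^{*}}$; since $\D$ is closed and $v\notin\D$ we have $\rho>0$ (and in finite dimensions the minimum defining $u^{*}$ is attained, as implicitly assumed by the notation). Consider $A:=\SC{x\in\X}{\Norm{v-x}<\rho}$, which is nonempty (it contains $v$), open and convex, and is disjoint from $\D$: a point $u\in\D$ with $\Norm{v-u}<\rho$ would contradict the minimality of $u^{*}$.

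I would then apply the hyperplane separation theorem to the open convex set $A$ and the disjoint convex set $\D$: there exist $\ell\in\X^{\star}\setminus0$ and $c\in\R$ with $\ell(x)<c$ for every $x\in A$ and $\ell(u)\geq c$ for every $u\in\D$. Put $\pi:=\ell^{-1}(c)$, $\pi^{+}:=\ell^{-1}([c,\infty))$, $\pi^{-}:=\ell^{-1}((-\infty,c])$. Then $\D\subseteq\pi^{+}$ is immediate. For $u^{*}\in\pi$: on one hand $u^{*}\in\D$ gives $\ell(u^{*})\geq c$; on the other hand $u^{*}\in\overline{A}$, so $\ell(u^{*})\leq c$; hence $\ell(u^{*})=c$. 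Consequently $\pi\in\SH{u^{*}}$. Since $v\in A$ we get $\ell(v)<c$, so $v\in\pi^{-}$.

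Finally, for the distance condition: $d(v,\pi)\leq\Norm{v-u^{*}}=\rho$ because $u^{*}\in\pi$, while any $x\in\pi$ satisfies $\ell(x)=c$ and therefore $x\notin A$, i.e. $\Norm{v-x}\geq\rho$, which upon taking the infimum over $x\in\pi$ gives $d(v,\pi)\geq\rho$; thus $d(v,\pi)=\rho=d(v,\D)$. I do not expect a real obstacle in this argument; the one point that needs care is separating the \emph{open} ball rather than the closed ball from $\D$, so that the two sets are genuinely disjoint and the separation theorem (with one set open) applies — the closed ball meets $\D$ precisely at $u^{*}$, so a naive application of separation to it would not yield the strictness we use.
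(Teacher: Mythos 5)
Your proof is correct and follows essentially the same route as the paper's: separate the open ball of radius $d(v,\D)$ centered at $v$ from $\D$ by a hyperplane, then verify the four properties using that $u^{*}$ lies in both $\D$ and the closure of the ball, and that the openness of the ball forces $d(v,\pi)\geq d(v,\D)$. The only difference is presentational (you make the functional $\ell$ and constant $c$ explicit), and your closing remark about why the \emph{open} ball is needed matches the paper's implicit reasoning.
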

\begin{proof}
Let $B$ be the open ball with center at $v$ and radius $d\left(v,\D\right)$.
Clearly, $B\cap\D=\varnothing$. Hence, we can apply the hyperplane
separation theorem to $B$ and $\D$ to obtain a hyperplane $\pi\subset\X$
s.t. $B\subseteq\pi^{-}$ and $\D\subseteq\pi^{+}$. Since $u^{*}$
is both in $\D$ and in the closure of $B$, we get $u^{*}\in\pi$.
Moreover, $v\in B$ and hence $v\in\pi^{-}$.

We have,

\begin{align*}
d\left(v,\pi\right) & \leq\Norm{v-u^{*}}\\
 & =d\left(v,\D\right)
\end{align*}

On the other hand, since $B\subseteq\pi^{-}$ and $B$ is open, we
have $B\cap\pi=\varnothing$. This implies

\[
d\left(v,\pi\right)\geq d\left(v,\D\right)
\]

We conclude

\[
d\left(v,\pi\right)=d\left(v,\D\right)
\]
\end{proof}
The following lemma says that, roughly speaking, any affine subspace
that intersects a convex body has to either intersect its interior
or be tangent to it.
\begin{lem}
\label{lem:interior-hyperplane}Let $\X$ be a finite-dimensional
normed vector space, $\D\subseteq\X$ convex closed and $\U_{0}\subseteq\X$
a linear subspace s.t. $\U_{0}\cap\D\ne\varnothing$. Assume that
the affine hull of $\D$ is $\X$. Let $u\in\U_{0}\cap\D$ be s.t.
for any $\pi\in\SH u$, $\U_{0}\not\subseteq\pi$. Then, $\U_{0}$
intersects the interior of $\D$.
\end{lem}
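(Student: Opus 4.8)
The plan is a proof by contradiction: assuming $\U_{0}$ misses the interior of $\D$, I will separate $\U_{0}$ from $\operatorname{int}\D$ by a hyperplane and then, using that $u$ itself lies in $\U_{0}$, show this hyperplane may be taken to pass through $u$ and to contain $\U_{0}$; since it is automatically a supporting hyperplane of $\D$ at $u$, this contradicts the hypothesis. Before starting I would record two elementary facts from finite-dimensional convex geometry: because the affine hull of $\D$ is all of $\X$, the convex set $\D$ has nonempty interior; and because $\D$ is moreover closed and convex, $\D=\overline{\operatorname{int}\D}$. In particular $u\in\overline{\operatorname{int}\D}$.

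Now suppose, for contradiction, that $\U_{0}\cap\operatorname{int}\D=\varnothing$. Then $\operatorname{int}\D$ is a nonempty open convex set disjoint from the convex set $\U_{0}$, so the separating hyperplane theorem, in the strict form available when one of the sets is open, provides $\alpha\in\X^{\star}\setminus\{0\}$ and $c\in\R$ with $\alpha(y)>c$ for all $y\in\operatorname{int}\D$ and $\alpha(x)\le c$ for all $x\in\U_{0}$. Since $\alpha$ is bounded above on the linear subspace $\U_{0}$, it vanishes identically on $\U_{0}$; evaluating at $0\in\U_{0}$ gives $c\ge 0$. On the other hand, $u\in\U_{0}$ gives $\alpha(u)=0$, while $u\in\overline{\operatorname{int}\D}\subseteq\{\,x:\alpha(x)\ge c\,\}$ (a closed set) gives $\alpha(u)\ge c$; combined with $c\ge 0$ this forces $c=0$.

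Finally, set $\pi:=\{\,x\in\X:\alpha(x)=0\,\}$, which is a genuine hyperplane because $\alpha\ne 0$. It passes through $u$ since $\alpha(u)=0$, it contains $\U_{0}$ since $\alpha$ vanishes on $\U_{0}$, and $\D\subseteq\overline{\operatorname{int}\D}\subseteq\{\,x:\alpha(x)\ge 0\,\}$ shows that $\D$ lies in one of the closed half-spaces bounded by $\pi$. Hence $\pi$ is a supporting hyperplane of $\D$ at $u$, i.e.\ $\pi\in\SH u$, and $\U_{0}\subseteq\pi$, contradicting the assumption that $\U_{0}\not\subseteq\pi$ for every $\pi\in\SH u$. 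Therefore $\U_{0}$ meets $\operatorname{int}\D$. The only mildly delicate point — the one I would treat as the obstacle — is arranging the separating hyperplane to pass through $u$ rather than strictly separating $u$ from $\operatorname{int}\D$; this is precisely what the bookkeeping $c=0$ achieves, by combining $\alpha(u)=0$ (from $u\in\U_{0}$), $\alpha(u)\ge c$ (from $u\in\overline{\operatorname{int}\D}$), and $c\ge 0$ (from $0\in\U_{0}$), with the identity $\D=\overline{\operatorname{int}\D}$ supplying the transfer of the half-space inclusion from $\operatorname{int}\D$ to all of $\D$ and to $u$.
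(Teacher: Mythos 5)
Your proof is correct and follows essentially the same route as the paper's: a contradiction via hyperplane separation of $\U_{0}$ from the interior of $\D$, using $\D=\overline{\operatorname{int}\D}$ (valid since the affine hull of $\D$ is all of $\X$) to upgrade the separation to all of $\D$, and concluding that the separating hyperplane is a supporting hyperplane at $u$ containing $\U_{0}$. The only cosmetic difference is how the containment $\U_{0}\subseteq\pi$ is obtained: you invoke the fact that a linear functional bounded above on a linear subspace must vanish on it, whereas the paper reflects each $v\in\U_{0}$ through $u$ to reach the same conclusion.
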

\begin{proof}
Denote by $\D_{0}$ the interior of $\D$. Assume to the contrary
that $\U_{0}\cap\D_{0}=\varnothing$. Then, the hyperplane separation
theorem implies there is a hyperplane $\pi\subset\X$ s.t. $\U_{0}\subseteq\pi^{-}$
and $\D_{0}\subseteq\pi^{+}$. The affine hull of $\D$ is $\X$,
therefore $\D$ is the closure of $\D_{0}$ and hence $\D\subseteq\pi^{+}$.
We get $u\in\pi$ and $\pi\in\SH u$. 

Now, consider any $v\in\U_{0}$. Denote $v':=2u-v$. Then, $u$ is
the midpoint between $v$ and $v'$. Since $v'\in\U_{0}$ we have
$v'\in\pi^{-}$, and since $u\in\pi$, this implies $v\in\pi^{+}$.
Since $v\in\U_{0},$ we also have $v\in\pi^{-}$ and hence $v\in\pi$.
We got $\U_{0}\subseteq\pi$ which is a contradiction.
\end{proof}
We will also use the following result, which appears in \cite{Ok2007}
as ``the Krein-Rutman theorem''\footnote{Not to be confused with the Krein-Rutman theorem about the spectrum
of a cone-preserving compact operator. Also, the statement we give
here is more restricted, since we're only concerned with finite-dimensional
spaces.}. It talks abour extending a linear functional while preserving positivity
on a convex cone, but can be equivalently interpreted as extending
an affine function while preserving positivity on a convex body.

\global\long\def\Cone{\mathcal{P}}%

\begin{thm}
[Krein-Rutman]\label{thm:krein-rutman}Let $\X$ be a finite-dimensional
vector space and $\Cone\subset\X$ a closed convex cone. Let $\U_{0}\subseteq\X$
be a linear subspace that intersects the interior of $\Cone$, and
$\alpha_{0}\in\U_{0}^{\star}$ s.t. for any $u\in\U_{0}\cap\Cone$,
$\alpha_{0}(u)\geq0$. Then, there exists $\alpha\in\X^{\star}$ s.t.
$\alpha|_{\U_{0}}=\alpha_{0}$ and for any $u\in\Cone$, $\alpha(u)\geq0$. 
\end{thm}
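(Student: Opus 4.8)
The plan is to prove the Krein--Rutman theorem by a Hahn--Banach style extension argument, enlarging the domain of the functional one dimension at a time. Since $\X$ is finite-dimensional, it suffices to establish the one-step extension: if $\U_{0}\subsetneq\X$, $\alpha_{0}\in\U_{0}^{\star}$ is nonnegative on $\U_{0}\cap\Cone$, and $\U_{0}$ meets the interior $\mathrm{int}(\Cone)$ at some fixed point $u_{*}$, then for any $v\in\X\setminus\U_{0}$ there is an extension $\alpha_{1}$ to $\U_{1}:=\U_{0}\oplus\R v$ with $\alpha_{1}|_{\U_{0}}=\alpha_{0}$ and $\alpha_{1}\geq0$ on $\U_{1}\cap\Cone$. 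Iterating this along a completion $v_{1},\dots,v_{m}$ of a basis of $\U_{0}$ to a basis of $\X$ yields the desired $\alpha$; at every stage the enlarged subspace still contains $u_{*}\in\mathrm{int}(\Cone)$, so the hypothesis is preserved. (If $\U_{0}=\X$ there is nothing to prove.)

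For the one-step extension, I must choose $c:=\alpha_{1}(v)\in\R$ so that $\alpha_{0}(u)+tc\geq0$ whenever $u\in\U_{0}$, $t\in\R$, and $u+tv\in\Cone$. The case $t=0$ is precisely the hypothesis on $\alpha_{0}$. Since $\Cone$ is a cone, dividing by $|t|$ shows the $t>0$ constraints collapse to $c\geq-\alpha_{0}(w)$ for all $w\in\U_{0}$ with $w+v\in\Cone$, and the $t<0$ constraints to $c\leq\alpha_{0}(w')$ for all $w'\in\U_{0}$ with $w'-v\in\Cone$. Thus a valid $c$ exists iff
\[
\sup\{-\alpha_{0}(w):w\in\U_{0},\ w+v\in\Cone\}\ \leq\ \inf\{\alpha_{0}(w'):w'\in\U_{0},\ w'-v\in\Cone\},
\]
with the conventions $\sup\varnothing=-\infty$, $\inf\varnothing=+\infty$, and provided moreover that the left-hand supremum is not $+\infty$ and the right-hand infimum is not $-\infty$. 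The displayed inequality itself is easy: given admissible $w,w'$, the sum $(w+v)+(w'-v)=w+w'$ lies in $\Cone$ (a convex cone is closed under addition) and in $\U_{0}$, so $\alpha_{0}(w)+\alpha_{0}(w')=\alpha_{0}(w+w')\geq0$; taking the supremum over $w$ and the infimum over $w'$ gives the claim.

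The step where the interior hypothesis is genuinely used --- and the one I expect to be the main obstacle --- is the finiteness: ruling out $\sup=+\infty$ (and, symmetrically, $\inf=-\infty$). Suppose $w_{n}\in\U_{0}$ with $w_{n}+v\in\Cone$ and $\alpha_{0}(w_{n})\to-\infty$. Since $u_{*}\in\mathrm{int}(\Cone)$ there is $\delta>0$ with the ball $B(u_{*},\delta)\subseteq\Cone$; pick $\lambda>\Norm v/\delta$, so that $u_{*}-v/\lambda\in\Cone$ and hence $\lambda u_{*}-v\in\Cone$. By convexity $\tfrac12(w_{n}+v)+\tfrac12(\lambda u_{*}-v)\in\Cone$, i.e. $w_{n}+\lambda u_{*}\in\Cone$, and also $w_{n}+\lambda u_{*}\in\U_{0}$, so $\alpha_{0}(w_{n})\geq-\lambda\,\alpha_{0}(u_{*})$ --- a fixed lower bound, contradicting $\alpha_{0}(w_{n})\to-\infty$. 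The symmetric argument (using $\lambda u_{*}+v\in\Cone$) bounds the right-hand infimum below. Hence a real $c$ in the interval $[\sup,\inf]$ exists; defining $\alpha_{1}(u+tv):=\alpha_{0}(u)+tc$ gives a linear functional on $\U_{1}$ extending $\alpha_{0}$, and the three case checks ($t=0$, $t>0$, $t<0$ via division by $|t|$ as above) verify $\alpha_{1}\geq0$ on $\U_{1}\cap\Cone$. Since $u_{*}\in\U_{0}\subseteq\U_{1}$, the induction hypothesis carries over, and after $m=\dim\X-\dim\U_{0}$ steps we obtain the required $\alpha\in\X^{\star}$.
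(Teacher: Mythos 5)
Your proof is correct. Note that the paper itself does not prove this theorem: it is quoted from the literature (\cite{Ok2007}) and used as a black box in the proof of the lower bound in Proposition \ref{prop:sin-sh}, so your argument supplies a proof where the paper only cites one. What you give is the classical Krein extension argument, done correctly: you reduce to a one-dimensional extension from $\U_{0}$ to $\U_{0}\oplus\R v$, observe that the $t>0$ and $t<0$ constraints on $c=\alpha_{1}(v)$ collapse (after rescaling by the cone property) to $\sup\{-\alpha_{0}(w):w+v\in\Cone\}\leq c\leq\inf\{\alpha_{0}(w'):w'-v\in\Cone\}$, prove the compatibility inequality from additivity of the convex cone together with nonnegativity of $\alpha_{0}$ on $\U_{0}\cap\Cone$, and then use the interior point $u_{*}\in\U_{0}\cap\mathrm{int}(\Cone)$ exactly where it is needed, namely to rule out $\sup=+\infty$ and $\inf=-\infty$ via $\lambda u_{*}\mp v\in\Cone$ for large $\lambda$; the induction is legitimate because $u_{*}$ stays in every intermediate subspace. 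Two small observations: your argument never uses closedness of $\Cone$, which is fine (that hypothesis is not needed for this statement, it is merely inherited from the setting in which the paper applies it); and finite-dimensionality enters only to make the chain of one-dimensional extensions terminate (in general one would invoke Zorn's lemma) and, harmlessly, in the choice of a norm for the ball argument, where equivalence of norms makes the choice irrelevant.
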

Now, we prove the lower bound:

\global\long\def\SHg#1#2{\mathrm{SH}_{#1}#2}%

\begin{proof}
[Proof of the lower bound in Proposition \ref{prop:sin-sh}]Without
loss of generality, assume $\Af$ is a vector space and $\SubAf$
a linear subspace. Consider any $p\in\SubAf\setminus\D$. Define $q_{0}\in\DD{\SubAf}$
by

\[
q_{0}:=\Argmin{q\in\SubAf\cap\D}\Norm{p-q}
\]

By Lemma \ref{lem:normal-hyperplane}, there exists $\pi_{0}\subset\SubAf$,
a hyperplane in $\SubAf$, s.t. $q_{0}\in\pi_{0}$, $\SubAf\cap\D\subseteq\pi_{0}^{+}$,
$p\in\pi_{0}^{-}$ and

\[
d\left(p,\pi_{0}\right)=d\left(p,\SubAf\cap\D\right)
\]

Consider two cases: 

In the first case, there exists $\pi'\in\SH{q_{0}}$ s.t. $\SubAf\subseteq\pi'$.
Then, $\sin\left(\SubAf,\pi'\right)=0$ and the lower bound is vacuously
true (because it vanishes).

\global\long\def\Ash{\Af_{1}}%

\global\long\def\Bsh{\SubAf_{1}}%

In the second case, there is no $\pi'$ as above. Then, Lemma \ref{lem:interior-hyperplane}
implies that $\SubAf$ intersects the interior of $\D$. Denote $\Ash:=\Af\oplus\R$
and $\Bsh:=\SubAf\oplus\R$. Let $\alpha_{0}\in\Bsh^{\star}$ be s.t.

\[
\ker\alpha_{0}=\SC{\Vxx{\lambda u}{\lambda}}{\lambda\in\R\text{, }u\in\pi_{0}}
\]

We choose $\alpha_{0}$'s sign s.t. for any $u\in\pi_{0}^{+}$,

\[
\alpha_{0}\Vxx u1\geq0
\]

In particular, the above holds for any $u\in\SubAf\cap\D$. Define
the closed convex cone $\Cone\subset\Ash$ by

\[
\Cone:=\SC{\Vxx{\lambda u}{\lambda}}{\lambda\in\R_{+}\text{, }u\in\D}
\]

$\SubAf$ intersects the interior of $\D$ and hence $\Bsh$ intersects
the interior of $\Cone$. By Theorem \ref{thm:krein-rutman}, this
implies that there exists $\alpha\in\Ash^{\star}$ s.t. $\alpha|_{\Bsh}=\alpha_{0}$
and for any $v\in\Cone$, $\alpha(v)\geq0$. Define the hyperplane
$\pi\subset\Af$ by

\[
\pi:=\SC{u\in\Af}{\alpha\Vxx u1=0}
\]

We get that $\pi_{0}=\SubAf\cap\pi$, $\D\subseteq\pi^{+}$ and $p\in\pi^{-}$
(we choose the signs of $\pi^{\pm}$ according to the sign of $\alpha$).
In particular, $\pi\in\SH{q_{0}}$.

Define $q_{1}$ by

\[
q_{1}:=\Argmin{q\in\D}\Norm{p-q_{1}}
\]

Then, $q_{1}\in\pi^{+}$ and since $p\in\pi^{-}$ this implies $\pi$
intersects the straight line segment $pq_{1}$ at some point $q_{1}'$.
We get

\begin{align*}
\frac{d\left(p,\D\right)}{d\left(p,\SubAf\cap\D\right)} & =\frac{\Norm{p-q_{1}}}{d\left(p,\pi_{0}\right)}\\
 & \geq\frac{\Norm{p-q_{1}'}}{d\left(p,\SubAf\cap\pi\right)}\\
 & \geq\frac{d\left(p,\pi\right)}{d\left(p,\SubAf\cap\pi\right)}\\
 & \geq\sin\left(\SubAf,\pi\right)
\end{align*}
\end{proof}
The following lemma shows that, in some sense, the tangent hyperplane
to a point on the surface of a body depends continuously on the point.
We will need it in the proof of the upper bound, because we will arrive
at the sine of $\SubAf$ with a supporting hyperplane at a point of
$\D$ \emph{close} to $\DD{\SubAf}$ but not actually there.

\global\long\def\GM{\R_{>0}}%

We will use the notation $\GM:=(0,\infty)$.

\global\long\def\S{\mathcal{S}}%

\global\long\def\Gau{\Gamma}%

\global\long\def\DDa{\partial\D}%

\begin{lem}
\label{lem:gauss}Let $\X$ be a finite-dimensional vector space and
$\D\subseteq\X$ closed. Denote by $\DDa$ the boundary of $\D$.
Denote by $\S$ the quotient of $\X^{\star}\setminus\boldsymbol{0}$
by the multiplicative action of $\GM$ (topologically, it is a sphere).
Define $\Gamma\subseteq\DDa\times\S$ by

\[
\Gau:=\SC{\left(q,\GM\alpha\right)\in\DDa\times\S}{\forall u\in\D:\alpha\left(u-q\right)\geq0}
\]

Then, $\Gau$ is the graph of an upper hemincontinuous multivalued
mapping from $\DDa$ to $\S$.
\end{lem}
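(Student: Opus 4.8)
The plan is to reduce the statement to two elementary observations: that $\Gau$ is a \emph{closed} subset of $\DDa\times\S$, and that $\S$ is \emph{compact}. Granting these, the fibre correspondence $F\colon\DDa\to\S$ defined by $F(q):=\SC{s\in\S}{(q,s)\in\Gau}$ has $\Gau$ as its graph by construction, has compact values (each $F(q)$ is a section of the closed set $\Gau$, hence closed in the compact space $\S$), and is upper hemicontinuous because a correspondence with closed graph whose range lies in a compact set is automatically upper hemicontinuous (see e.g.\ \cite{Ok2007}). I will also record the short argument for this last implication so as not to depend on the precise form of the cited statement.

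First I would fix the topology. Choosing any norm on $\X^{\star}$, the set $\S$ is canonically homeomorphic to the unit sphere of $\X^{\star}$ (two nonzero functionals represent the same point of $\S$ iff they are positive multiples of each other, and each ray $\GM\alpha$ meets the unit sphere exactly once); since $\X$ is finite-dimensional this sphere is compact. Moreover $\DDa$ is closed because the boundary of any subset of $\X$ is closed. Hence $\DDa\times\S$ is metrizable with $\S$ compact, and it suffices to argue with sequences.

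Next I would prove $\Gau$ is closed. Let $(q_{n},\GM\alpha_{n})\in\Gau$ with $(q_{n},\GM\alpha_{n})\to(q,\GM\alpha)$; normalizing, take $\Norm{\alpha_{n}}=\Norm{\alpha}=1$, so that $\alpha_{n}\to\alpha$ under the homeomorphism above. Since $\DDa$ is closed, $q\in\DDa$. Fix any $u\in\D$. For every $n$ we have $\alpha_{n}(u-q_{n})\ge0$, and $\alpha_{n}(u-q_{n})\to\alpha(u-q)$ by continuity of the bilinear evaluation map $\X^{\star}\times\X\to\R$ together with $q_{n}\to q$; therefore $\alpha(u-q)\ge0$. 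As $u\in\D$ was arbitrary, $(q,\GM\alpha)\in\Gau$, so $\Gau$ is closed. Note that no convexity of $\D$ is used: for boundary points $q$ admitting no supporting functional the fibre $F(q)$ is simply empty, which is permissible for a correspondence, and the limiting argument just given in fact shows that the set of $q$ with $F(q)\ne\varnothing$ is itself closed.

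Finally I would conclude. Each $F(q)$ is closed in $\S$, hence compact. For upper hemicontinuity, suppose $F$ failed to be upper hemicontinuous at some $q_{0}$: then there would be an open $U\supseteq F(q_{0})$ and a sequence $q_{n}\to q_{0}$ with points $s_{n}\in F(q_{n})\setminus U$; compactness of $\S\setminus U$ yields a subsequence $s_{n}\to s\in\S\setminus U$, and closedness of $\Gau$ gives $(q_{0},s)\in\Gau$, i.e.\ $s\in F(q_{0})\subseteq U$, a contradiction. Thus $\Gau$ is the graph of the compact-valued upper hemicontinuous correspondence $F$. The only point requiring care is the identification of $\S$ with the unit sphere of $\X^{\star}$ and the attendant continuity bookkeeping; there is no substantive obstacle here.
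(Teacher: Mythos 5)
Your proof is correct and follows essentially the same route as the paper's: establish that $\Gau$ is closed in $\DDa\times\S$ and then invoke the closed-graph criterion for upper hemicontinuity of correspondences into a compact space. The only (immaterial) difference is that you verify closedness by passing to unit-norm representatives and arguing with sequences, whereas the paper checks that the preimage of $\Gau$ in $\DDa\times(\X^{\star}\setminus\boldsymbol{0})$ is closed and $\GM$-invariant and appeals to the quotient topology; your explicit treatment of the empty-fibre case and of the closed-graph implication is a welcome addition but not a new idea.
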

\begin{proof}
Define $\Gau'\subseteq\DDa\times\X^{\star}\setminus\boldsymbol{0}$
as follows:

\[
\Gamma':=\SC{(q,\alpha)\in\DDa\times\X^{\star}\setminus\boldsymbol{0}}{\forall u\in\D:\alpha\left(u-q\right)\geq0}
\]

For every $u\in\D$, define $\Gau_{u}\subseteq\DDa\times\X^{\star}$
by

\[
\Gau_{u}:=\SC{(q,\alpha)\in\DDa\times\X^{\star}}{\alpha\left(u-q\right)\geq0}
\]

Clearly, $\Gau_{u}$ is closed. Moreover,

\[
\Gamma'=\bigcap_{u\in\D}\Gamma_{u}\cap\left(\DDa\times\X^{\star}\setminus\boldsymbol{0}\right)
\]

Hence, $\Gamma'$ is closed in $\DDa\times\X^{\star}\setminus\boldsymbol{0}$.
Now, notice that for any $\lambda\in\GM$, $\alpha\left(u-q\right)\geq0$
implies $\lambda\alpha\left(u-q\right)\geq0$. Hence, $\Gamma'$ is
$\GM$-invariant and it's the inverse image of $\Gamma$ under the
canonical mapping from $\DDa\times\X^{\star}\setminus\boldsymbol{0}$
to $\DDa\times\S$. By the definition of the quotient topology, this
means $\Gamma$ is closed. Since $\S$ is compact, the closed graph
theorem for multivalued mappings implies that $\Gau$ is the graph
of an upper hemicontinuous multivalued mapping.
\end{proof}
The following lemma shows that any point on the surface of a convex
body is the nearest point to \emph{some} point outside the body.
\begin{lem}
\label{lem:step-out}Let $\X$ be a finite-dimensional normed vector
space, $\D\subseteq\X$ convex and closed, $q\in\DDa$, $t>0$. Then,
there exists $p\in\X\setminus\D$ s.t.

\[
\Norm{p-q}=d\left(p,\D\right)=t
\]
\end{lem}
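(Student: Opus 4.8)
The plan is to reduce the claim to a supporting hyperplane at $q$ and then to produce $p$ by a one-line duality argument in the direction away from $\D$. First I would invoke the supporting hyperplane theorem: since $\D$ is convex and closed and $q\in\partial\D$, there is $\alpha\in\X^{\star}\setminus\boldsymbol{0}$ with $\alpha(u-q)\ge0$ for every $u\in\D$ (when $\D$ has empty interior in $\X$, one simply takes any hyperplane through $q$ that contains the affine hull of $\D$). Write $\pi:=\SC{u\in\X}{\alpha(u-q)=0}$ and $\pi^{+}:=\SC{u\in\X}{\alpha(u-q)\ge0}$, so that $\D\subseteq\pi^{+}$, and let $\pi^{-}$ be the complementary closed half-space.

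Next I would build $p$ explicitly. Because $\X$ is finite-dimensional, the dual norm $\Norm{\alpha}_{*}:=\sup_{\Norm v\le1}\alpha(v)$ is attained at some $v_{0}$ with $\Norm{v_{0}}=1$ and $\alpha(v_{0})=\Norm{\alpha}_{*}>0$ (alternatively, extend a norming functional on $\R\alpha$ by Hahn--Banach, as in the proof of Lemma~\ref{lem:min-dual}). Set $p:=q-tv_{0}$. Then $\Norm{p-q}=t$ immediately, and $\alpha(p-q)=-t\Norm{\alpha}_{*}<0$, so $p\notin\pi^{+}\supseteq\D$; in particular $p\in\X\setminus\D$ and $p$ lies strictly in $\pi^{-}$, the side opposite to $\D$.

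Then I would estimate the two distances. For any $u\in\pi$ we have $\alpha(p-u)=\alpha(p-q)=-t\Norm{\alpha}_{*}$, hence $\Norm{p-u}\ge\Abs{\alpha(p-u)}/\Norm{\alpha}_{*}=t$, with equality at $u=q$; thus $d(p,\pi)=t$. For any $u\in\D\subseteq\pi^{+}$, applying the intermediate value theorem to the affine function $\lambda\mapsto\alpha\bigl((1-\lambda)p+\lambda u-q\bigr)$ on $[0,1]$ shows the segment $[p,u]$ meets $\pi$ at a point $u'=(1-\lambda)p+\lambda u$ with $\lambda\in(0,1]$, so $\Norm{p-u'}=\lambda\Norm{p-u}\le\Norm{p-u}$ and therefore $\Norm{p-u}\ge d(p,\pi)=t$. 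Taking the infimum over $u\in\D$ gives $d(p,\D)\ge t$, while $d(p,\D)\le\Norm{p-q}=t$ since $q\in\D$. Hence $\Norm{p-q}=d(p,\D)=t$, which is the assertion.

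There is no serious obstacle here. The only points requiring care are that all distance inequalities must be carried out without an inner product — the dual-norm estimate and the convexity/segment-crossing argument both work for an arbitrary norm on $\X$ — and the use of the supporting hyperplane theorem in the degenerate case where $\D$ has empty interior in $\X$, where the ``supporting hyperplane'' is just a hyperplane containing the affine hull of $\D$.
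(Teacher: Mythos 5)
Your proof is correct and follows essentially the same route as the paper's: take a supporting hyperplane at $q$, place $p$ on the opposite side at distance $t$, and use the segment-crossing argument to show every point of $\D$ is at distance at least $t$ from $p$. The only (immaterial) difference is that you construct $p$ explicitly as $q-tv_{0}$ via a dual norming vector, whereas the paper takes $p$ to be the nearest point of the parallel hyperplane at distance $t$.
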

\begin{proof}
Choose any $\pi\in\SH q$. We have $\D\subseteq\pi^{+}.$ Let $\pi'$
be the unique hyperplane parallel to $\pi$ at distance $t$ s.t.
$\pi'\subset\pi^{-}$. Take

\[
p:=\Argmin{u\in\pi'}\Norm{u-q}
\]

Then,

\begin{align*}
\Norm{p-q} & =d\left(\pi',\pi\right)\\
 & =t
\end{align*}

Moreover, for any $u\in\D$, the straight line segment $pu$ intersects
$\pi$ at some point $u'$ and hence

\begin{align*}
\Norm{p-u} & \geq\Norm{p-u'}\\
 & \geq d\left(\pi',\pi\right)\\
 & =t
\end{align*}

We conclude

\[
d\left(p,\D\right)=t
\]
\end{proof}
We will need the following elementary fact about geometry of normed
vector spaces.
\begin{lem}
\label{lem:hyperplane-dist}Let $\X$ be a finite-dimensional normed
vector space, $u\in\X$ and $\alpha\in\X^{\star}\setminus\boldsymbol{0}$.
Then,

\[
d\left(u,\ker\alpha\right)=\frac{\Abs{\alpha(u)}}{\Norm{\alpha}}
\]
\end{lem}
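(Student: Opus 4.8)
The plan is to prove the two inequalities $d(u,\ker\alpha)\geq\frac{|\alpha(u)|}{\Norm{\alpha}}$ and $d(u,\ker\alpha)\leq\frac{|\alpha(u)|}{\Norm{\alpha}}$ separately. For the lower bound, I would take an arbitrary $v\in\ker\alpha$ and note that $\alpha(u-v)=\alpha(u)-\alpha(v)=\alpha(u)$, so that
\[
|\alpha(u)|=|\alpha(u-v)|\leq\Norm{\alpha}\cdot\Norm{u-v},
\]
which after dividing by $\Norm{\alpha}$ and taking the infimum over $v\in\ker\alpha$ yields $d(u,\ker\alpha)\geq\frac{|\alpha(u)|}{\Norm{\alpha}}$. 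This step is essentially just unwinding the definition of the operator norm, so no difficulty is expected here. (If $\alpha(u)=0$ the statement is trivial since then $u\in\ker\alpha$, so I may assume $\alpha(u)\neq0$.)

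For the upper bound, I would first use finite-dimensionality: the unit sphere of $\X$ is compact and $w\mapsto|\alpha(w)|$ is continuous, so the supremum defining $\Norm{\alpha}$ is attained at some $w^{*}\in\X$ with $\Norm{w^{*}}=1$ and $|\alpha(w^{*})|=\Norm{\alpha}$. Replacing $w^{*}$ by $-w^{*}$ if necessary, I may assume $\alpha(w^{*})=\Norm{\alpha}>0$ (note $\Norm{\alpha}\neq0$ since $\alpha\neq\boldsymbol{0}$). Then I would set
\[
v:=u-\frac{\alpha(u)}{\Norm{\alpha}}w^{*},
\]
check that $\alpha(v)=\alpha(u)-\frac{\alpha(u)}{\Norm{\alpha}}\alpha(w^{*})=0$ so that $v\in\ker\alpha$, and compute $\Norm{u-v}=\frac{|\alpha(u)|}{\Norm{\alpha}}\Norm{w^{*}}=\frac{|\alpha(u)|}{\Norm{\alpha}}$. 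Hence $d(u,\ker\alpha)\leq\Norm{u-v}=\frac{|\alpha(u)|}{\Norm{\alpha}}$.

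Combining the two inequalities gives the claimed equality. The only place where anything beyond routine manipulation is used is the attainment of the operator norm on the (compact) unit sphere, which is what lets us produce an explicit nearest point in $\ker\alpha$ rather than merely an approximating sequence; this is the ``main obstacle,'' though it is a very mild one given that $\X$ is assumed finite-dimensional. As an alternative, one could instead deduce the result from Lemma~\ref{lem:min-dual} by duality, but the direct argument above is shorter and self-contained.
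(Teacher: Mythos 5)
Your proof is correct and uses essentially the same ingredients as the paper's: the lower bound from $\Abs{\alpha(u-v)}\leq\Norm{\alpha}\cdot\Norm{u-v}$ for $v\in\ker\alpha$, and the upper bound from an explicit nearest point constructed out of a norm-attaining unit vector $w^{*}$. The paper merely reorganizes the same argument, translating $u$ to the multiple $\frac{\alpha(u)}{\Norm{\alpha}}w^{*}$ of the norming vector and reducing to showing $d(w^{*},\ker\alpha)=1$.
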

\begin{proof}
Let $v\in\X$ be s.t. $\Norm v=1$ and $\alpha(v)=\Norm{\alpha}$
(it exists by definition of the norm on $\X^{\star}$). Define $v'\in\X$
by

\[
v':=\frac{\alpha(u)}{\Norm{\alpha}}\cdot v
\]

Then,

\begin{align*}
\alpha\left(v'-u\right) & =\alpha\left(\frac{\alpha(u)}{\Norm{\alpha}}\cdot v-u\right)\\
 & =\frac{\alpha(u)}{\Norm{\alpha}}\cdot\alpha(v)-\alpha(u)\\
 & =\frac{\alpha(u)}{\Norm{\alpha}}\cdot\Norm{\alpha}-\alpha(u)\\
 & =0
\end{align*}

Hence, $v'-u\in\ker\alpha$. Therefore,

\begin{align*}
d\left(u,\ker\alpha\right) & =d\left(v',\ker\alpha\right)\\
 & =\frac{\alpha(u)}{\Norm{\alpha}}\cdot d\left(v,\ker\alpha\right)
\end{align*}

It remains to show that $d\left(v,\ker\alpha\right)=1$. Since $\boldsymbol{0}\in\ker\alpha$,
we have

\begin{align*}
d\left(v,\ker\alpha\right) & \leq\Norm v\\
 & =1
\end{align*}

On the other hand, for any $w\in\ker\alpha$, we have

\begin{align*}
\Norm{\alpha}\cdot\Norm{v-w} & \geq\alpha(v-w)\\
 & =\alpha(v)-\alpha(w)\\
 & =\Norm{\alpha}-0\\
 & =\Norm{\alpha}
\end{align*}

And hence, $\Norm{v-w}\geq1$. We conclude

\[
d\left(v,\ker\alpha\right)\geq1
\]
\end{proof}
Using the previous fact, we now give a simple characterization of
the sine between an affine subspace and a hyperplane. In particular,
this characterization shows the sine depends on the hyperplane continuously.
Simultaneously, we show that, in this case, the ratio of distances
on the definition of the sine doesn't actually depend on the point.
\begin{lem}
\label{lem:const-ratio}Let $\X$ be a finite-dimensional normed vector
space, $\U_{0}\subseteq\X$ a linear subspace and $\alpha\in\X^{\star}$.
Then, for any $u\in\U_{0}\setminus\ker\alpha$,

\[
\frac{d\left(u,\ker\alpha\right)}{d\left(u,\U_{0}\cap\ker\alpha\right)}=\frac{\Norm{\alpha|_{\U_{0}}}}{\Norm{\alpha}}=\sin\left(\U_{0},\ker\alpha\right)
\]
\end{lem}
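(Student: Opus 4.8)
The plan is to reduce everything to Lemma \ref{lem:hyperplane-dist}, applied in two different normed spaces: the space $\X$ itself, and the subspace $\U_{0}$ equipped with the restricted norm.

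First I would record the easy preliminaries. Since $u\in\U_{0}\setminus\ker\alpha$ we have $\alpha\ne\boldsymbol{0}$ and $\alpha|_{\U_{0}}\ne\boldsymbol{0}$; moreover $\boldsymbol{0}\in\U_{0}\cap\ker\alpha$, so $\U_{0}\cap\ker\alpha\ne\varnothing$, and $\U_{0}\not\subseteq\ker\alpha$. Hence $\ker\alpha$ is a closed convex set meeting $\U_{0}$ but not containing it, so $\sin\left(\U_{0},\ker\alpha\right)$ is well-defined. Applying Lemma \ref{lem:hyperplane-dist} in $\X$ gives
\[
d\left(u,\ker\alpha\right)=\frac{\Abs{\alpha(u)}}{\Norm{\alpha}}.
\]
Now observe that $\U_{0}\cap\ker\alpha=\ker\left(\alpha|_{\U_{0}}\right)$, viewed as a hyperplane inside $\U_{0}$, and that for $u\in\U_{0}$ the distance $d\left(u,\U_{0}\cap\ker\alpha\right)$ is the same whether computed inside $\U_{0}$ or inside $\X$, since the norm on $\U_{0}$ is the restriction of the norm on $\X$. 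Applying Lemma \ref{lem:hyperplane-dist} again, this time in the normed space $\U_{0}$ with the functional $\alpha|_{\U_{0}}$, yields
\[
d\left(u,\U_{0}\cap\ker\alpha\right)=\frac{\Abs{\left(\alpha|_{\U_{0}}\right)(u)}}{\Norm{\alpha|_{\U_{0}}}}=\frac{\Abs{\alpha(u)}}{\Norm{\alpha|_{\U_{0}}}}.
\]
Dividing the two displays (using $\alpha(u)\ne0$, so both distances are positive) gives
\[
\frac{d\left(u,\ker\alpha\right)}{d\left(u,\U_{0}\cap\ker\alpha\right)}=\frac{\Norm{\alpha|_{\U_{0}}}}{\Norm{\alpha}},
\]
which in particular is independent of the chosen $u\in\U_{0}\setminus\ker\alpha$.

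Finally, by the definition of $\sin\left(\U_{0},\ker\alpha\right)$ as the infimum over $p\in\U_{0}\setminus\ker\alpha$ of exactly this ratio of distances, and since the ratio is the constant $\Norm{\alpha|_{\U_{0}}}/\Norm{\alpha}$ for every such $p$, the infimum equals that same constant. This proves all three equalities simultaneously. I do not expect any genuine obstacle here; the only points deserving a line of care are verifying that $\sin\left(\U_{0},\ker\alpha\right)$ is well-defined and noting that the distance from $u$ to $\U_{0}\cap\ker\alpha$ may be computed intrinsically in $\U_{0}$, which is what lets the second invocation of Lemma \ref{lem:hyperplane-dist} go through.
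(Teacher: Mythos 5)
Your proposal is correct and takes essentially the same route as the paper: both apply Lemma \ref{lem:hyperplane-dist} twice, once in $\X$ to get $d\left(u,\ker\alpha\right)=\Abs{\alpha(u)}/\Norm{\alpha}$ and once in $\U_{0}$ with the restricted functional to get $d\left(u,\U_{0}\cap\ker\alpha\right)=\Abs{\alpha(u)}/\Norm{\alpha|_{\U_{0}}}$, then conclude that the constant ratio equals its own infimum, i.e. the sine. Your extra remarks on well-definedness and on computing the distance intrinsically in $\U_{0}$ are fine and only make the argument slightly more explicit than the paper's.
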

\begin{proof}
Observe that 

\[
\U_{0}\cap\ker\alpha=\ker\alpha|_{\U_{0}}
\]

Using Lemma \ref{lem:hyperplane-dist}, we get

\begin{align*}
\frac{d\left(u,\ker\alpha\right)}{d\left(u,\U_{0}\cap\ker\alpha\right)} & =\frac{d\left(u,\ker\alpha\right)}{d\left(u,\ker\alpha|_{\U_{0}}\right)}\\
 & =\frac{\left(\frac{\alpha(u)}{\Norm{\alpha}}\right)}{\left(\frac{\alpha(u)}{\Norm{\alpha|_{\U_{0}}}}\right)}\\
 & =\frac{\Norm{\alpha|_{\U_{0}}}}{\Norm{\alpha}}
\end{align*}

The expression on the right hand side doesn't depend on $u$. Hence,
the left hand side must be equal to its own infimum over $u$, i.e.
to $\sin\left(\U_{0},\pi\right)$.
\end{proof}
Now, we prove the upper bound.

\global\long\def\Nor#1{#1^{\mathrm{n}}}%

Given a finite-dimensional normed vector space $\X$, a closed convex
set $\D\subseteq\X$ s.t. its affine hull is $\X$, $q\in\DDa$ and
$\pi\in\SH q$, we will denote by $\Nor{\pi}$ the unique element
of $\X^{\star}$ s.t. $\Norm{\Nor{\pi}}=1$, $\pi=q+\ker\Nor{\pi}$
and for all $u\in\D$,

\[
\Nor{\pi}(u-q)\geq0
\]

\begin{proof}
[Proof of the upper bound in Proposition \ref{prop:sin-sh}]Without
loss of generality, assume $\Af$ is a vector space and $\SubAf$
a linear subspace. Consider any $q_{0}\in\DD{\SubAf}$ and $\epsilon>0$.
By Lemma \ref{lem:gauss}, there exists $\delta>0$ s.t. for any $q\in\DDa$
and $\pi\in\SH q$, if $\Norm{q_{0}-q}<\delta$ then there exists
$\pi_{0}\in\SH{q_{0}}$ with

\[
\Norm{\Nor{\pi_{0}}-\Nor{\pi}}<\epsilon
\]

Since $q_{0}$ is on the boundary of $\SubAf\cap\D$ inside $\SubAf$,
we can use Lemma \ref{lem:step-out} to choose $p\in\SubAf\setminus\D$
s.t.

\[
d\left(p,\SubAf\cap\D\right)=\Norm{p-q_{0}}<\frac{1}{2}\delta
\]

Define $q_{1}\in\DDa$ by

\[
q_{1}:=\Argmin{q\in\D}\Norm{p-q}
\]

We have

\begin{align*}
\Norm{q_{0}-q_{1}} & \leq\Norm{p-q_{0}}+\Norm{p-q_{1}}\\
 & \leq\Norm{p-q_{0}}+\Norm{p-q_{0}}\\
 & <\delta
\end{align*}

By Lemma \ref{lem:normal-hyperplane}, there exists $\pi_{1}\in\SH{q_{1}}$
s.t. $\D\subseteq\pi_{1}^{+}$, $p\in\pi_{1}^{-}$ and

\[
d\left(p,\pi_{1}\right)=d\left(p,\D\right)
\]

Since $p\in\pi_{1}^{-}$ and $q_{0}\in\pi_{1}^{+}$, $\pi_{1}$ intersects
the straight line segment $pq_{0}$ at some point $q_{0}'$. Since
both $p$ and $q_{0}$ lie in $\SubAf$, so does $q_{0}'$. We get

\begin{align*}
\frac{d\left(p,\D\right)}{d\left(p,\SubAf\cap\D\right)} & =\frac{d\left(p,\pi_{1}\right)}{\Norm{p-q_{0}}}\\
 & \leq\frac{d\left(p,\pi_{1}\right)}{\Norm{p-q_{0}'}}\\
 & \leq\frac{d\left(p,\pi_{1}\right)}{d\left(p,\SubAf\cap\pi_{1}\right)}\\
 & =\sin\left(\SubAf,\pi_{1}\right)
\end{align*}

Here, we used Lemma \ref{lem:const-ratio} on the last line.

Since $\Norm{q_{0}-q_{1}}<\delta$, we know that there exists $\pi_{0}\in\SH{q_{0}}$
with

\[
\Norm{\Nor{\pi_{0}}-\Nor{\pi_{1}}}<\epsilon
\]

Using Lemma \ref{lem:const-ratio} again, we have

\begin{align*}
\Abs{\sin\left(\SubAf,\pi_{0}\right)-\sin\left(\SubAf,\pi_{1}\right)} & =\Abs{\frac{\Norm{\Nor{\pi_{0}}|_{\VecB}}}{\Norm{\Nor{\pi_{0}}}}-\frac{\Norm{\Nor{\pi_{1}}|_{\VecB}}}{\Norm{\Nor{\pi_{1}}}}}\\
 & =\Abs{\Norm{\Nor{\pi_{0}}|_{\VecB}}-\Norm{\Nor{\pi_{1}}|_{\VecB}}}\\
 & \leq\Norm{\Nor{\pi_{0}}|_{\VecB}-\Nor{\pi_{1}}|_{\VecB}}\\
 & \leq\Norm{\Nor{\pi_{0}}-\Nor{\pi_{1}}}\\
 & <\epsilon
\end{align*}

We conclude

\begin{align*}
\frac{d\left(p,\D\right)}{d\left(p,\SubAf\cap\D\right)} & <\sin\left(\SubAf,\pi_{0}\right)+\epsilon\\
 & \leq\sup_{\pi\in\SH{q_{0}}}\sin\left(\SubAf,\pi\right)+\epsilon
\end{align*}

Since $q_{0}\in\DD{\SubAf}$ and $\epsilon>0$ are arbitrary, we get
the desired result.
\end{proof}

\subsection{Sine with Polytope is Positive}
\begin{prop}
\label{prop:polytope-sine}Let $\X$ be a finite-dimensional normed
vector space, $\U_{0}\subseteq\X$ a subspace and $\D\subset\X$ a
compact polytope s.t. $\U_{0}\cap\D\ne\varnothing$. Then,

\[
\sin\left(\U_{0},\D\right)>0
\]
\end{prop}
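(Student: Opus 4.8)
The plan is to reduce to a Euclidean norm and then invoke Hoffman's error bound for systems of linear inequalities. First, if $\U_{0}\subseteq\D$ there is nothing to prove, since the index set defining the infimum in $\sin(\U_{0},\D)$ is empty; so assume $\U_{0}\not\subseteq\D$. Second, positivity of the sine is insensitive to the choice of norm: if $\Norm{\cdot}'$ is another norm with $c\Norm v\leq\Norm v'\leq C\Norm v$ for all $v\in\X$, then for every closed $E\subseteq\X$ and every $p\in\X$ one has $c\,d(p,E)\leq d'(p,E)\leq C\,d(p,E)$, so $\sin'(\U_{0},\D)\geq\tfrac{c}{C}\sin(\U_{0},\D)$ and symmetrically. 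Hence it suffices to prove the claim when $\X$ carries a Euclidean norm, which I assume from now on, identifying $\X^{\star}$ with $\X$.

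Next I set up a halfspace description. Write $\D=\SC{x\in\X}{\forall i\in[m]:\langle a_{i},x\rangle\leq b_{i}}$ for finitely many $a_{i}\in\X\setminus\boldsymbol{0}$ and $b_{i}\in\R$; this is possible because every polytope is a bounded intersection of finitely many halfspaces (if $\D$ is not full-dimensional, include among the $a_{i}$ the two functionals cutting out its affine hull). Put $C:=\U_{0}\cap\D$, which inside $\U_{0}$ is the polyhedron $\SC{x\in\U_{0}}{\forall i:\langle\tilde{a}_{i},x\rangle\leq b_{i}}$, where $\tilde{a}_{i}$ is the orthogonal projection of $a_{i}$ onto $\U_{0}$ (so $\langle a_{i},p\rangle=\langle\tilde{a}_{i},p\rangle$ for $p\in\U_{0}$); by hypothesis $C\neq\varnothing$. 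Hoffman's lemma applied to this system inside $\U_{0}$ gives a constant $\mu<\infty$, depending only on $\{\tilde{a}_{i}\}$ and $\U_{0}$, with
\[
d(p,C)\leq\mu\sum_{i\in[m]}\left(\langle a_{i},p\rangle-b_{i}\right)_{+}\leq\mu m\max_{i\in[m]}\left(\langle a_{i},p\rangle-b_{i}\right)_{+}\qquad\text{for all }p\in\U_{0}.
\]

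Now fix $p\in\U_{0}\setminus\D$ and let $I(p):=\SC{i\in[m]}{\langle a_{i},p\rangle>b_{i}}$, which is non-empty. Since $\D$ lies in the halfspace $\SC{x}{\langle a_{i},x\rangle\leq b_{i}}$, for each $i\in I(p)$ we get $d(p,\D)\geq\left(\langle a_{i},p\rangle-b_{i}\right)/\Norm{a_{i}}$, and therefore $d(p,\D)\geq\left(\max_{j}\Norm{a_{j}}\right)^{-1}\max_{i}\left(\langle a_{i},p\rangle-b_{i}\right)_{+}$. Dividing this by the Hoffman bound above — the common factor $\max_{i}\left(\langle a_{i},p\rangle-b_{i}\right)_{+}$ is strictly positive because $I(p)\neq\varnothing$ — yields
\[
\frac{d(p,\D)}{d(p,C)}\geq\frac{1}{\mu m\max_{j}\Norm{a_{j}}}>0,
\]
with the right-hand side independent of $p$. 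Taking the infimum over $p\in\U_{0}\setminus\D$ gives $\sin(\U_{0},\D)>0$ for the Euclidean norm, and hence for the original norm by the first paragraph.

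The only non-elementary ingredient is Hoffman's lemma, a classical error bound for linear inequality systems; supplying or citing it is, I expect, the sole real content of the argument, everything else being routine. I deliberately avoid routing through Proposition \ref{prop:sin-sh}, because for a polytope one can have a boundary point $q_{0}\in\partial_{\U_{0}}(\U_{0}\cap\D)$ with a supporting hyperplane $\pi$ containing all of $\U_{0}$, making the lower bound there vacuous even though the sine is positive; the Hoffman route sidesteps this entirely. (If one prefers not to cite Hoffman, it can be reproved in this finite setting by the standard compactness-plus-pigeonhole argument: a sequence $p_{n}\in\U_{0}$ along which $d(p_{n},C)$ dominates the constraint violation by an unbounded factor, after passing to a subsequence whose nearest points in $C$ all lie in the relative interior of one fixed face of $C$, contradicts a finite-dimensional linear-algebra estimate — but that is exactly how Hoffman's lemma is proved.)
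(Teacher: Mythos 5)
Your proof is correct, and it takes a genuinely different route from the paper's. The paper localizes at the minimal face $\phi$ of $\D$ containing the point of $\U_{0}\cap\D$ nearest to a given $p\in\U_{0}\setminus\D$, replaces $\D$ by the tangent cone of that face, and then exploits invariance of the distance ratio under positive dilations and under translation by the lineality subspace to turn the infimum into a minimum of a positive continuous function over a compact set (Lemma \ref{lem:corner-sine}); positivity then follows by taking the minimum over the finitely many faces. You instead pass to an equivalent Euclidean norm (legitimate, since positivity of the sine is a norm-independent property, as you correctly argue) and invoke Hoffman's error bound for the inequality system cutting out $\U_{0}\cap\D$ inside $\U_{0}$, paired with the elementary lower bound $d(p,\D)\geq\left(\max_{j}\Norm{a_{j}}\right)^{-1}\max_{i}\left(\langle a_{i},p\rangle-b_{i}\right)_{+}$. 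What your route buys is brevity and an explicit constant $\left(\mu m\max_{j}\Norm{a_{j}}\right)^{-1}$ in the Euclidean metric; what it costs is reliance on an external classical theorem and the loss of a quantitative bound in the original norm, neither of which matters for the qualitative claim. At bottom the two arguments are close cousins: the paper's face-by-face cone reduction together with the compactness argument of Lemma \ref{lem:corner-sine} is essentially a self-contained proof of the instance of Hoffman's lemma you need, as you yourself note in your closing parenthesis. Your observation that Proposition \ref{prop:sin-sh} cannot be used directly here --- a supporting hyperplane at a boundary point of $\U_{0}\cap\D$ may contain all of $\U_{0}$, rendering its lower bound vacuous --- is also accurate and is precisely why the paper supplies a separate argument for polytopes.
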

The proof proceeds in two steps: one step replaces the polytope by
a cone while constraining the infimum in the definion of $\sin$,
the other step uses symmetry to identify the infimum with a minimum
over a compact set. The following is the second step.

\global\long\def\Q{\mathcal{Q}}%

\begin{lem}
\label{lem:corner-sine}Let $\X$ be a finite-dimensional normed vector
space, $\U_{0}\subseteq\X$ a subspace and $\Cone$ a closed convex
cone. Denote

\[
V:=\SC{u\in\U_{0}}{\left\{ u,-u\right\} \subseteq\Cone}
\]

\[
\Q:=\SC{u\in\U_{0}}{d\left(u,\U_{0}\cap\Cone\right)=d\left(u,V\right)}
\]

Then,

\[
\inf_{u\in\Q\setminus\Cone}\frac{d\left(u,\Cone\right)}{d\left(u,\U_{0}\cap\Cone\right)}>0
\]
\end{lem}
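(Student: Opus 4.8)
The plan is to pass to the quotient space $\U_0/V$, where the set over which we take the infimum becomes compact. First I would record the elementary facts: $V$ is a genuine linear subspace of $\U_0$ (from convexity of the cone, $u,-u,v,-v\in\Cone$ gives $u+v,-(u+v)\in\Cone$, and scalar multiples are immediate), $V\subseteq\U_0\cap\Cone$, and $\Cone+V=\Cone$ (adding an element of $V\subseteq\Cone$ to a point of $\Cone$ stays in $\Cone$, and $-v\in V$ as well). Consequently $\U_0\cap\Cone$ is also $V$-translation invariant, so the three functions $u\mapsto d(u,\Cone)$, $u\mapsto d(u,\U_0\cap\Cone)$, $u\mapsto d(u,V)$ on $\U_0$ are continuous and constant on cosets of $V$, hence descend to continuous functions on $\U_0/V$. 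Write $\lVert\bar u\rVert:=d(u,V)$ for the quotient norm, and let $g\colon\U_0/V\to\R$ be the descent of $d(\cdot,\Cone)$. For $u\in\Q$ we have $d(u,\U_0\cap\Cone)=d(u,V)=\lVert\bar u\rVert$, so the ratio in question equals $g(\bar u)/\lVert\bar u\rVert$; moreover $\Q$ is invariant under multiplication by positive scalars, since $\U_0\cap\Cone$ and $V$ are both cones.

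Next I would reduce everything to the unit sphere $\Sigma:=\{\bar u\in\U_0/V:\lVert\bar u\rVert=1\}$, which is compact because $\U_0/V$ is finite-dimensional. Let $S:=\{\bar u\in\Sigma:d(u,\U_0\cap\Cone)=1\}$; this is a level set of a continuous function intersected with $\Sigma$, hence closed in $\Sigma$ and compact. Given $u\in\Q\setminus\Cone$, one has $\lambda:=d(u,\U_0\cap\Cone)=d(u,V)>0$ (because $u\notin\Cone$, hence $u\notin\U_0\cap\Cone$, and the latter is closed); then $u/\lambda\in\Q$ by scaling invariance, $\overline{u/\lambda}\in S$, and $d(u,\Cone)/d(u,\U_0\cap\Cone)=d(u/\lambda,\Cone)=g(\overline{u/\lambda})$. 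Conversely, every element of $S$ is of the form $\bar v$ for some $v\in\Q\setminus\Cone$ with $\lVert\bar v\rVert=1$, and produces the ratio $g(\bar v)$. Hence the infimum we must bound equals $\inf_{\bar u\in S}g(\bar u)$. Finally, $g(\bar u)=0$ holds iff a representative lies in the closed cone $\Cone$, which would force $d(u,\U_0\cap\Cone)=0\neq1$; so $g>0$ pointwise on $S$. As $g$ is continuous and $S$ compact, $g$ attains a strictly positive minimum on $S$ (and if $S=\varnothing$, e.g. when $\U_0=V$, the infimum is $+\infty$), which is exactly the asserted bound.

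The routine steps — that $V$ is a subspace, that the three distance functions descend, that $\Q$ is a cone, and the dictionary between $\Q\setminus\Cone$ and $S$ — are all just unwinding of definitions. The step carrying the real content is recognizing that quotienting by the lineality space $V$ is precisely what turns the unbounded ``cylinder'' $\{u:d(u,V)=1\}$ into a compact set: a priori the ratio could degenerate to $0$ only in the limit where the nearest point of $\U_0\cap\Cone$ recedes to infinity along a line in $V$, and the hypothesis $u\in\Q$ is exactly what forbids this by guaranteeing the nearest point of $\U_0\cap\Cone$ can be found within $V$. Once that reduction is in place, the conclusion is a one-line appeal to the extreme value theorem.
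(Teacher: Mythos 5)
Your proof is correct and takes essentially the same route as the paper's: both arguments mod out the lineality space $V$ together with the positive-scaling action, and then conclude by continuity and strict positivity of the ratio on the resulting compact set. The only difference is cosmetic --- the paper realizes the reduction concretely via an auxiliary inner product, orthogonal projection onto $V^{\bot}$, and the Euclidean unit sphere there, whereas you work in the abstract quotient $\mathcal{U}_{0}/V$ with the quotient norm and normalize by $d(u,\mathcal{U}_{0}\cap\mathcal{P})$ instead.
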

\begin{proof}
Notice that $V$ is a linear subspace. Assume w.l.o.g. that $V$ is
a proper subspace of $\U_{0}$ (otherwise, $\Q\setminus\Cone=\U_{0}\setminus\Cone=\varnothing$
and the infinimum is $+\infty$). Define $f:\Q\setminus\Cone\rightarrow\R_{>0}$
by

\[
f(u):=\frac{d\left(u,\Cone\right)}{d\left(u,\U_{0}\cap\Cone\right)}
\]

Let $G$ be the group generated by the additive group of $V$ and
the multiplicative group of $\GM$, acting on $\X$. $\U_{0}$, $\Cone$
and $V$ are $G$-invariant. Therefore, for any $u\in\U_{0}$, $v\in V$
and $\lambda\in\GM$,

\[
d\left(\lambda u+v,\Cone\right)=\lambda d\left(u,\Cone\right)
\]

\[
d\left(\lambda u+v,\U_{0}\cap\Cone\right)=\lambda d\left(u,\U_{0}\cap\Cone\right)
\]

\[
d\left(\lambda u+v,V\right)=\lambda d\left(u,V\right)
\]

It follows that $\Q$ is $G$-invariant, and for any $u\in\Q\setminus\Cone$

\[
f(\lambda u+v)=f(u)
\]

Let's equip $\U_{0}$ with some arbitrary inner product and its associated
$\ell_{2}$ norm. Denote by $\S$ the $\ell_{2}$-unit sphere inside
$V^{\bot}$ and let $P:\U_{0}\rightarrow V^{\bot}$ be the orthogonal
projection. Define $\beta:\U_{0}\setminus V\rightarrow\S$ by

\[
\beta(u):=\frac{Pu}{\Norm{Pu}_{2}}
\]

Notice that, for any $u\in\U_{0}\setminus V$, $\beta(u)$ is in the
$G$-orbit of $u$. Therefore, if $u\in\Q\setminus\Cone$ then $\beta(u)\in\Q\setminus\Cone$
and hence

\begin{align*}
\beta\left(\Q\setminus\Cone\right) & \subseteq\left(\Q\setminus\Cone\right)\cap\S
\end{align*}

Also, for any $u\in\S$, we have $\beta(u)=u$ and therefore

\[
\beta\left(\Q\setminus\Cone\right)=\left(\Q\setminus\Cone\right)\cap\S
\]

Moreover, for any $u\in\Q\cap\S$, we have

\begin{align*}
d\left(u,\U_{0}\cap\Cone\right) & =d\left(u,V\right)\\
 & >0
\end{align*}

Here, the first line is since $u\in\Q$ and second line is since $u\in\S$.
Since $u\in\U_{0}$, we get $u\not\in\Cone$. Hence, $\Q\cap\S\cap\Cone=\varnothing$
and therefore

\[
\beta\left(\Q\setminus\Cone\right)=\Q\cap\S
\]

Finally, if $u\in\Q\setminus\Cone$ then $f(\beta(u))=f(u)$. We conclude

\begin{align*}
\inf_{\Q\setminus\Cone}f & =\inf_{\Q\cap\S}f\\
 & >0
\end{align*}

Here, the final line is because $\Q\cap\S$ is a compact set and $f$
is a continuous positive function on it.
\end{proof}
We will also need the following elementary fact about convex geometry.
\begin{lem}
\label{lem:rel-interior}Let $\X$ be a finite-dimensional normed
vector space, $A\subseteq B\subseteq\X$, $v\in\X$. Assume that $A$
is closed, $B$ is convex and let

\[
u^{*}:=\Argmin{u\in A}\Norm{v-u}
\]

Suppose $u^{*}$ is in the interior of $A$ relatively to B. Then,

\[
d\left(v,B\right)=\Norm{v-u^{*}}
\]
\end{lem}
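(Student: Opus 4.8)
The statement is a standard convexity fact: if the nearest point of $A$ to $v$ happens to be relatively interior to $A$ within $B$, then no point of $B$ can be closer. The easy inclusion is that $d(v,B)\le\Norm{v-u^{*}}$, since $u^{*}\in A\subseteq B$. So the whole content is the reverse inequality, which I would prove by contradiction.

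First I would assume, for contradiction, that there is some $w\in B$ with $\Norm{v-w}<\Norm{v-u^{*}}$. Since $B$ is convex, the whole segment $\SC{(1-t)u^{*}+tw}{t\in[0,1]}$ lies in $B$. Because $u^{*}$ lies in the interior of $A$ relative to $B$, there is an open neighborhood $U\subseteq\X$ of $u^{*}$ with $U\cap B\subseteq A$; hence for all sufficiently small $t>0$ the point $u_{t}:=(1-t)u^{*}+tw$ lies in $U\cap B$ and therefore in $A$.

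Next I would estimate $\Norm{v-u_{t}}$ using the triangle inequality and homogeneity of the norm:
\[
\Norm{v-u_{t}}=\Norm{(1-t)(v-u^{*})+t(v-w)}\leq(1-t)\Norm{v-u^{*}}+t\Norm{v-w}<\Norm{v-u^{*}}
\]
for every $t\in(0,1]$, using $\Norm{v-w}<\Norm{v-u^{*}}$. Taking $t>0$ small enough that $u_{t}\in A$, this contradicts the definition of $u^{*}$ as a minimizer of $\Norm{v-\cdot}$ over $A$. Therefore no such $w$ exists, i.e. $d(v,B)\ge\Norm{v-u^{*}}$, and combined with the easy inclusion we get equality.

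I do not anticipate a real obstacle here; the only point requiring a little care is spelling out precisely what ``interior of $A$ relatively to $B$'' means (the existence of the neighborhood $U$ with $U\cap B\subseteq A$) so that the segment argument is legitimate. Everything else is the one-line convexity estimate above.
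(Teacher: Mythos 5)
Your proof is correct and uses essentially the same argument as the paper: perturb $u^{*}$ slightly along the segment toward another point of $B$, use relative interiority to keep the perturbed point in $A$, and apply the triangle inequality to the convex combination. The only cosmetic difference is that you argue by contradiction with a strictly closer $w$, whereas the paper runs the same estimate directly for an arbitrary $u\in B$ and rearranges.
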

\begin{proof}
Consider any $u\in B$. The straight line segment $u^{*}u$ is contained
in $B$. By the assumption, there is some $\epsilon\in(0,1)$ s.t.

\[
(1-\epsilon)u^{*}+\epsilon u\in A
\]

We get,

\begin{align*}
\Norm{v-u^{*}} & \leq\Norm{v-\left((1-\epsilon)u^{*}+\epsilon u\right)}\\
 & =\Norm{(1-\epsilon)\left(v-u^{*}\right)+\epsilon\left(v-u\right)}\\
 & \leq\left(1-\epsilon\right)\Norm{v-u^{*}}+\epsilon\Norm{v-u}
\end{align*}

Moving $\left(1-\epsilon\right)\Norm{v-u^{*}}$ to the other side
of the inequality and dividing both sides by $\epsilon$, we conclude

\[
\Norm{v-u^{*}}\leq\Norm{v-u}
\]
\end{proof}
We can now complete the proof:

\global\long\def\Face{\phi}%

\begin{proof}
[Proof of Proposition \ref{prop:polytope-sine}]Let $\{\alpha_{i}\in\X^{\star}\}_{i<n}$
and $\{c_{i}\in\R\}_{i<n}$ be irredundant\footnote{That is, no inequality in this sytem follows from the other inequalities.}
s.t.

\[
\D=\SC{u\in\X}{\forall i<n:\alpha_{i}(u)\geq c_{i}}
\]

Consider any face $\Face\subseteq\D$. Let $\I_{\Face}\subseteq[n]$
be the set of indices corresponding to the facets whose intersection
is $\Face$. In other words,

\[
\phi=\SC{u\in\D}{\forall i\in\I_{\phi}:\alpha_{i}(u)=c_{i}}
\]

Denote

\[
\Cone_{\Face}:=\SC{u\in\X}{\forall i\in\I_{\Face}:\alpha_{i}(u)\geq0}
\]

\[
V_{\Face}:=\SC{u\in\U_{0}}{\forall i\in\I_{\Face}:\alpha_{i}(u)=0}
\]

\[
\Q_{\Face}:=\SC{u\in\U_{0}}{d\left(u,\U_{0}\cap\Cone_{\Face}\right)=d\left(u,V_{\Face}\right)}
\]

\[
S_{\Face}:=\inf_{u\in\Q_{\Face}\setminus\Cone_{\Face}}\frac{d\left(u,\Cone_{\Face}\right)}{d\left(u,\U_{0}\cap\Cone_{\Face}\right)}
\]

By Lemma \ref{lem:corner-sine}, $S_{\Face}>0$. Since there is a
finite number of faces, it is sufficient to show that for any $v\in\U_{0}\setminus\D$,
there is a face $\Face$ s.t.

\[
\frac{d\left(v,\D\right)}{d\left(v,\U_{0}\cap\D\right)}\geq S_{\Face}
\]

Consider some $v\in\U_{0}\setminus\D$. Define $u^{*}$ by

\[
u^{*}:=\Argmin{u\in\U_{0}\cap\D}\Norm{v-u}
\]

Let $\Face$ be the minimal face s.t. $u^{*}\in\Face$. For any $i\in\I_{\Face}$,
we have $\alpha_{i}(u^{*})=c_{i}$. Therefore,

\begin{align*}
\D-u^{*} & =\SC{u\in\X}{\forall i<n:\alpha_{i}\left(u+u^{*}\right)\geq c_{i}}\\
 & =\SC{u\in\X}{\forall i<n:\alpha_{i}\left(u\right)\geq c_{i}-\alpha_{i}\left(u^{*}\right)}\\
 & \subseteq\SC{u\in\X}{\forall i\in\I_{\Face}:\alpha_{i}\left(u\right)\geq0}\\
 & =\Cone_{\Face}
\end{align*}

Since $\phi$ is minimal, for any $i\in[n]\setminus\I_{\Face}$, we
have $\alpha_{i}(u^{*})>c_{i}$. Hence, $\boldsymbol{0}$ is in the
interior of $\D-u^{*}$ relatively to $\Cone_{\Face}$. As a consequence,
it is in the interior of $\U_{0}\cap(\D-u^{*})$ relatively to $\U_{0}\cap\Cone_{\Face}$.
By Lemma \ref{lem:rel-interior}, this implies

\[
d\left(v-u^{*},\U_{0}\cap\Cone_{\Face}\right)=\Norm{v-u^{*}}
\]

In particular, since $\boldsymbol{0}\in V_{\Face}$, we get

\[
d\left(v-u^{*},\U_{0}\cap\Cone_{\Face}\right)=d\left(v-u^{*},V_{\Face}\right)
\]

Hence, $v-u^{*}\in\Q_{\Face}$. Moreover, since $v\not\in\D$, $v\ne u^{*}$
and hence $v-u^{*}\not\in\U_{0}\cap\Cone_{\Face}$. So, $v-u^{*}\in\Q_{\Face}\setminus\Cone_{\Face}$.
Finally, since $\D-u^{*}\subseteq\Cone_{\Face}$, it holds that

\[
d\left(v-u^{*},\D-u^{*}\right)\geq d\left(v-u^{*},\Cone_{\Face}\right)
\]

Combining these facts together, we get

\begin{align*}
\frac{d\left(v,\D\right)}{d\left(v,\U_{0}\cap\D\right)} & =\frac{d\left(v-u^{*},\D-u^{*}\right)}{\Norm{v-u^{*}}}\\
 & \geq\frac{d\left(v-u^{*},\Cone_{\Face}\right)}{d\left(v-u^{*},\U_{0}\cap\Cone_{\Face}\right)}\\
 & \geq S_{\Face}
\end{align*}
\end{proof}

\subsection{Sine with Simplex}

We now proceed to proving the propositions in subsection \ref{subsec:s}.

In order to analyze sines in the case $\D=\Delta\B$, we will need
an expression for the $\ell_{1}$ distance between the simplex $\Delta\B$
and points in its affine hull.
\begin{lem}
\label{lem:simplex-dist}For any finite set $\B$ and $y\in\R^{B}$
s.t. $\sum_{a}y_{a}=1$,

\[
d_{1}\left(y,\Delta\B\right)=\sum_{a\in\B}\Abs{y_{a}}-1
\]

where $d_{1}$ stands for minimal $\ell_{1}$ distance.
\end{lem}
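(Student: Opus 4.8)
The plan is to establish the two inequalities separately. Throughout, set $A^{-}:=\{a\in\B\mid y_{a}<0\}$ and $A^{+}:=\{a\in\B\mid y_{a}\geq0\}$, and write $S^{+}:=\sum_{a\in A^{+}}y_{a}$ and $S^{-}:=\sum_{a\in A^{-}}y_{a}\leq0$. From $\sum_{a}y_{a}=1$ we get $S^{+}+S^{-}=1$, hence $S^{+}=1-S^{-}\geq1$ (in particular $A^{+}\neq\varnothing$), and $\sum_{a\in\B}|y_{a}|=S^{+}-S^{-}=1-2S^{-}$, so that the quantity to be computed equals $-2S^{-}=2|S^{-}|\geq0$.

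For the lower bound, fix any $z\in\Delta\B$ and define $f\in\R^{\B}$ by $f_{a}:=+1$ for $a\in A^{+}$ and $f_{a}:=-1$ for $a\in A^{-}$. Since $\|f\|_{\infty}=1$, we have $\|y-z\|_{1}\geq\sum_{a}f_{a}(y_{a}-z_{a})=\sum_{a}f_{a}y_{a}-\sum_{a}f_{a}z_{a}$. Here $\sum_{a}f_{a}y_{a}=S^{+}-S^{-}=\sum_{a}|y_{a}|$, while $\sum_{a}f_{a}z_{a}\leq\sum_{a}z_{a}=1$ because $f_{a}\leq1$ and $z_{a}\geq0$ with $\sum_{a}z_{a}=1$. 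Hence $\|y-z\|_{1}\geq\sum_{a}|y_{a}|-1$ for every $z\in\Delta\B$, i.e. $d_{1}(y,\Delta\B)\geq\sum_{a}|y_{a}|-1$.

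For the upper bound, I would exhibit the point $z\in\R^{\B}$ given by $z_{a}:=\max(y_{a},0)/S^{+}$, which is well defined since $S^{+}\geq1>0$, satisfies $z_{a}\geq0$ and $\sum_{a}z_{a}=1$, and therefore lies in $\Delta\B$ (this is the ``positive part, renormalized'' projection, as in the proof of Lemma~\ref{lem:l1}). Then $\|y-z\|_{1}=\sum_{a\in A^{-}}|y_{a}|+\sum_{a\in A^{+}}|y_{a}-y_{a}/S^{+}|$; the first sum equals $|S^{-}|$, and in the second sum $1-1/S^{+}\geq0$, so it equals $(1-1/S^{+})S^{+}=S^{+}-1=|S^{-}|$. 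Thus $\|y-z\|_{1}=2|S^{-}|=\sum_{a}|y_{a}|-1$, giving $d_{1}(y,\Delta\B)\leq\sum_{a}|y_{a}|-1$ and, together with the lower bound, the claimed equality. There is no real obstacle here; the only points requiring care are the sign bookkeeping (that $S^{+}\geq1$ and that the target value is $2|S^{-}|$) and the observation that the same $z$ achieving the upper bound shows the infimum is attained. Alternatively one could derive the lower bound from $\ell_{\infty}$--$\ell_{1}$ duality together with a minimax argument, exactly as in Lemma~\ref{lem:dist-polytope}, but the explicit test functional above is shorter.
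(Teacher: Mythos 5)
Your proposal is correct and follows essentially the same route as the paper: the upper bound uses the identical witness $z_a=\max(y_a,0)/\sum_b\max(y_b,0)$, and your lower bound via the $\pm1$ test functional is just the dual-norm phrasing of the paper's term-by-term triangle-inequality estimate (both reduce to the observation that $\sum_{a:y_a\geq0}y_a\geq1\geq\sum_{a:y_a\geq0}z_a$). No gaps.
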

\begin{proof}
Define $\B_{\pm}$ by

\[
\B_{+}:=\SC{a\in\B}{y_{a}\geq0}
\]

\[
\B_{-}:=\SC{a\in\B}{y_{a}<0}
\]

For any $y'\in\Delta\B$, we have

\begin{align*}
\Norm{y-y'}_{1} & =\sum_{a\in\B}\Abs{y_{a}-y'_{a}}\\
 & =\sum_{a\in\B_{+}}\Abs{y_{a}-y'_{a}}+\sum_{a\in\B_{-}}\Abs{y_{a}-y'_{a}}\\
 & \geq\Abs{\sum_{a\in\B_{+}}\left(y_{a}-y'_{a}\right)}+\sum_{a\in\B_{-}}\left(y'_{a}-y_{a}\right)\\
 & =\Abs{\sum_{a\in\B_{+}}y_{a}-\sum_{a\in\B_{+}}y'_{a}}+\sum_{a\in\B_{-}}y'_{a}-\sum_{a\in\B_{-}}y_{a}\\
 & =\sum_{a\in\B_{+}}\Abs{y_{a}}-\sum_{a\in\B_{+}}y'_{a}+\sum_{a\in\B_{-}}y'_{a}+\sum_{a\in\B_{-}}\Abs{y_{a}}\\
 & =\sum_{a\in\B}\Abs{y_{a}}-\sum_{a\in\B_{+}}y'_{a}+\sum_{a\in\B_{-}}y'_{a}\\
 & \geq\sum_{a\in\B}\Abs{y_{a}}-\sum_{a\in\B_{+}}y'_{a}-\sum_{a\in\B_{-}}y'_{a}\\
 & =\sum_{a\in\B}\Abs{y_{a}}-1
\end{align*}

Here, we used the fact that $\sum_{a\in\B_{+}}y_{a}\geq1\geq\sum_{a\in\B_{+}}y'_{a}$
on the 5th line.

Conversly, define $y^{*}\in\Delta\B$ by

\[
y_{a}^{*}:=\frac{\max\left(y_{a},0\right)}{\sum_{b\in\B}\max\left(y_{b},0\right)}
\]

We have,

\begin{align*}
\Norm{y-y^{*}}_{1} & =\sum_{a\in\B}\Abs{y_{a}-\frac{\max\left(y_{a},0\right)}{\sum_{b\in\B}\max\left(y_{b},0\right)}}\\
 & =\sum_{a\in\B_{+}}\Abs{y_{a}-\frac{\max\left(y_{a},0\right)}{\sum_{b\in\B}\max\left(y_{b},0\right)}}+\sum_{a\in\B_{-}}\Abs{y_{a}-\frac{\max\left(y_{a},0\right)}{\sum_{b\in\B}\max\left(y_{b},0\right)}}\\
 & =\sum_{a\in\B_{+}}\Abs{y_{a}-\frac{y_{a}}{\sum_{b\in\B_{+}}y_{b}}}+\sum_{a\in\B_{-}}\Abs{y_{a}}\\
 & =\left(1-\frac{1}{\sum_{b\in\B_{+}}y_{b}}\right)\sum_{a\in\B_{+}}y_{a}+\sum_{a\in\B_{-}}\Abs{y_{a}}\\
 & =\sum_{a\in\B_{+}}y_{a}-1+\sum_{a\in\B_{-}}\Abs{y_{a}}\\
 & =\sum_{a\in\B}\Abs{y_{a}}-1
\end{align*}
\end{proof}
Since Proposition \ref{prop:simplex-full-s} is a special case of
Proposition \ref{prop:simplex-s} (for $\mathcal{E}=\B$), it is sufficient
to prove the latter. The proof proceeds by connecting an arbitrary
point in $y\in\U^{\flat}\setminus\Delta\B$ by a straight line segment
with the point in $\U^{+}$ in which the maximum on the right hand
side is achieved. The point where this segment meets the boundary
of the simplex is used to produce an upper bound on the distance from
$y$ to $\U^{+}$. 
\begin{proof}
[Proof of Proposition \ref{prop:simplex-s}]Define $y^{*}$ by

\[
y^{*}:=\Argmax{y\in\U^{+}}\min_{a\in\mathcal{E}}y_{a}
\]

Consider some $y\in\U^{\flat}\setminus\Delta\B$. Define $a^{*}$
by

\[
a^{*}:=\Argmax{a\in\B:y_{a}<0}\frac{\Abs{y_{a}}}{\Abs{y_{a}}+y_{a}^{*}}
\]

Notice that $a^{*}\in\mathcal{E}$ since for any $a\not\in\mathcal{E}$,
$y_{a}=0$. Define $t^{*}$ by

\begin{align*}
t^{*} & :=\max_{a\in\B:y_{a}<0}\frac{\Abs{y_{a}}}{\Abs{y_{a}}+y_{a}^{*}}\\
 & =\frac{\Abs{y_{a^{*}}}}{\Abs{y_{a^{*}}}+y_{a^{*}}^{*}}
\end{align*}

And, define $y'$ by

\[
y':=t^{*}y^{*}+\left(1-t^{*}\right)y
\]

Finally, define $\B_{\pm}$ by

\[
\B_{+}:=\SC{a\in\B}{y_{a}\geq0}
\]

\[
\B_{-}:=\SC{a\in\B}{y_{a}<0}
\]

Then, for any $a\in\B_{+}$

\begin{align*}
y'_{a} & =t^{*}y_{a}^{*}+\left(1-t^{*}\right)y_{a}\\
 & \geq0
\end{align*}

And, for any $a\in\B_{-}$

\begin{align*}
y'_{a} & =t^{*}y_{a}^{*}+\left(1-t^{*}\right)y_{a}\\
 & =y_{a}+t^{*}\left(y_{a}^{*}-y_{a}\right)\\
 & =y_{a}+t^{*}\left(\Abs{y_{a}}+y_{a}^{*}\right)\\
 & \geq y_{a}+\frac{\Abs{y_{a}}}{\Abs{y_{a}}+y_{a}^{*}}\cdot\left(\Abs{y_{a}}+y_{a}^{*}\right)\\
 & =y_{a}+\Abs{y_{a}}\\
 & =0
\end{align*}

We conclude $y'\in\Delta\B$. Since it's also clear that $y'\in\U^{\flat}$,
we get $y'\in\U^{+}$. Hence,

\begin{align*}
d_{1}\left(y,\U^{+}\right) & \leq\Norm{y'-y}_{1}\\
 & =\Norm{\frac{t^{*}}{1-t^{*}}\cdot\left(y^{*}-y'\right)}_{1}\\
 & =\frac{t^{*}}{1-t^{*}}\cdot\Norm{y^{*}-y'}_{1}\\
 & \leq\frac{t^{*}}{1-t^{*}}\cdot\left(\Norm{y^{*}}_{1}+\Norm{y'}_{1}\right)\\
 & =\frac{2t^{*}}{1-t^{*}}\\
 & =\frac{2\Abs{y_{a^{*}}}}{y_{a^{*}}^{*}}
\end{align*}

Moreover, by Lemma \ref{lem:simplex-dist}

\begin{align*}
d_{1}\left(y,\Delta\B\right) & =\sum_{a\in\B}\Abs{y_{a}}-1\\
 & =\Abs{y_{a^{*}}}+\sum_{a\in\B\setminus a^{*}}\Abs{y_{a}}-1\\
 & \geq\Abs{y_{a^{*}}}+\sum_{a\in\B\setminus a^{*}}y_{a}-1\\
 & =\Abs{y_{a^{*}}}+\left(1-y_{a^{*}}\right)-1\\
 & =2\Abs{y_{a^{*}}}
\end{align*}

Combining the last two inequalites, we get

\begin{align*}
\frac{d_{1}\left(y,\Delta\B\right)}{d_{1}\left(y,\U^{+}\right)} & \geq2\Abs{y_{a^{*}}}\cdot\frac{y_{a^{*}}^{*}}{2\Abs{y_{a^{*}}}}\\
 & =y_{a^{*}}^{*}\\
 & \geq\min_{a\in\mathcal{E}}y_{a}^{*}\\
 & =\max_{y\in\U^{+}}\min_{a\in\mathcal{E}}y_{a}
\end{align*}

By Lemma \ref{lem:l1}, $d_{1}$ is the same thing as $d_{Y}$. And,
since this holds for \emph{any} $y\in\U^{\flat}\setminus\Delta\B$,
we conclude

\[
\sin\left(\U^{\flat},\Delta\B\right)\geq\max_{y\in\U^{+}}\min_{i\in\mathcal{E}}y_{i}
\]
\end{proof}

\subsection{Sine with Ball}

The following elementary inequality will be needed for the proof of
Proposition \ref{prop:ball-s}.
\begin{lem}
\label{lem:sqrt-inequality}The following inequality holds for any
$\alpha\in[0,\frac{\pi}{2}]$ and $t>\sin\alpha$:

\[
\frac{\sqrt{t^{2}+\cos^{2}\alpha}-1}{t-\sin\alpha}\geq\sin\alpha
\]
\end{lem}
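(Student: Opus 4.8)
The plan is to reduce the inequality to a manifestly true statement by clearing the (positive) denominator and squaring. First I would abbreviate $s := \sin\alpha$ and $c := \cos\alpha$, so that $s,c \in [0,1]$ and $s^2 + c^2 = 1$. Since $t > s \geq 0$, the denominator $t - \sin\alpha$ is strictly positive, and moreover $t^2 + c^2 \geq s^2 + c^2 = 1$, so $\sqrt{t^2 + c^2} \geq 1$. Hence the claimed inequality is equivalent to
\[
\sqrt{t^2 + c^2} \;\geq\; 1 + s\,(t - s).
\]

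Next I would rewrite the right-hand side using $1 - s^2 = c^2$ as $1 + s(t-s) = c^2 + st$, which is nonnegative because $s,t,c \geq 0$. Both sides of the displayed inequality are therefore nonnegative, so it is equivalent to the inequality obtained by squaring, namely $t^2 + c^2 \geq (c^2 + st)^2 = c^4 + 2c^2 st + s^2 t^2$. Then I would simplify the difference of the two sides, once more substituting $1 - s^2 = c^2$ and $1 - c^2 = s^2$:
\[
t^2 + c^2 - c^4 - 2c^2 st - s^2 t^2 = t^2 c^2 + c^2 s^2 - 2 c^2 st = c^2 (t - s)^2 \;\geq\; 0.
\]
Reversing the chain of equivalences yields the claim.

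There is essentially no obstacle: every step is reversible, and the only facts needing a word of justification are that the denominator is positive and that $c^2 + st \geq 0$, both of which follow immediately from $t > s \geq 0$ and $c \geq 0$. The boundary cases are subsumed automatically — at $\alpha = 0$ one has $c = 1,\ s = 0$ and the inequality reads $\sqrt{t^2+1} \geq 1$, while at $\alpha = \tfrac{\pi}{2}$ one has $c = 0,\ s = 1$ and it reads $t - 1 \geq t - 1$, an equality (so the bound is tight, which is consistent with its use in the proof of Proposition \ref{prop:ball-s}, where a tangent line to the ball gives sine zero).
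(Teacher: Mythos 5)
Your proof is correct and is essentially the paper's argument run in reverse: the paper starts from $\cos^{2}\alpha\,(t-\sin\alpha)^{2}\geq0$ and unwinds it to the claim, while you reduce the claim to that same nonnegative square, taking care (as needed for the reverse direction) that both sides are nonnegative before squaring. No gaps.
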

\begin{proof}
We have

\begin{align*}
t^{2}-2\left(\sin\alpha\right)t+1-\cos^{2}\alpha & =t^{2}-2\left(\sin\alpha\right)t+\sin^{2}\alpha\\
 & =\left(t-\sin\alpha\right)^{2}\\
 & >0
\end{align*}

Multiplying both sides by $\cos^{2}\alpha$ we get

\begin{align*}
\left(\cos^{2}\alpha\right)t^{2}-2\left(\cos^{2}\alpha\right)\left(\sin\alpha\right)t+\cos^{2}\alpha-\cos^{4}\alpha & \geq0
\end{align*}

Applying the identity $\cos^{2}\alpha=1-\sin^{2}\alpha$ to the first
term on the left hand side, we get

\[
t^{2}-(\sin^{2}\alpha)t^{2}-2\left(\cos^{2}\alpha\right)\left(\sin\alpha\right)t+\cos^{2}\alpha-\cos^{4}\alpha\geq0
\]

Now we move all the negative terms to the right hand side:

\begin{align*}
t^{2}+\cos^{2}\alpha & \geq\left(\sin^{2}\alpha\right)t^{2}+2\left(\cos^{2}\alpha\right)\left(\sin\alpha\right)t+\cos^{4}\alpha\\
 & =\left(\left(\sin\alpha\right)t+\cos^{2}\alpha\right)^{2}
\end{align*}

Taking square root of both sides, we get

\begin{align*}
\sqrt{t^{2}+\cos^{2}\alpha} & \geq\left(\sin\alpha\right)t+\cos^{2}\alpha\\
 & =\left(\sin\alpha\right)t+1-\sin^{2}\alpha
\end{align*}

Moving 1 to the left hand side we get

\[
\sqrt{t^{2}+\cos^{2}\alpha}-1\geq\left(\sin\alpha\right)\left(t-\sin\alpha\right)
\]

Finally, we divide both sides by $t-\sin\alpha$ and conclude that

\[
\frac{\sqrt{t^{2}+\cos^{2}\alpha}-1}{t-\sin\alpha}\geq\sin\alpha
\]
\end{proof}
When $\D$ is a ball, the induced metric on its affine hull is Euclidean.
For any $y\in\U^{\flat}\setminus\D$, we connect $y$ by a straight
line segment to the point of $\U^{\flat}$ nearest to the center of
the ball. The point where this segment meets the sphere is the nearest
point to $y$ inside $\U^{+}$.
\begin{proof}
[Proof of Proposition \ref{prop:ball-s}]Define $\U_{0}$ by

\[
\U_{0}:=\SC{y\in\R^{n}}{\Vxx y1\in\U}
\]

We also denote by $\D_{0}$ the closed unit ball in $\R^{n}$. Define
$y^{*}$ and $\rho$ by

\[
y^{*}:=\Argmin{y\in\U_{0}}\Norm y_{2}
\]

\begin{align*}
\rho & :=\min_{y\in\U_{0}}\Norm y_{2}\\
 & =\Norm{y^{*}}_{2}
\end{align*}

Consider some $y\in\U_{0}$ s.t. $\Norm y_{2}>1$. Define $t$ by

\[
t:=\Norm{y-y^{*}}_{2}
\]

Denoting minimal $\ell_{2}$ distance by $d_{2}$, we have

\begin{align*}
d_{2}\left(y,\D_{0}\right) & =\Norm y_{2}-1\\
 & =\Norm{y^{*}+\left(y-y^{*}\right)}_{2}-1\\
 & =\sqrt{\Norm{y^{*}}_{2}^{2}+\Norm{y-y^{*}}_{2}^{2}+2\T{\left(y-y^{*}\right)}y^{*}}-1\\
 & =\sqrt{\rho^{2}+t^{2}}-1
\end{align*}

Here, we observed that $\T{\left(y-y^{*}\right)}y^{*}=0$ due to the
definition of $y^{*}$.

Let $y'$ be a point on the straight line segment $y^{*}y$ at distance
$\sqrt{1-\rho^{2}}$ from $y^{*}$. Then, $\Norm{y'}=1$, and hence
$y'\in\U_{0}\cap\D_{0}$ and

\begin{align*}
d_{2}\left(y,\U_{0}\cap\D_{0}\right) & \leq\Norm{y-y'}_{2}\\
 & =t-\sqrt{1-\rho^{2}}
\end{align*}

We get,

\begin{align*}
\frac{d_{2}\left(y,\D_{0}\right)}{d_{2}\left(y,\U_{0}\cap\D_{0}\right)} & \geq\frac{\sqrt{t^{2}+\rho^{2}}-1}{t-\sqrt{1-\rho^{2}}}\\
 & \geq\sqrt{1-\rho^{2}}
\end{align*}

Here, we used Lemma \ref{lem:sqrt-inequality} with $\alpha=\arccos\rho$
on the second line.

By rotational symmetry, the metric on $\mu^{-1}(1)$ is Euclidean
up to a scalar. Hence, we can conclude

\[
\sin\left(\U^{\flat},\D\right)\geq\sqrt{1-\rho^{2}}
\]
\end{proof}

\section{\label{sec:prob-sys}Probability Systems (Proofs)}

\subsection{Single Conditional Probability}

We prove Proposition \ref{prop:cond-s} first and Proposition \ref{prop:sys-probs}
second, because the latter proof will use a special case of the former
propostion.

In order to prove Proposition \ref{prop:cond-s}, we will need the
following lemma. For any point $y$ in the affine hull of the simplex
$\Delta\B$, it characterizes some (actually all) points in $\Delta\B$
that are at minimal $\ell_{1}$ distance from $y$.
\begin{lem}
\label{lem:simplex-projection}Let $\B$ be a finite set, $y\in\R^{\B}$
s.t. $\sum_{a}y_{a}=1$ and $\xi\in\Delta\B$. Assume that for any
$a\in\B$:
\begin{itemize}
\item If $y_{a}<0$ then $\xi_{a}=0$.
\item If $y_{a}\geq0$ then $\xi_{a}\leq y_{a}$.
\end{itemize}
Then,

\[
\Norm{y-\xi}_{1}=d_{1}\left(y,\Delta\B\right)
\]
\end{lem}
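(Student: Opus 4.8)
The plan is to reduce the statement to Lemma~\ref{lem:simplex-dist}, which already gives $d_1(y,\Delta\B) = \sum_{a\in\B}\Abs{y_a} - 1$ under the hypothesis $\sum_a y_a = 1$. Thus it suffices to show $\Norm{y-\xi}_1 = \sum_{a\in\B}\Abs{y_a}-1$; since $\xi\in\Delta\B$, the inequality $\Norm{y-\xi}_1 \geq d_1(y,\Delta\B)$ is automatic, so the real content is this exact identity, and the two structural hypotheses on $\xi$ are precisely what forces equality (they say $\xi$ is ``dominated by the positive part of $y$'', so no mass is wasted against the negative coordinates).

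First I would partition $\B$ into $\B_- := \SC{a\in\B}{y_a<0}$ and $\B_+ := \SC{a\in\B}{y_a\geq0}$, and split $\Norm{y-\xi}_1 = \sum_{a\in\B_-}\Abs{y_a-\xi_a} + \sum_{a\in\B_+}\Abs{y_a-\xi_a}$. On $\B_-$ the first hypothesis gives $\xi_a=0$, hence $\Abs{y_a-\xi_a}=\Abs{y_a}$. On $\B_+$ the second hypothesis together with $\xi_a\geq0$ gives $0\leq\xi_a\leq y_a$, hence $\Abs{y_a-\xi_a}=y_a-\xi_a=\Abs{y_a}-\xi_a$. Adding these contributions,
\[
\Norm{y-\xi}_1 = \sum_{a\in\B}\Abs{y_a} - \sum_{a\in\B_+}\xi_a .
\]

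Since $\xi_a=0$ for every $a\in\B_-$ and $\xi\in\Delta\B$, we have $\sum_{a\in\B_+}\xi_a = \sum_{a\in\B}\xi_a = 1$, so $\Norm{y-\xi}_1 = \sum_{a\in\B}\Abs{y_a}-1 = d_1(y,\Delta\B)$ by Lemma~\ref{lem:simplex-dist}, which is exactly the claim. I do not anticipate a genuine obstacle: the only thing to watch is the sign bookkeeping in the case split (in particular, using $\xi_a\geq0$ to remove the absolute value on $\B_+$ and $\xi_a = 0$ to remove it on $\B_-$). Note that Lemma~\ref{lem:l1} is not needed here, since the assertion is phrased directly in terms of the $\ell_1$ distance rather than the norm on $\Y$.
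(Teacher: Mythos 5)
Your proof is correct and follows essentially the same route as the paper's: split the sum over the sign of $y_a$, use the two hypotheses to evaluate each absolute value, observe that $\sum_{a:y_a\geq 0}\xi_a=1$, and invoke Lemma~\ref{lem:simplex-dist}. No gaps.
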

\begin{proof}
We have,

\begin{align*}
\Norm{y-\xi}_{1} & =\sum_{a\in\B}\Abs{y_{a}-\xi_{a}}\\
 & =\sum_{a\in\B:y_{a}<0}\Abs{y_{a}-\xi_{a}}+\sum_{a\in\B:y_{a}\geq0}\Abs{y_{a}-\xi_{a}}\\
 & =\sum_{a\in\B:y_{a}<0}\Abs{y_{a}}+\sum_{a\in\B:y_{a}\geq0}\left(y_{a}-\xi_{a}\right)\\
 & =\sum_{a\in\B:y_{a}<0}\Abs{y_{a}}+\sum_{a\in\B:y_{a}\geq0}\Abs{y_{a}}-\sum_{a\in\B:y_{a}\geq0}\xi_{a}\\
 & =\sum_{a\in\B}\Abs{y_{a}}-\sum_{a\in\B:y_{a}\geq0}\xi_{a}-\sum_{a\in\B:y_{a}<0}\xi_{a}\\
 & =\sum_{a\in\B}\Abs{y_{a}}-\sum_{a\in\B}\xi_{a}\\
 & =\sum_{a\in\B}\Abs{y_{a}}-1\\
 & =d_{1}\left(y,\Delta\B\right)
\end{align*}

Here, we used Lemma \ref{lem:simplex-dist} on the last line.
\end{proof}
The proof of Proposition \ref{prop:cond-s} works by constructing
a point in $\Delta\B$ that satisfies the conditions of the previous
lemma, while staying inside $\U$.

\global\long\def\Ep{\mathcal{E}_{+}}%

\global\long\def\Em{\mathcal{E}_{-}}%

\global\long\def\Eb{\mathcal{E}_{0}}%

\begin{proof}
[Proof of Proposition \ref{prop:cond-s}]Denote $\mathcal{E}_{+}:=\Econd\cap\Event$,
$\Em:=\Econd\setminus\Event$, $\Eb:=\B\setminus\Econd$. For any
$y\in\R^{\B}$, $y\in\U$ if and only if

\begin{equation}
(1-p)\sum_{a\in\Ep}y_{a}=p\sum_{a\in\Em}y_{a}\label{eq:u-cond}
\end{equation}

Consider any $y\in\U^{\flat}\setminus\Delta\B$. By Lemma \ref{lem:l1},
it's enough to prove that

\[
d_{1}\left(y,\U^{+}\right)=d_{1}\left(y,\Delta\B\right)
\]

We split the proof into 3 cases. The first case is when

\[
\sum_{a\in\Ep}y_{a}\leq0
\]

\[
\sum_{a\in\Em}y_{a}\leq0
\]

We will use the notation $y_{a}^{+}:=\max(y_{a},0)$. Observe that

\begin{align*}
\sum_{b\in\Eb}y_{b}^{+} & \geq\sum_{b\in\Eb}y_{b}\\
 & =\sum_{b\in\B}y_{b}-\sum_{a\in\Econd}y_{a}\\
 & =1-\sum_{a\in\Ep}y_{a}-\sum_{a\in\Em}y_{a}\\
 & \geq1
\end{align*}

Define $\xi\in\Delta\B$ by

\[
\xi_{a}:=\begin{cases}
\frac{y_{a}^{+}}{\sum_{b\in\Eb}y_{b}^{+}} & \text{if }a\in\Eb\\
0 & \text{if }a\in\Econd
\end{cases}
\]

We have $\xi\in\U^{+}$, since both sides of equation (\ref{eq:u-cond})
vanish when substituing $\xi$. Hence, we can apply Lemma \ref{lem:simplex-projection}
to conclude

\begin{align*}
d_{1}\left(y,\U^{+}\right) & \leq\Norm{y-\xi}_{1}\\
 & =d_{1}\left(y,\Delta\B\right)
\end{align*}

The second case is when

\[
\sum_{a\in\Ep}y_{a}\geq0
\]

\[
\sum_{a\in\Em}y_{a}\geq0
\]

\[
\sum_{a\in\Eb}y_{a}\geq0
\]

Define $\xi\in\R^{\B}$ by

\begin{align*}
\xi_{a} & :=\begin{cases}
\frac{\sum_{b\in\Ep}y_{b}}{\sum_{b\in\Ep}y_{b}^{+}}\cdot y_{a}^{+} & \text{if }a\in\Ep\\
\\
\frac{\sum_{b\in\Em}y_{b}}{\sum_{b\in\Em}y_{b}^{+}}\cdot y_{a}^{+} & \text{if }a\in\Em\\
\\
\frac{\sum_{b\in\Eb}y_{b}}{\sum_{b\in\Eb}y_{b}^{+}}\cdot y_{a}^{+} & \text{if }a\in\Eb
\end{cases}
\end{align*}

\global\long\def\Es{\mathcal{E}_{*}}%

Here, if the denominator vanishes then $y_{a}^{+}$ must vanish as
well, as we interpret the entire expression as $0$. 

Let $\Es$ be any of $\Ep$, $\Em$, $\Eb$. If $\sum_{b\in\Es}y_{b}^{+}>0$
then, 

\begin{align*}
\sum_{a\in\Es}\xi_{a} & =\sum_{a\in\Es}\frac{\sum_{b\in\Es}y_{b}}{\sum_{b\in\Es}y_{b}^{+}}\cdot y_{a}^{+}\\
 & =\frac{\sum_{b\in\Es}y_{b}}{\sum_{b\in\Es}y_{b}^{+}}\cdot\sum_{a\in\Es}y_{a}^{+}\\
 & =\sum_{b\in\Es}y_{b}
\end{align*}

On the other hand, if $\sum_{b\in\Es}y_{b}^{+}=0$ then

\begin{align*}
0 & =\sum_{b\in\Es}y_{b}^{+}\\
 & \geq\sum_{b\in\Es}y_{b}\\
 & \geq0
\end{align*}

Here, we used the case conditions on the last line. We conclude that
$\sum_{b\in\Es}y_{b}=0$ and therefore

\begin{align*}
\sum_{a\in\Es}\xi_{a} & =\sum_{a\in\Es}0\\
 & =0\\
 & =\sum_{b\in\Es}y_{b}
\end{align*}

We got that \emph{either} way

\[
\sum_{a\in\Es}\xi_{a}=\sum_{b\in\Es}y_{b}
\]

It follows that,

\begin{align*}
\sum_{a\in\B}\xi_{a} & =\sum_{a\in\Ep}\xi_{a}+\sum_{a\in\Em}\xi_{a}+\sum_{a\in\Eb}\xi_{a}\\
 & =\sum_{b\in\Ep}y_{b}+\sum_{b\in\Em}y_{b}+\sum_{b\in\Eb}y_{b}\\
 & =\sum_{b\in\B}y_{b}\\
 & =1
\end{align*}

Also, the case conditions imply that $\xi_{a}\geq0$. We conclude
that $\xi\in\Delta\B$. 

Now, let's check equation (\ref{eq:u-cond}) for $\xi$:

\begin{align*}
(1-p)\sum_{a\in\Ep}\xi_{a} & =(1-p)\sum_{b\in\Ep}y_{b}\\
 & =p\sum_{b\in\Em}y_{b}\\
 & =p\sum_{a\in\Em}\xi_{a}
\end{align*}

Here, we used equation (\ref{eq:u-cond}) for $y$ on the second line.

Therefore, $\xi\in\U\cap\Delta\B$. Moreover, it's easy to see that
Lemma \ref{lem:simplex-projection} is applicable and hence,

\begin{align*}
d_{1}\left(y,\U^{+}\right) & \leq\Norm{y-\xi}_{1}\\
 & =d_{1}\left(y,\Delta\B\right)
\end{align*}

The third case is when

\[
\sum_{a\in\Ep}y_{a}\geq0
\]

\[
\sum_{a\in\Em}y_{a}\geq0
\]

\[
\sum_{a\in\Eb}y_{a}<0
\]

Define $\xi\in\R^{\B}$ by

\[
\xi_{a}:=\begin{cases}
\frac{p}{\sum_{b\in\Ep}y_{b}^{+}}\cdot y_{a}^{+} & \text{if }a\in\Ep\\
\\
\frac{1-p}{\sum_{b\in\Em}y_{b}^{+}}\cdot y_{a}^{+} & \text{if }a\in\Em\\
\\
0 & \text{if }a\in\Eb
\end{cases}
\]

Here, if the denominator vanishes then $y_{a}^{+}$ must vanish as
well, as we interpret the entire expression as $0$. 

If $\sum_{b\in\Ep}y_{b}^{+}>0$ then,

\begin{align*}
\sum_{a\in\Ep}\xi_{a} & =\sum_{a\in\Ep}\frac{p}{\sum_{b\in\Ep}y_{b}^{+}}\cdot y_{a}^{+}\\
 & =\frac{p}{\sum_{b\in\Ep}y_{b}^{+}}\cdot\sum_{a\in\Ep}y_{a}^{+}\\
 & =p
\end{align*}

On the other hand, if $\sum_{b\in\Ep}y_{b}^{+}=0$ then, like in the
second case, $\sum_{a\in\Ep}y_{a}=0$. By equation (\ref{eq:u-cond})
this implies that either $p=0$ or $\sum_{a\in\Em}y_{a}=0$. However,
the latter would imply $\sum_{a\in\Eb}y_{a}=1$ which contradicts
the case conditions. Therefore, $p=0$ and in particular

\[
\sum_{a\in\Ep}\xi_{a}=p
\]

We got that this identity holds either way. By analogous reasoning,

\[
\sum_{a\in\Em}\xi_{a}=1-p
\]

It follows that,

\begin{align*}
\sum_{a\in\B}\xi_{a} & =\sum_{a\in\Ep}\xi_{a}+\sum_{a\in\Em}\xi_{a}+\sum_{a\in\Eb}\xi_{a}\\
 & =p+1-p\\
 & =1
\end{align*}

We conclude that $\xi\in\Delta\B$.

Now, let's check equation (\ref{eq:u-cond}) for $\xi$:

\begin{align*}
(1-p)\sum_{a\in\Ep}\xi_{a} & =(1-p)p\\
 & =p\sum_{a\in\Em}\xi_{a}
\end{align*}

Therefore, $\xi\in\U^{+}$.

We have,

\begin{align*}
\sum_{a\in\Ep}y_{a}^{+} & \geq\sum_{a\in\Ep}y_{a}\\
 & =p\sum_{a\in\Ep}y_{a}+(1-p)\sum_{a\in\Ep}y_{a}\\
 & =p\sum_{a\in\Ep}y_{a}+p\sum_{a\in\Em}y_{a}\\
 & =p\left(\sum_{a\in\Ep}y_{a}+\sum_{a\in\Em}y_{a}\right)\\
 & =p\left(1-\sum_{a\in\Eb}y_{a}\right)\\
 & \geq p
\end{align*}

Here, we used equation (\ref{eq:u-cond}) on the second line and the
case conditions on the last line. By analogous reasoning,

\[
\sum_{a\in\Em}y_{a}^{+}\geq1-p
\]

These inequalities allow us to apply Lemma \ref{lem:simplex-projection}
and conclude that

\begin{align*}
d_{1}\left(y,\U^{+}\right) & \leq\Norm{y-\xi}_{1}\\
 & =d_{1}\left(y,\Delta\B\right)
\end{align*}
\end{proof}

\subsection{System of Absolute Probabilities}

The following lemma will be needed for the proof of Proposition \ref{prop:sys-probs}.
It shows that, given a probability distribution $\psi$ over the subsets
of a fixed finite set $\PSet$, the probability $\alpha_{i}(\psi)$
that a particular element $i$ is in the set can be modified to any
other value $p$ without affecting the probabilities of other elements
$j\ne i$, in such a manner that the total variation distance between
the new distribution $\psi'$ and the original $\psi$ is equal to
the probability shift $|\alpha_{i}(\psi)-p|$.
\begin{lem}
\label{lem:fix-prob}Let $\PSet$ be a finite set. For every $i\in\PSet$,
define $\alpha_{i}:\Delta2^{\PSet}\rightarrow[0,1]$ by

\[
\alpha_{i}(\psi):=\Pr_{A\sim\psi}\left[i\in A\right]
\]

Consider any $i\in\PSet$, $\psi\in\Delta2^{\PSet}$ and $p\in[0,1]$.
Then, there exists $\psi'\in\Delta2^{\PSet}$ s.t. the following conditions
hold:
\begin{itemize}
\item $\alpha_{i}(\psi')=p$
\item For any $j\in\PSet\setminus i$, $\alpha_{j}(\psi')=\alpha_{j}(\psi)$
\item $\Norm{\psi-\psi'}_{1}=2\Abs{\alpha_{i}(\psi)-p}$
\end{itemize}
\end{lem}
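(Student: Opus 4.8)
The plan is to construct $\psi'$ explicitly by shifting mass only between pairs of sets that differ exactly in whether they contain $i$. For each $B\subseteq\PSet\setminus i$, the distribution $\psi$ assigns masses $\psi(B)$ and $\psi(B\cup\{i\})$ to the two sets $B$ and $B\cup\{i\}$; write $s_B:=\psi(B)+\psi(B\cup\{i\})$. Note $\sum_B s_B=1$ and $\alpha_i(\psi)=\sum_B\psi(B\cup\{i\})$. The idea is to define $\psi'$ so that within each pair the total mass $s_B$ is preserved but the split is adjusted. Concretely, when $p\geq\alpha_i(\psi)$ we move mass from the ``$i\notin$'' side to the ``$i\in$'' side: we want to add a total of $p-\alpha_i(\psi)$ to the $i$-containing sets, drawn proportionally from the $\psi(B)$'s. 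Provided $p-\alpha_i(\psi)\le\sum_B\psi(B)=1-\alpha_i(\psi)$ (which holds since $p\le1$), we can set, for each $B$,
\[
\psi'(B):=\psi(B)-t\,\psi(B),\qquad \psi'(B\cup\{i\}):=\psi(B\cup\{i\})+t\,\psi(B),
\]
where $t:=\dfrac{p-\alpha_i(\psi)}{1-\alpha_i(\psi)}\in[0,1]$ (with the degenerate case $\alpha_i(\psi)=1$ handled separately: then necessarily $p\le1=\alpha_i(\psi)$, so we are in the other branch). The symmetric branch, when $p<\alpha_i(\psi)$, moves mass the other way: $\psi'(B\cup\{i\}):=\psi(B\cup\{i\})-t'\psi(B\cup\{i\})$ and $\psi'(B):=\psi(B)+t'\psi(B\cup\{i\})$ with $t':=\dfrac{\alpha_i(\psi)-p}{\alpha_i(\psi)}$.

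First I would check $\psi'\in\Delta2^{\PSet}$: all entries are manifestly nonnegative (each is a convex-type combination of nonnegative numbers), and the total is unchanged because every transferred quantity is added to one set and subtracted from another, so $\sum\psi'=\sum\psi=1$. Next, $\alpha_i(\psi')=\sum_B\psi'(B\cup\{i\})=\sum_B\psi(B\cup\{i\})+t\sum_B\psi(B)=\alpha_i(\psi)+t(1-\alpha_i(\psi))=p$ in the first branch, and analogously $=\alpha_i(\psi)-t'\alpha_i(\psi)=p$ in the second. For the second bullet, fix $j\in\PSet\setminus i$. Then $\alpha_j(\psi')=\sum_{B:j\in B}(\psi'(B)+\psi'(B\cup\{i\}))$, and since within each pair the total mass $s_B$ is preserved ($\psi'(B)+\psi'(B\cup\{i\})=\psi(B)+\psi(B\cup\{i\})$), this equals $\sum_{B:j\in B}(\psi(B)+\psi(B\cup\{i\}))=\alpha_j(\psi)$. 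Finally, for the total variation distance: in the first branch the only changed coordinates are, for each $B$, a decrease of $t\psi(B)$ at $B$ and an increase of $t\psi(B)$ at $B\cup\{i\}$, so
\[
\Norm{\psi-\psi'}_1=\sum_B\bigl(t\psi(B)+t\psi(B)\bigr)=2t\sum_B\psi(B)=2t(1-\alpha_i(\psi))=2\bigl(p-\alpha_i(\psi)\bigr)=2\Abs{\alpha_i(\psi)-p},
\]
and the second branch gives $2t'\alpha_i(\psi)=2(\alpha_i(\psi)-p)$ identically. (Here I am using the convention, stated in the excerpt's notation section, that $\Norm{\cdot}_1$ on $\Delta X$ for finite $X$ coincides with twice the total variation distance; the statement is phrased with $\Norm{\cdot}_1$, matching the $\ell_1$-norm identification from Lemma~\ref{lem:l1}.)

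I expect the only real subtlety — and the main thing to be careful about — is the boundary/degenerate cases where a denominator vanishes: $\alpha_i(\psi)=1$ in the ``increase'' branch or $\alpha_i(\psi)=0$ in the ``decrease'' branch. In the first, $p\le1=\alpha_i(\psi)$ forces us into the other branch; in the second, $p\ge0=\alpha_i(\psi)$ does the same; and when $p=\alpha_i(\psi)$ exactly, $t=t'=0$ and $\psi'=\psi$ works trivially. So the two branches jointly cover all cases with no ill-defined fractions. Everything else is a routine verification of the three bullet points as above; there is no deep obstacle, only bookkeeping over the pairing $B\leftrightarrow B\cup\{i\}$ of subsets.
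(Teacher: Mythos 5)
Your construction is exactly the paper's: your pairwise mass transfer $\psi'(B)=(1-t)\psi(B)$, $\psi'(B\cup\{i\})=\psi(B\cup\{i\})+t\psi(B)$ with $t=\frac{p-\alpha_i(\psi)}{1-\alpha_i(\psi)}$ is the coordinate-wise form of the paper's convex combination $\psi'=\frac{(1-p)\psi+(p-\alpha_i(\psi))f_*\psi}{1-\alpha_i(\psi)}$ with $f(A)=A\cup\{i\}$, and your explicit second branch replaces the paper's w.l.o.g.\ symmetry reduction. The proof is correct and essentially the same as the paper's.
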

\begin{proof}
We can assume that $p\geq\alpha_{i}(\psi)$ without loss of generality:
otherwise, we can apply to $\psi$ the bijection $2^{\PSet}\rightarrow2^{\PSet}$
defined by taking symmetric difference with $\{i\}$ and change $p$
to $1-p$. Define $f:2^{\PSet}\rightarrow2^{\PSet}$ by

\[
f\left(A\right):=A\cup\{i\}
\]

Set $\psi'$ to

\[
\psi':=\frac{(1-p)\psi+\left(p-\alpha_{i}(\psi)\right)f_{*}\psi}{1-\alpha_{i}(\psi)}
\]

This is a convex combination of $f_{*}\psi$ and $\psi$ and hence
$\psi'\in\Delta2^{\PSet}$. Let's verify the conditions. 

For the first condition, we have

\begin{align*}
\alpha_{i}\left(\psi'\right) & =\frac{(1-p)\alpha_{i}(\psi)+\left(p-\alpha_{i}(\psi)\right)\alpha_{i}(f_{*}\psi)}{1-\alpha_{i}(\psi)}\\
 & =\frac{\alpha_{i}(\psi)-p\alpha_{i}(\psi)+p-\alpha_{i}(\psi)}{1-\alpha_{i}(\psi)}\\
 & =\frac{p\left(1-\alpha_{i}(\psi)\right)}{1-\alpha_{i}(\psi)}\\
 & =p
\end{align*}

Here, we used that $i\in f(A)$ for any $A\in2^{\PSet}$ to conclude
that $\alpha_{i}(f_{*}\psi)=1$.

For the second condition,

\begin{align*}
\alpha_{j}\left(\psi'\right) & =\frac{(1-p)\alpha_{j}(\psi)+\left(p-\alpha_{i}(\psi)\right)\alpha_{j}(f_{*}\psi)}{1-\alpha_{i}(\psi)}\\
 & =\frac{(1-p)\alpha_{j}(\psi)+\left(p-\alpha_{i}(\psi)\right)\alpha_{j}(\psi)}{1-\alpha_{i}(\psi)}\\
 & =\frac{\left(1-p+p-\alpha_{i}(\psi)\right)\alpha_{j}(\psi)}{1-\alpha_{i}(\psi)}\\
 & =\frac{\left(1-\alpha_{i}(\psi)\right)\alpha_{j}(\psi)}{1-\alpha_{i}(\psi)}\\
 & =\alpha_{j}(\psi)
\end{align*}

Here, we used that $j\in f(A)$ if and only if $j\in A$ to conclude
that $\alpha_{j}(f_{*}\psi)=\alpha_{j}(\psi)$.

For the third condition, first observe that

\begin{align*}
\Norm{\psi-f_{*}\psi}_{1} & = & \sum_{A\subseteq\PSet}\Abs{\psi_{A}-f_{*}\psi_{A}}\\
 & = & \sum_{A\subseteq\PSet}\Abs{\psi_{A}-\sum_{B\subseteq\PSet:f(B)=A}\psi_{B}}\\
 & = & \sum_{A\subseteq\PSet:i\in A}\Abs{\psi_{A}-\sum_{B\subseteq\PSet:f(B)=A}\psi_{B}} & +\\
 &  & \sum_{A\subseteq\PSet:i\not\in A}\Abs{\psi_{A}-\sum_{B\subseteq\PSet:f(B)=A}\psi_{B}}\\
 & = & \sum_{A\subseteq\PSet:i\in A}\Abs{\psi_{A}-\left(\psi_{A}+\psi_{A\setminus i}\right)} & +\sum_{A\subseteq\PSet:i\not\in A}\psi_{A}\\
 & = & \sum_{A\subseteq\PSet:i\in A}\psi_{A\setminus i} & +\sum_{A\subseteq\PSet:i\not\in A}\psi_{A}\\
 & = & 2\sum_{A\subseteq\PSet:i\not\in A}\psi_{A}\\
 & = & 2\left(1-\alpha_{i}(\psi)\right)
\end{align*}

It follows that

\begin{align*}
\Norm{\psi-\psi'}_{1} & =\Norm{\psi-\frac{(1-p)\psi+\left(p-\alpha_{i}(\psi)\right)f_{*}\psi}{1-\alpha_{i}(\psi)}}_{1}\\
 & =\Norm{\frac{(1-p)\psi+\left(p-\alpha_{i}(\psi)\right)\psi}{1-\alpha_{i}(\psi)}-\frac{(1-p)\psi+\left(p-\alpha_{i}(\psi)\right)f_{*}\psi}{1-\alpha_{i}(\psi)}}_{1}\\
 & =\Norm{\frac{(1-p)(\psi-\psi)+\left(p-\alpha_{i}(\psi)\right)\left(\psi-f_{*}\psi\right)}{1-\alpha_{i}(\psi)}}_{1}\\
 & =\frac{p-\alpha_{i}(\psi)}{1-\alpha_{i}(\psi)}\cdot\Norm{\psi-f_{*}\psi}_{1}\\
 & =\frac{p-\alpha_{i}(\psi)}{1-\alpha_{i}(\psi)}\cdot2\left(1-\alpha_{i}(\psi)\right)\\
 & =2\left(p-\alpha_{i}(\psi)\right)
\end{align*}
\end{proof}
Another lemma we need for Proposition \ref{prop:sys-probs}: given
two probability distributions on a finite set $B$, if we somehow
``lift'' one of them through a surjection, then we can also lift
the other s.t. the total variation distance between them is preserved.
This is achieved by extracting conditional distributions on the fibers
from the lift, and combining them with the other distribution on $B$.
\begin{lem}
\label{lem:lift-tvd}Let $A,B$ be finite sets, $f:A\rightarrow B$
a surjection, $\xi\in\Delta A$ and $\psi\in\Delta B$. Then, there
exists $\xi'\in\Delta A$ s.t. $f_{*}\xi'=\psi$ and

\[
\Norm{\xi-\xi'}_{1}=\Norm{f_{*}\xi-\psi}_{1}
\]
\end{lem}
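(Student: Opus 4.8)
The plan is to build $\xi'$ separately over each fiber of $f$, replacing $\xi$ on a fiber by the conditional distribution of $\xi$ there, renormalized to have the correct total mass $\psi_b$, and making an arbitrary choice on the fibers where $\xi$ already assigns zero mass.

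First I would fix notation: for $b\in B$ write $A_b:=f^{-1}(b)$, which is non-empty since $f$ is surjective, and put $q_b:=(f_*\xi)_b=\sum_{a\in A_b}\xi_a$. For each $b$ with $q_b=0$, fix an arbitrary $a(b)\in A_b$. Then I would define $\xi'\in\R^A$ by
\[
\xi'_a:=\begin{cases}\dfrac{\psi_{f(a)}}{q_{f(a)}}\,\xi_a & \text{if }q_{f(a)}>0,\\ \psi_{f(a)}\,\boldsymbol{1}_{a=a(f(a))} & \text{if }q_{f(a)}=0.\end{cases}
\]
Since $\psi_b\geq0$ and $\xi_a\geq0$, each $\xi'_a\geq0$; and summing over $A_b$ gives $\sum_{a\in A_b}\xi'_a=\psi_b$ in both cases, hence $\sum_{a\in A}\xi'_a=\sum_{b\in B}\psi_b=1$ and $f_*\xi'=\psi$, so $\xi'\in\Delta A$ with the desired marginal.

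It then remains to compute the $\ell_1$ distance, which I would do fiber by fiber: $\Norm{\xi-\xi'}_1=\sum_{b\in B}\sum_{a\in A_b}\Abs{\xi_a-\xi'_a}$. On a fiber with $q_b>0$ we have $\xi_a-\xi'_a=\xi_a\bigl(1-\psi_b/q_b\bigr)$, so, using $\xi_a\geq0$, that fiber contributes $\Abs{1-\psi_b/q_b}\sum_{a\in A_b}\xi_a=\Abs{q_b-\psi_b}$. On a fiber with $q_b=0$ every $\xi_a$ with $a\in A_b$ vanishes, so that fiber contributes $\sum_{a\in A_b}\xi'_a=\psi_b=\Abs{q_b-\psi_b}$. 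Summing over $b$ gives $\Norm{\xi-\xi'}_1=\sum_{b\in B}\Abs{(f_*\xi)_b-\psi_b}=\Norm{f_*\xi-\psi}_1$, which is the claim.

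The argument is essentially routine; the only step needing a little attention is the handling of the fibers where $f_*\xi$ assigns zero mass, since there the conditional distribution of $\xi$ is undefined and must be replaced by an arbitrary point mass. One checks directly that such a fiber still contributes exactly $\psi_b=\Abs{(f_*\xi)_b-\psi_b}$ to the $\ell_1$ distance, matching the generic case, so nothing is lost.
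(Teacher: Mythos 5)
Your proof is correct and takes essentially the same approach as the paper: rescale $\xi$ on each fiber of $f$ to have total mass $\psi_b$, then compute the $\ell_1$ distance fiber by fiber. In fact your explicit treatment of the fibers with $(f_*\xi)_b=0$ (where the paper's rescaling formula is a $0/0$ expression) is slightly more careful than the paper's own argument.
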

\begin{proof}
Set $\xi'\in\R^{A}$ to

\[
\xi'_{i}:=\frac{\psi_{f(i)}\xi_{i}}{\sum_{j\in f^{-1}\left(f(i)\right)}\xi_{j}}
\]

Let's verify that $\xi'\in\Delta A$:

\begin{align*}
\sum_{i\in\A}\xi_{i} & =\sum_{i\in\A}\frac{\psi_{f(i)}\xi_{i}}{\sum_{j\in f^{-1}\left(f(i)\right)}\xi_{j}}\\
 & =\sum_{k\in B}\sum_{i\in f^{-1}(k)}\frac{\psi_{k}\xi_{i}}{\sum_{j\in f^{-1}\left(k\right)}\xi_{j}}\\
 & =\sum_{k\in B}\frac{\psi_{k}}{\sum_{j\in f^{-1}\left(k\right)}\xi_{j}}\cdot\sum_{i\in f^{-1}(k)}\xi_{i}\\
 & =\sum_{k\in B}\psi_{k}\\
 & =1
\end{align*}

Let's verify that $f_{*}\xi'=\psi$:

\begin{align*}
f_{*}\xi'_{k} & =\sum_{i\in f^{-1}(k)}\xi'_{i}\\
 & =\sum_{i\in f^{-1}(k)}\frac{\psi_{k}\xi_{i}}{\sum_{j\in f^{-1}\left(k\right)}\xi_{j}}\\
 & =\frac{\psi_{k}}{\sum_{j\in f^{-1}\left(k\right)}\xi_{j}}\cdot\sum_{i\in f^{-1}(k)}\xi_{i}\\
 & =\psi_{k}
\end{align*}

Finally, we have

\begin{align*}
\Norm{\xi-\xi'}_{1} & =\sum_{i\in A}\Abs{\xi_{i}-\xi'_{i}}\\
 & =\sum_{i\in A}\Abs{\xi_{i}-\frac{\psi_{f(i)}\xi_{i}}{\sum_{j\in f^{-1}\left(f(i)\right)}\xi_{j}}}\\
 & =\sum_{i\in A}\xi_{i}\Abs{1-\frac{\psi_{f(i)}}{\sum_{j\in f^{-1}\left(f(i)\right)}\xi_{j}}}\\
 & =\sum_{k\in B}\sum_{i\in f^{-1}(k)}\xi_{i}\Abs{1-\frac{\psi_{k}}{\sum_{j\in f^{-1}\left(k\right)}\xi_{j}}}\\
 & =\sum_{k\in B}\sum_{i\in f^{-1}(k)}\xi_{i}\cdot\frac{1}{\sum_{j\in f^{-1}\left(k\right)}\xi_{j}}\cdot\Abs{\sum_{j\in f^{-1}\left(k\right)}\xi_{j}-\psi_{k}}\\
 & =\sum_{k\in B}\frac{1}{\sum_{j\in f^{-1}\left(k\right)}\xi_{j}}\cdot\Abs{\sum_{j\in f^{-1}\left(k\right)}\xi_{j}-\psi_{k}}\sum_{i\in f^{-1}(k)}\xi_{i}\\
 & =\sum_{k\in B}\Abs{\sum_{j\in f^{-1}\left(k\right)}\xi_{j}-\psi_{k}}\\
 & =\Norm{f_{*}\xi-\psi}_{1}
\end{align*}
\end{proof}
We are now ready to prove Proposition \ref{prop:sys-probs}. The idea
is as follows. Starting from some $y\in\U^{\flat}\setminus\Delta\B$,
we first use Proposition \ref{prop:cond-s} to ``project'' it to
$\Delta\B$ while preserving \emph{one} of the probabilities in the
system and possibly ``spoiling'' the others. We then ``fix'' each
of the other probabilities sequentially using the two previous lemmas,
until we finally get some $\xi\in\U^{+}$ whose distance from $y$
can be bounded by the distances incurred at each step of this process.
\begin{proof}
[Proof of Proposition \ref{prop:sys-probs}]Assume without loss of
generality that $\PSet=[n]$ for some $n\geq1$ and consider any $y\in\U^{\flat}\setminus\Delta\B$.
Let $\alpha$ be defined as in Lemma \ref{lem:fix-prob}, denote $\delta:=d_{1}(y,\Delta\B$)
and define $\U_{0}$ by

\[
\U_{0}:=\SC{y\in\R^{\B}}{\alpha_{0}\left(f_{*}y\right)=p_{0}}
\]

Notice that $\U\subseteq\U_{0}$ and in particular $y\in\U_{0}$.
By Proposition \ref{prop:cond-s}, $\sin(\U_{0}^{\flat},\Delta\B)=1$.
Hence, there exists $\xi_{1}\in\Delta\B$ s.t.

\[
\alpha_{0}\left(f_{*}\xi_{1}\right)=p_{0}
\]

\[
\Norm{y-\xi_{1}}_{1}=\delta
\]

Denote $\psi_{1}:=f_{*}\xi_{1}$. Observe that for any $0<i<n$,

\begin{align*}
\Abs{\alpha_{i}\left(\psi_{1}\right)-p_{i}} & =\Abs{\alpha_{i}\left(\psi_{1}\right)-\alpha_{i}\left(f_{*}y\right)}\\
 & \leq\frac{1}{2}\Norm{\psi_{1}-f_{*}y}_{1}\\
 & =\frac{1}{2}\Norm{f_{*}\xi_{1}-f_{*}y}_{1}\\
 & \leq\frac{1}{2}\Norm{\xi_{1}-y}_{1}\\
 & =\frac{1}{2}\delta
\end{align*}

Here, we extended the definition of $\alpha_{i}$ to $\R^{2^{\PSet}}$
such as to make it a linear functional.

Applying Lemma \ref{lem:fix-prob} by induction, we construct $\{\psi_{i}\in\Delta2^{\PSet}\}_{2\leq i\leq n}$
s.t.:
\begin{itemize}
\item For any $1\leq i\leq n$ and $0\leq j<i$, $\alpha_{j}(\psi_{i})=p_{j}$.
\item For any $1\leq i\leq n$ and $i\leq j<n$, $\alpha_{j}(\psi_{i})=\alpha_{j}(\psi_{1})$
and in particular $|\alpha_{j}(\psi_{i})-p_{j}|\leq\frac{1}{2}\delta$.
\item For any $1\leq i<n$, $\Norm{\psi_{i}-\psi_{i+1}}_{1}\leq\delta$.
\end{itemize}
By the triangle inequality,

\[
\Norm{\psi_{1}-\psi_{n}}_{1}\leq\left(n-1\right)\delta
\]

Applying Lemma \ref{lem:lift-tvd} we get $\xi\in\Delta\B$ s.t.
\begin{itemize}
\item $f_{*}\xi=\psi_{n}$
\item $\Norm{\xi_{1}-\xi}\leq\left(n-1\right)\delta$
\end{itemize}
By the construction of $\psi_{n}$, the first item above implies that
$\xi\in\U$. We get that

\begin{align*}
d_{1}\left(y,\U^{+}\right) & \leq\Norm{y-\xi}_{1}\\
 & \leq\Norm{y-\xi_{1}}_{1}+\Norm{\xi_{1}-\xi}_{1}\\
 & =\delta+\left(n-1\right)\delta\\
 & =n\delta\\
 & =\Abs{\PSet}d_{1}\left(y,\Delta\B\right)
\end{align*}

By Lemma \ref{lem:l1}, this implies

\[
\sin\left(\U^{\flat},\Delta\B\right)\geq\frac{1}{\Abs{\PSet}}
\]
\end{proof}

\subsection{$S$ for Chain of Conditional Probabilities}

Now, we start working towards the proof of Proposition \ref{prop:chain-s}.
The following shows that if some $a^{*}\in\B$ is in the ``support''
of some $\U\in\Gany{\R^{\B}}$, then it is also in the ``support''
of $\U^{\flat}$.

\global\long\def\SuppUZ{\mathcal{E}}%
 
\begin{lem}
\label{lem:flat-supp}Let $\B$ be a finite set, $\U\subseteq\R^{\B}$
a linear subspace. Assume there is some $y\in\U$ s.t. $\sum_{a}y_{a}=1$.
Let $\SuppUZ\subseteq\B$ be the minimal set s.t. $\U\subseteq\R^{\SuppUZ}$.
Then, for any $a^{*}\in\SuppUZ$, there exists $y^{*}\in\U$ s.t.
$\sum_{a}y_{a}^{*}=1$ and $y_{a^{*}}^{*}\ne0$.
\end{lem}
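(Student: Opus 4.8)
The plan is to build the desired $y^{*}$ as a scalar multiple of a generic perturbation $y+\epsilon z$, where $y\in\U$ is the given vector with $\sum_{a}y_{a}=1$ and $z\in\U$ is a vector witnessing that $a^{*}$ lies in the support $\SuppUZ$.

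First I would produce such a $z$. If \emph{every} vector in $\U$ had vanishing $a^{*}$-coordinate, then we would have $\U\subseteq\R^{\SuppUZ\setminus\{a^{*}\}}$; since $a^{*}\in\SuppUZ$, this is a strictly smaller coordinate subspace than $\R^{\SuppUZ}$ still containing $\U$, contradicting the minimality of $\SuppUZ$. Hence there exists $z\in\U$ with $z_{a^{*}}\neq0$.

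Next, for $\epsilon\in\R$ consider $y+\epsilon z\in\U$ (this lies in $\U$ because $\U$ is a linear subspace). Its $a^{*}$-coordinate is $y_{a^{*}}+\epsilon z_{a^{*}}$, and since $z_{a^{*}}\neq0$ this is nonzero for all but at most one value of $\epsilon$. Likewise $\sum_{a}(y+\epsilon z)_{a}=1+\epsilon\sum_{a}z_{a}$ is nonzero for all but at most one value of $\epsilon$. Choosing $\epsilon$ outside these (at most) two exceptional values, which is possible since $\R$ is infinite, I set $y^{*}:=\frac{y+\epsilon z}{\sum_{a}(y+\epsilon z)_{a}}$. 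Then $y^{*}\in\U$, $\sum_{a}y^{*}_{a}=1$ by construction, and $y^{*}_{a^{*}}=\frac{y_{a^{*}}+\epsilon z_{a^{*}}}{1+\epsilon\sum_{a}z_{a}}\neq0$, as required.

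There is no genuine obstacle here; the statement is elementary. The only points requiring a little care are the correct use of minimality of $\SuppUZ$ to extract $z$, and the observation that only finitely many values of $\epsilon$ are forbidden so that an admissible choice exists.
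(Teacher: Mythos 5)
Your proof is correct and follows essentially the same route as the paper: both extract a vector $z$ (the paper's $y'$) with $z_{a^{*}}\neq0$ from the minimality of $\SuppUZ$ and then combine it linearly with the given $y$; the paper handles the combination by an explicit three-way case split, while you achieve the same thing in one stroke by choosing a generic $\epsilon$ and normalizing. Both arguments are complete.
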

\begin{proof}
Let $y\in\U$ be s.t. $\sum_{a}y_{a}=1$. In case $y_{a^{*}}\ne0$,
take $y^{*}:=y$. In case $y_{a^{*}}=0$, let $y'\in\U$ be s.t. $y'_{a^{*}}\ne0$
(it exists because $a^{*}\in\SuppUZ$). In case $\sum_{a}y'_{a}\ne0$,
take

\[
y^{*}:=\frac{y'}{\sum_{a}y'_{a}}
\]

In case $\sum_{a}y'_{a}=0$, take $y^{*}:=y+y'$. We get

\begin{align*}
\sum_{a}y_{a}^{*} & =\sum_{a}y_{a}+\sum_{a}y'_{a}\\
 & =1+0\\
 & =1
\end{align*}

and,

\begin{align*}
y_{a^{*}}^{*} & =y_{a^{*}}+y'_{a^{*}}\\
 & =y'_{a^{*}}\\
 & \ne0
\end{align*}
\end{proof}
When analyzing sines related to Proposition \ref{prop:chain-s}, it
will be inconvenient to consider completely arbitrary $y\in\U^{\flat}\setminus\Delta\B$.
Instead, we will restrict $y$ to a certain dense subset of ``non-degenerate''
vectors.
\begin{lem}
\label{lem:chain-generic}Let $\CB 0$, $\CB 1$ be finite sets, $\B:=\CB 0\times\CB 1$,
$\U_{0}\in\Gany{\R^{\CB 0}}$ and $\U_{1}:\CB 0\rightarrow\Gany{\R^{\CB 1}}$.
Let $\SuppUZ\subseteq\CB 0$ be the minimal set s.t. $\U_{0}\subseteq\R^{\SuppUZ}$.
Given $y\in\R^{\B}$, define $y^{0}\in\R^{\CB 0}$ by

\[
y_{a}^{0}:=\sum_{b\in\CB 1}y_{ab}
\]

Given also $a\in\CB 0$, define $y^{a}\in\R^{\CB 1}$ by

\[
y_{b}^{a}:=y_{ab}
\]

Define $\U$ by

\[
\U:=\SC{y\in\R^{\B}}{y^{0}\in\U_{0}\text{, }\forall a\in\SuppUZ:y^{a}\in\U_{1}(a)\text{ and }\forall a\in\CB 0\setminus\SuppUZ:y^{a}=0}
\]

Define also $\U^{*}$ by

\[
\U^{*}:=\SC{y\in\U^{\flat}}{\exists v\in\prod_{a\in\CB 0}\U_{1}(a)^{\flat}\:\forall a\in\CB 0:y^{a}=y_{a}^{0}v(a)}
\]

Then, $\U^{*}$ is dense in $\U^{\flat}$.
\end{lem}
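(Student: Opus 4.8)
The goal is to show that the set $\U^{*}$ of ``disintegrable'' vectors is dense in $\U^{\flat}$. The natural strategy is to take an arbitrary $y\in\U^{\flat}$ and perturb it slightly to land in $\U^{*}$, using the fact that the condition defining $\U^{*}$ — namely that $y^{a}$ be proportional to $v(a)$ for some $v(a)\in\U_{1}(a)^{\flat}$, with proportionality constant $y_{a}^{0}$ — is only problematic when $y_{a}^{0}=0$ while $y^{a}\ne 0$, or when $y^{a}$ happens to be a vector in $\U_{1}(a)$ that cannot be normalized to lie in $\U_{1}(a)^{\flat}$ (i.e. $\mu(y^{a})=0$). So first I would analyze, for a fixed $a\in\CB 0$, when $y^{a}$ itself already has the required form: if $y_{a}^{0}\ne 0$ and $\mu(y^{a}) = y_{a}^{0}$ (which holds automatically since $\mu(y^a) = \sum_b y^a_b = \sum_b y_{ab} = y^0_a$), then $v(a):=y^{a}/y_{a}^{0}\in\U_{1}(a)^{\flat}$ works directly. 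The only obstruction is $y_{a}^{0}=0$.

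\textbf{Main construction.} For each $a\in\SuppUZ$ with $y^{0}_{a}=0$, I would add a small correction. Pick any fixed $w_{a}\in\U_{1}(a)^{\flat}$ (which exists because $\U_{1}(a)\in\Gany{\R^{\CB 1}}$ means $\U_{1}(a)\cap\D\ne\varnothing$ for the relevant simplex, hence $\mu$ is nonzero on $\U_{1}(a)$). The idea is to replace $y$ by $y_{\epsilon}$ whose block $y^{a}_{\epsilon}$ equals $y^{a}+\epsilon w_{a}$ for the bad indices $a$, and equals $y^{b}$ for the others, while simultaneously adjusting the $\CB 0$-marginal. The subtlety is that changing $y^{a}$ changes $y^{0}_{a}$ to $\mu(y^{a})+\epsilon\mu(w_{a}) = 0 + \epsilon = \epsilon \ne 0$, so after the perturbation $y^{0}_{a}$ becomes nonzero and $v(a):=y^{a}_{\epsilon}/y^{0}_{\epsilon,a}$ is well-defined. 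I must check that the perturbed $y_{\epsilon}^{0}$ still lies in $\U_{0}$: since the correction only affects coordinate $a$ of $y^{0}$ by $\epsilon$, and I am free to add $\epsilon$ times an arbitrary element of $\U_{0}$ supported on $\SuppUZ$... actually the cleanest route is: first observe $\U_0 \subseteq \R^{\SuppUZ}$, and by Lemma \ref{lem:flat-supp} each $a^*\in\SuppUZ$ lies in the support of $\U_0^\flat$, so there exist elements of $\U_0$ with nonzero $a^*$-coordinate; I would use a linear combination of such elements to realize the required marginal adjustment, and then define $y_{\epsilon}$ blockwise to have that marginal and the corrected conditionals. Concretely: choose $u_{\epsilon}\in\U_{0}^{\flat}$ close to $y^{0}$ with $u_{\epsilon,a}\ne 0$ for every $a\in\SuppUZ$ (possible by density-type argument from Lemma \ref{lem:flat-supp}, or directly since the set of such $u$ is open and nonempty in $\U_0^\flat$), then set $y_{\epsilon}^{a}:=u_{\epsilon,a}\cdot v(a)$ where $v(a)$ is a suitable element of $\U_{1}(a)^{\flat}$ close to the ``intended'' conditional — for $a$ with $y^{0}_{a}\ne 0$ take $v(a)=y^{a}/y^{0}_{a}$, and for $a$ with $y^{0}_{a}=0$ take $v(a)=w_{a}$ arbitrary in $\U_1(a)^\flat$. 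Then $y_{\epsilon}\in\U^{*}$ by construction, and $y_{\epsilon}\to y$ as $u_{\epsilon}\to y^{0}$ because the blocks where $y^{0}_{a}=0$ have $y^{a}=0$ too (they are in $\R^{\SuppUZ}$... wait, no — for $a\notin\SuppUZ$ we have $y^{a}=0$ by definition of $\U$; for $a\in\SuppUZ$ with $y^0_a=0$ we need $y^a$ small, but $y^a$ need not be zero).

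\textbf{The real obstacle.} The genuine difficulty is exactly the case $a\in\SuppUZ$, $y^{0}_{a}=0$, $y^{a}\ne 0$: here $y^{a}\in\U_{1}(a)$ with $\mu(y^{a})=y^{0}_{a}=0$, so $y^{a}$ lies in the hyperplane $\ker\mu\cap\U_{1}(a)$, and any $y_{\epsilon}^{*}\in\U^{*}$ must have its $a$-block equal to $y^{0}_{\epsilon,a}\cdot v(a)$ with $v(a)\in\U_{1}(a)^{\flat}$, i.e. $\mu(v(a))=1$, so $\mu(y^{a}_{\epsilon})=y^{0}_{\epsilon,a}$. To approximate $y^{a}$ (which has $\mu$-value $0$) we need $y^{0}_{\epsilon,a}\to 0$ but nonzero, and then $v(a)=y^{a}_{\epsilon}/y^{0}_{\epsilon,a}$ has $\mu$-value $1$ but its norm blows up like $\|y^{a}\|/|y^{0}_{\epsilon,a}|$ — yet that is fine, since $v(a)$ is allowed to be any element of $\U_{1}(a)^{\flat}$ with no norm bound. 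So I would set $y^{a}_{\epsilon}:=y^{a}+\epsilon w_{a}$ with $w_{a}\in\U_1(a)^\flat$; then $\mu(y^{a}_{\epsilon})=\epsilon\ne 0$, set $y^{0}_{\epsilon,a}:=\epsilon$, $v(a):=y^{a}_{\epsilon}/\epsilon = w_a + y^a/\epsilon$, and $y^{a}_{\epsilon}\to y^{a}$. The coordination step — making all these block choices simultaneously consistent with $y^{0}_{\epsilon}\in\U_{0}$ — is the part requiring care; I expect to handle it by noting that the map sending a vector of block-marginals $(c_{a})_{a\in\SuppUZ}\in\U_{0}^{\flat}$ and conditionals $(v(a))$ to the assembled $y\in\R^{\B}$ is continuous and linear in the obvious sense, that $\U_{0}^{\flat}$ is an affine subspace on which the ``all coordinates nonzero or matching'' perturbations are dense (using Lemma \ref{lem:flat-supp} to get nonvanishing of individual coordinates), and that the assembled vectors with all $v(a)\in\U_{1}(a)^{\flat}$ are precisely $\U^{*}$. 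Taking $\epsilon\to 0$ then gives convergence $y_{\epsilon}\to y$ in $\U^{\flat}$, establishing density.
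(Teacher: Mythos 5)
Your proposal is correct in its essential ideas but takes a genuinely different route from the paper's proof. You approximate an arbitrary $y\in\U^{\flat}$ by an explicitly constructed sequence in $\U^{*}$: replace the marginal $y^{0}$ by a nearby $u_{\epsilon}\in\U_{0}^{\flat}$ whose coordinates are nonzero on $\SuppUZ$ (possible via Lemma \ref{lem:flat-supp}, since each condition $u_{a}=0$ cuts out a proper affine subspace of $\U_{0}^{\flat}$), keep the conditional $v(a)=y^{a}/y_{a}^{0}$ at indices where $y_{a}^{0}\ne0$, and at the problematic indices ($y_{a}^{0}=0$ but $y^{a}\ne0$) use the unbounded conditional $v_{\epsilon}(a)=w_{a}+y^{a}/u_{\epsilon,a}$, whose product with $u_{\epsilon,a}$ recovers $y^{a}$ up to $O(u_{\epsilon,a})$. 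This works, and the blow-up trick is the right resolution of the obstacle you correctly isolate. The paper avoids perturbing $y$ altogether: it shows that the nondegenerate locus $\SC{y\in\U^{\flat}}{\forall a\in\SuppUZ:y_{a}^{0}\ne0}$ is both contained in $\U^{*}$ (by your own observation that $\mu(y^{a})=y_{a}^{0}$, so $v(a)=y^{a}/y_{a}^{0}$ serves as witness) and dense in $\U^{\flat}$, the latter because each condition $y_{a_{0}}^{0}=0$ is a linear condition not identically satisfied on $\U^{\flat}$ --- witnessed by a single product vector $\hat{y}_{ab}=\hat{u}_{a}\hat{v}(a)_{b}$ --- so its zero set is a proper affine subspace and the finite union of these is negligible. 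The paper's route is shorter and sidesteps your ``coordination step'' entirely. That step is the one place your write-up is under-specified: you must tie the conditional's perturbation to the actual coordinate of the chosen marginal (i.e.\ $v_{\epsilon}(a)=w_{a}+y^{a}/u_{\epsilon,a}$, not an independent $\epsilon$), and continuity of the assembly map does not by itself give convergence, since the conditionals diverge as $\epsilon\rightarrow0$; what saves you is the exact cancellation $u_{\epsilon,a}\cdot\left(y^{a}/u_{\epsilon,a}\right)=y^{a}$. All the pieces are present in your sketch, but the final assembly is asserted rather than carried out.
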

\begin{proof}
Consider any $a_{0}\in\SuppUZ$. By Lemma \ref{lem:flat-supp}, there
is some $\hat{u}\in\U_{0}^{\flat}$ s.t. $\hat{u}_{a_{0}}\ne0$. Choose
any $\hat{v}\in\prod_{a\in\CB 0}\U_{1}(a)^{\flat}$ and define $\hat{y}\in\R^{\B}$
by $\hat{y}_{ab}:=\hat{u}_{a}\hat{v}(a)_{b}$. We have

\begin{align*}
\hat{y}_{a}^{0} & =\sum_{b\in\CB 1}\hat{y}_{ab}\\
 & =\sum_{b\in\CB 1}\hat{u}_{a}\hat{v}(a)_{b}\\
 & =\hat{u}_{a}\sum_{b\in\CB 1}\hat{v}(a)_{b}\\
 & =\hat{u}_{a}
\end{align*}

Hence, $\hat{y}^{0}=\hat{u}\in\U_{0}$. Moreover, for any $a\in\SuppUZ$,
$\hat{y}^{a}=\hat{u}_{a}\hat{v}(a)\in\U_{1}(a)$. And, for any $a\in\CB 0\setminus\SuppUZ$,
$\hat{u}_{a}=0$ and hence $\hat{y}^{a}=0$. We conclude that $\hat{y}\in\U$.
Also,

\begin{align*}
\sum_{a,b}\hat{y}_{ab} & =\sum_{a,b}\hat{u}_{a}\hat{v}(a)_{b}\\
 & =\sum_{a}\hat{u}_{a}\sum_{b}\hat{v}(a)_{b}\\
 & =\sum_{a}\hat{u}_{a}\\
 & =1
\end{align*}

Therefore, $\hat{y}\in\U^{\flat}$. And, $\hat{y}_{a_{0}}^{0}=\hat{u}_{a_{0}}\ne0$.
Since $y_{a_{0}}^{0}=0$ is a linear condition on $y$, it follows
$y_{a_{0}}^{0}\ne0$ for all $y\in\U^{\flat}$ except a set of measure
zero w.r.t. the intrinsic Lebesgue measure of $\U^{\flat}$. Hence,
for almost all (in the same sense) $y\in\U^{\flat}$ it holds that
for any $a\in\SuppUZ$, $y_{a}^{0}\ne0$.

It remains to show that any $y$ with the latter property is in $\U^{*}$.
Indeed, consider some such $y\in\U^{\flat}$. Choose some $v'\in\prod_{a}\U_{1}(a)^{\flat}$
and define $v\in\prod_{a}\R^{\CB 1}$ by

\[
v(a):=\begin{cases}
\frac{y^{a}}{y_{a}^{0}} & \text{if }a\in\SuppUZ\\
v'(a) & \text{if }a\not\in\SuppUZ
\end{cases}
\]

Since for any $a\in\CB 1$, $y^{a}\in\U_{1}(a)$, it follows that
$v\in\prod_{a}\U_{1}(a)^{\flat}$. Finally, let's check that $y^{a}=y_{a}^{0}v(a)$.
For $a\in\SuppUZ$ we have

\begin{align*}
y_{a}^{0}v(a) & =y_{a}^{0}\cdot\frac{y^{a}}{y_{a}^{0}}\\
 & =y^{a}
\end{align*}

For $a\not\in\SuppUZ$, $y_{a}^{0}=0$ since $y^{0}\in\U_{0}$. And,
$y^{a}=0$ since $y\in\U$. Hence $y_{a}^{0}v(a)=0=y^{a}$. 
\end{proof}
The total variation distance between two probability distributions
on a product space can be bounded in terms of the total variation
distances between the associated marginal and conditional distributions.
The following is an extension of this bound to the affine hull of
the simplex of distributions.
\begin{lem}
\label{lem:sdp-distance}Let $\CB 0$, $\CB 1$ be finite sets, $u,u'\in\R^{\CB 0}$,
$v:\CB 0\rightarrow\R^{\CB 1}$, $v':\CB 0\rightarrow\Delta\CB 1$.
Assume that $\sum_{a}u_{a}=1$, $\sum_{a}u'_{a}=1$ and $\sum_{b}v(a)_{b}=1$.
Define $y,y'\in\R^{\CB 0\times\CB 1}$ by $y_{ab}:=u_{a}v(a)_{b}$
and $y'_{ab}:=u'_{a}v'(a)_{b}$. Then,

\[
\Norm{y-y'}_{1}\leq\Norm{u-u'}_{1}+\sum_{a\in\CB 0}\Abs{u_{a}}\cdot\Norm{v(a)-v(a)'}_{1}
\]
\end{lem}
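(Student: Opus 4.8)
The plan is to prove $\Norm{y-y'}_1 \leq \Norm{u-u'}_1 + \sum_{a\in\CB 0}\Abs{u_a}\cdot\Norm{v(a)-v'(a)}_1$ by a direct computation, inserting an intermediate quantity and applying the triangle inequality. Concretely, I would introduce $\tilde{y}_{ab} := u'_a v(a)_b$ as a hybrid between $y$ and $y'$: it has the same "marginal" structure as $y'$ but the same "conditionals" as $y$. Then $\Norm{y-y'}_1 \leq \Norm{y-\tilde{y}}_1 + \Norm{\tilde{y}-y'}_1$, and I would bound each term separately.

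First I would handle $\Norm{y-\tilde{y}}_1$. We have $y_{ab} - \tilde{y}_{ab} = (u_a - u'_a)v(a)_b$, so
\[
\Norm{y-\tilde{y}}_1 = \sum_{a,b}\Abs{u_a - u'_a}\cdot\Abs{v(a)_b} = \sum_a \Abs{u_a-u'_a}\sum_b\Abs{v(a)_b}.
\]
Here the subtlety is that $v(a)$ is only known to satisfy $\sum_b v(a)_b = 1$, not $v(a) \in \Delta\CB 1$, so $\sum_b\Abs{v(a)_b}$ need not equal $1$. However, the claimed inequality as stated uses $\Norm{u-u'}_1$ with no extra factor, which would require $\sum_b\Abs{v(a)_b} \leq 1$ — impossible in general. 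I expect the resolution is that in the intended application $v(a)\in\Delta\CB 1$ as well (the lemma is used with genuine conditional distributions), or the hypotheses should be read as $v:\CB 0\to\Delta\CB 1$; I would note this and proceed under $\sum_b\Abs{v(a)_b}=1$, giving $\Norm{y-\tilde{y}}_1 = \Norm{u-u'}_1$. This discrepancy is the main obstacle — reconciling the stated hypothesis ($\sum_b v(a)_b = 1$) with the clean bound; the cleanest fix is to observe the lemma is only invoked when $v(a)$ lies in the simplex.

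Then for the second term, $\tilde{y}_{ab} - y'_{ab} = u'_a(v(a)_b - v'(a)_b)$, so
\[
\Norm{\tilde{y}-y'}_1 = \sum_a \Abs{u'_a}\cdot\Norm{v(a)-v'(a)}_1.
\]
This has $u'_a$ where the statement wants $u_a$. To fix the asymmetry I would instead choose the hybrid $\tilde{y}_{ab} := u_a v'(a)_b$, so that $y_{ab}-\tilde{y}_{ab} = u_a(v(a)_b - v'(a)_b)$ yielding $\sum_a\Abs{u_a}\Norm{v(a)-v'(a)}_1$ directly, and $\tilde{y}_{ab} - y'_{ab} = (u_a-u'_a)v'(a)_b$ yielding $\sum_a\Abs{u_a-u'_a}\sum_b\Abs{v'(a)_b} = \Norm{u-u'}_1$ since $v'(a)\in\Delta\CB 1$ genuinely (so $\sum_b\Abs{v'(a)_b}=1$ with no caveat). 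This ordering is preferable because it only needs the simplex property for $v'$, which is an actual hypothesis. Assembling the two bounds via the triangle inequality gives exactly the claimed inequality, completing the proof. I would write it up in this second form to avoid invoking any unstated assumption on $v$.
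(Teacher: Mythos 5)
Your final write-up (the hybrid $\tilde{y}_{ab}:=u_{a}v'(a)_{b}$, giving $u_{a}(v(a)_{b}-v'(a)_{b})$ and $(u_{a}-u'_{a})v'(a)_{b}$ as the two pieces) is exactly the decomposition the paper uses, and it is correct: only $v'$ needs to lie in the simplex, which is a stated hypothesis, so the asymmetric assumptions on $v$ and $v'$ are not an error but precisely what makes this splitting the right one. Your diagnosis that the other hybrid would fail without an unstated assumption on $v$ is accurate, and your chosen resolution matches the paper's proof.
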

\begin{proof}
We have,

\begin{align*}
\Norm{y-y'}_{1} & =\sum_{a,b}\Abs{y_{ab}-y'_{ab}}\\
 & =\sum_{a,b}\Abs{u_{a}v(a)_{b}-u'_{a}v'(a)_{b}}\\
 & =\sum_{a,b}\Abs{u_{a}v(a)_{b}-u{}_{a}v'(a)_{b}+u_{a}v'(a)_{b}-u'_{a}v'(a)_{b}}\\
 & =\sum_{a,b}\Abs{u_{a}\left(v(a)_{b}-v'(a)_{b}\right)+\left(u_{a}-u'_{a}\right)v'(a)_{b}}\\
 & \leq\sum_{a,b}\Abs{u_{a}\left(v(a)_{b}-v'(a)_{b}\right)}+\sum_{a,b}\Abs{\left(u_{a}-u'_{a}\right)v'(a)_{b}}\\
 & =\sum_{a}\Abs{u_{a}}\sum_{b}\Abs{v(a)_{b}-v'(a)_{b}}+\sum_{a}\Abs{u_{a}-u'_{a}}\sum_{b}v'(a)_{b}\\
 & =\sum_{a}\Abs{u_{a}}\cdot\Norm{v(a)-v(a)'}+\Norm{u-u'}_{1}
\end{align*}
\end{proof}
The following is essentially the special case of Proposition \ref{prop:chain-s}
for $n=2$. The proof proceeds by decomposing $y\in\U^{\flat}\setminus\Delta\B$
into ``marginal'' and ``conditionals'' (using the non-degeneracy
condition afforded by Lemma \ref{lem:chain-generic}) and then by
``projecting'' each of them to the intersection of its respective
subspace with its respective simplex and combining the resulting distributions.
\begin{lem}
\label{lem:short-chain-s}In the setting of Lemma \ref{lem:chain-generic},

\[
\sin\left(\U^{\flat},\Delta\B\right)\geq\min\left(\sin\left(\U_{0}^{\flat},\Delta\CB 0\right),\min_{a\in\CB 0}\sin\left(\U_{1}(a)^{\flat},\Delta\CB 1\right)\right)
\]
\end{lem}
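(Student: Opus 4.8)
The plan is to reduce everything to a one-step (``$n=2$'') statement, namely Lemma~\ref{lem:short-chain-s}, and then to iterate it along the $n$ factors of $\B=\prod_{i<n}\CB i$. So first I would prove Lemma~\ref{lem:short-chain-s} in detail, and this is where I expect the real work to be. Using Lemma~\ref{lem:chain-generic}, it suffices to bound the distance ratio for $y$ ranging over the dense set $\U^{*}$, since $\sin$ is an infimum of a ratio that is continuous away from $\U^{\flat}\cap\Delta\B$ and the ratio is unchanged on a dense subset. So fix $y\in\U^{*}\setminus\Delta\B$: then $y^{0}\in\U_{0}^{\flat}$ and there is $v\in\prod_{a}\U_{1}(a)^{\flat}$ with $y^{a}=y^{0}_{a}v(a)$. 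Write $S_{0}:=\sin(\U_{0}^{\flat},\Delta\CB 0)$ and $S_{a}:=\sin(\U_{1}(a)^{\flat},\Delta\CB 1)$, and let $\bar S$ be the minimum of $S_{0}$ and all the $S_{a}$ (which we may assume is positive; if it is $0$ there is nothing to prove).

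Next I would build a ``projected'' point $\xi\in\U^{+}$ close to $y$. Let $\hat u\in\U_{0}^{+}=\U_{0}\cap\Delta\CB 0$ realize $d_{1}(y^{0},\U_{0}^{+})$, so $\Norm{y^{0}-\hat u}_{1}\le S_{0}^{-1}d_{1}(y^{0},\Delta\CB 0)$ by the definition of $\sin$ (or, more carefully, by Lemma~\ref{lem:sin} applied in $\mu^{-1}(1)$ with the $\ell_{1}$ norm, since $\sin$ only controls the ratio; I'd double-check which inequality direction is needed). Similarly for each $a$ let $\hat v(a)\in\U_{1}(a)^{+}$ realize the distance from $v(a)$ to $\U_{1}(a)^{+}$. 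Set $\xi_{ab}:=\hat u_{a}\hat v(a)_{b}$; then $\xi^{0}=\hat u\in\U_{0}$ and $\xi^{a}=\hat u_{a}\hat v(a)\in\U_{1}(a)$ (and vanishes off $\SuppUZ$), so $\xi\in\U^{+}$. Now apply Lemma~\ref{lem:sdp-distance} with $u=y^{0}$, $v=v$, $u'=\hat u$, $v'=\hat v$ to get
\[
\Norm{y-\xi}_{1}\le\Norm{y^{0}-\hat u}_{1}+\sum_{a}\Abs{y^{0}_{a}}\cdot\Norm{v(a)-\hat v(a)}_{1}.
\]
The first term is at most $S_{0}^{-1}d_{1}(y^{0},\Delta\CB 0)$. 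For the second term I'd want $\sum_{a}\Abs{y^{0}_{a}}\Norm{v(a)-\hat v(a)}_{1}\le\bar S^{-1}d_{1}(y,\Delta\B)$; this is the step that needs care, because $d_{1}(y,\Delta\B)$ must be decomposed into a ``marginal defect'' and a sum of ``conditional defects''. The clean way is Lemma~\ref{lem:simplex-dist}: $d_{1}(y,\Delta\B)=\sum_{ab}\Abs{y_{ab}}-1=\sum_{a}\Abs{y^{0}_{a}}\big(\sum_{b}\Abs{v(a)_{b}}\big)-1$, and since $\sum_{b}\Abs{v(a)_{b}}\ge 1$ one gets $d_{1}(y,\Delta\B)\ge d_{1}(y^{0},\Delta\CB 0)$ and also $d_{1}(y,\Delta\B)\ge\sum_{a}\Abs{y^{0}_{a}}\,d_{1}(v(a),\Delta\CB 1)$ — I'd verify these two inequalities carefully, as they are the crux. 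Combining, $\Norm{y-\xi}_{1}\le\bar S^{-1}d_{1}(y,\Delta\B)$, hence $d_{1}(y,\U^{+})\le\bar S^{-1}d_{1}(y,\Delta\B)$, i.e. the ratio is $\ge\bar S$; since $d_{1}=d_{Y}$ on $\R^{\B}$ by Lemma~\ref{lem:l1}, taking the infimum over $y$ gives Lemma~\ref{lem:short-chain-s}.

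Finally I would prove Proposition~\ref{prop:chain-s} by induction on $n$. Split $\B=\CB 0\times\B'$ with $\B'=\prod_{1\le i<n}\CB i$; the conditions defining $\U$ factor exactly into a ``first coordinate'' subspace $\U_{0}=\Ker_{0}(\langle\rangle)\subseteq\R^{\CB 0}$ together with, for each $a\in\CB 0$, a subspace $\U'(a)\subseteq\R^{\B'}$ of the same ``chain'' form on $\B'$ (with conditionals $\Ker_{i}(a\cdot\text{prefix})$). By the induction hypothesis each $\sin(\U'(a)^{\flat},\Delta\B')\ge\min_{1\le i<n,\,b}\sin(\Ker_{i}(b)^{\flat},\Delta\CB i)$, and then Lemma~\ref{lem:short-chain-s} with $\CB 1$ replaced by $\B'$ and $\U_{1}(a)$ by $\U'(a)$ gives $\sin(\U^{\flat},\Delta\B)\ge\min(\sin(\U_{0}^{\flat},\Delta\CB 0),\min_{a}\sin(\U'(a)^{\flat},\Delta\B'))$, which by the inductive bound is $\ge\min_{i<n,\,a\in\PB i}\sin(\Ker_{i}(a)^{\flat},\Delta\CB i)$, completing the induction. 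The main obstacle is the bookkeeping in the $n=2$ lemma — specifically, making the two distance-decomposition inequalities derived from Lemma~\ref{lem:simplex-dist} fully rigorous (handling the set $\SuppUZ$ and the degenerate fibers $y^{0}_{a}=0$), and being careful that the density reduction via Lemma~\ref{lem:chain-generic} legitimately recovers the infimum over all of $\U^{\flat}\setminus\Delta\B$.
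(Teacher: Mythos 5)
Your proposal follows essentially the same route as the paper's proof: restrict to the dense set $\U^{*}$ from Lemma \ref{lem:chain-generic}, factor $y$ into the marginal $y^{0}$ and conditionals $v(a)$, project each to its respective $\U_{0}^{+}$ or $\U_{1}(a)^{+}$, recombine multiplicatively, and control the distance via Lemma \ref{lem:sdp-distance} together with the $\ell_{1}$ formula of Lemma \ref{lem:simplex-dist}. One caution: the two separate inequalities you flag as the crux would, if each were applied on its own and the resulting bounds summed, cost you a factor of $2$ (yielding only $\sin\geq\bar{S}/2$); what you actually need — and what your computation $d_{1}(y,\Delta\B)=\sum_{a}\Abs{y_{a}^{0}}\bigl(\sum_{b}\Abs{v(a)_{b}}\bigr)-1$ already delivers — is the exact additive identity $d_{1}(y,\Delta\B)=d_{1}(y^{0},\Delta\CB 0)+\sum_{a}\Abs{y_{a}^{0}}\,d_{1}(v(a),\Delta\CB 1)$, which then combines term-by-term with the individual sine bounds to give precisely $\Norm{y-\xi}_{1}\leq\bar{S}^{-1}d_{1}(y,\Delta\B)$.
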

\begin{proof}
Consider any $y\in\U^{*}$. Let $v\in\prod_{a}\U_{1}(a)^{\flat}$
be s.t. $y^{a}=y_{a}^{0}v(a)$. By Lemma \ref{lem:l1}, the relevant
norms for our purpose are $\ell_{1}$. By Lemma \ref{lem:simplex-dist},

\begin{align*}
d_{1}\left(y,\Delta\B\right) & =\sum_{a,b}\Abs{y_{ab}}-1\\
 & =\sum_{a,b}\Abs{y_{a}^{0}v(a)_{b}}-1\\
 & =\sum_{a}\Abs{y_{a}^{0}}\sum_{b}\Abs{v(a)_{b}}-\sum_{a}\Abs{y_{a}^{0}}+\sum_{a}\Abs{y_{a}^{0}}-1\\
 & =\sum_{a}\Abs{y_{a}^{0}}\left(\sum_{b}\Abs{v(a)_{b}}-1\right)+\sum_{a}\Abs{y_{a}^{0}}-1\\
 & =\sum_{a}\Abs{y_{a}^{0}}d_{1}\left(v(a),\Delta\CB 1\right)+d_{1}\left(y^{0},\Delta\CB 0\right)
\end{align*}

Let $\xi^{0}\in\Delta\CB 0$ and $\xi^{1}:\CB 0\rightarrow\Delta\CB 1$
be

\[
\xi^{0}:=\Argmin{\xi\in\U_{0}\cap\Delta\CB 0}\Norm{y^{0}-\xi}_{1}
\]

\[
\xi^{1}(a):=\Argmin{\xi\in\U_{1}(a)\cap\Delta\CB 1}\Norm{v(a)-\xi}_{1}
\]

Define $\xi^{*}\in\U\cap\Delta\B$ by $\xi_{ab}^{*}:=\xi_{a}^{0}\xi^{1}(a)_{b}$.
Denote 

\[
S_{\min}:=\min\left(\sin\left(\U_{0}^{\flat},\Delta\CB 0\right),\min_{a}\sin\left(\U_{1}(a)^{\flat},\Delta\CB 1\right)\right)
\]

By Lemma \ref{lem:sdp-distance},

\begin{align*}
d_{1}\left(y,\U\cap\Delta\B\right) & \leq\Norm{y-\xi^{*}}\\
 & \leq\sum_{a}\Abs{y_{a}^{0}}\cdot\Norm{v(a)-\xi^{1}(a)}_{1}+\Norm{y^{0}-\xi^{0}}_{1}\\
 & =\sum_{a}\Abs{y_{a}^{0}}d_{1}\left(v(a),\U_{1}(a)\cap\Delta\CB 1\right)+d_{1}\left(y^{0},\U_{0}\cap\Delta\CB 0\right)\\
 & \leq\sum_{a}\Abs{y_{a}^{0}}\cdot\frac{d_{1}\left(v(a),\Delta\CB 1\right)}{\sin\left(\U_{1}(a)^{\flat},\Delta\CB 1\right)}+\frac{d_{1}\left(y^{0},\Delta\CB 0\right)}{\sin\left(\U_{0}^{\flat},\Delta\CB 0\right)}\\
 & \leq\sum_{a}\Abs{y_{a}^{0}}\cdot\frac{d_{1}\left(v(a),\Delta\CB 1\right)}{S_{\min}}+\frac{d_{1}\left(y^{0},\Delta\CB 0\right)}{S_{\min}}\\
 & =\frac{1}{S_{\min}}\left(\sum_{a}\Abs{y_{a}^{0}}d_{1}\left(v(a),\Delta\CB 1\right)+d_{1}\left(y^{0},\Delta\CB 0\right)\right)\\
 & =\frac{1}{S_{\min}}\cdot d_{1}\left(y,\Delta\B\right)
\end{align*}

This holds for any $y\in\U^{*}$. And, by Lemma \ref{lem:chain-generic},
such $y$ are dense in $\U^{\flat}$ and hence

\[
\sin\left(\U^{\flat},\Delta\B\right)\geq S_{\min}
\]
\end{proof}
In the following, we will use the following shorthand notation for
portions of tuples:

\[
x_{:i}:=x_{0}x_{1}\ldots x_{i-1}
\]

\[
x_{i:j}:=x_{i}x_{i+1}\ldots x_{j-1}
\]

We will require a characterization of the ``support'' of $\U$ from
Proposition \ref{prop:chain-s} (we state it only in the direction
we need but the converse also holds).
\begin{lem}
\label{lem:chain-supp}In the setting of Proposition \ref{prop:chain-s},
let $\SuppUZ$ be the minimal subset of $\B$ s.t. $\U\subseteq\R^{\B}$.
Let $a^{*}\in\B\setminus\SuppUZ$. Then, there exists an integer $i<n$
s.t.

\[
a_{i}^{*}\not\in\SuppU i{a_{:i}^{*}}
\]
\end{lem}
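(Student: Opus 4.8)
The statement to prove is Lemma~\ref{lem:chain-supp}: if $a^{*}=a_{0}^{*}a_{1}^{*}\cdots a_{n-1}^{*}\in\B\setminus\SuppUZ$, then there is some $i<n$ with $a_{i}^{*}\notin\SuppU i{a_{:i}^{*}}$. The plan is to argue the contrapositive. Suppose that for every $i<n$ we have $a_{i}^{*}\in\SuppU i{a_{:i}^{*}}$; I want to produce a vector $y\in\U$ with $y_{a^{*}}\neq0$, which witnesses $a^{*}\in\SuppUZ$ (since $\SuppUZ$ is by definition the minimal set with $\U\subseteq\R^{\SuppUZ}$, and $a^{*}\in\SuppUZ$ is exactly the assertion that some $y\in\U$ is nonzero at $a^{*}$).

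First I would build, for each prefix $a\in\PB i$ appearing along the chain $a_{:0}^{*},a_{:1}^{*},\dots,a_{:n-1}^{*}$, a conditional vector $v_{i}(a)\in\U_{i}(a)$ with $v_{i}(a_{:i}^{*})_{a_{i}^{*}}\neq0$; this is possible precisely because the hypothesis $a_{i}^{*}\in\SuppU i{a_{:i}^{*}}$ says $\U_{i}(a_{:i}^{*})$ contains a vector nonzero in coordinate $a_{i}^{*}$. For prefixes $a$ not on the distinguished chain, pick $v_{i}(a)\in\U_{i}(a)$ arbitrarily (nonempty since each $\U_{i}(a)$ is a subspace, hence contains $0$; though to get the product structure right it is cleanest to pick any element, e.g. $0$). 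Then define $y\in\R^{\B}$ by the telescoping product along the tree: for $b=b_{0}b_{1}\cdots b_{n-1}\in\B$, set
\[
y_{b}:=\prod_{i<n}v_{i}(b_{:i})_{b_{i}}.
\]
The key verification is that $y\in\U$: one checks by descending/ascending induction on $i$ that the ``marginal slices'' $y^{a}$ (summing out the tail coordinates, as in the definition of $\U$ in Proposition~\ref{prop:chain-s}) lie in the appropriate $\U_{i}(a)$, using that summing $v_{j}(\cdot)$ over a coordinate leaves the earlier factors untouched and that each $\U_{j}(a)$ is linear. One also has to confirm the ``vanishing outside the support'' condition in the definition of $\U$: if $b_{i}\in\CB i\setminus\SuppU i{b_{:i}}$ for some $i$, then $v_{i}(b_{:i})_{b_{i}}=0$ (because $v_{i}(b_{:i})\in\U_{i}(b_{:i})\subseteq\R^{\SuppU i{b_{:i}}}$), so $y_{b}=0$, as required. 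Finally, by construction $y_{a^{*}}=\prod_{i<n}v_{i}(a_{:i}^{*})_{a_{i}^{*}}\neq0$, since each factor is nonzero.

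The main obstacle I anticipate is bookkeeping the inductive verification that $y\in\U$: the definition of $\U$ in Proposition~\ref{prop:chain-s} imposes, for every $i<n$ and every prefix $a\in\PB i$, that $y^{a}\in\U_{i}(a)$ where $y^{a}_{b}=\sum_{c\in\NB i}y_{abc}$, and one must show the product form of $y$ makes $y^{a}$ a \emph{scalar multiple} of $v_{i}(a)$ (the scalar being the product of the earlier $v_{j}$-coordinates along $a$, times the total mass of the later slices, which is $1$ if the later $v_{j}$ are taken in $\U_{j}(\cdot)^{\flat}$ or handled by a normalization). The clean way around any normalization fuss is: whenever $v_{j}(a)$ has nonzero total coordinate-sum, rescale it to lie in $\U_{j}(a)^{\flat}$; whenever it has zero sum, it contributes a zero factor and the corresponding branch of $y$ vanishes, which is still consistent with membership in $\U$. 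Since we only need $y\in\U$ with $y_{a^{*}}\neq0$ (not $y\in\U^{\flat}$), we have freedom to choose all the chain vectors $v_{i}(a_{:i}^{*})$ with nonzero sum after rescaling, so the distinguished branch survives. Once this is set up the rest is routine; the only real content is the observation that $\SuppUZ$ minimal with $\U\subseteq\R^{\SuppUZ}$ is equivalent to $a\in\SuppUZ\iff\exists y\in\U:y_{a}\neq0$, together with the analogous fact for each $\U_{i}(a)$, which is immediate.
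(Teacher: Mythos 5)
Your overall strategy is the same as the paper's: prove the contrapositive by choosing, for each prefix, a representative of the corresponding conditional subspace, forming the product vector $y_{b}=\prod_{i<n}v_{i}(b_{:i})_{b_{i}}$, and checking that $y\in\U$ while $y_{a^{*}}\ne0$. The construction and the telescoping verification are exactly the paper's proof. However, your handling of the representatives off the distinguished chain contains a genuine error. If you take $v_{i}(a)=0$ (or, more generally, any $v_{i}(a)$ with zero coordinate sum) for some off-chain prefix $a$, then for an ancestor prefix $a'\in\PB j$ the slice $y^{a'}$ is no longer a scalar multiple of $v_{j}(a')$: writing $y^{a'}_{b}=\text{const}\cdot v_{j}(a')_{b}\cdot M(a'b)$, where $M(a'b)$ is the total coordinate sum of the product of factors in the subtree below $a'b$, the masses $M(a'b)$ now vary with $b$ (they telescope to $1$ only when every deeper representative sums to $1$). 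In particular, with the ``$v_{i}(a)=0$ off the chain'' choice, $y^{a_{:j}^{*}}$ collapses to a multiple of the delta vector at $a_{j}^{*}$, which need not lie in $\U_{j}(a_{:j}^{*})$, so $y\notin\U$. Your later claim that a zero-sum representative ``contributes a zero factor and the corresponding branch of $y$ vanishes, which is still consistent with membership in $\U$'' is the same mistake in disguise: only the branch's total mass vanishes, not its individual entries, and that is precisely what breaks the verification.

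The fix is available and is what the paper does: in the setting of Proposition \ref{prop:chain-s} every $\U_{i}(a)$ lies in $\Gany{\R^{\CB i}}$, i.e.\ it meets $\Delta\CB i$, so one can always choose every representative in $\U_{i}(a)^{\flat}$ (coordinate sum $1$); then all subtree masses equal $1$ and each slice $y^{a}$ is a genuine scalar multiple of $v_{i}(a)$, hence in $\U_{i}(a)$. For the chain prefixes one additionally needs a sum-one representative of $\U_{i}(a_{:i}^{*})$ that is nonzero at $a_{i}^{*}$; this requires a short argument (take a sum-one element and a coordinate-nonvanishing element and combine them), which is the paper's Lemma \ref{lem:flat-supp} and which you assert (``freedom to choose \ldots{} with nonzero sum after rescaling'') without justification.
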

\begin{proof}
Assume to the contrary that for any $i<n$,

\[
a_{i}^{*}\in\SuppU i{a_{:i}^{*}}
\]

For every $i<n$, choose some $u^{i}:\PB i\rightarrow\R^{\CB i}$
s.t.
\begin{itemize}
\item For any $a\in\PB i$, $u^{i}(a)\in\U_{i}^{\flat}$.
\item $u^{i}(a_{:i}^{*})_{a_{i}^{*}}\ne0$ (possible by Lemma \ref{lem:flat-supp}).
\end{itemize}
Define $y\in\R^{\B}$ by

\[
y_{a}:=\prod_{i<n}u^{i}\left(a_{:i}\right)_{a_{i}}
\]

Let's check that $y\in\U$. 

For any $i<n$ and $a\in\PB i$, we have

\begin{align*}
y_{b}^{a} & = & \sum_{c\in\NB i} & y_{abc}\\
 & = & \sum_{c\in\NB i} & \left(\prod_{j<i}u^{j}\left(a_{:j}\right)_{a_{j}}\cdot u^{i}\left(a\right)_{b}\cdot\prod_{i<j<n}u^{j}\left(abc_{i+1:j}\right)_{c_{j}}\right)\\
 & = &  & \prod_{j<i}u^{j}\left(a_{:j}\right)_{a_{j}}\cdot u^{i}\left(a\right)_{b}\cdot\\
 &  &  & \sum_{c_{i+1}}u^{i+1}\left(ab\right)_{c_{i+1}}\sum_{c_{i+2}}u^{i+2}\left(abc_{i+1}\right)_{c_{i+2}}\ldots\sum_{c_{n-1}}u^{n-1}\left(abc_{i+1:n-1}\right)_{c_{n-1}}\\
 & = &  & \prod_{j<i}u^{j}\left(a_{:j}\right)_{a_{j}}\cdot u^{i}\left(a\right)_{b}\cdot\\
 &  &  & \sum_{c_{i+1}}u^{i+1}\left(ab\right)_{c_{i+1}}\sum_{c_{i+2}}u^{i+2}\left(abc_{i+1}\right)_{c_{i+2}}\ldots\sum_{c_{n-2}}u^{n-2}\left(abc_{i+1:n-2}\right)_{c_{n-2}}\\
 & = &  & \ldots\\
 & = &  & \prod_{j<i}u^{j}\left(a_{:j}\right)_{a_{j}}\cdot u^{i}\left(a\right)_{b}
\end{align*}

Here, we repeatedly used the fact that $\sum_{c_{j}}u^{j}(x)_{c_{j}}=1$
to eliminate the nested sums. 

The first factor on the right hand side doesn't depend on $b$, hence
$y^{a}$ is a scalar times $u^{i}(a)$ and is therefore in $\U_{i}(a)$.
Moreover, for any $b\in\CB i\setminus\SuppU ia$ and $c\in\NB i$,
we have

\begin{align*}
y_{abc} & =\sum_{c\in\NB i}\left(\prod_{j<i}u^{j}\left(a_{:j}\right)_{a_{j}}\cdot u^{i}\left(a\right)_{b}\cdot\prod_{i<j<n}u^{j}\left(abc_{i+1:j}\right)_{c_{j}}\right)\\
 & =\sum_{c\in\NB i}\left(\prod_{j<i}u^{j}\left(a_{:j}\right)_{a_{j}}\cdot0\cdot\prod_{i<j<n}u^{j}\left(abc_{i+1:j}\right)_{c_{j}}\right)\\
 & =0
\end{align*}

Now, observe that $y_{a^{*}}\ne0$ since it is a product of non-vanishing
factors. But, this implies $a^{*}\in\SuppUZ$, which is a contradiction.
\end{proof}
We are now ready to prove Proposition \ref{prop:chain-s}. The proof
works by observing that $\U$ can be described as the recursive application
of the construction in Lemma \ref{lem:chain-generic} and using Lemma
\ref{lem:short-chain-s} by induction.
\begin{proof}
[Proof of Proposition \ref{prop:chain-s}]We use induction on $n$.

For $n=1$, $y\in\U$ if and only if: $y\in\U_{0}$ and for all $b\in\CB 0\setminus\SuppUZ_{0}$,
$y_{b}=0$. But, the latter condition follows from the former, by
definition of $\SuppUZ_{0}$. Hence, $\U=\U_{0}$ and the claim is
true.

\global\long\def\CBz{\mathcal{\tilde{G}}_{0}}%

\global\long\def\CBo{\mathcal{\tilde{G}}_{1}}%

\global\long\def\Uz{\tilde{\U}_{0}}%

\global\long\def\Uo{\tilde{\U}_{1}}%

\global\long\def\SuppT{\mathcal{\tilde{E}}}%

\global\long\def\Ut{\tilde{\U}}%

Now, assume the claim for some $n\geq1$, and let's show it for $n+1$.
For any $i<n$, define

\[
\CB{(+)}^{i}:=\prod_{i<j<n}\CB j
\]

Notice that this is not the same thing as $\CB +^{i}$, since in this
context $\CB +^{i}=\prod_{i<j<n+1}\CB j$ (because we are proving
the claim for $n+1$). Define also

\[
\CBz:=\PB n
\]

\[
\CBo:=\CB n
\]

\begin{align*}
\Uz:=\{y\in\R^{\PB n}\mid\forall i<n,a\in\PB i: & \:y^{a}\in\U_{i}(a)\text{ and }\\
 & \:\forall b\in\CB i\setminus\SuppU ia,c\in\CB{(+)}^{i}:y_{abc}=0\}
\end{align*}

\[
\Uo:=\U_{n}
\]

Given $y\in\R^{\B}$, define $y^{0}\in\R^{\PB n}$ by

\[
y_{a}^{0}:=\sum_{b\in\CB n}y_{ab}
\]

Let $\SuppT\subseteq\PB n$ be the minimal set s.t. $\Uz\subseteq\R^{\SuppT}$.
Finally, define

\[
\tilde{\U}:=\SC{y\in\R^{\B}}{y^{0}\in\Uz\text{, }\forall a\in\SuppT:y^{a}\in\U_{n}(a)\text{ and }\forall a\in\PB n\setminus\SuppT:y^{a}=0}
\]

Observe that for any $y\in\R^{\B}$, $i<n$ and $a\in\PB i$, we have
$y^{a}=(y^{0})^{a}$ because

\begin{align*}
y_{b}^{a} & =\sum_{c\in\NB i}y_{abc}\\
 & =\sum_{d\in\CB{(+)}^{i}}\sum_{e\in\CB n}y_{abde}\\
 & =\sum_{d\in\CB{(+)}^{i}}y_{abd}^{0}\\
 & =\left(y^{0}\right)_{b}^{a}
\end{align*}

Let's show that $\Ut\subseteq\U$. Consider any $y\in\Ut$ and let's
check that $y\in\U$. 

For any $i<n$ and $a\in\PB i$, $y^{a}=(y^{0})^{a}$. Since $y^{0}\in\Uz$,
we get $(y^{0})^{a}\in\U_{i}(a)$ and hence $y^{a}\in\U_{i}(a)$. 

Moreover, for any $b\in\CB i\setminus\SuppU ia$, $d\in\CB{(+)}^{i}$
and $y'\in\Uz$, we have $y'_{abd}=0$. Therefore, $abd\not\in\SuppT$.
Hence, $y^{abd}=0$, meaning that for any $e\in\CB n$, $y_{abde}=0$.

For any $a\in\PB n$, if $a\in\SuppT$ then $y^{a}\in\U_{n}(a)$.
If $a\not\in\SuppT$, then $y^{a}=0\in\U_{n}(a)$. Hence, in either
case $y^{a}\in\U_{n}(a)$.

Moreover, for any $b\in\CB n\setminus\SuppU na$, $y_{ab}=y_{b}^{a}=0$,
because $y^{a}\in\U_{n}(a)$.

Now, let's show that $\U\subseteq\Ut$. Consider any $y\in\U$ and
let's check that $y\in\Ut$.

First, we need to check that $y^{0}\in\Uz$. For any $i<n$ and $a\in\PB i$,
$(y^{0})^{a}=y^{a}\in\U_{i}(a)$. For any $b\in\CB i\setminus\SuppU ia$
and $d\in\CB{(+)}^{i}$,

\begin{align*}
y_{abd}^{0} & =\sum_{e\in\CB n}y_{abde}\\
 & =\sum_{e\in\CB n}0\\
 & =0
\end{align*}

For any $a\in\SuppT$, we have $y^{a}\in\U_{n}(a)$ since $y\in\U$.
For any $a\in\PB n\setminus\SuppT$, Lemma \ref{lem:chain-supp} implies
that there is some $i<n$ s.t. $a_{i}\not\in\SuppU i{a_{:i}}$. Hence,
for any $b\in\CB n$,

\begin{align*}
y_{b}^{a} & =y_{ab}\\
 & =y_{a_{:i}a_{i}a_{i+1:n}b}\\
 & =0
\end{align*}

We got that $\U=\Ut$. Hence, Lemma \ref{lem:short-chain-s} applied
to $\CBz$, $\CBo$, $\Uz$, $\Uo$ implies that

\[
\sin\left(\U^{\flat},\Delta\B\right)\geq\min\left(\sin\left(\Uz^{\flat},\Delta\PB n\right),\min_{a\in\PB n}\sin\left(\U_{n}(a)^{\flat},\Delta\CB n\right)\right)
\]

Applying the induction hypothesis to $\Uz$, we conclude

\begin{align*}
\sin\left(\U^{\flat},\Delta\B\right) & \geq\min\left(\min_{\SUBSTACK{i<n}{a\in\PB i}}\sin\left(\U_{i}(a)^{\flat},\Delta\CB i\right),\min_{a\in\PB n}\sin\left(\U_{n}(a)^{\flat},\Delta\CB n\right)\right)\\
 & =\min_{\SUBSTACK{i<n+1}{a\in\PB i}}\sin\left(\U_{i}(a)^{\flat},\Delta\CB i\right)
\end{align*}
\end{proof}

\subsection{$R$ for Chain of Conditional Probabilities}

Now, we move towards the proof of Proposition \ref{prop:chain-r}.
We will need a technical lemma about the $\ell_{1}$ distances of
affine subspaces from the origin.
\begin{lem}
\label{lem:mu-cost}Let $\B$ be a finite set, $\U\subseteq\R^{\B}$
a linear subspace s.t. $\U\cap\Delta\B\ne\varnothing$, $u_{0}\in\R^{\B}$
and $t\in\R$. Define $\mu:\R^{\B}\rightarrow\R$ by $\mu(y)=\sum_{a}y_{a}$.
Then,

\[
\min_{y\in\left(\U+u_{0}\right)\cap\mu^{-1}\left(t\right)}\Norm y_{1}\leq2\min_{y\in\U+u_{0}}\Norm y_{1}+\Abs t
\]
\end{lem}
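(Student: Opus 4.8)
The plan is to start from an optimal point $\bar{y} \in \U + u_0$ achieving $\Norm{\bar{y}}_1 = \min_{y \in \U + u_0} \Norm{y}_1 =: m$, and then correct it into a point lying in $\mu^{-1}(t)$ while controlling the cost of the correction. Since $\U \cap \Delta\B \neq \varnothing$, pick some $\xi \in \U \cap \Delta\B$; in particular $\mu(\xi) = 1$ and $\Norm{\xi}_1 = 1$. The idea is that adding a suitable multiple of $\xi$ to $\bar y$ shifts $\mu$ by the right amount (because $\mu(\xi) = 1$) without leaving the affine subspace $\U + u_0$ (because $\xi \in \U$).

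First I would compute $s := \mu(\bar{y})$. Note that $\bar{y} + (t - s)\xi \in \U + u_0$ and $\mu(\bar{y} + (t-s)\xi) = s + (t-s) = t$, so this point is a legitimate competitor in the minimization on the left-hand side. Hence
\[
\min_{y \in (\U + u_0) \cap \mu^{-1}(t)} \Norm{y}_1 \leq \Norm{\bar{y} + (t-s)\xi}_1 \leq \Norm{\bar{y}}_1 + |t - s| \cdot \Norm{\xi}_1 = m + |t - s|.
\]
It then remains to bound $|t - s| = |t - \mu(\bar y)|$ by $m + |t|$, which follows from $|\mu(\bar y)| \leq \Norm{\mu} \cdot \Norm{\bar y}_1$; since $\mu(y) = \sum_a y_a$ has dual ($\ell_\infty \to \R$, i.e. $\ell_1$-dual) norm equal to $1$ (the vector of all ones has $\ell_\infty$ norm $1$), we get $|\mu(\bar y)| \leq m$, so $|t - s| \leq |t| + m$. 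Combining, the left-hand side is at most $m + |t| + m = 2m + |t|$, which is exactly the claimed bound.

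The only mild subtlety — and the one place to be careful — is the norm in play: the lemma is stated with $\Norm{\cdot}_1$ explicitly, so I should just verify directly that $\mu$ has $\ell_\infty^*$-norm (equivalently operator norm as a functional on $(\R^\B, \Norm\cdot_1)$) equal to $1$, rather than invoking any of the abstract norm machinery from Section~\ref{sec:algo}. This is immediate: $|\sum_a y_a| \leq \sum_a |y_a| = \Norm{y}_1$, with equality for any nonnegative $y$. So there is no real obstacle here; the proof is a two-line triangle-inequality argument once the witness $\xi \in \U \cap \Delta\B$ is introduced to simultaneously repair the $\mu$-value and cheaply bound the correction cost. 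I would present it as: choose $\xi$, choose the optimal $\bar y$, form $\bar y + (t - \mu(\bar y))\xi$, apply the triangle inequality, and bound $|\mu(\bar y)| \leq \Norm{\bar y}_1$.
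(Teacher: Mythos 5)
Your proof is correct and is essentially identical to the paper's: both construct the competitor $y^* + (t - \mu(y^*))\xi$ for an optimal $y^*$ and a fixed $\xi \in \U \cap \Delta\B$, then apply the triangle inequality and the bound $\Abs{\mu(y^*)} \leq \Norm{y^*}_1$. No differences worth noting.
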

\begin{proof}
Define $y^{*}$ by

\[
y^{*}:=\Argmin{y\in\U+u_{0}}\Norm y_{1}
\]

Take any $\xi\in\U\cap\Delta\B$. Define $y^{!}$ by

\[
y^{!}:=y^{*}+\left(t-\mu\left(y^{*}\right)\right)\xi
\]

We have,

\begin{align*}
\mu\left(y^{!}\right) & =\mu\left(y^{*}+\left(t-\mu\left(y^{*}\right)\right)\xi\right)\\
 & =\mu\left(y^{*}\right)+\left(t-\mu\left(y^{*}\right)\right)\mu\left(\xi\right)\\
 & =\mu\left(y^{*}\right)+t-\mu\left(y^{*}\right)\\
 & =t
\end{align*}

Hence, $y^{!}\in(\U+u_{0})\cap\mu^{-1}(t)$. Moreover,

\begin{align*}
\Norm{y^{!}}_{1} & =\Norm{y^{*}+\left(t-\mu\left(y^{*}\right)\right)\xi}_{1}\\
 & \leq\Norm{y^{*}}_{1}+\Abs{t-\mu\left(y^{*}\right)}\cdot\Norm{\xi}_{1}\\
 & =\Norm{y^{*}}_{1}+\Abs{t-\mu\left(y^{*}\right)}\\
 & \leq\Norm{y^{*}}_{1}+\Abs{\mu\left(y^{*}\right)}+\Abs t\\
 & \leq2\Norm{y^{*}}_{1}+\Abs t
\end{align*}
\end{proof}
We now demonstrate a bound on the norm on $\W$ in terms of the norms
on the direct summands $\W_{a}$. It is achieved by repeatedly applying
the previous lemma to construct the needed vector.
\begin{lem}
\label{lem:chain-w}In the setting of Proposition \ref{prop:chain-r},
for any $x\in\A$ and $\theta\in\H$, $F_{x\theta}$ is onto. Moreover,
for any $w\in\W$

\[
\Norm w\leq2\sum_{\SUBSTACK{i<n}{a\in\PB i}}\Norm{w_{a}}
\]

Here, the norms on the right hand side are induced by $F_{a}$ and
$\H_{a}$ in the usual way.
\end{lem}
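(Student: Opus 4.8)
The plan is to prove Lemma~\ref{lem:chain-w} by constructing, for any $w\in\W$, a preimage $y\in\Y=\R^{\B}$ under $F_{x\theta}$ whose $\ell_1$ norm is controlled by $\sum_{i,a}\Norm{w_a}$, and then invoke the definition of the norm on $\W$ as the minimal norm of a preimage. First I would recall the structure of $F$ from the setup preceding Proposition~\ref{prop:chain-r}: we have $\W=\bigoplus_{i,a}\W_a$, and $F(x,z,y)_a=F_a(x,z_a,y^a)$, where $y^a\in\R^{\CB i}$ is the ``marginal'' vector $y^a_b=\sum_{c\in\NB i}y_{abc}$. The key point is that the component $F_a$ only sees the marginal $y^a$, and that these marginals at different nodes $a$ are linked: for $a\in\PB i$, $y^a_b=\sum_{d\in\PB{i+1},\,a b\sqsubseteq d} (y^d)\cdot(\text{something})$ — more precisely the marginal at a parent is obtained by summing the marginals at its children. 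So the construction must be done level by level, from the leaves upward, ensuring consistency of the tree of marginals.

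The core step is the following. Fix $x,\theta$. For each node $a$, since $F_{ax\theta_a}$ is onto (by the hypothesis that each $a$ has its own bandit with $F_{ax\theta_a}$ onto and $\ker F_{ax\theta_a}\cap\Delta\CB i\ne\varnothing$), we may use the definition of $\Norm{w_a}$ together with Lemma~\ref{lem:mu-cost} to find a vector $v^a\in\R^{\CB i}$ with $F_a(x,v^a)=w_a$ and $\Norm{v^a}_1\le 2\Norm{w_a}+|t_a|$ for a prescribed value $t_a:=\mu(v^a)$ of its coordinate sum — here $t_a$ will be chosen recursively so that the tree of marginals is consistent. Concretely, I would build $y$ so that its marginal at each node $a$ equals the desired $v^a$, by working from the deepest level up: at level $n-1$ the ``marginals'' are the full distributions on the last coordinate (scaled), and moving to level $i$ we need the coordinate-sum of $v^a$ to equal the corresponding entry of $v^{a_{:i}}$ at the parent. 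This forces $t_a = v^{a'}_{b}$ where $a=a'b$, and each application of Lemma~\ref{lem:mu-cost} costs a factor of $2$ plus $|t_a|$, which telescopes. Summing over all nodes, using that the marginal coordinate-sums at one level reproduce the entries at the level above and that the total mass is bounded, the $|t_a|$ terms contribute at most $\sum_{i,a}\Norm{w_a}$ (after the factor-$2$ bookkeeping), yielding $\Norm w\le 2\sum_{i,a}\Norm{w_a}$. Surjectivity of $F_{x\theta}$ falls out of the same construction, since we exhibit a preimage for every $w$.

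The main obstacle I expect is getting the recursive bookkeeping of the $t_a$'s exactly right so the constant comes out as $2$ rather than something that grows with $n$ at this stage (the factor $2n$ in Proposition~\ref{prop:chain-r} comes later, from combining Lemma~\ref{lem:chain-w} with the bound $R=\max\Norm\theta=\max\Norm{F_{x\theta}}$ and the definition of the operator norm on $\Z$). In particular, each node's preimage is constructed inside the affine slice $\mu^{-1}(t_a)$, and Lemma~\ref{lem:mu-cost} gives $2\cdot(\text{minimal norm}) + |t_a|$; I need to verify that the ``minimal norm'' piece for node $a$ is exactly $\Norm{w_a}$ (this is the definition of the $\W_a$-norm, which is an $\A$-and-$\H_a$ maximin of exactly such a minimal-preimage norm, so the inequality goes the right way), and that the sum $\sum_a |t_a|$ over a whole level is bounded by $\Norm{v^{\text{parent level}}}_1$, so that the whole thing is dominated by $2\sum_{i,a}\Norm{w_a}$ once telescoped. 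I would also need to handle the degenerate case where some marginal vanishes by invoking $\ker F_{ax\theta_a}\cap\Delta\CB i\ne\varnothing$ to pad with a genuine probability vector, exactly as in the proof of Lemma~\ref{lem:mu-cost}.
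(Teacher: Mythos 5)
Your proposal is essentially the paper's proof. The paper also constructs a preimage of $w$ node by node: at each node $a$ it applies Lemma \ref{lem:mu-cost} with $\U:=\ker F_{ax\theta_{a}}$ and $u_{0}\in F_{ax\theta_{a}}^{-1}(w_{a})$ to produce $u^{a}$ with $F_{ax\theta_{a}}u^{a}=w_{a}$, coordinate sum $\mu(u^{a})$ prescribed to equal the corresponding entry of the parent's marginal, and $\Norm{u^{a}}_{1}\leq2\Norm{w_{a}}+\Abs{t_{a}}$; the $\Abs{t_{a}}$'s over one level sum to the previous level's total $\ell_{1}$ mass, and the telescoping you describe is packaged in the paper as an induction on $n$ that peels off the deepest level (surjectivity falling out for free, exactly as you say). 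One correction: the construction must proceed from the root \emph{downward}, not ``from the deepest level up'' as you wrote --- your own constraint $t_{a}=v^{a'}_{b}$ (for $a=a'b$) requires the parent's marginal to be fixed before the child's, and going bottom-up would force you to prescribe every entry of a parent's marginal while simultaneously requiring it to be an $F_{a'}$-preimage of $w_{a'}$, which is not possible in general. The paper's induction (solve the $n$-level problem first to get $\tilde{y}$, then extend to level $n$ with coordinate sums $t_{a}=\tilde{y}_{a}$) implements precisely the top-down order.
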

\begin{proof}
In the following, we will use Lemma \ref{lem:l1} implictly to justify
the appearance of $\ell_{1}$ norms. We use induction on $n$.

For $n=1$, the sum on the right hand side has only one term which
is clearly equal to $\Norm w$. Hence, the inequality holds.

Now, assume the claim for some $n\geq1$ and let's show it for $n+1$.
Consider any $x\in\A$, $\theta\in\H$ and $w\in\W$. It is enough
to show that there is some $y^{*}\in\R^{\B}$ (which may depend on
$x$ and $\theta$) s.t.

\[
F_{x\theta}y^{*}=w
\]

\[
\Norm{y^{*}}_{1}\leq2\sum_{\SUBSTACK{i<n}{a\in\PB i}}\Norm{w_{a}}
\]

Let $\Z^{n}$, $\W^{n}$, $F^{n}$ and $\H^{n}$ be defined in the
same way as $\Z$, $\W$, $F$ and $\H$ respectively, except for
$n$ instead of $n+1$. We have

\[
\W\cong\W^{n}\oplus\bigoplus_{a\in\PB n}\W_{a}
\]

Denote by $w_{:n}$ the projection of $w$ to $\W^{n}$. By definition
of the norm on $\W^{n}$, we can choose some $\tilde{y}\in\R^{\PB n}$
s.t.

\[
F_{x\theta}^{n}\tilde{y}=w_{:n}
\]

\[
\Norm{\tilde{y}}_{1}\leq\Norm{w_{:n}}
\]

For each $a\in\PB n$, apply Lemma \ref{lem:mu-cost} with 

\[
\U:=\ker F_{ax\theta_{a}}
\]

\[
u_{0}\in F_{ax\theta_{a}}^{-1}(w_{a})
\]

\[
t:=\tilde{y}_{a}
\]

Notice that

\[
\U+u_{0}=F_{ax\theta_{a}}^{-1}(w_{a})
\]

This produces $u^{a}\in\R^{\CB n}$ s.t.

\[
F_{ax\theta_{a}}u^{a}=w_{a}
\]

\[
\mu\left(u^{a}\right)=\tilde{y}_{a}
\]

\begin{align*}
\Norm{u^{a}}_{1} & \leq2\min_{y\in\R^{\CB n}:F_{ax\theta_{a}}y=w_{a}}\Norm y_{1}+\Abs{\tilde{y}_{a}}\\
 & \leq2\Norm{w_{a}}+\Abs{\tilde{y}_{a}}
\end{align*}

Define $y^{*}\in\R^{\B}$ by $y_{ab}^{*}:=u_{b}^{a}$ for any $a\in\PB n$
and $b\in\CB n$. Let's check that $F_{x\theta}y^{*}=w$.

For any $i<n$, define

\[
\CB{(+)}^{i}:=\prod_{i<j<n}\CB j
\]

For any $i<n$ and $a\in\PB i$, we have

\begin{align*}
\left(y^{*}\right)_{b}^{a} & =\sum_{c\in\NB i}y_{abc}^{*}\\
 & =\sum_{d\in\CB{(+)}^{i}}\sum_{e\in\CB n}y{}_{abde}^{*}\\
 & =\sum_{d\in\CB{(+)}^{i}}\sum_{e\in\CB n}u_{e}^{abd}\\
 & =\sum_{d\in\CB{(+)}^{i}}\mu\left(u_{e}^{abd}\right)\\
 & =\sum_{d\in\CB{(+)}^{i}}\tilde{y}_{abd}\\
 & =\tilde{y}_{b}^{a}
\end{align*}

We got $(y^{*})^{a}=\tilde{y}^{a}$. It follows that,

\begin{align*}
\left(F_{x\theta}y^{*}\right)_{a} & =F_{ax\theta_{a}}\left(y^{*}\right)^{a}\\
 & =F_{ax\theta_{a}}\tilde{y}^{a}\\
 & =\left(F_{x\theta}^{n}\tilde{y}\right)_{a}\\
 & =\left(w_{:n}\right)_{a}\\
 & =w_{a}
\end{align*}

Now, consider any $a\in\PB n$. We have,

\begin{align*}
\left(y^{*}\right)_{b}^{a} & =y_{ab}^{*}\\
 & =u_{b}^{a}
\end{align*}

Hence, $(y^{*})^{a}=u^{a}$. It follows that,

\begin{align*}
\left(F_{x\theta}y^{*}\right)_{a} & =F_{ax\theta_{a}}\left(y^{*}\right)^{a}\\
 & =F_{ax\theta_{a}}u^{a}\\
 & =w_{a}
\end{align*}

It remains only to bound the norm of $y^{*}$. We have,

\begin{align*}
\Norm{y^{*}}_{1} & =\sum_{a\in\B}\Abs{y_{a}^{*}}\\
 & =\sum_{b\in\PB b}\sum_{c\in\CB n}\Abs{y_{bc}^{*}}\\
 & =\sum_{b\in\PB b}\sum_{c\in\CB n}\Abs{u_{c}^{b}}\\
 & =\sum_{b\in\PB b}\Norm{u^{b}}_{1}\\
 & \leq\sum_{b\in\PB b}\left(2\Norm{w_{b}}+\Abs{\tilde{y}_{b}}\right)\\
 & =2\sum_{b\in\PB b}\Norm{w_{b}}+\sum_{b\in\PB b}\Abs{\tilde{y}_{b}}\\
 & =2\sum_{b\in\PB b}\Norm{w_{b}}+\Norm{\tilde{y}}_{1}\\
 & \leq2\sum_{b\in\PB b}\Norm{w_{b}}+\Norm{w_{:n}}
\end{align*}

Applying the induction hypothesis to the second term on the right
hand side, we get

\begin{align*}
\Norm{y^{*}}_{1} & \leq2\sum_{a\in\PB a}\Norm{w_{a}}+2\sum_{\SUBSTACK{i<n}{a\in\PB i}}\Norm{w_{a}}\\
 & =2\sum_{\SUBSTACK{i<n+1}{a\in\PB i}}\Norm{w_{a}}
\end{align*}
\end{proof}
Equipped with the previous lemma, the proof of Proposition \ref{prop:chain-r}
is straightforward.
\begin{proof}
[Proof of Proposition \ref{prop:chain-r}]Once again, we will use
Lemma \ref{lem:l1} implictly. First, observe that for any $y\in\R^{\B}$
and $i<n$, we have

\begin{alignat*}{1}
\sum_{a\in\PB i}\Norm{y^{a}}_{1} & =\sum_{a\in\PB i}\sum_{b\in\CB i}\Abs{y_{b}^{a}}\\
 & =\sum_{a\in\PB i}\sum_{b\in\CB i}\Abs{\sum_{c\in\NB i}y_{abc}}\\
 & \leq\sum_{a\in\PB i}\sum_{b\in\CB i}\sum_{c\in\NB i}\Abs{y_{abc}}\\
 & =\Norm y_{1}
\end{alignat*}

Now, let's analyze $R(\H,F):$

\begin{align*}
R\left(\H,F\right) & =\max_{\theta\in\H}\Norm{\theta}\\
 & =\max_{\theta\in\H}\max_{x\in\A}\Norm{F_{x\theta}}\\
 & =\max_{\SUBSTACK{\theta\in\H}{x\in\A}}\max_{y\in\R^{\B}:\Norm y=1}\Norm{F_{x\theta}y}
\end{align*}

Applying Lemma \ref{lem:chain-w} to the right hand side, we get

\begin{align*}
R\left(\H,F\right) & \leq2\max_{\SUBSTACK{\theta\in\H}{x\in\A}}\max_{y\in\R^{\B}:\Norm y=1}\sum_{\SUBSTACK{i<n}{a\in\PB i}}\Norm{\left(F_{x\theta}y\right)_{a}}\\
 & =2\max_{\SUBSTACK{\theta\in\H}{x\in\A}}\max_{y\in\R^{\B}:\Norm y=1}\sum_{\SUBSTACK{i<n}{a\in\PB i}}\Norm{F_{ax\theta}y^{a}}\\
 & \leq2\max_{y\in\R^{\B}:\Norm y=1}\max_{\SUBSTACK{\theta\in\H}{x\in\A}}\sum_{\SUBSTACK{i<n}{a\in\PB i}}\Norm{F_{ax\theta}}\cdot\Norm{y^{a}}_{1}\\
 & \leq2\max_{y\in\R^{\B}:\Norm y=1}\sum_{\SUBSTACK{i<n}{a\in\PB i}}\max_{\SUBSTACK{\theta\in\H}{x\in\A}}\Norm{F_{ax\theta}}\cdot\Norm{y^{a}}_{1}\\
 & =2\max_{y\in\R^{\B}:\Norm y=1}\sum_{\SUBSTACK{i<n}{a\in\PB i}}R\left(\H_{a},F_{a}\right)\Norm{y^{a}}_{1}\\
 & \leq2\left(\max_{\SUBSTACK{i<n}{a\in\PB i}}R\left(\H_{a},F_{a}\right)\right)\max_{y\in\R^{\B}:\Norm y=1}\sum_{\SUBSTACK{i<n}{a\in\PB i}}\Norm{y^{a}}_{1}\\
 & \leq2\left(\max_{\SUBSTACK{i<n}{a\in\PB i}}R\left(\H_{a},F_{a}\right)\right)\sum_{i<n}\max_{y\in\R^{\B}:\Norm y=1}\sum_{a\in\PB i}\Norm{y^{a}}_{1}\\
 & \leq2\left(\max_{\SUBSTACK{i<n}{a\in\PB i}}R\left(\H_{a},F_{a}\right)\right)\sum_{i<n}\max_{y\in\R^{\B}:\Norm y=1}\Norm y_{1}\\
 & =2\left(\max_{\SUBSTACK{i<n}{a\in\PB i}}R\left(\H_{a},F_{a}\right)\right)\sum_{i<n}1\\
 & =2n\max_{\SUBSTACK{i<n}{a\in\PB i}}R\left(\H_{a},F_{a}\right)
\end{align*}
\end{proof}

\end{document}